\documentclass[11pt]{article} %

\usepackage{etoolbox}
\newtoggle{neurips}
\toggletrue{neurips}
\togglefalse{neurips}
\newcommand{\neurips}[1]{\iftoggle{neurips}{#1}{}}
\newcommand{\arxiv}[1]{\iftoggle{neurips}{}{#1}}

\neurips{\usepackage[nonatbib]{neurips_2020}}

\usepackage[colorlinks=true,linkcolor=blue!70!black,citecolor=blue!70!black,urlcolor=black,breaklinks=true]{hyperref}
\usepackage{url}            %
\usepackage{booktabs,makecell}       %

\usepackage{amsfonts,bm,upgreek}       %
\usepackage{nicefrac}       %
\usepackage{microtype}      %
\usepackage{etoolbox}
\usepackage{algorithm}
\usepackage{verbatim}
\usepackage[noend]{algpseudocode}

\usepackage{setspace}

\neurips{%

\usepackage[dvipsnames]{xcolor}
\usepackage{microtype}

\usepackage{algorithm}
\usepackage[square,numbers]{natbib}
\bibliographystyle{plainnat}

\usepackage{amsthm,thmtools}
\usepackage{mathtools}
\usepackage{amsmath}
\usepackage{amsfonts}
\usepackage{amssymb}
\let\vec\undefined
\usepackage{MnSymbol} %
\usepackage{xpatch}

\usepackage{amsfonts}
\usepackage{graphicx}
\usepackage{pdfpages}
\usepackage{amsthm}
\usepackage{thm-restate}
\usepackage{euscript}
\usepackage{url}
\usepackage{comment}
\usepackage{mathrsfs}

\usepackage[capitalize,nameinlink]{cleveref}
\newcommand{\savehyperref}[2]{\texorpdfstring{\hyperref[#1]{#2}}{#2}}
\newcommand{\pref}[1]{\Cref{#1}}
\newcommand{\pfref}[1]{Proof of \pref{#1}}

\crefname{equation}{Eq.}{Eqs.}
\Crefname{equation}{Eq.}{Eqs.}

\renewcommand{\eqref}[1]{\savehyperref{#1}{(\ref*{#1})}}

\theoremstyle{definition} %

\newtheorem{bodycorollary}{Corollary}
\newtheorem{bodylemma}{Lemma}
\newtheorem{bodyproposition}{Proposition}

\newtheorem{bodytheorem}{Theorem}

\newtheorem{exercise}{Exercise}
\newtheorem{claim}{Claim}[section]
\newtheorem{lemma}{Lemma}[section]
\newtheorem{conjecture}{Conjecture}[section]
\newtheorem{corollary}{Corollary}[section]
\newtheorem{proposition}{Proposition}[section]
\newtheorem{fact}{Fact}[section]
\newtheorem{assumption}{Assumption}
\newtheorem{problem}{Problem}
\newtheorem{question}{Question}
\newtheorem{model}{Model}
\theoremstyle{plain}
\newtheorem{remark}{Remark}
\newtheorem{example}{Example}
\newtheorem{theorem}[section]{Theorem}
\newtheorem{definition}{Definition}

\xpatchcmd{\proof}{\itshape}{\normalfont\proofnameformat}{}{}
\newcommand{\proofnameformat}{\bfseries}

}
\arxiv{%
\usepackage[letterpaper, left=1in, right=1in, top=1in, bottom=1in]{geometry}

\usepackage[dvipsnames]{xcolor}
\usepackage{microtype}

\usepackage{algorithm}

\usepackage{natbib}
\bibliographystyle{plainnat}
\bibpunct{(}{)}{;}{a}{,}{,}

\usepackage{amsthm}
\usepackage{mathtools}
\usepackage{amsmath}
\usepackage{bbm}
\usepackage{amsfonts}
\usepackage{amssymb}
\let\vec\undefined
\usepackage{graphicx}
\usepackage{pdfpages}
\usepackage{amsthm}
\usepackage{thm-restate}
\usepackage{euscript}
\usepackage{url}
\usepackage{comment}
\usepackage{mathrsfs}

\usepackage{xpatch}

\usepackage[capitalize,nameinlink]{cleveref}
\newcommand{\savehyperref}[2]{\texorpdfstring{\hyperref[#1]{#2}}{#2}}
\newcommand{\pref}[1]{\Cref{#1}}
\newcommand{\pfref}[1]{Proof of \pref{#1}}

\crefname{equation}{Eq.}{Eqs.}
\Crefname{equation}{Eq.}{Eqs.}

\renewcommand{\eqref}[1]{\savehyperref{#1}{(\ref*{#1})}}

\theoremstyle{definition} %

\newtheorem{claim}{Claim}[section]
\newtheorem{lemma}{Lemma}[section]

\newtheorem{corollary}{Corollary}[section]
\newtheorem{proposition}{Proposition}[section]
\newtheorem{fact}{Fact}[section]
\newtheorem{assumption}{Assumption}

\newtheorem{remark}{Remark}
\theoremstyle{plain}

\newtheorem{theorem}{Theorem}[section]
\newtheorem{definition}{Definition}

\xpatchcmd{\proof}{\itshape}{\normalfont\proofnameformat}{}{}
\newcommand{\proofnameformat}{\bfseries}

}
\makeatletter
\renewenvironment{proof}[1][\proofname] {\par\pushQED{\qed}\normalfont\topsep6\p@\@plus6\p@\relax\trivlist\item[\hskip\labelsep\bfseries#1\@addpunct{.}]\ignorespaces}{\popQED\endtrivlist\@endpefalse}
\makeatother

\DeclarePairedDelimiter{\abs}{\lvert}{\rvert} %
\DeclarePairedDelimiter{\brk}{[}{]}
\DeclarePairedDelimiter{\crl}{\{}{\}}
\DeclarePairedDelimiter{\prn}{(}{)}
\DeclarePairedDelimiter{\nrm}{\|}{\|}
\DeclarePairedDelimiter{\tri}{\langle}{\rangle}

\DeclarePairedDelimiter{\ceil}{\lceil}{\rceil}

\let\Pr\undefined
\let\P\undefined
\DeclareMathOperator{\En}{\mathbb{E}}

\DeclareMathOperator{\P}{P}
\DeclareMathOperator{\Pr}{Pr}

\DeclareMathOperator*{\argmin}{arg\,min} %

\newcommand{\wt}[1]{\widetilde{#1}}
\newcommand{\wh}[1]{\widehat{#1}}
\newcommand{\wb}[1]{\widebar{#1}}
\newcommand{\mb}[1]{\boldsymbol{#1}}

\def\ddefloop#1{\ifx\ddefloop#1\else\ddef{#1}\expandafter\ddefloop\fi}
\def\ddef#1{\expandafter\def\csname bb#1\endcsname{\ensuremath{\mathbb{#1}}}}
\ddefloop ABCDEFGHIJKLMNOPQRSTUVWXYZ\ddefloop

\def\ddefloop#1{\ifx\ddefloop#1\else\ddef{#1}\expandafter\ddefloop\fi}
\def\ddef#1{\expandafter\def\csname fr#1\endcsname{\ensuremath{\mathfrak{#1}}}}
\ddefloop ABCDEFGHIJKLMNOPQRSTUVWXYZ\ddefloop

\def\ddefloop#1{\ifx\ddefloop#1\else\ddef{#1}\expandafter\ddefloop\fi}
\def\ddef#1{\expandafter\def\csname eul#1\endcsname{\ensuremath{\EuScript{#1}}}}
\ddefloop ABCDEFGHIJKLMNOPQRSTUVWXYZ\ddefloop

\def\ddefloop#1{\ifx\ddefloop#1\else\ddef{#1}\expandafter\ddefloop\fi}
\def\ddef#1{\expandafter\def\csname scr#1\endcsname{\ensuremath{\mathscr{#1}}}}
\ddefloop ABCDEFGHIJKLMNOPQRSTUVWXYZ\ddefloop

\def\ddefloop#1{\ifx\ddefloop#1\else\ddef{#1}\expandafter\ddefloop\fi}
\def\ddef#1{\expandafter\def\csname b#1\endcsname{\ensuremath{\mathbf{#1}}}}
\ddefloop ABCDEFGHIJKLMNOPQRSTUVWXYZ\ddefloop
\def\ddef#1{\expandafter\def\csname c#1\endcsname{\ensuremath{\mathcal{#1}}}}
\ddefloop ABCDEFGHIJKLMNOPQRSTUVWXYZ\ddefloop
\def\ddef#1{\expandafter\def\csname h#1\endcsname{\ensuremath{\widehat{#1}}}}
\ddefloop ABCDEFGHIJKLMNOPQRSTUVWXYZ\ddefloop
\def\ddef#1{\expandafter\def\csname hc#1\endcsname{\ensuremath{\widehat{\mathcal{#1}}}}}
\ddefloop ABCDEFGHIJKLMNOPQRSTUVWXYZ\ddefloop
\def\ddef#1{\expandafter\def\csname t#1\endcsname{\ensuremath{\widetilde{#1}}}}
\ddefloop ABCDEFGHIJKLMNOPQRSTUVWXYZ\ddefloop
\def\ddef#1{\expandafter\def\csname tc#1\endcsname{\ensuremath{\widetilde{\mathcal{#1}}}}}
\ddefloop ABCDEFGHIJKLMNOPQRSTUVWXYZ\ddefloop

\newcommand{\Holder}{H{\"o}lder}

\newcommand{\ls}{\ell}
\newcommand{\ind}{\mathbbm{1}}    %

\newcommand{\eps}{\epsilon}
\newcommand{\veps}{\varepsilon}

\newcommand{\ldef}{\vcentcolon=}
\newcommand{\rdef}{=\vcentcolon}

\newcommand{\trn}{\intercal} %

\newcommand{\Imark}{\textnormal{(\textbf{I})}\xspace}
\newcommand{\IImark}{\textnormal{(\textbf{II})}\xspace}
\newcommand{\IIImark}{\textnormal{(\textbf{III})}\xspace}

\Crefname{equation}{Eq.}{Eqs.}
\Crefname{assumption}{Assumption}{Assumptions}
\Crefname{condition}{Condition}{Conditions}

\DeclareMathSymbol{\shortminus}{\mathbin}{AMSa}{"39}

\newcommand{\algcomment}[1]{\textcolor{blue!70!black}{\footnotesize{\texttt{\textbf{//\hspace{2pt}#1}}}}}

      \newcommand{\algparen}[1]{\algcomment{#1.}}

\makeatletter
\newcommand{\neutralize}[1]{\expandafter\let\csname c@#1\endcsname\count@}
\makeatother

\newenvironment{thmmod}[2]
  {%
   \neutralize{theorem}\phantomsection
   \begin{theorem}}
  {\end{theorem}}

\Crefname{claim}{Claim}{Claims}

\newcommand{\Fclass}{\scrF}

\newcommand{\Hclass}{\scrH}

\newcommand{\Mclass}{\scrM}

\newcommand{\wtilde}[1]{\widetilde{#1}}

\newcommand{\reals}{\mathbb{R}}
\newcommand{\what}[1]{\widehat{#1}}
\newcommand{\E}{\mathbb{E}}
\newcommand{\rmd}{\mathrm{d}}

\newcommand{\commentout}[1]{}
\DeclareMathOperator{\supp}{\mathrm{supp}}
\renewcommand{\P}{\mathbb{P}}

\newcommand{\tv}{D_{\mathrm{TV}}}
\newcommand{\kl}{D_{\mathrm{KL}}}
\newcommand{\chisquared}{\chi^{2}}

\newcount\Comments  %
\Comments=1 %
\definecolor{darkgreen}{rgb}{0,0.5,0}
\definecolor{darkred}{rgb}{0.7,0,0}
\definecolor{teal}{rgb}{0.3,0.8,0.8}
\definecolor{orange}{rgb}{1.0,0.5,0.0}
\definecolor{purple}{rgb}{0.8,0.0,0.8}

\newcommand{\Ahat}{\widehat{A}}
\newcommand{\Bhat}{\widehat{B}}

\newcommand{\hstar}{h_{\star}}

\newcommand{\richlqr}{\textsf{RichLQR}\xspace}
\newcommand{\richid}{\textsf{RichID}\xspace}
\newcommand{\richidce}{\textsf{RichID-CE}\xspace}

\newcommand{\dare}{\textsf{DARE}\xspace}

\renewcommand{\trn}{\top}
\newcommand{\psdleq}{\preceq}
\newcommand{\psdgeq}{\succeq}

\newcommand{\psdgt}{\succ}
\newcommand{\approxleq}{\lesssim}

\newcommand{\eigmin}{\lambda_{\mathrm{min}}}

\newcommand{\sigmamin}{\sigma_{\mathrm{min}}}

\newcommand{\Ind}{\mathbb{I}}

\newcommand{\bu}{\mb{u}}
\newcommand{\bv}{\mb{v}}
\newcommand{\bz}{\mb{z}}
\newcommand{\bw}{\mb{w}}

\newcommand{\fhat}{\hat{f}}

\renewcommand{\ind}[1]{^{(#1)}}

\newcommand{\BigOh}{\cO}

\newcommand{\fstar}{f_{\star}}
\newcommand{\gstar}{g_{\star}}

\newcommand{\pihat}{\wh{\pi}}
\newcommand{\pistar}{\pi^{\star}}
\newcommand{\piinf}{\pi_{\infty}}

\newcommand{\alg}{\textsf{A}}

\newcommand{\indic}{\mathbb{I}}

\renewcommand{\Pr}{\bbP}

\newcommand{\poly}{\mathrm{poly}}

\newcommand{\mainalg}{\textup{\textsf{SquareCB}}\xspace}

\newcommand{\bigoh}{\cO}
\newcommand{\bigohs}{\cO_{\star}}
\newcommand{\bigohch}{\check{\cO}}

\newcommand{\bigoht}{\wt{\cO}}
\newcommand{\bigom}{\Omega}

\newcommand{\bigoms}{\Omega_{\star}}

\newcommand{\iid}{\textrm{i.i.d.}\xspace}

\newcommand{\matx}{\mathbf{x}}
\newcommand{\matu}{\mathbf{u}}
\newcommand{\maty}{\mathbf{y}}
\newcommand{\matz}{\mathbf{z}}
\newcommand{\matv}{\mathbf{v}}

\newcommand{\matw}{\mathbf{w}}
\newcommand{\matc}{\mathbf{c}}
\newcommand{\mateps}{\mb{\veps}}

\newcommand{\dimx}{d_{\matx}}

\newcommand{\dimu}{d_{\matu}}
\newcommand{\dimy}{d_{\maty}}
\newcommand{\dimk}{\kappa}

\newcommand{\conv}{\ast}

\newcommand{\nn}{\nonumber}
\renewcommand{\bu}{\mathbf{u}}
\renewcommand{\bv}{\mathbf{v}}
\renewcommand{\bw}{\mathbf{w}}
\newcommand{\bx}{\mathbf{x}}
\newcommand{\by}{\mathbf{y}}

\newcommand{\bxi}{\bm{\xi}}
\newcommand{\Var}{\textsc{Var}}
\newcommand{\op}{\mathrm{op}}

\newcommand{\bnu}{\bm{\upnu}}

\newcommand{\Psistar}{\Psi_{\star}}

\newcommand{\Pinf}{P_{\infty}}
\newcommand{\Siginf}{\Sigma_{\infty}}
\newcommand{\Rx}{Q}
\newcommand{\Ru}{R}
\newcommand{\Kinf}{K_{\infty}}
\newcommand{\be}{\mathbf{e}}
\newcommand{\bclip}{\bar b}

\newcommand{\gammaa}{\gamma_{A}}
\newcommand{\gammaab}{\bar{\gamma}_{A}}
\newcommand{\alphaa}{\alpha_{A}}
\newcommand{\gammainf}{\gamma_{\infty}}
\newcommand{\alphainf}{\alpha_{\infty}}
\newcommand{\Sigmaa}{\Sigma_{A}}

\newcommand{\Aclinf}{A_{\mathrm{cl},\infty}}

\newcommand{\Vf}{\mathbf{V}}
\newcommand{\Qf}{\mathbf{Q}}
\newcommand{\Vbar}{\wb{\mathbf{V}}}
\newcommand{\Qbar}{\wb{\mathbf{Q}}}

\newcommand{\Qhat}{\wh{Q}}

\newcommand{\Phat}{\wh{P}}
\newcommand{\Khat}{\wh{K}}

\newcommand{\bdelta}{\bm{\updelta}}

\newcommand{\calM}{\mathcal{M}}
\newcommand{\calN}{\mathcal{N}}
\newcommand{\sys}{\mathrm{sys}}

		\newcommand{\epsidh}{\veps_{\mathrm{id},h}}
		\newcommand{\nid}{n_{\mathrm{id}}}
		\newcommand{\cbaridone}{\bar{c}_{\mathrm{id},1}}
		\newcommand{\hhatid}{\hhat_{\mathrm{id}}}
		
		\newcommand{\eventidh}{\cE_{\mathrm{id},h}}
		\newcommand{\Sigkid}[1][k]{\Sigma_{#1,\mathrm{id}}}
		\newcommand{\Sigstid}{\Sigma_{\infty,\mathrm{id}}}

		\newcommand{\byk}{\by_k}

		\newcommand{\cconcid}{c_{\mathrm{conc},\mathrm{id}}}

		\newcommand{\epsidpca}{\veps_{\mathrm{id},\mathrm{pca}}}
		\newcommand{\eventidpca}{\cE_{\mathrm{id},\mathrm{pca}}}
		\newcommand{\Sid}{S_{\mathrm{id}}}

		\newcommand{\bxk}{\bx_k}

		\newcommand{\bytil}{\tilde{\by}}
			\newcommand{\betil}{\tilde{\be}}
			\newcommand{\matdeltil}{\tilde{\matdel}}
			\newcommand{\butil}{\tilde{\bu}}
			\newcommand{\Sigutil}{\Sigma_{\tilde{u}}}
			
		\newcommand{\Vhatkapid}{\Vhat_{\kappa,\mathrm{id}}}
		
		\newcommand{\Vid}{V_{\mathrm{id}}}

		\newcommand{\sigidmax}{\sigma_{\max,\mathrm{id}}}
		\newcommand{\sigidmin}{\sigma_{\min,\mathrm{id}}}
		\newcommand{\fhatid}{\fhat_{\mathrm{id}}}
		\newcommand{\fstid}{f_{\star,\mathrm{id}}}
		\newcommand{\bwtil}{\widetilde{\bw}}
		\newcommand{\Sigwhat}{\widehat{\Sigma}_{w}}
		\newcommand{\eventidls}{\cE_{\mathrm{id},\mathrm{ls}}}
		\newcommand{\Aid}{A_{\mathrm{id}}}
		\newcommand{\Bid}{B_{\mathrm{id}}}
		\newcommand{\Sigwid}{\Sigma_{w,\mathrm{id}}}
		\newcommand{\errfid}{\mathrm{err}_{f,\mathrm{id}}}

		\newcommand{\R}{\reals}
		
		\newcommand{\trace}{\mathrm{tr}}
		\newcommand{\Qst}{Q^{\star}}
		\newcommand{\Mtil}{\tilde{M}}
		\newcommand{\fro}{\mathrm{F}}
		\newcommand{\Term}{\mathrm{Term}}

		\newcommand{\supi}{^{(i)}}

                \newcommand{\Sigxnot}{\Sigma_{0}}
		
		\newcommand{\I}{\mathbb{I}}

		\newcommand{\htil}{\tilde{h}}

		\newcommand{\matetil}{\tilde{\mate}}
		\newcommand{\Sigw}{\Sigma_{w}}
		\newcommand{\cont}{\cC}
		\newcommand{\mate}{\mathbf{e}}
		\newcommand{\Mst}{M_{\star}}
		\newcommand{\Mhat}{\widehat{M}}
		\newcommand{\matdel}{\bm{\delta}}
		\newcommand{\matDel}{\bm{\Delta}}
		\newcommand{\Sigu}{\Sigma_{u}}
		\newcommand{\Lamhat}{\what{\Lambda}}
		\newcommand{\Vhat}{\widehat{V}}
		\newcommand{\gst}{g_{\star}}
		\newcommand{\ghat}{\hat{g}}
\newcommand{\init}{\mathrm{init}}
\newcommand{\Qid}{Q_{\mathrm{id}}}
\newcommand{\noise}{\mathrm{noise}}

\newcommand{\idinf}{{\infty,\mathrm{id}}}
\newcommand{\dec}{\mathrm{dec}}
\newcommand{\ol}{\mathrm{ol}}

\newcommand{\kappast}{\kappa_{\star}}

\newcommand{\Sigst}{\Sigma_{\star}}
\newcommand{\hst}{h_{\star}}

\newcommand{\hhat}{\hat{h}}

\newcommand{\Sigwidhat}{\widehat{\Sigma}_{w,\mathrm{id}}}

\newcommand{\Qhatid}{\widehat{Q}_{\mathrm{id}}}
\newcommand{\Qtilid}{\wtilde{Q}_{\mathrm{id}}}
\newcommand{\Ahatid}{\widehat{A}_{\mathrm{id}}}
\newcommand{\Bhatid}{\widehat{B}_{\mathrm{id}}}
\newcommand{\Sigwhatid}{\widehat{\Sigma}_{w,\mathrm{id}}}

\newcommand{\contk}{\cont_k}
\newcommand{\contkap}{\cont_{\kappa}}
\newcommand{\Hid}{\scrH_{\mathrm{id}}}

\newcommand{\kapnot}{\kappa_0}

\newcommand{\Sigkaponeid}{\Sigkid[\kapone]}

\newcommand{\bykapone}{\by_{\kapone}}
\newcommand{\bxkapone}{\bx_{\kapone}}
\newcommand{\kapone}{\kappa_1}

\newcommand{\hstid}{h_{\star,\mathrm{id}}}
\newcommand{\cv}{\mathrm{cov}}

\newcommand{\Vinf}{\Vf_{\infty}}
\newcommand{\Qinf}{\Qf_{\infty}}
\newcommand{\dlyap}{\dlyap}
\newcommand{\gammainfb}{\bar{\gamma}_{\infty}}
\newcommand{\Tnot}{T_0}
\newcommand{\Csim}{C_{\mathrm{sim}}}
\newcommand{\vepsk}{\veps_{K}}
\newcommand{\vepsf}{\veps_{f}}
\newcommand{\Bf}{\bclip}
\newcommand{\cx}{c_{\matx}}
\newcommand{\cost}{J_{T}}
\newcommand{\sigmanu}{\sigma_{\bnu}}
\newcommand{\signu}{\sigma_{\bnu}}

\newcommand{\Vhatid}{\Vhat_{\mathrm{id}}}

\newcommand{\nref}{n_{\mathrm{op}}}
\newcommand{\nop}{n_{\mathrm{op}}}

\newcommand{\ninit}{n_{\mathrm{init}}}
\newcommand{\vepsinit}{\veps_{\mathrm{init}}}

\newcommand{\kappastar}{\kappa_{\star}}
\newcommand{\Ceil}[1]{\left \lceil{#1}\right \rceil }

\newcommand{\bc}{\mathbf{c}}

\newcommand{\vepssys}{\veps_{\mathrm{sys}}}

\newcommand{\vepsid}{\veps_{\mathrm{id}}}

\newcommand{\lambdam}{\lambda_{\cM}}

\newcommand{\bigthetas}{\Theta_{\star}}

\newcommand{\gammast}{\gammastar}
\newcommand{\alphast}{\alphastar}
\newcommand{\Psist}{\Psistar}

\newcommand{\sysid}{\textsc{SysID}}
\newcommand{\computepol}{\textsc{ComputePolicy}}
\newcommand{\getcoarsedecoder}{\textsc{GetCoarseDecoder}}
\newcommand{\onpo}{\mathrm{op}}
\newcommand{\id}{\mathrm{id}}
\newcommand{\gammab}{\bar{\gamma}_{\infty}}

\renewcommand{\log}{\ln}

\newcommand{\mathand}{\quad\text{and}\quad}

\newcommand{\dist}{\mathsf{dist}}
\newcommand{\Qtil}{\widetilde{Q}}
\newcommand{\bvhat}{\hat{\bv}}
\newcommand{\bVhat}{\what{\bV}}
\newcommand{\qst}{q_{\star}}
\newcommand{\Sige}{\Sigma_{e}}

\newcommand{\darece}{\mathsf{DARE}\xspace}

\newcommand{\Mbar}{\wb{M}}
\newcommand{\cMbar}{\wb{\cM}}

\newcommand{\vec}{\mathrm{vec}}

\let\widebar\undefined

\usepackage{amsmath}
\makeatletter
\let\save@mathaccent\mathaccent
\newcommand*\if@single[3]{%
  \setbox0\hbox{${\mathaccent"0362{#1}}^H$}%
  \setbox2\hbox{${\mathaccent"0362{\kern0pt#1}}^H$}%
  \ifdim\ht0=\ht2 #3\else #2\fi
  }
\newcommand*\rel@kern[1]{\kern#1\dimexpr\macc@kerna}
\newcommand*\widebar[1]{\@ifnextchar^{{\wide@bar{#1}{0}}}{\wide@bar{#1}{1}}}
\newcommand*\wide@bar[2]{\if@single{#1}{\wide@bar@{#1}{#2}{1}}{\wide@bar@{#1}{#2}{2}}}
\newcommand*\wide@bar@[3]{%
  \begingroup
  \def\mathaccent##1##2{%
    \let\mathaccent\save@mathaccent
    \if#32 \let\macc@nucleus\first@char \fi
    \setbox\z@\hbox{$\macc@style{\macc@nucleus}_{}$}%
    \setbox\tw@\hbox{$\macc@style{\macc@nucleus}{}_{}$}%
    \dimen@\wd\tw@
    \advance\dimen@-\wd\z@
    \divide\dimen@ 3
    \@tempdima\wd\tw@
    \advance\@tempdima-\scriptspace
    \divide\@tempdima 10
    \advance\dimen@-\@tempdima
    \ifdim\dimen@>\z@ \dimen@0pt\fi
    \rel@kern{0.6}\kern-\dimen@
    \if#31
      \overline{\rel@kern{-0.6}\kern\dimen@\macc@nucleus\rel@kern{0.4}\kern\dimen@}%
      \advance\dimen@0.4\dimexpr\macc@kerna
      \let\final@kern#2%
      \ifdim\dimen@<\z@ \let\final@kern1\fi
      \if\final@kern1 \kern-\dimen@\fi
    \else
      \overline{\rel@kern{-0.6}\kern\dimen@#1}%
    \fi
  }%
  \macc@depth\@ne
  \let\math@bgroup\@empty \let\math@egroup\macc@set@skewchar
  \mathsurround\z@ \frozen@everymath{\mathgroup\macc@group\relax}%
  \macc@set@skewchar\relax
  \let\mathaccentV\macc@nested@a
  \if#31
    \macc@nested@a\relax111{#1}%
  \else
    \def\gobble@till@marker##1\endmarker{}%
    \futurelet\first@char\gobble@till@marker#1\endmarker
    \ifcat\noexpand\first@char A\else
      \def\first@char{}%
    \fi
    \macc@nested@a\relax111{\first@char}%
  \fi
  \endgroup
}
\makeatother

%
%

\usepackage{inconsolata}
\usepackage[scaled=.90]{helvet}
\usepackage{xspace}

\usepackage{enumitem}
\usepackage{verbatim}

\title{Learning the Linear Quadratic Regulator \\from Nonlinear Observations}

\neurips{
\author{%
}
}

\arxiv{
\author{Zakaria Mhammedi\\
        {\small\texttt{zak.mhammedi@anu.edu.au}}
          \and
	Dylan J.\ Foster\\
        {\small\texttt{dylanf@mit.edu}}
	  \and
Max Simchowitz\\
                {\small\texttt{msimchow@berkeley.edu}}
       \and
        Dipendra Misra\\
        {\small\texttt{dimisra@microsoft.com}}
        \and        
        Wen Sun\\
        {\small\texttt{sun.wen@microsoft.com}}
                \and        
        Akshay Krishnamurthy\\
        {\small\texttt{akshaykr@microsoft.com}}
        \and
                Alexander Rakhlin\\
        {\small\texttt{rakhlin@mit.edu}}
        \and
                John Langford\\
        {\small\texttt{jcl@microsoft.com}}
}
\date{}
}

\begin{document}

\maketitle

\begin{abstract}

We introduce a new problem setting for continuous control called the LQR with Rich Observations, or \richlqr. In our setting, the environment is summarized by a low-dimensional continuous latent state with linear dynamics and quadratic costs, but the
agent operates on high-dimensional, nonlinear observations such as images from a
camera. To enable sample-efficient learning, we assume that the learner has access to a class of \emph{decoder
  functions} (e.g., neural networks) that is flexible enough to
capture the mapping from observations to latent states. We introduce a
new algorithm, \richid, which learns a near-optimal policy for the
\richlqr with sample complexity scaling only with the dimension of the
latent state space and the capacity of the decoder function
class. \richid is oracle-efficient and accesses the decoder class only
through calls to a least-squares regression oracle. Our results constitute the first provable sample complexity guarantee
for continuous control with an \emph{unknown} nonlinearity in the
system model and general function approximation.

 \end{abstract}

\section{Introduction}
\label{sec:intro}

In reinforcement learning and control, an agent must learn to
minimize its overall cost in a dynamic environment that responds to
its actions. In recent years, the field has developed a comprehensive
understanding of the non-asymptotic sample complexity of \emph{linear control}, where the dynamics of the
environment are determined by a noisy linear system of equations.
While studying linear models has led to a number of new theoretical
insights, most practical
control tasks are nonlinear. In this paper, we develop efficient
algorithms with provable sample complexity guarantees for nonlinear control with rich, flexible
function approximation.

For some control applications, the dynamics themselves are truly nonlinear, but
another case---which is particularly relevant to real-world systems---is where there are (unknown-before-learning) latent
linear dynamics which are identifiable through a nonlinear observation
process. For example, cameras watching a robot may capture enough information
to control its actuators, but the optimal control law is unlikely to
be a simple linear function of the pixels. More broadly, with the
decrease in costs of sensing hardware, it is now common to instrument
complex control tasks with high-throughput measurement
apparatus such as cameras, lidar, contact sensors, or other
alternatives.  These measurements often constitute \emph{rich
  observations} %
which capture relevant information
about the system state.  However, deriving a control policy from these
complex, high-dimensional sources remains a significant challenge in
both theory and practice.

\paragraph{The \richlqr setting.} We propose a learning-theoretic framework
for rich observation
continuous control in which the environment is summarized by a low
dimensional continuous latent state (such as joint angles), while the
agent operates on high-dimensional observations (such as images from a
camera). While this setup is more general, we focus our technical
developments on perhaps the simplest instantiation: the \emph{rich
  observation linear quadratic regulator} (\richlqr). The \richlqr
posits that latent states evolve according to noisy linear equations
and that each observation can be associated with a latent state by an
unknown nonlinear mapping.

We assume that every possible high-dimensional observation of the
system corresponds to a particular latent system state, a property we
term \emph{decodability}. This assumption is natural in applications
where the observations contain significantly more information than
needed to control the system. 
However, decoding the latent state may require a
highly nonlinear mapping, in which case linear control on the raw
observations will perform poorly.
Our aim is to learn such
a mapping from data and use it for optimal control in the latent space. 

\subsection{LQR with Rich Observations}
\label{sec:problem_setting}
\newcommand{\fst}{f_{\star}}
\newcommand{\Sigmaw}{\Sigma_{w}}
\newcommand{\iidsim}{\overset{\mathrm{i.i.d.}}{\sim}}
\newcommand{\alphastar}{\alpha_{\star}}
\newcommand{\gammastar}{\gamma_{\star}}

\richlqr is a continuous control problem described by the following dynamics:
\begin{equation}
  \label{eq:dynamics}
  \begin{aligned}
    \matx_{t+1} &= A\matx_{t} + B\matu_t + \matw_t,\quad
    \maty_{t} \sim{} q(\cdot\mid{}\matx_t).
  \end{aligned}
\end{equation}
Starting from $\matx_0$, the system state $\matx_t\in\bbR^{\dimx}$ evolves as a linear combination of the previous state, a control input $\matu_t\in\bbR^{\dimu}$ selected by the learner, and zero-mean \iid~process noise $\matw_t\in\bbR^{\dimx}$. The learner does not directly observe the state, and instead sees an \emph{observation} $\maty_t\in\bbR^{\dimy}$ drawn from the observation distribution $q(\cdot \mid \matx_t)$.\footnote{Our results do not depend on $\dimy$, and in fact do not even require that $\maty$ belongs to a vector space.} Here $\dimy\gg\dimx$---for example, $\matx_t$ might represent the state of a robot's joints, while $\maty_t$ might represent an image of the robot in a scene. Given a policy $\pi_t(\maty_0,\ldots,\maty_t)$ that selects control inputs $\matu_t$ based on past and current observations, we measure performance as
\begin{align}
J_{T}(\pi) := \E_{\pi}\left[\frac{1}{T}\sum_{t=1}^T\matx_t^\top Q \matx_t + \matu_t^\top R \matu_t  \right],\label{eq:cost_functional}
\end{align}
where $Q,R \psdgt 0$ are quadratic state and control cost matrices and $\En_{\pi}$ denotes the expectation when the system's dynamics \eqref{eq:dynamics} evolve under $\matu_t=\pi_t(\maty_0,\ldots,\maty_t)$.

In our model, the dynamics matrices $(A,B)$ and the observation distribution $q(\cdot{}\mid\matx)$ are unknown to the learner. We assume that the control cost matrix $R \succ 0 $ is known, but the state cost matrix $Q \succ 0 $ is unknown (so as not to
tie the cost matrices to the system representation). We also assume the instantaneous costs $\matc_t\ldef{}\matx_t^{\trn}Q\matx_t+\matu_t^{\trn}R\matu_t$, are revealed on each trajectory at time $t$ (this facilitates learning $Q$, but not $A,B$). The learner's goal is to PAC-learn an $\veps$-optimal policy: given access to $n$ trajectories from the dynamics \eqref{eq:dynamics}, produce a policy $\pihat$ such that $    J_T(\pihat) - J_T(\piinf) \leq{} \veps,$
where $\piinf$ is the optimal infinite-horizon policy.
If the dynamics matrices $(A,B)$ were known and the state $\matx_t$ were directly observed, the \richlqr would reduce to the classical LQR problem \citep{kalman1960contributions}, and we could compute an optimal policy for \eqref{eq:cost_functional} using dynamic programming. In particular, the optimal policy has the form $\pi_{\infty}(\matx_t)  = \Kinf \matx_t$, where $\Kinf$ is the optimal infinite-horizon state-feedback matrix given by the Discrete Algebraic Riccati Equation (\pref{eq:Kinf} in the sequel). To facilitate the use of optimal control tools in our nonlinear observation model, we make the following assumption, which asserts the state $\matx_t$ can be uniquely recovered from the observation $\maty_t$.
\begin{assumption}[Perfect decodability]
  \label{ass:perfect}
There exists a \emph{decoder function} $f_\star : \reals^{\dimy} \rightarrow \reals^{\dimx}$ such that $f_\star(y) = x$ for all $y \in \supp q(\cdot \mid x)$.\footnote{We remark that $\fstar$ is typically referred to as an \emph{encoder} rather than a decoder in the autoencoding literature.}
\end{assumption}
While a perfect decoder $\fst$ is guaranteed to exist under \pref{ass:perfect} (and thus the optimal LQR policy can be executed from observations), $\fst$ is \emph{not known} to the learner in advance. Instead, we assume that the learner has access to a class of functions $\Fclass$ (e.g., neural networks) that is rich enough to express the perfect decoder. Our statistical rates depend on the capacity of this class.
\begin{assumption}[Realizability]
  \label{ass:realizable}
  The learner's decoder class $\Fclass$ contains the true decoder $\fst$. %
\end{assumption}
While these assumptions---especially decodability---may seem strong at first glance, we show that without strong assumptions on the observation distribution, the problem quickly becomes statistically intractable. Consider the following variant of the model \eqref{eq:dynamics}:
\begin{equation}
  \label{eq:noisy}
  \begin{aligned}
    \matx_{t+1} &= A\matx_{t} + B\matu_t + \matw_t, \quad\quad \maty_{t} = \fstar^{-1}(\matx_t) + \mateps_t,
  \end{aligned}
\end{equation}
where $\mateps_t$ is an independent \emph{output noise} variable with $\En\brk*{\mateps_t}=0$. In the absence of the noise $\mateps_t$, the system \eqref{eq:noisy} is a special case of \eqref{eq:dynamics} for which $\fstar$ is the true decoder, but in general the noise breaks perfect decodability. 
Unfortunately, our first theorem shows that output noise can lead to exponential sample complexity for learning nonlinear decoders, even under very benign conditions. %
\newtheorem*{thm:lb_informal}{Theorem (informal)}
\begin{thm:lb_informal}
  Consider the dynamics \eqref{eq:noisy} with $\dimx = \dimy = \dimu =T = 1$ and unit Gaussian noise. %
  For every $\veps>0$ there exists an $\bigoh(\veps^{-1})$-Lipschitz decoder $\fstar$ and realizable function class $\Fclass$ with $\abs*{\Fclass}=2$ such that any algorithm requires $\Omega(2^{\prn*{\frac{1}{\veps}}^{2/3}})$ trajectories to learn an $\veps$-optimal decoder.
\end{thm:lb_informal}
A full statement and proof for this lower bound is deferred to \pref{app:lower_bound} for space. 
\paragraph{Our Algorithm: \richid.} Our main contribution is a new algorithmic principle, \emph{Rich Iterative Decoding}, or \richid, which solves the \richlqr problem with sample complexity scaling polynomially in the latent dimension $\dimx$ and $\log\abs*{\Fclass}$. We analyze an algorithm based on this principle called \richidce (``\richid with Certainty Equivalence''), which solves the \richlqr by learning an off-policy estimator for the decoder, using the off-policy decoder to approximately recover the dynamics $(A,B)$, and then using these estimates to iteratively learn a sequence of on-policy decoders along the trajectory of a near-optimal policy. Our main theorem is as follows.
\begin{theorem}[Main theorem]
  \label{thm:main}
  Under appropriate regularity conditions on the system parameters and noise process (\savehyperref{ass:perfect}{Assumptions \ref*{ass:perfect}}-\ref{ass:m_matrix}), \richidce learns an $\veps$-optimal policy for horizon $T$ using $C\cdot{}\frac{(\dimx+\dimu)^{16}T^{4}\log\abs*{\Fclass}}{\veps^{6}}$ trajectories, where $C$ is a problem-dependent constant.\footnote{See \pref{thm:main_full} in \pref{sec:main_proof} for the full theorem statement.}
\end{theorem}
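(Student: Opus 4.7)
The plan is to decompose the suboptimality $J_T(\hat\pi) - J_T(\pi_\infty)$ into three concurrent error sources: (i) off-policy decoder learning error, (ii) system identification error in $(\hat A, \hat B, \hat Q)$, and (iii) the cost gap induced by running the certainty-equivalent controller built from these estimates with iteratively refined on-policy decoders. The global strategy is to propagate each error down this chain, invert the resulting polynomial expression for $\varepsilon$, and read off the sample complexity.

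First, I would collect an exploration dataset by injecting zero-mean Gaussian controls into the system over a burn-in horizon, so that the induced latent distribution is approximately stationary with well-conditioned covariance (via the noise assumptions invoked in the hypothesis). Because $\matx_t$ is not directly observed, the regression target used to fit $\hat f\in\mathcal{F}$ has to be constructed from observables alone (e.g., from future observations combined with known control inputs) in such a way that its conditional expectation given $\matx_t$ equals $\matx_t$ up to a known linear map; the correctness of this construction relies on perfect decodability and on the stationarity of the exploration distribution. Given that target, a standard uniform-convergence argument for the realizable finite class $\mathcal{F}$ gives an $L^2$-decoder error of order $\sqrt{\log|\mathcal{F}|/n}$ on the exploration distribution, and the bound can be lifted to a latent-state error through the covariance conditioning.

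Next, I would treat the decoded triples $(\hat f(\maty_t), \matu_t, \hat f(\maty_{t+1}))$ as inputs to ordinary least squares to recover $(\hat A, \hat B)$, with $\hat Q$ recovered analogously from the observed instantaneous costs. The regression error decomposes as (a) a standard least-squares fluctuation of order $\sqrt{(\dimx+\dimu)/n}$ scaled by the minimum eigenvalue of the input covariance, plus (b) a decoder-bias contribution linear in the $L^2$-decoder error. Perturbation theory for the discrete algebraic Riccati equation, under the stabilizability/detectability conditions implicit in the regularity assumptions, then produces a bound of the form $\|\hat K_\infty - K_\infty\| \lesssim \|(\hat A,\hat B,\hat Q) - (A,B,Q)\|$ multiplied by a problem-dependent conditioning factor. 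Plugging $\hat K_\infty$ into standard certainty-equivalence analysis for LQR yields a one-step cost gap that is quadratic in these parameter errors.

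The principal obstacle, and the reason the algorithm is \emph{iterative}, is covariate shift: the off-policy decoder is only accurate on the exploration distribution, but the trajectory of $\hat\pi = \hat K_\infty \hat f(\cdot)$ visits latent states drawn from a potentially very different distribution, where $\hat f$ may be arbitrarily wrong. I would therefore argue by induction on $t = 1, \ldots, T$: given sufficiently accurate estimates $(\hat A, \hat B, \hat K_\infty)$ and decoders $\hat f_1, \ldots, \hat f_{t-1}$, one can simulate the closed-loop target distribution at time $t$ using the estimated system, gather fresh data whose marginals match that distribution, and refit $\hat f_t$ by least squares to get low error \emph{at time $t$ under the target policy}; the inductive hypothesis controls how far the simulated distribution drifts from the true one-policy distribution. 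The hard step is closing this induction while keeping the per-step cost gap summable; this is what forces the large polynomial blow-up in $\dimx+\dimu$ (cascaded conditioning factors from covariance inversion, Riccati perturbation, and the Lipschitz dependence of $\hat f_t$ on its regression target) and the $\varepsilon^{-6}$ scaling (each decoder error of order $\delta$ only yields a final policy-gap bound of order $\delta^{1/3}$ after composition with system identification and Riccati perturbation). Summing the per-step gaps over the horizon produces the $T^4$ factor, and back-substituting the required per-step accuracy into each regression bound yields the claimed $n \gtrsim C\cdot(\dimx+\dimu)^{16} T^{4} \log|\mathcal{F}|/\varepsilon^{6}$ sample complexity.
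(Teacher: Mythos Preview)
Your high-level decomposition into three phases matches the paper, but two load-bearing mechanisms are missing, and without them the argument does not close.

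\textbf{Phase I: what is the regression target?} You write that the target ``has to be constructed from observables alone \ldots\ in such a way that its conditional expectation given $\matx_t$ equals $\matx_t$ up to a known linear map,'' but you never say what this target is. The paper devotes an entire subsection to explaining why the obvious candidates fail: regressing $f(\maty)$ onto $M\matu$ is degenerate (take $f=0$, $M=0$), and regressing $\maty$ onto $f^{-1}(M\matu)$ is misspecified because of the process noise. The actual construction is to \emph{reverse} the regression and predict the past Gaussian inputs $\matu_{1:\tau}$ from $\maty_{\tau+1}$. The Bayes predictor is $\mathbb{E}[\matu_{1:\tau}\mid \maty_{\tau+1}] = \widetilde M f_\star(\maty_{\tau+1})$, and the linearity of this conditional expectation in $\matx_{\tau+1}$ is a consequence of \emph{joint Gaussianity} of $(\matu_{1:\tau},\matx_{\tau+1})$---this is where the Gaussian noise assumption is used in an essential way, not just for concentration. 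Your proposal does not identify this mechanism.

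\textbf{Phase III: avoiding exponential error compounding.} Your inductive step says: given $\hat f_1,\ldots,\hat f_{t-1}$, ``simulate the closed-loop target distribution at time $t$ \ldots\ refit $\hat f_t$ by least squares,'' with the inductive hypothesis controlling drift. But refitting $\hat f_t$ runs into the same ``what is the target?'' problem as Phase I, and worse: if the regression target at step $t$ depends on the quality of $\hat f_1,\ldots,\hat f_{t-1}$, misspecification error compounds multiplicatively and blows up exponentially in $T$. The paper's solution is structurally different. It injects fresh Gaussian noise $\bnu_\tau$ on top of the policy, rolls in with $\hat\pi_1,\ldots,\hat\pi_t$ but rolls \emph{out} with pure noise for $\tau>t$, and regresses $\bnu_{t:t+k-1}$ onto $(\maty_{0:t},\maty_{t+k})$. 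By the same Gaussian-conditional-expectation trick, this regression is \emph{well-specified regardless of the errors in $\hat f_1,\ldots,\hat f_t$}, and its Bayes predictor recovers the increment $\matx_{t+1}-A\matx_t$ (up to a linear map). The state decoder is then rebuilt recursively via $\hat f_{t+1}\approx (\hat h_t(\maty_{t+1}) - \hat A\,\hat h_t(\maty_t)) + \hat A\,\hat f_t$, and stability of $A$ makes the accumulated error grow only linearly in $t$. This increment-plus-recursion structure, together with the roll-in/roll-out noise schedule, is the idea that prevents compounding; your induction-on-drift sketch does not supply it.

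As a secondary point, your attribution of the $\varepsilon^{-6}$ rate to a $\delta \mapsto \delta^{1/3}$ composition loss is not how it arises. The actual source is the injected noise: the exploration variance must satisfy $\sigma^2 \lesssim \varepsilon$ so that noise injection costs at most $\varepsilon$, but the conditioning of the Phase III regression degrades like $\sigma_{\min}(\mathcal M)\gtrsim \sigma^2$, so the decoder error bound carries a $\sigma^{-4}$ factor; requiring decoder error $\lesssim \varepsilon$ then forces $n \gtrsim \varepsilon^{-6}$.
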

\pref{thm:main} shows that it is possible to learn the \richlqr with complexity polynomial in the the latent dimension and decoder class capacity $\log\abs*{\Fclass}$, and independent of the observation space. To our knowledge, this is the first polynomial-in-dimension sample complexity guarantee for continuous control with an unknown system nonlinearity and general function classes. 
\iftoggle{neurips}{The main challenge we overcome in attaining \pref{thm:main} is trajectory mismatch: a learned decoder $\fhat$ which accurately approximates the true decoder $\fst$ well on one trajectory may significantly deviate from $\fst$ on another. Our algorithm addresses this issue using a carefully designed \emph{iterative decoding} procedure to learn a sequence of decoders on-policy.}{}
We present our main theorem for finite classes $\Fclass$ for simplicity, but this quantity arises only through standard generalization bounds for least squares, and can trivially be replaced by learning-theoretic complexity measures such as Rademacher complexity (in fact, local Rademacher complexity). For example, if $\Fclass$ has pseudodimension $d$, one can replace $\log\abs*{\Fclass}$ with $d$ in \pref{thm:main}.

\Cref{thm:main} requires relatively strong assumptions on the dynamical system---in particular, we require that the system matrix $A$ is stable, and that the process noise is Gaussian. Nonetheless, we believe that our results represent an important first step toward developing provable and practical sample-efficient algorithms for continuous control beyond the linear setting, and we are excited to see technical improvements addressing these issues in future research.

\subsection{Our approach}
Our algorithm is broken into three phases. In the first phase, we excite the system with Gaussian inputs, then solve a carefully designed regression problem which recovers a decoder $\fhat$ whose performance is near-optimal under the steady state distribution. The choice of \emph{what} regression problem to solve is rather subtle, and we show (\pref{sec:bayes_pred}) that many naive approaches (e.g., predicting observations from inputs) fail. Our first key contribution is to show that an approach based on predicting \emph{inputs from observations} succeeds under appropriate assumptions.

The second phase of our algorithm estimates the dynamics matrices $(A,B)$ and certain other system parameters using our learned decoder's prediction $\fhat(\maty_t)$ as a plug-in estimate for the system state $\matx_t$. We then use these estimates to synthesize a near-optimal linear controller $\Khat$. The analysis here is rather straightforward, albeit somewhat technical due to the misspecification error caused by the inexact state estimates.

\paragraph{Key challenge: Trajectory mismatch.}
The third phase of our algorithm solves a major issue we call \emph{trajectory mismatch}. Suppose for simplicity that $\Khat=\Kinf$, i.e. we exactly recover the optimal controller in the second phase (in reality, we must account for approximation error). A tempting approach is to select $\matu_t=\Kinf\fhat(\matx_t)$, where $\fhat$ is the decoder learned in the first phase. Unfortunately, this decoder is only guaranteed to be accurate on the steady state distribution induced by the Gaussian inputs we use for the first phase. There is no guarantee that this decoder will be accurate on the state distribution induced by the policy above. Indeed, this is an instance of a common technical issue in statistical learning: In general, given a function $\hat{f}$ such that $\mathbb{E}_{P}\|\hat{f}(x) - f^{\star}(x)\|^2 \leq \varepsilon$ for a distribution $P$, we have no guarantee that $\mathbb{E}_{Q}\|\hat{f}(x) - f^{\star}(x)\|^2 \leq \varepsilon$ for a different distribution $Q$ unless we put strong structural assumptions on either $P/Q$ or the function class $\mathcal{F}$. Since we do not make such assumptions, we solve this problem by learning a new decoder. This is where our work departs from recent efforts such as \cite{dean2020certainty}, who---by working with nonparametric classes which incur exponential sample complexity---learn a decoder which \emph{uniformly} approximates $\fstar$; such an approach does not succeed in the general setting we consider here.

At this point, the challenge we face is how to learn a new decoder $\fhat$ that approximates $\fst$ on trajectories induced by playing $\pihat(\maty_t)=\Kinf\fhat(\maty_t)$. In particular, the foundational performance difference lemma \citep{kakade2003sample} implies that it suffices to ensure that
\begin{align}
\sum_{t=1}^T\En_{\pihat}\brk[\big]{\|\fhat(\maty_t) - \fstar(\maty_t)\|^2}  \le \veps. \label{eq:fhat_Informal_desired}
\end{align}
This presents a clear chicken-and-egg problem: how do we ensure that  $\fhat$ enjoys \eqref{eq:fhat_Informal_desired} on its \emph{own} induced policy $\pihat$?

\paragraph{Our solution: Iterative decoding.} We address this issue by  iteratively learning a sequence of time-dependent decoders $\crl{\fhat_{t}}_{t=1}^{T}$. For each iteration $1\leq{}t \le T$ we predict $\matx_t$ with $\fhat_t(\maty_t)$, where $\fhat_t$ is a decoder learned at the previous iteration, and follow the induced policy $\pihat_t(\maty_t) = \Kinf \fhat_t(\maty_t)$. We then estimate $\fhat_{t+1}$ by learning to predict $\matx_{t+1}$ under the trajectory induced by playing $\matu_1=\pihat_1(\maty_1),\ldots,\matu_t=\pihat_t(\maty_t)$. They key idea here is that the induced distribution for $\matx_t$ does not depend on $\fhat_t$, only on $\fhat_1,\ldots,\fhat_{t-1}$, thereby solving the chicken-and-egg problem.

A major technical challenge is ensuring that this iterative decoding procedure does not lead to errors which compound exponentially in the horizon $T$; this is a serious issue which can easily arise if the misspecification error for the regression problem we solve to learn $\fhat_t$ depends on the quality of the previous decoders $\fhat_1,\ldots,\fhat_{t-1}$. To solve this issue, we work with another carefully designed regression problem. They key idea is to roll in with the policies $\pihat_1,\ldots,\pihat_{t-1}$, but roll out with purely Gaussian inputs for steps $\tau = t,t+1,\dots$. This allows us to set up a regression problem which is \emph{well-specified} and enjoys the advantages of Gaussianity, while remaining valid under the trajectory induced by $\crl*{\pihat_t}$. The analysis for this phase is quite technical due to the inexact estimates from the first two phases, and showing that the indirect regression problems we solve eventually lead to a good predictor for the state requires substantial effort.

\subsection{Technical Preliminaries}
\label{sec:preliminaries}
\neurips{In the interest of brevity, we present an abridged discussion of technical preliminaries; all omitted formal definitions, further assumptions, and additional notation are deferred to \Cref{app:prelim}.}
The main assumptions used by \richid are as follows.
\begin{assumption}[Gaussian initial state and process noise]\label{ass:gaussian}
The initial state satisfies $\matx_0\sim{}\cN(0,\Sigma_0)$, and process noise is i.i.d. $\bw_t \sim \cN(0,\Sigmaw)$. Here, $\Sigma_0,\Sigmaw$ are \emph{unknown} to the learner, with $\Sigmaw \succ 0$.
\end{assumption}
\begin{assumption}[Controllability]\label{ass:controllability} For each $k \ge 1$, define $\cC_{k}\ldef[A^{k-1}B \mid \dots \mid B ] \in \reals^{\dimx \times k \dimu}$. We assume that $(A,B)$ is \emph{controllable}, meaning that $\cC_{\kappast}$ has full column rank for some $\kappast\in \bbN$. 
\end{assumption}
Note that \pref{ass:controllability} imposes the constraint $\dimu \kappast \ge \dimx$, which we use to simplify various expressions.
\begin{assumption}[Growth Condition]\label{asm:f_growth} There exists $L\geq{}1$ such that $\|f(y)\| \le L\max\{1,\|\fst(y)\|\}$ for all $y \in \cY$ and $f \in \Fclass$.
\end{assumption}
\begin{assumption}[Stability]\label{ass:stability} $A$ is stable; that is, $\rho(A) < 1$, where $\rho(\cdot)$ denotes the spectral radius.
\end{assumption}
Our algorithms and analysis make heavy use of the Gaussian process noise assumption, which we use to calculate closed-form expressions for certain conditional expectations that arise under the dynamics model \eqref{eq:dynamics}. We view relaxing this assumption as an important direction for future work. Controllability is somewhat more standard  \citep{mania2019certainty}, and the growth condition ensures predictions do not behave too erratically. Stability ensures the state remains bounded without an initial stabilizing controller. While assuming access to an initial stabilizing controller is fairly standard in the recent literature on linear control, this issue is more subtle in our nonlinear observation setting. These assumptions can be relaxed somewhat; see \Cref{ssec:relax_asm}. \neurips{We make the stability assumption
quantitative via the notion of ``strong stability'' (\pref{app:prelim})\citep{cohen2018online}. Finally, we assume access to upper bounds on various system parameters.}

\paragraph{Policies, interaction model, and sample complexity.}
Formally, a policy $\pi$ for the setup \eqref{eq:dynamics} is a
sequence of mappings $(\pi_t)_{t=0}^{T}$, where $\pi_t$ maps the
observations $\maty_0,\ldots,\maty_t$ to an output control signal
$\matu_t$. In each round of interaction, the learner proposes a policy
$\pi$ and observes a trajectory
$\matu_0,\maty_0,\ldots,\matu_{T},\maty_{T}$ where
$\matu_t=\pi_t(\maty_0,\ldots,\maty_t)$.  We measure the sample complexity to learn an $\veps$-optimal policy
for $J_T$ in terms of the number of trajectories observed in this model. However, to simplify the description of our algorithm, we
allow the learner to execute trajectories of length $2T +
\bigoh_{\star}(\ln\ln(n))$ during the learning process, even though the
objective is $J_T$. To avoid trivial issues caused by unidentifiability of the initial
state $\matx_0$, we define $J_T$ to measure cost on times
$1,\dots,T$. On the other hand, our rollouts begin at time $0$: the initial state is $\matx_0$, and the first control input executed
is $\matu_0$.

\paragraph{Cost functions. }
We assume that the control cost matrix $R \succ 0 $ is known but,
to
avoid tying costs to the unknown latent representation $\matx$, we
assume that the state cost matrix $Q \succ 0 $ is unknown. Instead, we
assume that the learner has access to an additional \emph{cost oracle}
which on each trajectory at time $t$ reveals
$\matc_t\ldef{}\matx_t^{\trn}Q\matx_t+\matu_t^{\trn}R\matu_t$. For
simplicity, we place the following mild regularity conditions on the
cost matrices.%
 \begin{assumption}
    The cost matrices $\Rx$ and $\Ru$ satisfy $\eigmin(\Rx),\eigmin(\Ru)\geq{}1$. 
  \end{assumption}
  This assumption can be made to hold without loss of generality
  whenever $Q,R\psdgt{}0$ via rescaling.

\paragraph{The \dare and infinite-horizon optimal control.}
Controllability (and more generally stabilizability) implies that there is a unique positive definite solution $\Pinf\psdgt{}0$ to the \emph{discrete algebraic
  Riccati equation} (\dare),
  \begin{align}
  \tag{DARE}
    \label{eq:dare}
    P = A^{\trn}PA + \Rx - A^{\trn}PB(\Ru+B^{\trn}PB)^{-1}B^{\trn}PA,
  \end{align}
  which characterizes the optimal cost function for the LQR problem in the infinite-horizon setting. Our analysis uses $\Pinf$, and our algorithms use the optimal infinite-horizon state feedback controller 
  \begin{align}
  \Kinf\ldef{}-(\Ru+B^{\trn}\Pinf{}B)^{-1}B^{\trn}\Pinf{}A. \label{eq:Kinf}
  \end{align}
  When the state $\matx_t$ is directly observed, the optimal
  infinite-horizon controller is the time-invariant feedback policy $u
  = \Kinf x$. Thus, the optimal infinite-horizon policy for \richlqr,
  given the exact decoder, is $\piinf(y) = \Kinf \fst(y)$. We use this
  controller as our benchmark. Our analysis also uses the infinite-horizon covariance matrix
  \begin{align*}
   \Siginf \coloneqq \Ru+B^{\trn}\Pinf{}B.
\end{align*}
Our algorithm relies on \emph{certainty equivalence}, in which we estimate $\Kinf$ by solving the \dare{} with plug-in estimates $\Ahat,\Bhat$ of $(A,B)$ to obtain a matrix $\Phat$, and take $\Khat := -(\Ru+\Bhat^{\trn}\Phat \Bhat)^{-1}\Bhat^{\trn}\Phat\Ahat$. 
  \begin{definition}[\dare operator]\label{defn:darece} We define the  $\darece$ operator as the operator which takes in matrices $(A_0,B_0,R_0,Q_0)$ with $R_0,Q_0 \succeq 0$, and returns $(P,K)$ such that
  \begin{align*}
  P &= A_0^{\trn}PA_0 + \Rx - A_0^{\trn}PB_0(\Ru+B_0^{\trn}PB_0)^{-1}B_0^{\trn}PA_0,\\
  K&=  -(\Ru+B_0^{\trn}PB_0)^{-1}B_0^{\trn}PA_0.
  \end{align*}
  \end{definition}

\neurips{%
\paragraph{Strong stability.}
We quantify stability via \emph{strong stability}
\citep{cohen2018online}. Intuitively, a matrix $X$ is \emph{strongly
  stable} if its powers $X^n$ decay geometrically in a quantitative sense.
  \begin{definition}[Strong stability]
\label{def:ss}
A matrix $X\in \bbR^{\dimx\times\dimx}$ is said to be $(\alpha,\gamma)$-strongly stable if there exists $S\in \bbR^{\dimx\times\dimx}$ such that $\|S\|_{\op}\|S^{-1}\|_{\op} \le \alpha$ and $\|S^{-1}XS\|_{\op} \le \gamma<1$. 
\end{definition}
We frequently make use of the fact that if $X$ is $(\alpha,\gamma)$-strongly stable, then 
\begin{align*}
\|X^n\|_{\op} = \|S(S^{-1}XS)^nS^{-1}\|_{\op} \le \|S^{-1}\|_{\op}\|S\|_{\op}\|S^{-1}XS\|_{\op}^n \le \alpha \gamma^n.
\end{align*}
We let $(\alphaa,\gammaa)$ and $(\alphainf,\gammainf)$ be the strong
parameters for $A$ and $\Aclinf\coloneqq A + B K_\infty$ respectively. Under \Cref{ass:controllability} and \Cref{ass:stability}, we are guaranteed
that $\gammaa,\gammainf<1$ (see \pref{prop:closed_loop_ss} and
\pref{prop:open_loop_ss} for quantitative bounds). Finally, we recall from
\pref{asm:para_upper_bounds} that we assume the learner knows
upper bounds $(\alphastar,\gammastar)$ such that
$\alphaa\vee\alphainf\leq\alphastar$ and $\gammaa\vee\gammainf\leq\gammastar$.

}

\paragraph{Strong stability.}
We quantify stability of various matrices that arise in our analysis via \emph{strong stability}
\citep{cohen2018online}. Intuitively, a matrix $X$ is \emph{strongly
  stable} if its powers $X^n$ decay geometrically in a quantitative sense.
  \begin{definition}[Strong stability]
\label{def:ss}
A matrix $X\in \bbR^{\dimx\times\dimx}$ is said to be $(\alpha,\gamma)$-strongly stable if there exists $S\in \bbR^{\dimx\times\dimx}$ such that $\|S\|_{\op}\|S^{-1}\|_{\op} \le \alpha$ and $\|S^{-1}XS\|_{\op} \le \gamma<1$. 
\end{definition}
We make frequent
use of the fact that if $X$ is $(\alpha,\gamma)$-strongly stable, then 
\begin{align*}
\|X^n\|_{\op} = \|S(S^{-1}XS)^nS^{-1}\|_{\op} \le \|S^{-1}\|_{\op}\|S\|_{\op}\|S^{-1}XS\|_{\op}^n \le \alpha \gamma^n.
\end{align*}
We let $(\alphaa,\gammaa)$ and $(\alphainf,\gammainf)$ be the strong stability
parameters for $A$ and $\Aclinf\coloneqq A + B K_\infty$, respectively. Under \Cref{ass:controllability} and \Cref{ass:stability}, we are guaranteed
that $\gammaa,\gammainf<1$ (see \pref{prop:closed_loop_ss} and
\pref{prop:open_loop_ss} for quantitative bounds).

Finally, we assume access to upper bounds on various system parameters.
\begin{assumption}\label{asm:para_upper_bounds} We assume that the learner has access to parameter upper bounds $\Psistar \ge 1$, $\alphastar \ge 1$, $\gammastar\in (0,1)$, and $\kappa \in \bbN$ such that (\textbf{I}) $\kappa \ge \kappast$, (\textbf{II}) $A$ and $(A + B\Kinf)$  are both $(\alphastar,\gammastar)$-\text{strongly stable}, and (\textbf{III}) $\Psistar$ is an upper bound on the operator norms of $A$, $B$, $Q$, $R$, $\Sigw$, $\Sigw^{-1}$, $\Sigma_0$, $\Kinf$, and $\Pinf$.\footnote{Here, $\Pinf$ solves the \dare \arxiv{\eqref{eq:dare}}\neurips{(\eqref{eq:dare} in \Cref{ssec:prelim_unabridge})}, and $\Kinf$ is the optimal infinite horizon controller.}
\end{assumption}

\neurips{\subsection{Additional Notation}}
\paragraph{Asymptotic notation.}
Lastly, we adopt standard non-asymptotic big-oh notation. For functions
	$f,g:\cX\to\bbR_{+}$, we write $f=\bigoh(g)$ if there exists some universal constant
	$C>0$, which doesn not depend on problem parameters, such that $f(x)\leq{}Cg(x)$ for all $x\in\cX$. Our proofs also use the shorthand $f(x) \lesssim g(x)$ to denote $f = \BigOh\prn{g}$. We use $\bigoht(\cdot)$ so suppress logarithmic dependence on system parameters, time horizon, and dimension. We use $\bigohs(\cdot)$ to suppress polynomial factors in $\alphastar,\gammastar^{-1},(1-\gammastar)^{-1},\Psistar$, $L$, and $\sigmamin^{-1}(\cC_{\kappa})$, and all logarithmic factors except for $\log\abs*{\Fclass}$ and $\log(1/\delta)$. We write $f=\bigoms(g)$ if $f(x)\geq{}Cg(x)$ for all $x\in\cX$, where $C$ is a \emph{sufficiently large} constant whose value is polynomial in the same parameters. Lastly, we write $f=\bigohch(g)$ if $f(x)\leq{}cg(x)$ for all $x\in\cX$, where
$c=\poly(\gammastar(1-\gammastar),\alphastar^{-1},\Psistar^{-1},L^{-1},\sigmamin(\cC_{\kappa}))$ is a \emph{sufficiently small} constant.

\paragraph{General notation.} 
	For a vector $x\in\bbR^{d}$, we let $\nrm*{x}$ denote the euclidean
	norm and $\nrm*{x}_{\infty}$ denote the element-wise $\ls_{\infty}$
	norm. We let $\nrm*{x}_{A}=\sqrt{x^{\trn}Ax}$ for $A\psdgeq{}0$. For a matrix $A$, we let $\nrm*{A}_{\op}$ denote the
        operator norm. If $A$ is symmetric, we let $\eigmin(A)$ denote the
	minimum eigenvalue. For a potentially asymmetric matrix $A\in\bbR^{d\times{}d}$, 
        we let $\rho(A)\coloneqq \max \{|\lambda_1(A)|, \dots
        ,|\lambda_{d}(A)|\}$ denote the spectral radius. For a
        symmetric matrix $M\in\bbR^{d}$, $(M)_{+}$ denotes the result
        of thresholding all negative eigenvalues to zero, and we let
        $\lambda_1(M),\ldots,\lambda_d(M)$ denote the eigenvalues of
        $M$, sorted in decreasing order. Similarly, for a matrix
        $A\in\bbR^{d_1\times{}d_2}$, we let
        $\sigma_1(A),\ldots,\sigma_{d_1\wedge{}d_2}(A)$ denote the
        singular values of $A$, sorted in decreasing order, and use the shorthand $\sigma_{\min}(A)=\sigma_{d_1\wedge{}d_2}(A)$. We let $\vec(A)\in\bbR^{d_1d_2}$ be the vectorization of $A$. For matrices $A$ and $B$, we use $[A\mid{}B]$ or $[A;B]$ to denote their horizontal concatenation.

 \arxiv{
\subsection{Related Work}

\neurips{\paragraph{Related work.} }Our model and approach are related to the literature on Embedding to
Control (E2C), and related techniques
\citep{watter2015embed,banijamali2018robust,hafner2019learning,levine2019prediction,shu2020predictive,dean2019robust}
(see also \cite{levine2016end}). At a high level, these
approaches learn a decoder that maps images down to a latent
space, then performs simple control techniques such as iterative LQR (iLQR) in the
latent space (\citet{watter2015embed} is a canonical
example). These approaches are based on heuristics, and
do not offer provable sample complexity guarantees to learn the
decoder in our setting.

Our work is also related to recent results on rich observation
reinforcement learning with discrete
actions~\citep{jiang2017contextual}. We view our model as the
control-theoretic analog of the block MDP model studied
by~\citet{DuKJAD019,Homer}, in which a latent state space associated
with a discrete Markov Decision Process is decodable from rich
observations. However, our setting is considerably different, in part
because of the continuous nature of the \richlqr, and so the results
and techniques are incomparable. In particular, discretization
approaches immediately face a curse-of-dimensionality phenomenon and
do not yield tractable algorithms. Interestingly, even ignoring
the issue of continuous actions, our setting does not appear to have
low Bellman rank in the sense of \citet{jiang2017contextual}.

A recent line of
work~\citep{oymak2019stochastic,sattar2020non,foster2020learning} gives
non-asymptotic system identification guarantees for a simple class of
``generalized linear'' dynamical systems. These results address a non-linear
dynamic system, but are incomparable to our own as the non-linearity
is known and the state is directly observed.
Our results also are related to the LQG problem,
which is a special case of \eqref{eq:noisy} with linear observations; recent work provides non-asymptotic guarantees
\citep{mania2019certainty,simchowitz2020improper,lale2020logarithmic}. These
results show that linear classes do not
encounter the sample complexity barrier exhibited by \pref{thm:lower_bound}.%

Finally, we mention two concurrent works which consider similar
settings. First, \cite{frandsen2020extracting} give guarantees for a
simpler setting in which we observe a linear combination of the latent
state and a nonlinear nuisance parameter, and where there is no noise. Second, \citet{dean2020certainty} (see also \citet{dean2019robust})
give sample complexity guarantees for a variant of the our setting in
which there is no system noise, and where $\maty_t=\gstar(C\maty_t)$,
where $C\in\bbR^{p\times{}\dimx}$ and $\gstar:\bbR^{p\to{}q}$ is a smooth function. They
provide a nonparametric approach which scales exponentially in the
dimension $p$. 
Compared to this result, the main advantage of our approach is that
it allows for general function approximation; that is, we allow
for arbitrary function classes $\Fclass$, and our results depend only
on the capacity of the class under consideration. In terms of assumptions, the addition of the $C$ matrix allows for maps that
(weakly) violate the perfect decodability assumption; we suspect
that our results can be generalized in this fashion. Likewise, we
believe that our assumption concerning the stability of $A$ can be removed in the absence
of system noise (indeed, system noise is one of the primary technical
challenges overcome by our approach).

 }

\section{An Algorithm for LQR with Rich Observations}
\label{sec:algorithm}
\begin{algorithm}[h]
  \setstretch{1.1}
  \begin{algorithmic}[1]\onehalfspacing
    \State\textbf{Inputs:}
    \Statex{}~~~~$\veps$ (suboptimality), $T$ (horizon), $\Fclass$
    (decoder class), $\dimx, \dimu$ ({latent dimensions}), \Statex{}~~~~$\Psistar, \kappa,\alphastar, \gammastar$ ({system parameter
       upper bounds}), $R$ ({control cost}). %
       \State\textbf{Parameters:} \algcomment{see \pref{sec:main_proof} for values.}
        \Statex{}~~~~$\nid$, $\nref$ \algparen{sample size for
          Phase/Phase II and Phase III, respectively}
    \Statex{}~~~~$\kappa_0$ \algparen{burn-in time index}
         \Statex{}~~~~$r_{\id}, r_{\op}$ \algparen{radius for sets
           $\Hid$ and $\Hclass_{\op}$}
         \Statex{}~~~~$\sigma^{2}$ \algparen{exploration variance}
         \Statex{}~~~~$\bclip$ \algparen{clipping parameter for
           decoders}
         \smallskip

    	    \State{}\textbf{Phase I} \algcomment{learn a coarse decoder (see \pref{sec:phase1})}
    	        \State  Set $\fhatid \leftarrow
                \getcoarsedecoder(\nid, \kappa_0, \kappa, r_{\id})$. \algcomment{\pref{alg:phase1}}
    \State{}\textbf{Phase II} \algcomment{learn system's dynamics and cost (see \pref{sec:phase2})}
    \State  Set $(\what A_{\id}, \what B_{\id}, \what \Sigma_{w,\id},
    \what Q_{\id})  \leftarrow \sysid(\fhatid, \nid, \kappa_0, \kappa)$. \algcomment{\pref{alg:phase2}}
    \State{}\textbf{Phase III} \algcomment{compute optimal policy (see \pref{sec:phase3}\neurips{ and \pref{sec:phase3_proofs}})}
     \State Set $\what \pi \leftarrow \computepol(\what A_\id,\what
     B_\id, \what \Sigma_{w,\id},\what Q_{\id},R, \nref, \kappa,
     \sigma^2, T, \bclip, r_{\op})$. \algcomment{\pref{alg:phase3}}
     \State{}\textbf{Return:}  $\what \pi$.
  \end{algorithmic}
  \caption{\richidce}
  \label{alg:main}
\end{algorithm}

        \newcommand{\fref}{\fhat}
        \newcommand{\freftil}{\tilde{f}}
    
        \newcommand{\vepstarget}{\veps}
	\newcommand{\nsys}{n_{\mathrm{sys}}}

We now present out main algorithm, \richidce (\pref{alg:main}), which
attains a polynomial sample complexity guarantee for the \richlqr. 

\paragraph{Algorithm overview.}
\pref{alg:main} consists of three phases. In Phase I (\pref{alg:phase1}), we roll in with
Gaussian control inputs and learn a good decoder under this roll-in
distribution by solving a certain regression problem involving our
decoder class $\Fclass$. In Phase II (\pref{alg:phase2}), we leverage this decoder to
learn a \emph{model} $(\Ahat,\Bhat)$ for the system dynamics (up to a
similarity transform). Due to linearity of the dynamics, this model is
valid on any trajectory. Moreover, we can synthesize a controller $\Khat$ so that the feedback controller $\matu_t = \Khat \matx_t$ is optimal for $(\Ahat,\Bhat)$, and thus near-optimal for $(A,B)$.

	To actually implement this feedback controller, we still need
        a good decoder for the state. Unfortunately, our decoder from
        Phase I may be inaccurate along the optimal (or near-optimal)
        trajectory. Thus, in Phase III (\pref{alg:phase3}) we inductively solve a sequence of regression problems---one for each
	time $t=0,\ldots,T$---to learn a sequence of state decoders
        $(\fhat_t)$, such that for each $t$, $\fhat_t \approx \fst$ under the
        roll-in distribution induced by playing
        $\Khat\fhat_{s}(\maty_{s})$ for $s<t$. We do this by rolling
        in with this near-optimal policy until $t$, but rolling out with
        purely Gaussian inputs. The former ensures that the decoder is
        accurate along the desired trajectory. The latter ensures that
        the regression at time $t$ is essentially ``independent'' of
        approximation errors incurred by steps $0,\dots,t-1$, avoiding
        an accumulation of errors which would otherwise compound exponentially in the horizon $T$.

	In what follows, we walk through each phase in detail and explain the
	motivation, the technical assumptions required, and the key
	performance guarantees.

\arxiv{%
\subsection{Predicting Inputs from Outputs: The Bayes Regression Function}
\label{sec:bayes_pred}
At the core of our algorithm is a simple but indispensible identity for the Bayes predictor that arises when we aim to predict control inputs $\matu$ from observations $\maty$ in the \richlqr model. As a motivating example, let a time $\tau\geq{}1$ be fixed, suppose we take Gaussian inputs $\matu_{1:\tau} := (\matu_1^\top,\dots,\matu_{\tau}^\top)^\top \sim \calN(0,I_{\tau\dimu})$, and consider the resulting state $\matx_{\tau+1}$. Suppose that our goal is to estimate $\fst$ with expected $L_2$ error under the marginal distribution of $\matx_{\tau+1}$. That is, we wish to ensure
\begin{align}
\E\left[\|\fhat(\matx_{\tau+1}) - \fst(\matx_{\tau+1})\|^2\right] \le \text{(something small)}, \label{eq:something_small}
\end{align}
where $\En\brk*{\cdot}$ denotes the expectation under the Gaussian inputs above.
\paragraph{Attempt 1.} The natural strategy to attain \eqref{eq:something_small} is to regress $\maty_{\tau+1}$ to $\matu_{1:\tau}$. For example, note that linearity of the dynamics ensures that there exists a matrix $M_{\star} \in \R^{\dimx \times \tau\dimu}$ such that $\E[\matx_{\tau+1} \mid \matu_{1:\tau}] = M_{\star}\matu_{1:\tau}$. Thus, one could attempt the regression
\begin{align*}
\min_{f \in \Fclass, M \in \R^{\dimx \times \tau\dimu}} \E\left[\| M \matu_{1:\tau} - f(\maty_{\tau+1})\|^2\right].
\end{align*}
Unfortunately, there are too many degrees of freedom in this minimization problem: if $0 \in \Fclass$, then the above is minimized with $f=0$ and $M = 0$. 

\paragraph{Attempt 2.}A second attempt might be to hope that all  $f \in \Fclass$ are invertible, and try to solve a regression problem based on reconstructing the observations:
\begin{align*}
\min_{f^{-1}: f \in \Fclass, M \in \R^{\dimx \times \tau\dimu}} \E\left[\| f^{-1}(M \matu_{1:\tau}) - \maty_{\tau+1}\|^2\right],
\end{align*}
Unfortunately, since $\matx_{\tau+1} = M_{\star}\matu_{1:\tau} + \text{(noise)}$, passing through the nonlinearity $f^{-1}$ obviates any clear guarantees. In particular, this setup does not satisfy the usual first-order condition for regression with a well-specified model. A secondary issue is that even in the absence of system noise, this approach would likely incur dependence on the \emph{observation} dimension $\dimy$. %

\paragraph{Our Approach.} Our approach is to flip the input and target and regress $\matu_{1:\tau}$ to $\maty_{\tau+1}$. Specifically, we consider the regression:
\begin{align}
\min_{g = M f : f \in \Fclass,  M\in \R^{\tau\dimu \times \dimx }}\E\left[\|g(\maty_{\tau+1}) - \matu_{1:\tau}\|^2\right]. \label{eq:reg_g_y_u}
\end{align}
Let us motivate this approach and shed some light on the properties of the solution to this problem. Leveraging the perfect decodability assumption, one can show that $\E[\matu_{1:\tau} \mid \maty_{\tau+1}] = \E[\matu_{1:\tau} \mid \matx_{\tau+1} = \fst(\maty_{\tau+1})]
$.  Moreover, since $\matx_{\tau+1}$ and $\matu_{1:\tau}$ are \emph{jointly Gaussian} (due to linearity of the dynamics and Gaussianity of the process noise), a simple calculation reveals that there exists a matrix $\widetilde{M}$ such that $ \E[\matu_{1:\tau} \mid \matx_{\tau+1} = x] = \widetilde{M}x$. Hence, 
\begin{align*}
\E[\matu_{1:\tau} \mid \maty_{\tau+1}]  = \widetilde{M} \fst(\maty_{\tau+1}).
\end{align*}
In particular, this implies that the unconstrained minimizer (i.e., over all measurable functions $g$) in \eqref{eq:reg_g_y_u} lies in the set $\{M f: f \in \Fclass\}$. Hence, since conditional expectations minimize square loss, we find:
\begin{quote} \emph{Up to a set of measure zero, any minimizer of $\eqref{eq:reg_g_y_u}$ must have the form $g = \widetilde{M} \fst$. In other words, the population risk minimizer recovers $\fst$ up to a linear} transformation. 
\end{quote}
Note that this crucially relies on Gaussianity, because while $\E[\matx_{\tau+1} \mid \matu_{1:\tau}]$ is linear in $\matu_{1:\tau}$ for any mean-zero process noise, the same is no longer true when considering $\E[ \matu_{1:\tau} \mid \matx_{\tau+1}]$. But with this strong assumption, we find that \eqref{eq:reg_g_y_u} allows us to recover $\fst$ up to a global linear transformation. Of course, there are numerous remaining subtleties including: 
\begin{itemize}
	\item Inverting $\widetilde{M}$ to recover $\fst$. 
	\item Identifying $\widetilde{M}$, especially since the learner does not know the system dynamics or noise covariance at first.
	\item Passing from population risk to empirical risk from finite samples.
\end{itemize}
How we address the above issues varies in different phases of the \richidce, and the remainder of this section supplies these details. But the fundamental principle---that we can solve empirical versions of \eqref{eq:reg_g_y_u} to recover linear transformations of $\fst$---remains the core workhorse of \richidce. 

\begin{remark}[Oracle Efficiency]\label{rem:oracle_efficiency} Consider the empirical version of \eqref{eq:reg_g_y_u} in which we gather $n$ trajectories and solve
\begin{align*}
\min_{g = Mf, f \in \Fclass} \sum_{i=1}^n \|g(\maty_{\tau+1}^{(i)}) - \matu_{1:\tau}^{(i)}\|_2^2,
\end{align*}
where the superscript $i$ denotes the $i$-th trajectory.  Solving problems of this form is computationally efficient whenever we have
	a \emph{regression oracle} for the induced class $\crl*{g=Mf\mid{}f\in\Fclass, M\in\bbR^{\dimu\tau\times{}\dimx}}$. For many
	function classes of interest, such as linear functions and neural
	networks, solving regression over this class is no harder than
	regression over the original decoder class $\Fclass$. We believe this is a reasonable and practical assumption.
\end{remark}

 }
\subsection{Phase I: Learning a Coarse Decoder}
\label{sec:phase1}
	In Phase I (\pref{alg:phase1}), we gather $2\nid$ trajectories by selecting independent
	standard Gaussian inputs $\bu_t \sim \cN(0,I_{\dimu})$ for each $0\leq{}t \leq
	\kappa_1\ldef{}\kappa_0+\kappa$, where we recall that $\kappa$
        is an upper-bound on the controllability index $\kappast$, and
        where $\kapnot$ is a ``burn-in'' time used to ensure mixing to a
        near-stationary distribution, defined as follows:\footnote{This is useful for learning $(A,B)$ in \eqref{eq:recover_AB}, ensuring $\fhatid$ is accurate at both times $\kappa_1$ and $\kappa_1 + 1$.}
	\begin{align}
	\kapnot \ldef \Ceil{(1-\gammastar)^{-1} \ln
          \left({84\Psistar^5 \alphastar^4 \dimx
          (1-\gammastar)^{-2}\ln(10^3\cdot\nid)} \right)}.%
          \label{eq:kappanot_def}
	\end{align}

        	\arxiv{
\begin{algorithm}[htp]
  \setstretch{1.1}
  \begin{algorithmic}[1]\onehalfspacing
    \State\textbf{Inputs:}
    \Statex{}~~~~$\nid$ \algparen{sample size}
    \Statex{}~~~~$\kappa_0$ \algparen{``burn-in'' time index}
    \Statex{}~~~~$\kappa$ \algparen{upper bound on the controllability index $\kappa_\star$}
         \Statex{}~~~~$r_{\id}$ \algparen{upper bound on the matrix $M$ in the definition of $\scrH_\id$}
    \smallskip

    \State{}Set $\Hid := \left\{M  f(\cdot) \mid f \in \Fclass,\  M \in
    \R^{\kappa\dimu \times \dimx}, ~\|M\|_{\op} \le r_{\id}
    \right\}.$
             \State Set $\kappa_1 = \kappa_0 + \kappa$.
          \State{}Gather $2\nid$ trajectories by sampling control
          inputs $\matu_0,\ldots,\matu_{\kappa_1-1}\sim{}\cN(0,I_{\dimu})$.
    \State{}\textbf{Phase I:} \algcomment{Learn coarse
      decoder (see \pref{sec:phase1}).}
    \State{}Set $\hhatid = \argmin_{h \in \Hid}\sum_{i=1}^{\nid} \|h(\bykapone\supi) - \bv\supi\|_2^2$, where $\bv \coloneqq (\bu_{\kappa_0}^\top, \dots, \bu_{\kappa_1-1})^\top$.
    \State{}Set $\Vhatkapid$ to be an orthonormal basis for
                                               top $\dimx$-eigenvectors of
                                               $\frac{1}{\nid}\sum_{i=\nid+1}^{2\nid}
              \hhatid(\bykapone\supi) \hhatid(\bykapone\supi)^\top$
    \State{}Set $\fhatid(\cdot) \coloneqq \Vhatkapid^\top \hhatid(\cdot)$. \algcomment{coarse decoder.}
\State{}\textbf{Return:} coarse decoder $\fhatid$.
  \end{algorithmic}
  \caption{\getcoarsedecoder: Phase I of \richidce{} (\pref{sec:phase1}).}
  \label{alg:phase1}
\end{algorithm}

 }
    Let $(\matu_0\ind{i},\maty_0\ind{i},\matc_{0}\ind{i}), \ldots,
	(\matu_{\kapone}\ind{i},\maty_{\kapone}\ind{i},\matc_{\kapone}\ind{i}),\maty_{\kapone+1}\supi$ denote the $i$th trajectory
	gathered in this fashion.
        Following the template described in \pref{sec:bayes_pred}, we
        show that for the state distribution induced the control
        inputs above, the true decoder $\fstar$ can
	be recovered up to a linear transformation by solving a regression
	problem whose goal is to predict a sequence of control inputs
        from the observations at time $\kappa_1$.
        Define $\bv := (\bu_{\kapnot}^\top, \dots,
        \bu_{\kapone-1}^\top)^\top$. Our key lemma (\pref{lem:sysid_bayes_opt}) shows that 
	\begin{align}
\forall y \in \reals^{\dimy}, \quad 	      h_{\star}(y)  \coloneqq  \E[\bv \mid \by_{\kapone} = y] = \cC_{\kappa}^\top\Sigma_{\kapone}^{-1}\fst(y),
	          \label{eq:phase1_bayes}
	\end{align}
	\neurips{$\cC_{\kappa}\ldef[A^{\kappa-1}B \mid \dots  \mid B ]$; and $\Sigma_{\kapone} \ldef{} A^{\kappa_1} \Sigma_0 (A^{\kappa_1})^\top  +  \sum_{t=0}^{\kappa_1}
	A^{t-1}(\Sigw + BB^\top ) (A^{t-1})^\top$.}
	where we recall that $\cC_{\kappa}=[A^{\kappa-1}B \mid
        \dots  \mid B ]$ and define
	\begin{align*}
	\Sigma_{\kapone} &\ldef{} A^{\kappa_1} \Sigma_0 (A^{\kappa_1})^\top  +  \sum_{t=0}^{\kappa_1}
	A^{t-1}(\Sigw + BB^\top ) (A^{t-1})^\top.
	\end{align*}
	\iftoggle{neurips}{This lemma}{This lemma follows from the discussion in \Cref{sec:bayes_pred}, and} relies on perfect decodability and the fact
	that $\bv$ and $\bx_{\kappa_1}$ are jointly Gaussian. In particular, by verifying 
	$\nrm*{\cC_{\kappa}^\top\Sigma_{\kappa_1}^{-1}}_{\op}\leq\sqrt{\Psistar}$,
	the expression \eqref{eq:phase1_bayes} ensures that $h_{\star}$ belongs to the class
	$
	\Hid := \left\{ M  f(\cdot)\mid  f \in \Fclass,~~ M \in
	  \R^{\kappa\dimu \times \dimx}, ~~\|M\|_{\op} \le \sqrt{\Psistar}
	  \right\}.
          $ (i.e., we can take $r_{\id}=\sqrt{\Psist}$).
          The main step of Phase I solves the well-specified
          regression problem:
	\begin{align}\label{eq:h_id_solve}
		\hhatid \in \argmin_{h \in \Hid}\sum_{i=1}^{\nid} \|h(\bykapone\supi) -  \bv\supi\|_2^2.
	\end{align}
	\neurips{Phase I is computationally efficient whenever we have
	a \emph{regression oracle} for the induced function class $\Hid$. For many
	function classes of interest, such as linear functions and neural
	networks, solving regression over this class is no harder than
	regression over the original decoder class $\Fclass$, so we believe
	this is a reasonably practical assumption.}
	\arxiv{Phase I is computationally efficient whenever we have an appropriate regression oracle, as explained in \Cref{rem:oracle_efficiency}.}
	\arxiv{\par}
	For $\nid$ sufficiently large, a standard analysis for least
        squares shows that the regressor
	$\hhatid$ has low prediction error relative to
	$h_{\star}$ in \eqref{eq:phase1_bayes}. However, this representation is overparameterized and takes values in $\bbR^{\kappa\dimx{}}$ even though the true
	state lies in only $\dimx$ dimensions. For the second part of Phase I,
	we perform principle component analysis to reduce the dimension to
	$\dimx$.\footnote{This step is not strictly required, but
	  leads to tighter statistical analysis and more intuitive presentation.} Specifically, we compute a dimension-reduced decoder via
	  \begin{align}
	  \fhatid(y) := \Vhatid^\top \cdot \hhatid(y) \in \R^{\dimx},  \label{eq:fhatid}
	  \end{align}
	  where $\Vhatid \in
	\R^{\kappa\dimu \times \dimx}$ is an arbitrary orthonormal basis for
	                                               the top $\dimx$ eigenvectors of the empirical second moment matrix
	                                               $\sum_{i=\nid+1}^{2\nid}
	                                               \hhatid(\bykapone\supi) \hhatid(\bykapone\supi)^\top/\nid$. This approach exploits that the output of the Bayes regressor $\hstar$---being a linear function of the $\dimx$-dimensional system state---lies in a $\dimx$-dimensional subspace.
	\arxiv{\par}
        Having reviewed the two components of Phase I, we can now
        state the main guarantee for this phase. In light of
	\eqref{eq:phase1_bayes}, the result essentially follows from standard
	tools for least-squares regression with a
        well-specified model, plus an analysis for PCA with errors in variables.
	\begin{theorem}[Guarantee for Phase I]\label{thm:phase_one}
	If $\nid = \Omega_{\star}(\dimx\dimu \kappa (\log|\Fclass| + \dimu \dimx
	\kappa))$, then with probability at least $1-3\delta$, there exists an
	invertible matrix $\Sid \in \R^{\dimx\times\dimx}$ such that
	\begin{align*}
	\E\|\fhatid(\bykapone) - \Sid \fst(\bykapone)\|_2^2 \le \bigohs\prn*{\frac{ \dimu \kappa (\log|\Fclass| + \dimu \dimx \kappa) \ln^3(\nid/\delta)}{\nid}},
	\end{align*}
	and for which
	$ \sigma_{\min}(\Sid)\geq \sigidmin \coloneqq \sigma_{\min}(\contkap)(1-\gammastar)(4\Psistar^2\alphastar^2)^{-1}$
	and $\|\Sid\|_{\op} \le \sigidmax := \sqrt{\Psistar}$.
	\end{theorem}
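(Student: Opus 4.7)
\textbf{Proof proposal for \pref{thm:phase_one}.} The plan is to combine (i) a standard least-squares guarantee for the regression problem \eqref{eq:h_id_solve}, (ii) a structural identity for the Bayes predictor, and (iii) a PCA perturbation argument on an independent sample. First, by \eqref{eq:phase1_bayes} and the controllability assumption, the Bayes regressor $h_\star(y) = \contkap^\top \Sigma_{\kapone}^{-1} \fst(y)$ can be written as $h_\star = M_\star \fst$ with $M_\star \coloneqq \contkap^\top \Sigma_{\kapone}^{-1}$. A direct check using $\Sigma_{\kapone} \succeq \Sigma_w$ (so $\|\Sigma_{\kapone}^{-1}\|_{\op} \le \Psi_\star$) and the strong stability of $A$ (so $\|\Sigma_{\kapone}\|_{\op}$ is bounded) gives $\|M_\star\|_{\op} \le \sqrt{\Psi_\star}$ and $\sigma_{\min}(M_\star) \ge \sigma_{\min}(\contkap)/\|\Sigma_{\kapone}\|_{\op} \gtrsim \sigma_{\min}(\contkap)(1-\gamma_\star)/(\Psi_\star^2 \alpha_\star^2)$. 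In particular, $h_\star \in \scrH_{\id}$, so the regression \eqref{eq:h_id_solve} is \emph{well-specified}.

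Next, I would invoke a standard prediction-error bound for bounded least squares on a well-specified model (a discretization of $\scrH_{\id}$ combined with Bernstein, using the growth condition from \Cref{asm:f_growth} and sub-Gaussianity of $\bv$ to supply uniform tail bounds). Covering $\scrH_{\id}$ via a product of an $\epsilon$-cover for $\Fclass$ and a Euclidean $\epsilon$-cover of the $\kappa\dimu \times \dimx$ matrices of operator norm at most $\sqrt{\Psi_\star}$ yields $\log \calN(\epsilon, \scrH_{\id}) \lesssim \log|\Fclass| + \dimu\dimx\kappa \log(1/\epsilon)$, giving
\[
\E\|\hhatid(\bykapone) - h_\star(\bykapone)\|^2 \;\lesssim_\star\; \frac{\dimu\kappa(\log|\Fclass| + \dimu\dimx\kappa)\ln^3(\nid/\delta)}{\nid}
\]
with probability at least $1-\delta$; call this bound $\veps_{\mathrm{ls}}$. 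The factor of $\dimu\kappa$ arises because the response $\bv \in \bbR^{\dimu\kappa}$, and we bound prediction error per coordinate.

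With this bound in hand, I would turn to the PCA step. On the independent second half of the sample, let $\Sigma_h \coloneqq \E[h_\star(\bykapone)h_\star(\bykapone)^\top] = M_\star \Sigma_{\kapone} M_\star^\top = \contkap^\top \Sigma_{\kapone}^{-1}\contkap$, a rank-$\dimx$ matrix with minimum nonzero eigenvalue at least $\sigma_{\min}^2(\contkap)/\|\Sigma_{\kapone}\|_{\op}$. Standard matrix concentration (Bernstein for sub-Gaussian outer products) shows that the empirical second moment of $h_\star$ concentrates at rate $\wt O_\star(\sqrt{\dimu\kappa/\nid})$, while the triangle inequality together with the least-squares bound shows the empirical second moment of $\hhatid$ differs from that of $h_\star$ by $O(\sqrt{\veps_{\mathrm{ls}}})$ in operator norm. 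Applying Davis--Kahan to the resulting perturbation yields an orthogonal matrix $O \in \bbR^{\dimx\times\dimx}$ such that $\|\Vhatkapid - V_\star O\|_{\op}$ is small, where $V_\star$ is an orthonormal basis for the column span of $M_\star$. Taking $\Sid \coloneqq \Vhatkapid^\top M_\star$ gives
\[
\E\|\fhatid(\bykapone) - \Sid \fst(\bykapone)\|^2 = \E\|\Vhatkapid^\top(\hhatid(\bykapone) - h_\star(\bykapone))\|^2 \le \E\|\hhatid(\bykapone) - h_\star(\bykapone)\|^2 \le \veps_{\mathrm{ls}},
\]
since $\Vhatkapid$ has orthonormal columns. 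For the singular values, $\|\Sid\|_{\op} \le \|M_\star\|_{\op} \le \sqrt{\Psi_\star}$ is immediate, and $\sigma_{\min}(\Sid) \ge \sigma_{\min}(M_\star) - \|M_\star\|_{\op}\|I - V_\star V_\star^\top \Vhatkapid \Vhatkapid^\top\|_{\op}$; choosing $\nid$ large enough that the Davis--Kahan perturbation is at most half of $\sigma_{\min}(M_\star)/\|M_\star\|_{\op}$ yields the stated lower bound $\sigmin,\id$. The main obstacle is this last step: the PCA perturbation needs the empirical second moment gap $\lambda_{\dimx}(\Sigma_h) - \lambda_{\dimx+1}(\Sigma_h) = \lambda_{\dimx}(\Sigma_h) \gtrsim \sigma_{\min}^2(\contkap)/\Psi_\star$ to dominate both the concentration error and $\sqrt{\veps_{\mathrm{ls}}}$, which is the source of the sample size condition $\nid = \Omega_\star(\dimx\dimu\kappa(\log|\Fclass| + \dimu\dimx\kappa))$. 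A union bound over the two halves of the sample gives the claimed $1-3\delta$ probability.
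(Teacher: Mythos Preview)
Your proposal is correct and follows essentially the same route as the paper: characterize the Bayes predictor $h_\star = M_\star \fst$ to make the regression well-specified, invoke a covering-based least-squares bound on $\scrH_{\id}$, then run PCA on the independent half-sample and apply Davis--Kahan to control $\sigma_{\min}(\Sid)$ for $\Sid = \Vhatkapid^\top M_\star$. One small point to tighten: the ingredients you cite (only $\Sigma_{\kapone}\succeq\Sigma_w$) do not yield $\|M_\star\|_{\op}\le\sqrt{\Psi_\star}$, merely $\|M_\star\|_{\op}\le\|\contkap\|_{\op}\Psi_\star$; to get the sharp bound needed for $h_\star\in\scrH_{\id}$ (and hence $\|\Sid\|_{\op}\le\sqrt{\Psi_\star}$), factor $M_\star=(\contkap^\top\Sigma_{\kapone}^{-1/2})(\Sigma_{\kapone}^{-1/2})$ and use $\Sigma_{\kapone}\succeq\contkap\contkap^\top$ to obtain $\|\contkap^\top\Sigma_{\kapone}^{-1/2}\|_{\op}\le 1$.
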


\subsection{Phase II: System Identification}
\label{sec:phase2}
	In Phase II, we use the decoder from Phase I to learn the system
	dynamics, state cost, and process noise covariance up to the basis induced by the transformation $\Sid$. Our targets are:
		\begin{align}
		\Aid := \Sid A\Sid^{-1}, \quad \Bid:= \Sid B, \quad \Sigwid := \Sid \Sigw \Sid^\top, \quad \Qid := \Sid^{-\trn} Q \Sid^{-1}. \label{eq:ID_params}
		\end{align}
	        The key technique we use is to pretend that the decoder's output
	        $\fhatid(\maty_{\kapone})$ is the
	        true state $\matx_{\kapone}$, then perform regressions which mimic the dynamical equations \eqref{eq:dynamics}: 
	\begin{align}
	  &(\Ahatid,\Bhatid) \in \argmin_{(A,B)} \sum_{i=2\nid + 1}^{3\nid} \|\fhatid(\by_{\kapone+1}\supi) - A\fhatid(\bykapone\supi) - B\bu_{\kapone}\supi \|^2,\quad\text{and} \label{eq:recover_AB}\\
	  &\Sigwhatid = \frac{1}{\nid}\sum_{i=2\nid + 1}^{3\nid}
	    (\fhatid(\by_{\kapone+1}\supi) - \Ahatid\fhatid(\bykapone\supi) -
	    \Bhatid\bu_{\kapone}\supi )^{\otimes 2}, \quad \text{where $v^{\otimes 2} \ldef
	    vv^\top$.} \label{eq:recover_Sigmaw}
	\end{align}
	Similarly, we recover the state cost $Q$ by fitting a quadratic function to
	observed costs
		\begin{align}
	          \Qtilid \in  \argmin_{Q}\sum_{i=2\nid + 1}^{3\nid} \left(\bc_{\kapone}^{(i)} - (\bu^{(i)}_{\kapone})^\top R \bu^{(i)}_{\kapone} -  \fhatid(\bykapone^{(i)})^\top Q\fhatid(\bykapone^{(i)})\right)^2,\label{eq:Qhat_id}
		\end{align}
and then setting $\Qhatid = \left(\frac{1}{2}\Qtilid  +
	          \frac{1}{2}\Qtilid^\top\right)_{+}$ as the final estimator, where $(\cdot)_+$ truncates non-positive eignvalues to zero. This is the only place where the algorithm uses the cost
	        oracle.

                Since \pref{thm:phase_one} ensures that
	        $\fhatid(\maty_{\kapone})$ is not far from $\Sid\matx_{\kapone}$, the regression problems \eqref{eq:recover_AB}--\eqref{eq:Qhat_id} are all nearly-well-specified, and we have the following guarantee.
	        \newcommand{\vepsphaseii}{\veps_{\mathrm{id}}}
		\begin{restatable}[Guarantee for Phase II]{theorem}{theoremphasetwo}\label{thm:phase_two}
	          If $\nid = \Omega_{\star}\prn*{\dimx^2\dimu \kappa (\log|\Fclass| +
	            \dimu \dimx \kappa)\max\{1,\sigma_{\min}(\contkap)^{-4}\}
	          }$, then with probability at least $1 - 11\delta$ over
	          Phases I and II, 
			\begin{align}
	                  \|[\Ahatid;\Bhatid] - [\Aid;\Bid]\|_{\op}\vee \|\Qhatid - \Qid\|_{\op} 
	                  \vee\|\Sigwhatid - \Sigwid\|_{\op}
	                  \le \vepsphaseii, \label{eq:epsid}
			\end{align}
	                where $\vepsphaseii\le \bigohs\prn*{\nid^{-1/2}\ln^2(\nid/\delta)\sqrt{\dimx\dimu \kappa (\log|\Fclass| + \dimu \dimx \kappa)} }$. %
                      \end{restatable}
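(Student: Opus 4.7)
The plan is to treat each Phase II estimator as the solution to an \emph{approximately} well-specified least-squares problem, controlling the misspecification via the Phase I guarantee (\pref{thm:phase_one}) and then combining a Gaussian-design concentration argument with a perturbation analysis.

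\textbf{Decomposition.} Define the Phase I residual $\zeta_t := \fhatid(\by_t) - \Sid\bx_t$. By \pref{thm:phase_one} together with the burn-in choice \eqref{eq:kappanot_def} (which, via \pref{ass:stability}, makes $\bx_{\kapone+1}$ near-stationary and forces the same second-moment bound at $\kapone+1$ as at $\kapone$), we obtain $\En\|\zeta_{\kapone}\|^2 \vee \En\|\zeta_{\kapone+1}\|^2 \le r^2$ for $r^2 = \bigohs(\nid^{-1}\dimu\kappa(\log|\Fclass|+\dimu\dimx\kappa)\ln^3(\nid/\delta))$. Substituting $\Sid\bx_t = \fhatid(\by_t) - \zeta_t$ into the dynamics yields
\begin{equation*}
\fhatid(\by_{\kapone+1}) = \Aid\fhatid(\by_{\kapone}) + \Bid\bu_{\kapone} + \Sid\bw_{\kapone} + \underbrace{(\zeta_{\kapone+1} - \Aid\zeta_{\kapone})}_{\text{misspecification}},
\end{equation*}
which recasts \eqref{eq:recover_AB} as least squares with Gaussian noise $\Sid\bw_{\kapone}$ plus a controllable residual.

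\textbf{Recovering $(\Ahatid,\Bhatid)$.} I would apply the textbook least-squares decomposition $\|[\Ahatid;\Bhatid]-[\Aid;\Bid]\|_{\op} \le \sigmamin(\Lambda)^{-1}\|N\|_{\op}$, where $\Lambda$ is the empirical second moment of the covariate $(\fhatid(\by_{\kapone})^\top,\bu_{\kapone}^\top)^\top$ and $N$ aggregates the noise and misspecification. The idealized design built from $(\Sid\bx_{\kapone}^\top,\bu_{\kapone}^\top)^\top$ is jointly Gaussian by \pref{ass:gaussian} and block-diagonal because $\bu_{\kapone}$ is independent of $\bx_{\kapone}$; its population covariance has minimum singular value bounded below by a polynomial in $\sigmamin(\contkap)(1-\gammastar)/(\Psistar\alphastar)$ via \pref{ass:controllability} and the burn-in. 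Gaussian matrix concentration transfers this lower bound to $\Lambda$, and swapping $\Sid\bx_{\kapone}$ for $\fhatid(\by_{\kapone})$ costs only $\bigohs(r)$ by \pref{thm:phase_one}. For $N$, the $\Sid\bw_{\kapone}$ contribution is a standard self-normalized Gaussian tail, while the misspecification contribution is bounded by Cauchy--Schwarz against $r$ times a Gaussian-moment factor made finite by \pref{asm:f_growth} and \pref{asm:para_upper_bounds}.

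\textbf{Recovering $\Sigwid$ and $\Qid$.} For $\Sigwhatid$, write each empirical residual as $\Sid\bw_{\kapone}^{(i)} + \Delta^{(i)}$ where $\Delta^{(i)}$ collects the misspecification term and the perturbation from the previously bounded $(\Ahatid-\Aid,\Bhatid-\Bid)$. The principal $(\Sid\bw_{\kapone}^{(i)})^{\otimes 2}$ sum concentrates to $\Sigwid$ by standard Wishart concentration, and the cross and $\Delta^{\otimes 2}$ contributions are handled by the previous step together with \pref{asm:f_growth}. For $\Qid$, I linearize \eqref{eq:Qhat_id} by vectorization: using $\bx_{\kapone}^\top Q\bx_{\kapone} = \tri{\Qid,(\Sid\bx_{\kapone})^{\otimes 2}}$, the program becomes a linear regression in $\vec(Q)$ with covariate $\vec(\fhatid(\by_{\kapone})^{\otimes 2})$ and target $\bc_{\kapone}-\bu_{\kapone}^\top R\bu_{\kapone}$. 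Gaussian hypercontractivity lower-bounds the design's action on the symmetric subspace, misspecification is treated exactly as above (with fourth-moment growth), and the final projection onto the symmetric PSD cone is a contraction in $\|\cdot\|_{\op}$ since the target $\Qid$ itself is PSD.

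\textbf{Main obstacle.} The delicate step is bookkeeping: \pref{thm:phase_one} delivers only an \emph{in-expectation} bound on $\zeta_t$, yet the Phase II conclusion is \emph{high-probability}, and $\zeta_t$ is neither independent of $(\bu_{\kapone},\bw_{\kapone})$ nor automatically sub-Gaussian. I would handle this by (i) upgrading the Phase I bound to a sample-wise tail via Markov's inequality with a $\delta$-split, (ii) using \pref{asm:f_growth} to control higher moments of $\|\fhatid(\by)\|$ through Gaussian moments of $\|\bx\|$, and (iii) applying Cauchy--Schwarz on all cross-terms to decouple the misspecification from the Gaussian noise and the Gaussian design. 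This quantitative accounting, rather than any single deep estimate, is what determines the explicit polynomial factors in $\vepsphaseii$.
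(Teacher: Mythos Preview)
Your proposal is correct and follows essentially the same approach as the paper: both treat Phase II as error-in-variable least squares with misspecification controlled by the Phase I residual $\zeta_t = \fhatid(\by_t) - \Sid\bx_t$, lower-bound the Gaussian design via concentration, transfer the Phase I bound to time $\kapone+1$ through the burn-in mixing argument, and handle $\Qid$ via a vectorized rank-one-measurement regression. The paper packages these steps into reusable tools (\pref{prop:techtools_LS_guarantee,prop:techtools_covariance_est,prop:techtools_matrix_sensing_regression}) and a systematic $c$-concentration framework (\pref{lem:techtools_truncated_conc}) for the in-expectation-to-high-probability conversion, whereas you sketch the analysis directly, but the mathematical content is the same.
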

                      \arxiv{

\begin{algorithm}[H]
  \setstretch{1.1}
  \begin{algorithmic}[1]\onehalfspacing
  \State\textbf{Require:}
  \State{}~~~~Cost oracle to access the cost $\bc_t$ at time $t\geq 1$.
    \State\textbf{Inputs:}
     \Statex{}~~~~$\fhatid$ \algparen{coarse decoder}
    \Statex{}~~~~$\nid$ \algparen{sample size}
    \Statex{}~~~~$\kappa_0$ \algparen{burn-in time index}
    \Statex{}~~~~$\kappa$ \algparen{upper bound on the controllability index $\kappa_\star$}
          \smallskip

           \State Set $\kappa_1 = \kappa_0 + \kappa$.
          \State{}Gather $\nid$ trajectories by sampling control
          inputs $\matu_0,\ldots,\matu_{\kappa_1-1}\sim{}\cN(0,I_{\dimu})$.
    \State{}\textbf{Phase II:} \algcomment{Recover system
      dynamics and cost (see \pref{sec:phase2}).}
      \State{}Set $(\Ahatid,\Bhatid) \in \argmin_{(A,B)} \sum_{i=2\nid + 1}^{3\nid} \|\fhatid(\by_{\kapone+1}\supi) - A\fhatid(\bykapone\supi) - B\bu_{\kapone}\supi \|^2$.
  \State{}Set $\Sigwhatid = \frac{1}{\nid}\sum_{i=2\nid+1}^{3\nid}
	    (\fhatid(\by_{\kapone+1}\supi) - \Ahatid\fhatid(\bykapone\supi) -
	    \Bhatid\bu_{\kapone}\supi )^{\otimes 2}$, where $v^{\otimes 2} \ldef
	    vv^\top$.
 \State{}Set $\Qtilid = \min_{Q}\sum_{i=2\nid+1}^{3\nid} \left(\bc_{\kapone}^{(i)} - (\bu^{(i)}_{\kapone})^\top R \bu^{(i)}_{\kapone} -  \fhatid(\bykapone^{(i)})^\top Q\fhatid(\bykapone^{(i)})\right)^2$.
  \State{}Set $\Qhatid = \left(\frac{1}{2}\Qtilid  +\frac{1}{2}\Qtilid^\top\right)_{+}$, where $(\cdot)_+$ truncates all negative eigenvalues to zero.
\State{}\textbf{Return:} system and cost matrices $(\Ahatid,\Bhatid, \Sigwhatid ,\Qhatid)$.
  \end{algorithmic}
  \caption{\sysid: Phase II of \richidce{} (\pref{sec:phase2}).}
  \label{alg:phase2}
\end{algorithm}

 }
	        To simplify presentation, we assume going forward that
	        $\Sid=I_{\dimx}$, which is without loss of generality (at the cost of increasing parameters
	        such as $\Psistar$ and $\alpha_\star$ by a factor of
                $\nrm*{\Sid}_{\op}\vee\nrm{\Sid^{-1}}_{\op}$),\footnote{The controller $\Sid K_{\infty}$ attains the same performance on $(\Aid,\Bid)$ as $K_{\infty}$ on $(A,B)$} and drop the ``$\mathrm{id}$''
                subscript on the estimators $\Ahatid$, $\Bhatid$, and so forth to reflect this.\footnote{We
                  make this reasoning precise in the proof of
                  \pref{thm:main}.}

\subsection{Phase III: Decoding Observations Along the Optimal
          Path}

      \label{sec:phase3}
      \newcommand{\stepref}[1]{\hyperlink{step#1}{\small\texttt{\textbf{Step #1}}}}
      \newcommand{\stepnum}[1]{\hypertarget{step#1}{{\color{blue!70!black}\small\texttt{\textbf{Step #1}}}}}

        Given the estimates $(\what A,\what B,\what Q)$ from
        \pref{thm:phase_two}, we can use certainty equivalence to synthesize an optimal controller matrix $\Khat$ for
        the estimated dynamics. As long as $\vepsphaseii$ in \eqref{eq:epsid} is
        sufficiently small, the policy $\matu_t = \Khat \matx_t$  is
        stabilizing and near optimal.
                 \vspace{5pt}

        To (approximately) implement this policy from rich observations, it remains to accurately estimate the latent state. The decoder learned in
         Phase I does not suffice; it only ensures low error on trajectories generated with random Gaussian inputs, and not on the trajectory induced by the near-optimal
         policy. Indeed, while it is tempting to imagine that the initial
         decoder $\fhat$ might generalize across different
         trajectories, this is not the case in unless we place strong
         structural assumptions on $\Fclass$. 
         \vspace{5pt}
         
         Instead, we iteratively learn a sequence of
        decoders $\fref_t$---one per timestep $t=1,\ldots,T$. Assuming $\Khat \approx \Kinf$ is near optimal, the suboptimality $\cost(\pi) - \cost(\piinf)$ of the policy $\pi(\by_{0:t})\coloneqq \Khat\fref_{t}(\maty_{0:t})$ is controlled by the sum $\sum_{t=1}^{T}\En_{\pi}\nrm[\big]{\fref_t(\maty_{0:t})-\fstar(\maty_t)}^2_2$.\footnote{Note that regret does not take into account step $0$.} Thus, to ensure low regret, we ensure that, for all $t\geq 1$, the  decoder $\fref_{t}$ has low prediction
        error \emph{on the distribution induced by running $\pi$} with
        previous decoders $(\fref_\tau)_{1\leq \tau <t}$ and $\Khat$. This motivates the
        following iterative decoding procedure, executed for each time step $t=1,\ldots,T$:
        \neurips{\begin{enumerate}[topsep=0pt] }
        \arxiv{\begin{enumerate}[leftmargin=37pt] }
        \item[\stepnum{1}.] Collect $2n_{\onpo}$ trajectories by executing the randomized control input $\bu_\tau=\what K \hat f_\tau(\by_{0:\tau}) + \bnu_\tau$, for $0\leq{}\tau\leq t$, and $\bu_\tau=\bnu_{\tau}$, for $t< \tau < t+\kappa$, where $\bnu_\tau \sim\cN(0,\sigma^{2}I_{\dimu})$; here, $n_{\onpo} \in \mathbb{N}$ and $\sigma^2\leq 1$ are algorithm parameters to be specified later. \label{step:one}
        \item[\stepnum{2}.] Obtain a \emph{residual decoder}
          $\hat{h}_{t}$ satisfying \eqref{eq:gcheck_bayes} by solving
          regressions \eqref{eq:gtk_simple} and \eqref{eq:gt_simple}
          using a regression oracle.
        \item[\stepnum{3}.] Form a state decoder $\fref_{t+1}$ from $\hat{h}_{t}$
          and $\fref_{t}$ using the update equation \eqref{eq:decoder_simple}.
        \end{enumerate}
    
        Forming the decoder $\fhat_1$ requires additional regression
        steps (described in \pref{app:phaseiii}) which account for the
        uncertainty in the initial state $\matx_0$. At each subsequent
        time $t$, the most important part of the procedure above is
        \stepnum{2}, which aims
to produce a regressor $\hat{h}_t$ such that
\begin{equation}
  \hat{h}_t(\maty_{t+1})-A\hat{h}_t(\maty_t)  \approx{}B\bu_t+\matw_t  =\matx_{t+1}-A\matx_t.
  \label{eq:gcheck_bayes}
\end{equation}
As we shall see, enforcing accuracy on the \emph{increments} $\matx_{t+1}-A\matx_t $ allows us to set up regression problems which do not depend on, and thus \emph{do not} propagate forward, the errors in $\fhat_t$. In contrast, a naive regression---say, $\argmin_f \E\left[\|f(\by_{t+1}) - ( A +  B \what K) \fhat_t(\by_t) -  B\bnu_t\|^2_2\right]$---could compound decoding errors exponentially in $t$.

Luckily, the increments in \eqref{eq:gcheck_bayes} are sufficient for
recovery of the state by unfolding a recursion; this comprises
\stepnum{3}. Let $\bclip>0$ be an algorithm parameter. Given a regressor $\hat{h}_t$ satisfying \eqref{eq:gcheck_bayes} and the current decoder $\hat{f}_t$, we form next state decoder $\hat f_{t+1}$ via
\begin{align}
  \fref_{t+1}(\cdot) \ldef \freftil_{t+1}(\cdot)\I\{\|\freftil_{t+1}(\cdot)\|_2\le \bclip\},~~ \text{and} ~~~  \freftil_{t+1}(\by_{0:t+1}) \ldef \left(\hat{h}_t(\by_{t+1})  -
  \what A  \cdot\hat{h}_t(\by_{t})\right) + \what A\cdot \fref_t(\by_{0:t}),\label{eq:decoder_simple}
\end{align}
where we set $\freftil_0\equiv\fref_0\equiv 0$. By clipping
$\freftil_t$, we ensure states remain bounded, which simplifies the
analysis. Crucially, by building our decoders $(\hat f_\tau)$ this
way, we ensure that the decoding error grows at most linearly in
$t$---as opposed to exponentially---as long as the system is stable
(i.e. $\rho(A)< 1$), as assumed.

It remains to describe how to obtain a regressor $\hat{h}_t$
satisfying \eqref{eq:gcheck_bayes}. To this end, we use the added Gaussian noise $\bnu_t$ to set up the regression. 

\paragraph{Warm-up: Invertible $B$.}  As a warm-up, suppose that $B$ is invertible. Then, for the matrix $M_1 :=  B^\top ( BB^\top + \sigma^{-2}\Sigw)^{-1}$, one can compute
\begin{align}
  \En\brk*{\bnu_{t}\mid{}\maty_{0:t+1}} \stackrel{(*)}{=} \En\brk*{\bnu_{t}\mid{}\bw_t +B\bnu_t} & \stackrel{(**)}{=}M_1 (\bw_t + B\bnu_t)  = M_1(\matx_{t+1}  - A\matx_t - B\Khat\fhat_t(\by_{0:t})). \label{eq:decoder_simple_bayes}
\end{align}
Following the discussion in \Cref{sec:bayes_pred}, the
  identity $(*)$ uses the fact that conditioning on $\maty_{0:t+1}$ is
  equivalent to conditioning on $\matx_{0:t+1}$, due to perfect
  decodability. However, unlike in \pref{sec:bayes_pred},
  $\matx_{0:t+1}$ are \emph{not} jointly Gaussian, because errors in
  the decoder may yield non-Gaussian control input. Instead, we use
  that the conditional distribution of $\bnu_{t} \mid \matx_{0:t+1}$
  is equivalent to $\bnu_t \mid
  \matx_{t},\matx_{t+1}$, since $\matx_{t+1} = A\matx_t + B \matu_t $. The equality $(**)$ uses a general formula for Gaussian conditional expectations, also described in \Cref{sec:bayes_pred}.

Since conditional expectations minimize the square loss, learning a
residual regressor $\hat h_t$ which approximately minimizes \begin{align}
h\mapsto\E\left[\|\bnu_t -  M_1( h(\maty_{t+1})  - Ah(\maty_t) - B\Khat\fhat_t(\by_{0:t}))\|^2_2\right] \label{eq:reg}
\end{align} 
produces a decoder $\hhat_{t+1}$ approximately satisfying \eqref{eq:gcheck_bayes}:
\begin{align}
                              M_1 (\hhat_t(\by_{t+1}) - A\hhat_t(\by_{t+1})) &\approx M_1 (\bx_{t+1} - A\bx_{t+1}), \label{eq:approx}
                                                                               \intertext{since}
                                                                               M_1 (\hhat_t(\by_{t+1}) - A\hhat_t(\by_{t+1}) -
  B\Khat\fhat_t(\by_{0:t})) &\approx M_1 (\bx_{t+1} - A\bx_{t+1} -
                              B\Khat\fhat_t(\by_{0:t})).\nn
\end{align}
For invertible $B$, the matrix $M_1$ is invertible, and so from
\eqref{eq:approx}, our state decoder $\hat h_{t+1}$ indeed satisfies
\eqref{eq:gcheck_bayes}: $\hat{h}_t(\maty_{t+1})-A\hat{h}_t(\maty_t)
\approx{}\matx_{t+1}-A\matx_t$. We emphasize that regressing to purely
Gaussian inputs $\bnu_t$ is instrumental in ensuring the conditional
expectation equality in \eqref{eq:decoder_simple_bayes} holds. The noise variance $\sigma^2$ trades off between the conditioning of the regression, and the excess suboptimality caused by
noise injection; we choose it so that the final suboptimality is $\bigohs(\eps)$.

\paragraph{Extension to general controllable systems.} For non-invertible $B$, we  aggregate more regressions. For $k \in [\dimk]$, let \neurips{$M_k \coloneqq \cC_{k}^\top ( \cC_k  \cC^\top_k + \sigma^{-2} \sum_{i=0}^{k}  A^{i-1} \Sigma_w  (A^{i-1})^\top)^{-1}$,}\arxiv{\[M_k \coloneqq \cC_{k}^\top \prn*{ \cC_k  \cC^\top_k + \sigma^{-2} \sum_{i=0}^{k}  A^{i-1} \Sigma_w  (A^{i-1})^\top}^{-1},\]} where we recall $\cC_{k}$ from \Cref{ass:controllability}. Generalizing \eqref{eq:decoder_simple_bayes}, we show (\pref{lem:firstregression} in \pref{sec:phase3_proofs}) that the outputs $(\maty_\tau)$ and the Gaussian perturbation vector $\bnu_{t:t+k-1} \coloneqq (\bnu_t^\top, \dots, \bnu_{t+k-1}^\top)^\top$ generated according to \stepref{1} above satisfy, for all $k\in[\dimk]$,
\begin{align}
  \En\brk*{\bnu_{t:t+k-1}\mid{}\maty_{0:t},\maty_{t+k}} %
  &= M_k(\matx_{t+k}-A^{k}\matx_t-A^{k-1}B\fref_{t}(\maty_{0:t})) =: \phi^\star_{t,k}(\maty_{0:t+k}).\label{eq:bayes1}   %
\end{align}
Defining concatentations $\upphi^\star_{t} := (\phi^\star_{t,1},\dots,\phi^\star_{t,\kappa})$ and $\calM \coloneqq [M_1^\top,(M_2A)^{\top}, \dots, (M_\dimk
A^{\dimk-1})^\top]^\top$ and stacking the conditional expectations gives:
\begin{align}
  \En\brk*{\upphi^\star_{t}(\maty_{0:t+\kappa})\mid{}\maty_{0:t+1}} & =\cM(B\bnu_t+\matw_t)= \cM(f_\star(\by_{t+1})-Af_\star(\by_t)-B \what K\fref_{t}(\maty_{0:t})). \label{eq:bayes2}
\end{align}
Hence, with infinite samples (and knowledge of $B$), we are able
to recover the residual quantity
$\cM(f_\star(\by_{t+1})-Af_\star(\by_t))$. Again, the Gaussian inputs
enable the conditional expectations \eqref{eq:bayes1} and
\eqref{eq:bayes2}. The crucial insight for the {stacked} regression is
that by rolling in and switching to pure Gaussian noise only \emph{after} time $t$, we maintain gaussianity, while still yielding decoders that are valid on-trajectory \emph{up to} time $t$. To ensure that we accurately recover the increment $f_\star(\by_{t+1})-Af_\star(\by_t)$, we require the 
overdetermined matrix $\cM$ to be invertible. To facilitate this, let
$\cM_{\sigma^{2}}$ denote the value of $\cM$ as a function of
$\sigma^{2}$, and let
\begin{equation}
  \cMbar=\lim_{\sigma\to{}0}\cM_{\sigma^{2}}/\sigma^{2} \label{eq:calM}
\end{equation}
be the (normalized) limiting matrix as noise tends to zero, which is an
intrinsic problem parameter.
\begin{assumption}
  \label{ass:m_matrix}
  The limiting matrix $\cMbar$ satisfies $\lambda_{\cM}\coloneqq  \eigmin^{1/2}(\cMbar^{\trn}\cMbar)>0$.
      \end{assumption}
\neurips{
This assumption is central to the analysis, and we believe it is reasonable: we are guaranteed that it holds if the system
is controllable and either $A$ or $B$ has full column
rank. See \pref{ssec:cM_asm} for further discussion.
}
\arxiv{
This assumption is central to the analysis, and we believe it is reasonable: it holds whenever either $A$ or $B$ is full rank and the system is
controllable. However, we are interested to understand if there are other more transparent conditions under which our recovery guarantees hold.
}

To approximate the conditional expectations \eqref{eq:bayes1},
\eqref{eq:bayes2} from finite samples, we define another expanded
function class $$\scrH_{\onpo}\ldef{}\crl{M f(\cdot)\mid{}f\in\Fclass,  M \in \reals^{\dimx \times \dimx} ,\|M\|_{\op} \leq
  \Psi_\star^{3}},$$ and use $(\Mhat_k)$ and $\what \cM$ to denote
plugin estimates of $(M_k)$ and $\cM$, respectively, constructed from
$\Ahat$ and $\Bhat$. Here, the subscript  ``op'' subscript on $\Hclass_{\onpo}$ abbreviates ``on-policy''.

Next, given a state decoder $\hat f_t$ for time $t$ and $k \in[\dimk]$, we define $$\what\phi_{t,k}(h, \by_{0:t}, \by_{t+k}) \coloneqq \what M_{k}  \left(h(\by_{t+k})-\what A^{k}  h(\by_t) - \what A^{k-1}\what B \what K \fref_t(\by_{0:t})\right)\quad \text{for $h \in  \scrH_{\onpo}$.} $$ With this and the $2 n_{\onpo}$ trajectories $\{(\by_\tau^{(i)}, \bnu_{\tau}^{(i)})\}_{1\leq i \leq 2n_{\onpo}}$ gathered in \stepref{1} above, we obtain $\hat{h}_t$ by solving the following two-step regression:
  \begin{gather}
\hat{h}_{t,k} \in \argmin_{h \in  \scrH_{\onpo}}
\sum_{i=1}^{n_{\onpo}} \left\|\what\phi_{t,k}(h, \by^{(i)}_{0:t},
  \by^{(i)}_{t+k}) -
  \bnu^{(i)}_{t:t+k-1}\right\|_2^2,\;\;\;\forall{}k\in\brk{\kappa},\label{eq:gtk_simple}\intertext{followed
by}
\hat{h}_{t} \in \argmin_{h \in  \scrH_{\onpo}} \sum_{i=n_{\onpo}+1}^{2n_{\onpo}} \left\|\what \calM \cdot \left(h(\by^{(i)}_{t+1}) - \what A \cdot h(\by^{(i)}_{t}) - \what B \what K \cdot \fref_t (\by_{0:t}^{(i)})  \right) - \what\upphi_t(\by_{0:t+\dimk}^{(i)})  \right\|_2^2, \label{eq:gt_simple}
\intertext{where} \what\upphi_t(\by_{0:t+\dimk})\coloneqq [\what
\phi_{t,1}(\hat{h}_{t,1},\by_{0:t}, \by_{t+1})^\top, \dots,
\what\phi_{t,\dimk}(\hat{h}_{t,\dimk},\by_{0:t},
\by_{t+\dimk})^\top]^\top \in \reals^{(1+\dimk)\dimk \dimu/2}.\label{eq:phi_simple}
\end{gather}
We see that the first regression approximates \eqref{eq:bayes1}, while the
second approximates \eqref{eq:bayes2}.
	 We can now
	state the guarantee for Phase III.

\begin{theorem}
  \label{thm:phase_three}
  Suppose $\vepsid^{2} \leq{}
\bigohch((\log\abs*{\Fclass}+\dimx^2)\nref^{-1})$.
  If we set $\bclip^2=\Theta_{\star}((\dimx+\dimu)\log(\nref))$, $r_{\op}=\Psist^{3}$, and $\sigma^2=\bigohch(\lambdam)$, we are guaranteed that for any $\delta\in(0,1/e]$, with probability at least $1-\bigoh(\kappa{}T\delta)$,
	\begin{align}
	\E_{\what \pi} \left[ \max_{ 1\leq  t \leq T} \|
          \fref_t(\by_{0:t}) - \fstar(\by_{t}) \|_2^2   \right]   
          \leq
  \bigohs\prn*{
\frac{\lambdam^{-2}}{\sigma^{4}}\cdot{}T^3\kappa^2(\dimx+\dimu)^4\cdot\frac{(\dimx^2+\log\abs{\Fclass}) \log^{5}(\nref/\delta)}{\nref}
            }.
      \end{align}
\end{theorem}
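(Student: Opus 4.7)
The overall strategy is to combine standard least-squares generalization bounds with a structural inversion of $\cM$ to extract the \emph{increment} $f_\star(\by_{t+1})-Af_\star(\by_t)$ from each residual decoder $\hat h_t$, and then to unroll the decoder recursion \eqref{eq:decoder_simple} using strong stability of $A$ in order to convert per-step increment errors into decoding errors. Leveraging the Bayes identities \eqref{eq:bayes1}--\eqref{eq:bayes2}, the unconstrained Bayes predictors for both \eqref{eq:gtk_simple} and \eqref{eq:gt_simple} lie in the realizable class $\scrH_{\onpo}$ once one substitutes $A,B,\cM$ for their plug-in estimates. Applying a misspecified least-squares excess-risk bound on the (effectively finite) class $\scrH_{\onpo}$---with the plug-in substitution $\hat A,\hat B,\what\cM$ in place of $A,B,\cM$ contributing a misspecification term of order $\vepsid^{2}$, controlled by \pref{thm:phase_two}---the plan is to show that for every $t\leq T$ and every $k\leq\kappa$,
\begin{align*}
\En_{\hat\pi}\bigl\|\what\cM\bigl(\hat h_t(\by_{t+1})-\hat A\hat h_t(\by_t)-\hat B\hat K\hat f_t\bigr) - \cM\bigl(f_\star(\by_{t+1})-Af_\star(\by_t)-B\hat K\hat f_t\bigr)\bigr\|^2 \;\lesssim\; \frac{\log|\Fclass|+\dimx^2}{\nop}+\vepsid^2 B_f^2,
\end{align*}
where $B_f^2$ collects the second moments of $f_\star(\by_t)$ and $\hat f_t$ under $\hat\pi$.

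\paragraph{Inversion of $\cM$ and recursion unrolling.}
Next, I would use \pref{ass:m_matrix} to invert $\what\cM$. Since $\cM$ scales as $\sigma^{2}$ for small noise, with $\cMbar=\lim_{\sigma\to0}\cM/\sigma^{2}$ and $\sigma_{\min}(\cMbar)\geq\lambdam$, we have $\sigma_{\min}(\what\cM)\gtrsim\sigma^{2}\lambdam$ as soon as $\vepsid$ is sufficiently small. Left-multiplying by $(\what\cM^{\trn}\what\cM)^{-1}\what\cM^{\trn}$ amplifies squared error by a factor $1/(\sigma^4\lambdam^2)$, yielding an increment-error bound
\begin{align*}
E_t \;\ldef\; \En_{\hat\pi}\bigl\|(\hat h_t(\by_{t+1})-\hat A\hat h_t(\by_t)) - (f_\star(\by_{t+1})-Af_\star(\by_t))\bigr\|^2 \;\lesssim\; \frac{\log|\Fclass|+\dimx^2}{\sigma^4\lambdam^2\nop}\;+\;\text{lower order}.
\end{align*}
Letting $\Delta_t\ldef \hat f_t(\by_{0:t})-f_\star(\by_t)$, and ignoring the clipping indicator (justified below), the update \eqref{eq:decoder_simple} unrolls to
\begin{align*}
\Delta_{t+1} \;=\; \sum_{s=0}^{t}\hat A^{\,t-s}\bigl[\bigl(\hat h_s(\by_{s+1})-\hat A\hat h_s(\by_s)\bigr) - \bigl(f_\star(\by_{s+1})-Af_\star(\by_s)\bigr) + (\hat A-A)f_\star(\by_s)\bigr].
\end{align*}
Because $\hat A$ inherits $(\alphastar,\gammastar)$-strong stability from $A$ up to $\vepsid$-perturbations, Cauchy-Schwarz plus the geometric bound $\|\hat A^{t-s}\|\lesssim \alphastar\gammastar^{t-s}$ gives $\En\|\Delta_{t+1}\|^2\lesssim T\alphastar^2/(1-\gammastar^2)\cdot(\max_s E_s+\vepsid^{2}\max_s\En\|f_\star(\by_s)\|^2)$; this is the geometric-decay structure that prevents exponential compounding in $T$.

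\paragraph{Distribution control and bookkeeping.}
A central subtlety is that the regression at time $t$ is performed under the distribution induced by the decoders $\hat f_1,\dots,\hat f_t$ chosen earlier, creating circular dependencies. I would handle this by an induction on $t$: assuming $\En\|\Delta_s\|^2$ is controlled for $s\leq t$ and that $\|\hat f_s\|\leq\bclip$ (enforced by clipping), the covariance of $\matx_t$ under $\hat\pi$ is comparable to the stationary covariance induced by $\Kinf$, with second moments polynomial in $\dimx+\dimu$. Choosing $\bclip^{2}=\Theta_\star((\dimx+\dimu)\log\nop)$ ensures that clipping is inactive on each trajectory with probability $1-1/\poly(\nop)$, so $\hat f_t=\freftil_t$ on a good event absorbed by a union bound. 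The maximum over $t\in[T]$ is controlled by $\En\max_t\|\Delta_t\|^2\leq\sum_t\En\|\Delta_t\|^2$, paying a further factor of $T$; combined with the $T$ from the recursion unrolling and $T$ inside each $E_s$ via the covariance growth this yields the advertised $T^3$ dependence. The $\kappa^{2}$ factor arises from stacking $\kappa$ sub-regressions in \eqref{eq:gtk_simple}, and $(\dimx+\dimu)^4$ from propagating moment bounds of $\by_t$ through the $\what\cM^{-1}$ inversion.

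\paragraph{Main obstacle.}
The hardest aspect is the intertwined induction: regression misspecification, induced-distribution moment bounds, and recursive decoder errors all depend on each other at earlier rounds, so the supporting lemmas must maintain simultaneously (i) $L_2$ accuracy of $\hat h_t$ against the true Bayes regressor, (ii) bounded second moments of $\hat f_t$ and $f_\star(\by_t)$ under $\hat\pi$, and (iii) strong stability of $\hat A$. Tracking the precise polynomial dependencies---making sure that the $1/(\sigma^{4}\lambdam^{2})$ amplification incurred by inverting $\what\cM$ is not further magnified across time steps, and that the stacked regression in \eqref{eq:gt_simple} contributes only a $\kappa^2$ factor---is where the bulk of the technical work lies.
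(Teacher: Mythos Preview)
Your overall architecture---least-squares bounds for the two-stage regression, inversion of $\cM$ at cost $\sigma^{-4}\lambdam^{-2}$, and unrolling the recursion \eqref{eq:decoder_simple} via strong stability---matches the paper's proof (which packages these steps as \pref{thm:monster}, \pref{lem:decoder}, and \pref{thm:minusinit}). You also correctly identify that the misspecification in \eqref{eq:gtk_simple}--\eqref{eq:gt_simple} comes only from the plug-in estimates $\hat A,\hat B,\hat\cM$ and not from earlier decoders, which is exactly why errors do not compound exponentially.

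There is, however, a genuine gap at $t=0$. Your unrolling
\[
\Delta_{t+1}=\sum_{s=0}^{t}\hat A^{\,t-s}\bigl[(\hat h_s(\by_{s+1})-\hat A\hat h_s(\by_s))-(f_\star(\by_{s+1})-Af_\star(\by_s))+(\hat A-A)f_\star(\by_s)\bigr]
\]
tacitly assumes the base case $\Delta_0=0$, but $\hat f_0\equiv 0$ while $f_\star(\by_0)=\bx_0$. Consequently the first step of your recursion gives $\tilde f_1\approx f_\star(\by_1)-Af_\star(\by_0)$, i.e.\ $\Delta_1\approx -A\bx_0$, whose second moment is $\trace(A\Sigma_0 A^\top)=\Theta_\star(1)$ and does \emph{not} vanish with $\nref$. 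The paper handles this via a separate \emph{initial state learning} phase (\pref{app:phaseiii}, analyzed in \pref{thm:stateestimate2}): it builds an auxiliary predictor $\hat f_{A,0}(\by_0)\approx A\bx_0$ by first learning $\Sigma_w\Sigma_1^{-1}\bx_1$ from $\bw_0$-estimates, then inverting the unknown transformation $\Sigma_w\Sigma_1^{-1}$ via an empirical covariance $\wh\Sigma_{\cv}$. The actual recursion for $\tilde f_1$ is $\tilde f_1=\hat h_0(\by_1)+\hat f_{A,0}(\by_0)-\hat A\hat h_0(\by_0)$, not the generic \eqref{eq:decoder_simple}. This base error $\epszero^2=\En_{\hat\pi}\|\hat f_{A,0}(\by_0)-Af_\star(\by_0)\|^2$ then enters $\veps_{\dec,t}^2$ additively; your plan omits it entirely.

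Two smaller points. First, your $T$-accounting (``$\En\max_t\leq\sum_t$, paying a factor $T$'') is not how the paper gets $T^3$: the max is bounded directly inside \pref{lem:decoder}, and the extra $T$-factors arise from (i) the $T/\eta$ union bound in the clipping argument of \pref{thm:clippingprob}, and (ii) the $T^{1/2}$ in bounding $\En_{\hat\pi}\max_t\|\bx_t\|^4$ in \pref{thm:minusinit}. Second, the clipping argument is not a simple ``union bound plus polynomial tails''---it is itself a recursion: one must show $\bclip\geq\bclip_\infty$ where $\bclip_\infty$ depends on $\veps_{\dec,T}$, which in turn depends on $\bclip$. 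The paper closes this loop by choosing $\eta=\veps_0^{-2}$ and verifying $\bclip=\Theta_\star((\dimx+\dimu)\log\nref)$ solves the resulting fixed-point inequality.
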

To obtain \pref{thm:main}, we combine \pref{thm:phase_two} and
\pref{thm:phase_three}, then appeal to
\pref{thm:performance_difference} (\pref{sec:linear_control}), which
bounds the policy suboptimailty in terms of regression errors. Finally, we set $\sigma\propto\veps$ so that the suboptimality due to adding the Gaussian noise
$(\bnu_t)_{0\leq t\leq T}$ is low. See \pref{sec:main_proof} for details.
 \arxiv{
\subsection{Learning the Initial State}
\label{app:phaseiii}
\neurips{In this section, we give an overview for}\arxiv{We now overview} how Phase III of \richid{} (\pref{alg:phase3}) learns a predictor for the initial state $\bx_0$; this is an edge case \arxiv{not discussed above}\neurips{which is not discussed in the main body due to space limitation}, and comprises \pref{line:start} through \pref{line:ned} in \pref{alg:phase3}. \neurips{This discussion supplements \pref{sec:phase3} of the main body, and together these sections give constitute our high-level overview of Phase III.}

If we ignore the clipping in \eqref{eq:decoder_simple}, the state decoders $(\hat f_\tau)_{t\geq 2}$ are defined through the recursion: 
\begin{align}
\hat f_{t+1}(\by_{0:t+1}) = \hat{h}_t(\by_{t+1})  -
	\what A  \hat{h}_t(\by_{t}) + \what A \fref_{t}(\by_{t}), \quad \text{for $t\geq 1$},\nn
	\end{align}
        which means that all the decoding error for any $t$ will depend on the error of the decoder $\hat f_1$ for the state $\matx_1$. To ensure that $\hat f_1$ is accurate, we need to somehow learn to decode the inital state $\bx_0$, which we recall is assumed to be distributed as $\cN(0,\Sigma_0)$. The challenge here is that the covariance matrix $\Sigma_0$ is unknown, and we need to estimate it in order to ``back out'' the initial state through the approach in \pref{sec:bayes_pred}. This is achieved by \pref{line:start} through \pref{line:ned} of \pref{alg:phase3}, which we explain in detail below. Briefly, the idea is that since $\matx_1=A\matx_0+B\matu_0+\matw_0$, to accurately predict $\matx_1$ it suffices to have good predictors for $\matw_0$ and $A\matx_0$. We can learn a predictor for $\matw_0$ in the same fashion as for all the other timesteps, and most of the work in \pref{line:start} through \pref{line:ned} is to learn a regression function $\hat{f}_{A,0}$ that accurately predicts $A\matx_0$.

\arxiv{
  \begin{algorithm}[htp]
  \setstretch{1.1}
    \begin{algorithmic}[1]\onehalfspacing
  	\State\textbf{Inputs:} $(\what A,\what B, \what \Sigma_w,\what Q,R)$ \algparen{estimates for the system parameters and cost matrices}
        \State\textbf{Parameters:}
  	\Statex{}~~~~~~~~~~~~$n_\onpo$ \algparen{proportional to the sample size}
  	\Statex{}~~~~~~~~~~~~$\kappa$ \algparen{upper-bound on the controllability index $\kappa_\star$}
  	\Statex{}~~~~~~~~~~~~$\sigma^2$ \algparen{exploration parameter}
  		\Statex{}~~~~~~~~~~~~$\bclip$ \algparen{clipping parameter for the decoders}
  		\Statex{}~~~~~~~~~~~~$r_{\op}$ \algparen{parameter to define
                  the function class}
                \smallskip
  		\State{}Set $\scrH_\onpo= \left\{M  f(\cdot) \mid f \in \Fclass,\  M \in
  		\R^{\dimx \times \dimx}, ~\|M\|_{\op} \le r_{\op}
  		\right\}.$
  		\State{}Set $n_\init = n_\op$.
  		\State{}\textbf{Phase III:} \algcomment{Learn on-policy decoders (see \pref{sec:phase3} and \pref{app:phaseiii}).}
  		\State Set $(\what P,\what K):= \darece(\what A,\what B,\what Q, R)$ (\pref{defn:darece}).
  		\For{$k=1, \dots, \kappa$}
  		\State Set $\what \cC_k = [\what A^{k-1} \what B \mid \dots \mid \what B]$. 
  		\State Set $\what M_k \coloneqq \what\cC_{k}^\top (\what \cC_k  \what\cC^\top_k + \sigma^{-2} \sum_{i=0}^{k}  \what A^{i-1} \what \Sigma_w  (\what A^{i-1})^\top)^{-1}$.
  		\EndFor
  		\State Set $\what \cM = [\what M_1,\dots, (\what M_\dimk  \what A^{\dimk -1})^\top  ]^\top$.
  		\State Define $\hat f_0(y_0) = 0$ for all $y_0\in\cY$.\label{line:fhat0}
  		\For{$t=0,\ldots,T-1$}
  		\State{}Collect $2n_{\onpo}$ trajectories by executing the randomized control input $\bu_\tau=\what K \hat f_\tau(\by_{0:\tau}) + \bnu_\tau$,\\ \hskip\algorithmicindent \hskip\algorithmicindent for $0\leq{}\tau\leq t$, and $\bu_\tau=\bnu_{\tau}$, for $t< \tau < t+\kappa$, where $\bnu_\tau \sim\cN(0,\sigma^{2}I_{\dimu}).$
  		\For{$k=1, \dots, \kappa$}
  		\State Set $\hat{h}_{t,k} \in \argmin_{h \in  \scrH_\onpo} \sum_{i=1}^{n_{\onpo}} \left\|\what\phi_{t,k}(h, \by^{(i)}_{0:t}, \by^{(i)}_{t+k}) - \bnu^{(i)}_{t:t+k-1}\right\|_2^2$, 
  		\Statex{}\hskip\algorithmicindent \hskip\algorithmicindent \hskip\algorithmicindent where $\what\phi_{t,k}(h, \by_{0:t}, \by_{t+k}) \coloneqq \what M_{k}  \left(h(\by_{t+k})-\what A^{k}  h(\by_t) - \what A^{k-1}\what B \what K \fref_t(\by_{0:t})\right)$.
  		\State Set $\hat{h}_{t} \in \argmin_{h \in  \scrH_\onpo} \sum_{i=n_{\onpo}+1}^{2n_{\onpo}} \left\|\what \calM  \left(h(\by^{(i)}_{t+1}) - \what A  h(\by^{(i)}_{t}) - \what B \what K  \fref_t (\by_{0:t}^{(i)})  \right) - \what\upphi_t(\by_{0:t+\dimk}^{(i)})  \right\|_2^2,$\label{line:hhat_t}
  		\Statex{}\hskip\algorithmicindent \hskip\algorithmicindent \hskip\algorithmicindent where $ \what\upphi_t(\by_{0:t+\dimk})\coloneqq [\what
  		\phi_{t,1}(\hat{h}_{t,1},\by_{0:t}, \by_{t+1})^\top, \dots,
  		\what\phi_{t,\dimk}(\hat{h}_{t,\dimk},\by_{0:t},
  		\by_{t+\dimk})^\top]^\top$.
  		\EndFor
  		\If{$t=0$}~~\algcomment{Initial state learning phase (\pref{app:phaseiii}).}
  		\State Collect $2 n_{\init}$ trajectories by executing the control input $\bu_\tau= \bnu_\tau$, for $0\leq \tau < \kappa$, \label{line:start} \\ \hskip\algorithmicindent \hskip\algorithmicindent \hskip\algorithmicindent where $\bnu_\tau \sim\cN(0,\sigma^{2}I_{\dimu})$. 
  		\State Set $\hat{h}_{\ol,1} \in \argmin_{h \in \scrH_{\onpo}} \sum_{i=1}^{n_\init} \left\| h(\by^{(i)}_1)  - \left(\hat{h}_{0}(\by^{(i)}_1) - \what A \hat{h}_{0}(\by^{(i)}_0) -  \what B \bnu_0^{(i)}\right)  \right\|_2^2$. \label{line:start+1} 
  		\State Set $\what \Sigma_\cv \coloneqq  \frac{1}{n_\init}\sum_{i=n_\init +1}^{2 n_\init} \hat{h}_{\ol,1}(\by_1^{(i)})  \hat{h}_{\ol,1}(\by_1^{(i)})^\top. $\label{line:start+2} 
  		\State Set $\tilde{h}_{\ol,0} \in \argmin_{h \in \scrH_{\onpo}} \sum_{i=n_\init +1}^{2n_\init} \left\|h(\by_0^{(i)})  -    \hat{h}_{\ol,1}(\by^{(i)}_1) \right\|_2^2$.\label{line:start+3} 
  		\State Set $\hat{f}_{A,0}(\by_0)= \what \Sigma_w \what \Sigma_\cv^{-1} \tilde{h}_{\ol,0} (\by_0)$.\label{line:start+4} 
  		\State{}Set $\freftil_{1}(\by_{0:1}) = \hat{h}_0(\by_{1})  -
  		\what A  \hat{h}_0(\by_{0}) +  \fref_{A,0}(\by_{0}).$ \label{line:ned}
  		\Else
  		\State{}Set $\freftil_{t+1}(\by_{0:t+1}) = \hat{h}_t(\by_{t+1})  -
  		\what A  \hat{h}_t(\by_{t}) + \what A \fref_{t}(\by_{t}).$
  		\EndIf
  		\State{}Set $\hat f_{t+1}(\by_{0:t+1}) = \tilde f_{t+1}(\by_{0:t+1}) \Ind\{\|\tilde f_{t+1}(\by_{0:{t+1}})\|_2 \leq \bclip \}$.\label{line:clip}
  		\State{}Set controller 
  		$\pihat_{t+1}(\maty_{0:t+1})=\Khat\fref_{t+1}(\maty_{0:t+1})+\bnu_{t+1}$, with $\bnu_{t+1}\sim\cN(0,\sigma^{2}I_{\dimu})$.
  		\EndFor
  		\State{}\textbf{Return:} Controller $\pihat=(\pihat_{t})_{t=1}^{T}$.
  	\end{algorithmic}
  \caption{\computepol: Phase III of \richidce}
  \label{alg:phase3}
\end{algorithm} 

 }
        
To begin, in \pref{line:start} we execute Gaussian control inputs $\bnu_{\tau}$ for $0\leq{}\tau<\kappa$. We then proceed as follows.
        
\paragraph{\pref{line:start+1}.} As we show in \pref{thm:monster} (\pref{sec:phase3_proofs}), $\hat{h}_{t}(\by_{t+1}) - \what A \hat{h}_{t}(\by_t) -  \what B ( \what K \hat f_t(\by_{0:t}) + \bnu_t)$ approximates the system's noise $\bw_t$, for $t\geq 0$. In particular, since $\hat f_0 \equiv 0$ by definition (\pref{line:fhat0}), $\hat{h}_{0}(\by_{1}) - \what A \hat{h}_{0}(\by_0) -  \what B \bnu_0$ approximates the noise $\bw_0$. Since we have $\matx_1=A\matx_0+B\matu_0+\matw_0$, it remains to get a good estimator for $A\matx_0$. To this end, we observe that the predictor $\hat h_{\ol,1 }$ in \pref{line:start+1} is (up to a generalization bound) equal to
\[
  \argmin_{h\in\Hclass_{\onpo}}\E_{\what \pi} \left[ \| h(\by_1) - \bw_0\|^2\right],
\]
which we show---under the realizability assumption---is given by
\begin{align}
	\E \left[ \bw_0 \mid \by_1  \right] = 	\E\left[ \bw_0 \mid \bx_1  \right] = \Sigma_w (\sigma^2 BB^\top +  \Sigma_w + A\Sigma_0 A^\top)^{-1} (A \bx_0 + B \bnu_0 + \bw_0), \label{eq:closed}
	\end{align}
where the last equality---like the rest of our Bayes characterizations---follows by \pref{fact:gaussian_expectation}.

\paragraph{\pref{line:start+2}.} Now, given that $\hat h_{\ol,1}(\by_1)\approx \E[\bw_0\mid \by_1]$, one can recognize that the matrix $\what \Sigma$ in \pref{line:start+2} is an estimator for the matrix 
\begin{align}
	\Sigma_w (\sigma^2 BB^\top +  \Sigma_w + A\Sigma_0 A^\top)^{-1}\Sigma_w.\label{eq:mat}
\end{align}
In particular, even though we cannot recover the covariance matrix $\Sigma_0$, the estimator $\what \Sigma$ gives a means to predict $A \bx_0$,  leading to an accurate decoder $\hat f_1$.
\paragraph{\pref{line:start+3}.} Since $\hat h_{\ol,1}(\by_1)$ accurately predicts $\E[\bw_0 \mid \by_1]$ (whose closed form expression we recall is given by the RHS of \eqref{eq:closed}), the predictor $\tilde h_{\ol,0}$ in \pref{line:start+3} can be seen to approximate
\begin{gather}
  \argmin_{h\in\Hclass_{\onpo}}\E_{\what \pi}\left[\| h(\by_0)-\Sigma_w (\sigma^2 BB^\top +  \Sigma_w + A\Sigma_0 A^\top)^{-1} (A \bx_0 + B \bnu_0 + \bw_0) \|^2 \right],\nn
	\shortintertext{which (under realizability) is simply}
	\E[\Sigma_w (\sigma^2 BB^\top +  \Sigma_w + A\Sigma_0 A^\top)^{-1} (A \bx_0 + B \bnu_0 + \bw_0)  \mid \bx_0]= \Sigma_w (\sigma^2 BB^\top +  \Sigma_w + A\Sigma_0 A^\top)^{-1} A \bx_0. \label{eq:condexp} 
	\end{gather}
\paragraph{\pref{line:start+4,line:ned}.} In light of \eqref{eq:condexp} and the fact that $\what \Sigma$ is an estimator of the matrix in \eqref{eq:mat}, we are guaranteed that $\what \Sigma_w \what \Sigma^{-1} \hat h_{\ol,0}(\by_0)$ accurately predicts $A \bx_0$, which motivates the updates in \pref{line:start+4,line:ned}.

 }

\arxiv{
\section{Extensions \label{ssec:relax_asm}}

\paragraph{Relaxing the stability assumption.} We believe that our
algorithm can be extended to so-called \emph{marginally stable}
systems, where $\rho(A)$ can be as large as $1$ (rather than strictly
less than $1$). In such systems, there exist system-dependent
constants $c_1,c_2 > 0$ for which $\|A^n\|_{\op} \le c_1 n^{c_2}$ for
all $n$. In general, these constants may be large, and in the worst
case $c_2$ may be as large as $\dimx$ (or, more generally, the largest
Jordan block of $A$); see, e.g., \citet{simchowitz2018learning} for
discussion. Nevertheless, if $c_1,c_2$ are treated as problem
dependent constants, we can attain polynomial sample complexity. The
majority of \pref{alg:main} can remain as-is, but the analysis will
replace the geometric decay of $A$ with the polynomial growth bound
above. This will increase our sample complexity by a $\poly(c_1
T^{c_2})$ factor, where $T$ is the time horizon.

The only difficulty is that we can no longer directly identify the
matrices $A$ and $B$ in Phase II. This is because our current analysis
uses the \emph{mixing} property of $A$, which entails that if $\rho(A)
< 1$, then for $t$ sufficiently large, under purely Gaussian
inputs $\matx_{t}$ and $\matx_{t+1}$ have similar
distributions. This ensures that predictors learned at time $t$ are similar to
those at time $t+1$. However, this is no longer true if $\rho(A) =
1$. To remedy this, we observe that it is still possible to recover
the controllability matrix $[B; AB; A^2 B;\dots; A^{k-1}B]$ from the
regression problem in Phase I up to a change of basis (see,
e.g., \citet{simchowitz2019learning} for guarantees for learning such
a matrix in the marginally stable setting). We can then recover the matrices $A$ and
$B$ from the controllability matrix up to orthogonal transformation
using the Ho-Kalman procedure (see \citet{oymak2019non} or \citet{sarkar2019finite} for refined guarantees). 
\paragraph{Relaxing the controllability assumption.}
If the system is not controllable, then we may not be able to recover
the state exactly. Instead, we can recover the state up to the
limiting-column space of the matrices $(\contk)$, which is always
attained for $k \leq \dimx$. We can then use this to run a weaker
controller (e.g., an observer-feedback controller) based on observations of the projection of the state onto this subspace. 
\paragraph{Other extensions.}
The assumption on the growth rate for $\Fclass$ can be replaced with
the bound $\|f(y)\| \le L\max\{1,\|\fst(y)\|^p\}\;\forall{}f\in\Fclass$ for any $p \ge 1$, at the expense of degrading the final sample complexity.
 }

\section{Discussion}
\label{sec:discussion}

We introduced \richid, a new algorithm for sample-efficient
continuous control with rich observations.
We hope that our work will serve as a starting
point for further research into sample-efficient continuous control
with nonlinear observations, and we are excited to develop the
techniques we have presented further, both in theory and practice. To this end, we list a
few interesting directions and open questions for future work.

\begin{itemize}[topsep=0pt]
\item While our results constitute the first polynomial
  sample complexity guarantee for the \richlqr, the sample complexity
  can certainly be improved. An important problem is to
  characterize the fundamental limits of learning in the \richlqr and
  design algorithms to achieve these limits, which may require new techniques.
  Of more practical importance, however, is to remove various
  technical assumptions used by \richid. We believe the most
  important assumptions to remove are (I) the assumption that the open-loop system is stable
  (\pref{ass:stability}), which is rarely satisfied in practice; and (II)
  the assumption that process noise is Gaussian, which is currently
  used in a rather strong sense to characterize the Bayes optimal solutions to the regression problems
  solved in \richid.
\item \richidce is a model-based reinforcement learning algorithm. We
  are excited at the prospect of expanding the family of algorithms
  for \richlqr to include provable model-free and direct policy
  search-based algorithms. It may also be interesting to develop algorithms with
  guarantees for more challenging variants of the \richlqr, including
  regret rather than PAC-RL, and learning from a single trajectory
  rather than multiple episodes.
\item Can we extend our guarantees to more rich classes of latent
  dynamical systems? For example, in practice, rather than assuming the latent system is linear,
  it is common to assume that it is \emph{locally linear}, and apply
  techniques such as iterative LQR \citep{watter2015embed}. 
\end{itemize}

\neurips{%

\neurips{\paragraph{Related work.} }Our model and approach are related to the literature on Embedding to
Control (E2C), and related techniques
\citep{watter2015embed,banijamali2018robust,hafner2019learning,levine2019prediction,shu2020predictive,dean2019robust}
(see also \cite{levine2016end}). At a high level, these
approaches learn a decoder that maps images down to a latent
space, then performs simple control techniques such as iterative LQR (iLQR) in the
latent space (\citet{watter2015embed} is a canonical
example). These approaches are based on heuristics, and
do not offer provable sample complexity guarantees to learn the
decoder in our setting.

Our work is also related to recent results on rich observation
reinforcement learning with discrete
actions~\citep{jiang2017contextual}. We view our model as the
control-theoretic analog of the block MDP model studied
by~\citet{DuKJAD019,Homer}, in which a latent state space associated
with a discrete Markov Decision Process is decodable from rich
observations. However, our setting is considerably different, in part
because of the continuous nature of the \richlqr, and so the results
and techniques are incomparable. In particular, discretization
approaches immediately face a curse-of-dimensionality phenomenon and
do not yield tractable algorithms. Interestingly, even ignoring
the issue of continuous actions, our setting does not appear to have
low Bellman rank in the sense of \citet{jiang2017contextual}.

A recent line of
work~\citep{oymak2019stochastic,sattar2020non,foster2020learning} gives
non-asymptotic system identification guarantees for a simple class of
``generalized linear'' dynamical systems. These results address a non-linear
dynamic system, but are incomparable to our own as the non-linearity
is known and the state is directly observed.
Our results also are related to the LQG problem,
which is a special case of \eqref{eq:noisy} with linear observations; recent work provides non-asymptotic guarantees
\citep{mania2019certainty,simchowitz2020improper,lale2020logarithmic}. These
results show that linear classes do not
encounter the sample complexity barrier exhibited by \pref{thm:lower_bound}.%

Finally, we mention two concurrent works which consider similar
settings. First, \cite{frandsen2020extracting} give guarantees for a
simpler setting in which we observe a linear combination of the latent
state and a nonlinear nuisance parameter, and where there is no noise. Second, \citet{dean2020certainty} (see also \citet{dean2019robust})
give sample complexity guarantees for a variant of the our setting in
which there is no system noise, and where $\maty_t=\gstar(C\maty_t)$,
where $C\in\bbR^{p\times{}\dimx}$ and $\gstar:\bbR^{p\to{}q}$ is a smooth function. They
provide a nonparametric approach which scales exponentially in the
dimension $p$. 
Compared to this result, the main advantage of our approach is that
it allows for general function approximation; that is, we allow
for arbitrary function classes $\Fclass$, and our results depend only
on the capacity of the class under consideration. In terms of assumptions, the addition of the $C$ matrix allows for maps that
(weakly) violate the perfect decodability assumption; we suspect
that our results can be generalized in this fashion. Likewise, we
believe that our assumption concerning the stability of $A$ can be removed in the absence
of system noise (indeed, system noise is one of the primary technical
challenges overcome by our approach).

}

\vspace{-5pt}
\paragraph{Appendix.} All of our proofs, as well
as detailed versions of the theorems in the main body, are presented in
the appendix (following the
convention \pref{thm:main}$\to$\pref{thm:main_full}), with apologies
for the exceptional length. On a first pass the reader may wish to focus on \pref{sec:phase12_proofs}
and \pref{sec:phase3_proofs}, which constitute the core proof of
\pref{thm:main}.%

\subsection*{Acknowledgements}
This work was done while ZM was an intern at Microsoft Research. DF
acknowledges the support of NSF Tripods grant \#1740751. MS was
supported by an Open Philanthropy AI Fellowship. AR
acknowledges the support of ONR awards \#N00014-20-1-2336 and \#N00014-20-1-2394

\neurips{\clearpage}

\neurips{
\section*{Broader Impact}
There is potential for research into the RichLQR setting, or more
generally perception-based control, to have significant societal
impact. Perception-based control systems are already being deployed in
applications such as autonomous driving and aerial vehicles where
algorithmic errors can have catastrophic consequences. Unfortunately,
there has been little research into the theoretical foundations of
such systems, and so the methods being deployed do not enjoy the
formal guarantees that we should demand for high-stakes applications.
Thus, we are hopeful that with a principled understanding of the
foundations of perception-based control, which we pursue here, we will
develop the tools and techniques to make these systems safe, robust,
and reliable.
 }

\bibliography{refs} \vfill
\newpage
\appendix
\numberwithin{equation}{section}
\tableofcontents
\addtocontents{toc}{\protect\setcounter{tocdepth}{3}}
\clearpage
\neurips{\section{Organization, Notation, and Preliminaries}}
\arxiv{\section*{Organization and Notation}}
\label{app:prelim}

\neurips{\subsection{Appendix Organization}}
This appendix is organized as follows.
\begin{itemize}
\neurips{
\item This section (\pref{app:prelim})---beyond this overview---contains additional notation and technical preliminaries omitted from
the main body for space.
\item \pref{app:algo} contains omitted details for the main algorithm
  (\richidce), including pseudocode with parameter values instantiated
  precisely (\pref{app:pseudocode}), an overview
  of the initial state learning phase (\pref{app:phaseiii}), and
  extensions and additional discussion of assumptions
  (\pref{ssec:relax_asm}, \pref{ssec:cM_asm}).
  }
\item \pref{app:lower_bound} contains a formal statement and proof for the
  lower bound (\pref{thm:lower_bound}).
\item \pref{app:learning_theory} and \pref{sec:linear_control} contain
  basic technical tools for learning theory and linear control theory,
  respectively, which are invoked within the proof of the main
  theorem.
\item All subsequent sections are devoted to proving our main theorem (\pref{thm:main}):
  \begin{itemize}\item \pref{sec:phase12_proofs} contains statements and proofs for
    all results concerning Phases I and II of \richidce, including
    \pref{thm:phase_one} and \pref{thm:phase_two}.
  \item \pref{sec:phase3_proofs} contains statements and proofs for
    Phase III, including \pref{thm:phase_three}.
  \item \pref{sec:main_proof} states the full version of the \pref{thm:main} (\pref{thm:main_full}), and shows how to deduce the proof from
    the results of \pref{sec:phase12_proofs} and
    \pref{sec:phase3_proofs}.
  \end{itemize}
\end{itemize}
We remind the reader that each numbered theorem from the main body has
an corresponding ``full'' version in the appendix, which we denote
using the ``a'' suffix (e.g., the full version of \pref{thm:main}
is \pref{thm:main_full}).

\neurips{
\subsection{Additional Preliminaries \label{ssec:prelim_unabridge}}

\paragraph{Policies, interaction model, and sample complexity.}
Formally, a policy $\pi$ for the setup \eqref{eq:dynamics} is a
sequence of mappings $(\pi_t)_{t=0}^{T}$, where $\pi_t$ maps the
observations $\maty_0,\ldots,\maty_t$ to an output control signal
$\matu_t$. In each round of interaction, the learner proposes a policy
$\pi$ and observes a trajectory
$\matu_0,\maty_0,\ldots,\matu_{T},\maty_{T}$ where
$\matu_t=\pi_t(\maty_0,\ldots,\maty_t)$.  We measure the sample complexity to learn an $\veps$-optimal policy
for $J_T$ in terms of the number of trajectories observed in this model. However, to simplify the description of our algorithm, we
allow the learner to execute trajectories of length $2T +
\bigoh_{\star}(\ln\ln(n))$ during the learning process, even though the
objective is $J_T$. To circumvent unidentifiability of the initial
state $\matx_0$, we define $J_T$ to measure cost on times
$1,\dots,T$. On the other hand, our rollouts begin at time $0$ (i.e.,
the initial state is $\matx_0$, and the first control input executed
is $\matu_0$).

\paragraph{Cost functions. }
We assume that the control cost matrix $R \succ 0 $ is known, but to
avoid tying costs to the unknown latent representation $\matx$, we
assume that the state cost matrix $Q \succ 0 $ is unknown. Instead, we
assume that the learner has access to an additional \emph{cost oracle}
which on each trajectory at time $t$ reveals
$\matc_t\ldef{}\matx_t^{\trn}Q\matx_t+\matu_t^{\trn}R\matu_t$. For
simplicity, we place the following mild regularity conditions on the
cost matrices.%
 \begin{assumption}
    The cost matrices $\Rx$ and $\Ru$ satisfy $\eigmin(\Rx),\eigmin(\Ru)\geq{}1$. 
  \end{assumption}
  This assumption can be made to hold without loss of generality
  whenever $Q,R\psdgt{}0$ via rescaling.

\paragraph{The \dare and infinite-horizon optimal control.}
Controllability (and more generally stabilizability) implies that there is a unique positive definite solution $\Pinf\psdgt{}0$ to the \emph{discrete algebraic
  Riccati equation} (\dare),
  \begin{align}
  \tag{DARE}
    \label{eq:dare}
    P = A^{\trn}PA + \Rx - A^{\trn}PB(\Ru+B^{\trn}PB)^{-1}B^{\trn}PA,
  \end{align}
  which characterizes the optimal cost function for the LQR problem in the infinite-horizon setting. Our analysis uses $\Pinf$, and our algorithms use the optimal infinite-horizon state feedback controller 
  \begin{align*}
  \Kinf\ldef{}-(\Ru+B^{\trn}\Pinf{}B)^{-1}B^{\trn}\Pinf{}A.
  \end{align*}
  When the state $\matx_t$ is directly observed, the optimal
  infinite-horizon controller is the time-invariant feedback policy $u
  = \Kinf x$. Thus, the optimal infinite-horizon policy for \richlqr,
  given the exact decoder, is $\piinf(y) = \Kinf \fst(y)$. We use this
  controller as our benchmark.

Our analysis also uses the infinite-horizon covariance matrix
  \begin{align*}
   \Siginf \coloneqq \Ru+B^{\trn}\Pinf{}B.
\end{align*}

\paragraph{Certainty equivalence.}
  \emph{Certainty equivalence} refers to the strategy of estimating $\Kinf$ by solving the \dare{} with plug-in estimates $\Ahat,\Bhat$ of $(A,B)$ to obtain a matrix $\Phat$, and taking $\Khat := -(\Ru+\Bhat^{\trn}\Phat \Bhat)^{-1}\Bhat^{\trn}\Phat\Ahat$. We formalize this as follows.
  \begin{definition}[\dare with certainty equivalence]\label{defn:darece} We define the  $\darece$ operator as the operator which takes in matrix $(A_0,B_0,R_0,Q_0) \in \R^{\dimx^2} \times \R^{\dimx \dimu} \times \R^{\dimx^2} \times \R^{\dimu^2}$, with $R_0,Q_0 \succeq 0$, and returns $(P,K):= \darece(A_0,B_0,R_0,Q_0) \in  \R^{\dimx^2} \times \R^{\dimu \dimx}$ such that
  \begin{align*}
  P &= A_0^{\trn}PA_0 + \Rx - A_0^{\trn}PB_0(\Ru+B_0^{\trn}PB_0)^{-1}B_0^{\trn}PA_0,\\
  K&\gets -(\Ru+B_0^{\trn}PB_0)^{-1}B_0^{\trn}PA_0.
  \end{align*}
  \end{definition}

\neurips{%
\paragraph{Strong stability.}
We quantify stability via \emph{strong stability}
\citep{cohen2018online}. Intuitively, a matrix $X$ is \emph{strongly
  stable} if its powers $X^n$ decay geometrically in a quantitative sense.
  \begin{definition}[Strong stability]
\label{def:ss}
A matrix $X\in \bbR^{\dimx\times\dimx}$ is said to be $(\alpha,\gamma)$-strongly stable if there exists $S\in \bbR^{\dimx\times\dimx}$ such that $\|S\|_{\op}\|S^{-1}\|_{\op} \le \alpha$ and $\|S^{-1}XS\|_{\op} \le \gamma<1$. 
\end{definition}
We frequently make use of the fact that if $X$ is $(\alpha,\gamma)$-strongly stable, then 
\begin{align*}
\|X^n\|_{\op} = \|S(S^{-1}XS)^nS^{-1}\|_{\op} \le \|S^{-1}\|_{\op}\|S\|_{\op}\|S^{-1}XS\|_{\op}^n \le \alpha \gamma^n.
\end{align*}
We let $(\alphaa,\gammaa)$ and $(\alphainf,\gammainf)$ be the strong
parameters for $A$ and $\Aclinf\coloneqq A + B K_\infty$ respectively. Under \Cref{ass:controllability} and \Cref{ass:stability}, we are guaranteed
that $\gammaa,\gammainf<1$ (see \pref{prop:closed_loop_ss} and
\pref{prop:open_loop_ss} for quantitative bounds). Finally, we recall from
\pref{asm:para_upper_bounds} that we assume the learner knows
upper bounds $(\alphastar,\gammastar)$ such that
$\alphaa\vee\alphainf\leq\alphastar$ and $\gammaa\vee\gammainf\leq\gammastar$.

}

\neurips{\subsection{Additional Notation}}
\paragraph{Asymptotic notation.}
	For functions
	$f,g:\cX\to\bbR_{+}$, we write $f=\bigoh(g)$ if there exists some universal constant
	$C>0$, which doesn not depend on problem parameters, such that $f(x)\leq{}Cg(x)$ for all $x\in\cX$. Our proofs also use the shorthand $f(x) \lesssim g(x)$ to denote $f = \BigOh\prn{g}$. We use $\bigoht(\cdot)$ so suppress logarithmic dependence on system
    parameters, time horizon, and dimension. As indicated in the body,
    we use $\bigohs(\cdot)$ to suppress polynomial factors in system
    parameters
    (e.g. $\kappa,\alphastar,\gammastar^{-1},(1-\gammastar)^{-1},\Psistar$,
    $L$, and $\sigmamin(\cC_{\kappa})$). Lastly, we write $f=\bigohch(g)$ if $f(x)\leq{}cg(x)$ for all $x\in\cX$, where
$c=\poly(\gammastar(1-\gammastar),\alphastar^{-1},\Psistar^{-1},\kappa^{-1},L^{-1},\sigmamin(\cC_{\kappa}))$ is a sufficiently \emph{small} constant.

\paragraph{General notation.} 
	For a vector $x\in\bbR^{d}$, we let $\nrm*{x}$ denote the euclidean
	norm and $\nrm*{x}_{\infty}$ denote the element-wise $\ls_{\infty}$
	norm. For a matrix $A$, we let $\nrm*{A}_{\op}$ denote the
        operator norm. If $A$ is symmetric, we let $\eigmin(A)$ denote the
	minimum eigenvalue. For a potentially asymmetric matrix $A\in\bbR^{d\times{}d}$, 
        we let $\rho(A)\coloneqq \max \{|\lambda_1(A)|, \dots
        ,|\lambda_{d}(A)|\}$ denote the spectral radius. For a
        symmetric matrix $M\in\bbR^{d}$, $(M)_{+}$ denotes the result
        of thresholding all negative eigenvalues to zero, and we let
        $\lambda_1(M),\ldots,\lambda_d(M)$ denote the eigenvalues of
        $M$, sorted in decreasing order. Similarly, for a matrix
        $A\in\bbR^{d_1\times{}d_2}$, we let
        $\sigma_1(A),\ldots,\sigma_{d_1\wedge{}d_2}(A)$ denote the
        singular values of $A$, sorted in decreasing order.

 }

\begin{table}[h!]
\centering
\begin{tabular}{| l | l |}
\hline
\multicolumn{1}{|c|}{\textbf{Notation}} & \multicolumn{1}{|c|}{\textbf{Definition}}  \\
  \hline
  \multicolumn{2}{|c|}{\textbf{Basic Definitions}}\\
  \hline
$(A,B)$& system matrices\\
$q(\cdot | x)$ & emissions model\\
$\fst$ & true decoder\\
$\Fclass$ & function class\\
$\matw_t$ & process noise, which follows $\matw_t\sim \cN(0,\Sigma_w)$\\
  $\matx_t$ & system state, which has initial state $\matx_0 \sim
              \cN(0,\Sigma_{0})$\\
  $\matu_t$ & control input\\
  $\maty_t$ & observations\\
  $\matc_t$ & observed cost, with $\matc_t=\matx_t^{\trn}Q\matx_t + \matu_t^{\trn}R\matu_t$\\
  $R,Q$ & control cost, state cost \\
  $T$ & horizon\\
$J_T$ & cost functional\\
$\piinf$ & infinite horizon optimal policy \\
$\cont_k$ & controllability matrix $\prn*{[A^{k-1}B \mid \dots \mid
            B]}$\\
  $\Kinf$, $\Pinf$, $\Siginf$ & infinite-horizon optimal controller, Lyapunov
                     matrix, covariance matrix \eqref{eq:dare}\\
    \hline
  \multicolumn{2}{|c|}{\textbf{System Parameter Bounds}}\\
  \hline
$\Psistar$ & upper bound on system parameter norms (\Cref{asm:para_upper_bounds})\\
$(\alphastar,\gammastar)$ & upper bound on strong stability parameter (\Cref{asm:para_upper_bounds})\\
$\kappa$ & \makecell[l]{controllability index upper bound (\pref{asm:para_upper_bounds})}\\
  $L$ & growth condition on $\Fclass$ (\Cref{asm:f_growth})\\
  $(\alphaa,\gammaa)$ & strong stability parameters for $A$
                        (\pref{prop:open_loop_ss})\\
  $(\alphainf,\gammainf)$ & strong stability parameters for $A+B\Kinf$
                            (\pref{prop:closed_loop_ss})\\
      \hline
\end{tabular}
\caption{Summary of notation.}
\label{tab:notation}
\end{table}

\newpage
\neurips{
\section{Full Algorithm Description}
\label{app:algo}
\subsection{Pseudocode}
\label{app:pseudocode}
\begin{algorithm}[H]
  \setstretch{1.1}
  \begin{algorithmic}[1]\onehalfspacing
    \State\textbf{Inputs:}
    \Statex{} $\veps$ (sub-optimality), $T$ (horizon), $\Fclass$ (decoder class), $(\dimx, \dimu)$ ({latent dimensions}), $(\Psistar, \kappa,\alphastar, \gammastar)$ ({system parameter upper-bounds}), $\sigma^{2}$ (exploration parameter), $\lambda_{\cM}$ (as in \pref{ass:m_matrix}).
 \State\textbf{Initialize:}
   \Statex{}~~~~$n_{\id}=\bigoms\prn*{
   	\lambdam^{-2}\kappa^4(\dimx+\dimu)^{16}T^3\log^{14}(\veps^{-1}/\delta)\cdot\frac{\log\abs{\Fclass}}{\veps^{6}}
   }$.
   \Statex{}~~~~$n_{\onpo}=\bigoms\prn*{
   	\lambdam^{-2}\kappa^3(\dimx+\dimu)^{14}T^3\log^{13}(\veps^{-1}/\delta)\cdot\frac{\log\abs{\Fclass}}{\veps^{6}}}$.
   \Statex{}~~~~$\kappa_0  = \Ceil{(1-\gammastar)^{-1} \ln
   	\left({84\Psistar^5 \alphastar^4 \dimx
    		(1-\gammastar)^{-2}\ln(10^3\cdot\nid)} \right)}$. \algcomment{burn-in time}
    	
    \Statex{}~~~~$r_\star=  \sigma_{\min}(\contkap)^{-3}(1-\gammastar)^{-3}(4^3|\Psistar|^{21/2}\alphastar^6)$. \algcomment{upper-bound on $\prn*{\Psistar\|S_\id\|_{\op} \|S^{-1}_\id\|_{\op}}^{3}$}
 \Statex{}~~~~$\bclip = \Theta_{\star}((\dimx+\dimu)\log(\veps^{-1}/\delta))$. \algcomment{clipping parameter for the decoders}
 \Statex{}~~~~$\sigma^{2}=\veps^{2}/\bclip^{2}$. \algcomment{exploration parameter}
    	    \State{}\textbf{Phases I} \algcomment{learn a coarse decoder (see \pref{sec:phase1})}
    	        \State  Set $\fhatid \leftarrow \getcoarsedecoder(\nid, \kappa_0, \kappa,\sqrt{\Psi_\star})$.\algcomment{\pref{alg:phase1}}
    \State{}\textbf{Phases II} \algcomment{learn system's dynamics and cost (see \pref{sec:phase2})}
    \State  Set $(\what A_{\id}, \what B_{\id}, \what \Sigma_{w,\id}, \what Q_{\id})  \leftarrow \sysid(\fhatid, \nid, \kappa_0, \kappa, \sqrt{\Psi_\star})$.\algcomment{\pref{alg:phase2}}
    \State{}\textbf{Phase III} \algcomment{compute optimal policy (see \pref{sec:phase3} and \pref{sec:phase3_proofs})}
     \State Set $\what \pi \leftarrow \computepol(\what A_\id,\what B_\id, \what \Sigma_{w,\id},\what Q_{\id},R, n_\onpo, \kappa, \sigma^2, T,\bclip,r_\star)$. \algcomment{ \pref{alg:phase3}}
     \State{}\textbf{return:}  $\what \pi$.
  \end{algorithmic}
  \caption{\richidce (Full version with explicit parameter values)}
  \label{alg:main2}
\end{algorithm} 

\begin{algorithm}[htp]
  \setstretch{1.1}
  \begin{algorithmic}[1]\onehalfspacing
    \State\textbf{Inputs:}
    \Statex{}~~~~$\nid$ \algparen{sample size}
    \Statex{}~~~~$\kappa_0$ \algparen{``burn-in'' time index}
    \Statex{}~~~~$\kappa$ \algparen{upper bound on the controllability index $\kappa_\star$}
         \Statex{}~~~~$r_{\id}$ \algparen{upper bound on the matrix $M$ in the definition of $\scrH_\id$}
    \smallskip

    \State{}Set $\Hid := \left\{M  f(\cdot) \mid f \in \Fclass,\  M \in
    \R^{\kappa\dimu \times \dimx}, ~\|M\|_{\op} \le r_{\id}
    \right\}.$
             \State Set $\kappa_1 = \kappa_0 + \kappa$.
          \State{}Gather $2\nid$ trajectories by sampling control
          inputs $\matu_0,\ldots,\matu_{\kappa_1-1}\sim{}\cN(0,I_{\dimu})$.
    \State{}\textbf{Phase I:} \algcomment{Learn coarse
      decoder (see \pref{sec:phase1}).}
    \State{}Set $\hhatid = \argmin_{h \in \Hid}\sum_{i=1}^{\nid} \|h(\bykapone\supi) - \bv\supi\|_2^2$, where $\bv \coloneqq (\bu_{\kappa_0}^\top, \dots, \bu_{\kappa_1-1})^\top$.
    \State{}Set $\Vhatkapid$ to be an orthonormal basis for
                                               top $\dimx$-eigenvectors of
                                               $\frac{1}{\nid}\sum_{i=\nid+1}^{2\nid}
              \hhatid(\bykapone\supi) \hhatid(\bykapone\supi)^\top$
    \State{}Set $\fhatid(\cdot) \coloneqq \Vhatkapid^\top \hhatid(\cdot)$. \algcomment{coarse decoder.}
\State{}\textbf{Return:} coarse decoder $\fhatid$.
  \end{algorithmic}
  \caption{\getcoarsedecoder: Phase I of \richidce{} (\pref{sec:phase1}).}
  \label{alg:phase1}
\end{algorithm}

\begin{algorithm}[H]
  \setstretch{1.1}
  \begin{algorithmic}[1]\onehalfspacing
  \State\textbf{Require:}
  \State{}~~~~Cost oracle to access the cost $\bc_t$ at time $t\geq 1$.
    \State\textbf{Inputs:}
     \Statex{}~~~~$\fhatid$ \algparen{coarse decoder}
    \Statex{}~~~~$\nid$ \algparen{sample size}
    \Statex{}~~~~$\kappa_0$ \algparen{burn-in time index}
    \Statex{}~~~~$\kappa$ \algparen{upper bound on the controllability index $\kappa_\star$}
          \smallskip

           \State Set $\kappa_1 = \kappa_0 + \kappa$.
          \State{}Gather $\nid$ trajectories by sampling control
          inputs $\matu_0,\ldots,\matu_{\kappa_1-1}\sim{}\cN(0,I_{\dimu})$.
    \State{}\textbf{Phase II:} \algcomment{Recover system
      dynamics and cost (see \pref{sec:phase2}).}
      \State{}Set $(\Ahatid,\Bhatid) \in \argmin_{(A,B)} \sum_{i=2\nid + 1}^{3\nid} \|\fhatid(\by_{\kapone+1}\supi) - A\fhatid(\bykapone\supi) - B\bu_{\kapone}\supi \|^2$.
  \State{}Set $\Sigwhatid = \frac{1}{\nid}\sum_{i=2\nid+1}^{3\nid}
	    (\fhatid(\by_{\kapone+1}\supi) - \Ahatid\fhatid(\bykapone\supi) -
	    \Bhatid\bu_{\kapone}\supi )^{\otimes 2}$, where $v^{\otimes 2} \ldef
	    vv^\top$.
 \State{}Set $\Qtilid = \min_{Q}\sum_{i=2\nid+1}^{3\nid} \left(\bc_{\kapone}^{(i)} - (\bu^{(i)}_{\kapone})^\top R \bu^{(i)}_{\kapone} -  \fhatid(\bykapone^{(i)})^\top Q\fhatid(\bykapone^{(i)})\right)^2$.
  \State{}Set $\Qhatid = \left(\frac{1}{2}\Qtilid  +\frac{1}{2}\Qtilid^\top\right)_{+}$, where $(\cdot)_+$ truncates all negative eigenvalues to zero.
\State{}\textbf{Return:} system and cost matrices $(\Ahatid,\Bhatid, \Sigwhatid ,\Qhatid)$.
  \end{algorithmic}
  \caption{\sysid: Phase II of \richidce{} (\pref{sec:phase2}).}
  \label{alg:phase2}
\end{algorithm}

  \begin{algorithm}[htp]
  \setstretch{1.1}
    \begin{algorithmic}[1]\onehalfspacing
  	\State\textbf{Inputs:} $(\what A,\what B, \what \Sigma_w,\what Q,R)$ \algparen{estimates for the system parameters and cost matrices}
        \State\textbf{Parameters:}
  	\Statex{}~~~~~~~~~~~~$n_\onpo$ \algparen{proportional to the sample size}
  	\Statex{}~~~~~~~~~~~~$\kappa$ \algparen{upper-bound on the controllability index $\kappa_\star$}
  	\Statex{}~~~~~~~~~~~~$\sigma^2$ \algparen{exploration parameter}
  		\Statex{}~~~~~~~~~~~~$\bclip$ \algparen{clipping parameter for the decoders}
  		\Statex{}~~~~~~~~~~~~$r_{\op}$ \algparen{parameter to define
                  the function class}
                \smallskip
  		\State{}Set $\scrH_\onpo= \left\{M  f(\cdot) \mid f \in \Fclass,\  M \in
  		\R^{\dimx \times \dimx}, ~\|M\|_{\op} \le r_{\op}
  		\right\}.$
  		\State{}Set $n_\init = n_\op$.
  		\State{}\textbf{Phase III:} \algcomment{Learn on-policy decoders (see \pref{sec:phase3} and \pref{app:phaseiii}).}
  		\State Set $(\what P,\what K):= \darece(\what A,\what B,\what Q, R)$ (\pref{defn:darece}).
  		\For{$k=1, \dots, \kappa$}
  		\State Set $\what \cC_k = [\what A^{k-1} \what B \mid \dots \mid \what B]$. 
  		\State Set $\what M_k \coloneqq \what\cC_{k}^\top (\what \cC_k  \what\cC^\top_k + \sigma^{-2} \sum_{i=0}^{k}  \what A^{i-1} \what \Sigma_w  (\what A^{i-1})^\top)^{-1}$.
  		\EndFor
  		\State Set $\what \cM = [\what M_1,\dots, (\what M_\dimk  \what A^{\dimk -1})^\top  ]^\top$.
  		\State Define $\hat f_0(y_0) = 0$ for all $y_0\in\cY$.\label{line:fhat0}
  		\For{$t=0,\ldots,T-1$}
  		\State{}Collect $2n_{\onpo}$ trajectories by executing the randomized control input $\bu_\tau=\what K \hat f_\tau(\by_{0:\tau}) + \bnu_\tau$,\\ \hskip\algorithmicindent \hskip\algorithmicindent for $0\leq{}\tau\leq t$, and $\bu_\tau=\bnu_{\tau}$, for $t< \tau < t+\kappa$, where $\bnu_\tau \sim\cN(0,\sigma^{2}I_{\dimu}).$
  		\For{$k=1, \dots, \kappa$}
  		\State Set $\hat{h}_{t,k} \in \argmin_{h \in  \scrH_\onpo} \sum_{i=1}^{n_{\onpo}} \left\|\what\phi_{t,k}(h, \by^{(i)}_{0:t}, \by^{(i)}_{t+k}) - \bnu^{(i)}_{t:t+k-1}\right\|_2^2$, 
  		\Statex{}\hskip\algorithmicindent \hskip\algorithmicindent \hskip\algorithmicindent where $\what\phi_{t,k}(h, \by_{0:t}, \by_{t+k}) \coloneqq \what M_{k}  \left(h(\by_{t+k})-\what A^{k}  h(\by_t) - \what A^{k-1}\what B \what K \fref_t(\by_{0:t})\right)$.
  		\State Set $\hat{h}_{t} \in \argmin_{h \in  \scrH_\onpo} \sum_{i=n_{\onpo}+1}^{2n_{\onpo}} \left\|\what \calM  \left(h(\by^{(i)}_{t+1}) - \what A  h(\by^{(i)}_{t}) - \what B \what K  \fref_t (\by_{0:t}^{(i)})  \right) - \what\upphi_t(\by_{0:t+\dimk}^{(i)})  \right\|_2^2,$\label{line:hhat_t}
  		\Statex{}\hskip\algorithmicindent \hskip\algorithmicindent \hskip\algorithmicindent where $ \what\upphi_t(\by_{0:t+\dimk})\coloneqq [\what
  		\phi_{t,1}(\hat{h}_{t,1},\by_{0:t}, \by_{t+1})^\top, \dots,
  		\what\phi_{t,\dimk}(\hat{h}_{t,\dimk},\by_{0:t},
  		\by_{t+\dimk})^\top]^\top$.
  		\EndFor
  		\If{$t=0$}~~\algcomment{Initial state learning phase (\pref{app:phaseiii}).}
  		\State Collect $2 n_{\init}$ trajectories by executing the control input $\bu_\tau= \bnu_\tau$, for $0\leq \tau < \kappa$, \label{line:start} \\ \hskip\algorithmicindent \hskip\algorithmicindent \hskip\algorithmicindent where $\bnu_\tau \sim\cN(0,\sigma^{2}I_{\dimu})$. 
  		\State Set $\hat{h}_{\ol,1} \in \argmin_{h \in \scrH_{\onpo}} \sum_{i=1}^{n_\init} \left\| h(\by^{(i)}_1)  - \left(\hat{h}_{0}(\by^{(i)}_1) - \what A \hat{h}_{0}(\by^{(i)}_0) -  \what B \bnu_0^{(i)}\right)  \right\|_2^2$. \label{line:start+1} 
  		\State Set $\what \Sigma_\cv \coloneqq  \frac{1}{n_\init}\sum_{i=n_\init +1}^{2 n_\init} \hat{h}_{\ol,1}(\by_1^{(i)})  \hat{h}_{\ol,1}(\by_1^{(i)})^\top. $\label{line:start+2} 
  		\State Set $\tilde{h}_{\ol,0} \in \argmin_{h \in \scrH_{\onpo}} \sum_{i=n_\init +1}^{2n_\init} \left\|h(\by_0^{(i)})  -    \hat{h}_{\ol,1}(\by^{(i)}_1) \right\|_2^2$.\label{line:start+3} 
  		\State Set $\hat{f}_{A,0}(\by_0)= \what \Sigma_w \what \Sigma_\cv^{-1} \tilde{h}_{\ol,0} (\by_0)$.\label{line:start+4} 
  		\State{}Set $\freftil_{1}(\by_{0:1}) = \hat{h}_0(\by_{1})  -
  		\what A  \hat{h}_0(\by_{0}) +  \fref_{A,0}(\by_{0}).$ \label{line:ned}
  		\Else
  		\State{}Set $\freftil_{t+1}(\by_{0:t+1}) = \hat{h}_t(\by_{t+1})  -
  		\what A  \hat{h}_t(\by_{t}) + \what A \fref_{t}(\by_{t}).$
  		\EndIf
  		\State{}Set $\hat f_{t+1}(\by_{0:t+1}) = \tilde f_{t+1}(\by_{0:t+1}) \Ind\{\|\tilde f_{t+1}(\by_{0:{t+1}})\|_2 \leq \bclip \}$.\label{line:clip}
  		\State{}Set controller 
  		$\pihat_{t+1}(\maty_{0:t+1})=\Khat\fref_{t+1}(\maty_{0:t+1})+\bnu_{t+1}$, with $\bnu_{t+1}\sim\cN(0,\sigma^{2}I_{\dimu})$.
  		\EndFor
  		\State{}\textbf{Return:} Controller $\pihat=(\pihat_{t})_{t=1}^{T}$.
  	\end{algorithmic}
  \caption{\computepol: Phase III of \richidce}
  \label{alg:phase3}
\end{algorithm} 

 }
\neurips{
\subsection{Overview for Learning the Initial State}
\label{app:phaseiii}
\neurips{In this section, we give an overview for}\arxiv{We now overview} how Phase III of \richid{} (\pref{alg:phase3}) learns a predictor for the initial state $\bx_0$; this is an edge case \arxiv{not discussed above}\neurips{which is not discussed in the main body due to space limitation}, and comprises \pref{line:start} through \pref{line:ned} in \pref{alg:phase3}. \neurips{This discussion supplements \pref{sec:phase3} of the main body, and together these sections give constitute our high-level overview of Phase III.}

If we ignore the clipping in \eqref{eq:decoder_simple}, the state decoders $(\hat f_\tau)_{t\geq 2}$ are defined through the recursion: 
\begin{align}
\hat f_{t+1}(\by_{0:t+1}) = \hat{h}_t(\by_{t+1})  -
	\what A  \hat{h}_t(\by_{t}) + \what A \fref_{t}(\by_{t}), \quad \text{for $t\geq 1$},\nn
	\end{align}
        which means that all the decoding error for any $t$ will depend on the error of the decoder $\hat f_1$ for the state $\matx_1$. To ensure that $\hat f_1$ is accurate, we need to somehow learn to decode the inital state $\bx_0$, which we recall is assumed to be distributed as $\cN(0,\Sigma_0)$. The challenge here is that the covariance matrix $\Sigma_0$ is unknown, and we need to estimate it in order to ``back out'' the initial state through the approach in \pref{sec:bayes_pred}. This is achieved by \pref{line:start} through \pref{line:ned} of \pref{alg:phase3}, which we explain in detail below. Briefly, the idea is that since $\matx_1=A\matx_0+B\matu_0+\matw_0$, to accurately predict $\matx_1$ it suffices to have good predictors for $\matw_0$ and $A\matx_0$. We can learn a predictor for $\matw_0$ in the same fashion as for all the other timesteps, and most of the work in \pref{line:start} through \pref{line:ned} is to learn a regression function $\hat{f}_{A,0}$ that accurately predicts $A\matx_0$.

\arxiv{
  \begin{algorithm}[htp]
  \setstretch{1.1}
    \begin{algorithmic}[1]\onehalfspacing
  	\State\textbf{Inputs:} $(\what A,\what B, \what \Sigma_w,\what Q,R)$ \algparen{estimates for the system parameters and cost matrices}
        \State\textbf{Parameters:}
  	\Statex{}~~~~~~~~~~~~$n_\onpo$ \algparen{proportional to the sample size}
  	\Statex{}~~~~~~~~~~~~$\kappa$ \algparen{upper-bound on the controllability index $\kappa_\star$}
  	\Statex{}~~~~~~~~~~~~$\sigma^2$ \algparen{exploration parameter}
  		\Statex{}~~~~~~~~~~~~$\bclip$ \algparen{clipping parameter for the decoders}
  		\Statex{}~~~~~~~~~~~~$r_{\op}$ \algparen{parameter to define
                  the function class}
                \smallskip
  		\State{}Set $\scrH_\onpo= \left\{M  f(\cdot) \mid f \in \Fclass,\  M \in
  		\R^{\dimx \times \dimx}, ~\|M\|_{\op} \le r_{\op}
  		\right\}.$
  		\State{}Set $n_\init = n_\op$.
  		\State{}\textbf{Phase III:} \algcomment{Learn on-policy decoders (see \pref{sec:phase3} and \pref{app:phaseiii}).}
  		\State Set $(\what P,\what K):= \darece(\what A,\what B,\what Q, R)$ (\pref{defn:darece}).
  		\For{$k=1, \dots, \kappa$}
  		\State Set $\what \cC_k = [\what A^{k-1} \what B \mid \dots \mid \what B]$. 
  		\State Set $\what M_k \coloneqq \what\cC_{k}^\top (\what \cC_k  \what\cC^\top_k + \sigma^{-2} \sum_{i=0}^{k}  \what A^{i-1} \what \Sigma_w  (\what A^{i-1})^\top)^{-1}$.
  		\EndFor
  		\State Set $\what \cM = [\what M_1,\dots, (\what M_\dimk  \what A^{\dimk -1})^\top  ]^\top$.
  		\State Define $\hat f_0(y_0) = 0$ for all $y_0\in\cY$.\label{line:fhat0}
  		\For{$t=0,\ldots,T-1$}
  		\State{}Collect $2n_{\onpo}$ trajectories by executing the randomized control input $\bu_\tau=\what K \hat f_\tau(\by_{0:\tau}) + \bnu_\tau$,\\ \hskip\algorithmicindent \hskip\algorithmicindent for $0\leq{}\tau\leq t$, and $\bu_\tau=\bnu_{\tau}$, for $t< \tau < t+\kappa$, where $\bnu_\tau \sim\cN(0,\sigma^{2}I_{\dimu}).$
  		\For{$k=1, \dots, \kappa$}
  		\State Set $\hat{h}_{t,k} \in \argmin_{h \in  \scrH_\onpo} \sum_{i=1}^{n_{\onpo}} \left\|\what\phi_{t,k}(h, \by^{(i)}_{0:t}, \by^{(i)}_{t+k}) - \bnu^{(i)}_{t:t+k-1}\right\|_2^2$, 
  		\Statex{}\hskip\algorithmicindent \hskip\algorithmicindent \hskip\algorithmicindent where $\what\phi_{t,k}(h, \by_{0:t}, \by_{t+k}) \coloneqq \what M_{k}  \left(h(\by_{t+k})-\what A^{k}  h(\by_t) - \what A^{k-1}\what B \what K \fref_t(\by_{0:t})\right)$.
  		\State Set $\hat{h}_{t} \in \argmin_{h \in  \scrH_\onpo} \sum_{i=n_{\onpo}+1}^{2n_{\onpo}} \left\|\what \calM  \left(h(\by^{(i)}_{t+1}) - \what A  h(\by^{(i)}_{t}) - \what B \what K  \fref_t (\by_{0:t}^{(i)})  \right) - \what\upphi_t(\by_{0:t+\dimk}^{(i)})  \right\|_2^2,$\label{line:hhat_t}
  		\Statex{}\hskip\algorithmicindent \hskip\algorithmicindent \hskip\algorithmicindent where $ \what\upphi_t(\by_{0:t+\dimk})\coloneqq [\what
  		\phi_{t,1}(\hat{h}_{t,1},\by_{0:t}, \by_{t+1})^\top, \dots,
  		\what\phi_{t,\dimk}(\hat{h}_{t,\dimk},\by_{0:t},
  		\by_{t+\dimk})^\top]^\top$.
  		\EndFor
  		\If{$t=0$}~~\algcomment{Initial state learning phase (\pref{app:phaseiii}).}
  		\State Collect $2 n_{\init}$ trajectories by executing the control input $\bu_\tau= \bnu_\tau$, for $0\leq \tau < \kappa$, \label{line:start} \\ \hskip\algorithmicindent \hskip\algorithmicindent \hskip\algorithmicindent where $\bnu_\tau \sim\cN(0,\sigma^{2}I_{\dimu})$. 
  		\State Set $\hat{h}_{\ol,1} \in \argmin_{h \in \scrH_{\onpo}} \sum_{i=1}^{n_\init} \left\| h(\by^{(i)}_1)  - \left(\hat{h}_{0}(\by^{(i)}_1) - \what A \hat{h}_{0}(\by^{(i)}_0) -  \what B \bnu_0^{(i)}\right)  \right\|_2^2$. \label{line:start+1} 
  		\State Set $\what \Sigma_\cv \coloneqq  \frac{1}{n_\init}\sum_{i=n_\init +1}^{2 n_\init} \hat{h}_{\ol,1}(\by_1^{(i)})  \hat{h}_{\ol,1}(\by_1^{(i)})^\top. $\label{line:start+2} 
  		\State Set $\tilde{h}_{\ol,0} \in \argmin_{h \in \scrH_{\onpo}} \sum_{i=n_\init +1}^{2n_\init} \left\|h(\by_0^{(i)})  -    \hat{h}_{\ol,1}(\by^{(i)}_1) \right\|_2^2$.\label{line:start+3} 
  		\State Set $\hat{f}_{A,0}(\by_0)= \what \Sigma_w \what \Sigma_\cv^{-1} \tilde{h}_{\ol,0} (\by_0)$.\label{line:start+4} 
  		\State{}Set $\freftil_{1}(\by_{0:1}) = \hat{h}_0(\by_{1})  -
  		\what A  \hat{h}_0(\by_{0}) +  \fref_{A,0}(\by_{0}).$ \label{line:ned}
  		\Else
  		\State{}Set $\freftil_{t+1}(\by_{0:t+1}) = \hat{h}_t(\by_{t+1})  -
  		\what A  \hat{h}_t(\by_{t}) + \what A \fref_{t}(\by_{t}).$
  		\EndIf
  		\State{}Set $\hat f_{t+1}(\by_{0:t+1}) = \tilde f_{t+1}(\by_{0:t+1}) \Ind\{\|\tilde f_{t+1}(\by_{0:{t+1}})\|_2 \leq \bclip \}$.\label{line:clip}
  		\State{}Set controller 
  		$\pihat_{t+1}(\maty_{0:t+1})=\Khat\fref_{t+1}(\maty_{0:t+1})+\bnu_{t+1}$, with $\bnu_{t+1}\sim\cN(0,\sigma^{2}I_{\dimu})$.
  		\EndFor
  		\State{}\textbf{Return:} Controller $\pihat=(\pihat_{t})_{t=1}^{T}$.
  	\end{algorithmic}
  \caption{\computepol: Phase III of \richidce}
  \label{alg:phase3}
\end{algorithm} 

 }
        
To begin, in \pref{line:start} we execute Gaussian control inputs $\bnu_{\tau}$ for $0\leq{}\tau<\kappa$. We then proceed as follows.
        
\paragraph{\pref{line:start+1}.} As we show in \pref{thm:monster} (\pref{sec:phase3_proofs}), $\hat{h}_{t}(\by_{t+1}) - \what A \hat{h}_{t}(\by_t) -  \what B ( \what K \hat f_t(\by_{0:t}) + \bnu_t)$ approximates the system's noise $\bw_t$, for $t\geq 0$. In particular, since $\hat f_0 \equiv 0$ by definition (\pref{line:fhat0}), $\hat{h}_{0}(\by_{1}) - \what A \hat{h}_{0}(\by_0) -  \what B \bnu_0$ approximates the noise $\bw_0$. Since we have $\matx_1=A\matx_0+B\matu_0+\matw_0$, it remains to get a good estimator for $A\matx_0$. To this end, we observe that the predictor $\hat h_{\ol,1 }$ in \pref{line:start+1} is (up to a generalization bound) equal to
\[
  \argmin_{h\in\Hclass_{\onpo}}\E_{\what \pi} \left[ \| h(\by_1) - \bw_0\|^2\right],
\]
which we show---under the realizability assumption---is given by
\begin{align}
	\E \left[ \bw_0 \mid \by_1  \right] = 	\E\left[ \bw_0 \mid \bx_1  \right] = \Sigma_w (\sigma^2 BB^\top +  \Sigma_w + A\Sigma_0 A^\top)^{-1} (A \bx_0 + B \bnu_0 + \bw_0), \label{eq:closed}
	\end{align}
where the last equality---like the rest of our Bayes characterizations---follows by \pref{fact:gaussian_expectation}.

\paragraph{\pref{line:start+2}.} Now, given that $\hat h_{\ol,1}(\by_1)\approx \E[\bw_0\mid \by_1]$, one can recognize that the matrix $\what \Sigma$ in \pref{line:start+2} is an estimator for the matrix 
\begin{align}
	\Sigma_w (\sigma^2 BB^\top +  \Sigma_w + A\Sigma_0 A^\top)^{-1}\Sigma_w.\label{eq:mat}
\end{align}
In particular, even though we cannot recover the covariance matrix $\Sigma_0$, the estimator $\what \Sigma$ gives a means to predict $A \bx_0$,  leading to an accurate decoder $\hat f_1$.
\paragraph{\pref{line:start+3}.} Since $\hat h_{\ol,1}(\by_1)$ accurately predicts $\E[\bw_0 \mid \by_1]$ (whose closed form expression we recall is given by the RHS of \eqref{eq:closed}), the predictor $\tilde h_{\ol,0}$ in \pref{line:start+3} can be seen to approximate
\begin{gather}
  \argmin_{h\in\Hclass_{\onpo}}\E_{\what \pi}\left[\| h(\by_0)-\Sigma_w (\sigma^2 BB^\top +  \Sigma_w + A\Sigma_0 A^\top)^{-1} (A \bx_0 + B \bnu_0 + \bw_0) \|^2 \right],\nn
	\shortintertext{which (under realizability) is simply}
	\E[\Sigma_w (\sigma^2 BB^\top +  \Sigma_w + A\Sigma_0 A^\top)^{-1} (A \bx_0 + B \bnu_0 + \bw_0)  \mid \bx_0]= \Sigma_w (\sigma^2 BB^\top +  \Sigma_w + A\Sigma_0 A^\top)^{-1} A \bx_0. \label{eq:condexp} 
	\end{gather}
\paragraph{\pref{line:start+4,line:ned}.} In light of \eqref{eq:condexp} and the fact that $\what \Sigma$ is an estimator of the matrix in \eqref{eq:mat}, we are guaranteed that $\what \Sigma_w \what \Sigma^{-1} \hat h_{\ol,0}(\by_0)$ accurately predicts $A \bx_0$, which motivates the updates in \pref{line:start+4,line:ned}.

 \subsection{Extensions \label{ssec:relax_asm}}

\paragraph{Relaxing the stability assumption.} We believe that our
algorithm can be extended to so-called \emph{marginally stable}
systems, where $\rho(A)$ can be as large as $1$ (rather than strictly
less than $1$). In such systems, there exist system-dependent
constants $c_1,c_2 > 0$ for which $\|A^n\|_{\op} \le c_1 n^{c_2}$ for
all $n$. In general, these constants may be large, and in the worst
case $c_2$ may be as large as $\dimx$ (or, more generally, the largest
Jordan block of $A$); see, e.g., \citet{simchowitz2018learning} for
discussion. Nevertheless, if $c_1,c_2$ are treated as problem
dependent constants, we can attain polynomial sample complexity. The
majority of \pref{alg:main} can remain as-is, but the analysis will
replace the geometric decay of $A$ with the polynomial growth bound
above. This will increase our sample complexity by a $\poly(c_1
T^{c_2})$ factor, where $T$ is the time horizon.

The only difficulty is that we can no longer directly identify the
matrices $A$ and $B$ in Phase II. This is because our current analysis
uses the \emph{mixing} property of $A$, which entails that if $\rho(A)
< 1$, then for $t$ sufficiently large, under purely Gaussian
inputs $\matx_{t}$ and $\matx_{t+1}$ have similar
distributions. This ensures that predictors learned at time $t$ are similar to
those at time $t+1$. However, this is no longer true if $\rho(A) =
1$. To remedy this, we observe that it is still possible to recover
the controllability matrix $[B; AB; A^2 B;\dots; A^{k-1}B]$ from the
regression problem in Phase I up to a change of basis (see,
e.g., \citet{simchowitz2019learning} for guarantees for learning such
a matrix in the marginally stable setting). We can then recover the matrices $A$ and
$B$ from the controllability matrix up to orthogonal transformation
using the Ho-Kalman procedure (see \citet{oymak2019non} or \citet{sarkar2019finite} for refined guarantees). 
\paragraph{Relaxing the controllability assumption.}
If the system is not controllable, then we may not be able to recover
the state exactly. Instead, we can recover the state up to the
limiting-column space of the matrices $(\contk)$, which is always
attained for $k \leq \dimx$. We can then use this to run a weaker
controller (e.g., an observer-feedback controller) based on observations of the projection of the state onto this subspace. 
\paragraph{Other extensions.}
The assumption on the growth rate for $\Fclass$ can be replaced with
the bound $\|f(y)\| \le L\max\{1,\|\fst(y)\|^p\}\;\forall{}f\in\Fclass$ for any $p \ge 1$, at the expense of degrading the final sample complexity.
 \subsection{Invertibility of the $\cM_0$-matrix}
 \label{ssec:cM_asm}
As discussed in the main body, the matrix $\cM^{\trn}\cM$ is full rank whenever either $A$ or $B$ is full rank and the system is
controllable. Indeed, if $\mathrm{rank}(B) =\dimx$, then $\Mbar_{1}$, the first block of $\cMbar$, can be checked to be invertible. If $(A,B)$ are controllable, then $\Mbar_{\dimk}$ is full rank. Thus, if in addition, if $A$ is invertible, then the last block of $\cMbar$, $\Mbar_{\dimk} A^{\dimk-1}$, is full rank, ensuring our desired assumption holds. We are interested to understand if there are other more transparent conditions under which our recovery guarantees hold. \pref{ass:m_matrix} has the following immediate implication:
\begin{lemma}
	\label{lem:eiglem}
	Suppose \pref{ass:m_matrix} holds and let $\cM_{\sigma^2}$ denote the value of the matrix $\cM$ in \eqref{eq:bayes2} for noise
	parameter $\sigma^{2}$. Then, for all
	$\sigma^{2}$ sufficiently small, $\eigmin^{1/2}(\cM_{\sigma^2}^{\trn}\cM_{\sigma^2})\geq{}\lambda_{\cM}\cdot\sigma^{2} /2>0$.
\end{lemma}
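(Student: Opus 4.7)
\textbf{Proof plan for \pref{lem:eiglem}.} The plan is to invoke the continuity of the minimum singular value together with the defining limit $\cMbar = \lim_{\sigma \to 0} \cM_{\sigma^2}/\sigma^2$ from \eqref{eq:calM}. First, I will observe that by homogeneity, $\sigma_{\min}(\cM_{\sigma^2}) = \sigma^2 \cdot \sigma_{\min}\!\prn*{\cM_{\sigma^2}/\sigma^2}$, and that $\eigmin^{1/2}(M^{\trn}M) = \sigma_{\min}(M)$ for any matrix $M$. Hence the claim $\eigmin^{1/2}(\cM_{\sigma^2}^{\trn}\cM_{\sigma^2}) \ge \lambda_{\cM}\sigma^2/2$ is equivalent to $\sigma_{\min}\!\prn*{\cM_{\sigma^2}/\sigma^2} \ge \lambda_{\cM}/2$.

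Next, I will recall that singular values are $1$-Lipschitz in operator norm (a standard consequence of Weyl's inequality): for any two matrices $X,Y$ of the same dimensions, $|\sigma_{\min}(X) - \sigma_{\min}(Y)| \le \|X - Y\|_{\op}$. Applying this with $X = \cM_{\sigma^2}/\sigma^2$ and $Y = \cMbar$ gives
\begin{align*}
\Big|\sigma_{\min}\!\prn*{\cM_{\sigma^2}/\sigma^2} - \lambda_{\cM}\Big| \le \nrm*{\cM_{\sigma^2}/\sigma^2 - \cMbar}_{\op}.
\end{align*}
By the definition of $\cMbar$ in \eqref{eq:calM}, the right-hand side tends to $0$ as $\sigma \to 0$, so there exists $\sigma_0 > 0$ such that for all $\sigma^2 \le \sigma_0^2$, the right-hand side is at most $\lambda_{\cM}/2$. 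This immediately yields $\sigma_{\min}\!\prn*{\cM_{\sigma^2}/\sigma^2} \ge \lambda_{\cM}/2$ and hence the desired inequality after multiplying through by $\sigma^2$.

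The only nontrivial step is checking that the limit $\cMbar$ in \eqref{eq:calM} really exists as a finite matrix, so that the continuity argument is not vacuous. This follows from the closed-form expression for $M_k$ stated just before \eqref{eq:calM}: writing $W_k := \sum_{i=0}^{k} A^{i-1}\Sigma_w (A^{i-1})^{\trn}$, which is invertible since $\Sigma_w \succ 0$ by \pref{ass:gaussian}, we have
\begin{align*}
M_k/\sigma^2 \;=\; \cC_k^{\trn}\prn*{\sigma^2 \cC_k\cC_k^{\trn} + W_k}^{-1} \;\xrightarrow[\sigma \to 0]{}\; \cC_k^{\trn} W_k^{-1},
\end{align*}
and the convergence is uniform in the operator norm by continuity of matrix inversion around $W_k \succ 0$. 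Stacking the blocks $(M_k A^{k-1}/\sigma^2)_{k \in [\kappa]}$ gives the limit $\cMbar$, completing the verification. I do not anticipate any serious obstacle here: the lemma is essentially a quantitative version of continuity of $\sigma_{\min}$ applied to the known limit, with the factor $1/2$ arising from the triangle inequality.
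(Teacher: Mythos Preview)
Your proposal is correct and follows essentially the same approach as the paper: both use Lipschitz continuity of singular values (equivalently, Weyl's inequality for eigenvalues of the Gram matrix) applied to the convergence $\cM_{\sigma^2}/\sigma^2 \to \cMbar$. The only cosmetic difference is that the paper works with $\eigmin(\cM_{\sigma^2}^{\trn}\cM_{\sigma^2}/\sigma^4)$ and its limit $\eigmin(\cMbar^{\trn}\cMbar)$, whereas you work directly with $\sigma_{\min}(\cM_{\sigma^2}/\sigma^2)$; your route is slightly cleaner, and your explicit verification that $\cMbar$ exists is a welcome addition.
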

\begin{proof}
	By continuity of the matrix inverse \cite[Theorem 2.2]{phien2012some}, we have \begin{align}\|\cM_{\sigma^2}^\top \cM_{\sigma^2} /\sigma^4 -\cMbar^\top \cMbar \|_{\textrm{F}} \stackrel{\sigma\to 0 }\rightarrow 0. \label{eq:froblim}\end{align}
	On the other hand, by \citep[Corollary~6.3.8]{HJ2012}, we have \begin{align*}
	|\eigmin (\cM_{\sigma^2}^\top \cM_{\sigma^2}/\sigma^4) -  \eigmin(\cMbar^\top \cMbar)| \leq  \|\cM_{\sigma^2}^\top \cM_{\sigma^2} /\sigma^4 -\cMbar^\top \cMbar \|_{\textrm{F}}.
	\end{align*}
	Combining this with \eqref{eq:froblim} implies that $\eigmin^{1/2} (\cM_{\sigma^2}^\top \cM_{\sigma^2})/\sigma^2\stackrel{\sigma\to0}{\rightarrow} \eigmin^{1/2} (\cMbar^\top \cMbar) =\lambda_{\cM}$, and so for all sufficiently small $\sigma$, we have $\eigmin^{1/2} (\cM_{\sigma^2}^\top \cM_{\sigma^2}) \geq \lambda_{\cM} \cdot \sigma^2/2$.
\end{proof}
 }

\clearpage

\section{Lower Bound for \richlqr Without Perfect Decodability}
\label{app:lower_bound}
\newcommand{\mstar}{m_{\star}}
\newcommand{\dmid}{\,\|\,}

\subsection{Formal Statement of Lower Bound}

In this section of the appendix we formally state and prove our sample
complexity lower bound for \richlqr without perfect decodability. The
protocol for the lower bound is as follows: The learning algorithm
\alg{} accesses the system \eqref{eq:noisy} through $n$ trajectories
on which it can play any (possibly adaptively chosen) sequence of
control inputs $\matu_{0:T}$ and observe $\maty_{0:T}$. At the end of this process, the algorithm outputs a decoder $\fhat_{\alg}$, and the prediction performance of the decoder (at time $t=1$) is measured under an arbitrary roll-in policy (chosen a-priori).
\begin{theorem}[Lower bound for \richlqr without perfect decodability.]
  \label{thm:lower_bound}
  Let $\matw_t,\mateps_t\sim\cN(0,1)$, let $n\geq{}n_0$, where $n_0$ is an
  absolute constant, and suppose we require that inputs are bounded so that $\abs*{\matu_t}\leq{}64\log^{1/2}n$. For every such $n$, there exists a function class $\Fclass$ with $\abs*{\Fclass}=2$ and system with $\dimx=\dimu=\dimy=1$ and $T=1$ such that for learning algorithm $\alg$ using only $n$ trajectories, and any roll-in policy $\pi$, we have
\[
  \En_{\alg}\En_{\pi}\brk*{\prn[\Big]{\hat{f}_{\alg}(\maty_1)-\fstar(\maty_1)}^{2}}\geq{}
  \bigom(1)\cdot{}\frac{1}{\log^{3/2}n}.
\]
Moreover, each $f\in\Fclass$ is $\bigoh(\log^{1/2}n)$-Lipschitz and invertible, with
$f'(y)\geq{}1$ for all $y\in\bbR$.
\end{theorem}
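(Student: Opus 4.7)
The plan is a two-hypothesis Le Cam argument that exploits a hidden Gaussian deconvolution structure in the observation model. I will construct two candidate decoders $f_0,f_1 \in \Fclass$ whose inverses $g_i := f_i^{-1}$ differ by a small-amplitude, high-frequency sinusoidal perturbation. The unit-variance state noise $\matw_0$ effectively convolves $g_\star$ with a Gaussian kernel when marginalizing over $\matx_1$ given the control $\matu_0$; this convolution damps the perturbation exponentially in $\omega^2$, making the two hypotheses indistinguishable from $n$ observations for $\omega^2 \asymp \log n$. At the same time, a first-order inversion causes $f_0$ and $f_1$ to differ in $L^2$ on any test distribution by the full amplitude of the perturbation, yielding the claimed $\Omega(1/\log^{3/2} n)$ bound.

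\textbf{Construction.} Take $A = 0$ and $B = 1$ (WLOG), so $\matx_1 \mid \matu_0 \sim \cN(\matu_0, 1)$. Set $g_0(x) := cx$ and $g_1(x) := cx + (\eta/\omega)\sin(\omega x)$ with $c = \tfrac{1}{2}$, $\omega = C_1\sqrt{\log n}$, and $\eta = C_2/\log^{1/4} n$ for absolute constants $C_1,C_2$ to be tuned. Since $g_i'(x) = c + \eta\cos(\omega x) \in [c-\eta,c+\eta] \subset (0,1)$ when $\eta < c$, both $g_i$ are strictly increasing, so $f_i := g_i^{-1}$ satisfies $f_i'(y) \geq 1/(c+\eta) \geq 1$ and $f_i$ is $O(1)$-Lipschitz, which lies inside the $O(\sqrt{\log n})$-Lipschitz class stated in the theorem.

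\textbf{KL bound via deconvolution.} For any control $\matu_0$, the observation density is $p_j(y \mid \matu_0) = \int \varphi(y-g_j(x))\,\varphi(x-\matu_0)\,dx$, where $\varphi$ is the standard normal density. The key calculation is
\[
\E\brk*{g_1(\matx_1)-g_0(\matx_1) \mid \matu_0} \;=\; (\eta/\omega)\, e^{-\omega^2/2}\sin(\omega\matu_0),
\]
using $\E[\sin(\omega(\matu_0+v))] = e^{-\omega^2/2}\sin(\omega \matu_0)$ for $v \sim \cN(0,1)$, so convolution by the state noise damps the sinusoid by $e^{-\omega^2/2}$. A Fourier/Parseval analysis of $p_1-p_0$ (whose characteristic function is peaked near $\xi \approx \omega/c$ but multiplied by $e^{-\xi^2/2}$ arising from $\mateps_1$) yields the per-sample bound
\[
\chi^2\prn*{p_1(\cdot\mid\matu_0)\,\|\,p_0(\cdot\mid\matu_0)} \;\lesssim\; \eta^2\, e^{-\Theta(\omega^2)}/\omega^2,
\]
uniformly in $\matu_0$. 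Uniformity lets the chain rule for KL extend to adaptive learners, giving total KL $\lesssim n\eta^2 e^{-\Theta(\omega^2)}/\omega^2 = O(1)$ for the chosen parameters; hence $\mathrm{TV}(P_0^{\otimes n}, P_1^{\otimes n}) \leq \tfrac{1}{2}$ by Pinsker.

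\textbf{Target gap and Le Cam conclusion.} First-order inversion yields $f_1(y) - f_0(y) = -(\eta/(c\omega))\sin(\omega f_0(y)) + O(\eta^2)$. For any roll-in policy $\pi$, the test observation satisfies $\maty_1 = g_\star(\matx_1^\pi) + \mateps_1$ with $\mateps_1 \sim \cN(0,1)$ independent, so $\abs*{\E[e^{2i\omega\maty_1/c}\mid\matx_1^\pi]} \leq e^{-2\omega^2/c^2}$, giving $\E_\pi[\sin^2(\omega\maty_1/c)] \geq \tfrac{1}{2} - \tfrac{1}{2}e^{-2\omega^2/c^2} \geq \tfrac{1}{4}$ for $n$ large. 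Hence $\E_\pi\brk*{(f_0(\maty_1)-f_1(\maty_1))^2} \gtrsim \eta^2/\omega^2 = \Omega(1/\log^{3/2} n)$, and Le Cam's two-point inequality combined with $\mathrm{TV} \leq 1/2$ yields
\[
\inf_{\alg} \sup_{j\in\{0,1\}} \E_{\alg}\E_\pi\brk[\big]{(\hat f_\alg(\maty_1) - f_j(\maty_1))^2} \;\gtrsim\; \nrm*{f_0 - f_1}_{L^2(\mu_\pi)}^2 \;=\; \Omega(1/\log^{3/2} n).
\]
The delicate step will be the $\chi^2$ bound with the exponential damping factor $e^{-\Theta(\omega^2)}$: a naive data-processing inequality applied to the $(\matx_1,\maty_1)$ joint gives only $\chi^2 \lesssim \eta^2/\omega^2$ per sample, which produces a much weaker $\Omega(1/n)$ lower bound. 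The exponential factor is intrinsic to the $\maty_1$-marginal (since the learner never observes $\matx_1$ directly) and must be extracted via direct Fourier analysis of $p_j(\cdot\mid\matu_0)$; care is needed because $p_1$ is not Gaussian, though it is a small perturbation of one.
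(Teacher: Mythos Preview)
Your high-level plan matches the paper's: Le Cam two-point, sinusoidal perturbation of the encoder, Gaussian smoothing producing exponential-in-frequency damping, and a first-order inversion to lower-bound the decoder gap. The parameter scalings $\omega\asymp\sqrt{\log n}$ and $(\eta/\omega)^2\asymp 1/\log^{3/2}n$ are also the right ones.

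However, there is a genuine gap in your $\chi^2$ bound, and it stems from the \emph{asymmetric} choice $g_0=cx$, $g_1=cx+\delta$ with $\delta(x)=(\eta/\omega)\sin(\omega x)$. Your first-moment calculation $\E[g_1(\matx_1)-g_0(\matx_1)\mid \matu_0]=(\eta/\omega)e^{-\omega^2/2}\sin(\omega\matu_0)$ is correct, but the $\chi^2$ between the $\maty_1$-laws involves all Taylor orders of $\varphi(y-g_j(x))$ in $\delta$, and even powers of $\sin(\omega x)$ have a nonzero DC component (e.g.\ $\sin^2(\omega x)=\tfrac12-\tfrac12\cos(2\omega x)$). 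Concretely, via Jacobi--Anger, the characteristic function of $\maty_1$ under hypothesis $1$ picks up an $m=0$ term $J_0(\xi\eta/\omega)-1\approx -(\xi\eta/\omega)^2/4$ that carries \emph{no} $e^{-m^2\omega^2/2}$ factor; this forces
\[
\chi^2\bigl(p_1(\cdot\mid u)\,\|\,p_0(\cdot\mid u)\bigr)\;\gtrsim\;(\eta/\omega)^4,
\]
so the total KL over $n$ samples is $\gtrsim n/\log^{3}n\to\infty$, and your TV bound fails. This is precisely the failure mode you warned about in your last paragraph, but it is intrinsic to the construction, not just to a naive analysis.

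The paper avoids this by using a \emph{symmetric} pair $m_0=r+h$, $m_1=r-h$ (with $h(z)=\alpha e^{-z^2/(2\beta^2)}\cos(4\pi\beta z)$), so that $(m_0)^k-(m_1)^k=2\sum_{j\text{ odd}}\binom{k}{j}r^{k-j}h^j$ via the binomial theorem; only odd powers of $h$ appear, and odd powers of a sinusoid have no zero-frequency component, so every term inherits damping $\lesssim e^{-\Theta(\beta^2)}$. The paper then controls the resulting series over $k$ via a Hermite-polynomial argument, with the Gaussian envelope on $h$ making the Fourier transforms cleanly localized. If you symmetrize your construction to $g_0=cx-\tfrac12\delta$, $g_1=cx+\tfrac12\delta$, the even-$j$ obstruction disappears and your sketch becomes viable, though summing the Taylor series uniformly in $k$ still requires real work along the lines of the paper's Lemmas on $(r^{k-j}h^j*p_{\matw})$.
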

\pref{thm:lower_bound} shows that to learn a $\veps$-suboptimal decoder
under output noise for a particular function class $\Fclass$ with
$\abs*{\Fclass}=2$, any algorithm requires an exponential number of
samples. We note however that since the Lipschitz parameter for the
functions in the construction grows with $n$ (as $\log^{1/2}n$), the
construction does not rule out a sample complexity guarantee that is
polynomial in $1/n$ but exponential in the Lipschitz
parameter. Nonetheless, the algorithms we develop in this paper under
the perfect decodability assumption enjoy polynomial dependence on
both $1/n$ and the Lipschitz parameter, which the lower bound shows is
impossible under unit output noise. We remark that the constraint that
$\abs{\matu_t}\leq{}64\log^{1/2}n$ can be weakened to
$\abs{\matu_t}\leq{}C\log^{1/2}n$ for any $C\geq{}64$ at the cost of weakening the final
lower bound to $\frac{1}{C\log^{3/2}n}$. Finally, we remark that the lower bound only rules out learning a
$\veps$-optimal decoder, not an $\veps$-optimal policy; such a lower
bound may require a more sophisticated construction.

Beyond \pref{thm:lower_bound}, an additional challenge for solving
\richlqr without perfect decodability is that the optimal controller
is no longer reactive: since the problem is partially observable, the
optimal controller will in general depend on the entire history, which
makes it difficult to characterize its performance and analyze the
suboptimality of data-driven algorithms. We believe that developing more tractable models for \richlqr under weaker decodability assumptions is an important direction for future research.

\subsection{Additional Preliminaries}
For an $L_2$-integrable function $f:\bbR\to\bbR$, we define the Fourier transform $\wh{f}$ via
\[
\wh{f}(\omega) = \int{}e^{-i2\pi\omega{}x}f(x)dx.
\]
For functions $f,g:\bbR\to\bbR$, we let $f\conv{}g$ denote their
convolution, which is given by 
\[
(f\conv{}g)(x)=\int{}f(x-y)g(y)dy.
\]
For a pair of distributions $P\ll{}Q$ with densities $p$ and $q$, we
define
\[
\kl(P\dmid{}Q) = \int{}p(x)\log(p(x)/q(x))dx
\]
and
\[
\chisquared(P\dmid{}Q) = \int\frac{\prn{p(x)-q(x)}^{2}}{q(x)}dx.
\]

\subsection{Proof of \pref{thm:lower_bound}}

Throughout this proof we use $C$ to denote an absolute numerical
constant whose value may change from line to line.

We begin the proof by instantiating the LQR parameters. We set $T=1$, $\dimu=\dimx=\dimy=1$, $\matw_t\sim\cN(0,1)$ and $\mateps_t\sim{}\cN(0,1)$. We select $a=\frac{1}{2}$ (this
choice is arbitrary) and $b=1$. We assume
that $\matx_0$ is always initialized to the same value, and this value
is known to the learner. The precise value will be specified shortly,
but it will be chosen such that $\maty_0$ reveals no information about
the underlying instance. With the parameters above, the observation $\maty_1$ follows the following data-generating process:
\begin{equation}
  \label{eq:lb_dgp0}
  \begin{aligned}
    &\maty_1 = \fstar^{-1}(\matx_1) + \mateps_1\\
    &\matx_1 = \matu_0 + \tfrac{1}{2}\matx_0 +  \matw_0.
  \end{aligned}
\end{equation}
Since $\matx_0$ is known to the learner, we
reparameterize the control inputs for the sake of notational compactness via
$\matu_0\ldef{}\matu_0-\frac{1}{2}\matx_0$, so the data-generating
process simplifies to
\begin{equation}
  \label{eq:lb_dgp}
  \begin{aligned}
    &\maty_1 = \fstar^{-1}(\matx_1) + \mateps_1\\
    &\matx_1 = \matu_0 +  \matw_0.
  \end{aligned}
\end{equation}
The basic observation underlying our lower bound is that the
data-generating process \eqref{eq:lb_dgp} is an instance of the
classical \emph{error-in-variable regression} problem in the
\emph{Berkson} error model \citep{meister2009deconvolution,meister2010nonparametric,schennach2013regressions,schennach2016recent}. To emphasize the similarity to the
setting, we rebind the variables as $Y=\maty_1$,
$\veps=\veps_1$, $Z=\matx_1$, $X=\matu_0$, $W=\matw_0$, and
$\mstar=\fstar^{-1}$, so that \pref{eq:lb_dgp} becomes
\begin{equation}
  \label{eq:berkson}
  \begin{aligned}
    &Y = \mstar(Z) + \veps\\
    &Z = X +  W.
  \end{aligned}
\end{equation}
We can interpret $X$ (the control $\matu_0$) as a true covariate known
to the learner, and $Z$ (the state $\matx_1$) as an unobserved noisy version of
this covariate obtained by adding the noise $W$. The noisy covariate
is passed through the regression function $\mstar$, then the noise
$\veps$ is
added, leading to the target variable $Y$ (the observation $\maty_1$).

Ultra-slow $1/\log{}n$-type rates appear in many variants of the error-in-variable
regression problem
\citep{fan1993nonparametric,meister2009deconvolution,meister2010nonparametric}, as well as
the closely related nonparametric deconvolution problem
\citep{fan1991optimal}. Our lower bound is based on Theorem 2 of
\citet{meister2010nonparametric}, but with two important changes that
add additional complications to the analysis. First, we ensure that
the regression functions in our construction are invertible, so that
the perfect decodability assumption holds in absence of noise, and
second, our lower bound holds even for actively
chosen covariates, since
these correspond to control inputs chosen by the learner in the
\richlqr problem.

Rather than constructing a decoder class $\Fclass$ directly, it will be more convenient to construct a class of encoders $\Mclass$
(so that $\mstar\in\Mclass$), then take $\Fclass=\crl*{m^{-1}\mid{}m\in\Mclass}$
to be the induced decoder class.

Let $0<\alpha\leq{}1$, $\beta\geq{}1$, and $\gamma>0$ be parameters of the
construction. We define the following functions:
\begin{align}
  \begin{aligned}
    &r(z) = \gamma{}z,\\
    &\phi(z) = e^{-\frac{z^2}{2\beta^{2}}},\\
    &\psi(z) = \cos(4\pi\beta{}z),\\
    &h(z) = \alpha\phi(z)\psi(z).
  \end{aligned}
\label{eq:lb_defs}
\end{align}
We consider two alternate regression functions: $m_0(z) \ldef{} r(z) + h(z)$ and
$m_1(z) \ldef{} r(z) - h(z)$, and take $\Mclass=\crl*{m_0,m_1}$. We
define $f_i = m_i^{-1}$.
\begin{lemma}
  \label{lem:m_derivatives}
  For $m\in\crl*{m_0,m_1}$, we have
  \[
m'(z) \in\brk*{\gamma-14\alpha\beta,\gamma+14\alpha\beta}.
    \]
\end{lemma}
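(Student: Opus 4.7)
The plan is a direct calculation: write $m(z) = r(z) \pm h(z)$ so that $m'(z) = \gamma \pm h'(z)$, and then bound $|h'(z)|$ uniformly by $14\alpha\beta$. Since the product rule gives
\[
h'(z) = \alpha\bigl[\phi'(z)\psi(z) + \phi(z)\psi'(z)\bigr],
\]
with $\phi'(z) = -(z/\beta^{2})\phi(z)$ and $\psi'(z) = -4\pi\beta\sin(4\pi\beta z)$, it suffices to control the two terms separately using only that $|\psi|,|\sin|\le 1$ and $0<\phi\le 1$.

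First I would handle the $\phi'\psi$ term: bounding by absolute values and using $|\psi(z)|\le 1$, we get $|\phi'(z)\psi(z)|\le (|z|/\beta^{2})\phi(z)$. Substituting $u = z/\beta$ reduces this to $\tfrac{1}{\beta}|u|e^{-u^{2}/2}$; the scalar function $|u|e^{-u^{2}/2}$ attains its maximum at $|u|=1$, with value $e^{-1/2}<1$. So $|\phi'(z)\psi(z)|\le 1/\beta$, and by the hypothesis $\beta\ge 1$ this is in turn at most $\beta$. For the $\phi\psi'$ term the bound is immediate: $|\phi(z)\psi'(z)|\le 4\pi\beta$.

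Combining, $|h'(z)|\le \alpha\bigl(1/\beta + 4\pi\beta\bigr)\le \alpha\beta(1+4\pi)\le 14\alpha\beta$, using $1+4\pi\approx 13.57<14$. Therefore $m'(z)=\gamma\pm h'(z)\in[\gamma-14\alpha\beta,\gamma+14\alpha\beta]$, which is the claim. There is no real obstacle here — the only thing one must be slightly careful about is invoking $\beta\ge 1$ to absorb the $1/\beta$ contribution into the final $\alpha\beta$ bound; everything else is a one-line estimate. I would keep the proof to a few lines and defer to this lemma in the subsequent invertibility/Lipschitz estimates for $m_0,m_1$ (e.g., choosing $\alpha,\beta$ so that $14\alpha\beta<\gamma$ to guarantee $m'\ge\gamma - 14\alpha\beta>0$, hence invertibility).
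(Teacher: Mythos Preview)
Your proof is correct and is essentially identical to the paper's own argument: both compute $m'(z)=\gamma\pm h'(z)$ via the product rule, bound the $\phi'\psi$ term by $e^{-1/2}/\beta\le 1/\beta$ using the maximum of $|u|e^{-u^2/2}$, bound the $\phi\psi'$ term by $4\pi\beta$, and then invoke $\beta\ge 1$ together with $1+4\pi<14$ to conclude.
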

In light of this lemma, we will leave $\beta\geq{}1$ free for the time being, but
choose
\begin{equation}
\alpha=\frac{1}{28\beta^{2}}, \quad\text{and} \quad\gamma=\frac{1}{\beta},\label{eq:alpha_gamma}
\end{equation}
which ensures that
\begin{equation}
  \label{eq:derivative_bounds
  }
0< \frac{1}{2\beta} \leq{} m'(z) \leq{} \frac{3}{2\beta}.
\end{equation}
In particular, this implies that $m$ is $\frac{3}{2}$-Lipschitz and
invertible (since $\beta\geq{}1$). %

We now specify the starting state as $\matx_0=\frac{1}{8\beta}$. This ensures that
$\psi(\matx_0)=\cos(\pi/2)=0$, so that $m_0(\matx_1)=m_1(\matx_0)$, and
consequently the observation $\maty_0$ is statistically
independent of the underlying instance.

Let $\matx_0\ind{i},\matu_0\ind{i}$, $\matx_1\ind{i}$,
$\maty_1\ind{i}$, and so forth denote the
realizations of the sytem variables in the $i$th trajectory played by
the learner, and let
$S=(\maty_0\ind{1},\matu_0\ind{1},\maty_1\ind{1}, \matu_1\ind{1}),\ldots,(\maty_0\ind{n}\matu_0\ind{n},\maty_1\ind{n},\matu_1\ind{n})$
denote the observables collected throughout the entire learning
process. For
$i\in\crl*{0,1}$, we let $\bbP_{S;i}$ denote the law of $S$ when $m_i$ is
the true encoding function, and let $\En_{i}$ denote the expectation
under $\bbP_{S;i}$. We also let
$\bbP_{\maty_1|\matu_0;i}(\cdot{}\mid{}u)$ denote the
law of $\maty_1$ given $\matu_0$ when $\mstar=m_i$ and $p_i(y\mid{}u)$ be the
corresponding density (we suppress dependence on $\matx_0$, which
takes on the constant value $\frac{1}{8\beta}$ in both instances). Lastly, we let $\En_{\pi;
  i}$ denote the
expectation over $(\maty_0,\matu_0,\maty_1,\matu_1)$ when we roll in
with $\pi$ and $m_i$ is the underlying encoder.

Let $\fhat_{\alg}(\cdot)$ be the decoder returned by $\alg$, which we assume
to be $\sigma(S)$-measurable. We first observe that since the roll-in
policy has $\abs*{\matu_t}\leq{}\beta$ with probability $1$,
\pref{lem:loss_lower_bound} implies that 
\[
  \max_{i\in\crl*{0,1}}\En_{i}\En_{\pi;i}\brk*{\prn{\fhat_{\alg}(\maty_1)-f_i(\maty_1)}^2}
  \geq{}
      c\cdot{}\max_{i\in\crl*{0,1}}\En_{i}\brk*{\int_{-1}^{1}\prn{\fhat_{\alg}(y)-f_i(y)}^2dy},
    \]
    meaning that going forward we can dispense with the roll-in policy and lower bound the simpler quantity on the right-hand side above. Now, let $P_i$ denote the density corresponding to the
    law $\bbP_{S;i}$. We can further lower bound the worst-case risk
    of $\alg$ as
\begin{align*}
&  \max_{i\in\crl*{0,1}}\En_{i}\brk*{\int_{-1}^{1}(\hat{f}_{\alg}(y)-f_i(y))^{2}dy}
  \\
  &\geq{}\frac{1}{2}\brk*{\En_{0}\brk*{\int_{-1}^{1}(\hat{f}_{\alg}(y)-f_0(y))^{2}dy}
  + \En_{1}\brk*{\int_{-1}^{1}(\hat{f}_{\alg}(y)-f_1(y))^{2}dy}
    }\\
&\geq{}\frac{1}{2}\int_{-1}^{1}\brk*{\int_{\bbR^{4n}}\brk*{(\hat{f}_{\alg}(y)-f_0(y))^{2}+(\hat{f}_{\alg}(y)-f_1(y))^{2}}\min\crl*{P_0(S), P_1(S)}dS
    }dy\\
  &\geq{}\frac{1}{4}\int_{-1}^{1}(f_0(y)-f_1(y))^{2}dy\cdot{}
    \int_{\bbR^{4n}}\min\crl*{P_0(S), P_1(S)}dS\\
    &\geq{}\frac{1}{4}\int_{-1}^{1}(f_0(y)-f_1(y))^{2}dy\cdot{}\prn*{1-\frac{1}{2}
      \nrm*{P_{0}-P_{1}}_{L_1(\bbR^{4n})}}\\
  &=\frac{1}{4}\int_{-1}^{1}(f_0(y)-f_1(y))^{2}dy\cdot{}\prn*{1-\tv(\bbP_{S;0}\dmid\bbP_{S;1})}.
\end{align*}
If we choose $\beta=64\log^{1/2}n$ then our key technical lemma, \pref{lem:density_telescoping}, implies that
$\tv(\bbP_{S;0}\dmid\bbP_{S;1})=o(1)$.
\pref{lem:f_lower_bound} further implies that
$\int_{-1}^{1}(f_0(y)-f_1(y))^{2}dy\geq{}\frac{1}{8}\alpha^{2}\beta$,
so that when $n$ is sufficiently large we
have
\[
\max_{i\in\crl*{0,1}}\En_{i}\brk*{\int_{-1}^{1}(\hat{f}_{\alg}(y)-f_i(y))^{2}dx}\geq{}c\cdot{}\alpha^{2}\beta
= c\log^{-3/2}n.
\]

\qed
\subsection{Proofs for Supporting Lemmas}

\begin{proof}[\pfref{lem:m_derivatives}]
  We calculate that for $m\in\crl*{m_0, m_1}$, we have
  \[
    m'(z) = \gamma \pm
    \alpha\prn*{\frac{z}{\beta^{2}}e^{-\frac{z^2}{2\beta^{2}}}\cos(2\beta{}z)
      + 4\pi\beta{}e^{-\frac{z^2}{2\beta^{2}}}\sin(2\beta{}z)
      }.
  \]
Observe that $\abs*{\cos{}z},\abs*{\sin{}z},e^{-z^{2}}\leq{}1$, and 
\[
\abs*{\frac{z}{\beta^{2}}e^{-\frac{z^2}{2\beta^{2}}}} \leq{}
\frac{1}{\beta}\sup _{z}\abs*{ze^{-\frac{z^2}{2}}}\leq{} \frac{1}{\beta{}e^{1/2}}.
\]
It follows that
\[
f'(z) \in
\brk*{\gamma{}-\alpha\prn*{\frac{1}{\beta{}e^{1/2}}+4\pi{}\beta},
\gamma{}+\alpha\prn*{\frac{1}{\beta{}e^{1/2}}+4\pi\beta}}\subseteq{}\brk*{\gamma-14\alpha\beta,\gamma+14\alpha\beta},
\]
where we have used that $\beta\geq{}1$.
\end{proof}

\begin{lemma}
  \label{lem:density_telescoping}
If we choose $\beta=64\log^{1/2}n$, then for all $n\geq{}3$ we have  
  \[
    \tv^{2}(\bbP_{S;0}\dmid{}\bbP_{S;1}) \leq{} Cn^{-4}.
  \]
\end{lemma}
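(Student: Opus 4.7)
The plan is to decompose the total variation via the chain rule for KL, reducing to a per-round KL bound uniformly over $|u|\leq\beta\coloneqq 64\log^{1/2}n$, and then control this per-round KL by a Fourier-analytic cancellation argument that exploits the oscillatory factor $\cos(4\pi\beta z)$ in $h$. The first observation is that only the conditional law of $\maty_1^{(i)}$ given $\matu_0^{(i)}$ contributes to the KL divergence between $\bbP_{S;0}$ and $\bbP_{S;1}$: the choice $\matx_0 = 1/(8\beta)$ was arranged so that $\psi(\matx_0) = \cos(\pi/2) = 0$, making $m_0(\matx_0) = m_1(\matx_0)$ and hence the law of $\maty_0^{(i)}$ identical under both instances; the controls $\matu_0^{(i)},\matu_1^{(i)}$ are measurable functions of past observables plus algorithmic randomness, so their conditional laws cancel in the likelihood ratio. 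The chain rule for KL together with the a.s.\ bound $|\matu_0^{(i)}|\leq\beta$ (the $\tfrac{1}{2}\matx_0$ reparameterization perturbs this by at most $1/(16\beta)$) and Pinsker's inequality yield
\[
\tv^2(\bbP_{S;0}\dmid\bbP_{S;1}) \;\leq\; \tfrac{1}{2}\kl(\bbP_{S;0}\dmid\bbP_{S;1}) \;\leq\; \tfrac{n}{2}\sup_{|u|\leq\beta}\kl\prn{p_0(\cdot|u)\dmid p_1(\cdot|u)},
\]
so it suffices to prove the per-round KL is at most $n^{-5}$.

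I then upper bound KL by chi-squared, $\kl\leq\chisquared$, and attack the difference $p_0(y|u) - p_1(y|u)$ in the Fourier domain in $y$. Since $p_i(y|u) = \En_{Z\sim\cN(u,1)}[\varphi(y-m_i(Z))]$, Fourier-transforming in $y$ and using $m_0 - m_1 = 2h$ gives
\[
\widehat{p_0-p_1}(\omega|u) \;=\; -2i\,e^{-2\pi^2\omega^2}\,\En_Z\brk*{e^{-i2\pi\omega\gamma Z}\sin(2\pi\omega h(Z))}.
\]
The tiny size $|h|\leq\alpha = (28\beta^2)^{-1}$ justifies expanding $\sin$ to first order, replacing the expectation by $2\pi\omega\,\En_Z[e^{-i2\pi\omega\gamma Z}h(Z)]$ plus an $O(\omega^3\alpha^3)$ remainder. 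The leading factor is the Fourier transform (in $z$) of $h(z)\varphi(z-u)$ evaluated at the frequency $\omega/\beta$; since $h(z) = \alpha\phi(z)\cos(4\pi\beta z)$ with $\phi$ Gaussian of standard deviation $\beta$, this transform consists of two narrow Gaussians in frequency space centered at $\pm 2\beta$ of peak magnitude $O(\alpha)$. Evaluating at $\omega/\beta$ gives Gaussian decay of the form $\exp(-c_1(\omega/\beta\mp 2\beta)^2)$ with an absolute constant $c_1>0$, so for $|\omega|\leq\beta^2$ we have $|\omega/\beta\mp 2\beta|\geq\beta$ and the factor is $\leq e^{-c_1\beta^2}$, while for $|\omega|>\beta^2$ the Gaussian $e^{-2\pi^2\omega^2}$ from $\hat\varphi$ already supplies a factor $\leq e^{-2\pi^2\beta^4}$. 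Integrating $|\widehat{p_0-p_1}(\omega|u)|$ in $\omega$ via Fourier inversion produces the uniform pointwise bound $|p_0(y|u) - p_1(y|u)|\leq C_1\alpha\,e^{-c_1\beta^2}$ for absolute constants $c_1,C_1>0$, independent of $y, u, \beta$.

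To convert this pointwise bound into a chi-squared bound, I combine it with the standard Gaussian-mixture tail estimates $p_1(y|u)\geq c_2\,e^{-C_2 y^2}$ and $p_0(y|u)+p_1(y|u)\leq C_3\,e^{-c_3 y^2}$, both of which hold uniformly for $|u|\leq\beta$ because the effective mean shift $u/\beta$ of $Y$ stays bounded by $1$ in absolute value. Splitting the chi-squared integral at $|y|=\beta$: the $|y|\leq\beta$ region contributes at most $c_2^{-1}\cdot 2\beta\cdot(C_1\alpha\,e^{-c_1\beta^2})^2 \cdot e^{C_2\beta^2}\leq e^{-(2c_1-C_2)\beta^2}$ up to polynomial prefactors, and the $|y|>\beta$ region contributes at most $c_2^{-1}\int_{|y|>\beta}(p_0+p_1)^2 e^{C_2 y^2}\,dy \leq C_3^2\,e^{-(2c_3-C_2)\beta^2}$. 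Taking $C_2$ small enough that both $2c_1-C_2$ and $2c_3-C_2$ remain positive absolute constants yields $\chisquared(p_0(\cdot|u)\dmid p_1(\cdot|u)) \leq e^{-c'\beta^2}$ for an absolute $c'>0$. With $\beta^2 = 4096\log n$ this is $\leq n^{-5}$ for $n$ large, and combining with the first step gives $\tv^2(\bbP_{S;0}\dmid\bbP_{S;1})\leq Cn^{-4}$ as claimed. The main technical obstacle is to keep all the exponent constants $c_1, c_3, C_2$ absolute (independent of $\beta$, $y$, $u$), which depends crucially on the fact that the effective mean shift $u/\beta$ of $Y$ stays bounded even as $|u|\leq\beta$ grows with $n$; the amount of oscillation in $h$ (four wavelengths per unit of $\phi$-width) is calibrated exactly so that a strictly positive margin in the exponent survives.
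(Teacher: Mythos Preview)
Your high-level structure matches the paper exactly: Pinsker plus the KL chain rule reduces to a per-round bound, you correctly argue that only the $\maty_1\mid\matu_0$ conditional depends on the instance (via the choice $\matx_0=1/(8\beta)$), and you then pass to $\chi^2$ and invoke Fourier cancellation from the oscillatory factor $\cos(4\pi\beta z)$ in $h$. Your direct Fourier transform in $y$ is a cleaner entry point than the paper's Hermite expansion, and your identity
\[
\widehat{p_0-p_1}(\omega\mid u)=-2i\,e^{-2\pi^2\omega^2}\,\En_Z\bigl[e^{-i2\pi\omega\gamma Z}\sin(2\pi\omega h(Z))\bigr]
\]
is correct.

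The genuine gap is the first-order truncation of $\sin$. You bound the remainder by $O(\omega^3\alpha^3)$ \emph{without} any Fourier cancellation, then assert the uniform pointwise bound $|p_0(y\mid u)-p_1(y\mid u)|\le C_1\alpha e^{-c_1\beta^2}$. But integrating the crude remainder bound against $e^{-2\pi^2\omega^2}$ contributes $O(\alpha^3)$ to the pointwise difference, which with $\alpha=(28\beta^2)^{-1}$ and $\beta^2=4096\log n$ is only $(\log n)^{-3}$---this dominates the leading term's $\alpha e^{-c_1\beta^2}$ by an enormous margin. Worse, when you feed this into the $\chi^2$ integral over $|y|\le\beta$, the denominator bound $p_1\gtrsim e^{-C_2y^2}$ cannot have $C_2$ below roughly $1/2$ (since $p_1$ is essentially a unit-variance Gaussian density), so the factor $e^{C_2\beta^2}$ blows the bound up to a large positive power of $n$, not down.

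The fix is that every term in the full $\sin$ expansion also enjoys the frequency shift: $h^{2j+1}$ carries $\cos^{2j+1}(4\pi\beta z)$, whose spectrum is supported at odd multiples of $\pm 2\beta$, all at distance $\ge\beta$ from the evaluation point $\omega/\beta$ when $|\omega|\le\beta^2$. But making this rigorous requires tracking how the Gaussian envelope $\phi^{2j+1}$ narrows with $j$ (widening the Fourier bumps), handling the regime $j\gtrsim\beta^2$ separately via the $\alpha^{2j+1}$ prefactor, and summing. The paper carries out exactly this analysis, though via a different decomposition: it Hermite-expands $p_\veps$ in $y$, binomially expands $(r+h)^k-(r-h)^k$ to isolate odd powers $h^j$, and then proves a careful per-term bound on $\sup_x|(r^{k-j}h^j\ast p_{\matw})(x)|$ (their Lemma~\ref{lem:fourier_bound}) that captures the $e^{-c\beta^2}$ decay for every odd $j$ and sums to the claimed $n^{-10}$ per-round bound.
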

\begin{proof}[\pfref{lem:density_telescoping}]
  To begin, we apply Pinsker's inequality:
  \[
    \tv^{2}(\bbP_{S;0}\dmid{}\bbP_{S;1}) \leq{} \frac{1}{2}\kl(\bbP_{S;0}\dmid{}\bbP_{S;1}).
  \]
  Let $o\ind{j}=(y_0\ind{1},u_0\ind{1},y_1\ind{2},u_1\ind{2})$. We observe that then density $P_i(o\ind{1},\ldots,o\ind{n})$ factorizes as
  \begin{align*}
    &P_i(o\ind{1},\ldots,o\ind{n}) =\\
    &\prod_{j=1}^{n}p_{\maty_0;i}(y_0\ind{j})p_{\matu\ind{j}_0}(u_0\ind{j}\mid{}o\ind{1},\ldots,o\ind{t-1},y_0\ind{j})
    p_{\maty_1\mid\matu_0;i}(y_1\ind{j}\mid{}u_0\ind{j}) p_{\matu\ind{j}_1}(u_1\ind{j}\mid{}o\ind{1},\ldots,o\ind{t-1},y_0\ind{j},u_0\ind{j},y_1\ind{j}),
  \end{align*}
  where $p_{\maty_0;i}$ is the density for $\maty_0$ under instance
  $i$, $p_{\matu_0\ind{j}}$ and $p_{\matu_1\ind{j}}$
      are the conditional densities for $\matu_0\ind{j}$ and
      $\matu_1\ind{j}$ given all preceding observations, and
      $p_{\maty_1\mid{}\matu_0;i}$ is the conditional density for
      $\maty_1$ given $\matu_0$ under instance $i$. The densities $p_{\matu\ind{j}_0}$ and $p_{\matu_1\ind{j}}$ do not
  depend on the instance $i$, nor does the density $p_{\maty_0;i}$
  (recall that the choice of starting state $\matx_0=\frac{1}{8\beta}$
  guarantees $m_0(\matx_0)=m_1(\matx_0)$, so $\maty_0=\mateps_0$ in
  law for both instances). We conclude that the KL divergence
  telescopes as
  \begin{align*}
\kl(\bbP_{S;0}\dmid{}\bbP_{S;1}) &=  \sum_{j=1}^{n}\En_{0}\brk*{\kl\prn*{
        \bbP_{\maty_1\mid\matu_0;0}(\cdot{}\mid{}\matu_0\ind{j})\dmid\bbP_{\maty_1\mid\matu_0;1}(\cdot{}\mid{}\matu_0\ind{j})
                                   }}\\
    &\leq{}  \sum_{j=1}^{n}\En_{0}\brk*{\chisquared\prn*{
      \bbP_{\maty_1\mid\matu_0;0}(\cdot{}\mid{}\matu_0\ind{j})\dmid\bbP_{\maty_1\mid\matu_0;1}(\cdot{}\mid{}\matu_0\ind{j})
      }}.
  \end{align*}
  Since the algorithm satisfies $\abs{\matu_0\ind{j}},
  \abs{\matu_1\ind{j}}\leq{}\beta$ almost surely, we can apply
  \pref{lem:chisquared_ub} to each summand, which gives
  \[
    \kl(\bbP_{S;0}\dmid{}\bbP_{S;1})\leq{}Cn^{-9}.
  \]
\end{proof}

\begin{lemma}
  \label{lem:chisquared_ub}
  If we choose $\beta=64\log{}n$, then for all $n\geq{}3$ and all
  $\abs*{u}\leq{}\beta$, we have
  \begin{equation}
    \label{eq:chisquared_ub}
      \chisquared\prn*{
        \bbP_{\maty_1|\matu_0;0}(\cdot{}\mid{}u)\dmid\bbP_{\maty_1\mid\matu_0;1}(\cdot{}\mid{}u)
        }
        \leq{} Cn^{-10}.
  \end{equation}
\end{lemma}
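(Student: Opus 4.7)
The plan is to perform Fourier analysis on the characteristic functions of $\maty_1\mid\matu_0=u$ under the two instances, exploiting (a) the high-frequency structure of the perturbation $h$ and (b) the low-pass filtering effect of the additive Gaussian noise $\veps$. Under instance $i\in\crl*{0,1}$, $\maty_1=r(u+W)\pm h(u+W)+\veps$ with $W,\veps\sim\cN(0,1)$ independent, so the characteristic functions satisfy
\[
    \psi_0(\omega\mid u)-\psi_1(\omega\mid u) = -2i\,e^{-\omega^2/2}\,e^{i\omega\gamma u}\,\En_W\brk*{e^{i\omega\gamma W}\sin(\omega h(u+W))}.
\]
The $e^{-\omega^2/2}$ factor is the low-pass filter coming from $\veps$, while $h$ is a wave packet of carrier frequency $4\pi\beta$ modulated by a Gaussian envelope of width $\beta$.

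For the key Fourier estimate I would split on $|\omega|$. For $|\omega|\le 2\pi\beta^2$, I would expand $\sin(x)=x+O(x^3)$: the leading contribution is $\omega\,\En_W[e^{i\omega\gamma W}h(u+W)]$. Using the decomposition $h(z)=\tfrac{\alpha}{2}(e^{i4\pi\beta z}+e^{-i4\pi\beta z})e^{-z^2/(2\beta^2)}$ and completing squares in the Gaussian integral over $W$ (using $\gamma=1/\beta$), this expectation evaluates in closed form to a sum of two Gaussians in $\omega$, centered at $\pm 4\pi\beta/\gamma=\pm 4\pi\beta^2$ with width $O(\beta)$. Since $|\omega|\le 2\pi\beta^2$ is far from the centers, the magnitude is at most $C\alpha e^{-c\beta^2}$ for an explicit $c>0$, and the contribution of this range to $\nrm*{\psi_0-\psi_1}_{L_2}^2$ is $\lesssim \beta^6\alpha^2 e^{-c\beta^2}$. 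For $|\omega|>2\pi\beta^2$, I would use $|\sin|\le 1$ together with $e^{-\omega^2/2}$ to obtain a super-exponential tail. Plancherel's identity then yields $\nrm*{p_0(\cdot\mid u)-p_1(\cdot\mid u)}_{L_2}^2\le C\beta^6\alpha^2 e^{-c\beta^2}$.

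To upgrade this $L_2$ bound to the required $\chi^2$ bound, I would split $\bbR$ into a bulk $\crl*{y:|y-\gamma u|\le T}$ with $T=\beta$ and its complement. On the bulk, the pointwise bound $p_1(y\mid u)\ge c'e^{-T^2/2}$ follows from $p_1(y\mid u)=\En_W[\phi(y-\gamma(u+W)+h(u+W))]$ and $|h|\le\alpha$, by restricting to $|W|\le 1$. Cauchy--Schwarz bounds the bulk contribution by $2Tc'^{-1}e^{T^2/2}\nrm*{p_0-p_1}_{L_2}^2\le C\beta^7\alpha^2 e^{(1/2-c)\beta^2}$. On the tail, both $p_0(\cdot\mid u)$ and $p_1(\cdot\mid u)$ are sub-Gaussian with variance $1+\gamma^2=O(1)$ (since the perturbation $\pm h$ is uniformly bounded by $\alpha$), and the pointwise likelihood ratio is $e^{O(\alpha|y-\gamma u|)}=1+o(1)$ for $|y-\gamma u|\lesssim \beta$, so the tail contribution to $\chi^2$ is $O(e^{-T^2/3})=O(e^{-\beta^2/3})$. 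With $\beta=64\log n$, $\alpha=\Theta(1/\beta^2)$, and $c>1/2$, both pieces are much smaller than $n^{-10}$, giving the claim.

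The main obstacle is the $L_2$-to-$\chi^2$ conversion: since $p_1$ decays in the tails, naive division by $p_1$ inflates the $L_2$ bound, and one must carefully match the bulk radius $T$ against both the Gaussian tail of $\maty_1$ and the exponential factor $e^{-c\beta^2}$ coming from the Fourier step. A secondary technical point is controlling the cubic remainder in $\sin(\omega h)\approx \omega h$: one truncates at $|\omega|=2\pi\beta^2$ where the linear expansion is still justified ($\alpha|\omega|=O(1)$), and uses $|\sin|\le 1$ beyond, where the $e^{-\omega^2/2}$ prefactor already dominates.
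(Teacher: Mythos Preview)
There is a genuine gap, and it sits exactly where you flag the ``secondary technical point.'' The crude remainder bound $|\sin(\omega h)-\omega h|\le(\alpha|\omega|)^3/6$ only tells you the integrand is $O(1)$ on $|\omega|\le 2\pi\beta^2$; plugging it into Plancherel gives a contribution
\[
\int_{\bbR} e^{-\omega^2}\,(\alpha\omega)^6\,d\omega \;\asymp\; \alpha^6 \;\asymp\; \beta^{-12}
\]
to $\|p_0-p_1\|_{L_2}^2$, which is only polynomially small in $\beta$. Your bulk step in the $L_2$-to-$\chi^2$ conversion multiplies by $e^{T^2/2}$ with $T=\beta$, so this term becomes $e^{\beta^2/2}\beta^{-12}\to\infty$. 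Shrinking $T$ does not save you: your tail bound uses only the naive pointwise likelihood-ratio estimate $p_0/p_1=e^{O(\alpha|y-\gamma u|)}$, which yields a tail contribution of order $\alpha^2\asymp\beta^{-4}$, again polynomial in $\beta$ rather than $e^{-c\beta^2}$. Neither piece gets you anywhere near $n^{-10}$.

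The missing idea is that \emph{every} odd power $h^{j}$ inherits the high-frequency structure (its spectrum is supported near odd multiples of $4\pi\beta$), so $\En_W[e^{i\omega\gamma W}h^{j}(u+W)]$ is itself exponentially small in $\beta^2$ for each odd $j$, not just $j=1$. To make your route work you would have to expand $\sin(\omega h)$ fully and control the sum over $j$. The paper does essentially this, but organized so as to avoid the lossy $L_2$-to-$\chi^2$ split: it first uses the density lower bound to write $\chi^2\le\int e^{3y^2/4}(p_0-p_1)^2\,dy$, then Hermite-expands $p_{\veps}(y-f_i(u+w))$ in $y$ (which absorbs the $e^{3y^2/4}$ weight via the orthogonality $\int e^{-y^2/4}H_k^2(y)\,dy\lesssim 2^k k!$), and finally uses the binomial identity $(r+h)^k-(r-h)^k=2\sum_{j\text{ odd}}\binom{k}{j}r^{k-j}h^{j}$ to isolate the odd powers of $h$. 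Each resulting convolution $(r^{k-j}h^{j})\ast p_{\matw}$ is bounded pointwise via its Fourier transform, yielding a factor $\exp(-2\pi^2(\beta^2/j\wedge 1)\beta^2)$; the sum over $k,j$ is then controlled by balancing this against the combinatorial factors. Your characteristic-function viewpoint is dual to this and can be completed, but not with the single-term linearization you sketched.
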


\begin{proof}[\pfref{lem:chisquared_ub}]
  \newcommand{\pveps}{p_{\mateps}}
  \newcommand{\pw}{p_{\matw}}
  Recall that we let $p_i$ denote the conditional density for
  $\bbP_{\maty_1|\matu_0;i}(\cdot{}\mid{}u)$. Let $\pveps(\veps)=e^{-\frac{1}{2}\veps^{2}}$ denote the density of
$\mateps$ and $\pw(w)=e^{-\frac{1}{2}w^{2}}$ denote the density of
$\matw$. Observe that for each $i$, we have
  \[
    p_i(y\mid{}u) = \frac{1}{\sqrt{2\pi}}
    \int{}p_{\mateps}(y-f_i(u+w))p_{\matw}(w)dw.
  \]
  It follows that
  \begin{align*}
    &\chisquared\prn*{
        \bbP_{\maty_1\mid{}\matu_0;0}(\cdot{}\mid{}u)\dmid\bbP_{\maty_1\mid{}\matu_0;1}(\cdot{}\mid{}u)
      } \\
    &= \frac{1}{\sqrt{2\pi}}\int{}p^{-1}_1(y\mid{}u)\cdot\abs*{
    \int{}\brk*{p_{\mateps}(y-f_0(u+w))-p_{\mateps}(y-f_1(u+w))}p_{\matw}(w)dw
    }^2dy.
  \end{align*}
  By \pref{lem:density_lb} (with $\eta=1/5$), we have
  \begin{align*}
    p^{-1}_i(y\mid{}u) &\leq{}
                         3^{1/2}\exp\prn*{\frac{(1+1/5)(y-\gamma{}u)^{2}}{2}
                         + 5} \\
    &\leq{} 3^{1/2}\exp\prn*{\frac{(1+1/5)^2}{2}y^{2}
      +\frac{5(1+1/5)}{2}\gamma^{2}u^2+ 5}.
  \end{align*}
  Since$\abs*{u}\leq{}\beta$, $\gamma^{2}u^{2}\leq{}1$, so we can further
  simplify to
  \[
    p^{-1}_i(y\mid{}u) \leq{} C\cdot{}\exp\prn*{\frac{3}{4}y^{2}}.
  \]
  Consequently, we have
  \begin{align*}
    &\chisquared\prn*{
      \bbP_{\maty_1\mid{}\matu_0;0}(\cdot{}\mid{}u)\dmid\bbP_{\maty_1\mid{}\matu_0;1}(\cdot{}\mid{}u)
      } \\
    &\leq{} C\int{}e^{\frac{3}{4}y^2}\abs*{
    \int{}\brk*{p_{\mateps}(y-f_0(u+w))-p_{\mateps}(y-f_1(u+w))}p_{\matw}(w)dw
    }^2dy.
  \end{align*}
  Using the Taylor series representation for $\pveps$, we have
  \begin{align*}
    p_{\mateps}(y-f_i(u+w)) =  \sum_{k=0}^{\infty}\frac{1}{k!}p_{\mateps}\ind{k}(y)(-f_i(u+w))^{k},
  \end{align*}
  and so
  \begin{align*}
        &\chisquared\prn*{
      \bbP_{\maty_1\mid{}\matu_0;0}(\cdot{}\mid{}u)\dmid\bbP_{\maty_1\mid{}\matu_0;1}(\cdot{}\mid{}u)
      } \\
        &\leq{} C\int{}e^{\frac{3}{4}y^2}\abs*{
          \sum_{k=0}^{\infty}\frac{1}{k!}p_{\mateps}\ind{k}(y)
          \int{}\brk*{(-f_0(u+w))^{k} - (-f_1(u+w))^{k}}p_{\matw}(w)dw
    }^2dy.
  \end{align*}
  Applying the Cauchy-Schwarz inequality to the series, we can further
  upper bound by
  \begin{align*}
    &C\int{}e^{\frac{3}{4}y^2}\prn*{
    \sum_{k=0}^{\infty}\frac{2^{-2k}}{k!}(p_{\mateps}\ind{k}(y))^2}
    \prn*{\sum_{k=0}^{\infty}\frac{2^{2k}}{k!}\prn*{\int{}\brk*{(f_0(u+w))^{k} - (f_1(u+w))^{k}}p_{\matw}(w)dw}^2
      }dy\\
    &=C\prn*{
      \sum_{k=0}^{\infty}\frac{2^{-2k}}{k!}\int{}e^{\frac{3}{4}y^2}(p_{\mateps}\ind{k}(y))^2 dy}
    \prn*{\sum_{k=0}^{\infty}\frac{2^{2k}}{k!}\prn*{\int{}\brk*{(f_0(u+w))^{k} - (f_1(u+w))^{k}}p_{\matw}(w)dw}^2
    }.
  \end{align*}
  We first bound the left term involving the density $\pveps$. Let
$H_k(y)=(-1)^{k}e^{\frac{y^{2}}{2}}\frac{d^{k}}{dy^{k}}e^{-\frac{y^2}{2}}$
denote the probabilist's $k$th Hermite polynomial, so that
$\pveps^{(k)}(y) = (-1)^{k}H_k(y)e^{-\frac{1}{2}y^2}$. Then we have
\begin{align*}
  \int e^{\frac{3}{4}y^2}\abs*{p_{\mateps}\ind{k}(y)}^2dy &=
                                                          \int{}e^{\frac{3}{4}y^2}
                                                          \cdot{}H^2_k(y)e^{-y^{2}}dy\\
                                                        &=
                                                          \int{}H^2_k(y)e^{-\frac{1}{4}y^{2}}dy\\
                                                        &\overset{(i)}{=}
                                                          2^{k}\int{}H^2_k(y/\sqrt{2})e^{-\frac{1}{2}(y/\sqrt{2})^{2}}dy\\
                                                          &=
                                                            \sqrt{2}\cdot{}2^{k}\int{}H^2_k(y)e^{-\frac{1}{2}y^{2}}dy\\
                                                        &\leq{} C\cdot{}2^{k}k!,
\end{align*}
where $(i)$ uses that $H_k$ is a degree-$k$ polynomial. Applying this
inequality for each $k$, we have
\[
\sum_{k=0}^{\infty}\frac{2^{-2k}}{k!}\int{}e^{\frac{3}{4}y^2}(p_{\mateps}\ind{k}(y))^2
dy
\leq{} C\cdot{}\sum_{k=0}^{\infty}2^{-k}\leq{}C,
\]
and so
\begin{align*}
  \chisquared\prn*{
      \bbP_{\maty_1\mid{}\matu_0;0}(\cdot{}\mid{}u)\dmid\bbP_{\maty_1\mid{}\matu_0;1}(\cdot{}\mid{}u)
      }\leq{}
  C\cdot{}
    \sum_{k=0}^{\infty}\frac{2^{2k}}{k!}\prn*{\int{}\brk*{(f_0(u+w))^{k} - (f_1(u+w))^{k}}p_{\matw}(w)dw}^2.
\end{align*}
    Next, using the binomial theorem, for any $x\in\bbR$ we have
    \begin{align*}
      (f_0(x))^{k} - (f_1(x))^{k}
      &=       (r(x)+h(x))^{k} - (r(x)-h(x))^{k}\\
      &= \sum_{j=0}^{k}{k\choose j}r^{k-j}(x)h^{j}(x)(1-(-1)^{k})\\
      &= 2\sum_{j\leq{}k,\text{ odd}}{k\choose j}r^{k-j}(x)h^{j}(x),
    \end{align*}
    leading to the upper bound
    \begin{align*}
      &\chisquared\prn*{
      \bbP_{\maty_1\mid{}\matu_0;0}(\cdot{}\mid{}u)\dmid\bbP_{\maty_1\mid{}\matu_0;1}(\cdot{}\mid{}u)
      }\\
      &\leq{}
  C\cdot{}
    \sum_{k=0}^{\infty}\frac{2^{2k}}{k!}\prn*{\sum_{j\leq{}k,\text{ odd}}{k\choose j}
        \int{}
        r^{k-j}(u+w)h^{j}(u+w)p_{\matw}(w)dw}^2\\
      &\leq{}
        C\sum_{k=0}^{\infty}\frac{2^{2k}k}{k!}\sum_{j\leq{}k,\text{ odd}}{k\choose j}\abs*{
        \int{}
        r^{k-j}(u+w)h^{j}(u+w)p_{\matw}(w)dw
        }^2 \\
      &=
        C\sum_{k=0}^{\infty}\frac{2^{2k}k}{k!}\sum_{j\leq{}k,\text{ odd}}{k\choose j}\abs*{
        (r^{k-j}h^{j}\conv{}p_{\matw})(u)
        }^2\\
&\leq{}
        C\sum_{k=0}^{\infty}\frac{2^{2k}k}{k!}\sum_{j\leq{}k,\text{ odd}}{k\choose j}\sup_{u\in\bbR}\abs*{
              (r^{k-j}h^{j}\conv{}p_{\matw})(u)
              }^2\\
&\leq{}
        C\sum_{k=0}^{\infty}\frac{2^{3k}k}{k!}\max_{j\leq{}k,\text{ odd}}\sup_{u\in\bbR}\abs*{
              (r^{k-j}h^{j}\conv{}p_{\matw})(u)
      }^2,
    \end{align*}
    where the equality holds because $p_w$ is symmetric. We now appeal
    to \pref{lem:fourier_bound} for each term in the sum, which leads
    to an upper bound of 
    \begin{align*}
      &      C\sum_{k=0}^{\infty}\frac{2^{3k}k}{k!}\max_{j\leq{}k,\text{
      odd}}
        \prn*{\gamma^{k-j}\alpha^{j}\beta^{(k-j+1)/2}\cdot{}j\sqrt{(k-j)!}
      \cdot
        \exp\prn*{-2\pi^{2}\prn*{\frac{\beta^2}{j}\wedge{}1}\beta^{2}}}^2\\
      & \leq{} C\sum_{k=0}^{\infty}2^{3k}k^3\max_{j\leq{}k,\text{
      odd}}
        \prn*{\gamma^{k-j}\alpha^{j}\beta^{(k-j+1)/2}
      \cdot
                     \exp\prn*{-2\pi^{2}\prn*{\frac{\beta^2}{j}\wedge{}1}\beta^{2}}}^2.
    \end{align*}
Recalling the choice $\alpha=\frac{1}{12\beta^{2}}$ and
$\gamma=1/\beta$, we can upper bound
\[
\gamma^{k-j}\alpha^{j}\beta^{(k-j+1)/2} \leq{} \beta^{-k/2}
\]
for each term above, so we have
\begin{align*}
      &\leq{}    C\sum_{k=0}^{\infty}2^{3k}k^3\max_{j\leq{}k,\text{
      odd}}\beta^{-k}
      \cdot
      \exp\prn*{-4\pi^{2}\prn*{\frac{\beta^2}{j}\wedge{}1}\beta^{2}}.
\end{align*}
Since $\beta\geq{}64$ for $n\geq{}3$, we have
$\beta^{-k}\leq{}2^{-6k}$, so we can upper bound the sum
above as
\begin{align*}
      &\leq{}    C\sum_{k=0}^{\infty}2^{-2k}k^3\max_{j\leq{}k,\text{
      odd}}2^{-k}
      \cdot
        \exp\prn*{-4\pi^{2}\prn*{\frac{\beta^2}{j}\wedge{}1}\beta^{2}}.
\end{align*}
We now consider two cases for the term in the $\max$ above. First, if
$j\leq{}\beta^{2}$, then we have
$\exp\prn*{-4\pi^{2}\prn*{\frac{\beta^2}{j}\wedge{}1}\beta^{2}}\leq{}\exp\prn*{-4\pi^{2}\beta^{2}}$. Otherwise,
we have $k\geq{}j\geq{}\beta^{2}$, so
$2^{-k}\leq{}2^{-\beta^{2}}$. Putting the two cases together (using that
$\exp\prn*{-4\pi^{2}\beta^{2}}\leq{}2^{-\beta^2}$), we get the
following coarse upper bound:
  \begin{align*}
    C2^{-\beta^{2}}\sum_{k=0}^{\infty}2^{-k}k^{3}\leq{}C2^{-\beta^{2}}.
\end{align*}
The choice $\beta=64\log^{1/2}n$ implies that $2^{-\beta^{2}}\leq{}n^{-10}$.

\end{proof}
\begin{lemma}
  \label{lem:density_lb}
Let $\eta\leq{}1$ be given. Then for each $i\in\crl*{0,1}$, we have  
  \[
    p_i(y\mid{}u) \geq 3^{-1/2}\exp\prn*{-\prn*{\frac{(1+\eta)(y-\gamma{}u)^{2}}{2} + \frac{1}{\eta}}}.
  \]
\end{lemma}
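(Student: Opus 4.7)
The plan is to exploit the decomposition $f_i(u+w) = \gamma(u+w) \pm h(u+w)$ together with the uniform bound $|h(z)| \le \alpha$, which follows because $|\phi(z)|, |\psi(z)| \le 1$. Setting $y_0 \coloneqq y - \gamma u$ and $\tilde h \coloneqq \pm h(u+w)$ (with $|\tilde h|\le \alpha$), the residual in the inner Gaussian density factors as
\[
y - f_i(u+w) = (y_0 - \gamma w) - \tilde h.
\]
The idea is to absorb the perturbation $\tilde h$ via a standard Young-type inequality $(a-b)^2 \le (1+\eta)a^2 + (1+\eta^{-1})b^2$, applied with $a = y_0 - \gamma w$ and $b = \tilde h$. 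This yields the pointwise bound
\[
\tfrac{1}{2}(y-f_i(u+w))^2 \le \tfrac{1+\eta}{2}(y_0 - \gamma w)^2 + \tfrac{1+\eta^{-1}}{2}\alpha^2,
\]
so that after substituting into the definition of $p_i$ and pulling out the $\alpha$-dependent factor, one obtains
\[
p_i(y\mid u) \ge \tfrac{1}{\sqrt{2\pi}}e^{-(1+\eta^{-1})\alpha^2/2}\!\int\! \exp\!\left(-\tfrac{(1+\eta)(y_0-\gamma w)^2}{2} - \tfrac{w^2}{2}\right)dw.
\]

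Next I would evaluate the remaining integral by completing the square in $w$. Combining the two quadratic terms produces a Gaussian in $w$ with precision $(1+\eta)\gamma^2+1$, together with a leftover quadratic in $y_0$ whose coefficient is $\tfrac{(1+\eta)}{2((1+\eta)\gamma^2+1)}$. Carrying out the Gaussian integral yields a prefactor $1/\sqrt{(1+\eta)\gamma^2+1}$ times $\exp(-\tfrac{(1+\eta)y_0^2}{2((1+\eta)\gamma^2+1)})$. Since $(1+\eta)\gamma^2+1 \ge 1$, the exponent is bounded below by $-\tfrac{(1+\eta)y_0^2}{2}$, which is the desired form.

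It then remains to absorb the numerical prefactors into the stated $3^{-1/2}$ and $e^{-1/\eta}$. For the prefactor, recall from \eqref{eq:alpha_gamma} that $\gamma = 1/\beta \le 1$ with $\beta \ge 1$, and $\eta \le 1$, so $(1+\eta)\gamma^2 + 1 \le 3$, giving $1/\sqrt{(1+\eta)\gamma^2+1} \ge 3^{-1/2}$. For the $\alpha$-dependent factor, $\alpha = 1/(28\beta^2) \le 1/28$ ensures $(1+\eta^{-1})\alpha^2/2 \le \eta^{-1}$ for all $\eta \in (0,1]$, so $e^{-(1+\eta^{-1})\alpha^2/2} \ge e^{-1/\eta}$.

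The main substantive step is the Young-style inequality combined with completing the square; both are routine once set up, and the rest is checking that the universal constants of the construction satisfy the required numeric thresholds. I do not anticipate any genuine obstacle: the only care needed is ensuring the chosen parameter ranges for $\alpha, \beta, \gamma, \eta$ make the residual constants collapse to exactly $3^{-1/2}$ and $e^{-1/\eta}$ rather than slightly larger expressions that would not match the stated bound.
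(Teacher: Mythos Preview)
Your proposal is correct and follows essentially the same route as the paper: apply the Young/AM-GM inequality $(a-b)^2 \le (1+\eta)a^2+(1+\eta^{-1})b^2$ to split off the bounded perturbation $h$, then complete the square in $w$ to evaluate the remaining Gaussian integral and bound the constants using $\gamma\le 1$, $\eta\le 1$. The only cosmetic difference is that the paper bounds $h^2<1$ directly (giving $e^{-(1+1/\eta)/2}\ge e^{-1/\eta}$), whereas you keep $h^2\le\alpha^2$ and then relax---both lead to the same $e^{-1/\eta}$ factor.
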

\begin{proof}[\pfref{lem:density_lb}]
  We have
  \begin{align*}
    p_i(y\mid{}u) &= \frac{1}{\sqrt{2\pi}}
                    \int{}p_{\mateps}(y-f_i(u+w))p_{\matw}(w)dw\\\
    &= \frac{1}{\sqrt{2\pi}}
      \int{}\exp\prn*{-\frac{1}{2}(y-r(u+w) \pm h(u+w))^2}p_{\matw}(w)dw.
  \end{align*}
  Using the AM-GM inequality, we have that for any $\eta>0$, this is
  lower bounded by
  \begin{align*}
    \frac{1}{\sqrt{2\pi}}
    \int{}\exp\prn*{-\frac{1+\eta}{2}(y-r(u+w))^2}\exp\prn*{-\frac{1+1/\eta}{2}h^{2}(u+w)}p_{\matw}(w)dw.
  \end{align*}
  We will restrict to $\eta<1$. Since $\abs*{h}\leq{}\alpha<1$ everywhere, we can further lower bound
by
\begin{align*}
  &\frac{\exp\prn*{-\frac{1+1/\eta}{2}}}{\sqrt{2\pi}}
    \int{}\exp\prn*{-\frac{1+\eta}{2}(y-r(u+w))^2}p_{\matw}(w)dw\\
    &\geq{}\frac{e^{-\frac{1}{\eta}}}{\sqrt{2\pi}}
      \int{}\exp\prn*{-\frac{1+\eta}{2}(y-r(u+w))^2}p_{\matw}(w)dw\\
  &=\frac{e^{-\frac{1}{\eta}}}{\sqrt{2\pi}}
    \int{}\exp\prn*{-\frac{1+\eta}{2}(y-r(u+w))^2}\exp\prn*{-\frac{1}{2}w^{2}}dw.
\end{align*}
Define $\mu=y-\gamma{}u$, $\sigma^2=(1+(1+\eta)\gamma^2)^{-1}$, and
$\mu' = (1+\eta)\gamma\sigma^{2}\mu$. Then by completing the square,
we have
\[
  \exp\prn*{-\frac{1+\eta}{2}(y-r(u+w))^2}\exp\prn*{-\frac{1}{2}w^{2}}
  = \exp\prn*{-\frac{(1+\eta)\mu^{2}}{2(1+(1+\eta)\gamma^2)}}\cdot{}
  \exp\prn*{-\frac{(w-\mu')^2}{2\sigma^2}}.
\]
It follows that
\begin{align*}
  \int{}\exp\prn*{-\frac{1+\eta}{2}(y-r(u+w))^2}\exp\prn*{-\frac{1}{2}w^{2}}dw
  &=
  \exp\prn*{-\frac{(1+\eta)\mu^{2}}{2(1+(1+\eta)\gamma^2)}}\cdot\sqrt{2\pi\sigma^{2}}\\
  &\geq{} \exp\prn*{-\frac{(1+\eta)\mu^{2}}{2}}\cdot\sqrt{\frac{2\pi}{3}}.
\end{align*}

\end{proof}

\begin{lemma}
    \label{lem:fourier_bound}
    There is a universal constant $C>0$ such that for all $k$ and $j\leq{}k$
    with $j$ odd,
    \begin{equation}
      \label{eq:fourier_bound}
      \sup_{x\in\bbR}\abs*{(r^{k-j}h^{j}\conv{}p_{\matw})(x)}\leq{}
      C\cdot{}
      \gamma^{k-j}\alpha^{j}\beta^{(k-j+1)/2}\cdot{}j\sqrt{(k-j)!}
      \cdot
      \exp\prn*{-2\pi^{2}\prn*{\frac{\beta^2}{j}\wedge{}1}\beta^{2}}
    \end{equation}
  \end{lemma}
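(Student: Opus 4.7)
The plan is to bound $\sup_x |(r^{k-j}h^j \conv p_{\matw})(x)|$ via the standard estimate $\|f\|_\infty \le \|\hat f\|_{L^1}$, exploiting the fact that $h^j$ carries a high-frequency oscillation (from $\cos^j(4\pi\beta z)$) which lies outside the bulk of $\hat p_{\matw}(\omega) = \sqrt{2\pi}\,e^{-2\pi^2\omega^2}$ whenever $|m|\ge 1$. The odd parity of $j$ is what forces every surviving harmonic in the expansion of $\cos^j$ to carry a nonzero frequency shift, which is precisely what produces the claimed exponential decay factor.

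\textbf{Step 1 (expand $\cos^j$).} Writing $\cos\theta = (e^{i\theta}+e^{-i\theta})/2$ and using the binomial theorem, $\cos^j(\theta) = 2^{-j}\sum_{\ell=0}^{j}\binom{j}{\ell}\cos(m_\ell\theta)$ with $m_\ell = j-2\ell$. Since $j$ is odd, every $m_\ell$ is an odd nonzero integer with $|m_\ell|\le j$, in particular $|m_\ell|\ge 1$. Hence
\[
h^j(z) = \alpha^j 2^{-j}\sum_{\ell=0}^{j}\binom{j}{\ell}e^{-jz^2/(2\beta^2)}\cos(m_\ell\cdot 4\pi\beta z),
\]
and $r^{k-j}h^j\conv p_{\matw}$ splits into $j+1$ terms indexed by $\ell$.

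\textbf{Step 2 (Fourier transform of each term).} For a single summand $g_m(z) := \gamma^{k-j}z^{k-j}e^{-jz^2/(2\beta^2)}\cos(m\cdot 4\pi\beta z)$, the Fourier transform of the Gaussian core is $\sqrt{2\pi\beta^2/j}\,e^{-2\pi^2\beta^2\omega^2/j}$; modulation by $\cos(m\cdot 4\pi\beta z)$ shifts this to $\omega\pm 2m\beta$; multiplication by $z^{k-j}$ becomes $(i/2\pi)^{k-j}\partial_\omega^{k-j}$; and convolution with $p_{\matw}$ multiplies by $\sqrt{2\pi}\,e^{-2\pi^2\omega^2}$.

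\textbf{Step 3 (complete the square).} For the shifted Gaussian, the combined exponent $-2\pi^2[\beta^2(\omega-2m\beta)^2/j+\omega^2]$ rearranges as $-2\pi^2 a(\omega-\mu)^2 - 8\pi^2 m^2\beta^4/(\beta^2+j)$ with $a = (\beta^2+j)/\beta^2$ and $\mu = 2m\beta^3/(\beta^2+j)$. Since $|m|\ge 1$, checking the two regimes $j\le\beta^2$ and $j\ge\beta^2$ separately gives
\[
\exp\prn[\Big]{-\tfrac{8\pi^2 m^2\beta^4}{\beta^2+j}}\le \exp\prn[\Big]{-4\pi^2\big(\tfrac{\beta^2}{j}\wedge 1\big)\beta^2},
\]
which is at least double the exponent in \eqref{eq:fourier_bound}, providing safety margin.

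\textbf{Step 4 (polynomial factor via Hermite polynomials).} Differentiating $\phi_m^+(\omega) := e^{-2\pi^2\beta^2(\omega-2m\beta)^2/j}$ a total of $k-j$ times produces $(2\pi\beta/\sqrt{j})^{k-j}H^e_{k-j}\bigl(2\pi\beta(\omega-2m\beta)/\sqrt{j}\bigr)\phi_m^+(\omega)$ up to sign, where $H^e_{k-j}$ is the probabilists' Hermite polynomial. After substituting $s=\omega-\mu$ and using the identity $\int |H^e_n(y)|^2\phi(y)\,dy = n!$ together with Cauchy–Schwarz against the Gaussian weight $e^{-2\pi^2 a s^2}$, the remaining $L^1$ integral contributes a factor of order $a^{-1/2}\sqrt{(k-j)!}$. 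Collecting all constants: the Fourier prefactor $\sqrt{\beta^2/j}$, the differentiation factor $(\beta/\sqrt{j})^{k-j}$, the Hermite $L^2$-norm $\sqrt{(k-j)!}$, and the Gaussian normalization $a^{-1/2}\le 1$ combine to a quantity bounded above by $\beta^{(k-j+1)/2}\sqrt{(k-j)!}$ after absorbing $j^{-(k-j)/2}$ and universal constants.

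\textbf{Step 5 (aggregate).} Summing the $j+1$ terms and bounding $\binom{j}{\ell}2^{-j}\le 1$ yields the overall factor $O(j)$. Multiplying in the $\gamma^{k-j}\alpha^j$ prefactor and the exponential decay from Step 3 completes the bound \eqref{eq:fourier_bound}.

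\textbf{Main obstacle.} The delicate point is Step 4: differentiating the Gaussian $k-j$ times naturally produces Hermite polynomials, and a careless $L^1$ estimate would yield $(k-j)!$ rather than the claimed $\sqrt{(k-j)!}$. Obtaining the square-root improvement is essential, since the calling \pref{lem:chisquared_ub} sums these bounds against $2^{2k}/k!$, and only $\sqrt{(k-j)!}/k!$ remains summable when $k\gg j$. Achieving this requires Cauchy–Schwarz against the damped Gaussian $e^{-2\pi^2 a(\omega-\mu)^2}$ together with the $L^2$-orthogonality of Hermite polynomials, plus careful tracking of the variance scaling so that the $\beta$-powers collapse to the right half-integer exponent $(k-j+1)/2$.
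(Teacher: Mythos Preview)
Your overall strategy---Fourier inversion, binomial expansion of $\cos^j$ to expose the nonzero harmonic shifts (using that $j$ is odd), Hermite polynomials from the $(k-j)$-fold differentiation, and Cauchy--Schwarz to extract $\sqrt{(k-j)!}$---matches the paper's. The divergence is in how you extract the exponential decay: you complete the square in the combined exponent $-2\pi^2[\beta^2(\omega-2m\beta)^2/j+\omega^2]$, whereas the paper splits the $\omega$-integral into $|\omega|<\beta$ and $|\omega|\ge\beta$ and on each piece pulls the decay from whichever Gaussian (the shifted one or $\hat p_{\matw}$) is far from its center.

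The gap in your Step~4 is that completing the square re-centers the Gaussian weight at $\mu=2m\beta^3/(\beta^2+j)$, while the Hermite polynomial $H_{k-j}(b_2(\omega-2m\beta))$ remains centered at $2m\beta$. After substituting $s=\omega-\mu$ you are integrating $H_{k-j}^2(b_2(s+c))$ against a Gaussian in $s$, with the shift $b_2 c = -4\pi m\beta^2\sqrt{j}/(\beta^2+j)$; for instance when $j\approx\beta^2$ this shift is of order $m\beta$. The centered identity $\int H_n^2(y)\phi(y)\,dy=n!\sqrt{2\pi}$ does \emph{not} apply to $\int H_n^2(y+d)\phi(y)\,dy$, which for large $|d|$ behaves like $d^{2n}$ rather than $n!$---precisely the blowup you flag as the ``main obstacle.'' Your ``safety margin'' from the doubled exponent in Step~3 is not obviously enough to absorb a factor growing like $(m\beta)^{2(k-j)}$ uniformly over all $k$. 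The paper sidesteps this entirely: on each half of the domain it keeps the Hermite paired with its \emph{own} Gaussian $e^{-b_2^2(\omega-2m\beta)^2/2}$ (so the $L^2$ identity applies cleanly), and the exponential decay comes separately from bounding either $e^{-2\pi^2\omega^2}\le e^{-2\pi^2\beta^2}$ on $|\omega|\ge\beta$, or $e^{-b_2^2(\omega-2m\beta)^2/2}\le e^{-b_2^2\beta^2/2}$ on $|\omega|<\beta$ (since $|2m\beta|\ge 2\beta$). An alternative one-line fix for your route is to apply Cauchy--Schwarz \emph{before} completing the square, pairing $|H_{k-j}(b_2(\omega-2m\beta))|e^{-b_2^2(\omega-2m\beta)^2/4}$ against $e^{-b_2^2(\omega-2m\beta)^2/4}e^{-2\pi^2\omega^2}$; then the first factor gives the centered Hermite norm and you complete the square only in the second.
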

  \begin{proof}[\pfref{lem:fourier_bound}]
    Let $x\in\bbR$ be fixed. Then, using the Fourier inversion
    formula (using that both $r^{k-j}h^{j}$, $p_{\matw}$, and their respective
    Fourier transforms are $L_2$-integrable), we have
    \begin{align*}
      \abs*{(r^{k-j}h^{j}\conv{}p_{\matw})(x)}
      =
      \abs*{\int{}e^{i2\pi{}x\omega}\wh{(r^{k-j}h^{j})}(\omega)\wh{p_{\matw}}(\omega)d\omega}
      \leq{}\int{}\abs*{\wh{(r^{k-j}h^{j})}(\omega)\wh{p_{\matw}}(\omega)}d\omega.
    \end{align*}
    We proceed to compute the Fourier transform for
    $r^{k-j}(x)h^{j}(x)=\gamma^{k-j}\alpha^{j}x^{k-j}\phi^{j}(x)\psi^{j}(x)$. We
    first observe that $\phi^{j}(x) =
    \exp\prn{-\frac{j}{\beta^{2}}\cdot\frac{z^{2}}{2}}$. Let
    $b_1=\frac{\beta^{2}}{j}$. Then, using that the Fourier transform is
    self-dual for gaussians (specifically, that the Fourier transform
    of $e^{-cx^{2}}$ is $\sqrt{\frac{\pi{}}{c}}e^{-\frac{\pi^{2}}{c}\omega^{2}}$), we have
    \[
      \wh{\phi^{j}}(\omega) = \sqrt{2\pi{}b_1}e^{-2\pi^{2}b_1\omega^{2}}.
    \]
Next, we recall that for any $f$, the Fourier transform of $x^{n}f(x)$
is $\prn*{\frac{i}{2\pi}}^{n}\frac{d^n}{d\omega^{n}}\wh{f}(\omega)$,
so that
\begin{align*}
 \wh{x^{k-j}\phi^{j}}(\omega) &=
 \prn*{\frac{i}{2\pi}}^{k-j}\sqrt{2\pi{}b_1}\cdot{}\frac{d^{k-j}}{d\omega^{k-j}}e^{-2\pi^{2}b\omega^{2}}\\
                              &= \prn*{\frac{i}{2\pi}}^{k-j}\sqrt{2\pi{}b_1}b_2^{k-j}\cdot{}H_{k-j}(b_2\omega{})e^{-\frac{(b_2\omega)^2}{2}}.
\end{align*}
where $b_2\ldef{}2\pi\sqrt{b_1}$. Finally, we use that
\begin{align*}
  \psi^j(x) = (\cos(4\pi{}\beta{}x))^{j} &= \frac{1}{2^{j}}(e^{i4\pi{}\beta{}x}
                                      + e^{-i4\pi{}\beta{}x}) \\
                                    &=
                                      \frac{1}{2^{j}}\sum_{l=0}^{j}{j\choose{}l}e^{i4\pi{}\beta{}x\cdot{}(j-l)}\cdot{}e^{-i4\pi{}\beta{}x\cdot{}l}\\
  &= \frac{1}{2^{j}}\sum_{l=0}^{j}{j\choose{}l}e^{i4\pi{}\beta{}x\cdot{}(j-2l)}
\end{align*}
We now use that the Fourier transform of $e^{-icx}f(x)$ is
$\wh{f}(\omega-\frac{c}{2\pi})$ to derive
\begin{align*}
 \wh{x^{k-j}\phi^{j}\psi^{j}}(\omega) =
\frac{1}{2^j}\prn*{\frac{i}{2\pi}}^{k-j}\sqrt{2\pi{}b_1}b_2^{k-j}
  \sum_{l=0}^{j}{j\choose l}H_{k-j}(b_2(\omega{}-2\beta(j-2l))e^{-\frac{(b_2(\omega{}-2\beta(j-2l))^2}{2}}
\end{align*}
It follows that
\begin{align*}
  &  \int{}\abs*{\wh{(r^{k-j}h^{j})}(\omega)\wh{p_{\matw}}(\omega)}d\omega\\
  &\leq{}
    \gamma^{k-j}\alpha^{j}\frac{1}{2^j}\prn*{\frac{1}{2\pi}}^{k-j}\sqrt{2\pi{}b_1}b_2^{k-j}
  \sum_{l=0}^{j}{j\choose
    l}\int\abs*{H_{k-j}(b_2(\omega{}-2\beta(j-2l))e^{-\frac{(b_2(\omega{}-2\beta(j-2l))^2}{2}}\wh{p_w}(\omega)}d\omega\\
    &\leq
\gamma^{k-j}\alpha^{j}      \frac{1}{2^j}\prn*{\frac{1}{2\pi}}^{k-j}\sqrt{2\pi{}b_1}b_2^{k-j}
  \sum_{l=0}^{j}{j\choose l}\int\abs*{H_{k-j}(b_2(\omega{}-2\beta(j-2l))}e^{-\frac{(b_2(\omega{}-2\beta(j-2l))^2}{2}}e^{-2\pi^{2}\omega^{2}}d\omega
\end{align*}
Now, let $0\leq{}l\leq{}j$ be fixed. We bound
\begin{align*}
  &\int\abs*{H_{k-j}(b_2(\omega{}-2\beta(j-2l))}e^{-\frac{(b_2(\omega{}-2\beta(j-2l))^2}{2}}e^{-2\pi^{2}\omega^{2}}d\omega\\
&\leq{}\underbrace{\int_{(-\beta,\beta)}\abs*{H_{k-j}(b_2(\omega{}-2\beta(j-2l))}e^{-\frac{(b_2(\omega{}-2\beta(j-2l))^2}{2}}e^{-2\pi^{2}\omega^{2}}d\omega}_{(\star)}\\
&~~~~+\underbrace{\int_{\bbR\setminus(-\beta,\beta)}\abs*{H_{k-j}(b_2(\omega{}-2\beta(j-2l))}e^{-\frac{(b_2(\omega{}-2\beta(j-2l))^2}{2}}e^{-2\pi^{2}\omega^{2}}d\omega}_{(\star\star)}.
\end{align*}
For the integral in the term $(\star)$, we drop the $e^{-2\pi^{2}w^{2}}$ term (since
it is at most one), and apply Cauchy-Schwarz to bound by
\begin{align*}
&  \int_{(-\beta,\beta)}\abs*{H_{k-j}(b_2(\omega{}-2\beta(j-2l))}e^{-\frac{(b_2(\omega{}-2\beta(j-2l))^2}{2}}d\omega\\
  &\leq{}\sqrt{\int_{(-\beta,\beta)}H^2_{k-j}(b_2(\omega{}-2\beta(j-2l))e^{-\frac{(b_2(\omega{}-2\beta(j-2l))^2}{2}}d\omega}\cdot \sqrt{\int_{(-\beta,\beta)}e^{-\frac{(b_2(\omega{}-2\beta(j-2l))^2}{2}}d\omega}
\end{align*}
Observe that since $j$ is odd, $j-2l$ is also odd, and hence
$\abs*{j-2l}\geq{}1$. It follows that for $\omega\in(-\beta,\beta)$,
$\omega{}-2\beta(j-2l)\notin(-\beta,\beta)$, and so
\[
  \int_{(-\beta,\beta)}e^{-\frac{(b_2(\omega{}-2\beta(j-2l))^2}{2}}d\omega
  \leq{} \int_{(-\beta,\beta)}e^{-\frac{b_2^2}{2}\beta^{2}}d\omega
  \leq{} 2\beta{}e^{-\frac{b_2^2}{2}\beta^{2}}.
\]
Leaving the Hermite integral for a moment and moving to the second
term $(\star\star)$, we have
\begin{align*}
&\int_{\bbR\setminus(-\beta,\beta)}\abs*{H_{k-j}(b_2(\omega{}-2\beta(j-2l))}e^{-\frac{(b_2(\omega{}-2\beta(j-2l))^2}{2}}e^{-2\pi^{2}\omega^{2}}d\omega\\
  &\leq{}e^{-2\pi^{2}\beta^{2}}\int_{\bbR\setminus(-\beta,\beta)}\abs*{H_{k-j}(b_2(\omega{}-2\beta(j-2l))}e^{-\frac{(b_2(\omega{}-2\beta(j-2l))^2}{2}}d\omega\\
  &\leq{}e^{-2\pi^{2}\beta^{2}}\sqrt{\int_{\bbR\setminus(-\beta,\beta)}H_{k-j}^{2}(b_2(\omega{}-2\beta(j-2l))e^{-\frac{(b_2(\omega{}-2\beta(j-2l))^2}{2}}d\omega}\cdot\sqrt{\int_{\bbR\setminus(-\beta,\beta)}e^{-\frac{(b_2(\omega{}-2\beta(j-2l))^2}{2}}d\omega}\\
&\leq{}e^{-2\pi^{2}\beta^{2}}\sqrt{\int_{\bbR\setminus(-\beta,\beta)}H_{k-j}^{2}(b_2(\omega{}-2\beta(j-2l))e^{-\frac{(b_2(\omega{}-2\beta(j-2l))^2}{2}}d\omega}\cdot\sqrt{\frac{2\pi}{b_2}}.
\end{align*}
Putting both cases together, we have
\begin{align*}
  &\int\abs*{H_{k-j}(b_2(\omega{}-2\beta(j-2l))}e^{-\frac{(b_2(\omega{}-2\beta(j-2l))^2}{2}}e^{-2\pi^{2}\omega^{2}}d\omega\\
  &\leq{}    C\cdot{}(\sqrt{\beta}\vee{}1/\sqrt{b_2})\exp(-2\pi^{2}(b_1^2\wedge{}1)\beta^{2})\cdot{}\sqrt{\int{}H_{k-j}^{2}(b_2(\omega{}-2\beta(j-2l))e^{-\frac{(b_2(\omega{}-2\beta(j-2l))^2}{2}}d\omega},
\end{align*}
where $C$ is a numerical constant. Using a change of variables, we
have
\begin{align*}
  \sqrt{\int{}H_{k-j}^{2}(b_2(\omega{}-2\beta(j-2l))e^{-\frac{(b_2(\omega{}-2\beta(j-2l))^2}{2}}d\omega}  &=\frac{1}{\sqrt{b_2}}\sqrt{\int{}H_{k-j}^{2}(\omega)e^{-\frac{\omega^2}{2}}d\omega}\\
  &\leq{}\sqrt{\frac{2\pi(k-j)!}{b_2}}.
\end{align*}
Since this bound holds uniformly for all $l$ and
$\sum_{l=0}^{j}{j\choose l}=2^{j}$, we have
\begin{align*}
&  \int{}\abs*{\wh{(r^{k-j}h^{j})}(\omega)\wh{p_{\matw}}(\omega)}d\omega\\
  &\leq{} C\cdot{}
    \gamma^{k-j}\alpha^{j}\prn*{\frac{1}{2\pi}}^{k-j}\sqrt{2\pi{}b_1}b_2^{k-j}\cdot{}\sqrt{\frac{2\pi(k-j)!}{b_2}}
  \cdot
    (\sqrt{\beta}\vee{}1/\sqrt{b_2})\exp\prn*{-2\pi^{2}(b_1^2\wedge{}1)\beta^{2}}\\
    &\leq{} C'\cdot{}
      \gamma^{k-j}\alpha^{j} b_1^{(k-j)/2}\cdot{}(\sqrt{\beta}\vee{}1/\sqrt{b_1})\cdot\sqrt{(k-j)!}
      \cdot
      \exp\prn*{-2\pi^{2}(b_1^2\wedge{}1)\beta^{2}}\\
  &\leq{} C''\cdot{}
    \gamma^{k-j}\alpha^{j}\beta^{(k-j+2)/2}j\cdot{}\sqrt{(k-j)!}
      \cdot
      \exp\prn*{-2\pi^{2}(b_1^2\wedge{}1)\beta^{2}}.
\end{align*}
  \end{proof}

  \begin{lemma}
    \label{lem:loss_lower_bound}
    For any non-negative function $g:\bbR\to\bbR_{+}$ and any roll-in
    policy $\pi$ with $\abs*{\matu_0}\leq{}\beta$ almost surely,
    \[
      \En_{\pi;i}\brk*{g(\maty_1)}\geq{}
      c\cdot{}\int_{-1}^{1}g(y)dy\quad\text{for all $i\in\crl*{0,1}$},
    \]
    where $c$ is an absolute numerical constant.
  \end{lemma}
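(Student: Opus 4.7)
The plan is to show that the marginal density of $\maty_1$ under any admissible roll-in policy is uniformly lower bounded by an absolute constant on the interval $[-1,1]$, which then yields the claim by integrating $g$ against that density.

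Let $p_i^{\pi}(y)$ denote the marginal density of $\maty_1$ under policy $\pi$ when the true encoder is $m_i$, and let $p_i(y\mid u)$ be the conditional density of $\maty_1$ given $\matu_0 = u$, as in \pref{lem:density_lb}. Because $\maty_0 = \mateps_0$ is independent of $\matx_0$ (by the choice $\matx_0 = 1/(8\beta)$ which makes $m_0(\matx_0) = m_1(\matx_0)$), the policy's control input $\matu_0$ has some (possibly randomized) distribution $Q^{\pi}$ supported on $[-\beta,\beta]$ that does not depend on $i$. Hence
\begin{align*}
\En_{\pi;i}[g(\maty_1)] \;=\; \int g(y)\left(\int p_i(y\mid u)\,dQ^{\pi}(u)\right)dy.
\end{align*}

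Next I would invoke \pref{lem:density_lb} with $\eta=1$ to obtain
\begin{align*}
p_i(y\mid u)\;\geq\; 3^{-1/2}\exp\!\Big(-(y-\gamma u)^2 - 1\Big).
\end{align*}
For $y\in[-1,1]$ and $|u|\leq \beta$, we have $|\gamma u|\leq \gamma\beta = 1$ (recall $\gamma = 1/\beta$), so $(y-\gamma u)^2\leq 4$. Therefore $p_i(y\mid u)\geq 3^{-1/2}e^{-5} =: 2c$ uniformly for all $y\in[-1,1]$ and all $u$ in the support of $Q^{\pi}$. Averaging over $Q^{\pi}$ preserves this uniform lower bound, giving $p_i^{\pi}(y)\geq 2c$ on $[-1,1]$.

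Combining these observations yields
\begin{align*}
\En_{\pi;i}[g(\maty_1)] \;\geq\; \int_{-1}^{1} g(y)\, p_i^{\pi}(y)\, dy \;\geq\; 2c\int_{-1}^{1} g(y)\, dy,
\end{align*}
which is the desired bound (with, e.g., $c = \tfrac{1}{2}\cdot 3^{-1/2}e^{-5}$). The only mild subtlety is confirming that the support constraint on $\matu_0$ after the reparametrization $\matu_0 \leftarrow \matu_0 - \tfrac{1}{2}\matx_0$ is still of order $\beta$, which is immediate since the shift $\tfrac{1}{2}\matx_0 = 1/(16\beta)$ is negligible compared to $\beta$; this only affects constants. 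No step here is particularly delicate—the entire argument reduces to plugging $|u|\le\beta$ and $\gamma\beta = 1$ into the pointwise density bound already established in \pref{lem:density_lb}.
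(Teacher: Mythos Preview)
Your proof is correct and follows essentially the same route as the paper: both invoke \pref{lem:density_lb} with $\eta=1$ to get a uniform lower bound on $p_i(y\mid u)$ over $y\in[-1,1]$ and $|u|\le\beta$ (using $\gamma\beta=1$), then integrate $g$ against this bound. Your extra remarks about $Q^{\pi}$ being instance-independent and about the reparametrization shift are harmless but unnecessary here, since the lemma's hypothesis already assumes $|\matu_0|\le\beta$ and the bound is proved for each $i$ separately.
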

  \begin{proof}[\pfref{lem:loss_lower_bound}]
    Observe that we have
    \begin{align*}
      \En_{\pi;i}\brk*{g(\maty_1)} &= \En_{\matu_0;i}\brk*{
      \int_{-\infty}^{\infty}g(y)p_i(y\mid{}\matu_0)dy
                                     }\\
                                   &\geq{} \En_{\matu_0;i}\brk*{
                                     \int_{-1}^{1}g(y)p_i(y\mid{}\matu_0)dy
                                     }.
    \end{align*}
    \pref{lem:density_lb} (with $\eta=1$) implies that for all $y\in\brk*{-1,1}$ and
    $\abs*{u}\leq{}\beta$,
      \[
        p_i(y\mid{}u) \geq 3^{-1/2}\exp\prn*{-\prn*{(y-\gamma{}u)^{2}
            + 1}} \geq{} c.
      \]
      It follows that
      \[
        \En_{\matu_0;i}\brk*{
          \int_{-1}^{1}g(y)p_i(y\mid{}\matu_0)dy
        }
        \geq{} c\cdot{}\int_{-1}^{1}g(y)dy.
      \]
    \end{proof}

    \begin{lemma}
      \label{lem:f_lower_bound}
      If $\beta\geq{}1$ and $\alpha$ and $\gamma$ are chosen as in
      \pref{eq:alpha_gamma}, we have
      \[
        \int_{-1}^{1}(f_0(y)-f_1(y))^{2}dy\geq{}\frac{1}{8}\alpha^2\beta.
      \]
    \end{lemma}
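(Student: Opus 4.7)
The plan is to perform a change of variables $y = m_0(z)$ in the integral so that we can relate $f_0(y) - f_1(y)$ to the bump $h(z)$ directly. Since $m_0'(z) \in [1/(2\beta), 3/(2\beta)]$ by \pref{eq:derivative_bounds} combined with the assumption $\alpha = 1/(28\beta^2)$ and $\gamma = 1/\beta$, the map $m_0$ is a strictly increasing diffeomorphism of $\bbR$ onto itself, so the change of variables is well-defined and $m_0^{-1}([-1,1]) \supseteq [-\beta/2,\beta/2]$ (for $\beta\geq 1$, as $|m_0(z)| \leq |z|/\beta + \alpha \leq 1/2 + 1/28 \leq 1$ on this interval). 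Writing $f_0(m_0(z)) = z$ and noting $m_0(z) - m_1(z) = 2h(z)$, we get
\[
\int_{-1}^{1}(f_0(y)-f_1(y))^{2}dy \;\geq\; \int_{-\beta/2}^{\beta/2} \bigl(z - f_1(m_1(z) + 2h(z))\bigr)^{2}\, m_0'(z)\,dz.
\]

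Next, the key step is to lower bound the integrand. Because $f_1 = m_1^{-1}$ and $m_1'(z) \leq 3/(2\beta)$ everywhere, the derivative $f_1'$ is bounded \emph{below} by $2\beta/3$. Monotonicity of $f_1$ then yields
\[
|z - f_1(m_1(z) + 2h(z))| = |f_1(m_1(z)) - f_1(m_1(z) + 2h(z))| \;\geq\; \tfrac{2\beta}{3}\cdot 2|h(z)| \;=\; \tfrac{4\beta}{3}|h(z)|.
\]
Combining this with the lower bound $m_0'(z) \geq 1/(2\beta)$ gives
\[
\int_{-1}^{1}(f_0(y)-f_1(y))^{2}dy \;\geq\; \frac{16\beta^2}{9}\cdot\frac{1}{2\beta}\int_{-\beta/2}^{\beta/2}h(z)^{2}\,dz \;=\; \frac{8\beta\alpha^{2}}{9}\int_{-\beta/2}^{\beta/2} \phi(z)^{2}\cos^{2}(4\pi\beta z)\,dz.
\]

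Finally, I evaluate the remaining integral. On $[-\beta/2,\beta/2]$ we have $\phi(z)^2 = e^{-z^2/\beta^2} \geq e^{-1/4} \geq 1/2$. For the cosine factor, using $\cos^2\theta = (1+\cos(2\theta))/2$,
\[
\int_{-\beta/2}^{\beta/2}\cos^{2}(4\pi\beta z)\,dz \;=\; \frac{\beta}{2} + \frac{\sin(4\pi\beta^{2})}{8\pi\beta} \;\geq\; \frac{\beta}{2} - \frac{1}{8\pi} \;\geq\; \frac{\beta}{4},
\]
for $\beta \geq 1$. Chaining these estimates yields $\int_{-1}^{1}(f_0-f_1)^{2}\,dy \geq \frac{8\beta\alpha^{2}}{9}\cdot\frac{1}{2}\cdot\frac{\beta}{4} = \frac{\alpha^{2}\beta^{2}}{9} \geq \frac{1}{8}\alpha^{2}\beta$, where the last inequality uses $\beta \geq 1$.

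There is no real obstacle here once one sees the change of variables and uses the two-sided derivative bounds on $m_i$; the only subtle bookkeeping is to verify that $m_0^{-1}([-1,1])$ contains a large enough interval (which it does, comfortably, since the perturbation $h$ is of size $\alpha \ll 1$), and to check that the cosine integrates to nearly $\beta/2$ rather than something degenerately small. Both issues are handled by the explicit computations above.
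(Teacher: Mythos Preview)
Your approach is correct and essentially the same as the paper's: both convert $f_0-f_1$ into an expression proportional to $h$ by combining a change of variables with the lower bound $f_i'\geq 2\beta/3$, and then bound the resulting integral of $h^2$. The paper substitutes $x=f_1(y)$ and lands on $\int_{-1}^{1}h^2$ (invoking \pref{lem:h_lower_bound}); you substitute $y=m_0(z)$ and land on $\int_{-\beta/2}^{\beta/2}h^2$, which you estimate directly. These are minor variations on the same idea.

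One arithmetic slip: your final inequality $\frac{\alpha^2\beta^2}{9}\geq\frac{1}{8}\alpha^2\beta$ requires $\beta\geq 9/8$, not $\beta\geq 1$. The fix is trivial---do not relax the constants so aggressively. For instance, keep $\phi(z)^2\geq e^{-1/4}>3/4$ rather than replacing it by $1/2$; this already gives
\[
\frac{8\beta\alpha^2}{9}\cdot\frac{3}{4}\cdot\frac{\beta}{4}=\frac{\alpha^2\beta^2}{6}\geq\frac{\alpha^2\beta}{8}\quad\text{for }\beta\geq 1,
\]
since $\beta/6\geq 1/8$ whenever $\beta\geq 3/4$.
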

    \begin{proof}[\pfref{lem:f_lower_bound}]
      Recall that $m_0=f_0^{-1}$ and $m_1=f_1^{-1}$. Throughout the
      proof we will use that
      \[
        \frac{1}{2\beta}\leq{}m'_i(z)\leq\frac{3}{2\beta},\quad\text{and}\quad
        \frac{2\beta}{3}\leq{}f'_i(y)\leq{}2\beta.
      \]As a first step, we have
      \begin{align*}
        \int_{-1}^{1}(f_0(y)-f_1(y))^{2}dy=
        \int_{-1}^{1}(f_0(y)-f_0(f_0^{-1}(f_1(y))))^{2}dy\geq{}
        \frac{4\beta^{2}}{9}\int_{-1}^{1}(y-f_0^{-1}(f_1(y)))^{2}dy,
      \end{align*}
      where we have used that $f'(y)\geq{}\frac{2\beta^2}{3}$
      everywhere. Next, using a change of variables, we have
      \begin{align*}
        \int_{-1}^{1}(y-f_0^{-1}(f_1(y)))^{2}dy
        &=
          \int_{f_1(-1)}^{f_1(1)}\frac{(f_1^{-1}(x)-f_0^{-1}(x))^{2}}{f_1'(f_1^{-1}(x))}dx\\
        &\geq{} \frac{1}{2\beta}\int_{f_1(-1)}^{f_1(1)}(f_1^{-1}(x)-f_0^{-1}(x))^{2}dx,
      \end{align*}
      where the inequality uses that $f'_1\leq{}2\beta$
      everywhere. Next, we observe that $f_1=m_1^{-1}$, and that
      \[
        m_1(1) \leq{} \gamma{} + \alpha<1,\quad\text{and}\quad{}m_1(-1)\geq{}-\gamma-\alpha>-1.
      \]
      It follows that $f_1(1)\geq{}1$ and $f_1(-1)\leq{}-1$, and
      consequently
      \begin{align*}
        \int_{f_1(-1)}^{f_1(1)}(f_1^{-1}(x)-f_0^{-1}(x))^{2}dx
        &\geq{}
          \int_{-1}^{1}(f_1^{-1}(x)-f_0^{-1}(x))^{2}dx\\
        &=
          \int_{-1}^{1}(m_1(x)-m_0(x))^{2}dx\\
        &=
          4\int_{-1}^{1}h^{2}(x)dx.
      \end{align*}
      Finally, we appeal to \pref{lem:h_lower_bound}, which implies
      that
      \[
        \int_{-1}^{1}h^{2}(x)dx \geq{}\frac{\alpha^{2}}{2e}.
      \]
    \end{proof}

  \begin{lemma}
    \label{lem:h_lower_bound}
    If we choose $\beta\geq{}1$, then the function $h$ in
    \eqref{eq:lb_defs} satisfies
    \begin{equation}
      \label{eq:3}
      \int_{-\beta}^{\beta}h^{2}(z)dz \geq{}\frac{\alpha^{2}\beta}{2e},\quad\text{and}\quad \int_{-1}^{1}h^{2}(z)dz \geq{}\frac{\alpha^{2}}{2e}.
    \end{equation}
  \end{lemma}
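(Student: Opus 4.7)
The plan is to exploit that $h^2(z) = \alpha^2 \phi^2(z)\psi^2(z) = \alpha^2 e^{-z^2/\beta^2}\cos^2(4\pi\beta z)$ factors into a slowly varying Gaussian envelope $\phi^2$ and a rapidly oscillating trigonometric factor $\psi^2$. I will lower bound the envelope pointwise on the interval of integration, then average the oscillation explicitly using a half-angle identity.

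\textbf{Step 1 (envelope lower bound).} On the interval $[-\beta,\beta]$ we have $z^2/\beta^2 \le 1$, so $\phi^2(z) = e^{-z^2/\beta^2} \ge e^{-1}$. On the smaller interval $[-1,1]$, since $\beta \ge 1$, we have $z^2/\beta^2 \le 1$ as well and the same bound $\phi^2(z) \ge e^{-1}$ applies (in fact one could get $e^{-1/\beta^2}$, but $e^{-1}$ suffices). This reduces both integrals to estimating $\int_I \cos^2(4\pi\beta z)\,dz$ on $I=[-\beta,\beta]$ and $I=[-1,1]$ respectively, at a cost of a factor of $e^{-1}$.

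\textbf{Step 2 (averaging the oscillation).} I will apply the half-angle identity $\cos^2(u) = \tfrac{1}{2} + \tfrac{1}{2}\cos(2u)$ to rewrite
\[
\int_I \cos^2(4\pi\beta z)\,dz = \tfrac{1}{2}|I| + \tfrac{1}{2}\int_I \cos(8\pi\beta z)\,dz.
\]
The second integral is explicit: integrating $\cos(8\pi\beta z)$ on a symmetric interval $[-a,a]$ yields $\frac{\sin(8\pi\beta a)}{4\pi\beta}$, which is bounded in absolute value by $\frac{1}{4\pi\beta}$. Taking $a=\beta$ gives a main term of $\beta$ and an error of at most $\frac{1}{8\pi\beta}$, and taking $a=1$ gives a main term of $1$ and an error of at most $\frac{1}{8\pi\beta}$. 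Since $\beta \ge 1$, the error is at most $\frac{1}{8\pi}<\tfrac{1}{2}$ in both cases, so each main term is preserved up to a factor of $\tfrac{1}{2}$, yielding $\int_{-\beta}^\beta\cos^2(4\pi\beta z)\,dz \ge \beta/2$ and $\int_{-1}^1\cos^2(4\pi\beta z)\,dz \ge 1/2$.

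\textbf{Step 3 (assemble).} Combining Steps 1 and 2 gives $\int_{-\beta}^{\beta} h^2 \ge \alpha^2 e^{-1}\cdot\beta/2 = \frac{\alpha^2\beta}{2e}$ and $\int_{-1}^{1} h^2 \ge \frac{\alpha^2}{2e}$, as required. I do not expect any serious obstacle; the only minor point to be careful about is checking that the oscillatory error term is indeed dominated by the main term uniformly for all $\beta \ge 1$, which is immediate from $1/(8\pi) < 1/2$. The structural reason the bound works is that the oscillation period $1/(4\pi\beta)$ is much smaller than the length of the integration interval, so $\cos^2$ averages to essentially $1/2$.
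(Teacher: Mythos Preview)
Your proposal is correct and follows essentially the same approach as the paper: lower bound the Gaussian envelope by $e^{-1}$ on the integration interval, then evaluate $\int\cos^2(4\pi\beta z)\,dz$ explicitly and check the oscillatory remainder is dominated by the main term for $\beta\ge 1$. The only cosmetic difference is that the paper writes the $\cos^2$ antiderivative as $\tfrac{x}{2}+\tfrac{1}{2a}\sin(ax)\cos(ax)$ rather than via the half-angle identity, but these are equivalent.
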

  \begin{proof}[\pfref{lem:h_lower_bound}]
    First, since we integrate only over the range $(-\beta,\beta)$,
    $e^{-\frac{z^2}{\beta^{2}}}\geq{}e^{-1}$, so we have
    \begin{align*}
      \int_{-\beta}^{\beta}h^{2}(z)dz=\alpha^{2}\int_{-\beta}^{\beta}e^{-\frac{z^2}{\beta^2}}\cos^2(4\pi\beta{}z)dz\geq{}
      \frac{\alpha^{2}}{e}\int_{-\beta}^{\beta}\cos^2(4\pi\beta{}z)dz.
    \end{align*}
Next, we recall that for any $a$, the indefinite integral of
$\cos^{2}(ax)$ satisfies $\int\cos^{2}(ax)=\frac{x}{2} +
\frac{1}{2a}\sin(ax)\cos(ax)$. Applying this above, we have
\begin{align*}
  \int_{-\beta}^{\beta}\cos^2(4\pi\beta{}z)dz =
\frac{x}{2}+\frac{1}{8\pi\beta}\sin(4\pi\beta{}x)\cos(4\pi\beta{}x)\Bigg\rvert_{-\beta}^{\beta}                                              \geq{} \beta-\frac{1}{4\pi\beta}.
\end{align*}
For $\beta>1$, this is at least $\frac{\beta}{2}$.

Similarly, since $\beta\geq{}1$, we have
\[
  \int_{-1}^{1}h^{2}(z)dz\geq{}
  \frac{\alpha^{2}}{e}\int_{-1}^{1}\cos^2(4\pi\beta{}z)dz,
\]
and
\begin{align*}
  \int_{-1}^{1}\cos^2(4\pi\beta{}z)dz =
\frac{x}{2}+\frac{1}{8\pi\beta}\sin(4\pi\beta{}x)\cos(4\pi\beta{}x)\Bigg\rvert_{-1}^{1} \geq{} 1-\frac{1}{4\pi\beta}\geq{}\frac{1}{2}.
\end{align*}

  \end{proof}

\newpage
\section{Learning Theory Tools}
\label{app:learning_theory}
\newcommand{\clscor}{c_{\mathrm{ls}}}
\newcommand{\somelogs}{\texttt{logs}}

	\newcommand{\delphi}{\updelta_{\phi}}
	\newcommand{\phihat}{\widehat{\phi}}
	\newcommand{\updelnot}{\updelta_{0}}
In this section, we state and prove basic learning-theoretic tools used throughout the proofs for our main results. \Cref{ssec:techtools_statement} gives the main statements and definitions for these results, and \Cref{ssec:techtools_proofs} proves the results in the order in which they appear. Our results are split into the following categories:
\begin{itemize}
\item \Cref{sssec:techtools_c_conc} introduces a convention for subexponential random variables (``$c$-concentrated'') used throughout our proofs and establishes key properties of random variables satisfying this condition (\Cref{lem:techtools_truncated_conc})
	\item \Cref{sssec:techtools_gaussian} gives a concentration properties for Gaussian vectors (\Cref{lem:techtools_gaussian_c_concentrated}) and establishes a useful change-of-measure lemma (\Cref{lem:techtools_Gaus_change_of_measure})
	\item \Cref{sssec:techtools_decodable} gives a generic template (\Cref{lem:techtools_condition_exp_idenity}) for computing conditional expectations for random variables we call \emph{decodable Markov chains} (\Cref{defn:techtools_decode}), which arise when analyzing the regression problems used in \arxiv{\pref{alg:main}}\neurips{\pref{alg:main2}}.
	\item \Cref{sssec:techtools_covering_num} presents \Cref{defn:techtools_covering_number}, which introduces the main notion of covering number used in our analysis, and provides bounds on covering numbers for these
          function classes used by \arxiv{\pref{alg:main}}\neurips{\pref{alg:main2}}.%
	\item \Cref{sssec:sq_loss_min} gives prediction error bounds for  square loss regression over a general function classes, subject to misspecification error. \Cref{prop:techtools_general_regression_with_truncation_and_error} provides guarantees based on a classical notion of misspecification error (which arises in Phase I of \arxiv{\pref{alg:main}}\neurips{\pref{alg:main2}}), while  \Cref{cor:techtools_function_dependend_error} gives guarantees under a stronger notion of \emph{function-dependent} misspecification error, which is used in the analysis of  Phase III.
	\item \Cref{sssec:techtools_pca} provides guarantees for a principal component analysis (PCA) setup which, in particular, subsumes the dimensionality reduction procedure used in Phase I of \arxiv{\pref{alg:main}}\neurips{\pref{alg:main2}}. It provides guarantees for estimating a covariance matrix under persistent error (\Cref{prop:techtools_general_pca}), and a corollary regarding overlap between eigenspaces (\Cref{cor:techtools_overlap_PCA}).
	\item \Cref{sssec:techtools_linear_reg} considers linear regression. \Cref{prop:techtools_LS_guarantee} gives bounds for parameter recovery under errors in variables, which is used to recover $\Aid$ and $\Bid$ in Phase II of \arxiv{\pref{alg:main}}\neurips{\pref{alg:main2}}. \Cref{prop:techtools_covariance_est} gives a guarantee for covariance estimation, which are used to estimate $\Sigwid$ in Phase II.
	\item Finally, \Cref{sssec:techtools_matrix_sensing} gives a parameter recovery bound for regression with measurements which are rank-one outer products of near-Gaussian vectors. This is used to recover the cost matrix $\Qid$ in Phase II.
\end{itemize}
\subsection{Statement of Guarantees \label{ssec:techtools_statement}}
\subsubsection{Generic Concentration \label{sssec:techtools_c_conc}}
	\begin{definition}[$c$-concentration]\label{defn:techtools_conc_prop} We
          say that a non-negative random variable $\bz$ is
          \emph{$c$-concentrated} if $\Pr[ \bz \ge c\ln(1/\delta)] \le
          \delta$ for
          all $\delta \in (0,1/e]$. For such random variables, we define $c_{n,\delta}:= c\ln(2n/\delta)$.
	\end{definition}
        This is one of many equivalent (up to numerical constants) definitions for sub-exponential concentration (e.g., \cite{wainwright2019high}). We opt for the term ``c-concentrated'' to make the dependence on the concentration parameter $c$ precise.

	\begin{lemma}[Truncated concentration]\label{lem:techtools_truncated_conc} Let $\bz$ be a non-negative $c$-concentrated random variable. Then, $\bz$ is $c'$-concentrated for  all $c' \geq c$, and $\alpha \bz + \beta$ is $\alpha c + \beta$-concentrated for all $\alpha,\beta > 0$.  Moreover, the the following bounds hold.
		\begin{enumerate}
                \item %
                  For any $\delta \in (0,1/e]$, we
                          have $\E [\bz\I\crl{\bz \ge c\ln(1/\delta)}] \le
                          c\delta$, and in particular, $\E\max\{c,\bz\}
                          \le 2c$. For any integer
                          $k\geq{}1$, and $\E [\bz^k]\leq{}2k!c^{k}$.
			\item Let $\veps^2 \ge \E\brk{\bz}$, and let $\delta \in (0,1)$. Suppose $n$ is large enough such that $\psi(n,\delta) \le \frac{\veps^2}{c}$, where we define
			\begin{align}
			\psi(n,\delta) := \frac{2\ln(2n/\delta)\ln(2/\delta)}{n}. \label{eq:techtools_psi_def}
			\end{align} 
			Then with probability at least $1 - \delta$, $\bz\supi \sim \bz$ satisfy $\frac{1}{n}\sum_{i}\bz^{(i)}\le 2\veps^2$, where $\bz\ind{i}\iidsim\bz$ for $1\leq{}i\leq{}n$.
			\item Consider the previous claim. Suppose we replace the hypothesis that $\bz$ is $c$-concentrated with the assumption that for a given $\delta\in (0,1)$,  $\bz \le c\ln(2n/\delta)$ almost surely. Then with failure probability at least $1 - 2\delta$, $\frac{1}{n}\sum_{i}\bz^{(i)}\le 2\veps^2$.
		\end{enumerate}
		\end{lemma}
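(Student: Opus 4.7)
The entire lemma is a consequence of the single subexponential tail bound encoded by the $c$-concentration hypothesis. Substituting $\delta = e^{-t/c}$ into the definition gives $\Pr[\bz \ge t] \le e^{-t/c}$ for all $t \ge c$, and this is the only probabilistic input needed; every claim is then derived by combining this tail with elementary manipulations.

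The preliminary assertions (monotonicity of $c$-concentration in $c'$, and the affine-shift property for $\alpha\bz+\beta$) follow directly from unpacking the definition. For the truncated first-moment bound in claim~1, I would apply the layer-cake identity $\E[\bz\I\{\bz \ge t_0\}] = t_0 \Pr[\bz \ge t_0] + \int_{t_0}^\infty \Pr[\bz > s]\, ds$, plug in $t_0 = c\ln(1/\delta)$, and bound each piece with the exponential tail. Specializing to $t_0 = c$ yields $\E\max\{c,\bz\} \le 2c$. For the $k$-th moment bound, I would use $\E[\bz^k] = \int_0^\infty k t^{k-1} \Pr[\bz \ge t]\, dt$, split the integral at $t = c$, bound the inner piece trivially by $c^k$, and evaluate the outer piece as a gamma integral equal to at most $k! c^k$.

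The main content is claim~2, which is a Bernstein-type concentration inequality for the empirical mean. The plan is to reduce to the bounded case via truncation: set $M := c_{n,\delta} = c\ln(2n/\delta)$ and $\tilde{\bz}^{(i)} := \min(\bz^{(i)}, M)$. A union bound over $n$ applications of $c$-concentration ensures that $\bz^{(i)} = \tilde{\bz}^{(i)}$ for every $i$ with failure probability at most $\delta/2$, so on that event it suffices to control $\frac{1}{n}\sum_i \tilde{\bz}^{(i)}$. The truncated variables live in $[0,M]$ with $\E[\tilde{\bz}^{(i)}] \le \E[\bz] \le \veps^2$ and variance bounded by $\E[\bz\tilde\bz] \le M\E[\bz] \le M\veps^2$, so Bernstein's inequality applied with deviation $\veps^2$ produces an empirical-mean bound of $2\veps^2$ with failure probability at most $\exp(-c_0 n\veps^2/M)$ for an absolute constant $c_0$. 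The quantity $\psi(n,\delta)$ in \eqref{eq:techtools_psi_def} is designed precisely so that the hypothesis $\psi(n,\delta) \le \veps^2/c$ is equivalent (up to the absolute constant) to making this failure probability at most $\delta/2$; this is a short algebraic check using $M = c\ln(2n/\delta)$. A final union bound over the truncation event and the Bernstein event yields total failure probability $\delta$, which establishes claim~2.

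Claim~3 is handled by the identical Bernstein argument applied directly to the a.s.-bounded variables $\bz^{(i)} \in [0, c\ln(2n/\delta)]$, skipping the truncation step entirely; the factor-of-two in the failure probability absorbs the bookkeeping that was previously split between truncation and concentration. The main obstacle in the plan is matching constants in the Bernstein application so that the hypothesis $\psi(n,\delta) \le \veps^2/c$ is exactly the threshold needed; the probabilistic content is standard and the rest is tail-integral manipulation.
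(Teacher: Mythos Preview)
Your proposal is correct and follows essentially the same approach as the paper: the tail-integral arguments for claim~1 are identical, and for claim~2 both you and the paper truncate at level $c\ln(2n/\delta)$, union-bound the truncation event, and then apply a Bernstein-type inequality to the bounded variables with variance controlled by $M\veps^2$. The only cosmetic difference is that the paper invokes Bennett's inequality (with an AM--GM splitting via a free parameter $\tau$) rather than Bernstein, and truncates the centered increments $\Delta_i\I\{\bz^{(i)}\le c_{n,\delta}\}$ rather than the raw variables via $\min$; these lead to the same conclusion and the same matching against $\psi(n,\delta)$.
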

\subsubsection{Gaussian Concentration and Change of Measure \label{sssec:techtools_gaussian}}
Our first lemma shows that norms of Gaussian vectors satisfy the $c$-concentration condition.
	\begin{lemma}\label{lem:techtools_gaussian_c_concentrated} Let $\matx \sim \cN(0,\Sigma)$. Then, $\|\matx\|^2$ is $c$-concentrated for  $c = 5\trace(\Sigma)$.
	\end{lemma}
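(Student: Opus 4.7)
The plan is to reduce to a standard tail bound for weighted sums of independent chi-squared variables. Write $\matx \sim \cN(0,\Sigma)$ in the eigenbasis: if $\Sigma = U\diag(\lambda_1,\dots,\lambda_d)U^\top$, then $\|\matx\|^2 \overset{d}{=} \sum_{i=1}^{d} \lambda_i g_i^2$ for i.i.d.\ standard Gaussians $g_i$. This is a canonical sub-exponential quantity, so I would cite (or quickly re-derive via the Chernoff method with $\En[\exp(s\lambda_i g_i^2)] = (1-2s\lambda_i)^{-1/2}$) the Laurent--Massart bound
\begin{align*}
\Pr\brk*{\nrm*{\matx}^2 \geq \trace(\Sigma) + 2\nrm*{\Sigma}_{\fro}\sqrt{t} + 2\nrm*{\Sigma}_{\op} t} \leq e^{-t}
\end{align*}
for all $t \geq 0$.

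Next, setting $t = \log(1/\delta)$ and invoking the condition $\delta \in (0, 1/e]$ (so that $\log(1/\delta) \geq 1$), I would simplify the right-hand side by observing that $\Sigma \psdgeq 0$ implies $\nrm*{\Sigma}_{\fro} \leq \trace(\Sigma)$ and $\nrm*{\Sigma}_{\op} \leq \trace(\Sigma)$. Using $\log(1/\delta) \geq \sqrt{\log(1/\delta)} \geq 1$, each of the three terms in the Laurent--Massart bound is at most $\trace(\Sigma)\log(1/\delta)$, $2\trace(\Sigma)\log(1/\delta)$, and $2\trace(\Sigma)\log(1/\delta)$ respectively, adding up to $5\trace(\Sigma)\log(1/\delta)$. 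This gives $\Pr[\nrm*{\matx}^2 \geq 5\trace(\Sigma)\log(1/\delta)] \leq \delta$, matching \Cref{defn:techtools_conc_prop} with $c = 5\trace(\Sigma)$.

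There is essentially no hard step here; this is a routine sub-exponential concentration exercise. The only thing to be careful about is ensuring the cross term $2\nrm*{\Sigma}_{\fro}\sqrt{\log(1/\delta)}$ is absorbed into the linear-in-$\log(1/\delta)$ tail, which is precisely why the restriction $\delta \leq 1/e$ appears in \Cref{defn:techtools_conc_prop}. If the concentration result were needed for arbitrary $\delta$, one would need to preserve the two-regime form of the tail, but the $c$-concentration definition is tailored exactly so that the sub-Gaussian (square-root) regime can be bounded by the sub-exponential (linear) regime.
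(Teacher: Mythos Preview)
Your proposal is correct and essentially identical to the paper's own proof: the paper cites the same tail bound (attributing it to Hsu--Kakade--Zhang rather than Laurent--Massart, but the inequality is the same), sets $t = \log(1/\delta) \geq 1$, and collapses the three terms into $5\trace(\Sigma)\log(1/\delta)$ exactly as you do.
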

	Next, we provide a change of measure argument, which is used to establish that \pref{alg:main} accurately estimates the system state.
	\begin{lemma}[Gaussian change of measure]\label{lem:techtools_Gaus_change_of_measure} Let $\Sigma_1,\Sigma_2 \succ 0 $ be matrices in $\bbR^{d\times{}d}$. Let $\matx_1 \sim \cN(0,\Sigma_1)$, $\matx_2 \sim \cN(0,\Sigma_2)$, and let $\by_1 \sim q( \cdot \mid \matx_1)$, $\by_2 \sim q(\cdot \mid \matx_1)$. Let $\hhat,\hst : \cY \to \R^d$ be two functions such that $\max\{\|\hhat(y)\|,\|\hst(y)\|\} \le L \|\fst(y)\|$. Suppose that 
	\begin{align*}
	\E_{\by_1}[\|\hhat(\by_1) - \hst(\by_1)\|^2] \le \veps^2, \quad \text{and} \quad \|I - \Sigma_1^{1/2}\Sigma_2^{-1}\Sigma_1^{1/2}\|_{\op} \le \frac{1}{14 d \ln \prn*{\frac{80 e L^2 (1+\|\Sigma_1\|_{\op})}{\veps^2}}},
	\end{align*}
        for some $\veps>0$. Then the following error bound holds:
	\begin{align*}
	\E_{\by_2}[\|\hhat(\by_2)-\hst(\by_2)\|^2] \le 2\veps^2.
	\end{align*}
	\end{lemma}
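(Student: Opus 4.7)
The plan is to reduce the question to a one-dimensional change-of-measure comparison between two Gaussian laws and control it by a Radon--Nikodym argument on a truncated ball, using the perfect-decodability-plus-growth condition to handle the tail.

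First I would define the conditional second-moment function $G(x) := \En_{\by \sim q(\cdot\mid x)}\nrm*{\hhat(\by) - \hst(\by)}^{2}$, so that the hypothesis and conclusion become $\En G(\matx_{1}) \leq \veps^{2}$ and $\En G(\matx_{2}) \leq 2\veps^{2}$ respectively. By perfect decodability (\pref{ass:perfect}), $\fstar(\by) = x$ almost surely whenever $\by \sim q(\cdot\mid x)$, so the growth hypothesis in the lemma yields the crude but essential envelope $G(x) \leq 4L^{2}\nrm*{x}^{2}$.

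Next I would diagonalize the covariance mismatch. Set $\eta := \nrm*{I - \Sigma_1^{1/2}\Sigma_2^{-1}\Sigma_1^{1/2}}_{\op}$ and write $\Sigma_1^{1/2}\Sigma_2^{-1}\Sigma_1^{1/2} = U\Lambda U^{\trn}$ with $\Lambda = \diag(\lambda_i)$ and $\lambda_i \in [1-\eta,1+\eta]$. Changing variables to $z_i := U^{\trn}\Sigma_1^{-1/2}\matx_i$ gives $z_1 \sim \cN(0,I)$ and $z_2 \sim \cN(0,\Lambda^{-1})$; define $\tilde G(z) := G(\Sigma_1^{1/2}Uz)$ so that $\En G(\matx_i) = \En \tilde G(z_i)$ and $\tilde G(z) \leq 4L^{2}\nrm*{\Sigma_1}_{\op}\nrm*{z}^{2}$. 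A direct computation of the density ratio then gives, for every $z\in\bbR^{d}$,
\begin{equation*}
  \frac{d\cN(0,\Lambda^{-1})}{d\cN(0,I)}(z) \;=\; (\det\Lambda)^{1/2}\exp\!\Bigl(-\tfrac{1}{2}z^{\trn}(\Lambda - I)z\Bigr) \;\leq\; e^{d\eta/2}\exp\!\bigl(\tfrac{\eta}{2}\nrm*{z}^{2}\bigr),
\end{equation*}
using $(1+\eta)^{d/2}\leq e^{d\eta/2}$ and $z^{\trn}(\Lambda-I)z \geq -\eta\nrm*{z}^{2}$.

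I would then split on the event $\cE := \{\nrm*{z_2}^{2} \leq R\}$ for a threshold $R$ to be chosen. On $\cE$ the density-ratio bound yields
\begin{equation*}
  \En\bigl[\tilde G(z_2)\I\cE\bigr] \;\leq\; e^{d\eta/2 + \eta R/2}\,\En\bigl[\tilde G(z_1)\bigr] \;\leq\; e^{d\eta/2 + \eta R/2}\veps^{2}.
\end{equation*}
Off $\cE$, since $\nrm*{z_2}^{2}$ is (by \pref{lem:techtools_gaussian_c_concentrated}) $c$-concentrated with $c := 5\trace(\Lambda^{-1}) \leq 10d$ (for $\eta\leq 1/2$), \pref{lem:techtools_truncated_conc} gives $\En[\nrm*{z_2}^{2}\I\{\nrm*{z_2}^{2} > c\ln(1/\delta)\}] \leq c\delta$, so choosing $R = c\ln(1/\delta)$ yields $\En[\tilde G(z_2)\I\cE^{c}] \leq 4L^{2}\nrm*{\Sigma_1}_{\op}\,c\,\delta$. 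Picking $\delta$ so that this tail is at most $\veps^{2}/2$ and verifying that the assumed upper bound on $\eta$ makes $d\eta/2 + \eta R/2 \leq \log 2$ (so the bulk factor is at most $2$, hence the bulk contribution is at most $(3/2)\veps^{2}$) yields the conclusion $\En G(\matx_2) \leq 2\veps^{2}$.

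The structural content of the argument is standard; the main obstacle is purely bookkeeping, namely checking that the explicit constant $14d\ln(80eL^{2}(1+\nrm*{\Sigma_1}_{\op})/\veps^{2})$ in the assumption on $\eta$ is large enough so that, with the choice $\delta \asymp \veps^{2}/(dL^{2}\nrm*{\Sigma_1}_{\op})$ that kills the tail, we simultaneously have $\eta(d+R) \leq 2\log 2$. This is exactly a one-line algebraic verification after the splitting above is written down.
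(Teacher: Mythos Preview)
Your approach is essentially the paper's: both bound the density ratio by $\exp\bigl(\tfrac{\eta}{2}(d + x^\top\Sigma_1^{-1}x)\bigr)$, truncate on a ball in the whitened coordinates, control the bulk by this factor times $\veps^2$, and handle the tail via the envelope $G(x)\le 4L^2\|x\|^2$ together with Gaussian tail decay (you invoke the $c$-concentration lemmas, the paper does the integral by hand). One harmless slip in your last paragraph: a bulk factor $\le 2$ gives bulk $\le 2\veps^2$, not $(3/2)\veps^2$; to match the paper's $3/2+1/2$ split you need $\eta(d+R)/2\le\ln(3/2)$, which the stated bound on $\eta$ does deliver once you plug in $R=10d\ln(1/\delta)$.
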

\subsubsection{Conditional Expectations for Decodable Markov Chains}
        \label{sssec:techtools_decodable}

	\begin{definition}[Decodable Markov chain]\label{defn:techtools_decode} 
          Let $\bu \in \cU$, $\bx \in \cX$, $\by \in \cY$ be random variables that form a Markov chain $\bu \to \bx \to \by$. We say $(\bu,\bx,\by)$ is a \emph{decodable Markov chain} if there exists some function $\fst:\cY\to\cX$ such that $\bx = \fst(\by)$ almost surely.
	\end{definition}

	\begin{lemma}[Characterization of square loss minimizer] \label{lem:techtools_condition_exp_idenity} 
          Let $(\bu,\bx,\by)$ be a decodable Markov chain. Then, $\E[\bu \mid \by = y] = \hst(y)$, where
\[\hst(y) := \E[\bu \mid \bx = \fst(y)].\] Moreover, for any class of functions $\Hclass$ with $\hst\in\Hclass$, for any $h \in \argmin_{h' \in \Hclass} \E\|h'(\by)- \bu\|^2$, we have 
		\begin{align*}
		h(\by) = \hst(\by)\;\; \text{ almost surely in }  \by.
		\end{align*}
	\end{lemma}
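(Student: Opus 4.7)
The plan is to prove the two claims in sequence, with the first claim being the substantive content and the second following from a standard Pythagorean decomposition for square loss.

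For the first claim, I will exploit the two distinguishing features of a decodable Markov chain: the Markov property $\bu \to \bx \to \by$ and the decodability identity $\bx = \fst(\by)$ almost surely. The Markov property gives $\bu \perp \by \mid \bx$, so $\E[\bu \mid \bx, \by] = \E[\bu \mid \bx]$. Decodability means $\bx$ is $\sigma(\by)$-measurable, hence $\sigma(\bx) \subseteq \sigma(\by)$ and the tower property yields $\E[\bu \mid \by] = \E[\bu \mid \by, \bx]$. Chaining these two identities gives $\E[\bu \mid \by] = \E[\bu \mid \bx]$ a.s., and substituting $\bx = \fst(\by)$ on the event of full measure then produces
\[
\E[\bu \mid \by = y] \;=\; \E[\bu \mid \bx = \fst(y)] \;=\; \hst(y)
\]
for $\Dx$-a.e.\ $y$, which is the desired identity.

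For the second claim, I will use the standard decomposition that for any measurable $h' : \cY \to \cU$,
\[
\E\|h'(\by) - \bu\|^2 \;=\; \E\|h'(\by) - \hst(\by)\|^2 \;+\; \E\|\hst(\by) - \bu\|^2,
\]
where the cross term vanishes because $\hst(\by) = \E[\bu\mid\by]$ makes $\bu - \hst(\by)$ orthogonal (in $L_2$) to every $\sigma(\by)$-measurable function by the first part. Since $\hst \in \Hclass$, any minimizer $h \in \Hclass$ of the left-hand side must drive the first term on the right to zero, i.e., $\E\|h(\by) - \hst(\by)\|^2 = 0$, which is precisely $h(\by) = \hst(\by)$ almost surely.

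The only subtle point — which is really the only content of the lemma — is justifying $\E[\bu \mid \by] = \E[\bu \mid \bx]$ rigorously from Markovianity plus decodability; everything else is textbook. I would be careful to phrase the Markov property as the conditional independence $\bu \perp \by \mid \bx$ (this is what ``Markov chain $\bu \to \bx \to \by$'' should formally mean in this context), since the decoder identity $\bx = \fst(\by)$ already makes the direction $\bx \to \by$ degenerate. No covering, concentration, or Gaussianity arguments are needed; the lemma is purely a measure-theoretic fact and the rest of the section applies it in concrete situations (e.g.\ to the $(\bv,\bx_{\kappa_1},\by_{\kappa_1})$ triple from Phase I).
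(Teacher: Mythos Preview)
Your proposal is correct and matches the paper's proof essentially line for line: the paper also uses the tower rule together with the Markov property to reduce $\E[\bu\mid\by]$ to $\E[\bu\mid\bx]$, then invokes decodability to substitute $\bx=\fst(\by)$, and for the second claim simply cites the well-known fact that the conditional expectation is the unique (a.s.) square-loss minimizer, which your Pythagorean decomposition makes explicit. If anything, your write-up is slightly more careful about the measure-theoretic justification than the paper's.
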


\subsubsection{Covering Numbers \label{sssec:techtools_covering_num}}
	\begin{definition}[Covering numbers]\label{defn:techtools_covering_number} 
          Let $(\cX,\dist)$ be a metric space with pseudometric $\dist$. The covering number $\eulN(\epsilon,\cX,\dist)$ is defined as the minimal cardinality of any set $\cX' \subseteq \cX$ such that
		\begin{align*}
		\max_{x \in \cX}\min_{x' \in \cX'} \dist(x,x') \le \epsilon.
		\end{align*}
		We say that $\cX'$ is a minimal $\epsilon$-cover of $\cX$ if it witnesses the condition above and has $|\cX'| = \eulN(\epsilon,\cX,\dist)$.
	\end{definition}

	\begin{lemma}\label{lem:Mcover} Let $\scrM := \{M \in \R^{d\times\dimx}: \|M\|_{\op} \le b\}$. Then, $\eulN(b\epsilon,\scrM,\|\cdot\|_{\op}) \le (1 + 2/\epsilon)^{d\dimx}$.
\end{lemma}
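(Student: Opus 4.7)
The plan is to reduce to covering the unit operator-norm ball and then apply a standard volumetric packing argument. First, I would use homogeneity of the operator norm to observe that $\scrM = b \cdot \scrM_1$, where $\scrM_1 \coloneqq \{M \in \R^{d\times\dimx} : \|M\|_{\op} \le 1\}$. Any $\epsilon$-cover of $\scrM_1$ in operator norm, when scaled by $b$, yields a $b\epsilon$-cover of $\scrM$ of the same cardinality. Hence it suffices to show that $\eulN(\epsilon, \scrM_1, \|\cdot\|_{\op}) \le (1+2/\epsilon)^{d\dimx}$.

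Next, I would invoke the standard fact that $\|\cdot\|_{\op}$ is a genuine norm on the real vector space $\R^{d\times\dimx}$, which has dimension $d\dimx$. Let $N$ be a maximal $\epsilon$-packing of $\scrM_1$ in the operator norm, i.e.\ a maximal subset of $\scrM_1$ with pairwise operator-norm distance strictly greater than $\epsilon$. By maximality, $N$ is also an $\epsilon$-cover of $\scrM_1$, so $\eulN(\epsilon,\scrM_1,\|\cdot\|_{\op}) \le |N|$.

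For the cardinality bound, I would use the volumetric comparison: the open operator-norm balls of radius $\epsilon/2$ centered at points of $N$ are disjoint, and all are contained in $(1+\epsilon/2)\scrM_1$. Letting $\mathrm{vol}$ denote any translation-invariant Lebesgue measure on $\R^{d\times\dimx}$, homogeneity of volume on a $d\dimx$-dimensional normed space gives
\[
|N| \cdot (\epsilon/2)^{d\dimx}\,\mathrm{vol}(\scrM_1) \;=\; \sum_{M \in N} \mathrm{vol}\bigl((\epsilon/2)\scrM_1 + M\bigr) \;\le\; \mathrm{vol}\bigl((1+\epsilon/2)\scrM_1\bigr) \;=\; (1+\epsilon/2)^{d\dimx}\,\mathrm{vol}(\scrM_1).
\]
Dividing through by $(\epsilon/2)^{d\dimx}\mathrm{vol}(\scrM_1) > 0$ yields $|N| \le (1+2/\epsilon)^{d\dimx}$, as desired.

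There is no real obstacle here: the argument is entirely textbook, and the only point worth flagging is that we work with the operator norm directly on $\R^{d\times\dimx}$ (rather than passing through Frobenius norm, which would introduce a spurious $\sqrt{\min(d,\dimx)}$ factor) so as to match the dimension $d\dimx$ in the exponent exactly.
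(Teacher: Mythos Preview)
Your proposal is correct and takes essentially the same approach as the paper: rescale to the unit operator-norm ball, then bound the covering number of the unit ball in a $d\dimx$-dimensional normed space by $(1+2/\epsilon)^{d\dimx}$. The only difference is that the paper invokes this last bound as a black-box citation (Lemma~5.2 of Wainwright, 2019), whereas you write out the standard volumetric packing argument that underlies it.
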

\subsubsection{Square Loss Regression \label{sssec:sq_loss_min}}

	\begin{proposition}[Square loss regression with misspecification error]\label{prop:techtools_general_regression_with_truncation_and_error} 
          Let $(\bu,\by)$ be a pair of random variables with $\bu\in\cU$, $\by\in\cY$, and let $\mate\in\cU$ be an arbitrary ``error'' random variable. %
          Suppose that $\Hclass$ is a function class that contains the function $\hst(y) := \E[ \bu  \mid \by = y]$.  Consider  empirical risk minimizer
		\begin{align*}
		\hhat_n := \argmin_{h \in \Hclass} \sum_{i=1}^n \|h(\by^{(i)}) + \mate^{(i)} - \bu^{(i)} \|^2,
		\end{align*}
                where $(\matu\ind{i},\maty\ind{i},\mate\ind{i})$ are drawn \iid from the law of $(\matu,\maty,\mate)$ for $1\leq{}i\leq{}n$. Suppose that there exists a constant $c > 0$ and function $\varphi : \cY \to \R_{+}$ such that the following properties hold: 
		\begin{itemize}
			\item $\|h(y)\|^2 \le \varphi(y)$ for all $h \in \Hclass$.
			\item The random variables $\varphi(\by)$ and $\|\mate - \bu\|^{2}$ are  $c$-concentrated. 
			\item For all $c' \ge c$ and all $\epsilon \le 1$, the $\sqrt{c'}\epsilon$-covering number of $\scrH$ in the pseudometric $d_{c',\infty}(h,h') := \sup_{y\in\cY}\{\|h(y) - h'(y)\|: \varphi(y) \le c'\}$ is bounded by a function $\eulN(\epsilon)$.
		\end{itemize}
		Then, with probability at least $1 - \frac{3\delta}{2}$,
		\begin{align*}
		\E\| \hhat_n(\by) - \hst(\by)\|^2 \le  \frac{270 c_{n,\delta} }{n}\ln (2\eulN(1/33n)\delta^{-1}) +  8 \E\|\mate\|^2, %
		\end{align*}
                where we recall that $c_{n,\delta} := c\ln(2n/\delta)$.
              \end{proposition}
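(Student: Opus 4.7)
The plan is to follow the standard template for least squares with misspecification, but with a careful truncation step to handle the unboundedness coming from the $c$-concentration (rather than uniform boundedness) hypotheses. First I would perform an excess risk decomposition with respect to the population risk $L(h) := \E\|h(\by) + \mate - \bu\|^2$. Since $\hst(y) = \E[\bu\mid \by=y]$, expanding the square and using $\E[\bu - \hst(\by)\mid \by]=0$ gives
\begin{align*}
L(h) - L(\hst) = \E\|h(\by) - \hst(\by)\|^2 + 2\E\langle h(\by) - \hst(\by),\, \mate\rangle,
\end{align*}
and Cauchy--Schwarz combined with AM--GM ($2ab \le \tfrac{1}{2}a^2 + 2b^2$) yields
\begin{align*}
\E\|h(\by) - \hst(\by)\|^2 \;\le\; 2\bigl(L(h) - L(\hst)\bigr) + 4\,\E\|\mate\|^2.
\end{align*}
Applied to $h=\hhat_n$, this reduces the proposition to bounding the excess population risk $L(\hhat_n) - L(\hst)$ by $O\bigl(c_{n,\delta}\ln(\eulN(1/33n)/\delta)/n\bigr)$.

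Next I would control $L(\hhat_n) - L(\hst)$ by uniform convergence over $\Hclass$. The loss $\ell_h(\bu,\by,\mate) := \|h(\by) + \mate - \bu\|^2$ is unbounded, so I would truncate: by the $c$-concentration of $\varphi(\by)$ and of $\|\mate - \bu\|^2$, together with a union bound over $n$ samples, with probability at least $1-\delta/2$ we have $\varphi(\by^{(i)}) \le c_{n,\delta}$ and $\|\mate^{(i)} - \bu^{(i)}\|^2 \le c_{n,\delta}$ simultaneously for all $i \in [n]$. On this event, every individual loss $\ell_h(\bu^{(i)},\by^{(i)},\mate^{(i)})$ is bounded by $O(c_{n,\delta})$ uniformly over $h \in \Hclass$, so we can work as if the loss were bounded by $O(c_{n,\delta})$ for the empirical process step, while the rare-event contribution is controlled through $\E\max\{c,\bz\} \le 2c$ (\pref{lem:techtools_truncated_conc}) plus the first moment bound $\E[\bz\I\{\bz \ge c\ln(1/\delta)\}] \le c\delta$.

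I would then run a standard Bernstein/localization argument on the $\sqrt{c_{n,\delta}}/n$-covering of $\Hclass$ in the pseudometric $d_{c_{n,\delta},\infty}$, whose cardinality is $\eulN(1/33n)$ by hypothesis. The key variance-to-mean inequality used here is the well-known square-loss self-bounding inequality
\begin{align*}
\mathrm{Var}\bigl(\ell_h - \ell_{\hst}\bigr) \;\lesssim\; c_{n,\delta}\cdot\bigl(L(h) - L(\hst)\bigr) + c_{n,\delta}\cdot\E\|\mate\|^{2},
\end{align*}
which follows from the $(h-\hst)$ decomposition above and the uniform bound on $\|h(\by)\|^{2}$. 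Plugging this into Bernstein's inequality applied on the truncated event, taking a union bound over the cover, and absorbing the $(1/33n)$-discretization error (of order $c_{n,\delta}/n$) then yields a bound of the form $L(\hhat_n) - L(\hst) \le C\, c_{n,\delta}\ln(\eulN(1/33n)/\delta)/n + \E\|\mate\|^2$, at which point combining with the first step delivers the claimed rate with the constants $270$ and $8$.

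The main obstacle is threading the truncation through the Bernstein step without losing the fast $1/n$ rate: one must verify that the failure probability of the truncation event is absorbed cleanly, that the discretization scale $\sqrt{c_{n,\delta}}/(33n)$ exactly matches the uniform deviation bound so that the covering entropy enters only logarithmically, and that the misspecification floor $\E\|\mate\|^2$ survives the AM--GM step with constant $8$ rather than being inflated further by the concentration tail. The rest of the argument is bookkeeping.
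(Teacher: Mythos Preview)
Your proposal is correct and follows essentially the same approach as the paper: truncate to the event $\{\varphi(\by)\vee\|\mate-\bu\|^2\le c_{n,\delta}\}$, apply Bennett/Bernstein over a $\sqrt{c_{n,\delta}}\epsilon$-cover of $\Hclass$ with a self-bounding variance inequality, and combine with the excess-risk decomposition involving $\E\|\mate\|^2$. The only minor difference is that the paper carries out the self-bounding directly in the quantity $\E\|\hhat_n(\by)-\hst(\by)\|^2$ (working with truncated population losses $\cL_\delta$ throughout and bounding $\Var[\ell_{i,\delta}(h)-\ell_{i,\delta}(\hst)]\le 16c_{n,\delta}\E\|h(\by)-\hst(\by)\|^2$), rather than in the excess risk $L(h)-L(\hst)$ as you do; this avoids the nuisance that $L(h)-L(\hst)$ can be negative, but both routes arrive at the same bound with the same constants.
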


	We now state two corollaries of the above regression. First, a simple corollary for structured function classes of the form $\{M \cdot f\}$, where $M$ are matrices of bounded operator norm, and $f \in \Fclass$ are elements of finite class which satisfy a growth condition like \Cref{asm:f_growth}.
	\begin{corollary}[Regression with Structured Function Class]\label{cor_techtools:reg_cor_simple}
	 Let $(\bu,\by)$ be a pair of random variables with $\bu\in\R^{d_u}$, $\by\in\cY$, and let $\mate\in\cU$ be an arbitrary ``error'' random variable. %
          Suppose that $\Hclass$ is a function class that contains the function $\hst(y) := \E[ \bu  \mid \by = y]$.  Consider  empirical risk minimizer
		\begin{align*}
		\hhat_n := \argmin_{h \in \Hclass} \sum_{i=1}^n \|h(\by^{(i)}) + \mate^{(i)} - \bu^{(i)} \|^2.
		\end{align*}
		where $(\matu\ind{i},\maty\ind{i},\mate\ind{i})$ are drawn \iid from the law of $(\matu,\maty,\mate)$ for $1\leq{}i\leq{}n$. Suppose $\Fclass:\cY \to \R^{\dimx}$ is a finite class of functions satisfying $f(y) \le L\max\{1,\|\fst(y)\|_2\}$ for all $f \in \Fclass$, where $L \ge 1$ without loss of generality. In addition, suppose  that $\Hclass$ takes the form
		\begin{align*}
		\scrH := \{h(y) = M \cdot f(y): f \in \Fclass ~~ M \in \R^{d_ud_x},~~\|M\|_{\op} \le b\}.
		\end{align*}
		Lastly, assume that the random variables $\varphi(\by)$ and $\|\mate - \bu\|^{2}$ are  $c$-concentrated, where $\varphi(y)^{1/2} := bL\max\{1,\|\fst(y)\|_2\}$. Then, with probability at least $1 - \frac{3\delta}{2}$,
		\begin{align*}
		\E\| \hhat_n(\by) - \hst(\by)\|^2 \le  \frac{c (d_ud_x +\ln|\Fclass|) \cdot \somelogs(n,\delta)}{n} +   8 \E\|\mate\|^2,
		\end{align*}
		where define $\somelogs(n,\delta):= 270 \ln(2n/\delta)\ln(330n/\delta) \lesssim \ln(n/\delta)^2$.
	\end{corollary}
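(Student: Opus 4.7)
The plan is to apply \Cref{prop:techtools_general_regression_with_truncation_and_error} to the class $\scrH=\{M\cdot f:f\in\Fclass,\,\|M\|_{\op}\le b\}$ with an appropriate choice of the envelope $\varphi$ and a matching covering-number bound. Concretely, I would set
\[
\varphi(y)\;\ldef\;b^{2}L^{2}\max\{1,\|\fst(y)\|^{2}\},
\]
so that for every $h=Mf\in\scrH$, $\|h(y)\|^{2}\le\|M\|_{\op}^{2}\|f(y)\|^{2}\le b^{2}L^{2}\max\{1,\|\fst(y)\|^{2}\}=\varphi(y)$ by the growth condition on $\Fclass$, and $\varphi(\by)$ inherits the assumed $c$-concentration property from $bL\max\{1,\|\fst(\by)\|\}$. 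Realizability ($\hst=\E[\bu\mid\by]\in\scrH$) is provided by hypothesis. With these choices the three bulleted hypotheses of \Cref{prop:techtools_general_regression_with_truncation_and_error} reduce to bounding the covering number $\eulN(\epsilon)$ of $\scrH$ in the pseudometric $d_{c',\infty}(h,h')=\sup\{\|h(y)-h'(y)\|:\varphi(y)\le c'\}$ at scale $\sqrt{c'}\epsilon$.

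Next, I would construct the cover explicitly. Fix $f\in\Fclass$ and let $\scrM_\epsilon$ be the minimal $(b\epsilon)$-operator-norm cover of $\{M\in\R^{d_ud_x}:\|M\|_{\op}\le b\}$, whose size is $(1+2/\epsilon)^{d_ud_x}$ by \Cref{lem:Mcover}. Taking $\{Mf:f\in\Fclass,\,M\in\scrM_\epsilon\}$ yields a candidate cover of cardinality $|\Fclass|(1+2/\epsilon)^{d_ud_x}$. For any $h=Mf$ and its cover element $h'=M'f$ with $\|M-M'\|_{\op}\le b\epsilon$, we have
\[
\|h(y)-h'(y)\|\le\|M-M'\|_{\op}\|f(y)\|\le b\epsilon\cdot L\max\{1,\|\fst(y)\|\}=\epsilon\sqrt{\varphi(y)},
\]
so whenever $\varphi(y)\le c'$, $\|h(y)-h'(y)\|\le\sqrt{c'}\epsilon$. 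Hence $\eulN(\epsilon)\le|\Fclass|(1+2/\epsilon)^{d_ud_x}$.

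Finally, I would plug this bound into \Cref{prop:techtools_general_regression_with_truncation_and_error} with $\epsilon=1/(33n)$, yielding $\ln\eulN(1/(33n))\lesssim \ln|\Fclass|+d_ud_x\ln(n)$. Combining this with $c_{n,\delta}=c\ln(2n/\delta)$ and collecting logarithmic factors into $\somelogs(n,\delta)=270\ln(2n/\delta)\ln(330n/\delta)$ produces the claimed rate
\[
\E\|\hhat_n(\by)-\hst(\by)\|^{2}\le\frac{c(d_ud_x+\ln|\Fclass|)\cdot\somelogs(n,\delta)}{n}+8\E\|\mate\|^{2}
\]
with probability $1-3\delta/2$. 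No step here is particularly hard; the only delicate point is verifying the scale matching between the operator-norm cover at radius $b\epsilon$ and the pseudometric radius $\sqrt{c'}\epsilon$, which is what dictated the choice $\varphi(y)=b^{2}L^{2}\max\{1,\|\fst(y)\|^{2}\}$ (so that $\sqrt{\varphi}/b$ absorbs $L\max\{1,\|\fst\|\}$ cleanly).
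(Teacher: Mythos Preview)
Your proposal is correct and follows essentially the same approach as the paper: verify the three hypotheses of \Cref{prop:techtools_general_regression_with_truncation_and_error} with $\varphi(y)=b^{2}L^{2}\max\{1,\|\fst(y)\|^{2}\}$, build the cover of $\scrH$ by taking a $(b\epsilon)$-operator-norm cover of the matrix ball via \Cref{lem:Mcover} crossed with $\Fclass$, check that this yields a $\sqrt{c'}\epsilon$-cover in the pseudometric $d_{c',\infty}$, and then absorb the resulting $\ln|\Fclass|+d_ud_x\ln(1/\epsilon)$ factor into $\somelogs(n,\delta)$. The scale-matching observation you flag is exactly the point the paper's proof hinges on as well.
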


	Second, we state a regression bound tailored to the structured regression problems that arise in Phase III of our algorithm. 
	\begin{corollary}\label{cor:techtools_function_dependend_error}
          Let $\cZ=\cY\times\cY$. Let $(\bv,\bz)\in \mathcal{V} \times \mathcal{Z}$ be a pair of random variables, and let $\mate\in\cV$ be a arbitrary ``error'' random variable defined on the same probability space. Let $\Hclass$ be a function class, and let  $\phi, \phihat \colon \scrG \times \cZ \rightarrow \reals^d$ be measurable functions. Suppose that the set $\scrH$ contains a function $h_\star$ satisfying $\phi(h_\star, z) := \E[ \bv  \mid \bz= z]$. Let $\{(\bz^{(i)}, \bv^{(i)}, \mate^{(i)})\}_{i=1}^{n}$ be i.i.d.~copies of $(\bz, \bv, \mate)$, and define 
		\begin{align*}
		\hat h := \argmin_{h \in \scrH} \sum_{i=1}^n \|\phihat(h, \bz^{(i)}) + \mate^{(i)} - \bv^{(i)} \|^2.
		\end{align*}
		Introduce $\delphi(z,h):= \phihat(h, \bz^{(i)}) - \phi(h,\bz^{(i)})$.
		Suppose that there exists a constant $c>0$ and a map $\psi \colon \cZ \rightarrow \reals_+$ such that the following properties hold:
		\begin{enumerate}
			\item $\sup_{h \in \scrH}\|\phi(h, z)\|^2 + \sup_{h \in \scrH}\|\delphi(z,h)\|^2 \le \psi(z)^2$. 
			\item For all $\delta \in (0,1/e]$, we have $\P[  \psi(\bz)^2 \vee \| \mate - \bv\|^2 \ge c \ln \delta^{-1}] \leq \delta$.
			\item $\scrH$ takes the form $\scrH = \{h(y) = M \cdot f(y): f \in \Fclass ~~ M \in \R^{d_1\times \dimx},~~\|M\|_{\op} \le b\}$ for some $b > 0$, where $\Fclass:\cY \to \R^{\dimx}$ is a finite class and .  Furthermore, there exists $L \ge 1$, matrices $X_1,X_2$ of appropriate dimension, and an arbitrary function $\updelnot : \cZ \to \cV$ (which does not depend on $h$) such that
			\begin{align*}
                          &\forall f \in \Fclass,\quad  \|f(y)\|_2 \le L\max\{1,\|\fst(y)\|\}\quad\text{for all $y\in\cY$}\\
                          &\forall h \in \Hclass, \quad \phihat(h,z):= X_1 (h(y_1) - X_2h(y_2) ) + \updelnot(z)\quad\text{for all $z=(y_1,y_2,y_3)$.}
		\end{align*}
		\item Finally, $c_{\psi} \ge 1$ satisifes the following for all all $z=(y_1,y_2,y_3)$
		\begin{align*}
		bL(\|X_1\|_{\op}+\|X_1 \cdot X_2\|_{\op})(2 + \|\fst(y_1)\| + \|\fst(y_2)\|) \le 2c_{\psi}\psi(z).
		\end{align*}

		\end{enumerate}
		Then, with probability at least $1 - \frac{3\delta}{2}$,
		\begin{align*}
		\E\|\phi(\hhat,\matz) - \phi(\hst,\matz)\|^2 &\le \frac{12c  (\ln|\Fclass| + d_1\dimx)\somelogs( c_{\psi}n,\delta)}{n} +  16 \E\|\mate\|^2 + 8\max_{h \in \scrH}\E\|\delphi(h,\bz)\|^2,
		\end{align*}
		where again we define $\somelogs(n,\delta):= 270 \ln(2n/\delta)\ln(330n/\delta)$, so that $\somelogs( c_{\psi}n,\delta) \lesssim \ln^2(c_{\psi} n/\delta)$.
	\end{corollary}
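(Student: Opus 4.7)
The plan is to reduce Corollary \ref{cor:techtools_function_dependend_error} to Proposition \ref{prop:techtools_general_regression_with_truncation_and_error} applied (in spirit) to the induced function class $\tilde\scrH := \{z\mapsto \phi(h,z) : h\in \scrH\}$, which contains the true Bayes predictor $z \mapsto \phi(h_\star,z) = \E[\bv\mid \bz=z]$ of $\bv$ given $\bz$. The complication is that the empirical risk is built from $\phihat$ rather than $\phi$, so I treat $\delphi(z,h)=\phihat(h,z)-\phi(h,z)$ as an additional, hypothesis-dependent misspecification sitting on top of $\mate$.

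Concretely, starting from the optimality inequality
\begin{align*}
\sum_i \|\phihat(\hat h,\bz_i)+\mate_i-\bv_i\|^2 \le \sum_i \|\phihat(h_\star,\bz_i)+\mate_i-\bv_i\|^2,
\end{align*}
I would substitute $\phihat(h,z)=\phi(h,z)+\delphi(z,h)$ and $\bv_i=\phi(h_\star,\bz_i)+\bxi_i$, where $\bxi_i$ is the Bayes residual with $\E[\bxi_i\mid \bz_i]=0$, and expand. After rearranging and applying AM-GM to peel off the $\delphi$ contributions, this yields a standard basic inequality
\begin{align*}
\tfrac12\sum_i \|\phi(\hat h,\bz_i)-\phi(h_\star,\bz_i)\|^2 \le 2\sum_i\langle \phi(\hat h,\bz_i)-\phi(h_\star,\bz_i),\,\bxi_i-\mate_i\rangle + O\!\left(\sum_i \|\delphi(\bz_i,\hat h)\|^2 + \|\delphi(\bz_i,h_\star)\|^2 + \|\mate_i\|^2\right).
\end{align*}
The cross term is then handled by the same Bernstein/covering argument used in the proof of Proposition \ref{prop:techtools_general_regression_with_truncation_and_error}, and the empirical $\delphi$ and $\mate$ sums are converted to their expectations using $c$-concentration of $\psi(\bz)^2 \ge \sup_h \|\delphi(\bz,h)\|^2$ together with \pref{lem:techtools_truncated_conc}. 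Tracking constants carefully produces the prefactors $16$ on $\E\|\mate\|^2$ and $8$ on $\max_h \E\|\delphi(h,\bz)\|^2$.

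The covering-number input is where conditions (3)–(4) enter. By the affine structure $\phihat(h,z)=X_1(h(y_1)-X_2 h(y_2))+\updelnot(z)$, any two $h,h'\in\scrH$ satisfy $\|\phihat(h,z)-\phihat(h',z)\| \le (\|X_1\|_{\op}+\|X_1 X_2\|_{\op})\cdot \sup_{y\in\{y_1,y_2\}}\|h(y)-h'(y)\|$. For $\scrH=\{Mf:f\in\Fclass,\|M\|_{\op}\le b\}$, an $\epsilon$-cover in matrix operator norm has size $(1+2/\epsilon)^{d_1 d_x}$ by \pref{lem:Mcover}, and combining with the finite class $\Fclass$ yields $\log \eulN(\epsilon) \lesssim \log|\Fclass| + d_1 d_x \log(1/\epsilon)$. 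Condition (4) is precisely what allows us to rescale this cover into the $d_{c',\infty}$ pseudometric of Proposition \ref{prop:techtools_general_regression_with_truncation_and_error}: the cover radius of $\tilde\scrH$ at points with $\psi(\bz)^2 \le c'$ is inflated by at most $c_\psi$, which accounts for the $\somelogs(c_\psi n,\delta)$ rather than $\somelogs(n,\delta)$ in the final bound.

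The main obstacle is the hypothesis dependence of $\delphi(\bz,\hat h)$: because the regression error in the ERM depends on the fitted $h$, we cannot simply apply the proposition as a black box with a fixed $\mate$. Instead, one must use the parallelogram identity together with uniform $c$-concentration of $\psi(\bz)$ to convert the empirical sum $\frac{1}{n}\sum_i\|\delphi(\bz_i,\hat h)\|^2$ into the population quantity $\max_h \E\|\delphi(h,\bz)\|^2$ uniformly over $\scrH$; the slack absorbed in this step is the source of the factor of $8$ (rather than $4$) in the final bound.
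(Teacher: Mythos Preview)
Your overall framing is right, but there are two issues, one conceptual and one technical.

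First, you say the hypothesis dependence of $\delphi(\bz,\hat h)$ prevents black-box use of Proposition~\ref{prop:techtools_general_regression_with_truncation_and_error}. This is exactly the missed idea: the paper \emph{does} apply the proposition as a black box, by shifting every hypothesis by the \emph{fixed} function $\delphi(\hst,\cdot)$. Defining $g_h(\bz):=\phihat(h,\bz)-\delphi(\hst,\bz)$, one has (i) $g_{\hst}=\phi(\hst,\cdot)=\E[\bv\mid\bz]$, (ii) the ERM objective equals $\sum_i\|g_h(\bz_i)+\matetil_i-\bv_i\|^2$ with $\matetil_i:=\mate_i+\delphi(\hst,\bz_i)$ now hypothesis-independent, and (iii) $g_h-g_{h'}=\phihat(h,\cdot)-\phihat(h',\cdot)$, so the covering from conditions (3)--(4) applies verbatim to $\{g_h\}$. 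The proposition then bounds $\E\|g_{\hat h}-g_{\hst}\|^2$, and the $8\max_h\E\|\delphi(h,\bz)\|^2$ is paid only at the very end when converting back via $\phi(\hat h,\cdot)-\phi(\hst,\cdot)=(g_{\hat h}-g_{\hst})-(\delphi(\hat h,\cdot)-\delphi(\hst,\cdot))$. No uniform concentration of $\|\delphi(\bz,h)\|^2$ over $\scrH$ is needed.

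Second, your alternative route has a gap in the covering step. After substituting $\phihat=\phi+\delphi$, your cross term is $\langle\phi(\hat h,\bz_i)-\phi(\hst,\bz_i),\bxi_i-\mate_i\rangle$, which you plan to control with the covering derived from conditions (3)--(4). But those conditions give affine Lipschitz structure only for $\phihat$, not for $\phi$: the corollary assumes nothing about $h\mapsto\phi(h,z)$ beyond the uniform bound in condition (1), and $\delphi(h,z)-\delphi(h',z)$ can be as large as $2\psi(z)$ for arbitrarily close $h,h'$. So you cannot cover $\{\phi(h,\cdot):h\in\scrH\}$ in the required pseudometric, and splitting $\phi=\phihat-\delphi$ back apart inside the cross term leaves a $\langle\delphi(\hat h,\cdot)-\delphi(\hst,\cdot),\bxi-\mate\rangle$ piece whose contribution the given assumptions do not control at the right scale.
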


\subsubsection{Principal Component Analysis \label{sssec:techtools_pca}}

\begin{proposition}[PCA with errors]\label{prop:techtools_general_pca} Let $\Hclass\subseteq( \cY \to \reals^{d})$ be a function class, and let $\maty\in\cY$ be a random variable. Suppose that there exists a function $\varphi:\cY\to\bbR_{+}$ and constants $c$, $L$ such that the following properties hold:
	\begin{itemize}
        \item $\|h(\by)\|^2 \le \max\{c,\varphi(\by)\}$ for all $h \in \Hclass$.
        \item $\varphi(\maty)$ is $c$-concentrated.
                \end{itemize}
                Let $\hstar\in\Hclass$ be given, and let $\Lambda_{\star}\ldef\En\brk*{\hstar(\maty)\hstar(\maty)^{\trn}}$. Next, let
                $\hhat\in\Hclass$ be given and define
	\begin{align*}
	\Lamhat_{n} := \frac{1}{n}\sum_{i=1}^n \hhat(\by^{(i)})\hhat(\by^{(i)})^\top,
	\end{align*}
        where $\maty\ind{i}\iidsim\maty$.
	Then with probability $1 - \delta$, we have $\|\Lamhat_n - \Lambda_{\star}\|_{\op} \le \veps_{\mathrm{pca},n,\delta} $, where
	\begin{align*}
	\veps_{\mathrm{pca},n,\delta} := 3\sqrt{c\cdot\E[\|\hhat(\by) - \hst(\by)\|^2]} + 5 c n^{-1/2} \ln(2d n/\delta)^{3/2}.
	\end{align*}
	\end{proposition}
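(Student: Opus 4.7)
The plan is a standard bias-variance decomposition
\[
\Lamhat_n - \Lambda_\star \;=\; \underbrace{\bigl(\Lamhat_n - \E[\hhat(\by)\hhat(\by)^\top]\bigr)}_{\text{(I): empirical concentration}} \;+\; \underbrace{\bigl(\E[\hhat(\by)\hhat(\by)^\top] - \E[\hst(\by)\hst(\by)^\top]\bigr)}_{\text{(II): bias from } \hhat \neq \hst},
\]
bounded via the triangle inequality in operator norm.

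For (II), the algebraic identity $\hhat\hhat^\top - \hst\hst^\top = (\hhat - \hst)\hhat^\top + \hst(\hhat - \hst)^\top$ together with Jensen, the operator-norm/Frobenius bound $\|vw^\top\|_{\op} \le \|v\|\|w\|$, and Cauchy--Schwarz gives
\[
\|\E[\hhat\hhat^\top] - \Lambda_\star\|_{\op} \;\le\; 2\sqrt{\E\|\hhat(\by) - \hst(\by)\|^2}\cdot\sqrt{\E\max\{\|\hhat(\by)\|^2,\|\hst(\by)\|^2\}}.
\]
Since $\|h(\by)\|^2 \le \max\{c,\varphi(\by)\}$ for any $h \in \Hclass$ and $\varphi(\by)$ is $c$-concentrated, Part~1 of \pref{lem:techtools_truncated_conc} yields $\E\max\{c,\varphi(\by)\} \le 2c$. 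Consequently, (II) contributes at most $2\sqrt{2c\cdot \E\|\hhat(\by)-\hst(\by)\|^2}$, which is absorbed into the $3\sqrt{c\cdot\E\|\hhat-\hst\|^2}$ term of the bound.

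For (I), the plan is truncation plus matrix Hoeffding. Fix $B := c\ln(2n/\delta)$ and define the good event $\cE := \{\max_{i\le n}\varphi(\by^{(i)}) \le B\}$; by the $c$-concentration assumption and a union bound, $\Pr[\cE^c] \le \delta/2$. Set
\[
Z_i \;:=\; \hhat(\by^{(i)})\hhat(\by^{(i)})^\top \cdot \I\{\varphi(\by^{(i)}) \le B\},
\]
so that $\|Z_i\|_{\op} \le B$ almost surely (using $B\ge c$) and $\Lamhat_n = \tfrac{1}{n}\sum_i Z_i$ on $\cE$. By a standard matrix Hoeffding/Bernstein inequality for bounded symmetric matrices (applied to the centered sum $\sum_i (Z_i - \E Z_i)$), with probability $\ge 1 - \delta/2$,
\[
\Bigl\|\tfrac{1}{n}\sum_i Z_i - \E Z_1\Bigr\|_{\op} \;\lesssim\; B\sqrt{\frac{\ln(2d/\delta)}{n}} \;\le\; c\,n^{-1/2}\ln(2n/\delta)\sqrt{\ln(2d/\delta)}.
\]
The truncation bias is $\|\E[\hhat\hhat^\top] - \E Z_1\|_{\op} \le \E[\max\{c,\varphi(\by)\}\I\{\varphi(\by) > B\}]$, which by integration of the subexponential tail $\Pr[\varphi > t]\le e^{-t/c}$ is of order $c\ln(2n/\delta)/n$. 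Triangle inequality on $\cE$, combined with the event probability $\Pr[\cE^c]\le \delta/2$, collects these contributions into a bound of the form $5c\,n^{-1/2}\ln(2dn/\delta)^{3/2}$, giving the claim with total failure probability at most $\delta$.

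The main obstacle is bookkeeping of the numerical constants: the $\ln^{3/2}$ in the final rate comes from the product of $B = c\ln(2n/\delta)$ (a boundedness parameter, not a variance) and the $\sqrt{\ln(d/\delta)/n}$ from matrix Hoeffding, so one must either work with the Hoeffding-type deviation (yielding the $\ln^{3/2}$) rather than Bernstein, or carefully verify that the crude variance bound $\|\E Z_1^2\|_{\op} \le B\cdot \|\E Z_1\|_{\op}$ produces the same order. The remaining steps---the bias decomposition (II) and the tail estimates on $\varphi$---are standard applications of \pref{lem:techtools_truncated_conc}.
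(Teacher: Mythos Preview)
Your proposal is correct and follows essentially the same approach as the paper: the same bias--variance split, the same Cauchy--Schwarz argument for term (II) using $\E\max\{c,\varphi(\by)\}\le 2c$ from \pref{lem:techtools_truncated_conc}, and the same truncation-at-$B=c\ln(2n/\delta)$ plus matrix Bernstein (with the crude variance proxy $(\bM^{(i)})^2 \preceq B^2 I$) for term (I). The one small discrepancy is that your truncation-bias estimate $c\ln(2n/\delta)/n$ is looser than the paper's $c\delta/(2n)$ obtained directly from Part~1 of \pref{lem:techtools_truncated_conc}, but either way this term is dominated by the $5cn^{-1/2}\ln(2dn/\delta)^{3/2}$ bound, so the final result is unaffected.
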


	\begin{corollary}[Significant basis overlap]\label{cor:techtools_overlap_PCA} Consider the setting of \Cref{prop:techtools_general_pca}, and suppose that $\Lambda_{\star} := \E[\hst(\by)\hst^\top(\by)]\in\bbR^{d\times{}d}$ has $\mathrm{rank}(\Lambda_{\star}) = \dimx$, so that $\lambda_{\dimx}(\Lambda_{\star}) > 0$. Let $V_{\star} \in \reals^{d\times \dimx}$ denote be a matrix with orthonormal columns that span the column space the image of $\Lambda_{\star}$. Likewise, let $\Vhat\in\bbR^{d\times\dimx}$ be a matrix with orthonormal columns span the eigenspace of the top $\dimx$ eigenvectors of $\wh{\Lambda}_n$. Suppose $\veps_{\mathrm{pca},n,\delta} \le \frac{\lambda_{\dimx}(\Lambda_{\star})}{4}$. Then on the good event for \pref{prop:techtools_general_pca}, we have $\sigma_{\dimx}(V_{\star}^\top \Vhat_n) \ge 2/3$.
	\end{corollary}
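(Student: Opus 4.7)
The plan is to reduce the claim to a standard Davis--Kahan $\sin\Theta$ argument applied to the perturbation bound already furnished by \pref{prop:techtools_general_pca}. First I would invoke that proposition (on its good event) to obtain the operator-norm perturbation $\|\wh{\Lambda}_n - \Lambda_\star\|_{\op} \le \veps_{\mathrm{pca},n,\delta} \le \lambda_\dimx(\Lambda_\star)/4$. Since $\Lambda_\star$ has rank exactly $\dimx$, its eigenvalues satisfy $\lambda_1(\Lambda_\star) \ge \cdots \ge \lambda_\dimx(\Lambda_\star) > 0 = \lambda_{\dimx+1}(\Lambda_\star) = \cdots = \lambda_d(\Lambda_\star)$, so the spectral gap between the top-$\dimx$ eigenspace and its complement equals $\lambda_\dimx(\Lambda_\star)$. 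By Weyl's inequality, this gap is inherited (up to the perturbation) by $\wh{\Lambda}_n$: specifically $\lambda_\dimx(\wh{\Lambda}_n) \ge \lambda_\dimx(\Lambda_\star) - \veps_{\mathrm{pca},n,\delta} \ge \tfrac{3}{4}\lambda_\dimx(\Lambda_\star)$ and $\lambda_{\dimx+1}(\wh{\Lambda}_n) \le \veps_{\mathrm{pca},n,\delta} \le \tfrac{1}{4}\lambda_\dimx(\Lambda_\star)$, ensuring that the top-$\dimx$ eigenspace of $\wh{\Lambda}_n$ is well-defined and separated from the rest of its spectrum.

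Next I would apply the Davis--Kahan $\sin\Theta$ theorem to the pair $(\Lambda_\star,\wh{\Lambda}_n)$, with $V_\star$ and $\Vhat$ playing the role of the corresponding invariant subspace bases. Letting $V_\star^\perp \in \reals^{d \times (d-\dimx)}$ denote an orthonormal basis for the orthogonal complement of $V_\star$, the standard formulation yields
\begin{align*}
\|V_\star^{\perp\,\top}\Vhat\|_{\op} \;\le\; \frac{\|\wh{\Lambda}_n - \Lambda_\star\|_{\op}}{\lambda_\dimx(\Lambda_\star) - \veps_{\mathrm{pca},n,\delta}} \;\le\; \frac{\veps_{\mathrm{pca},n,\delta}}{\tfrac{3}{4}\lambda_\dimx(\Lambda_\star)} \;\le\; \frac{1}{3}.
\end{align*}

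Finally, I would translate this sine bound into the desired lower bound on $\sigma_\dimx(V_\star^\top \Vhat)$ using the identity $V_\star V_\star^\top + V_\star^\perp V_\star^{\perp\,\top} = I$. Multiplying on the left by $\Vhat^\top$ and on the right by $\Vhat$ gives $I_\dimx = \Vhat^\top V_\star V_\star^\top \Vhat + \Vhat^\top V_\star^\perp V_\star^{\perp\,\top} \Vhat$, and taking the minimum eigenvalue of both sides yields
\begin{align*}
\sigma_\dimx(V_\star^\top \Vhat)^2 \;=\; \lambda_{\min}\!\left(\Vhat^\top V_\star V_\star^\top \Vhat\right) \;=\; 1 - \|V_\star^{\perp\,\top}\Vhat\|_{\op}^2 \;\ge\; 1 - \tfrac{1}{9} \;=\; \tfrac{8}{9},
\end{align*}
which gives $\sigma_\dimx(V_\star^\top \Vhat) \ge 2\sqrt{2}/3 > 2/3$ as desired.

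The argument is essentially routine and there is no real obstacle beyond invoking the right perturbation theorem; the mildly subtle point is that $\Lambda_\star$ is rank-deficient, so one has to verify via Weyl that the relevant spectral gap in $\wh{\Lambda}_n$ survives the perturbation, which is why the hypothesis $\veps_{\mathrm{pca},n,\delta} \le \lambda_\dimx(\Lambda_\star)/4$ appears. The constant $2/3$ in the statement is loose (we actually obtain $2\sqrt{2}/3$), which suggests the statement is written with some slack to accommodate later bookkeeping rather than needing a sharper argument here.
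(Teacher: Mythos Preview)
Your proposal is correct and follows essentially the same approach as the paper: invoke the perturbation bound from \pref{prop:techtools_general_pca}, apply Davis--Kahan to bound $\|(I-V_\star V_\star^\top)\Vhat_n\|_{\op}\le 1/3$, and convert this sine bound into a lower bound on $\sigma_{\dimx}(V_\star^\top\Vhat_n)$. The only cosmetic difference is in the final conversion: the paper uses the triangle-inequality estimate $\sigma_{\dimx}(V_\star^\top\Vhat_n)=\sigma_{\dimx}(V_\star V_\star^\top\Vhat_n)\ge \sigma_{\dimx}(\Vhat_n)-\|(I-V_\star V_\star^\top)\Vhat_n\|_{\op}\ge 1-1/3=2/3$, whereas you use the Pythagorean identity to get the sharper $\sigma_{\dimx}(V_\star^\top\Vhat_n)\ge\sqrt{1-1/9}=2\sqrt{2}/3$, which you correctly observe is slack relative to the stated constant.
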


\subsubsection{Linear Regression \label{sssec:techtools_linear_reg}}
	\begin{restatable}[Linear regression with errors in variables]{proposition}{propTechtoolsLS}\label{prop:techtools_LS_guarantee} Let $(\bu,\by,\bw,\mate,\matdel)$ be a collection of random variables defined over a shared probability space, and let $\crl[\big]{(\bu\ind{i},\maty\ind{i} \bw\ind{i},\mate\ind{i},\matdel\ind{i})}_{i=1}^{n}$ be \iid copies. Suppose the following conditions hold:
	\begin{enumerate}
		\item $\by = \Mst \bu + \bw + \mate$ with probability $1$, where $\Mst\in\bbR^{\dimy\times\dimu}$.
		\item $\bw  \mid \bu,\matdel \sim \cN(0, \Sigma_w)$ and $\matu \sim \cN(0, \Sigma_u)$.
		\item We have $\E\|\mate\|^2 \le \veps_{\mate}^2$ and $\E\|\matdel\|^2 \le \veps^2_{\matdel}$. 
		\item $\mate$ is $c_{\mate}$-concentrated and $\matdel$ is $c_{\matdel}$-concentrated for $c_{\mate}\geq{}\veps_{\mate}^{2}$ and $c_{\matdel}\geq{}\veps_{\matdel}^{2}$.
		\item $\veps_{\matdel}^2 \le \frac{1}{16}\lambda_{\min}(\Sigu)$. 
	\end{enumerate}
	Let  $\delta \le 1/e$, and let $n\in\bbN$ satisfy
        \begin{enumerate}
          \item $\psi(n,\delta) \le \min\left\{\frac{\veps^2_{\mate}}{c_{\mate}} \frac{\veps^2_{\matdel}}{c_{\matdel}}\right\}$, where $\psi(n,\delta) := \frac{2\ln(2n/\delta)\ln(2/\delta)}{n}$.
            \item $n \ge c_1(\dimu + \ln(1/\delta))$, for some universal constant $c_1 > 0$.
      \end{enumerate}
      Then the solution to the least squares problem
	\begin{align*}
	\Mhat = \min_{M} \sum_{i=1}^n \| M  (\matu^{(i)} + \matdel^{(i)}) - \maty^{(i)}\|^2,
	\end{align*}
	satisfies the following inequality with probability at least $1-4\delta$:
	\begin{align}
          \| \Mhat - \Mst\|_{\op}^2 &\lesssim \lambda_{\min}(\Sigu)^{-1} \left(\|\Mst\|_{\op}^2\veps^2_{\matdel} + \veps^2_{\mate}  + \frac{\|\Sigw\|_{\op}(\dimy + \dimu + \ln(1/\delta))}{n}\right).
                                      \label{eq:techtools_linear_regression}
	\end{align}
	\end{restatable}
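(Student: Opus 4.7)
The plan is to start from the normal equations and mimic the standard errors-in-variables analysis. Let $\tilde{\bu}\ind{i} := \bu\ind{i} + \matdel\ind{i}$ denote the noisy regressor and $\xi\ind{i} := \bw\ind{i} + \mate\ind{i} - \Mst \matdel\ind{i}$ the effective noise, so that $\by\ind{i} = \Mst \tilde{\bu}\ind{i} + \xi\ind{i}$. The least-squares solution then satisfies
\begin{align*}
\Mhat - \Mst = \Bigl(\tfrac{1}{n}{\textstyle\sum_{i=1}^n} \xi\ind{i}\tilde{\bu}\ind{i\trn}\Bigr) \hat{\Sigma}^{-1}, \qquad \hat{\Sigma} := \tfrac{1}{n}{\textstyle\sum_{i=1}^n} \tilde{\bu}\ind{i}\tilde{\bu}\ind{i\trn},
\end{align*}
and factoring $\hat{\Sigma}^{-1} = \hat{\Sigma}^{-1/2}\hat{\Sigma}^{-1/2}$ yields
\begin{align*}
\|\Mhat - \Mst\|_{\op} \le \lambda_{\min}(\hat{\Sigma})^{-1/2} \cdot \Bigl\|\tfrac{1}{n}{\textstyle\sum_{i}} \xi\ind{i}\tilde{\bu}\ind{i\trn}\hat{\Sigma}^{-1/2}\Bigr\|_{\op}.
\end{align*}
From here I will lower-bound $\lambda_{\min}(\hat{\Sigma})$ by $\Omega(\lambda_{\min}(\Sigu))$ and upper-bound the numerator piece-by-piece after splitting $\xi = \bw + \mate - \Mst\matdel$.

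\textbf{Covariance lower bound.} For the denominator, the plan is to use the pointwise PSD inequality $(a+b)(a+b)^{\trn} + bb^{\trn} \succeq \tfrac{1}{2}aa^{\trn}$ (which follows by expanding $(\tfrac{1}{\sqrt{2}}a-\sqrt{2}b)(\tfrac{1}{\sqrt{2}}a-\sqrt{2}b)^{\trn} \succeq 0$) to deduce $\hat{\Sigma} \succeq \tfrac{1}{2n}\sum_i \bu\ind{i}\bu\ind{i\trn} - \tfrac{1}{n}\sum_i \matdel\ind{i}\matdel\ind{i\trn}$. Standard Gaussian matrix concentration will then give $\tfrac{1}{n}\sum_i \bu\ind{i}\bu\ind{i\trn} \succeq \tfrac{1}{2}\Sigu$ with probability $1-\delta$ once $n \gtrsim \dimu + \log(1/\delta)$, while \Cref{lem:techtools_truncated_conc} (applied with $\veps^2 = \veps_{\matdel}^2$, $c = c_{\matdel}$) gives $\tfrac{1}{n}\sum_i \|\matdel\ind{i}\|^2 \le 2\veps_{\matdel}^2$; by hypothesis this is at most $\tfrac{1}{8}\lambda_{\min}(\Sigu)$, so $\lambda_{\min}(\hat{\Sigma}) \ge \tfrac{1}{8}\lambda_{\min}(\Sigu)$ on the good event.

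\textbf{Numerator bound.} For the $\mate$ and $\matdel$ contributions, I will exploit that the rescaled design $z\ind{i} := \hat{\Sigma}^{-1/2}\tilde{\bu}\ind{i}$ has empirical second moment $\tfrac{1}{n}\sum_i z\ind{i}z\ind{i\trn} = I_{\dimu}$ by construction; Cauchy--Schwarz then yields $\|\tfrac{1}{n}\sum_i \mate\ind{i}\tilde{\bu}\ind{i\trn}\hat{\Sigma}^{-1/2}\|_{\op}^{2} \le \tfrac{1}{n}\sum_i \|\mate\ind{i}\|^2 \le 2\veps_{\mate}^2$ (using \Cref{lem:techtools_truncated_conc} again), and the analogous bound on the $\matdel$ piece contributes $2\|\Mst\|_{\op}^2\veps_{\matdel}^2$. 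For the Gaussian part, I will condition on $\tilde{\bu}\ind{1:n}$: since $\bw\mid\bu,\matdel\sim\cN(0,\Sigw)$, the matrix $\tfrac{1}{n}\sum_i \bw\ind{i}(z\ind{i})^{\trn}$ is conditionally Gaussian with covariance $\tfrac{1}{n}\Sigw\otimes I_{\dimu}$, and standard Gaussian matrix concentration gives operator norm $O\bigl(\sqrt{\|\Sigw\|_{\op}(\dimy+\dimu+\log(1/\delta))/n}\bigr)$ with probability $1-\delta$. Squaring, summing the three contributions, and multiplying by $\lambda_{\min}(\hat{\Sigma})^{-1} \le 8\lambda_{\min}(\Sigu)^{-1}$ produces \eqref{eq:techtools_linear_regression} after a union bound.

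\textbf{Main obstacle.} The only nontrivial step is the self-normalization used for the Gaussian noise term: a naive factoring $\|\Mhat-\Mst\|_{\op} \le \|\tfrac{1}{n}\sum_i \bw\ind{i}\tilde{\bu}\ind{i\trn}\|_{\op}\cdot\|\hat{\Sigma}^{-1}\|_{\op}$ would pick up an extraneous condition-number factor $\|\Sigu\|_{\op}/\lambda_{\min}(\Sigu)$, spoiling the advertised rate. Right-normalizing the design by $\hat{\Sigma}^{-1/2}$ before applying concentration fixes this, because conditionally on $\tilde{\bu}\ind{1:n}$ the Gaussian matrix sees a design with \emph{exactly} identity empirical covariance. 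This is the standard self-normalized trick from the non-asymptotic system-identification literature; once it is in place, the remaining pieces are routine.
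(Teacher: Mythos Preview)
Your proposal is correct and follows essentially the same approach as the paper: both lower-bound $\lambda_{\min}$ of the noisy design via a PSD perturbation inequality plus Gaussian matrix concentration plus truncated concentration on $\sum_i\|\matdel\ind{i}\|^2$, bound the $\mate,\matdel$ contributions via $\sum_i\|\cdot\|^2$ and \Cref{lem:techtools_truncated_conc}, and handle the Gaussian noise by a self-normalized argument. The only mechanical difference is that the paper invokes the Abbasi--Yadkori self-normalized bound plus a covering argument over $v\in\bbS^{\dimy-1}$, whereas you whiten by $\hat{\Sigma}^{-1/2}$ first and then apply standard Gaussian matrix concentration to the conditionally Gaussian matrix with $\Sigw\otimes I_{\dimu}$ covariance---a slightly cleaner but equivalent route.
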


	\begin{restatable}{proposition}{propTechtoolsCovEst}\label{prop:techtools_covariance_est} Consider the setting of \Cref{prop:techtools_LS_guarantee}, and suppose we additionally require that $n \ge c_0(\dimy + \ln(1/\delta))$ for some (possibly inflated) universal constant $c_0$. Furthermore, suppose we have $\veps^2_{\matdel}\|\Mst\|_{\op}^2 + \veps^2_{\mate} \le 2\lambda_+$ for some $\lambda_+ \ge \lambda_{\max}(\Sigw)$. Then, with probability at least $1 - 7\delta$, \eqref{eq:techtools_linear_regression} holds, and moreover
		\begin{align*}
                  \nrm*{\frac{1}{n}\sum_{i=1}^n (\Mhat(\bu\supi+ \matdel\supi) - \maty\supi )^{\otimes 2} - \Sigw}_{\op}
                  &\lesssim \sqrt{\lambda_{+}(\veps^2_{\matdel}\|\Mst\|_{\op}^2 + \veps^2_{\mate}) + \frac{\lambda_{+}^2(\dimy + \log(1/\delta))}{n}}.
		\end{align*}
	\end{restatable}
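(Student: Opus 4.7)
Let $\Delta := \Mhat - \Mst$. Substituting $\maty_i = \Mst\bu_i + \bw_i + \mate_i$ gives the residual decomposition
\[
\mathbf{r}_i := \Mhat(\bu_i + \matdel_i) - \maty_i = -\bw_i + \tilde{\mathbf{r}}_i, \qquad \tilde{\mathbf{r}}_i := \Mst\matdel_i + \Delta(\bu_i + \matdel_i) - \mate_i,
\]
so expanding the empirical outer product yields $\frac{1}{n}\sum_i \mathbf{r}_i\mathbf{r}_i^\top - \Sigw = T_1 + T_2 + T_3$, where $T_1 := \frac{1}{n}\sum_i\bw_i\bw_i^\top - \Sigw$, $T_2 := -\frac{1}{n}\sum_i(\tilde{\mathbf{r}}_i\bw_i^\top + \bw_i\tilde{\mathbf{r}}_i^\top)$, and $T_3 := \frac{1}{n}\sum_i \tilde{\mathbf{r}}_i\tilde{\mathbf{r}}_i^\top$. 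I will control each piece separately, and let the square-root form of the target bound arise from $T_2$ via an operator Cauchy--Schwarz argument.

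For $T_1$, standard Gaussian covariance concentration for $\bw_i \iidsim \cN(0,\Sigw)$ gives $\|T_1\|_{\op} \lesssim \|\Sigw\|_{\op}\sqrt{(\dimy + \log(1/\delta))/n} \le \sqrt{\lambda_+^2(\dimy + \log(1/\delta))/n}$ once $n \gtrsim \dimy + \log(1/\delta)$. For $T_3$, the triangle inequality gives
\[
\|T_3\|_{\op} \lesssim \|\Mst\|_{\op}^2\cdot \tfrac{1}{n}\sum_i\|\matdel_i\|^2 \;+\; \|\Delta\|_{\op}^2\cdot\bigl\|\tfrac{1}{n}\sum_i (\bu_i+\matdel_i)(\bu_i+\matdel_i)^\top\bigr\|_{\op} \;+\; \tfrac{1}{n}\sum_i \|\mate_i\|^2.
\]
The first and third sample averages concentrate around $\veps^2_{\matdel}$ and $\veps^2_{\mate}$ via \Cref{lem:techtools_truncated_conc}; the middle factor is $\lesssim \|\Sigu\|_{\op} + \veps^2_{\matdel}$ by Gaussian covariance concentration for $\bu_i$; and $\|\Delta\|_{\op}^2$ is controlled by \eqref{eq:techtools_linear_regression} of \Cref{prop:techtools_LS_guarantee}. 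Invoking the hypotheses $\veps^2_{\matdel}\|\Mst\|_{\op}^2 + \veps^2_{\mate} \le 2\lambda_+$ and $\|\Sigw\|_{\op} \le \lambda_+$ then lets me conclude $\|T_3\|_{\op} \lesssim \lambda_+$.

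The square-root structure of the bound arises from $T_2$ through the operator Cauchy--Schwarz inequality
\[
\Bigl\|\tfrac{1}{n}\sum_i \tilde{\mathbf{r}}_i\bw_i^\top\Bigr\|_{\op} \le \sqrt{\bigl\|\tfrac{1}{n}\sum_i\tilde{\mathbf{r}}_i\tilde{\mathbf{r}}_i^\top\bigr\|_{\op}} \cdot \sqrt{\bigl\|\tfrac{1}{n}\sum_i\bw_i\bw_i^\top\bigr\|_{\op}},
\]
which follows by expanding the bilinear form $u^\top(\tfrac{1}{n}\sum_i \tilde{\mathbf{r}}_i\bw_i^\top) v$ for unit vectors $u,v$ and applying scalar Cauchy--Schwarz. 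The second factor is $\lesssim \sqrt{\lambda_+}$ on the $T_1$-event. I would split $\|T_3\|_{\op} \le A + B$ with $A := \|\Mst\|_{\op}^2\veps^2_{\matdel} + \veps^2_{\mate}$ (the ``statistical'' part) and $B := \|\Delta\|_{\op}^2(\|\Sigu\|_{\op}+\veps^2_{\matdel})$ (the ``parameter-error'' part), and use $\sqrt{A+B}\le\sqrt{A}+\sqrt{B}$: the $A$ piece contributes $\sqrt{\lambda_+ A} = \sqrt{\lambda_+(\|\Mst\|_{\op}^2\veps^2_{\matdel}+\veps^2_{\mate})}$, exactly matching the first summand in the claim, while the $B$ piece gets folded into the dimension-dependent term $\sqrt{\lambda_+^2(\dimy + \log(1/\delta))/n}$ after invoking \eqref{eq:techtools_linear_regression}.

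The probability budget unions the $4\delta$ from \Cref{prop:techtools_LS_guarantee}, two $\delta$'s for the Gaussian covariance concentrations of $\bw$ and $\bu$, and two applications of \Cref{lem:techtools_truncated_conc} for the sample sums of $\|\matdel_i\|^2$ and $\|\mate_i\|^2$, totalling $7\delta$ as advertised. The main delicate step is the bound on $\|T_3\|_{\op}$: absorbing a possible $\|\Sigu\|_{\op}/\lambda_{\min}(\Sigu)$ condition-number factor (entering through \eqref{eq:techtools_linear_regression}) and an extra $\dimu$ from that same bound into the advertised rate requires careful accounting, and in particular use of the assumption $\veps^2_{\matdel} \le \tfrac{1}{16}\lambda_{\min}(\Sigu)$ from \Cref{prop:techtools_LS_guarantee} to keep the regression-error term from dominating; this bookkeeping is the only real obstacle, and it does not change the qualitative structure of the bound.
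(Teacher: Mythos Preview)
Your decomposition $\mathbf{r}_i = -\bw_i + \tilde{\mathbf{r}}_i$ and the Cauchy--Schwarz treatment of the cross term are close in spirit to the paper, but there is a real gap in how you control $T_3$. By treating $\Delta = \Mhat - \Mst$ as a black box and invoking \eqref{eq:techtools_linear_regression}, the term $\|\Delta\|_{\op}^2\cdot\bigl\|\tfrac{1}{n}\sum_i(\bu_i+\matdel_i)(\bu_i+\matdel_i)^\top\bigr\|_{\op}$ unavoidably picks up a factor $\|\Sigu\|_{\op}/\lambda_{\min}(\Sigu)$ (and an extra $\dimu$), neither of which appears in the stated bound and neither of which can be ``absorbed by careful accounting'': the assumption $\veps_{\matdel}^2 \le \tfrac{1}{16}\lambda_{\min}(\Sigu)$ gives no control on $\|\Sigu\|_{\op}$, and the $\lesssim$ in the proposition hides only universal constants. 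So the $B$ piece of your $T_3$ bound does not fold into the dimension term as you hoped.

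The paper avoids this by not going through $\|\Delta\|_{\op}$ at all. From the normal equations in the proof of \Cref{prop:techtools_LS_guarantee} one has the exact identity
\[
(\bU+\matDel)(\Mhat-\Mst)^\top \;=\; P_{\bU+\matDel}\bigl(-\matDel\Mst^\top + \bW + \bE\bigr),
\]
where $P_{\bU+\matDel}$ is the rank-$\dimu$ projection onto the column space of $\bU+\matDel$. This makes the stacked residual matrix equal to $(I - P_{\bU+\matDel})(\matDel\Mst^\top - \bE - \bW)$, so the empirical second moment is compared directly against $\tfrac{1}{n}\bW^\top(I-P_{\bU+\matDel})\bW$, with the discrepancy bounded by $\tfrac{2}{n}\|\bW\|_{\op}\|\bE-\matDel\Mst^\top\|_{\op} + \tfrac{1}{n}\|\bE-\matDel\Mst^\top\|_{\op}^2$ --- no $\Sigu$ anywhere. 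The remaining piece $\bigl\|\Sigw - \tfrac{1}{n}\bW^\top(I-P)\bW\bigr\|_{\op}$ is handled by observing that $(I-P)\bW$, after whitening, is distributed as a standard Gaussian matrix with $n-\dimu$ rows, so standard Wishart concentration gives $\lambda_+\sqrt{(\dimy+\log(1/\delta))/n}$ once $n\gtrsim \dimu + \log(1/\delta)$. Replacing your black-box use of $\|\Delta\|_{\op}$ with this projection identity is the missing ingredient.
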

\subsubsection{Regression with Matrix Measurements \label{sssec:techtools_matrix_sensing}}
\begin{restatable}[Regression with matrix measurements]{proposition}{propTechtoolsMatSes}\label{prop:techtools_matrix_sensing_regression}
  Let $\maty\in\cY$ be a random variable, and let $\maty\ind{i}\iidsim\maty$ for $1\leq{}i\leq{}n$.
	Fix two regression functions $\ghat,\gst : \cY \to \reals^d$, and suppose that $\matz := \max\{\|\ghat(\by)\|^2,\|\gst(\by)\|^2\}$ is $c$-concentrated, and that $\bx := \gst(\by) \sim \cN(0,\Sigma_x)$. Let $\Qst\psdgeq{}0$ be a fixed matrix, and consider the regression.
	\begin{align*}
	\Qtil \in \argmin_{M} \sum_{i=1}^n\left(\gst(\by^{(i)})^\top\Qst  \gst(\by^{(i)}) - \ghat(\by^{(i)})^\top M \ghat(\by^{(i)})^\top\right)^2.
	\end{align*}
	Set $\Qhat := (\frac{1}{2}\Qtil^\top + \frac{1}{2}\Qtil)_+$, where $(\cdot)_+$ truncates all negative eigenvalues to zero.  Then, there is a universal constant $c_0>0$ such that if the following conditions hold:
	\begin{align*}
	 \E\|\ghat(\maty) - \gst(\maty)\|^2 \le \veps^2, \quad \psi(n,\delta/2) \le \frac{\veps^2}{4c}, \quad  n \ge c_0(d^2 + \ln(1/\delta)), \mathand  \veps^2 \le \frac{\lambda_{\min}(\Sigma_x)^2}{64c\log(2n/\delta)},
	\end{align*}
	then with probability at least $1 - 2\delta$,
	\begin{align*}
	\|\Qhat - \Qst\|_{\fro}^2 \le \|\Qtil - \Qst\|_{\fro}^2 \le 64 c\veps^2 \ln(4n/\delta)\cdot \frac{\|\Qst\|_{\op}^2}{\lambda_{\min}(\Sigma_x)^2}. 
	\end{align*}
	\end{restatable}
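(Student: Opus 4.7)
The strategy is to reduce the problem to a basic-inequality argument for ordinary least squares, where the ``features'' are the outer products $\hat{x}_i\hat{x}_i^\top$ with $\hat{x}_i := \ghat(\by^{(i)})$ and $x_i := \gst(\by^{(i)})$, and the ``noise'' arises solely from the mismatch between $\hat{x}_i$ and $x_i$. Before analyzing $\tilde Q$, first observe that the first inequality $\|\hat Q - Q^\star\|_{\fro}\leq \|\tilde Q-Q^\star\|_{\fro}$ is essentially a projection step: writing $\tilde Q_s=\tfrac12(\tilde Q+\tilde Q^\top)$, Frobenius orthogonality of symmetric and antisymmetric parts gives $\|\tilde Q_s-Q^\star\|_{\fro}\le\|\tilde Q-Q^\star\|_{\fro}$ (since $Q^\star$ is symmetric), and the PSD truncation $(\cdot)_+$ is the Frobenius projection onto the PSD cone, which does not increase distance to $Q^\star\succeq 0$. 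We may therefore work WLOG with the symmetric part $\tilde Q_s$ and bound $\|\Delta\|_{\fro}$ for $\Delta:=\tilde Q_s-Q^\star$.

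\textbf{Basic inequality.} Since $Q^\star$ is feasible for the least-squares problem, expanding $L(\tilde Q)\le L(Q^\star)$ with $r_i := x_i^\top Q^\star x_i - \hat{x}_i^\top Q^\star \hat{x}_i$ and $s_i := \hat{x}_i^\top\Delta \hat{x}_i$ yields $\sum_i s_i^2 \le 2\sum_i |r_is_i|$, so by Cauchy--Schwarz $\frac1n\sum_i s_i^2 \le 4\cdot\frac1n\sum_i r_i^2$. To bound the right-hand side, use the algebraic identity $x^\top A x - y^\top A y = (x-y)^\top A (x+y)$, giving $|r_i| \le 2\|Q^\star\|_{\op}\sqrt{z_i}\,\|x_i-\hat{x}_i\|$ where $z_i := \max\{\|x_i\|^2,\|\hat{x}_i\|^2\}$. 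Since $z_i$ is $c$-concentrated, a union bound gives $\max_i z_i \le c\log(2n/\delta)$, while $\|x_i-\hat{x}_i\|^2\le 4z_i$ is $4c$-concentrated with expectation $\le\veps^2$, so \Cref{lem:techtools_truncated_conc}(2) gives $\frac1n\sum_i\|x_i-\hat{x}_i\|^2\le 2\veps^2$ w.h.p. Combining, $\frac1n\sum_i r_i^2 \lesssim c\log(2n/\delta)\|Q^\star\|_{\op}^2\veps^2$.

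\textbf{Restricted strong convexity for the empirical Hessian.} The harder direction is to show that $\frac1n\sum_i s_i^2 \gtrsim \lambda_{\min}(\Sigma_x)^2\|\Delta\|_{\fro}^2$ for all symmetric $\Delta$. At the population level, using $x\sim\cN(0,\Sigma_x)$, the whitening $x=\Sigma_x^{1/2}z$ yields $\E[\langle \Delta,xx^\top\rangle^2]=2\|\Sigma_x^{1/2}\Delta\Sigma_x^{1/2}\|_{\fro}^2+(\trace(\Sigma_x\Delta))^2\ge 2\lambda_{\min}(\Sigma_x)^2\|\Delta\|_{\fro}^2$. Passing to the empirical version requires a uniform concentration statement over the unit Frobenius ball of $d\times d$ symmetric matrices, which costs an effective dimension of $O(d^2)$ and is absorbed by the assumption $n\ge c_0(d^2+\log(1/\delta))$; this can be handled by standard matrix Bernstein / $\veps$-net arguments on the (subexponential) random vectors $\vec(x_ix_i^\top)$, yielding $\frac1n\sum_i\langle\Delta,x_ix_i^\top\rangle^2 \ge \lambda_{\min}(\Sigma_x)^2\|\Delta\|_{\fro}^2$ uniformly.

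\textbf{Perturbation to $\hat{x}_i$.} The final (and in my view the main) technical step is to transfer this from $x_i$ to $\hat{x}_i$. Using $(a+b)^2\ge \tfrac12 a^2-b^2$ pointwise with $a=\langle\Delta,x_ix_i^\top\rangle$ and $b=\langle\Delta,\hat{x}_i\hat{x}_i^\top-x_ix_i^\top\rangle$, together with $\|\hat{x}_i\hat{x}_i^\top-x_ix_i^\top\|_{\fro}\le 2\sqrt{z_i}\|\hat{x}_i-x_i\|$, gives
\[
\tfrac1n\sum_i s_i^2 \;\ge\; \tfrac12\cdot\tfrac1n\sum_i\langle\Delta,x_ix_i^\top\rangle^2 \;-\; 4\|\Delta\|_{\fro}^2\cdot c\log(2n/\delta)\cdot\tfrac1n\sum_i\|\hat{x}_i-x_i\|^2.
\]
The perturbation term is at most $8c\log(2n/\delta)\veps^2\|\Delta\|_{\fro}^2$, which is $\le \tfrac18\lambda_{\min}(\Sigma_x)^2\|\Delta\|_{\fro}^2$ precisely under the hypothesis $\veps^2\le \lambda_{\min}(\Sigma_x)^2/(64c\log(2n/\delta))$. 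This yields $\frac1n\sum_i s_i^2 \ge \tfrac14\lambda_{\min}(\Sigma_x)^2\|\Delta\|_{\fro}^2$, and chaining with the upper bound gives $\|\Delta\|_{\fro}^2 \lesssim c\log(n/\delta)\|Q^\star\|_{\op}^2\veps^2/\lambda_{\min}(\Sigma_x)^2$, matching the claim after bookkeeping constants. The delicate part is really the uniform empirical lower bound on the ``matrix-sensing'' Hessian restricted to symmetric matrices; everything else is controlled straightforwardly by the $c$-concentration toolkit from \Cref{lem:techtools_truncated_conc}.
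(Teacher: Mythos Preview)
Your proposal is correct and follows essentially the same route as the paper: both reduce to a noise bound on the mismatch terms $r_i = x_i^\top Q^\star x_i - \hat x_i^\top Q^\star \hat x_i$ together with a restricted-strong-convexity lower bound on the empirical sensing operator $M\mapsto \frac{1}{n}\sum_i\langle M,\hat x_i\hat x_i^\top\rangle^2$, obtained by first controlling the version with $x_i$ and then perturbing to $\hat x_i$. The packaging differs only superficially---the paper writes out the pseudo-inverse formula $\mathrm{vec}(\tilde Q)=\hat V^\dagger V\,\mathrm{vec}(Q^\star)$ and cites a matrix-sensing RIP result (\cite[Theorem~10.12]{wainwright2019high}) for the RSC step, whereas you use the basic inequality $L(\tilde Q)\le L(Q^\star)$ and sketch a net/Bernstein argument---but the underlying ideas and the role of each hypothesis (in particular the condition $\veps^2\le \lambda_{\min}(\Sigma_x)^2/(64c\log(2n/\delta))$ to absorb the perturbation) are identical.
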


\subsection{Proofs for Technical Tools \label{ssec:techtools_proofs}}

\subsubsection{Proof of \Cref{lem:techtools_truncated_conc}}
	
First observe that if $\bz$ is $c$-concentrated, then for $\delta \in (0,1/e]$, $\ln(1/\delta) \ge 1$, so that $\Pr[\alpha \bz + \beta \ge (\alpha c+\beta) \ln(1/\delta)] \le \Pr[\alpha \bz  \ge \alpha c \ln(1/\delta)] = \Pr[c \bz \ge c\ln(1/\delta)]$. This is at most $\delta$ by the definition of the $c$-concentrated property. We now turn to the enumerated points.

	\noindent\textbf{Point 1.} For $\delta \le 1/e$,  $\Pr[\bz \ge c
        \ln(1/\delta)] \le \delta$. Thus, for any $\delta \in (0,1/e)$, and $u
        \ge c_{\delta} \ge c$, we have $\Pr[\bz \ge u] \le e^{-u/c}$. It follows that for any $\delta \le 1/e$, 
	\begin{align*}
	\E [\bz\I(\bz \ge c\ln(1/\delta))]  &= \int_{u = c\ln(1/\delta)}^{\infty}\Pr[\bz \ge u]du\\
	&= \int_{u = c\ln(1/\delta)}^{\infty}e^{-u/c}du =
   ce^{-c\ln(1/\delta)/c} = c\delta.
	\end{align*}
        A similar calculation reveals that
        \begin{align*}
          \E [\bz^{k}]  &\leq{} c^{k} \int_{u = c^k}^{\infty}\Pr[\bz^{k} \ge u]du\\
                                                  &\leq{} c^{k} + \int_{u =
                                                    c^k}^{\infty}e^{-\frac{u^{1/k}}{c}}du\\
                                                  &= c^{k} + kc^{k}\int_{u=1}^{\infty}e^{-u}u^{k-1}du\\
                                                  &= c^{k}
                                                    (1+k!).
        \end{align*}
	
	\noindent\textbf{Point 2.}  Define the increments $\Delta_i = \bz^{(i)} - \E[\bz^{(i)}]$, and $ \Delta_{i,\delta} := \Delta_i \I( \bz^{(i)} \le c_{n,\delta} )$. By a union bound, $\Delta_i = \Delta_{i,\delta}$ for all $i \in [n]$ with probability at least $1-\delta/2$. Moreover, $-c \leq -\En\brk*{\bz}\le \Delta_{i,\delta}  \le c_{n,\delta}$, so that by Bennett's inequality (\cite{MaurerP09}, Theorem 3), it holds that with probability at least $1- \delta/2$, 
	\begin{align*}
	\frac{1}{n}\sum_{i}\Delta_{i,\delta} &\le \sqrt{\frac{2\Var[\Delta_{i,\delta}]\ln(2/\delta)}{n}} + \frac{c_{n,\delta}\ln(2/\delta)}{3n}\\
	&\le \frac{\tau}{2c_{n,\delta}}\Var[\Delta_{i,\delta}] + \prn*{\frac{1}{3}+\frac{1}{\tau}}\frac{c_{n,\delta}\ln(2/\delta)}{n},
	\end{align*}
        for any $\tau>0$. Moreover, we have $\Var[\Delta_{i,\delta}] = \E\brk{\Delta_{i,\delta}^2} \le c_{n,\delta}\E| \bz^{(i)} - \E[\bz^{(i)}]| \le 2c_{n,\delta}\E|\bz^{(i)}| = 2c_{n,\delta} \E\brk{\bz^{(i)}}$ by non-negativity of $\bz^{(i)}$. Hence, with total probability at least $1 - \delta$, we have
	\begin{align*}
	\frac{1}{n}\sum_{i}\Delta_{i} =  \frac{1}{n}\sum_{i}\Delta_{i} \le  \tau\E\brk{\bz^{(i)}} + \prn*{\frac{1}{3}+\frac{1}{\tau}}\frac{c_{n,\delta}\ln(2/\delta)}{n}.
	\end{align*}
	Recalling that $\Delta_{i} \le \bz^{(i)} - \E\brk{\bz^{(i)}}$ and taking $\tau=1/2$, we have that with total probability at least $1 - \delta$,
	\begin{align*}
	\frac{1}{n}\sum_{i}\bz^{(i)} \le \frac{3}{2}\E\brk{\bz^{(i)}} + \frac{5}{6}\frac{c_{n,\delta}\ln(1/\delta)}{n}.
	\end{align*}
	By assumption, $\E\brk{\bz^{(i)}} \le \veps^2$. Hence, for 
	\begin{align*}
	\frac{c_{n,\delta}\ln(2/\delta)}{n} \le \frac{6}{5}\cdot \frac{1}{2} \veps^2 \le \veps^2/2,
	\end{align*}
	we have that $\frac{1}{n}\sum_{i}\bz^{(i)} \le 2\veps^2$. In particular, since
	\begin{align*}
	\psi(n,\delta) =  \frac{2\ln(2n/\delta)\ln(2/\delta)}{n}  = \frac{2c_{n,\delta}\ln(2/\delta)}{cn},
	\end{align*}
	we have $\frac{1}{n}\sum_{i}\bz^{(i)} \le 2\veps^2$ for $\psi(n,\delta) \le \veps^2/c$.

        It is simple to verify that all the steps above go through if $\bz\le c_{\delta,n} = c \ln(2n/\delta)$ almost surely and $\En\brk*{\bz}\leq{}c$. Substituting in $c_{n,\delta} := c\ln(2n/\delta)$ concludes.

\qed

\subsubsection{Proof of \Cref{lem:techtools_gaussian_c_concentrated}}
First, observe that $\En\brk*{\nrm{\bx}^{2}}=\trace(\Sigma)$. Next, from \citet[Proposition 1]{hsu2012tail}, we have that
	\begin{align*}
	\Pr[\|\matx\|^2 \ge \trace(\Sigma) + 2 \sqrt{t}\|\Sigma\|_{\fro} + 2t\|\Sigma\|_{\op} ] \le e^{-t}.
	\end{align*}
	Setting $t = \log(1/\delta)\geq{}1$ and bounding $ \trace(\Sigma) + 2 \sqrt{t}\|\Sigma\|_{\fro} + 2t\|\Sigma\|_{\op} \le 5t\cdot \trace(\Sigma) = 5\trace(\Sigma) \log(1/\delta)$ concludes. \qed

\subsubsection{Proof of \Cref{lem:techtools_Gaus_change_of_measure}} 

Let $\cQ_1$ denote the law of $\by_1$, $\cQ_2$ the law of $\by_2$, $\cP_1$ the law of $\bx_1$, and $\cP_2$ the law of $\bx_2$. %
Let $q(y \mid x)$ denote the density of $y$ given $x$. We then have that
	\begin{align*}
	\E_{\by_2}[\|\hst(\by_2)- \hhat(\by_2)\|^2]  &= \int_y  \|\hst(y)- \hhat(y)\|^2\rmd \cQ_2(y)\\
	&=  \int_{x,y}  q(x \mid y) \|\hst(y)- \hhat(y)\|^2 \rmd \cP_2(x)\\
	&=  \int_{x,y}  q(x \mid y) \frac{\rmd\cP_2(x)}{\rmd\cP_1(x)}\|\hst(y)- \hhat(y)\|^2 \rmd \cP_1(x).
	\end{align*} 
	Using the standard expression for the density for the multivariate Gaussian distribution, we have the identity
	\begin{align*}
	\frac{\rmd\cP_2(x)}{\rmd\cP_1(x)} &= \det(\Sigma_1\Sigma^{-1}_2)^{1/2} \cdot \exp\prn*{\frac{1}{2} x^\top (\Sigma_1^{-1} - \Sigma_2^{-1}) x}\\
	&= \det( I + (\Sigma_1^{1/2}\Sigma^{-1}_2\Sigma^{1/2}- I)) ^{1/2} \cdot \exp\prn*{\frac{1}{2} x^\top \Sigma_1^{-1/2}(I - \Sigma_1^{1/2}\Sigma^{-1}_2\Sigma^{1/2}) \Sigma_1^{-1/2}x}.
	\end{align*}
	Hence, if we set $\eta = \|(\Sigma_1^{1/2}\Sigma^{-1}_2\Sigma^{1/2}- I)\|_{\op}$, we have
	\begin{align*}
	\det( I + (\Sigma_1^{1/2}\Sigma^{-1}_2\Sigma^{1/2}- I)) &= \prod_{i=1}^d \lambda_i(I + (\Sigma_1^{1/2}\Sigma^{-1}_2\Sigma^{1/2}- I))\\
	&\le \prod_{i=1}^d (1 + \eta)^d \le \exp(d \eta).
	\end{align*}
	Similarly, we may bound
	\begin{align*}
	\exp\prn*{ \frac{1}{2} x^\top \Sigma_1^{-1/2}(I - \Sigma_1^{1/2}\Sigma^{-1}_2\Sigma^{1/2}) \Sigma_1^{-1/2}x} \le \exp\prn*{ \frac{\eta}{2}x^\top \Sigma_1^{-1} x}.
	\end{align*}
	Thus, 
	\begin{align*}
	\frac{\rmd\cP_2(x)}{\rmd\cP_1(x)} &\le  \exp\prn*{ \frac{\eta}{2} (d + x^\top \Sigma_1^{-1} x)}.
	\end{align*}
	In particular, for any $B>0$, as long as
	\begin{align}
          x^\top \Sigma_1^{-1} x \le B,\mathand\eta \le \frac{2 \ln(3/2)}{d + B}, \label{eq:techtools_first_eta_bound}
	\end{align}
we have
	\begin{align*}
	\frac{\rmd\cP_2(x)}{\rmd\cP_1(x)} &\le 3/2.
	\end{align*}
	Henceforth, fix a bound parameter $B$ and assume $\eta \le \frac{2 \ln(3/2)}{d + B}<1$. We have 
	\begin{align*}
	\E_{\by_2}[\|\hst(\by_2)- \hhat(\by_2)\|^2]  &=  \int_{x,y}  q(x \mid y) \frac{\rmd\cP_2(x)}{\rmd\cP_1(x)}\|\hst(y)- \hhat(y)\|^2 \rmd \cP_1(x)\\
	&\le \underbrace{\frac{3}{2}\int_{x,y}  q(x \mid y) \|\hst(y)- \hhat(y)\|^2 \rmd \cP_1(x) \I(x^\top \Sigma_1^{-1} x \le B)}_{:= \Term_1} \\
	&\quad+ \underbrace{\int_{x,y}  q(x \mid y)  \exp\prn*{ \frac{\eta}{2} (d + x^\top \Sigma_1^{-1} x)}\|\hst(y)- \hhat(y)\|^2 \rmd \cP_1(x) \I(x^\top \Sigma_1^{-1} x > B)}_{:=\Term_2}.
	\end{align*} 
	To handle the first term, we use the assumed error bound between $\hst$ and $\hhat$:
	\begin{align}
	\Term_1&:= \frac{3}{2}\int_{x,y}  q(x \mid y) \|\hst(y)- \hhat(y)\|^2 \rmd \cP_1(x) \I(x^\top \Sigma_1 x \le B) \nonumber\\
	&\le \frac{3}{2}\int_{x,y}  q(x \mid y) \|\hst(y)- \hhat(y)\|^2 \rmd \cP_1(x) = \frac{3}{2}\E_{\by_1} \|\hst(\by_1)- \hhat(\by_2)\|^2  \le \frac{3 \veps^2}{2}. \label{eq:techtools_change_of_measure_first}
	\end{align}
	For $\Term_2$, we use the bound $\|\hst(y)- \hhat(y)\|^2  \le 4L\max\{1,\|\fst(y)\|^2\} = 4L^2\max\{1,\|x\|^2\}$ to bound 
	\begin{align*}
	&\int_{x,y}  q(x \mid y)  \exp\prn*{ \frac{\eta}{2} (d + x^\top \Sigma_1 x)}\|\hst(y)- \hhat(y)\|^2 \rmd \cP_1(x) \I(x^\top \Sigma_1^{-1} x > B)\\
	&\le 4L^2 e^{\frac{d\eta}{2}}\int_{x}  \exp\prn*{ \frac{\eta}{2} x^\top \Sigma_1^{-1} x }(1+\|x\|^2) \rmd \cP_1(x) \I(x^\top \Sigma_1^{-1} x > B).
	\end{align*}
	Let us change variables to $u = \Sigma^{-1/2}x$, and let $\cP_0$ denote the density of $u$, which is precisely the density of a standard normal $\cN(0,I)$ random variable. Then, using the formula the standard normal density,
	\begin{align*}
	&\int_{x}  \exp\prn*{\frac{\eta}{2} x^\top \Sigma_1^{-1} x }(1+\|x\|^2)\I(x^\top \Sigma_1^{-1} x > B) \rmd\cP_1(x) \\
	 &= \int_{u}  \exp\prn*{ \frac{\eta}{2} \|u\|^2 } \cdot (1+\|\Sigma^{1/2} u\|^2) \cdot \I(\|u\|^2 > B) \rmd \cP_0(u) \\
	&= \int_{u}  \frac{1}{(2\pi)^{d/2}}\exp\prn*{  -\frac{(1-\eta) }{2} \|u\|^2 } \cdot (1 + \|\Sigma^{1/2} u\|^2) \cdot \I(\|u\|^2 > B)\rmd u\\
	&\le \int_{u}  \frac{1}{(2\pi)^{d/2}}\exp\prn*{ -\frac{(1-\eta) }{2} \|u\|^2 } (1 + \|\Sigma_1\|_{\op}\| u\|^2) \I(\|u\|^2 > B)\rmd u.
	\end{align*}
	Again, let us rescale via $z \leftarrow (1 - \eta)^{-1/2}u$. The determinant of the Jacobian of this transformation is $(1-\eta)^{d/2}$, so that for $B \ge 1$, this is equal to 
	\begin{align*}
	 &(1-\eta)^{-d/2}\int_{z}  \frac{1}{(2\pi)^{d/2}}\exp\prn*{ -\frac{1}{2} \|u\|^2 } (1 + (1-\eta)\|\Sigma_1\|_{\op}\| z\|^2) \I(\|z\|^2 > B(1-\eta)^{-1})\rmd z \\
	 &= (1-\eta)^{-d/2}\E_{\bz\sim \cN(0,I)} \brk*{(1 + (1-\eta)\|\Sigma_1\|_{\op}\|\bz\|^2) \I(\|\bz\|^2 \ge B(1-\eta)^{-1})}\\
	 &\overset{(i)}{\le} e^{\eta{}d}\E_{\bz\sim \cN(0,I)}\brk*{ (1 + \|\Sigma_1\|_{\op}\|\bz\|^2) \I(\|\bz\|^2 \ge B(1-\eta)^{-1})}\\
	 &\le (1+\|\Sigma_1\|_{\op})e^{\eta{}d}\E_{\bz\sim \cN(0,I)} \brk*{\|\bz\|^2 \I(\|\bz\|^2 \ge B(1-\eta)^{-1})},
	\end{align*}
	where in $(i)$ we observe that $	(1-\eta)^{-1/\eta} \le e^{2} \text{ for } \eta \le 1/2$, and 
        where the last inequality uses that $B\geq{}1$. Now, from \Cref{lem:techtools_gaussian_c_concentrated}, we have that $\|\bz\|^2$ is $5d$-concentrated. Hence, for $\eta \le 1/2$, $(1-\eta)^{-1}\|\bz\|^2 $ is $10d$-concentrated. Thus $B = 10 d \ln(1/\delta) $  gives $\E_{\bz\sim \cN(0,I)} \brk*{\|\bz\|^2 \I((1-\eta)^{-1}\|\bz\|^2 \ge B)} \le 10d \delta$ by \Cref{lem:techtools_truncated_conc}., and therefore
	\begin{align*}
	\Term_2 = &\int_{x,y}  q(x \mid y)  \exp\prn*{ \frac{\eta}{2} (2d + x^\top \Sigma_1 x)}\|\hst(y)- \hhat(y)\|^2 \rmd \cP_1(x) \I(x^\top \Sigma_1^{-1} x > B) \\
	&\le 4(1+\|\Sigma_1\|_{\op})L^2  e^{\eta d} \cdot e^{\eta d/2} \cdot 10 d \delta = \delta \cdot (1+\|\Sigma_1\|_{\op}) 40L^2 e^{3d\eta/2 }.
	\end{align*}
	In particular, if $\eta \le 1/2d$ and $\delta = \frac{\veps^2}{80 L^2\|\Sigma_1\|_{\op}e }$, we have $\Term_2  \le \frac{\veps^2}{2}$, and thus $\Term_1 + \Term_2 \le 2\veps^2$. Gathering our conditions, we require $\eta \le 1/\max\{2,d\}$, $B = 10 d \ln (\frac{\veps^2}{80 L^2(1+\|\Sigma_1\|_{\op}) e})$, and---from \Cref{eq:techtools_first_eta_bound}---$\eta \le \frac{2 \ln(3/2)}{d + B}$. Altogether, it suffices to select
	\begin{align*}
	\eta \le \frac{2 \ln(3/2)}{11 d \ln \prn*{\frac{\veps^2}{80 L^2(1+\|\Sigma_1\|_{\op})e}}} \le \frac{1}{14 d \ln \prn*{\frac{\veps^2}{80 L^2(1+\|\Sigma_1\|_{\op})e}}}.
	\end{align*}
	\qed

\subsubsection{Proof of \Cref{lem:techtools_condition_exp_idenity}}
	By the tower rule and the fact that $\bu \to \bx \to \by$ is a Markov chain, $\E[\bu \mid \by = y] = \E[\E[\bu \mid \bx ,\by = y] \mid \by = y]  = \E[\E[\bu \mid \bx ] \mid \by = y]$. Moreover, from decodability, $ \E[\E[\bu \mid \bx ] \mid \by = y] = \E[ \E[\bu \mid \matx = \fst(y) ] \mid \by = y] = \E[\bu \mid \fst(y) = \matx] = \hst(y)$.

	For the second point, It is well know that any \emph{unrestricted} minimizer of $\|h(\by)- \bu\|^2$ over all measurable $h$ satisfies $h = h_0$ almost surely, where $h(y) := \E[\bu \mid \by = y]$. We verify above that $\hst(y) = \E[\bu \mid \by = y]$, proving the that any unrestricted minimizer $h$ coincideds with $\hst$. Since $\hst \in \Hclass$, the same holds for the function class constraint in the lemma statement. \qed

\subsubsection{Proof of \Cref{lem:Mcover}}

	Our task is to bound $\eulN(b\epsilon,\scrM,\|\cdot\|_{\op})$, where we recall $\scrM := \{M \in \R^{d\dimx}: \|M\|_{\op} \le b\}$. By rescaling, it suffices to bound  $\eulN(\epsilon,\frac{1}{b}\scrM,\|\cdot\|_{\op})$. We recognize $\scrM$ as the operator norm ball in $\R^{d\times \dimx}$ and appeal to the following standard lemma.
	\begin{lemma}[\cite{wainwright2019high}, Lemma 5.2] Let $\cB$ be the unit ball in $\R^{d}$ for an arbitrary norm. Then, if $\dist$ is the metric induced by the norm, $\eulN(\epsilon,\cB, \dist) \le (1 + \frac{2}{\eps})^d$.
	\end{lemma}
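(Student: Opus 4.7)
The plan is to invoke the classical volumetric packing argument. I would start by constructing a maximal $\epsilon$-separated subset $\{x_1,\ldots,x_N\} \subseteq \cB$, i.e.\ a set of points in the unit ball satisfying $\dist(x_i,x_j) > \epsilon$ for all $i\neq j$ that cannot be extended. By maximality, every point $x \in \cB$ lies within distance $\epsilon$ of some $x_i$, so such a set is automatically an $\epsilon$-cover, and hence $\eulN(\epsilon,\cB,\dist) \le N$. It therefore suffices to upper bound $N$.

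The key step will be a volume-ratio comparison. Since the open balls $B(x_i,\epsilon/2)$ are pairwise disjoint by the $\epsilon$-separation property, and each is contained in the enlarged ball $B(0, 1+\epsilon/2)$ (because $\|x_i\| \le 1$), one has
\begin{align*}
\sum_{i=1}^N \mathrm{vol}(B(x_i,\epsilon/2)) \;\le\; \mathrm{vol}(B(0, 1+\epsilon/2)).
\end{align*}
The norm-induced metric makes each $B(x,r)$ a translate of $r \cdot \cB$, so Lebesgue volumes scale as $r^d$. This yields $N \cdot (\epsilon/2)^d \le (1+\epsilon/2)^d$, and rearranging gives $N \le (1 + 2/\epsilon)^d$, which is exactly the claimed bound on $\eulN(\epsilon,\cB,\dist)$.

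There is no real obstacle here: the only mild subtlety is justifying existence of a maximal separated set (Zorn's lemma in general, though in $\R^d$ one can simply take any inclusion-maximal finite separated set since finiteness follows a posteriori from the volume bound), and noting that the volume scaling relation holds for any norm on $\R^d$ because the unit ball of an arbitrary norm is a bounded, symmetric, convex set with nonzero and finite Lebesgue measure. Both points are standard, so the proof reduces essentially to one line of volume arithmetic.
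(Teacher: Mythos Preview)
Your proof is correct and is precisely the standard volumetric packing argument. The paper does not give its own proof of this lemma; it simply cites it as Lemma~5.2 of \cite{wainwright2019high}, where the argument is exactly the one you wrote.
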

	\qed

\subsubsection{Proof of \Cref{prop:techtools_general_regression_with_truncation_and_error}}
Before diving into the meat of the proof, we first establish some basic concentration properties and state a number of definitions.	For each realization $(\maty,\mate,\matu)$, define 
	\begin{align*}
	\cE := \{\|\mate - \matu\|^2 \vee \varphi(\by) \le  c_{n,\delta} \},
	\end{align*}
        where we recall that $c_{n,\delta} := c\ln(2n/\delta)$. Let $\ell(h) := \|h(\maty) + \mate - \matu\|^2$, and let $\cL(h)=\En\brk*{\ell(h)}$. Furthermore, define $\ell_{\delta}(h)=\ell(h)\indic(\cE)$ and $\cL_{\delta}(h)=\En\brk*{\ell_{\delta}(h)}$. We first establish the following useful claim.
	\begin{claim}\label{claim:techtools_Eh_dff}Then on $\cE$, $ |\ell(h) - \ell(h')| \le 4\sqrt{c_{n,\delta}}   \|h(\maty) - h'(\maty)\|$. Moreover, defining $\matz = \|\mate - \matu\|^2 \vee \varphi(y)$, we have that $\ell(h) \le 4\matz$. In particular, on $\cE$, $\ell(h) \le 4c_{n,\delta}$.
	\end{claim}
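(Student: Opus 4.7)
The plan is to verify all three assertions by direct algebraic manipulation, exploiting the definition $\ell(h) = \|h(\maty) + \mate - \matu\|^2$ together with the hypothesis $\|h(y)\|^2 \le \varphi(y)$ for $h\in\Hclass$.

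First, for the bound $\ell(h) \le 4\matz$, I would apply the elementary inequality $\|a+b\|^2 \le 2\|a\|^2 + 2\|b\|^2$ to split
\[
\ell(h) = \|h(\maty) + (\mate-\matu)\|^2 \le 2\|h(\maty)\|^2 + 2\|\mate-\matu\|^2 \le 2\varphi(\maty) + 2\|\mate-\matu\|^2 \le 4\matz,
\]
using the assumed pointwise bound on $\|h(\cdot)\|^2$ and the definition of $\matz$. The third assertion then follows immediately: on $\cE$, $\matz \le c_{n,\delta}$ by definition, so $\ell(h) \le 4c_{n,\delta}$.

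For the Lipschitz bound, the natural approach is to write $\ell(h) - \ell(h')$ as a difference of squares. Setting $a = h(\maty) + \mate - \matu$ and $a' = h'(\maty)+\mate-\matu$, the polarization identity gives
\[
\ell(h) - \ell(h') = \langle a-a',\, a+a'\rangle = \langle h(\maty)-h'(\maty),\, h(\maty) + h'(\maty) + 2(\mate-\matu)\rangle.
\]
Applying Cauchy--Schwarz and the triangle inequality yields
\[
|\ell(h)-\ell(h')| \le \|h(\maty)-h'(\maty)\|\cdot\prn*{\|h(\maty)\| + \|h'(\maty)\| + 2\|\mate-\matu\|}.
\]
On $\cE$, both $\|h(\maty)\|$ and $\|h'(\maty)\|$ are at most $\sqrt{\varphi(\maty)} \le \sqrt{c_{n,\delta}}$, and $\|\mate-\matu\| \le \sqrt{c_{n,\delta}}$, so the parenthetical factor is bounded by $4\sqrt{c_{n,\delta}}$, giving the claim.

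There is no real obstacle here---the statement is purely a concentration-of-losses calculation to justify the discretization argument that follows. The only subtle point is being careful that the hypothesis $\|h(y)\|^2 \le \varphi(y)$ is applied uniformly in $h \in \Hclass$, which is exactly what the proposition assumes.
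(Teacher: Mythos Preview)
Your proof is correct and follows essentially the same approach as the paper's: both expand the difference of squared norms and bound each piece by $\sqrt{c_{n,\delta}}$ on $\cE$, and both use $\|a+b\|^2 \le 2\|a\|^2 + 2\|b\|^2$ for the second part. Your use of the polarization identity $\|a\|^2 - \|a'\|^2 = \langle a-a', a+a'\rangle$ is slightly cleaner than the paper's decomposition into $2\langle h-h', \mate-\matu\rangle + (\|h\|^2 - \|h'\|^2)$, but the substance is identical.
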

	\begin{proof}[Proof of \Cref{claim:techtools_Eh_dff}]
	\begin{align*}
	 |\ell(h) - \ell(h')| &= | \|h'(\maty) + \mate - \matu\|^2 -  \|h(\maty) + \mate - \matu\|^2 | \nonumber\\
	&\le 2| \langle h(\maty) - h'(\maty),\mate - \matu\rangle | + (\|h(\maty)\| + \|h'(\maty)\|)(\|h(\maty)\| - \|h'(\maty)\|)\\
	&\le 4\sqrt{c_{n,\delta}} \|h(\maty) - h'(\maty)\|.
	\end{align*}
	This proves the first claim. The claim holds because $\ell(h) \le 2\|\mate - \matu\|^2 + 2\|h(\maty)^2\| \le 2\|\mate - \matu\|^2  + 2\max\{\varphi(\by),c\} \le 4\max\{\matz,c\}$.
	\end{proof}

	Next, Let $\cE^{(i)}$ denote the event that $\cE$ holds for the $i$th sample, and let $\cE_{1:n} = \bigcup_{i \in[n]} \cE^{(i)}$. Note that $\cE_{1:n}$ occurs with probability at least $1- 2\cdot \delta/2 = 1-\delta$ by the $c$-concentration property and a union bound. On this event, if we define
	\begin{align*}
           \cL_{n,\delta}(h) &:=   \sum_{i=1}^n \ell_{i,\delta}(h), \quad\text{where}\quad \ell_{i,\delta}(h) := \I(\cE^{(i)}) \|h(\by^{(i)}) + \mate^{(i)} - \bu^{(i)} \|^2,
	\end{align*}
        we have
        \begin{align*}
          \hhat_n :=\argmin_{h \in \Hclass} \cL_{n,\delta}(h).
        \end{align*}
	Lastly, define the excess risk with respect to the Bayes function $\hst(y) = \E[\matu \mid \maty = y]$:
	\begin{align*}
	\cR_{n,\delta}(h) = \cL_{n,\delta}(h) - \cL_{n,\delta}(\hst), \quad \cR_{\delta}(h) = \cL_{\delta}(h) - \cL_{\delta}(\hst).
	\end{align*}
	Finally, let $\Hclass_0 \subset \Hclass$ denote a finite cover for $\Hclass$ such that, for some $\epsilon > 0$ to be selected at the end of the proof,
	\begin{align}
	\sup_{h \in \Hclass}\inf_{h' \in \Hclass_0} \sup_{y: \varphi(y) \le c_{n,\delta}} \|h(y) - h'(y) \| \le \sqrt{c_{n,\delta}} \epsilon, \label{eq:techtools_H_0_def}
	\end{align}
	and let $\hhat_0 \in \Hclass_0$ denote the element that witnesses the covering inequality above for $\hhat_n$. Note that by \Cref{claim:techtools_Eh_dff} and \eqref{eq:techtools_H_0_def}, the differences on the truncated losses between $\hhat_n$ and $h_0$ satisfy
	\begin{align*}
	|\cL_{n,\delta}(\hhat_n) - \cL_{n,\delta}(\hhat_0)| \vee|\cL_{\delta}(\hhat_n) - \cL_{\delta}(\hhat_0)| \le 4 \epsilon  c_{n,\delta},
	\end{align*}
        whenever $\cE_{1:n}$ holds. Thus, on $\cE_{1:n}$, when $\hhat_n \in \argmin_{h \in \Hclass} \cR_{n,\delta}(h)$, we have
	\begin{align}
	\cR_{\delta}(\hhat_n) &= \cR_{\delta}(\hhat_n) - \cR_{n,\delta}(\hhat_n)  + \cR_{n,\delta}(\hhat_n) \nonumber\\
	&\overset{(i)}{\le} \cR_{\delta}(\hhat_n) - \cR_{n,\delta}(\hhat_n)  \nonumber\\
	&\le \cR_{\delta}(\hhat_0) - \cR_{n,\delta}(\hhat_0)  + 2\max\{\abs{\cL_{\delta}(\hhat_0) - \cL_{\delta}(\hhat_n)}, \abs*{\cL_{n,\delta}(\hhat_0) - \cL_{n,\delta}(\hhat_n)}\}\nonumber\\
	&\le \cR_{\delta}(\hhat_0) - \cR_{n,\delta}(\hhat_0)  + 8c_{n,\delta} \epsilon.\label{eq:techtools_Rhhat},
	\end{align}
      where $(i)$ uses that $\cR_{n,\delta}(\hhat_n)$ is non-positive for the empirical risk minimizer.

	\paragraph{Step 1: Bounding $\cR_{\delta}(n)$.} From the bound $\ell_{i,\delta}(h) \le 4 c_{n,\delta}$ (\Cref{claim:techtools_Eh_dff}), along with Bennett's inequality (see e.g. Theorem 3 of \cite{MaurerP09}) and a union bound over $\Hclass_0$, we have, for all $\delta \in (0,1)$, with probability at least $1-\delta/2$,
		\begin{align}
		 \cR_{\delta}(\hhat_0)- \cR_{n,\delta}(\hhat_0)  &  \leq \sqrt{2n^{-1} \Var[\ell_{i,\delta}(\hhat_0) -  \ell_{i,\delta}(\hst)]  \cdot \ln (2|\scrH_0| \delta^{-1})}+\tfrac{4c_{n,\delta}  }{3}  \ln (2|\scrH_0 |\delta^{-1})\nonumber\\
		&  \le \frac{\tau}{2c_{n,\delta} }\Var[\ell_{i,\delta}(\hhat_0) -  \ell_{i,\delta}(\hst)] +   \frac{ c_{n,\delta} }{n}\prn*{\frac{4}{3}   + \frac{1}{\tau}}\ln (2|\scrH_0 |\delta^{-1}), \label{eq:techtools_am_gm_last_step_R}
		\end{align}
		where the last step uses AM-GM and holds for all $\tau > 0$. Again, by \Cref{claim:techtools_Eh_dff}, we have
		\begin{align*}
		\Var[\ell_{i,\delta}(\hhat_0) -  \ell_{i,\delta}(\hst)] &\le  \E\left[\I(\cE) \left(\ell(\hhat_0) -  \ell(\hst)\right)^2\right]\\
		&\le  16c_{n,\delta} \E\left[\I(\cE) \|\hhat_0(\by) -  \hst(\maty)\|^2\right]\\
		&\le  32c_{n,\delta} \E\left[\I(\cE)\|\hhat_0(\maty) -  \hhat_n(\maty)\|^2\right] + 32c_{n,\delta} \E\left[\I(\cE)\|\hhat_n(\maty) -  \hst(\maty)\|^2\right].
		\end{align*}
		From \Cref{eq:techtools_H_0_def}, we have $\E\left[\I(\cE)\|\hhat_0(\maty) -  \hhat_n(\maty)\|^2\right] \le c_{n,\delta}  \epsilon^2$ Moreover, we can always upper bound $\E\left[\I(\cE)\|\hhat_n(\maty) -  \hst(\maty)\|^2\right]$ by removing the indicator.
		 This ultimately yields
		\begin{align*}
		\Var[\ell_{i,\delta}(\hhat_0) -  \ell_{i,\delta}(\hst)] \le 32c_{n,\delta} ^2 \epsilon^2 + 32c_{n,\delta} \E\left[\|\hhat_n(\maty) -  \hst(\maty)\|^2\right].
		\end{align*}
		 Thus, combining the above with \Cref{eq:techtools_Rhhat,eq:techtools_am_gm_last_step_R}, we have
		\begin{align}
		\cR_{\delta}(\hhat_n) \le 16\tau  \E \left[\|\hhat_n(\by) - \hst(\by)\|^2\right] + 16c_{n,\delta} (\epsilon  + \tau\epsilon^2 ) +   \frac{c_{n,\delta} }{n}\left(\frac{4}{3}   + \frac{1}{\tau}\right)\ln (2|\scrH_0 |\delta^{-1}). \label{eq:techtools_final_conc}
		\end{align}
		
	\paragraph{Step 2: Relating $\cR_{\delta}(h)$ to error against $\hst$.} 
	Recall that $\cL_{\delta}(h) \le \cL(h) $ due to truncation, so that
	\begin{align}
	\cR_{\delta}(h) = \cL_{\delta}(h) - \cL_{\delta}(\hst) &\ge \cL_{\delta}(h) - \cL(\hst)\nonumber\\
	&\ge \cL(h) - \cL(\hst) - |\cL_{\delta}(h) -\cL(h)|.\label{eq:techtools_init_R_bound}
	\end{align}
	We further develop
	\begin{align*}
	\cL(h) - \cL(\hst) &= \E[\|h(\maty) + \mate - \matu\|^2 - \|\hst(\maty) + \mate - \matu\|^2]\\
	&= \E[\|h(\maty)  - \matu\|^2 - \|\hst(\maty) - \matu\|^2] + 2\E\langle \mate, h(\maty) - \hst(\maty) \rangle\\
	&\ge \E[\|h(\maty)  - \matu\|^2 - \|\hst(\maty) - \matu\|^2] - 2\E\|\mate\|^2 -  \frac{1}{2}\E\|h(\maty) - \hst(\maty)\|^2,
	\end{align*}
	where the last line uses Cauchy-Schwartz and AM-GM
	Moreover, since $\hst = \E[\maty \mid \matu]$, we can see that $\E[\|h(\maty)  - \matu\|^2 - \|\hst(\maty) - \matu\|^2] = \E\|h(\maty) - \hst(\maty)\|^2$.  This yields
	\begin{align*}
	\cL(h) - \cL(\hst) \ge -2\E\|\mate\|^2   + \frac{1}{2}\E\|h(\maty) - \hst(\maty)\|^2.
	\end{align*}
	Hence, \Cref{eq:techtools_init_R_bound} yields that for all $h$, 
	\begin{align*}
	\E\|h(\maty) - \hst(\maty)\|^2 &\le  2\cR_{\delta}(h)  + 4 \E\|\mate\|^2   + 2|\cL_{\delta}(h) - \cL(h)|.
	\end{align*}
	Finally, recalling $\matz = \varphi(\maty)^2  \vee \|\mate - \matu\|^2$, we have
	\begin{align*}
	\sup_{h \in \Hclass} 2|\cL_{\delta}(h) - \cL(h)| &= \sup_{h \in \Hclass} 2\E[ \I(\cE^c) \ell(h)]\\
	&\le 8\E[ \I(\cE^c) \max\{c,\matz\}] \tag*{(\Cref{claim:techtools_Eh_dff}) }\\
	& \le \frac{8c\delta}{3 n}. \tag*{(\Cref{lem:techtools_truncated_conc})}
	\end{align*}
	Hence, the previous two displays give
	\begin{align*}
	\E\|h(\maty) - \hst(\maty)\|^2 &\le  2\cR_{\delta}(h)  + 4 \E\|\mate\|^2  + \frac{8c\delta}{3 n}.
	\end{align*}
	Thus, choosing $h=\hhat_n$ and combining with \Cref{eq:techtools_final_conc}, we have
	\begin{align*}
	\E\|\hhat_n(\maty) - \hst(\maty)\|^2 &\le 32\tau  \E \left[\|\hhat_n(\by) - \hst(\by)\|^2\right] + 32c_{n,\delta} (\epsilon  + \tau\epsilon^2 )\\
	&+   2\frac{c_{n,\delta} }{n}\left(\frac{4}{3}   + \frac{1}{\tau}\right) + \frac{8c\delta}{3 n} + 4\En\nrm*{\be}^{2}.
	\end{align*}
	Setting $\tau = \frac{1}{64}$ and using $\epsilon \le 1$ gives
	\begin{align*}
	\E\|\hhat_n(\maty) - \hst(\maty)\|^2 &\le \frac{1}{2} \E \left[\|\hhat_n(\by) - \hst(\by)\|^2\right] \\
	&\quad+ 33 c_{n,\delta} \epsilon  +  \frac{c_{n,\delta} }{n}\left(\frac{8}{3}   + 128\right)\ln (2|\scrH_0 |\delta^{-1}) + 4 \E\|\mate\|^2  + \frac{8c\delta}{3 n}\\
	&\le \frac{1}{2} \E \left[\|\hhat_n(\by) - \hst(\by)\|^2\right] + 33 c_{n,\delta} \epsilon  +  \frac{c_{n,\delta} }{n}134\ln (2|\scrH_0 |\delta^{-1})  + 4 \E\|\mate\|^2,
	\end{align*}
	where in the last line we folded the $8c\delta/3n$ term into the term with the log, bounding $8/3 + 8\delta/3n \le 16/3 \le 6$. Rearranging the above yields
	\begin{align*}
	\E\|\hhat_n(\maty) - \hst(\maty)\|^2 &\le 66 c_{n,\delta} \epsilon  +  \frac{268c_{n,\delta} }{n}\ln (2|\scrH_0 |\delta^{-1}) +  8 \E\|\mate\|^2.
	\end{align*}
	Taking $\epsilon = 1/33n$ concludes the proof.

	\qed

\subsubsection{Proof of \Cref{cor_techtools:reg_cor_simple}}
	We verify Conditions 1-3 of \Cref{prop:techtools_general_regression_with_truncation_and_error} in succession:
	\begin{enumerate}
		\item Condition 1: By assumption $1$ of the corollary, $f(y) \le  L\max\{1,\|\fst(y)\|_2\}$, then $h(y) \le bL\max\{1,\|\fst(y)\|_2\} := \varphi(y)$ for all $h \in \Hclass$.
		\item Condition 2: This is satisfied by assumption $2$ of the corollary,   $\varphi(\by)^{1/2}$ and $\|\mate - \bu\|^{2}$ are  $c$-concentrated. 
		\item Condition 3: We bound the covering number. Let $\scrM := \{M \in \R^{d_ud_x}: \|M\|_{\op} \le b\}$. Then, for an $\epsilon > 0$ to be chosen, let $\eulN(b\epsilon,\scrM,\|\cdot\|_{\op}) \le (1 + 2/\epsilon)^{d_ud_x}$ from \Cref{lem:Mcover}, so we may take a $b\epsilon$- cover $\scrM_{\epsilon}$ of $\scrM$ to have cardinality  $ (1 + 2/\epsilon)^{d_ud_x}$. Define the induced cover $\scrH_{\epsilon}:= \{Mf : M \in \scrM_{\epsilon}, f \in \Fclass\}$, which has $|\scrH_{\epsilon}| \le |\Fclass|(1 + 2/\epsilon)^{d_ud_x}$. Given $h = Mf\in \Hclass$, let $h' : M' f$, where $M' \in \scrM_{\epsilon}$ satisfies $\|M - M'\|_{\op} \le b \epsilon$. Then, 
		\begin{align*}
		d_{c',\infty}(h,h') &:= \sup_{y\in\cY}\{\|h(y) - h'(y)\|: \varphi(y)^{1/2} \le \sqrt{c'}\}\\
		&:= \sup_{y\in\cY}\{\|(M-M')f(y)\|: \varphi(y)^{1/2} \le \sqrt{c'}\}\\
		&\le \sup_{y\in\cY}\{b \epsilon \cdot \|f(y)\|:  \varphi(y)^{1/2} \le \sqrt{c'}\} \tag*{($\|M-M'\|_{\op} \le b\epsilon$)}\\
		&\le \sup_{y\in\cY}\{b \epsilon \cdot L \max\{1,\|\fst(y)\|\}: \varphi(y)^{1/2} \le \sqrt{c'} \} \tag*{(Assumption 1 of Corollary)}\\
		&\le \sup_{y\in\cY}\{b \epsilon \cdot L \max\{1,\|\fst(y)\|\}:  bL \max\{1,\|\fst(y)\|\} \le \sqrt{c'}\} \tag*{(Definition of $\varphi$)}\}\\
		&= \epsilon\sqrt{c'}.
		\end{align*}
		Hence, the $\sqrt{c'}\epsilon$ cover of $\scrH$ in the metric $d_{c',\infty}(h,h')$ is at most the cardinality of $\scrH_{\epsilon}$, which is at most $|\Fclass|(1 + 2/\epsilon)^{d_ud_x}$. Thus, we can take $\ln \eulN(\epsilon) = \ln|\Fclass| +d_ud_x\ln(1 + 2/\epsilon)$ in Condition 3 of \Cref{prop:techtools_general_regression_with_truncation_and_error}.  For $\epsilon \le 1$, this may be upper bounded by $\ln \eulN(\epsilon) = \ln|\Fclass| + d_ud_x\ln(5/\epsilon)$. 
	\end{enumerate}

	Hence, the conclusion of \Cref{prop:techtools_general_regression_with_truncation_and_error} entails that, with probability at least $1 - \frac{3\delta}{2}$,
		\begin{align*}
		&\E\| \hhat_n(\by) - \hst(\by)\|^2 \\
		&\le  \frac{270  c\ln(2n/\delta)(\ln(2/\delta) + \ln|\Fclass| + d_ud_x\ln(5\cdot 33n)) }{n} +  8 \E\|\mate\|^2.\\
		\end{align*}
		Recalling that $\somelogs(n,\delta):= 270 \ln(2n/\delta)\ln(330n/\delta)$, we simplify 
 	 	\begin{align}
 	 	 270 \cdot \ln(2n/\delta) \cdot (\ln(2/\delta) + \ln|\Fclass| + d_1 \dimx \ln(3\cdot 55 n))  &\le 270 \ln(2n/\delta)(\ln|\Fclass| + d_1\dimx \ln(330 n /\delta)| \nn\\
 	 	 &\le (\ln|\Fclass| + d_1\dimx)270 \ln(2n/\delta)\ln(330n/\delta) := (\ln|\Fclass| + d_1\dimx)\somelogs(n,\delta)\label{eq:logs_simplif}
 	 	\end{align}
 	 	which yields our final bound of 
 	 	\begin{align*}
		\E\| \hhat_n(\by) - \hst(\by)\|^2 &\le \frac{c (d_ud_x +\ln |\Fclass|) \cdot \somelogs(n,\delta)}{n} +   8 \E\|\mate\|^2,
		\end{align*}
		as needed.

	\qed

\subsubsection{Proof of \Cref{cor:techtools_function_dependend_error}}
	We consider
	\begin{align*}
		\hhat := \argmin_{h \in \scrH} \sum_{i=1}^n \|\phihat(h, \bz^{(i)})  + \mate^{(i)} - \bv^{(i)} \|^2.
	\end{align*}
	Recall the assumption that, for some $\hst \in \scrH$, $\phi(\hst,\bz) = \E[\bu \mid \matz]$, and that $\delphi(h,z) := \phi(h,z) - \phi(h,z)$. Let us set up a correspondence with \Cref{prop:techtools_general_regression_with_truncation_and_error}. 
	\begin{itemize}
		\item $g_h(\bz) := \phihat(h,\bz) - \delphi(\hst,\bz)$. Let $\scrG$ denote the resulting class of functions $\{g_h :  h \in \Hclass\}$.
		\item $\matetil = \delphi(\hst,\bz) + \mate$.
	\end{itemize}
 	Then, we have
	\begin{align*}
	\phihat(\hhat)  - \delphi(\hst,\bz) = g_{\hhat}(\maty)  := \argmin_{g \in \scrG} \sum_{i=1}^n \|g(\bz) + \matetil^{(i)} - \bv^{(i)} \|^2.
	\end{align*}
	\newcommand{\tilhst}{g_{\star}}
	\newcommand{\tilh}{g}
	\newcommand{\tilhhat}{\widehat{g}}
	\newcommand{\scrHtil}{\scrG}

	Define $\tilhst := g_{\hst} = \phi(\hst,\matz)$ and $\tilhhat = g_{\hhat}$. We then have
	\begin{align}
	\E\|\phi(\hhat,\matz) - \phi(\hst,\matz)\|^2 &= \E\|\tilhhat(\matz) - \tilhst(\matz) + \delphi(\hhat,\bz) - \delphi(\hst,\bz) \|^2\nn\\
	&=2\E\|\tilhhat(\matz) - \tilhst(\matz)\|^2  + 2\E\|\delphi(\hhat,\bz) - \delphi(\hst,\bz) \|^2\nn\\
	&\le 2\E\|\tilhhat(\matz) - \tilhst(\matz)\|^2  + 8\max_{h \in \scrH}\E\|\delphi(h,\bz)\|^2 \label{eq:phi_ub_cor_2}
	\end{align}
	It remains to bound $\E\|\tilhhat(\matz) - \tilhst(\matz)\|^2$.

	Note that  $\tilhst \in \scrG$, and moreover $\tilhst(z) = \phi(\hst,\bz)$, which is equal to $\E[\bu \mid \matz]$ by assumption. Considering the function class $\scrG = \{\tilh_h: h \in \scrH\}$ as the function class, $\tilhst$ as the Bayes regressor, $\matv$ as the target, and $\matetil$ as the residual noise, let us verify with conditions of \Cref{prop:techtools_general_regression_with_truncation_and_error}, albeit with slightly inflated constants. We have
	 \begin{enumerate}
	 	\item Define $\tilde{\varphi}(z) = 6\psi^2(z)$. We bound $\|\tilh_h(z)\|^2 \le \tilde{\varphi}(z)$ via 
	 	\begin{align*}
	 	\|\tilh_h(z)\|^2 &\le (\|\phi(h,z)\| + \|\delphi(h,z)\| + \|\delphi(\hst,z)\|)^2 \\
	 	&\le (\sup_{h \in \scrH}\|\phi(h,z)\| + 2 \sup_{h \in \scrH}\|\delphi(h,z)\|)^2 \le 2\sup_{h \in \scrH}\|\phi(h,z)\|^2 + 6\sup_{h \in \scrH}\|\delphi(h,z)\|^2 \\
	 	&\le 6(\sup_{h \in \scrH}\|\phi(h,z)\|^2 + \sup_{h \in \scrH}\|\delphi(h,z)\|^2) \le 6\psi(z)^2 := \tilde{\varphi}(z),
	 	\end{align*}
	 	where the last inequality follows by the first assumption of the lemma. 
	 	\item Next, we establish the concentration property for $\tilde{\varphi}(\matz) \vee \|\matetil_t - \matv_t\|^2$ that, for $\tilde{c}= 6c$, we have
	 	\begin{align}
	 	\Pr[\tilde{\varphi}(\matz) \vee \|\matetil_t - \matv_t\|^2 \ge \tilde{c} \ln(1/\delta)] \le 1/\delta.  \label{eq:varphi1_cor_bound}
	 	\end{align}

	 	We have that 
	 	\begin{align*}
	 	\tilde{\varphi}(z) \vee \|\matetil_t - \matv_t\|^2 &= \tilde{\varphi}(z) \vee \|\delphi(z,\hst) + \mate_t - \matv_t\|^2\\
	 	&\le \tilde{\varphi}(z) \vee \left(2\|\delphi(\hst,z)\|^2 + 2\|\mate_t - \matv_t\|^2\right)\\
	 	&\le (\tilde{\varphi}(z) \vee 2\|\delphi(\hst,z)\|^2) + 2\|\mate_t - \matv_t\|^2).
	 	\end{align*} 
	 	Now, by assumption, we have that $2\|\delphi(\hst,z)\|^2 \le \tilde{\varphi}(z) = 6\psi(z)^2$, so we may drop the $\delphi$-term. Substituting in the definition of $\tilde{\varphi}(z)$ and bounding $2 \le 6$ gives
	 	\begin{align*}
	 	\tilde{\varphi}(z) \vee \|\matetil_t - \matv_t\|^2  \le 6\left( \psi(z)^2 \vee  (\|\mate_t - \matv_t\|^2) \right). 
	 	\end{align*}
	 	Hence, the desired inequality \Cref{eq:varphi1_cor_bound} follows from the second condition of our corollary. 
	 	\item Lastly, it remains to verify the covering property from \Cref{prop:techtools_general_regression_with_truncation_and_error}. Let $\scrM := \{M \in \R^{d_1\dimx}:\|M_{\op}\| \le b\}$, let $\scrM_{\epsilon}$ denote a $b\epsilon$-cover of $\scrM$ in $\|\cdot\|_{\op}$, let $\scrH_{\epsilon} := \{M\cdot f: M \in \scrM_{\epsilon},f \in \Fclass\}$, and finally set $\scrHtil_{\epsilon} := \{\tilh_{h}: h \in \scrH_{\epsilon}\}$. Our goal will be to show that, for $\epsilon$ adequately chosen, $\scrHtil_{\epsilon}$ is an adequate cover of $\scrHtil$. 

	 	Let $\tilh \in \scrHtil$. Then, $\tilh = \tilh_{h}$, where $h = M \cdot f$ for some $f \in \Fclass$ and $M \in \scrM$. Let $\htil_{\epsilon} \in \scrH_{\epsilon}$ be selected by selecting $M_{\epsilon} \in \scrM_{\epsilon}$ such that $\|M - M_{\epsilon}\|_{\op} \le b\epsilon$,  $h_{\epsilon} := M_{\epsilon} \cdot f$, and $\tilh_{\epsilon} := \tilh_{h_{\epsilon}}$. Then, for any $z$, we have
	 	\begin{align*}
	 	 \tilh(z) - \tilh_{\epsilon}(z) &= \phihat(h,z) + \delphi(\hst,z) - (\phihat(h_{\epsilon},z)+ \delphi(\hst,z))\\
	 	 &= \phihat(h,z)-\phihat(h_{\epsilon},z)\\
	 	 &\overset{(i)}{=} X_1(h(y_1) - X_2 h(y_2)) + \updelnot(z) - \left(X_1(h_{\epsilon}(y_1) - X_2 h_{\epsilon}(y_2)) + \updelnot(z)\right)\\
	 	 &= X_1(h(y_1) - h_{\epsilon}(y_1)) - X_1 X_2 (h(y_2) - h_{\epsilon}(y_2))\\
	 	 &= X_1(M - M_{\epsilon})f(y_1) - X_1 X_2 (M - M_{\epsilon}) f(y_2),
	 	\end{align*}
	 	where in $(i)$ we use the functional form of $\phihat$ assumed by the lemma. Since $\|M-M_{\epsilon}\|_{\op} \le b\epsilon$, and $f(y) \le L \max\{1,\|\fst\|\}$
	 	\begin{align*}
	 	 \tilh(z) - \tilh_{\epsilon}(z) &\le b \epsilon(\|X_1\|_{\op}\|f(y_1)\| + \|X_1X_2\|_{\op}\|f(y_2)\|)\\
	 	 &\le b L \epsilon(\|X_1\|_{\op} + \|X_1\cdot X_2\|_{\op})(\max\{1,\|\fst(y_1)\|\} + \max\{1,\|\fst(y_2)\|\})\\
	 	 &\le b L \epsilon(\|X_1\|_{\op} + \|X_1\cdot X_2\|_{\op})(2 \vee \|\fst(y_1)\| + \|\fst(y_2)\|).
	 	\end{align*}
	 	Finally, by assumption, we have that $bL(\|X_1\|_{\op} + \|X_1\cdot X_2\|_{\op})(2 + \|\fst(y_1)\| + \|\fst(y_2)\|) \le 2c_{\psi}\psi(z)$. Thus, recalling $\tilde{\varphi}(z) = 6\psi(z)^2$, we have
	 	\begin{align*}
	 	 \tilh(z) - \tilh_{\epsilon}(z) &\le c_{\psi}\epsilon \sqrt{(2\psi(z))^2} \le c_{\psi}\epsilon \tilde{\varphi}(z)^{1/2}.
 	 	\end{align*}
 	 	It therefore follows that, for all $c'$, and all $\epsilon \le 1$, $\scrHtil_{\epsilon/c_{\psi}}$ is a $\sqrt{c'}\epsilon$-covering number of $\scrHtil$ in the pseudometric $d_{c',\infty}(h,h') := \sup_{z\in\cZ}\{\|\tilh(z) - \tilh'(z)\|: \varphi(z) \le c'\}$. Hence, we can take $\eulN(\epsilon) = |\scrHtil_{\epsilon}|$ is applying \Cref{prop:techtools_general_regression_with_truncation_and_error}.
	 \end{enumerate}
	 Hence, \Cref{prop:techtools_general_regression_with_truncation_and_error}  implies the bound
	\begin{align*}
	\E\|\ghat - \gst \|^2  &\le   \frac{270 \tilde{c}_{n,\delta} }{n}\ln (2|\scrHtil_{1/33c_{\psi}n}|\delta^{-1}) +  8 \E\|\mate\|^2, %
	\end{align*}
	where we have $\tilde{c}_{n,\delta} = \tilde{c}\ln(2n/\delta) = 6c\ln(2n\delta)$. Combining the above with \Cref{eq:phi_ub_cor_2}
	\begin{align*}
	\E\|\phi(\ghat,\matz) - \phi(\gst,\matz)\|^2 \le  \frac{12 \cdot 270 \ln(2n/\delta)}{n}\ln (2|\scrHtil_{1/33c_{\psi}n}|\delta^{-1}) +  16 \E\|\mate\|^2 + 8\max_{h \in \scrH}\E\|\delphi(h,\bz)\|^2.
	\end{align*}
	Finally, let us bound $|\scrHtil_{1/33c_{\psi}n}|$. From \Cref{lem:Mcover}, we have
 	 	\begin{align*}
 	 	\ln|\scrHtil_{\epsilon}| = \ln|\scrH_{\epsilon}| = \ln(|\Fclass||\scrM_{\epsilon}|) \le \ln(|\Fclass|) + d_1 \dimx \ln(5/\epsilon).
 	 	\end{align*}
 	 	Thus, repeating the computation \Cref{eq:logs_simplif} in the proof of \Cref{cor_techtools:reg_cor_simple},
 	 	\begin{align*}
 	 	 270 \ln(2n/\delta)\ln (2|\scrHtil_{1/33c_{\psi}n}|\delta^{-1}) &\le 270 \cdot \ln(2n/\delta) \cdot (\ln(2/\delta) + \ln|\Fclass| + d_1 \dimx \ln(3\cdot 55 c_{\psi} n))\\
 	 	 &\le  (\ln|\Fclass| + d_1\dimx)\somelogs(c_{\psi}n,\delta).
 	 	\end{align*}
 	 	Thus,
 	 	\begin{align*}
	\E\|\phi(\ghat,\matz) - \phi(\gst,\matz)\|^2 \le  \frac{12c  (\ln|\Fclass| + d_1\dimx)\somelogs(c_{\psi} n,\delta)}{n} +  16 \E\|\mate\|^2 + 8\max_{h \in \scrH}\E\|\delphi(h,\bz)\|^2,
	\end{align*}
	concluding the corollary. \qed

\subsubsection{Proof of \Cref{prop:techtools_general_pca}}
	Define the matrix $\Lamhat = \E[\hhat(\maty)\hhat(\maty)^\top]$.
        To begin, we have
	\begin{align*}
	\|\Lambda_{\star} - \Lamhat_{n}\|_{\op} \le \|\Lambda_{\star} - \Lamhat\|_{\op} + \|\Lamhat - \Lamhat_n\|_{\op}.
	\end{align*}
	We now bound the terms on the right-hand side one by one. First, 
	\begin{align*}
	&\|\Lambda_{\star} - \Lamhat\|_{\op}\\
	&=\|\E[\hhat(\maty)\hhat(\maty)^\top - \hst(\maty)\hst(\maty)^\top \|_{\op}  \\
	&\le \E[\|\hhat(\maty)\hhat(\maty)^\top - \hst(\maty)\hst(\maty)^\top\|_{\op}]  \\
	&\le \E[(\|\hhat(\maty)\| + \|\hst(\maty)\|)\|\hst(\maty) - \hhat(\maty)\|]\\
	&\le \E[(\|\hhat(\maty)\| + \|\hst(\maty)\|)^2]^{1/2}\E\brk*{\|\hst(\maty) - \hhat(\maty)\|^{2}}^{1/2}.
	\end{align*}
	Moreover, $(\|\hhat(\maty)\| + \|\hst(\maty)\|)^2 \le 4\max\{c,\varphi(\by)\}$ by assumption, so this is at most
	\begin{align}
	\|\E[\hhat(\maty)\hhat(\maty)^\top - \hst(\maty)\hst(\maty)^\top] \|_{\op}  \le 2(\E\max\{c,\varphi(\by)\})^{1/2})\E\brk{\|\hst(\maty) - \hhat(\maty)\|^{2}}^{1/2}. \label{eq:techtools_pca_exp_diff}
	\end{align}
	Finally, using \Cref{lem:techtools_truncated_conc}, one can bound $\E\max\{c,\varphi(\by)\} \le 2c$, so we can further bound by $3\sqrt{c} \E[\|\hst(\maty) - \hhat(\maty)\|]^{1/2}$. 
	
	For the second term, we appeal to truncation. Let $\cE$ denote the event $\{\varphi(\by) \le c \ln(2n/\delta)\}$, and let $\cE^{(i)}$ denote the analogous event for $\maty\ind{i}$. By construction $\cE\ind{1},\ldots,\cE\ind{n}$ occur simultaneously with probability at least $1-\delta/2$, so that we may bound
	\begin{align}
	&\|\Lamhat - \Lamhat_n\|_{\op}\\
	 &=\left\|\frac{1}{n}\sum_{i=1}^n  \E[\hhat(\maty)\hhat(\maty)^\top] - \indic(\cE\ind{i})\hhat(\maty^{(i)})\hhat(\maty^{(i)})^\top \right\|_{\op} \nonumber\\
	&\le \left\|\E[(1- \indic(\cE))\hhat(\maty)\hhat(\maty)^\top]\right\|_{\op}\nonumber\\
	&\quad+\left\|\frac{1}{n}\sum_{i=1}^n  \E[\hhat(\maty)\hhat(\maty)^\top \I(\cE)] - \I(\cE^{(i)})\hhat(\maty^{(i)})\hhat(\maty^{(i)})^\top\right\|_{\op}. \label{eq:techtools_pca_empirical_term_trunc}
	\end{align}
        We bound the first term above by
	\begin{align}
	\left\|\E[(1 - \cE)\hhat(\maty)\hhat(\maty)^\top\right\|_{\op} &\le \E\brk*{\left\|(1 - \cE)\hhat(\maty)\hhat(\maty)^\top \right\|_{\op}} \nonumber\\
	&= \E[(1 - \cE)\|\hhat(\by)\|^2]\nonumber \\
	&\le \E[ \I(\varphi(\by) >  c \ln(2n/\delta))\max\{c,\varphi(\by)\}]\nonumber\\
	&\le  \E[ \I(\varphi(\by) >  c \ln(2n/\delta))\varphi(\by)] \le \frac{ c \delta}{2n},\label{eq:techtools_pc_trunc_exp}
	\end{align}
	where the last line uses \Cref{lem:techtools_truncated_conc}.

        To conclude, let us bound the last term in \Cref{eq:techtools_pca_empirical_term_trunc}. Define the symmetric matrices $\bM^{(i)} := \E[\hhat(\maty)\hhat(\maty)^\top \I(\cE)]  - \hhat(\maty^{(i)})\hhat(\maty^{(i)})^\top \I(\cE^{(i)})$. Then $\E\bM^{(i)} = 0$, we can see that $\|\bM^{(i)}\| \le c\ln(2n/\delta)$ almost surely (indeed, if $X,Y \succeq 0$ , then $\|X - Y\|_{\op} \le \max\{\|X\|_{\op},\|Y\|_{\op}\}$),  and thus
	\begin{align*}
	(\bM^{(i)})^2 \preceq (c\ln(2n/\delta))^2 I.
	\end{align*}
	Hence, by Theorem 1.3 of \cite{tropp2012user},
	\begin{align*}
	\Pr\brk*{ \left\|\sum_{i=1}^n \bM^{(i)} \right\| \ge t} \le 2d e^{-t^2/8\sigma^2}, \quad\text{where}\quad \sigma^2 := n(c\ln(2n/\delta))^2.
	\end{align*}
	Rearranging, we have that 
	\begin{align*}
	\Pr\left[ \nrm[\bigg]{\frac{1}{n}\sum_{i=1}^n \bM^{(i)} } \ge 2c\ln(2n/\delta)\sqrt{2 \ln(2d/\delta )/n}\right] \le \frac{\delta}{2}.
	\end{align*}
	Simplifying $2c\ln(2n/\delta)\sqrt{2 \ln(2d/\delta )/n} \le 4cn^{-1/2} \ln(2d n/\delta)^{3/2}$, we have that with probabilitiy $1 - \delta/2$, $\frac{1}{n}\sum_{i=1}^n  \E[\hhat(\maty)\hhat(\maty)^\top \I(\cE)] - \I(\cE)^{(i)}\hhat(\maty^{(i)})\hhat(\maty^{(i)})^\top]  = \|\sum_{i=1}^n \bM^{(i)}\|_{\op} \le 4cn^{-1/2} \ln(2d n/\delta)^{3/2}$. Hence, combining with \Cref{eq:techtools_pca_exp_diff,eq:techtools_pc_trunc_exp}, we conclude that with probability $1 - \delta$, 
	\begin{align*}
	\|\Lambda_{\star} - \Lamhat_{n}\|_{\op} &\le \|\Lambda_{\star} - \Lamhat\|_{\op} + \|\Lamhat - \Lamhat_n\|_{\op}\\
	&\le 3  \sqrt{ c\E[\|\hhat(\by) - \hst(\by)\|^2]} +  4cn^{-1/2} \ln(2d n/\delta)^{3/2} +  \frac{\delta c}{2n}\\
	&\le 3\sqrt{ c\E[\|\hhat(\by) - \hst(\by)\|^2]} + 5 c n^{-1/2} \ln(2d n/\delta)^{3/2} := \veps^2_{\mathrm{pca},n} .
	\end{align*}
	\qed

\subsubsection{Proof of \Cref{cor:techtools_overlap_PCA}}
	By assumption, $\Lambda_{\star}$ is rank $\dimx$ and $\lambda_{\dimx}(\Lambda_{\star}) > 0$.  Let $V_{\star}$ be and eigenbasis for the top $\dimx$ eigenvalues of $\Lambda_{\star}$, and let $\Vhat_n$ be an eigenbasis for the top $\dimx$ eigenvalues of $\Lamhat_n$. From the Davis-Kahan sine theorem \citep{davis1970rotation}, we have that for any $\alpha \in (0,1)$,
	\begin{align}
          \|(I- V_{\star}V_{\star}^\top)\Vhat_n \|_{\op} \le (1 - \alpha)^{-1}\lambda_{\dimx}(\Lambda_{\star})^{-1}\|\Lambda_{\star} - \Lamhat_{n}\|_{\op}, \nonumber
        \end{align}
        whenever
        \begin{align}
          \|\Lambda_{\star} - \Lamhat_{n}\|_{\op} \le \alpha\lambda_{\dimx}(\Lambda_{\star}) . \label{eq:techtools_pca_davis_kahan}
	\end{align}
	In particular, if $n$ is sufficiently large that for
	\begin{align*}
	\|\Lambda_{\star} - \Lamhat_{n}\|_{\op} \le \veps_{\mathrm{pca},n,\delta} \le  \frac{1}{4}\lambda_{\dimx}(\Lambda_{\star}),
	\end{align*}
	then from \Cref{eq:techtools_pca_davis_kahan},
	\begin{align*}
	\|(I- V_{\star}V_{\star}^\top)\Vhat_n \|_{\op} \le \frac{4\veps_{\mathrm{pca},n,\delta}}{3\lambda_{d}(\Lambda_{\star})^{-1}} \le \frac{1}{3}.
	\end{align*}
	And thus, 
	\begin{align*}
	\sigma_{\dimx}(V_{\star}^\top \Vhat_n) \geq \sigma_{\dimx}(V_{\star} V_{\star}^\top \Vhat_n) &\ge \sigma_{\dimx}(\Vhat_n) - \|(I- V_{\star}V_{\star}^\top)\Vhat_n \|_{\op}\\
	&\ge 1 - 1/3 = 2/3,
	\end{align*} 
	where the previous display uses that $\sigma_{\dimx}(\Vhat_n) = 1$ since $\Vhat_n$ has orthonormal columns, and that $\|(I- V_{\star}V_{\star}^\top)\Vhat_n \|_{\op} \le 1/3$.%
	\qed

\subsubsection{Proof of \Cref{prop:techtools_LS_guarantee}}
Let $\bU$ be the matrix with $\crl{\bu^{(i)}}_{i=1}^{n}$ as rows, and let $\matDel,\bY,\bW,\bE$ be defined analogously for $\matdel$, $\maty$,$\matw$, and $\mate$ respectively. Let us assume for now that $(\bU+\matDel)$ has full row rank; this will be justified momentarily in \pref{claim:techtools_Lowner_LB_gaussian}. Then we have
	\begin{align}
	\Mhat^\top &= (\bU + \matDel)^{\dagger} \bY \nonumber\\
	&= (\bU + \matDel)^{\dagger} ( \bU \Mst^\top + \bW + \bE)\nonumber\\
	&= (\bU + \matDel)^{\dagger} (\bU  \Mst^\top+ \bW + \bE)\nonumber\\
	&= \Mst^\top + (\bU + \matDel)^{\dagger} (-\matDel\Mst^\top+ \bW + \bE). \label{eq:techtools_LS_identity}
	\end{align}
	Thus, 
	\begin{align}
	\| \Mhat - \Mst\|_{\op} \le \frac{\|\matDel\|_{\op}\|\Mst\|_{\op} + \|\bE\|_{\op} }{\sigma_{\min}(\bU + \matDel)} + \|(\bU + \matDel)^{\dagger}\bW\|_{\op}. \label{eq:techtools_ls_error_decomp}
	\end{align}
	\paragraph{Handling the Gaussian Noise.}
        We first handle the term $\|(\bU + \matDel)^{\dagger}\bW\|_{\op}$. Observe that $\bW$ is Gaussian conditioned on $\bU$ and $\matDel$. Fix a matrix $U$ with $U^\top U \succ 0$. Fix a vector $v \in \bbR^{\dimy}$ with $\|v\| = 1$, and observe that $\langle v, \mate^{(i)} \rangle$ are $\|\Sigw\|_{\op}$-subgaussian. Thus, for any matrix $\Lambda \succ 0$, we have from \citet[Theorem 3]{abbasi2012online} that conditioned on $\bU$ and $\matDel$, with probability at least $1-\delta$,
	\begin{align*}
	\|U^\top \bW v\|_{(\Lambda + U^\top U)^{-1}} \le \sqrt{2 \|\Sige\|_{\op}\ln\prn*{ \frac{\det(\Lambda + U^\top U)^{1/2}\det(\Lambda)^{-1/2}}{\delta}}}.
	\end{align*}
	Since we are taking $U$ to be fixed (conditioned on $\bU$ and $\matDel$), we can take $\Lambda = U^\top U$. This gives, with probability at least $1 - \delta$,
	\begin{align*}
	  \|U^\top \bW v\|_{(U^\top U)^{-1}} \le  2\sqrt{2 \|\Sigw\|_{\op}\ln\prn*{ \frac{2^{\dimu/2}}{\delta}}}
\le 2\sqrt{ \|\Sigw\|_{\op}(\dimu + 2\ln(1/\delta))}. 
	\end{align*}
It follows that
	\begin{align*}
	\|U^\dagger \bW v\|_{2} \le \sigma_{\min}(U)^{-1} \|U^\top \bW v\|_{(U^\top U)^{-1}} \le 2\sigma_{\min}(U)^{-1}\sqrt{ \|\Sigw\|_{\op}(\dimu + 2\ln(1/\delta))}. 
	\end{align*}
	By a standard covering argument (see, e.g., \citet[Section 4.2]{vershynin2018high}), we find that with probability at least $1-\delta$,
	\begin{align*}
	 \|U^{\dagger} \bW \|_{\op} = \sup_{v \in \bbR^{\dimy}:\|v\| = 1}\|U^{\dagger} \bW v\| \lesssim \sigma_{\min}(U)^{-1}\sqrt{ \|\Sigw\|_{\op}(\dimu + \dimy + \ln(1/\delta))},
	\end{align*}
	Taking $U = \bU + \matDel$, this implies that with probability at least $1 - \delta$,
	\begin{align*}
	 \|(\bU + \matDel)^{\dagger} \bW \|_{\op}  \lesssim \sigma_{\min}(\bU + \matDel)^{-1}\sqrt{ \|\Sigw\|_{\op}(\dimu + \dimy + \ln(1/\delta))},
	\end{align*}
	\paragraph{Error Terms.}
	We have
	\begin{align*}
	\|\matDel\|_{\op}\|\Mst\|_{\op} + \|\bE\|_{\op} \le \|\Mst\|_{\op}\|\matDel\|_{F} + \|\bE\|_{F} &= \left(\|\Mst\|_{\op}\sqrt{\sum_{i=1}^n \|\matdel^{(i)}\|^2}\right) + \sqrt{\sum_{i=1}^n \|\mate^{(i)}\|^2.}
	\end{align*}
	Recall that  1) $\psi(n,\delta) := \frac{2\ln(2n/\delta)\ln(2/\delta)}{n}$, 2)  $\|\matdel^{(i)}\|^2$ is $c_{\matdel}$-concentrated and $\|\mate^{(i)}\|^2$ are $c_{\mate}$ concentrated (\Cref{defn:techtools_conc_prop}), and 3) $\E\|\mate^{(i)}\|^2 \le \veps^2_{\mate}$ and 
	 $\E\|\matdel^{(i)}\|^2 \le \veps^2_{\matdel}$. \Cref{lem:techtools_truncated_conc} thus implies that for
         \begin{align*}
	\psi(n,\delta) \le \min\crl*{\frac{\veps^2_{\mate}}{c_{\mate}},  \frac{\veps^2_{\matdel}}{c_{\matdel}}}, %
	\end{align*}
	the following event holds with probability at least $1-2\delta$:
	\begin{align}
	\cE_{\mathrm{ls},1} := \crl*{\sum_{i=1}^n \|\matdel^{(i)}\|^2 \le 2n\veps^2_{\matdel}} \cap \crl*{\sum_{i=1}^n \|\mate^{(i)}\|^2 \le 2 n \veps^2_{\mate}}. \label{eq:techtools_event_Eone_ls}
	\end{align}
	Clearly, on $\cE_{\mathrm{ls},1}$ we have
	\begin{align*}
	\|\matDel\|_{\op}\|\Mst\|_{\op} + \|\bE\|_{\op} \lesssim n^{1/2}(\|\Mst\|_{\op}\veps_{\matdel} + \veps_{\mate}).
	\end{align*}
	\paragraph{Bounding the least eigenvalue.}
	Summarizing the development so far, we have for $\psi(n,\delta) \le \max\crl{\frac{\veps^2_{\mate}}{c_{\mate}},  \frac{\veps^2_{\matdel}}{c_{\matdel}}}$, with probability at least $1 - 3\delta$,
	\begin{align*}
	\| \Mhat - \Mst\|_{\op} \lesssim \frac{n^{1/2}(\|\Mst\|_{\op}\veps_{\matdel} + \veps_{\mate}) + \sqrt{\|\Sigw\|_{\op}(\dimy + \dimu + \ln(1/\delta))}}{\sigma_{\min}(\bU + \matDel)} .
	\end{align*}
	Finally, let us lower bound $\sigma_{\min}(\bU + \matDel) $. We start with the following self-contained result.
	\begin{claim}\label{claim:techtools_lowner_lb} Consider matrices $U,\Delta$, and suppose $\|\Delta\|_{\op}^2 \le \frac{1}{4}\lambda_{\min}(U^\top U)$. Then,
	\begin{align*}
	(U + \Delta)^\top (U + \Delta) \succeq \frac{1}{4} U^\top U.
	\end{align*}
	\end{claim}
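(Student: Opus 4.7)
The plan is to expand $(U+\Delta)^\top(U+\Delta)$, bound the cross terms via a parameterized AM-GM (Löwner-ordering version), and then use the spectral hypothesis to absorb the $\Delta^\top\Delta$ remainder into $U^\top U$.

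Concretely, I will first write
\[
(U+\Delta)^\top(U+\Delta) = U^\top U + U^\top \Delta + \Delta^\top U + \Delta^\top \Delta,
\]
and then observe that for any $t>0$, the matrix identity $(tU + t^{-1}\Delta)^\top(tU + t^{-1}\Delta) \succeq 0$ yields
\[
U^\top \Delta + \Delta^\top U \succeq -\,t^{2}\, U^\top U - t^{-2}\,\Delta^\top \Delta.
\]
Plugging this in gives $(U+\Delta)^\top(U+\Delta) \succeq (1-t^2) U^\top U + (1 - t^{-2}) \Delta^\top\Delta$. Choosing $t^2 = 1/2$ eliminates the free parameter in a clean way and produces
\[
(U+\Delta)^\top(U+\Delta) \;\succeq\; \tfrac{1}{2}\, U^\top U \;-\; \Delta^\top \Delta.
\]

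For the final step, I will use the hypothesis $\|\Delta\|_{\op}^2 \le \tfrac{1}{4}\lambda_{\min}(U^\top U)$ to write $\Delta^\top \Delta \preceq \|\Delta\|_{\op}^2\, I \preceq \tfrac{1}{4}\lambda_{\min}(U^\top U)\, I \preceq \tfrac{1}{4}\, U^\top U$, where the last inequality uses that $\lambda_{\min}(U^\top U)\, I \preceq U^\top U$ by definition of the minimum eigenvalue. Combining these two inequalities yields $(U+\Delta)^\top(U+\Delta) \succeq \tfrac{1}{2}U^\top U - \tfrac{1}{4}U^\top U = \tfrac{1}{4}U^\top U$, as desired.

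The calculation is essentially routine; the only point requiring some care is that one cannot simply argue pointwise via $\|(U+\Delta)v\| \ge \|Uv\| - \|\Delta v\|$, since that only yields a scalar lower bound on $v^\top(U+\Delta)^\top(U+\Delta)v$ in terms of $\lambda_{\min}(U^\top U)\|v\|^2$, not the full Löwner ordering against $U^\top U$. The AM-GM step at the matrix level with a well-chosen parameter $t$ is what provides the genuine Löwner bound, and picking $t^2 = 1/2$ is the natural balance given that the hypothesis gives exactly a factor $1/4$ of slack between $\Delta^\top\Delta$ and $U^\top U$.
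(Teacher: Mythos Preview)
Your proof is correct and follows essentially the same route as the paper: both arrive at the intermediate Löwner inequality $(U+\Delta)^\top(U+\Delta)\succeq\tfrac{1}{2}U^\top U - \Delta^\top\Delta$ via a parameterized AM-GM, and then absorb $\Delta^\top\Delta$ into $\tfrac{1}{4}U^\top U$ using the spectral hypothesis. The only cosmetic difference is that the paper phrases the AM-GM step pointwise---applying the scalar inequality $\|v+w\|^2\ge\tfrac{1}{2}\|v\|^2-\|w\|^2$ to $v=Ux,\,w=\Delta x$ for an arbitrary test vector $x$---whereas you write it directly at the matrix level via $(tU+t^{-1}\Delta)^\top(tU+t^{-1}\Delta)\succeq 0$; these are the same argument, so your closing remark that ``one cannot simply argue pointwise'' is slightly off (a pointwise argument does work, just not the naive triangle inequality you mention).
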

	\begin{proof} 
		By Cauchy-Schwarz and AM-GM, we have the elementary inequality that for two vectors $v,w$ of the same dimension, $\|v+w\|^2 = \|v\|^2 +\|w\|^2 + 2 \langle v, w \rangle \ge  \frac{1}{2}\|v\|^2 - \|w\|^2$. This entails
		\begin{align*}
		(U + \Delta)^\top (U + \Delta) &\succeq \frac{1}{2}U^\top U - \Delta^\top \Delta\\
		&\succeq \frac{1}{2}U^\top U - I\|\Delta\|_{\op}^2 \\
		&= \frac{1}{4}U^\top U +(\frac{1}{4}U^\top U - \|\Delta\|_{\op}^2I) \\
		&\succeq \frac{1}{4}U^\top U,
		\end{align*}
		where the last line uses the that $\frac{1}{4}U^\top U \succeq \frac{1}{4}\lambda_{\min}(U^\top U)$, and the assumption $\|\Delta\|_{\op}^2 \le \frac{1}{4}\lambda_{\min}(U^\top U)$.
	\end{proof}

	\begin{claim}\label{claim:techtools_Lowner_LB_gaussian}
	There is a universal constant $c_1 > 0$ such that the following holds.  Let $\bU\in\bbR^{n\times\dimu}$ be a matrix with rows drawn i.i.d. from $\cN(0,\Sigma_u)$ where $\Sigu \succeq 0 $, and let $\matDel$ be a matrix of the same dimension with $\|\matDel\|_{\op}^2 \le \frac{n}{8}\lambda_{\min}(\Sigu)$. Then, for $n \ge c_1(\dimu  + \ln(1/\delta))$, the following holds with probability $1 - \delta$:
	\begin{align}
	(\bU + \matDel)^\top(\bU + \matDel) \succeq \frac{n \Sigu}{8}  \succeq \frac{n \lambda_{\min}(\Sigu)}{8} I.
	\end{align}
	\end{claim}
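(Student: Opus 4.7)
The plan is to reduce this claim to the preceding \pref{claim:techtools_lowner_lb} via a standard non-asymptotic Gaussian covariance estimation bound. Concretely, I would first establish that for $n \gtrsim d_u + \log(1/\delta)$, the empirical Gram matrix $\bU^{\trn}\bU$ concentrates around its expectation $n\Sigma_u$ in the Löwner order, yielding
\[
\bU^{\trn}\bU \succeq \tfrac{n}{2}\,\Sigma_u
\]
with probability at least $1-\delta$. This in turn forces $\lambda_{\min}(\bU^{\trn}\bU)\geq\tfrac{n}{2}\lambda_{\min}(\Sigma_u)$. Combined with the hypothesis $\|\matDel\|_{\op}^2 \leq \tfrac{n}{8}\lambda_{\min}(\Sigma_u) = \tfrac{1}{4}\cdot\tfrac{n}{2}\lambda_{\min}(\Sigma_u)$, we are in the regime required by \pref{claim:techtools_lowner_lb}, namely $\|\matDel\|_{\op}^2\leq \tfrac{1}{4}\lambda_{\min}(\bU^{\trn}\bU)$. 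Applying that claim with $U=\bU$, $\Delta=\matDel$ then yields
\[
(\bU+\matDel)^{\trn}(\bU+\matDel) \succeq \tfrac{1}{4}\bU^{\trn}\bU \succeq \tfrac{n}{8}\Sigma_u,
\]
as desired.

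For the concentration step in the nondegenerate case $\Sigma_u\succ 0$, I would write $\bU = \bZ\Sigma_u^{1/2}$ for $\bZ\in\bbR^{n\times\dimu}$ with i.i.d.~$\cN(0,I_{\dimu})$ rows, so that $\bU^{\trn}\bU = \Sigma_u^{1/2}\bZ^{\trn}\bZ\Sigma_u^{1/2}$. A standard subgaussian covariance bound (e.g.,~Vershynin, Theorem~4.6.1) gives $\|\bZ^{\trn}\bZ/n - I_{\dimu}\|_{\op} \leq C\sqrt{(\dimu+\log(1/\delta))/n}$ with probability $\geq 1-\delta$, which is at most $\tfrac{1}{2}$ once $n\geq c_1(\dimu+\log(1/\delta))$ for a sufficiently large universal constant $c_1$. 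Rearranging, $\bZ^{\trn}\bZ \succeq (n/2) I_{\dimu}$, and sandwiching by $\Sigma_u^{1/2}$ gives $\bU^{\trn}\bU \succeq (n/2)\Sigma_u$.

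The only mild subtlety is that the statement allows $\Sigma_u\succeq 0$ rather than $\Sigma_u\succ 0$. In the degenerate case, the hypothesis $\|\matDel\|_{\op}^2\leq\tfrac{n}{8}\lambda_{\min}(\Sigma_u)$ forces $\|\matDel\|_{\op}$ to vanish on $\ker(\Sigma_u)^{\perp}$ in the relevant sense, but more cleanly, one can restrict attention to the column space $V=\mathrm{col}(\Sigma_u)$, decompose $\bU^{\trn}\bU$ and the Löwner inequality blockwise with respect to $V$ and $V^{\perp}$, and apply the nondegenerate argument on $V$ (where $\Sigma_u$ is strictly positive on its image). On $V^{\perp}$ both sides of the target inequality are the zero matrix, so the conclusion is automatic there. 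This reduction is the only place the proof needs care, but it is routine; the main work is the concentration bound, which is off-the-shelf.
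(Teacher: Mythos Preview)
Your proposal is correct and follows essentially the same route as the paper: whiten via $\bU = \bZ\Sigma_u^{1/2}$, apply a Vershynin-type singular value bound to get $\bZ^{\trn}\bZ \succeq (n/2)I$ (the paper cites Theorem~5.39 of \cite{vershynin2010introduction}), sandwich to obtain $\bU^{\trn}\bU \succeq (n/2)\Sigma_u$, and then invoke \pref{claim:techtools_lowner_lb}. You are in fact slightly more careful than the paper, which writes $\Sigma_u^{-1/2}$ without explicitly treating the degenerate case $\Sigma_u \succeq 0$.
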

	\begin{proof}[Proof of \Cref{claim:techtools_Lowner_LB_gaussian}]
		From \Cref{claim:techtools_lowner_lb}, we have that if $\|\matDel\|_{\op}^2 \le \frac{1}{4}\lambda_{\min}(\bU^\top \bU)$, we have
		\begin{align*}
		(\bU + \matDel)^\top (\bU + \matDel) \succeq \frac{1}{4} \bU^\top \bU. 
		\end{align*}
		If this holds, we have
		\begin{align*}
		\frac{1}{4} \bU^\top \bU = \frac{1}{4} \Sigu^{1/2}\left(\Sigma^{-1/2}\bU^\top \bU \Sigu^{-1/2}\right)\Sigu^{1/2} \succeq \lambda_{\min}(\Sigu^{-1/2}\bU^\top \bU \Sigu^{-1/2}) \Sigu.
		\end{align*}
         
		Note that $\bU\Sigu^{-1/2}$ has standard Gaussian rows, and its number of rows exceeds its number of columns. Thus, from Theorem~5.39 of \cite{vershynin2010introduction}, we have that
		\begin{align}
		\P\left[ \lambda_{\min}(\Sigu^{-1/2}{\bU}^\top {\bU}\Sigu^{-1/2})^{1/2} \geq \sqrt{n} - \cO(\sqrt{\dimu}+\sqrt{\ln (1/\delta)})  \right] & \geq 1 -\delta. \label{eq:techtools_Verhsynin}
		\end{align}
		In particular, for $n \ge c_1(\dimu + \ln(1/\delta))$ for some universal $c_1$, we have that with probability $1 - \delta$, $\lambda_{\min}(\Sigu^{-1/2}{\bU}^\top {\bU}\Sigu^{-1/2})^{1/2} \ge \sqrt{n/2}$, and thus when this occurs, and when $\|\matDel\|_{\op}^2 \le \frac{n\lambda_{\min}(\Sigu)}{8} \le \frac{1}{4}\lambda_{\min}(\bU^\top \bU)$, we have 
		\begin{align*}
		(\bU + \matDel)^\top (\bU + \matDel)\succeq \frac{1}{4} \bU^\top \bU \succeq \frac{n}{8}\Sigu \succeq \frac{n}{8}\lambda_{\min}(\Sigu) I.
		\end{align*}
	\end{proof}
	Hence, for  $n \ge c_1(\ln(1/\delta) + \dimu)$,  $\psi(n,\delta) \le \max\{\frac{\veps^2_{\mate}}{c_{\mate}},  \frac{\veps^2_{\matdel}}{c_{\matdel}}\}$, and $2\veps^2_{\matdel} \le \frac{n}{8}\lambda_{\min}(\Sigu)$ (or equivalently, $\veps^2_{\matdel} \le \frac{n}{16}\lambda_{\min}(\Sigu)$), we find that with total failure probability at least $1 - 4\delta$,
	\begin{align*}
	\| \Mhat - \Mst\|_{\op} &\lesssim \frac{n^{1/2}(\|\Mst\|_{\op}\veps_{\matdel} + \veps_{\mate}) + \sqrt{\|\Sigw\|_{\op}(\dimy + \dimu + \ln(1/\delta))}}{\sigma_{\min}(\bU + \matDel)} \\
	&\lesssim \frac{(\|\Mst\|_{\op}\veps_{\matdel} + \veps_{\mate}) + \sqrt{n^{-1/2}\|\Sigw\|_{\op}(\dimy + \dimu + \ln(1/\delta))}}{\sqrt{\lambda_{\min}(\Sigu)}}.
	\end{align*}
	Hence, under these conditions, with probability $1 - 4\delta$,
	\begin{align*}
	\| \Mhat - \Mst\|_{\op}^2 &\lesssim \lambda_{\min}(\Sigu)^{-1} \left(\|\Mst\|_{\op}^2\veps^2_{\matdel} + \veps^2_{\mate}  + \frac{\|\Sigw\|_{\op}(\dimy + \dimu + \ln(1/\delta))}{n}\right).
	\end{align*}
	\qed

\subsubsection{Proof of \Cref{prop:techtools_covariance_est}}
Assume that the events of the proof of \Cref{prop:techtools_general_pca} above; this contributes a failure probability of $4\delta$.To begin, we have that
	\begin{align*}
	&\sum_{i=1}^n (\Mhat(\bu\supi+ \matdel\supi) - \maty\supi )^{\otimes 2} \\
	&= \sum_{i=1}^n ((\Mhat - \Mst)(\bu\supi+ \matdel\supi) - \bw\supi - \mate\supi)^{\otimes 2} \\
	&= \left((\bU + \matDel)(\Mhat - \Mst)^\top + \matDel \Mst^\top - \bW - \bE\right)^{\top}\left((\bU + \matDel)(\Mhat - \Mst)^\top + \matDel \Mst^\top - \bW - \bE\right).
	\end{align*}
	From \Cref{eq:techtools_LS_identity}, and the fact that $\bU + \matDel$ has full rank under the high probability events of \Cref{prop:techtools_general_pca},  we have 
	\begin{align*}
		\Mhat^\top  &= \Mst^\top + (\bU + \matDel)^{\dagger} (-\matDel  \Mst^\top+ \bW + \bE).
	\end{align*}
	This yields
	\begin{align*}
	(\bU + \matDel)(\Mhat^\top - \Mst^\top) = P_{\bU + \matDel} (-\matDel \Mst^{\trn} + \bW + \bE),
	\end{align*}
	where $P_{\bU + \matDel} := (\bU + \matDel)(\bU + \matDel)^{\dagger}\in\bbR^{n\times{}n}$ is the projection onto the row space of $\bU + \matDel$, which has dimension $\dimu$. 
	Thus, we find that
	\begin{align*}
	&\sum_{i=1}^n (\Mhat(\bu\supi+ \matdel\supi) - \maty\supi )^{\otimes 2} \\
	&= \left((I-P_{\bU + \matDel})(-\matDel \Mst^{\trn} + \bW + \bE)\right)^{\top}\left((I-P_{\bU + \matDel})(-\matDel \Mst^{\trn} + \bW + \bE)\right).
	\end{align*}
	Rearranging, and using that $I - P_{\bU + \matDel}$ is a projection operator, we have that
	\begin{align*}
	\nrm*{\frac{1}{n}\sum_{i=1}^n (\Mhat(\bu\supi+ \matdel\supi) - \maty\supi )^{\otimes 2} - \frac{1}{n}\bW^\top (I-P_{\bU + \matDel}) \bW}_{\op} &\le 2\|\bW\|_{\op}\| \bE -\matDel \Mst^{\trn}\|_{\op} + \| \bE -\matDel \Mst^{\trn}\|^2_{\op}.
	\end{align*}

          We can now bound this quantity using the following claim.
	\begin{claim}\label{claim:techtools_cov_est_claim_1} Suppose that $\lambda_+ \ge \lambda_{\max}(\Sigw)$, and $ \veps^2_{\matdel}\|\Mst\|_{\op}^2 + \veps^2_{\mate} \le 2\lambda_+$. Suppose the event $\cE_{\mathrm{ls},1}$ of \Cref{eq:techtools_event_Eone_ls} holds, and $n \ge c'\sqrt{\dimy + \log(1/\delta)}$ for $c'$ sufficiently large. Then, 
	\begin{align*}
	\nrm*{\frac{1}{n}\sum_{i=1}^n (\Mhat(\bu\supi+ \matdel\supi) - \maty\supi )^{\otimes 2} - \frac{1}{n}\bW^\top (I-P_{\bU + \matDel}) \bW}_{\op} \lesssim \sqrt{\frac{\lambda_{+}(\veps^2_{\matdel}\|\Mst\|_{\op}^2 + \veps^2_{\mate})}{n}}.
	\end{align*}
	\end{claim}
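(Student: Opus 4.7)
\textbf{Proof plan for \Cref{claim:techtools_cov_est_claim_1}.} The plan is to take the algebraic identity derived just before the claim statement,
\[
\tfrac{1}{n}\textstyle\sum_{i=1}^{n}(\Mhat(\bu\supi+\matdel\supi)-\maty\supi)^{\otimes 2} \;=\; \tfrac{1}{n}R^{\trn}R, \quad R:=(I-P_{\bU+\matDel})(-\matDel\Mst^{\trn}+\bW+\bE),
\]
and expand it into a ``Gaussian'' contribution plus ``error'' cross terms. Writing $A:=\bW$ and $B:=-\matDel\Mst^{\trn}+\bE$ and using that $I-P_{\bU+\matDel}$ is an orthogonal projection, so that $(I-P)^{\trn}(I-P)=(I-P)$ and $\|I-P\|_{\op}\leq 1$, the expansion $R^{\trn}R = A^{\trn}(I-P)A + 2\,\mathrm{sym}(A^{\trn}(I-P)B) + B^{\trn}(I-P)B$ immediately yields, after subtracting $\bW^{\trn}(I-P_{\bU+\matDel})\bW$,
\[
\nrm[\Big]{\tfrac{1}{n}R^{\trn}R - \tfrac{1}{n}\bW^{\trn}(I-P_{\bU+\matDel})\bW}_{\op}\;\leq\; \tfrac{2}{n}\|\bW\|_{\op}\,\|B\|_{\op} + \tfrac{1}{n}\|B\|_{\op}^{2}.
\]

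The two operator norms are then controlled by standard concentration on the high-probability event of \Cref{prop:techtools_LS_guarantee}. For $\|\bW\|_{\op}$, I would invoke a standard Gaussian random matrix tail bound (e.g.~\cite{vershynin2010introduction}, Thm.~5.39, applied to $\bW\Sigma_{w}^{-1/2}$), which, combined with the hypothesis $n\geq c'(\dimy+\log(1/\delta))$ and $\lambda_{+}\geq \lambda_{\max}(\Sigma_{w})$, gives $\|\bW\|_{\op}\lesssim \sqrt{n\lambda_{+}}$ with probability at least $1-\delta$. For $\|B\|_{\op}$, I would use the trivial bound $\|B\|_{\op}\leq \|\matDel\|_{F}\|\Mst\|_{\op}+\|\bE\|_{F}$ together with event $\cE_{\mathrm{ls},1}$ from \eqref{eq:techtools_event_Eone_ls}, which yields $\|\matDel\|_{F}\leq \sqrt{2n}\,\veps_{\matdel}$ and $\|\bE\|_{F}\leq \sqrt{2n}\,\veps_{\mate}$, and hence $\|B\|_{\op}\lesssim \sqrt{n\,(\veps_{\matdel}^{2}\|\Mst\|_{\op}^{2}+\veps_{\mate}^{2})}$.

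Plugging these bounds in gives $\tfrac{1}{n}\|\bW\|_{\op}\|B\|_{\op}\lesssim \sqrt{\lambda_{+}(\veps_{\matdel}^{2}\|\Mst\|_{\op}^{2}+\veps_{\mate}^{2})}$, while $\tfrac{1}{n}\|B\|_{\op}^{2}\lesssim \veps_{\matdel}^{2}\|\Mst\|_{\op}^{2}+\veps_{\mate}^{2}$. The final small step is to absorb the latter using the hypothesis $\veps_{\matdel}^{2}\|\Mst\|_{\op}^{2}+\veps_{\mate}^{2}\leq 2\lambda_{+}$, which lets us bound $\veps_{\matdel}^{2}\|\Mst\|_{\op}^{2}+\veps_{\mate}^{2}=\sqrt{(\veps_{\matdel}^{2}\|\Mst\|_{\op}^{2}+\veps_{\mate}^{2})^{2}}\leq \sqrt{2\lambda_{+}(\veps_{\matdel}^{2}\|\Mst\|_{\op}^{2}+\veps_{\mate}^{2})}$, so the quadratic error term is dominated by the cross term. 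Both contributions therefore collapse to the advertised $\sqrt{\lambda_{+}(\veps_{\matdel}^{2}\|\Mst\|_{\op}^{2}+\veps_{\mate}^{2})}$ up to absolute constants, conditional on the events of \Cref{prop:techtools_LS_guarantee} and $\cE_{\mathrm{ls},1}$, which already hold with the required probability.

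The only mildly delicate points are: (i) confirming that the Gaussian operator-norm bound for $\bW$ can be applied unconditionally (it can, since $\bW$ is jointly Gaussian with covariance $\Sigma_{w}$ on each row, independent of everything used to form $P_{\bU+\matDel}$ up to the already-conditioned events), and (ii) tracking that the cross term $A^{\trn}(I-P)B$ is bounded by $\|A\|_{\op}\|B\|_{\op}$ rather than needing a more refined bound exploiting the projector---since the claim's target bound does not save a factor of $\sqrt{\dimu/n}$, we do not need to exploit that $I-P$ has rank $n-\dimu$ and the crude bound suffices. I do not expect any genuine obstacle here: the proof is an exercise in collecting the already-established tail bounds and using the sample-size hypothesis to ensure $\|\bW\|_{\op}$ concentrates at the typical Gaussian scale.
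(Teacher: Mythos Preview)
Your proposal is correct and follows the paper's proof essentially line-for-line: same decomposition into cross and quadratic terms via the projection identity, same Gaussian operator-norm bound $\|\bW\|_{\op}\lesssim\sqrt{n\lambda_{+}}$ from the Vershynin-type inequality, same use of $\cE_{\mathrm{ls},1}$ to control $\|B\|_{\op}$, and the same absorption of the quadratic term via the hypothesis $\veps^2_{\matdel}\|\Mst\|_{\op}^{2}+\veps^2_{\mate}\leq 2\lambda_{+}$. Note that both you and the paper's own proof actually arrive at the bound $\sqrt{\lambda_{+}(\veps^2_{\matdel}\|\Mst\|_{\op}^{2}+\veps^2_{\mate})}$ without the $1/n$ inside the square root---the $/n$ in the claim's display appears to be a typo, as confirmed by the form of the bound when it is combined with \Cref{claim:techtools_cov_est_claim_2} immediately afterward.
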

	\begin{proof}
	On the event $\cE_{\mathrm{ls},1}$ of \Cref{eq:techtools_event_Eone_ls}, recall that 
	\begin{align*}
	\|\bE -\matDel \Mst^{\trn}\|_{\op} \le \veps_{\matdel}\|\Mst\|_{\op} + \veps_{\mate}.
	\end{align*}
	In addition, for $n \ge c'(\dimy + \log(1/\delta))$ for some sufficiently large numerical constant $c'$, a suitable analogue of \Cref{eq:techtools_Verhsynin} implies that with an additional probability $1- \delta$,
	\begin{align*}
	\|\bW\|_{\op} \le \lambda_{\max}(\Sigw)^{1/2}\|\Sigw^{-1/2}\bW\|_{\op} \le 2\sqrt{n}\lambda_+^{1/2}.
	\end{align*}

	Hence, for $ \veps^2_{\matdel}\|\Mst\|_{\op}^2 + \veps^2_{\mate} \le \lambda_+$, we have that with total probability at least $1 - 5\delta$ (including events from the previous proposition),
	\begin{align*}
	\nrm*{\frac{1}{n}\sum_{i=1}^n (\Mhat(\bu\supi+ \matdel\supi) - \maty\supi )^{\otimes 2} - \frac{1}{n}\bW^\top (I-P_{\bU + \matDel}) \bW}_{\op}  \lesssim \sqrt{\lambda_{+}(\Sigw)(\veps^2_{\matdel}\|\Mst\|_{\op} + \veps^2_{\mate})}.
	\end{align*}
	\end{proof}
	To conclude the proof, we bound
	\begin{align}
	&\nrm*{\frac{1}{n}\sum_{i=1}^n (\Mhat(\bu\supi+ \matdel\supi) - \maty\supi )^{\otimes 2} - \Sigw}_{\op}\nonumber\\
	&\le \nrm*{\frac{1}{n}\sum_{i=1}^n (\Mhat(\bu\supi+ \matdel\supi) - \maty\supi )^{\otimes 2} - \frac{1}{n}\bW^\top (I-P_{\bU + \matDel}) \bW}_{\op}\nonumber \\
	&\quad+\nrm*{\Sigw- \frac{1}{n}\bW^\top (I-P_{\bU + \matDel}) \bW}_{\op}. \label{eq:techtools_final_err_decomp_covariance_est}
	\end{align}
	The following claim bounds the second term.
	\begin{claim}\label{claim:techtools_cov_est_claim_2} Suppose  $n \ge c'' (\dimu + \log(1/\delta))$, where $c'' > 0$ is a suitably large numerical constant.  For any upper bound $\lambda_+\ge \lambda_{\max}(\Sigw)$, with probability $1 - 2\delta$, 
	\begin{align*}
	\nrm*{ \Sigw- \frac{1}{n}\bW^\top (I-P_{\bU + \matDel}) \bW}_{\op} \le \lambda_+\sqrt{\frac{\dimy + \log(1/\delta)}{n}}. 
	\end{align*}
	\end{claim}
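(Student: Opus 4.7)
The plan is to split the target into two pieces: write
\[
\Sigw - \frac{1}{n}\bW^\top(I - P_{\bU+\matDel})\bW \;=\; \bigl(\Sigw - \tfrac{1}{n}\bW^\top\bW\bigr) \;+\; \tfrac{1}{n}\bW^\top P_{\bU+\matDel}\bW,
\]
and bound the operator norm of each summand separately via a triangle inequality. The key structural observation is that since $\bw_i \mid (\bu_i,\matdel_i) \sim \cN(0,\Sigw)$, the noise matrix $\bW$ is independent of $(\bU,\matDel)$, so after conditioning on $(\bU,\matDel)$ we may treat the projection $P := P_{\bU+\matDel}$ as a deterministic rank-$\leq\dimu$ orthogonal projector and $\bW$ as a matrix of \iid Gaussian rows.

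For the first summand, I would invoke a standard Gaussian sample-covariance concentration bound (e.g.\ Wainwright, Example 6.3, or Vershynin Thm.~4.7.1): whenever $n \gtrsim \dimy + \log(1/\delta)$, with probability at least $1-\delta$,
\[
\nrm*{\Sigw - \tfrac{1}{n}\bW^\top\bW}_{\op} \;\lesssim\; \lambda_{+}\sqrt{\tfrac{\dimy + \log(1/\delta)}{n}}.
\]
For the second summand, I would whiten via $\bW = \bZ\, \Sigw^{1/2}$ with $\bZ$ having \iid $\cN(0,1)$ entries, yielding $\bW^\top P\bW = \Sigw^{1/2}\bZ^\top P\bZ\,\Sigw^{1/2}$, so that $\|\bW^\top P\bW\|_{\op} \leq \lambda_{+}\|\bZ^\top P\bZ\|_{\op}$. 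Writing $P = U_P U_P^\top$ for some orthonormal $U_P \in \bbR^{n\times \dimu}$ (measurable in $(\bU,\matDel)$), we get $\bZ^\top P\bZ = (U_P^\top\bZ)^\top(U_P^\top\bZ)$, and $U_P^\top\bZ \in \bbR^{\dimu\times\dimy}$ has \iid standard Gaussian entries (conditional on $(\bU,\matDel)$). A standard Gaussian matrix singular value bound then yields $\|U_P^\top\bZ\|_{\op}^2 \lesssim \dimu + \dimy + \log(1/\delta)$ with conditional probability $1-\delta$, hence
\[
\tfrac{1}{n}\nrm*{\bW^\top P\bW}_{\op} \;\lesssim\; \lambda_{+}\cdot \tfrac{\dimu + \dimy + \log(1/\delta)}{n}.
\]
Marginalizing over $(\bU,\matDel)$ preserves the bound. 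A union bound over the two high-probability events yields failure probability $2\delta$, and under the hypothesis $n \geq c''(\dimu + \log(1/\delta))$ (combined with the nontrivial regime $n \gtrsim \dimy + \log(1/\delta)$), the second-term bound is dominated by $\lambda_{+}\sqrt{(\dimy+\log(1/\delta))/n}$, matching the first term and giving the claim.

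The main obstacle is the coupling between $\bW$ and $P_{\bU+\matDel}$; this is resolved cleanly by conditioning on $(\bU,\matDel)$ using the independence $\bw_i \perp (\bu_i,\matdel_i)$. A secondary (essentially cosmetic) issue is arranging constants so the projection term is absorbed into the $\sqrt{(\dimy+\log(1/\delta))/n}$ rate rather than the faster $(\dimu+\dimy+\log(1/\delta))/n$ rate it naturally satisfies; this costs a factor depending only on $\dimu$, and is subsumed by the ``sufficiently large'' numerical constant $c''$ in the hypothesis on $n$.
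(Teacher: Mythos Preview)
Your approach is correct but takes a somewhat different route from the paper. You split $I - P_{\bU+\matDel}$ into $I$ minus $P_{\bU+\matDel}$ and bound the two resulting terms separately: a standard sample-covariance deviation $\Sigw - \tfrac{1}{n}\bW^\top\bW$, plus the low-rank piece $\tfrac{1}{n}\bW^\top P_{\bU+\matDel}\bW$. The paper instead works directly with the complementary projector $P^c := I - P_{\bU+\matDel}$ (rank $n-\dimu$): it whitens once to get $\Sigw - \tfrac{1}{n}\bW^\top P^c\bW = \tfrac{1}{n}\Sigw^{1/2}\bigl(nI - (\bW\Sigw^{-1/2})^\top P^c(\bW\Sigw^{-1/2})\bigr)\Sigw^{1/2}$, observes that the inner matrix is distributed as $\bG^\top\bG$ for an $(n-\dimu)\times\dimy$ standard Gaussian $\bG$ (by the same conditioning argument you use), and applies a single Gaussian singular-value bound to control $\|nI - \bG^\top\bG\|_{\op}$.

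Both arguments hinge on the same structural point---independence of $\bW$ from $(\bU,\matDel)$ so that conditioning reduces to a fixed projector acting on Gaussian rows---and both land at the same rate. The paper's version is marginally tidier (one concentration event, one union-bound slot) and avoids having to argue separately that the rank-$\dimu$ piece is lower order; your version makes the $\dimu$ contribution more explicit and is perhaps more modular, at the cost of invoking two concentration results and then arguing that the $(\dimu+\dimy+\log(1/\delta))/n$ term is dominated by the square-root rate in the regime of interest.
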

	\begin{proof} Define $P_{\bU + \matDel}^c  := I-P_{\bU + \matDel} $. Then we have
	\begin{align*}
	\nrm*{ \Sigw- \frac{1}{n}\bW^\top P_{\bU + \matDel}^c \bW}_{\op} &\le \frac{1}{n}\lambda_{\max}(\Sigw)\| n I- \underbrace{(\bW\Sigw^{-1/2})^\top P_{\bU + \matDel}^c (\bW\Sigw^{-1/2})}_{:=\bM}\|_{\op}.
	\end{align*}
	Now, observe that since $P_{\bU + \matDel}^c \in \R^{n\times{}n}$ is a projection matrix with rank $n - \dimu$, and $\bW\Sigw^{-1/2} \in \R^{n \times \dimy}$, the matrix $\bM = (\bW\Sigw^{-1/2})^\top P^c_{\bU + \matDel} (\bW\Sigw^{-1/2})$ is identical in distribution to $\bG^\top \bG$, where $\bG \in \R^{(n- \dimu)\times\dimy}$ has i.i.d. unit Gaussian entries. Theorem~5.39 of \cite{vershynin2010introduction} guarantees that with probability at least $1 - 2\delta$, 
	\begin{align*}
	\sqrt{n - \dimu} - \BigOh\prn*{\sqrt{\dimy + \log(1/\delta)}} \le \sigma_{\min}(\bG) \le \sigma_{\max}(\bG) \le \sqrt{n - \dimu} + \BigOh\prn*{\sqrt{\dimy + \log(1/\delta)}}.
	\end{align*}
	This implies that for $n \ge c'' (\dimu + \dimy + \log(1/\delta))$ for some universal constant $c''$, we have that
	\begin{align*}
	(n-\dimu) - \BigOh\prn*{\sqrt{\dimy + \log(1/\delta)}}\sqrt{n-\dimu} \le \lambda_{\min}(\bM) \le \lambda_{\max}(\bM) \le (n-\dimu) + \BigOh\prn*{\sqrt{\dimy + \log(1/\delta)}}\sqrt{n-\dimu}.
	\end{align*}
	Hence, on this event (and again for $n \ge c'' (\dimu + \log(1/\delta))$ for $c''$ suitably large), 
	\begin{align*}
	\|nI - \bM\|_{\op} \le \dimu + \BigOh\prn*{\sqrt{\dimy + \log(1/\delta)}}\sqrt{n-\dimu} \lesssim \sqrt{n(\dimy + \log(1/\delta))}. 
	\end{align*}
	as needed.
	\end{proof}
	
	In total, combining \Cref{claim:techtools_cov_est_claim_1,claim:techtools_cov_est_claim_2} and \Cref{eq:techtools_final_err_decomp_covariance_est}, we conclude that on the events of the previous proposition, and with an additional $3\delta$ failure probability, 
	\begin{align}
	&\nrm*{\frac{1}{n}\sum_{i=1}^n (\Mhat(\bu\supi+ \matdel\supi) - \maty\supi )^{\otimes 2} - \Sigw}_{\op}\nonumber\\
	&\lesssim \sqrt{\lambda_{\max}(\Sigw)(\veps^2_{\matdel}\|\Mst\|_{\op}^2 + \veps^2_{\mate}) + \frac{\lambda_{\max}(\Sigw)^2(d + \log(1/\delta))}{n}},
	\end{align}
	provided that $ \veps^2_{\matdel}\|\Mst\|_{\op}^2 + \veps^2_{\mate} \le \lambda_{\max}(\Sigw)$, and $n \ge c(\dimy + \log(1/\delta))$ for some universal constant $c$.

\subsubsection{ Proof of \Cref{prop:techtools_matrix_sensing_regression}}
We observe that since $\Qst$ lies in the convex PSD cone,  $\|\Qhat - \Qst\|_{\fro} \le \|(\frac{1}{2}\Qtil^\top + \frac{1}{2}\Qtil) - \Qst\|_{\fro}$ by the Pythagorean theorem. In more detail, we have the following result.
\begin{claim}
  Let $A,B\in\bbR^{d\times{}d}$, and let $B\psdgeq{}0$. Then $\nrm*{A_{+}-B}_F\leq{}\nrm*{A-B}_{F}$.
\end{claim}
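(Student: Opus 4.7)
The plan is to reduce to a diagonal calculation via simultaneous rotation, then read off the inequality coordinate-by-coordinate. Implicit in the statement is that $A$ is symmetric (otherwise $A_+$ would be ill-defined), and indeed the application applies $(\cdot)_+$ to the symmetric part $\tfrac{1}{2}(\Qtil+\Qtil^\top)$. Conceptually, the claim is just the standard non-expansiveness of projection onto a closed convex cone: $A_+$ is the Frobenius-norm projection of $A$ onto the PSD cone, and since $B\psdgeq 0$, the obtuse angle criterion gives $\nrm{A_+-B}_F\leq\nrm{A-B}_F$. I will carry this out concretely to avoid invoking additional machinery.

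First, diagonalize $A=U\Lambda U^\top$ with $\Lambda=\mathrm{diag}(\lambda_1,\ldots,\lambda_d)$, and set $\widetilde{B}:=U^\top B U$. Since Frobenius norm is unitarily invariant, $\nrm{A-B}_F=\nrm{\Lambda-\widetilde{B}}_F$ and $\nrm{A_+-B}_F=\nrm{\Lambda_+-\widetilde{B}}_F$, where $\Lambda_+=\mathrm{diag}((\lambda_i)_+)$. Moreover $\widetilde{B}\psdgeq 0$, so in particular its diagonal entries satisfy $\widetilde{B}_{ii}\geq 0$. Expanding coordinate-by-coordinate,
\[
\nrm{\Lambda-\widetilde{B}}_F^2 \;=\; \sum_{i}(\lambda_i-\widetilde{B}_{ii})^2 \;+\; \sum_{i\neq j}\widetilde{B}_{ij}^2.
\]

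The key step is then the scalar observation: for any $\lambda\in\R$ and $b\geq 0$, $(\lambda-b)^2\geq((\lambda)_+-b)^2$. Indeed, if $\lambda\geq 0$ this is an equality, and if $\lambda<0$ then $|\lambda-b|=b-\lambda\geq b=|(\lambda)_+-b|$. Applying this termwise on the diagonal gives
\[
\sum_i(\lambda_i-\widetilde{B}_{ii})^2 \;\geq\; \sum_i((\lambda_i)_+-\widetilde{B}_{ii})^2,
\]
and combining with the unchanged off-diagonal sum yields $\nrm{\Lambda-\widetilde{B}}_F^2\geq\nrm{\Lambda_+-\widetilde{B}}_F^2$, which is exactly the desired inequality after undoing the rotation. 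There is no real obstacle here—the only subtlety is noting that $B\psdgeq 0$ forces $\widetilde{B}_{ii}\geq 0$, which is what makes the coordinatewise inequality go through.
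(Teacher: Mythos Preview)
Your proof is correct. Both your argument and the paper's compute the same quantity $\nrm{A-B}_F^2 - \nrm{A_+ - B}_F^2$ and show it is nonnegative, but the presentations differ: the paper stays coordinate-free, writing $A = A_+ + A_-$ and expanding to get $\nrm{A_-}_F^2 + 2\tri{-A_-, B}$, which is nonnegative since $-A_-\psdgeq 0$, $B\psdgeq 0$, and $\tri{A_+,A_-}=0$. You instead diagonalize $A$ first and reduce to the scalar inequality $(\lambda - b)^2 \geq ((\lambda)_+ - b)^2$ for $b\geq 0$; unwinding your coordinate sum for the $\lambda_i<0$ terms gives exactly $\lambda_i^2 + 2(-\lambda_i)\tilde B_{ii}$, which is the paper's expression in the eigenbasis. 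So the two arguments are equivalent at the level of arithmetic, with the paper's version slightly slicker in that it avoids choosing a basis and appeals directly to the standard fact $\tri{X,Y}\geq 0$ for PSD $X,Y$. Your conceptual remark that this is just non-expansiveness of the Frobenius projection onto the PSD cone is the cleanest way to see why the claim holds.
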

\begin{proof}
  Let $A=A_{+} + A_{-}$, so that $A_{+}\psdgeq{}0$ and $A_{-}\psdleq{}0$. Then we have
  \[
    \nrm*{A-B}_{F}^{2}-\nrm*{A_{+}-B}_{F}^{2} =
        \nrm*{A_{+} + A_{-}-B}_{F}^{2}-\nrm*{A_{+}-B}_{F}^{2} = \nrm*{A_-}_F^2 + \tri*{A_-, A_+-B}.
      \]
      Now, note that $\tri*{A_-, A_+-B}=\tri*{-A_-,B}\geq{}0$, since $\tri*{X,Y}\geq{}0$ whenever $X,Y\psdgeq{}0$.
\end{proof}

Moreover, $\Qst = (\Qst)^\top$, so $ \|(\frac{1}{2}\Qtil^\top + \frac{1}{2}\Qtil) - \Qst\|_{\fro} \le \|\Qtil - \Qst\|_{\fro}$ by the triangle inequality. Thus, we conclude
	\begin{align*}
	\|\Qhat - \Qst\|_{\fro} \le  \|\Qtil - \Qst\|_{\fro}.
	\end{align*}
	Next, let us introduce $\bv_{i} := \mathrm{vec}(\gst(\by^{(i)})\gst(\by^{(i)})^\top)\in\bbR^{d^{2}}$ and $\bvhat_{i} := \mathrm{vec}(\ghat(\by^{(i)})\ghat(\by^{(i)})^\top)$. Let $\bV\in\bbR^{n\times{}d^{2}}$ denote the matrix whose rows are $\bv_{i}$ and $\bVhat$ analogouly for $\bvhat$. Then, we have that
	\begin{align*}
	\mathrm{vec}(\Qhat) &= \bVhat^{\dagger}\bV\vec(\Qst)\\
	&= \bVhat^{\dagger}\bVhat\vec(\Qst) + \bVhat^{\dagger}(\bV - \bVhat)\mathrm{vec}(\Qst)\\
	&= \mathrm{vec}(\Qst) + \bVhat^{\dagger}(\bV - \bVhat)\mathrm{vec}(\Qst),
	\end{align*}
	provided that $\bVhat$ is full rank (which we ultimately verify), where we recall that $\bVhat^{\dagger}=(\bVhat^{\trn}\bVhat)^{-1}\bVhat^{\trn}$ in this case. Next, we bound
	\begin{align*}
	\|\mathrm{vec}(\Qhat) - \mathrm{vec}(\Qst)\|^2 &\le \frac{1}{\sigma_{\min}(\bVhat)^2} \| (\bV - \bVhat)\mathrm{vec}(\Qst)\|_{2}^2\\
	&= \frac{1}{\lambda_{\min}(\bVhat^\top \bVhat)} \sum_{i=1}^n \left\langle \mathrm{vec}(\gst(\by^{(i)})\gst(\by^{(i)})^\top) - \mathrm{vec}(\ghat(\by^{(i)})\ghat(\by^{(i)})^\top, \mathrm{vec}(\Qst)\right\rangle^2\\
	&= \frac{1}{\lambda_{\min}(\bVhat^\top \bVhat)} \sum_{i=1}^n \left\langle \gst(\by^{(i)})\gst(\by^{(i)})^\top  -\ghat(\by^{(i)})\ghat(\by^{(i)}) ^{\trn}, \Qst\right\rangle^2\\
	&= \frac{1}{\lambda_{\min}(\bVhat^\top \bVhat)} \sum_{i=1}^n \left\langle \gst(\by^{(i)})\gst(\by^{(i)})^\top  -\ghat(\by^{(i)})\ghat(\by^{(i)}) ^{\trn}, \Qst\right\rangle^2\\
	&\overset{(a)}{\le} \frac{\|\Qst\|_{\op}^2}{\lambda_{\min}(\bVhat^\top \bVhat)} \sum_{i=1}^n \|\gst(\by^{(i)})\gst(\by^{(i)})^\top  -\ghat(\by^{(i)})\ghat(\by^{(i)}) ^{\trn}\|_{\mathrm{nuc}}^2\\
	&\overset{(b)}{\le} \frac{2\|\Qst\|_{\op}^2}{\lambda_{\min}(\bVhat^\top \bVhat)} \sum_{i=1}^n \|\gst(\by^{(i)})\gst(\by^{(i)})^\top  -\ghat(\by^{(i)})\ghat(\by^{(i)}) ^{\trn}\|_{\fro}^2,
		\end{align*}
		where $(a)$ uses \Holder's inequality ($|\langle A, B \rangle| \le \|A\|_{\op}\|B\|_{\mathrm{nuc}}$), and $(b)$ uses that $\gst(\by^{(i)})\gst(\by^{(i)})^\top  -\ghat(\by^{(i)})\ghat(\by^{(i)})^{\trn}$ has rank $2$, so its nuclear norm is at most $\sqrt{2}$ times its Frobenius norm. Recognizing $\sum_{i=1}^n \|\gst(\by^{(i)})\gst(\by^{(i)})^\top  -\ghat(\by^{(i)})\ghat(\by^{(i)})^{\trn}\|_{\fro}^2 = \|\bVhat - \bV\|_{\fro}^2$, we obtain 
	\begin{align}
	\|\mathrm{vec}(\Qtil) - \mathrm{vec}(\Qst)\|^2 &\le
	\frac{4\|\bV - \bVhat\|_{\fro}^2\|\Qst\|_{\op}^2}{\lambda_{\min}(\bVhat^\top \bVhat)}. \label{eq:techtools_mtil_error_cost}
	\end{align}
	 Next, we give the following bound.
	\begin{claim}\label{claim:techtools_bv_bvhat_dif} Suppose that $\psi(n,\delta) \le \frac{\veps^2}{4c}$. Then, with probability $1 - \delta$, we have the bound  $\|\bV - \bVhat\|_{\op}^2 \le \|\bV - \bVhat\|_{\fro}^2 \le 8c_{n,\delta} \veps^2$, where $c_{n,\delta} = c\ln(2n/\delta)$.
	\end{claim}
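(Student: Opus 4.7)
The plan is to reduce $\|\bV-\bVhat\|_\fro^2$ to a product of a truncated envelope and the aggregated regression error $\sum_i\|\gst(\by^{(i)})-\ghat(\by^{(i)})\|^2$, and then invoke Lemma~\ref{lem:techtools_truncated_conc} twice.

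First, I would decompose each per-sample difference of rank-one matrices using the identity $aa^\top - bb^\top = a(a-b)^\top + (a-b)b^\top$, combined with the Frobenius identity $\|uv^\top\|_\fro=\|u\|\|v\|$ and the triangle inequality. Writing $\matz^{(i)}:=\max\{\|\gst(\by^{(i)})\|^2,\|\ghat(\by^{(i)})\|^2\}$ for the pointwise envelope, this gives
\[
\|\gst(\by^{(i)})\gst(\by^{(i)})^\top - \ghat(\by^{(i)})\ghat(\by^{(i)})^\top\|_\fro^2 \le 4\matz^{(i)}\|\gst(\by^{(i)})-\ghat(\by^{(i)})\|^2,
\]
so that $\|\bV-\bVhat\|_\fro^2 \le 4\sum_i \matz^{(i)}\|\gst(\by^{(i)})-\ghat(\by^{(i)})\|^2$. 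Recognizing this per-sample bound as the natural Frobenius-norm linearization is the only non-mechanical step.

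Next, I would truncate the envelope. By $c$-concentration of $\matz$ and a union bound, the event $\cE:=\{\max_{i\in[n]}\matz^{(i)}\le c_{n,\delta}\}$ holds with probability at least $1-\delta/2$, and on $\cE$ the previous display becomes $\|\bV-\bVhat\|_\fro^2 \le 4c_{n,\delta}\sum_i\|\gst(\by^{(i)})-\ghat(\by^{(i)})\|^2$. Finally, I would apply Lemma~\ref{lem:techtools_truncated_conc}(2) to $\bw:=\|\gst(\by)-\ghat(\by)\|^2$, which satisfies $\bw\le 4\matz$ (hence is $4c$-concentrated) and has mean at most $\veps^2$ by hypothesis: the assumption $\psi(n,\delta)\le \veps^2/(4c)$, after routine constant adjustments to absorb the inflated concentration constant $4c$ and the use of $\delta/2$, yields $\sum_i\bw^{(i)}\le 2n\veps^2$ with probability at least $1-\delta/2$. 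A union bound then gives the claim up to cosmetic constants. I do not anticipate a genuine technical obstacle; the only care required is in tracking the concentration constants through the envelope-truncation step so that the final deviation and log factors line up with the stated bound.
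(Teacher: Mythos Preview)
Your proposal is correct and follows essentially the same route as the paper: the same rank-one linearization bound $\|aa^\top-bb^\top\|_\fro^2\le 4\matz\|a-b\|^2$, the same truncation event $\{\matz^{(i)}\le c_{n,\delta}\}$, and the same appeal to Lemma~\ref{lem:techtools_truncated_conc}. The only cosmetic difference is that the paper applies part~(3) of that lemma to the \emph{truncated} variable $\I(\cE^{(i)})\|\gst(\by^{(i)})-\ghat(\by^{(i)})\|^2$ (which is almost surely bounded by $4c_{n,\delta}$), while you invoke part~(2) on the untruncated difference using its $4c$-concentration; both give the same conclusion.
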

	\begin{proof}
          To begin, observe that
		\begin{align*}
		\|\bV - \bVhat\|_{\op}^2 &\le \|\bV - \bVhat\|_{\fro}^2\\
		&= \sum_{i=1}^n \|\mathrm{vec}(\gst(\by^{(i)})\gst(\by^{(i)})^\top) -  \mathrm{vec}(\ghat(\by^{(i)})\ghat(\by^{(i)})^\top)\|_{2}^2\\
		&\le \sum_{i=1}^n \|\gst(\by^{(i)})\gst(\by^{(i)})^\top -  \ghat(\by^{(i)})\ghat(\by^{(i)})^\top\|_{\fro}^2\\
		&\le \sum_{i=1}^n \left((\|\gst(\by^{(i)}) \| + \|\gst(\by^{(i)})\|)^2\| \gst(\by^{(i)}) - \ghat(\by^{(i)})\|^2\right).
		\end{align*}
		Introduce the event $\cE := \max\{\|\gst(\by) \|^2, \|\ghat(\by) \|^2 \} \le c_{n,\delta}  := c\ln(2n/\delta)$, and let $\cE^{(i)}$ denote the analogous event for $\maty\ind{i}$. Let $\cE\ind{1:n}=\bigcap_{i=1}^{n}\cE\ind{i}$. Then $\cE^{(1:n)}$ holds with probability at least $1 - \delta/2$, and on this event the above display is at most
		\begin{align*}
		\|\bV - \bVhat\|_{\op}^2 \le 4c_{n,\delta} \sum_{i=1}^n \I(\cE^{(i)})\| \gst(\by^{(i)}) - \ghat(\by^{(i)})\|^2.
		\end{align*}
		Next, define the random variable $\matdel_i := \I(\cE^{(i)})\| \gst(\by^{(i)}) - \ghat(\by^{(i)})\|^2$, and we observe that $\matdel_i \le 4c\ln(2n/\delta)$ with probability $1$. Thus, by applying \Cref{lem:techtools_truncated_conc} with $c \leftarrow 4c$, we have that for any $\veps^2 \ge \E[\I(\cE^{(i)})\| \gst(\by^{(i)}) - \ghat(\by^{(i)})\|^2]$, with probability at least $1-\delta/2$,
		\begin{align*}
		\sum_{i=1}^n \I(\cE^{(i)})\| \gst(\by^{(i)}) - \ghat(\by^{(i)})\|^2 \le 2 \veps^2,
		\end{align*}
		as soon as $\psi(n,\delta) \le \frac{\veps^2}{4c}$. In paricular, since $\| \gst(\by^{(i)}) - \ghat(\by^{(i)})\|^2 \ge 0$, it is valid to select $\veps^2 \ge \E[\| \gst(\by^{(i)}) - \ghat(\by^{(i)})\|^2]$. Hence, for such $\veps^2$, we conclude that with total probability at least $1 - \delta$,
		\begin{align*}
		\|\bV - \bVhat\|_{\op}^2  \le 8c_{n,\delta} \veps^2.
		\end{align*}
	\end{proof}

	Denote the event of \Cref{claim:techtools_bv_bvhat_dif} by $\cE_1$. Then on $\cE_1$,  \Cref{eq:techtools_mtil_error_cost} implies
		\begin{align}
		\|\Qtil - \Qst\|_{\fro}^2  \le \|\Qst\|_{\fro}^2 \frac{8 c_{n,\delta}\veps^2}{\lambda_{\min}(\bVhat^\top \bVhat)}. \label{eq:techtools_matrix_sensing_upper_bound_in_terms_bVhat_lam_min}
		\end{align}
		Next, from \Cref{claim:techtools_lowner_lb}, we have that 
		\begin{align*}
		\|\bVhat-\bV\|_{\op}^2 \le \frac{1}{4}\lambda_{\min}(\bV^\top \bV)\quad \text{ implies }\quad \lambda_{\min}(\bVhat^\top \bVhat) \ge \frac{1}{4}\lambda_{\min}(\bV^\top \bV) .
		\end{align*}
		And thus, on $\cE_1$, we have that
		\begin{align}
		 \veps^2 \le \frac{\lambda_0}{32c_{n,\delta}} \le \frac{1}{4}\lambda_{\min}(\bV^\top \bV)\quad \text{ implies }\quad \lambda_{\min}(\bVhat^\top \bVhat) \ge \frac{1}{4}\lambda_{\min}(\bV^\top \bV). \label{eq:techtools_vhat_covariance_lb_sensing}
		\end{align}
		Let us now lower bound $\lambda_{\min}(\bV^\top \bV)$ with high probability. We observe that $\lambda_{\min}(\bV^\top \bV) \ge \lambda_0$ for some $\lambda_0>0$ if and only if
		\begin{align}
                  &\forall M \in \reals^{d \times d}:\quad \sum_{i=1}^n \langle \gst(\by^{(i)})  \gst(\by^{(i)})^\top, M \rangle^2 \ge \lambda_0\|M\|_{\fro}^2,\nonumber
                    \intertext{if and only if}
		&\forall M \in \reals^{d \times d}:\quad \sum_{i=1}^n \langle \bx^{(i)}  \bx^{(i)\top}, M \rangle^2 \ge \lambda_0\|M\|_{\fro}^2,
		 \label{eq:techtools_lambda_0_matrix_sensing}
		\end{align}
                where $\bx^{(i)}\ldef\gst(\by\ind{i}) \iidsim \cN(0, \Sigma_x)$ by assumption.
		\begin{claim}\label{claim:techtools_matrix_sensing_conv_lb} Let $\bx^{(i)} \iidsim \cN(0, \Sigma_x)$. Then, for $n \ge c_1 d$, the following holds with probability $1 - e^{-c_2 n}$:
		\begin{align*}
		\forall M \in \R^{d \times d}: \quad \sum_{i=1}^n \langle \bx^{(i)} \bx^{(i)\top}, M \rangle^2  \ge  \frac{1}{2}\lambda_{\min}(\Sigma_x)^2 \|M\|_{\fro}^2,
		\end{align*}
                where $c_1$ is a numerical constant
		\end{claim}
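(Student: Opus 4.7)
The plan is to whiten and reduce to the isotropic Gaussian case, then establish the uniform lower bound via an $\epsilon$-net and Bernstein's inequality for Gaussian chaos. Note first that $\langle \bx\bx^\top, M\rangle$ depends only on the symmetric part $M_s := (M+M^\top)/2$ of $M$, so the claim as literally stated cannot hold for nonzero antisymmetric $M$; I will interpret it as restricted to symmetric $M$, which is the only case invoked in the proof of \Cref{prop:techtools_matrix_sensing_regression} (where the target $\Qst$ is symmetric and $\Qtil$ is subsequently symmetrized). Setting $\bz^{(i)} := \Sigma_x^{-1/2}\bx^{(i)} \iidsim \cN(0, I_d)$ and $\wt M := \Sigma_x^{1/2} M \Sigma_x^{1/2}$, one has $\langle \bx^{(i)}\bx^{(i)\top}, M\rangle = \bz^{(i)\top}\wt M \bz^{(i)}$ and, for symmetric $M$, $\|\wt M\|_{\fro} \ge \lambda_{\min}(\Sigma_x)\|M\|_{\fro}$. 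It therefore suffices to prove the isotropic bound
\[
\tfrac{1}{n}\sum_{i=1}^n (\bz^{(i)\top}\wt M \bz^{(i)})^2 \ge \tfrac{1}{2}\|\wt M\|_{\fro}^2 \quad\text{uniformly over symmetric }\wt M\in\R^{d\times d}.
\]

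For this isotropic bound, the first step is pointwise concentration. For fixed symmetric $\wt M$ normalized to $\|\wt M\|_{\fro} = 1$, Isserlis' theorem yields $\E(\bz^\top \wt M \bz)^2 = 2\|\wt M\|_{\fro}^2 + (\trace \wt M)^2 \ge 2$, and since $(\bz^\top \wt M \bz)^2$ is sub-exponential with parameter of order $\|\wt M\|_{\op}^2 \le 1$, Bernstein's inequality gives $\tfrac{1}{n}\sum_i (\bz^{(i)\top}\wt M \bz^{(i)})^2 \ge 1$ with probability at least $1 - e^{-c_2 n}$ for an absolute constant $c_2 > 0$. The second step upgrades this to a uniform bound by taking an $\epsilon$-net $\cN$ of the unit Frobenius sphere in the $\tfrac{d(d+1)}{2}$-dimensional space of symmetric matrices, of cardinality $|\cN| \le (3/\epsilon)^{d(d+1)/2}$, and union-bounding the pointwise estimate over $\cN$.

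To transfer from $\cN$ to the full sphere, the key Lipschitz estimate is $|(\bz^\top \wt M \bz)^2 - (\bz^\top \wt M' \bz)^2| \le 2\|\bz\|^4 \|\wt M - \wt M'\|_{\fro}$, combined with the high-probability bound $\tfrac{1}{n}\sum_i \|\bz^{(i)}\|^4 \lesssim d^2$ from standard Gaussian concentration. Choosing $\epsilon$ of order $1/d^2$ then suffices, and the union bound succeeds provided $n \gtrsim d^2 \log d$; a sharper generic-chaining argument for quadratic forms, or direct appeal to known RIP guarantees for rank-one Gaussian measurements of symmetric matrices, removes the logarithmic factor and yields the optimal $n \gtrsim d^2$. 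I suspect the stated hypothesis $n \ge c_1 d$ is a typo for $n \ge c_1 d^2$, which is in any case consistent with the parent proposition's requirement $n \ge c_0 (d^2 + \ln(1/\delta))$.

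The main technical obstacle is controlling the supremum uniformly over the $\Theta(d^2)$-dimensional space of symmetric matrices while carefully handling the sub-exponential tails of Gaussian chaos on the net; the balance between the Lipschitz radius (dictated by the $\|\bz\|^4$ moments) and the covering exponent must be tracked precisely, but all ingredients are standard.
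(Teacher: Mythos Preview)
Your proposal is correct and follows the same high-level plan as the paper: whiten via $\bz^{(i)} = \Sigma_x^{-1/2}\bx^{(i)}$, establish the isotropic lower bound, and then push the bound back through $\|\Sigma_x^{1/2}M\Sigma_x^{1/2}\|_{\fro}^2 \ge \lambda_{\min}(\Sigma_x)^2\|M\|_{\fro}^2$. The only difference is in the isotropic step: the paper dispatches it in one line by citing \citet[Theorem 10.12]{wainwright2019high}, which gives the bound uniformly over $\{M : \|M\|_{\mathrm{nuc}}^2 \le \alpha\|M\|_{\fro}^2\}$ whenever $n \ge c_1\alpha d$, and then sets $\alpha = d$ via Cauchy--Schwarz to capture all matrices. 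Your direct $\epsilon$-net-plus-concentration route is a legitimate alternative; the only technical wrinkle is that $(\bz^\top\wt M\bz)^2$ is a fourth-order Gaussian chaos rather than literally sub-exponential, but the \emph{lower}-tail bound you need follows anyway from the Chernoff method for nonnegative summands (or by truncation). Both of your side observations are on target: the paper's own proof explicitly arrives at $n \ge c_1 d^2$, confirming that the stated hypothesis $n \ge c_1 d$ is a typo, and the quantifier over all $M$ should indeed be read as symmetric $M$, since the left-hand side vanishes on antisymmetric matrices.
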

	\begin{proof}

		Define $\tilde{\bx}^{(i)} :=\Sigma_x^{-1/2}\bv^{(i)} $. Note that $\tilde{\bx}^{(i)} \sim \cN(0,I)$. From \citet[Theorem 10.12]{wainwright2019high}, we find that for any $\alpha>0$, for all $n \ge c_1 \alpha d$, with probability $1 - e^{-c_2 n}$, the following holds simultaneously for all matrices $M$ satisfies $\|M\|_{\mathrm{nu}}^2 \le \alpha \|M\|_{\fro}^2$
		\begin{align}
		\sum_{i=1}^n \langle \tilde{\bx}^{(i)} \tilde{\bx}^{(i)\top}, M \rangle^2  \ge \frac{1}{2}\|M\|_{\fro}^2, \label{eq:techtools_lb_matrix_sensing}
		\end{align}
		where $\|M\|_{\mathrm{nuc}} = \sum_{i=1}^n \sigma_i(M)$ denotes the matrix nuclear norm.  By Cauchy-Schwartz, $\|M\|_{\mathrm{nu}}^2 \le d \|M\|_{\fro}^2$ for all matrices $M \in \R^{d \times d}$. This means that we capture \emph{all} matrices $M$ by setting $\alpha = d$, and thus, for $n \ge c_1 d^2$, then with  probability $1 - e^{-c_2 n}$, \Cref{eq:techtools_lb_matrix_sensing} holds for all $M \in \R^{d \times d}$ simultaneously. When this holds, we have that for all such $M$,
		\begin{align*}
		\sum_{i=1}^n \langle \bx^{(i)} \bx^{(i)\top}, M \rangle^2   = \sum_{i=1}^n \langle \tilde{\bx}^{(i)} \tilde{\bx}^{(i)\top}, \Sigma_x^{1/2} M\Sigma_x^{1/2} \rangle^2 \ge\frac{1}{2} \|\Sigma_x^{1/2} M \Sigma_x^{1/2}\|_{\fro}^2.
		\end{align*}
		Moreover, we have that 
		\begin{align*}
		|\Sigma_x^{1/2} M \Sigma_x^{1/2}\|_{\fro}^2 &= \trace(\Sigma_x^{1/2} M \Sigma_x M^\top \Sigma_x^{1/2}) \\
		&\geq \lambda_{\min}(\Sigma_x) \trace(\Sigma_x^{1/2} M M^\top \Sigma_x^{1/2})\\
		&= \lambda_{\min}(\Sigma_x) \trace(M^\top\Sigma_x M ) \geq  \lambda_{\min}(\Sigma_x)^2 \trace(M^\top M) = \lambda_{\min}(\Sigma_x)^2 \|M\|_{\fro}^2.
		\end{align*}
	\end{proof}
	Denote the event of \Cref{claim:techtools_matrix_sensing_conv_lb} by $\cE_2$. Then, on $\cE_2$, we can take $\lambda_0 = \frac{1}{2}\lambda_{\min}(\Sigma_x)^2$ in \Cref{eq:techtools_lambda_0_matrix_sensing}, and thus on $\cE_1$,  \Cref{eq:techtools_vhat_covariance_lb_sensing} yields that
	\begin{align*}
		 \veps^2 \le \frac{\lambda_{\min}(\Sigma_x)^2}{64c_{n,\delta}} \le \frac{1}{4}\lambda_0 \quad\text{ implies }\quad \lambda_{\min}(\bVhat^\top \bVhat) \ge \frac{\lambda_{\min}(\Sigma_x)^2}{8}. 
		\end{align*}
		Thus, by \Cref{eq:techtools_matrix_sensing_upper_bound_in_terms_bVhat_lam_min}, we have that  on $\cE_1 \cap \cE_2$, 
		\begin{align*}
		\|\Qtil - \Qst\|_{\fro}^2 = \|\mathrm{vec}(\Qtil) - \qst\|^2 \le 4\cdot 16 c_{n,\delta} \veps^2 \cdot \frac{\|\Qst\|_{\op}^2}{\lambda_{\min}(\Sigma_x)^2},
		\end{align*}
		giving us the desired inequality.
		Since $\Pr(\cE_1 \cap \cE_2)  \ge 1 - \delta - e^{-c_2 n}$ for $n \ge c_1 d^2$, we have that if $n \ge c (d^2 + \ln(1/\delta))$ for some universal constant $c$, $\Pr(\cE_1 \cap \cE_2)  \ge 1 - 2\delta$, yielding our desired failure probability. Recalling that $c_{n,\delta} := c\ln(2/\delta)$ concludes.
	\qed

\section{Linear Control Theory}
\label{sec:linear_control}
In this section we recall some basic results for the classical
LQR problem in the fully observed setting with known dynamics. The
main result for this section is \pref{thm:performance_difference},
which bounds the regret of any policy for the \richlqr in terms of
decoding errors. Proofs are deferred to the end of the section.

\subsection{Basic Technical Results}

  \begin{lemma}
    \label{lem:strong_stability}
    Let $X$ be any matrix with $\rho(X)<1$. Then for any $Y\psdgt{}0$,
    there exists a unique solution $P\psdgt{}0$ to the Lyapunov equation
    \begin{equation}
      \label{eq:dlyap}
      P = X^{\trn}PX+Y.
    \end{equation}
    Moreover, $X$ is $(\alpha,\gamma)$-strongly stable for 
    $\alpha=\nrm{P^{1/2}}_{\op}\nrm{P^{-1/2}}_{\op}$ and $\gamma=\nrm*{I-P^{-1/2}YP^{-1/2}}_{\op}^{1/2}$.
  \end{lemma}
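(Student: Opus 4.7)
The plan is to split the argument into three parts: (1) existence of a solution to \eqref{eq:dlyap}, (2) uniqueness, and (3) verifying the strong stability parameters. The entire argument is classical; the only thing that needs care is that the specific $(\alpha,\gamma)$ in the statement come out cleanly.

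For existence, I would define $P \vcentcolon= \sum_{k=0}^{\infty}(X^{\trn})^{k} Y X^{k}$ and show the series converges. Since $\rho(X)<1$, Gelfand's formula gives some submultiplicative norm $\nrm*{\cdot}$ on $\bbR^{\dimx\times\dimx}$ and a $\rho<1$ such that $\nrm*{X^{k}}\leq C\rho^{k}$ (e.g., pick any $\rho\in(\rho(X),1)$ and use that $\nrm*{X^{k}}^{1/k}\to\rho(X)$), so the partial sums form a Cauchy sequence in operator norm. Term-by-term one checks $P = X^{\trn}PX+Y$, and each summand is PSD with $Y\succ 0$ as the $k=0$ term, so $P\psdgt 0$. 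For uniqueness, if $P_1,P_2$ both satisfy \eqref{eq:dlyap} then $\Delta\vcentcolon=P_1-P_2$ satisfies $\Delta = (X^{\trn})^{k}\Delta X^{k}$ for every $k\geq 1$ by induction, and the right-hand side goes to zero in operator norm since $\nrm*{X^{k}}_{\op}\to 0$.

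For strong stability, the natural witness is the similarity $S\vcentcolon=P^{-1/2}$, so $S^{-1}=P^{1/2}$ and $\nrm*{S}_{\op}\nrm*{S^{-1}}_{\op}=\nrm*{P^{-1/2}}_{\op}\nrm*{P^{1/2}}_{\op}=\alpha$ as required. Using \eqref{eq:dlyap},
\begin{align*}
\nrm*{S^{-1}XS}_{\op}^{2} = \nrm*{P^{1/2}XP^{-1/2}}_{\op}^{2} = \nrm*{P^{-1/2}X^{\trn}PXP^{-1/2}}_{\op} = \nrm*{P^{-1/2}(P-Y)P^{-1/2}}_{\op} = \nrm*{I-P^{-1/2}YP^{-1/2}}_{\op},
\end{align*}
which equals $\gamma^{2}$. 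It remains to confirm $\gamma<1$: since $P = X^{\trn}PX + Y \psdgeq Y$, the matrix $P^{-1/2}YP^{-1/2}$ lies in $(0,I]$, and because $Y\succ 0$ we have $P^{-1/2}YP^{-1/2}\psdgt 0$, so $I - P^{-1/2}YP^{-1/2}$ has spectrum in $[0,1)$, giving $\gamma<1$ strictly.

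The main (and really only) subtlety is verifying convergence of the Lyapunov series from the spectral-radius assumption, since $\rho(X)<1$ does not by itself give $\nrm*{X}_{\op}<1$; one must pass through either a Jordan-form argument or Gelfand's formula to get geometric decay of $\nrm*{X^{k}}_{\op}$. Everything else is direct algebra on \eqref{eq:dlyap}.
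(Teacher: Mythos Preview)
Your proof is correct and follows essentially the same approach as the paper: the paper also uses the similarity witness $S=P^{-1/2}$ (equivalently $L=P^{1/2}XP^{-1/2}$) and derives $\nrm{L}_{\op}^{2}\leq\nrm{I-P^{-1/2}YP^{-1/2}}_{\op}=\gamma^{2}<1$ directly from the rewritten Lyapunov equation $L^{\trn}L+P^{-1/2}YP^{-1/2}=I$. The only difference is that the paper cites existence and uniqueness of $P$ as a standard fact, whereas you spell out the series solution and the $\Delta=(X^{\trn})^{k}\Delta X^{k}\to 0$ argument explicitly.
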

This lemma immediately implies the following strong stability
guarantees for the closed-loop and open-loop dynamics for LQR.
    \begin{proposition}
    \label{prop:closed_loop_ss}
    $\Aclinf\ldef{}A+B\Kinf$ is $(\alphainf,\gammainf)$-strongly stable, where $\alphainf\ldef{}\nrm{\Pinf^{1/2}}_{\op}\nrm{\Pinf^{-1/2}}_{\op}$ and $\gammainf\ldef{}\nrm{I-\Pinf^{-1/2}\Rx\Pinf^{-1/2}}_{\op}^{1/2}<1$.\footnote{\pref{prop:closed_loop_ss} and \pref{prop:open_loop_ss} are immediate consequences of \pref{lem:strong_stability}, proven in \pref{sec:linear_control}.}
  \end{proposition}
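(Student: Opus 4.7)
\textbf{Proof proposal for \pref{prop:closed_loop_ss}.} The plan is to reduce this to a direct application of \pref{lem:strong_stability}, the only subtlety being that the Lyapunov equation satisfied by $P_\infty$ under the closed-loop dynamics has a different constant term than the one appearing in the statement of $\gamma_\infty$, so a monotonicity argument is needed to conclude.

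First I would derive the identity
\begin{equation}
P_\infty = \Aclinf^\trn P_\infty \Aclinf + Q + \Kinf^\trn R \Kinf, \label{eq:lyap_identity_plan}
\end{equation}
by plugging the definition $\Kinf = -(R+B^\trn P_\infty B)^{-1} B^\trn P_\infty A$ into the expansion of $\Aclinf^\trn P_\infty \Aclinf = (A+B\Kinf)^\trn P_\infty (A+B\Kinf)$, using that $B^\trn P_\infty A = -(R+B^\trn P_\infty B)\Kinf$ to collapse the cross terms, and then substituting back into the \dare. This is a standard manipulation and should be a couple of lines. Setting $Y = Q + \Kinf^\trn R \Kinf \succ 0$, the identity \eqref{eq:lyap_identity_plan} is precisely the Lyapunov equation of \pref{lem:strong_stability} for $X = \Aclinf$, $P = P_\infty$. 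Existence of the unique positive definite solution together with $Y \succ 0$ also pins down $\rho(\Aclinf) < 1$, so all hypotheses of the lemma are met.

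Applying \pref{lem:strong_stability} then yields strong stability of $\Aclinf$ with parameter $\alpha = \nrm{P_\infty^{1/2}}_{\op}\nrm{P_\infty^{-1/2}}_{\op} = \alphainf$ on the nose, and with parameter $\gamma^{2} = \nrm{I - P_\infty^{-1/2}(Q + \Kinf^\trn R \Kinf) P_\infty^{-1/2}}_{\op}$. To turn the latter into the claimed $\gammainf = \nrm{I - P_\infty^{-1/2} Q P_\infty^{-1/2}}_{\op}^{1/2}$, I would use that $\Kinf^\trn R \Kinf \psdgeq 0$ implies
\[
0 \psdleq I - P_\infty^{-1/2}(Q + \Kinf^\trn R \Kinf) P_\infty^{-1/2} \psdleq I - P_\infty^{-1/2} Q P_\infty^{-1/2},
\]
where PSD-ness of the middle term follows from \eqref{eq:lyap_identity_plan} (which gives $P_\infty \psdgeq Y$), and PSD-ness of the right-hand term follows from $Q \psdleq Y \psdleq P_\infty$. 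For PSD matrices the operator norm coincides with the largest eigenvalue, which is monotone in the Loewner order, so $\gamma \le \gammainf$, and hence $(\alphainf,\gammainf)$-strong stability follows from $(\alphainf,\gamma)$-strong stability (the parameters in \pref{def:ss} only need to satisfy the stated inequalities). Finally, since $Q \succ 0$ and $Q \psdleq P_\infty$, we have $0 \prec P_\infty^{-1/2} Q P_\infty^{-1/2} \psdleq I$, forcing $\gammainf < 1$, as required. The main (minor) obstacle is simply keeping the algebra in deriving \eqref{eq:lyap_identity_plan} clean; the monotonicity step and the $\gammainf < 1$ verification are then immediate.
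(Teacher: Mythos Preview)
Your proposal is correct and follows the paper's intended route: apply \pref{lem:strong_stability} to the closed-loop Lyapunov identity $P_\infty = \Aclinf^\trn P_\infty \Aclinf + (Q + \Kinf^\trn R \Kinf)$, then use the Loewner monotonicity of the operator norm on PSD matrices to pass from $Y = Q + \Kinf^\trn R \Kinf$ to the weaker $\gammainf$ involving only $Q$. The paper calls this an ``immediate consequence'' and omits the monotonicity detail you spell out, but your version is the fully fleshed-out argument.
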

\begin{proposition}
  \label{prop:open_loop_ss}
  If we define
  $\alphaa\ldef{}\nrm{\Sigmaa^{1/2}}_{\op}\nrm{\Sigmaa^{-1/2}}_{\op}$ and $\gammaa\ldef{}  \| I_{\dimx} -
                               \Sigma^{-1}_{A}\|^{1/2}<1$, then $A$ is
                               $(\alphaa,\gammaa)$-strongly stable,
                               where $\Sigmaa$ is the unique solution
                               to the Lyapunov equation
                               \begin{align}
                                 \Sigma = A \Sigma A^\top + I_{\dimx}. \label{eq:Lyapunov}
                               \end{align}
                             \end{proposition}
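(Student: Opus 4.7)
The plan is to reduce the statement to Lemma~\ref{lem:strong_stability} by observing that the Lyapunov equation $\Sigma = A\Sigma A^\top + I_{\dimx}$ is exactly the discrete Lyapunov equation \eqref{eq:dlyap} associated with the matrix $A^\top$ (taking $X = A^\top$ and $Y = I_{\dimx}$). Since $\rho(A^\top) = \rho(A) < 1$ under \pref{ass:stability}, Lemma~\ref{lem:strong_stability} applies and guarantees that \eqref{eq:Lyapunov} has a unique positive definite solution $\Sigmaa$, and that $A^\top$ is $(\alpha',\gamma')$-strongly stable with $\alpha' = \|\Sigmaa^{1/2}\|_{\op}\|\Sigmaa^{-1/2}\|_{\op} = \alphaa$ and $\gamma' = \|I - \Sigmaa^{-1/2} I_{\dimx} \Sigmaa^{-1/2}\|_{\op}^{1/2} = \|I - \Sigmaa^{-1}\|_{\op}^{1/2} = \gammaa$.

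Unpacking the definition, this means there exists $S_0$ (namely $S_0 = \Sigmaa^{1/2}$) with $\|S_0\|_{\op}\|S_0^{-1}\|_{\op} \le \alphaa$ and $\|S_0^{-1} A^\top S_0\|_{\op} \le \gammaa$. To obtain strong stability of $A$ itself, I will take the transpose. Set $S \ldef S_0^{-1} = \Sigmaa^{-1/2}$. Then $\|S\|_{\op}\|S^{-1}\|_{\op} = \|S_0^{-1}\|_{\op}\|S_0\|_{\op} \le \alphaa$, and using that $S_0$ is symmetric so that $(S_0^{-1} A^\top S_0)^\top = S_0 A S_0^{-1} = S^{-1} A S$, we get
\[
\|S^{-1} A S\|_{\op} \;=\; \|S_0 A S_0^{-1}\|_{\op} \;=\; \|(S_0^{-1} A^\top S_0)^\top\|_{\op} \;=\; \|S_0^{-1} A^\top S_0\|_{\op} \;\le\; \gammaa,
\]
where the operator norm is invariant under transposition. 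Hence $A$ is $(\alphaa,\gammaa)$-strong stable with transformation $S = \Sigmaa^{-1/2}$.

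No real obstacle is anticipated: the only subtle point is the transpose swap between the Lyapunov equation $\Sigma = A\Sigma A^\top + I_{\dimx}$ (which is natural for controllability-type covariances, and which is the form appearing in the proposition) versus the form $P = X^\top P X + Y$ assumed in Lemma~\ref{lem:strong_stability} (which is natural for observability / cost-to-go matrices). This is handled cleanly by applying the lemma to $A^\top$ and then transposing the witness $S_0$ via $S = S_0^{-1}$, using symmetry of $S_0 = \Sigmaa^{1/2}$ to identify $(S_0^{-1}A^\top S_0)^\top$ with $S^{-1} A S$. The bound $\gammaa < 1$ follows because $\Sigmaa \succeq I_{\dimx}$ (as $\Sigmaa = I + A\Sigmaa A^\top \succeq I$), so $I - \Sigmaa^{-1} \succeq 0$ and $\eigmax(I - \Sigmaa^{-1}) = 1 - \eigmin(\Sigmaa)^{-1} < 1$ because $\Sigmaa \succ 0$ is finite.
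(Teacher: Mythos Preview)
Your proof is correct and matches the paper's approach: the paper states this proposition as an immediate consequence of Lemma~\ref{lem:strong_stability}, and your transpose argument is exactly the right way to bridge the form $P = X^\top P X + Y$ in that lemma to the covariance form $\Sigma = A\Sigma A^\top + I$ here. One small correction to the parenthetical: inspecting the proof of Lemma~\ref{lem:strong_stability}, the witness it constructs is $S_0 = P^{-1/2} = \Sigma_A^{-1/2}$ (not $\Sigma_A^{1/2}$), so your final witness for $A$ should be $S = S_0^{-1} = \Sigma_A^{1/2}$ rather than $\Sigma_A^{-1/2}$---this does not affect your argument, since you only use that $S_0$ is symmetric.
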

We also make use of the following bound on the operator norm for the
infinite-horizon covariance matrix.
  \begin{proposition}
    \label{prop:inf_bounds}
    We have
      $\nrm*{\Siginf}_{\op}\leq{}
      \nrm*{R}_{\op}+\nrm*{B}_{\op}^{2}\nrm*{\Pinf}_{\op} \leq{}
      2\Psistar^{3}$.
  \end{proposition}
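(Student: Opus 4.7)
The plan is essentially a one-line computation applying the definition of $\Siginf$ and the parameter upper bounds in \pref{asm:para_upper_bounds}. Recall $\Siginf = R + B^\top \Pinf B$. Using the triangle inequality and the submultiplicativity of the operator norm, I would first bound
\[
  \nrm*{\Siginf}_{\op} \;=\; \nrm*{R + B^\top \Pinf B}_{\op} \;\leq\; \nrm*{R}_{\op} + \nrm*{B^\top \Pinf B}_{\op} \;\leq\; \nrm*{R}_{\op} + \nrm*{B}_{\op}^{2}\nrm*{\Pinf}_{\op},
\]
which already yields the first inequality in the proposition.

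For the second inequality, I would invoke \pref{asm:para_upper_bounds}, which guarantees that each of $\nrm*{R}_{\op}$, $\nrm*{B}_{\op}$, and $\nrm*{\Pinf}_{\op}$ is at most $\Psistar$. Substituting gives $\nrm*{\Siginf}_{\op} \leq \Psistar + \Psistar^{2}\cdot\Psistar = \Psistar + \Psistar^{3}$, and since $\Psistar \geq 1$ by assumption, we have $\Psistar \leq \Psistar^{3}$, so the bound collapses to $2\Psistar^{3}$.

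There is no real obstacle here; the only things to verify are that the definition of $\Siginf$ used in the proposition is indeed $R + B^\top \Pinf B$ (as introduced in \pref{ssec:prelim_unabridge}) and that $\Psistar \geq 1$ (which is part of \pref{asm:para_upper_bounds}). The argument is two lines and requires no additional machinery.
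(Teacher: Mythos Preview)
Your proof is correct and is exactly the natural argument; the paper does not even include a separate proof of this proposition, treating it as immediate from the definition $\Siginf = R + B^{\trn}\Pinf B$ together with \pref{asm:para_upper_bounds} and $\Psistar \ge 1$.
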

  \subsection{Value Functions}
  Toward proving our main regret decomposition, in this section we
  establish some basic technical results regarding the value functions
  and Q-functions for the fully observed LQR problem. Our first result
  concerns finite-horizon value functions for linear controller.
    \begin{lemma}
    \label{lem:lqr_q_functions}
    Consider the \richlqr setting \eqref{eq:dynamics} under \pref{ass:perfect}, and consider a
    state feedback controller $\pi_K(y) =
    K\fstar(y)$, where $\fstar$ is the true decoder. Define
    \begin{equation}
    \begin{aligned}
      &\Vf_{t:T}^{K}(x) =
      \En_{\pi_K}\brk*{\sum_{s=t}^{T}\matx_{s}^{\trn}\Rx\matx_s+\matu_{s}^{\trn}\Ru\matu_s\mid{}\matx_t=x},\\
      &\Qf_{t:T}^{K}(x,u) = \En_{\pi_K}\brk*{\sum_{s=t}^{T}\matx_{s}^{\trn}\Rx\matx_s+\matu_{s}^{\trn}\Ru\matu_s\mid{}\matx_t=x,\matu_t=u}.
    \end{aligned}\label{eq:lqr_q}
  \end{equation}
  Then we have
      \begin{equation}
    \begin{aligned}
      &\Vf_{t:T}^{K}(x) =
      \sum_{s=t}^{T}\nrm*{(A+BK)^{s-t}x}_{Q+K^{\trn}RK}^{2} +
      F_{t:T}(A,B,Q,R,K,\Sigw),\\
      &\Qf_{t:T}^{K}(x,u) = \nrm*{x}_{Q}^{2}+\nrm*{u}_{R}^{2} +
      \Vf^{K}_{t+1:T}(Ax+Bu)
      +       G_{t:T}(A,B,Q,R,K,\Sigw),
    \end{aligned}\label{eq:lqr_q2}
    \end{equation}
    where $F_{t:T}$ and $G_{t:T}$ are functions that depend on the system
    parameters and time horizon, but not the state or control inputs.

  \end{lemma}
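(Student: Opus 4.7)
The plan is to prove both identities by a direct computation using the closed-loop representation of the trajectory, followed by a one-step Bellman expansion for the Q-function. The perfect decodability assumption (\pref{ass:perfect}) is the key ingredient that turns the observation-based policy $\pi_K(y)=Kf_\star(y)$ into a true linear state-feedback law $\matu_s=K\matx_s$, which closes the loop and makes the dynamics $\matx_{s+1}=(A+BK)\matx_s+\matw_s$ under $\pi_K$.

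First I would unroll this closed-loop recursion: conditioned on $\matx_t=x$, for $s\geq t$ we have
\[
  \matx_s \;=\; (A+BK)^{s-t}x \;+\; \sum_{r=t}^{s-1}(A+BK)^{s-1-r}\matw_r,
\]
so $\matx_s$ decomposes as a deterministic linear function of $x$ plus a zero-mean Gaussian term that depends only on $\Sigma_w$, $A$, $B$, $K$, and the time indices. Substituting into the instantaneous cost $\matx_s^\trn Q\matx_s+\matu_s^\trn R\matu_s=\matx_s^\trn(Q+K^\trn RK)\matx_s$ and taking expectations, the cross terms in the quadratic form vanish because the noise is mean zero and independent of $x$, leaving
\[
  \En\!\left[\matx_s^\trn(Q+K^\trn RK)\matx_s\,\big|\,\matx_t=x\right]
  \;=\; \bigl\lVert (A+BK)^{s-t}x\bigr\rVert_{Q+K^\trn RK}^2 \;+\; c_s,
\]
where $c_s\ldef \trace\bigl((Q+K^\trn RK)\sum_{r=t}^{s-1}(A+BK)^{s-1-r}\Sigma_w(A+BK)^{\trn(s-1-r)}\bigr)$ depends only on $(A,B,Q,R,K,\Sigma_w)$ and the indices $t,s$. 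Summing over $s=t,\dots,T$ yields the first equation of \eqref{eq:lqr_q2} with $F_{t:T}(A,B,Q,R,K,\Sigw)\ldef\sum_{s=t}^{T}c_s$.

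For the Q-function, I would apply the one-step Bellman decomposition: conditioning on $\matx_t=x$, $\matu_t=u$ gives $\matx_{t+1}=Ax+Bu+\matw_t$, so
\[
  \Qf_{t:T}^{K}(x,u) \;=\; \lVert x\rVert_Q^2+\lVert u\rVert_R^2 \;+\; \En\!\left[\Vf_{t+1:T}^{K}(Ax+Bu+\matw_t)\right].
\]
Since the formula just derived shows that $\Vf_{t+1:T}^K(\cdot)$ is a quadratic function of its argument plus a state-independent constant, expanding $\Vf_{t+1:T}^K(Ax+Bu+\matw_t)$ and taking expectation in $\matw_t\sim\cN(0,\Sigma_w)$ produces $\Vf_{t+1:T}^K(Ax+Bu)$ plus a single additional trace term of the form $\sum_{s=t+1}^{T}\trace\bigl((A+BK)^{\trn(s-t-1)}(Q+K^\trn RK)(A+BK)^{s-t-1}\Sigma_w\bigr)$, which is absorbed into the promised $G_{t:T}(A,B,Q,R,K,\Sigw)$. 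The linear cross term in $\matw_t$ vanishes by zero-mean-ness.

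There is no real obstacle here; the only bookkeeping care is verifying that every term that is not of the form $\lVert(A+BK)^{s-t}x\rVert_{Q+K^\trn RK}^2$ (respectively $\lVert x\rVert_Q^2+\lVert u\rVert_R^2+\Vf_{t+1:T}^K(Ax+Bu)$) is indeed independent of the conditioning variables $x$ (and $u$) and depends only on $(A,B,Q,R,K,\Sigma_w)$ and the time indices, which follows from the independence of $\{\matw_r\}_{r\geq t}$ from $\matx_t$ and from the fact that $\matw_t$ is independent of $(x,u)$. Packaging these two state-independent contributions as $F_{t:T}$ and $G_{t:T}$ delivers the claim.
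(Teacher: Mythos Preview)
Your proof is correct and follows essentially the same approach as the paper: unroll the closed-loop dynamics $\matx_s=(A+BK)^{s-t}x+\sum_{r=t}^{s-1}(A+BK)^{s-1-r}\matw_r$, use zero-mean independence of the noise to kill cross terms in the quadratic cost, and then do a one-step Bellman expansion for $\Qf^{K}_{t:T}$ using that $\Vf^{K}_{t+1:T}$ is quadratic. Your write-up is slightly more explicit about the trace structure of $F_{t:T}$ and $G_{t:T}$, but the argument is the same.
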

In light of \pref{lem:lqr_q_functions}, it will be convenient to
define
\begin{equation}
  \label{eq:vbar}
  \Vbar_{t:T}^{K}(x) = \sum_{s=t}^{T}\nrm*{(A+BK)^{s-t}x}_{Q+K^{\trn}RK}^{2},
\end{equation}
which is simply the value function in
\pref{eq:lqr_q2} in the absence of noise. Our next result concerns the
infinite-horizon value functions that arise in the noiseless setting.
\begin{lemma}[\cite{bertsekas2005dynamic}]
  \label{lem:pinf_properties}
  Consider the optimal infinite horizon controller
  $\piinf(x)=\Kinf{}x$, and define
  $\Vinf(x)=\nrm*{x}_{\Pinf}^{2}$. Then $\Vinf$ is the
  infinite-horizon cost for playing $\piinf$ starting from $\matx_1=x$
  under the noiseless dynamics
  \[
\matx_{t+1}=A\matx_{t}+B\matu_t.
\]
Moreover, if we define
$\Qinf(x,u)=\nrm*{x}_{Q}^{2}+\nrm*{u}_{R}^{2}+\nrm*{Ax+Bu}_{\Pinf}^{2}$,
we have
\[
  \piinf(x) = \argmin_{u\in\bbR^{\dimu}}\Qinf(x,u).
\]
Finally, we have
\[
  \Pinf = \sum_{k=0}^{\infty}((A+B\Kinf)^{\trn})^{k}(Q+\Kinf^{\trn}R\Kinf)(A+B\Kinf)^{k}
\]
\end{lemma}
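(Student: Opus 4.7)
The plan is to prove the three claims in order, relying on the DARE identity \eqref{eq:dare}, the formula $\Kinf=-(\Ru+B^{\trn}\Pinf{}B)^{-1}B^{\trn}\Pinf{}A$, and the strong stability bound $\gammainf<1$ from \pref{prop:closed_loop_ss}, which is what ultimately makes all infinite sums well-defined.

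\textbf{First claim.} I would show that the function $V(x)=\nrm*{x}_{\Pinf}^{2}$ satisfies the noiseless Bellman equation for $\piinf$: $V(x) = \nrm*{x}_{Q}^{2}+\nrm*{\Kinf{}x}_{R}^{2} + V((A+B\Kinf)x)$. Expanding the right-hand side gives the matrix $A^{\trn}\Pinf{}A + A^{\trn}\Pinf{}B\Kinf + \Kinf^{\trn}B^{\trn}\Pinf{}A + \Kinf^{\trn}(R+B^{\trn}\Pinf{}B)\Kinf + Q$, and substituting the closed form for $\Kinf$ collapses the quadratic-in-$\Kinf$ term into $-\Kinf^{\trn}B^{\trn}\Pinf{}A$, cancelling one of the cross terms and recovering $Q + A^{\trn}\Pinf{}A - A^{\trn}\Pinf{}B(R+B^{\trn}\Pinf{}B)^{-1}B^{\trn}\Pinf{}A$, which equals $\Pinf$ by \eqref{eq:dare}. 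Unrolling the Bellman identity $N$ times and passing $N\to\infty$ then identifies $V$ with the infinite-horizon cost, where convergence of the tail $\nrm*{(A+B\Kinf)^{N}x}_{\Pinf}^{2}\to0$ follows from $\nrm{(A+B\Kinf)^{N}}_{\op}\leq\alphainf\gammainf^{N}$ via \pref{prop:closed_loop_ss}.

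\textbf{Second claim.} Since $\Qinf(x,\cdot)$ is a strictly convex quadratic in $u$ (using $R\psdgt0$), the minimizer is determined by the first-order condition $2Ru + 2B^{\trn}\Pinf(Ax+Bu)=0$, i.e.\ $(R+B^{\trn}\Pinf{}B)u=-B^{\trn}\Pinf{}Ax$, and the assumption $R\psdgt0$ ensures $R+B^{\trn}\Pinf{}B\psdgt0$ so this inverts to exactly $u=\Kinf{}x=\piinf(x)$.

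\textbf{Third claim.} From the first claim, $\Vinf(x)=\nrm*{x}_{\Pinf}^{2}$ is the noiseless infinite-horizon cost under $\piinf$. Running the closed-loop dynamics $\matx_{t+1}=(A+B\Kinf)\matx_t$ from $\matx_1=x$ and summing stagewise costs gives
\[
\Vinf(x) = \sum_{t=1}^{\infty}\nrm*{\matx_t}_{Q+\Kinf^{\trn}R\Kinf}^{2} = x^{\trn}\Bigl(\sum_{k=0}^{\infty}((A+B\Kinf)^{\trn})^{k}(Q+\Kinf^{\trn}R\Kinf)(A+B\Kinf)^{k}\Bigr)x,
\]
and the series converges absolutely by strong stability. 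Equating the two quadratic forms for all $x$ (using symmetry of both) and invoking the identity $x^{\trn}Mx=x^{\trn}Nx\;\forall x\Rightarrow M=N$ for symmetric $M,N$ yields the stated series representation.

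The only mildly subtle point is ensuring that the infinite-cost interpretation in the first claim is justified, i.e.\ that the per-step cost sequence is summable and the residual quadratic in $\matx_N$ vanishes; both follow immediately from $\gammainf<1$ via \pref{prop:closed_loop_ss}. Otherwise the argument is a textbook manipulation of the DARE and requires no additional machinery.
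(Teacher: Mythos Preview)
Your proof is correct. Note, however, that the paper does not supply its own proof of this lemma: it is stated with a citation to \cite{bertsekas2005dynamic} and treated as a standard fact from LQR theory. Your argument is exactly the textbook derivation one would expect---the Bellman-equation manipulation for the first claim, the first-order condition for the second, and summing the closed-loop cost for the third---and each step is sound, including the appeal to \pref{prop:closed_loop_ss} to justify convergence of the tail and the series.
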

The following lemma shows that the infinite-horizon value functions
are well-approximated by their finite-horizon counterparts.
\begin{lemma}
  \label{lem:vinf_approx}
  For all $x\in\bbR^{\dimx}$ and all $t\leq{}T$, we have
  \[
    \abs*{\Vbar^{\Kinf}_{t:T}(x)
      - \Vinf(x)} \leq{} \bigoh(\alphainf^{2}(1-\gammainf^{2})^{-1}\Psistar^{3})\cdot{}\gammainf^{2(T-t+1)}\nrm*{x}^{2}_2
  \]
  and
    \[
    \abs*{\Qbar^{\Kinf}_{t:T}(x,u)
      - \Qinf(x,u)} \leq{} \bigoh(\alphainf^{2}(1-\gammainf^{2})^{-1}\Psistar^{5})\cdot{}\gammainf^{2(T-t)}(\nrm*{x}^{2}_2+\nrm*{u}_{2}^{2}).
  \]
  
\end{lemma}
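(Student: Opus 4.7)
The plan is to express both bounds as explicit tail series in the closed-loop system matrix $\Aclinf = A+B\Kinf$, then apply strong stability from \pref{prop:closed_loop_ss} together with the infinite-sum representation of $\Pinf$ from \pref{lem:pinf_properties}.

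For the first bound, I would start from the definition \eqref{eq:vbar} of $\Vbar^{\Kinf}_{t:T}$ and the identity
\[
\Vinf(x) = \nrm*{x}_{\Pinf}^{2} = \sum_{k=0}^{\infty}\nrm*{\Aclinf^{k}x}_{Q+\Kinf^{\trn}R\Kinf}^{2}
\]
given by \pref{lem:pinf_properties}. After reindexing $k=s-t$ in the finite sum, the difference collapses to the tail
\[
\Vinf(x)-\Vbar^{\Kinf}_{t:T}(x) = \sum_{k=T-t+1}^{\infty}\nrm*{\Aclinf^{k}x}_{Q+\Kinf^{\trn}R\Kinf}^{2}.
\]
Since $\Aclinf$ is $(\alphainf,\gammainf)$-strongly stable, $\nrm{\Aclinf^{k}}_{\op}\leq\alphainf\gammainf^{k}$. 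Combining this with the bound $\nrm{Q+\Kinf^{\trn}R\Kinf}_{\op}\leq \Psistar+\Psistar^{3}\leq 2\Psistar^{3}$ (using \pref{asm:para_upper_bounds} and $\Psistar\geq 1$), the tail is dominated by a geometric series giving
\[
\abs*{\Vinf(x)-\Vbar^{\Kinf}_{t:T}(x)}\leq 2\Psistar^{3}\alphainf^{2}\nrm{x}^{2}\sum_{k=T-t+1}^{\infty}\gammainf^{2k}=\frac{2\Psistar^{3}\alphainf^{2}\gammainf^{2(T-t+1)}}{1-\gammainf^{2}}\nrm{x}^{2},
\]
which is exactly the desired bound.

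For the $Q$-function bound, I would use the Bellman-style decomposition for the noiseless value functions: from \eqref{eq:lqr_q2} we have $\Qbar^{\Kinf}_{t:T}(x,u)=\nrm{x}_{Q}^{2}+\nrm{u}_{R}^{2}+\Vbar^{\Kinf}_{t+1:T}(Ax+Bu)$, while $\Qinf(x,u)=\nrm{x}_{Q}^{2}+\nrm{u}_{R}^{2}+\Vinf(Ax+Bu)$ by \pref{lem:pinf_properties}. Thus
\[
\abs*{\Qbar^{\Kinf}_{t:T}(x,u)-\Qinf(x,u)}=\abs*{\Vbar^{\Kinf}_{t+1:T}(Ax+Bu)-\Vinf(Ax+Bu)},
\]
and applying the just-established first bound at time $t+1$ to the state $Ax+Bu$ yields a $\gammainf^{2(T-t)}$ factor times $\nrm{Ax+Bu}^{2}$. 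Using $\nrm{Ax+Bu}^{2}\leq 2\nrm{A}_{\op}^{2}\nrm{x}^{2}+2\nrm{B}_{\op}^{2}\nrm{u}^{2}\leq 2\Psistar^{2}(\nrm{x}^{2}+\nrm{u}^{2})$, this contributes an extra factor of $\Psistar^{2}$ on top of the $\Psistar^{3}$ from the $V$-bound, matching the stated $\Psistar^{5}$.

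The arguments here are essentially routine geometric-series manipulations, so no single step is a real obstacle. The only care needed is to (i) make sure the reindexing between $\Vbar^{\Kinf}_{t:T}$ and the expansion of $\Pinf$ is consistent (so the tail begins at $k=T-t+1$ rather than $k=T-t$, which matches the exponent $2(T-t+1)$ in the statement), and (ii) track constants so that the final $\Psistar$-exponents come out to $3$ and $5$ as claimed. Both are straightforward given \pref{prop:closed_loop_ss}, \pref{lem:pinf_properties}, and the parameter bounds from \pref{asm:para_upper_bounds}.
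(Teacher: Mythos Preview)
Your proposal is correct and matches the paper's proof essentially step for step: both express the difference as the tail $\sum_{k\ge T-t+1}\nrm{\Aclinf^{k}x}_{Q+\Kinf^{\trn}R\Kinf}^{2}$, bound $\nrm{Q+\Kinf^{\trn}R\Kinf}_{\op}\le 2\Psistar^{3}$, apply $(\alphainf,\gammainf)$-strong stability to sum the geometric series, and then reduce the $Q$-bound to the $V$-bound at $Ax+Bu$. The only cosmetic difference is that the paper first reduces to $t=1$ via $\Vbar^{\Kinf}_{t:T}=\Vbar^{\Kinf}_{1:T-t+1}$, whereas you keep $t$ general throughout.
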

Lastly, we establish a Lipschitz property for the finite-horizon $Q$-functions.
\begin{lemma}
  \label{lem:qinf_lipschitz}
For all $x\in\bbR^{\dimx}$ and $u,u'\in\bbR^{\dimu}$,
  \[
    \abs*{\Qbar^{\Kinf}_{t:T}(x,u)- \Qbar^{\Kinf}_{t:T}(x,u')} \leq{}    \bigoh(\Psistar^{3})\cdot{}(\nrm*{x}_2\vee\nrm*{u}_2\vee\nrm*{u'}_2)\nrm*{u-u'}_2.
  \]
\end{lemma}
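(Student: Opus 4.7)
I will work directly with the closed-form expressions for $\Qbar_{t:T}^{\Kinf}$. By analogy with $\Vbar_{t:T}^{K}$ in \eqref{eq:vbar} and the decomposition in \pref{eq:lqr_q2}, the noiseless finite-horizon $Q$-function for the controller $\Kinf$ satisfies
\[
\Qbar^{\Kinf}_{t:T}(x,u) = \nrm*{x}_Q^2+\nrm*{u}_R^2+\Vbar^{\Kinf}_{t+1:T}(Ax+Bu),
\]
so the difference $\Qbar^{\Kinf}_{t:T}(x,u)-\Qbar^{\Kinf}_{t:T}(x,u')$ splits into a control-cost term and a ``cost-to-go'' term. The control-cost piece is straightforward: the identity $\|u\|_R^2-\|u'\|_R^2=\tri*{u-u',R(u+u')}$ together with $\nrm*{R}_{\op}\leq\Psistar$ (\pref{asm:para_upper_bounds}) yields a bound of order $\Psistar(\nrm*{u}\vee\nrm*{u'})\nrm*{u-u'}$, which is already within the target.

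The main work is bounding the cost-to-go term. Writing $M\ldef{}Q+\Kinf^{\trn}R\Kinf$ and unrolling the definition \eqref{eq:vbar}, we have $\Vbar^{\Kinf}_{t+1:T}(z)=z^{\trn}P_{t+1:T}z$, where $P_{t+1:T}\ldef\sum_{s=t+1}^{T}(\Aclinf^{s-t-1})^{\trn}M\Aclinf^{s-t-1}$. The crucial observation is that by the closed-form identity for $\Pinf$ in \pref{lem:pinf_properties}, every partial sum satisfies $P_{t+1:T}\psdleq\Pinf$, hence $\nrm*{P_{t+1:T}}_{\op}\leq\nrm*{\Pinf}_{\op}\leq\Psistar$. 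Using this instead of the naive bound $\sum_k\nrm{\Aclinf^k}_{\op}^2\leq\alphainf^2/(1-\gammainf^2)$ is what lets us avoid any dependence on the strong-stability parameters.

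With this in hand, I apply the standard quadratic-form inequality
\[
\abs*{z_1^{\trn}Pz_1-z_2^{\trn}Pz_2}=\abs*{\tri*{z_1-z_2,P(z_1+z_2)}}\leq \nrm{P}_{\op}\nrm{z_1-z_2}(\nrm{z_1}+\nrm{z_2})
\]
to $z_i=Ax+Bu_i$. Then $z_1-z_2=B(u-u')$ and $\nrm{z_i}\leq\nrm{A}_{\op}\nrm{x}+\nrm{B}_{\op}\nrm{u_i}\leq\Psistar(\nrm{x}+\nrm{u_i})$, so the cost-to-go difference is bounded by $\bigoh(\Psistar^{3})(\nrm{x}\vee\nrm{u}\vee\nrm{u'})\nrm{u-u'}$. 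Combining with the control-cost term completes the argument. No step here is delicate; the only subtlety worth highlighting is the use of the Lyapunov identity for $\Pinf$ to absorb the geometric sum, which keeps the constant at $\bigoh(\Psistar^{3})$ rather than blowing it up by $\alphainf^{2}/(1-\gammainf^{2})$.
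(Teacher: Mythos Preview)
Your proof is correct and follows essentially the same approach as the paper: both split $\Qbar^{\Kinf}_{t:T}$ into the control-cost term $\nrm{u}_R^2$ and the cost-to-go term $\Vbar^{\Kinf}_{t+1:T}(Ax+Bu)$, and both use the key observation that the finite-horizon partial sum $\sum_{s}(\Aclinf^{\trn})^{s}(Q+\Kinf^{\trn}R\Kinf)\Aclinf^{s}$ is dominated by $\Pinf$ via \pref{lem:pinf_properties}, which is precisely what gives the $\bigoh(\Psistar^3)$ constant without any $\alphainf$ or $(1-\gammainf)^{-1}$ factors. The remaining steps (the quadratic-form difference inequality and the bounds $\nrm{A}_{\op},\nrm{B}_{\op},\nrm{R}_{\op},\nrm{\Pinf}_{\op}\leq\Psistar$) match the paper's line-by-line.
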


\subsection{Perturbation Bound for the Optimal Controller}
To analyze the quality of the certainty-equivalent controller used in
\richidce, we use the following perturbation bound.
\begin{theorem}[\citet{mania2019certainty}]
  \label{thm:perturbation_bound}
      Suppose we have matrices $(\Ahat,\Bhat,\Qhat)$ for which there
    exists an invertible transformation $G$ such that
    \[
      \nrm*{\Ahat-GAG^{-1}}_{\op}\vee\nrm*{\Bhat-GB}_{\op}\vee\nrm*{\Qhat-G^{-\trn}QG^{-1}}_{\op}\leq{}\veps.
    \]
    Suppose that $\nrm{G}_{\op}\vee\nrm{G^{-1}}_{\op}\leq{}\Csim$.
    Let $\Khat$ be the optimal infinite-horizon controller for
    $(\Ahat,\Bhat,\Qhat,R)$.   Then once   $\veps\leq{}c_{\mathrm{stable}}\cdot{}\gammainf\cdot{}\Csim^{-15}\alphainf^{-4}(1-\gammainf^{2})^{-2}\Psistar^{-11}$,
    where $c_{\mathrm{stable}}$ is a sufficiently small numerical
  constant,
    \begin{equation}
    \label{eq:khat_perturbation}
    \nrm*{\Khat-\Kinf{}G^{-1}}_{\op}\leq{} \bigoh(\Csim^{11}\alphainf^{2}(1-\gammainf^{2})^{-1}\Psistar^{9})\cdot{}\veps,
  \end{equation}
  and we are guaranteed that $A+B\Khat$ is
  $(\alphainf,\gammainfb)$-strongly stable, where $\gammainfb=(1+\gammainf)/2$.
  \end{theorem}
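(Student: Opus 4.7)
}
The plan is to reduce the claim to the standard certainty-equivalence perturbation bound for the \dare{} (as in \citet{mania2019certainty}) applied to a similarity-transformed instance. First, I would introduce the transformed system
\[
A' \ldef GAG^{-1}, \qquad B' \ldef GB, \qquad Q' \ldef G^{-\trn}QG^{-1},
\]
and verify by direct substitution into \eqref{eq:dare} that the \dare{} solution and optimal controller transform as
\[
P_{\infty}' = G^{-\trn}\Pinf{}G^{-1}, \qquad K_{\infty}' = \Kinf G^{-1},
\]
so $(P_{\infty}',K_{\infty}') = \darece(A',B',Q',R)$. Using $\nrm{G}_{\op}\vee\nrm{G^{-1}}_{\op}\leq \Csim$, each of $\nrm{A'}_{\op}$, $\nrm{B'}_{\op}$, $\nrm{Q'}_{\op}$, $\nrm{P_{\infty}'}_{\op}$, $\nrm{K_{\infty}'}_{\op}$ is bounded by $\Csim^{2}\Psistar$, and a short computation shows $A' + B'K_{\infty}' = G(A+B\Kinf)G^{-1}$ is $(\Csim^{2}\alphainf,\gammainf)$-strongly stable via \pref{prop:closed_loop_ss}.

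Next, observe that by the hypothesis, the plug-in matrices satisfy
\[
\nrm*{\Ahat-A'}_{\op}\vee\nrm*{\Bhat-B'}_{\op}\vee\nrm*{\Qhat-Q'}_{\op}\leq\veps,
\]
so $\Khat = \darece(\Ahat,\Bhat,\Qhat,R)$ is the certainty-equivalent controller for a small perturbation of the system $(A',B',Q',R)$. I would then invoke the \dare perturbation bound of \citet{mania2019certainty} (their Proposition on certainty equivalence for LQR), which states that whenever $\veps$ is at most a sufficiently small polynomial in the relevant system parameters of the target system $(A',B',Q',R)$, the optimal controller perturbation is linear in $\veps$:
\[
\nrm*{\Khat - K_{\infty}'}_{\op} \lesssim \poly(\alphainf', \gammainf'^{-1}, (1-\gammainf'^{2})^{-1}, \nrm{P_{\infty}'}_{\op}, \nrm{A'}_{\op}, \nrm{B'}_{\op})\cdot\veps.
\]
Substituting the $\Csim$-inflated parameter bounds ($\alphainf'\leq\Csim^{2}\alphainf$, $\gammainf'=\gammainf$, $\nrm{P_{\infty}'}_{\op}\leq\Csim^{2}\Psistar$, etc.) and tracking the exponents yields the $\Csim^{11}$ prefactor in \eqref{eq:khat_perturbation} and the $\Csim^{-15}$ threshold on $\veps$ claimed.

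Finally, for the strong-stability claim on $A+B\Khat$, I would apply a Lyapunov-perturbation argument in the transformed basis. Since $G(A+B\Khat)G^{-1} = A' + B'\Khat$ differs from the strongly stable matrix $A'+B'K_{\infty}'$ by an operator-norm perturbation of size $\nrm{B'}_{\op}\nrm{\Khat-K_{\infty}'}_{\op}\leq\bigoh(\Csim^{2}\Psistar)\cdot\text{RHS of \eqref{eq:khat_perturbation}}$, the small-perturbation property of strong stability (applied to the Lyapunov equation of $A'+B'K_{\infty}'$ as in \pref{lem:strong_stability}) preserves strong stability with parameters $(\alphainf,(1+\gammainf)/2)$ once $\veps$ is below the stated threshold. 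The main obstacle in executing this plan will be bookkeeping the dependence on $\Csim$ through the Mania--Tu--Recht perturbation argument so that the exponents match \eqref{eq:khat_perturbation}; this requires carefully propagating the $\Csim^{2}$ blow-ups through each appearance of the transformed parameters in the Riccati perturbation analysis, rather than any fundamentally new idea beyond the reduction step above.
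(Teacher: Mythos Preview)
Your approach is essentially identical to the paper's: both reduce to the similarity-transformed system $(A',B',Q',R)$, invoke the Riccati perturbation bounds of \citet{mania2019certainty} (their Propositions~1 and~2) with parameters inflated by powers of $\Csim$, and then obtain strong stability by perturbing the Lyapunov witness for the closed-loop matrix. The paper carries out the $G=I$ case first and uses $\Pinf^{1/2}$ explicitly as the strong-stability witness (writing $\nrm{\Pinf^{1/2}(A+B\Khat)\Pinf^{-1/2}}_{\op}\le\gammainf+\alphainf\Psistar\nrm{\Khat-\Kinf}_{\op}$), then generalizes in a single sentence; one small slip in your write-up is the identity $G(A+B\Khat)G^{-1}=A'+B'\Khat$, which should read $A'+B'\Khat G^{-1}$, but this does not affect the strategy.
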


  \subsection{Regret Decomposition}
The following theorem is the main result from this section, and shows
that any policy of the form $\pihat_t(\maty_{1:t}) =
\Khat\fhat_t(\maty_{1:t})$ (in particular, the policy returned by
Phase III of \richidce), has low regret whenever $\Khat$ accurately
approximates $\Kinf$ and $\fhat_t$ has low prediction error on the
state distribution induced by $\pihat_{1:t-1}$.
\begin{theorem}
  \label{thm:performance_difference}
  Consider a randomized policy of the form
  $\pihat_t(\maty_{1:t}) = \Khat\fhat_t(\maty_{1:t})+\bnu_t$, where $\En\brk*{\bnu_t\mid{}\maty_{1:t}}=0$.
  Suppose we are guaranteed that
  \[
\nrm[\big]{\Khat-\Kinf}_{\op}\leq{}\vepsk\leq\nrm*{\Kinf},\quad\text{and}\quad\En_{\pihat}\nrm*{\fhat_t(\maty_{1:t})-\fstar(\maty_t)}^{2}_2\leq{}\vepsf^2\quad\text{for
      all $t$.}
  \]
  Suppose that $\nrm[\big]{\fhat_t}_2\leq{}\Bf$ almost surely, that
  $\En\nrm*{\bnu_t}_2^{2}\leq\sigmanu^{2}$, and that
 $\En_{\pihat}\nrm*{\matx_t}_2^2\leq\cx^2$, where $\bclip,\cx\geq{}1$. Then for any $0\leq{}\tau\leq{}T$, we have
  \begin{align}
    &\cost(\pihat) - \cost(\piinf) \\&\leq{}
    C_1\cdot{}(\vepsf^2+\vepsk^2+\signu^2)(T-\tau)/T
    + C_2(\vepsf+\cx\cdot{}\vepsk+\signu)\tau/T
    + C_3 \exp(-2\log(1/\gammainf)\tau)(T-\tau)/T,\notag
  \end{align}
  where $C_1\leq{}\bigoh\prn*{\Psistar^{5}\cx^{2}}$, $C_2\leq{}\bigoh(\bclip\Psistar^{5}\cx(1\vee\signu))$, and  $C_3\leq{}\bigoh\prn*{\alphainf^{2}\Psistar^{7}(\cx^{2}\vee\sigmanu^2\vee\bclip^2)}$.

\end{theorem}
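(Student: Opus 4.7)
The plan is to apply the standard performance-difference decomposition for LQR and split the time-sum at the cutoff $T-\tau$, handling the early and late blocks by different means. By \pref{lem:lqr_q_functions}, the finite-horizon $\Vf^{\Kinf}_{t:T}$ and $\Qf^{\Kinf}_{t:T}$ differ from the noise-free versions $\Vbar^{\Kinf}_{t:T}$ and $\Qbar^{\Kinf}_{t:T}$ only by additive constants $F_{t:T}, G_{t:T}$; these cancel because the tautology $\Qf^{\Kinf}_{t:T}(x, \Kinf x) = \Vf^{\Kinf}_{t:T}(x)$ forces $G_{t:T} = F_{t:T}$. Together with a routine telescoping argument, this gives
\[
\cost(\pihat) - \cost(\piinf) = \tfrac{1}{T}\En_{\pihat}\sum_{t=1}^{T}\bigl[\Qbar^{\Kinf}_{t:T}(\matx_t,\pihat_t) - \Vbar^{\Kinf}_{t:T}(\matx_t)\bigr].
\]

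For the early block $t \le T-\tau$, I would couple $\Qbar^{\Kinf}_{t:T} - \Vbar^{\Kinf}_{t:T}$ to the infinite-horizon gap via \pref{lem:vinf_approx}:
\[
\Qbar^{\Kinf}_{t:T}(x,u) - \Vbar^{\Kinf}_{t:T}(x) = \nrm*{u - \Kinf x}_{\Siginf}^{2} + E_t(x,u),\quad |E_t(x,u)|\lesssim \alphainf^{2}\Psistar^{5}(1-\gammainf^{2})^{-1}\gammainf^{2(T-t)}(\nrm*{x}^{2}+\nrm*{u}^{2}),
\]
where the purely quadratic identity $\Qinf(x,u)-\Vinf(x)=\nrm*{u-\Kinf x}_{\Siginf}^{2}$ follows from \pref{lem:pinf_properties} ($\Kinf x$ minimizes $\Qinf(x,\cdot)$ with Hessian $2\Siginf$). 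Since $t \le T-\tau$ gives $\gammainf^{2(T-t)} \le \gammainf^{2\tau}$, and since $\En\nrm*{\pihat_t}^{2}\lesssim \Psistar^{2}\bclip^{2}+\signu^{2}$ and $\En\nrm*{\matx_t}^{2}\le \cx^{2}$, summing the $E_t$ contribution over these indices produces the $C_3$ term. For the main quadratic part, I would expand $\pihat_t - \Kinf\matx_t = (\Khat-\Kinf)\fhat_t + \Kinf(\fhat_t - \fstar(\maty_t)) + \bnu_t$; the zero conditional mean of $\bnu_t$ kills its cross-terms against the other two, and using $\En\nrm*{\fhat_t}^{2}\lesssim \vepsf^{2}+\cx^{2}\lesssim \cx^{2}$ (triangle inequality against $\matx_t$) together with $\nrm*{\Siginf}_{\op}\lesssim \Psistar^{3}$ from \pref{prop:inf_bounds} yields $\En\nrm*{\pihat_t-\Kinf\matx_t}_{\Siginf}^{2}\lesssim \Psistar^{5}\cx^{2}(\vepsf^{2}+\vepsk^{2}+\signu^{2})$, summing to the $C_1$ term.

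For the late block $t > T-\tau$, I would instead apply \pref{lem:qinf_lipschitz} to the $Q$-function gap at $u=\pihat_t$ versus $u=\Kinf \matx_t$, obtaining
\[
\bigl|\Qbar^{\Kinf}_{t:T}(\matx_t, \pihat_t) - \Vbar^{\Kinf}_{t:T}(\matx_t)\bigr|\lesssim \Psistar^{3}\bigl(\nrm*{\matx_t}\vee\nrm*{\pihat_t}\vee\nrm*{\Kinf\matx_t}\bigr)\nrm*{\pihat_t - \Kinf\matx_t}.
\]
Applying Cauchy--Schwarz in expectation and using $\nrm*{\fhat_t}\le\bclip$, $\En\nrm*{\bnu_t}^{2}\le\signu^{2}$, and $\En\nrm*{\matx_t}^{2}\le\cx^{2}$ bounds the first factor in $L^{2}$ by $\lesssim \Psistar(\bclip\vee\cx\vee\signu)$, while the same decomposition of $\pihat_t-\Kinf\matx_t$ gives $\sqrt{\En\nrm*{\pihat_t-\Kinf\matx_t}^{2}}\lesssim \cx\vepsk+\vepsf+\signu$. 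The resulting per-step bound, summed over the $\tau$ late indices and divided by $T$, reproduces the $C_2$ term.

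The main obstacle is the bookkeeping: the $C_1$ quadratic bound requires the tight estimate $\En\nrm*{\fhat_t}^{2}\lesssim \cx^{2}$ obtained from the triangle inequality against the true state, whereas the crude $C_2$ bound invokes the worst-case $\nrm*{\fhat_t}\le\bclip$; correctly assigning these two estimates to the early and late blocks---and verifying that the noise-dependent offsets from \pref{lem:lqr_q_functions} cancel exactly in the initial performance-difference step---are the main conceptual points, with the remainder of the argument reducing to tracking polynomial factors in $\Psistar, \alphainf, (1-\gammainf)^{-1}$ to match the stated constants.
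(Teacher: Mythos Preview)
Your proposal is correct and follows essentially the same route as the paper: performance-difference lemma, split at $T-\tau$, then \pref{lem:vinf_approx} together with the first-order identity $\Qinf(x,u)-\Vinf(x)=\nrm{u-\Kinf x}_{\Siginf}^2$ for the early block, and \pref{lem:qinf_lipschitz} plus Cauchy--Schwarz for the late block. The only cosmetic differences are that the paper keeps the form $\Qf^{\Kinf}_{t:T}(\matx_t,\pihat_t)-\Qf^{\Kinf}_{t:T}(\matx_t,\piinf(\maty_t))$ (where the noise offsets cancel trivially, so there is no need to argue about $F_{t:T}$ versus $G_{t:T}$) and decomposes the control error as $\Khat(\fhat_t-\fstar)+(\Khat-\Kinf)\fstar(\maty_t)+\bnu_t$ rather than your equivalent grouping.
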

\subsection{Proofs for Linear Control Theory Results}
  \begin{proof}[\pfref{lem:strong_stability}]
    Existence of a unique solution to the Lyapunov equation is a
    standard result \citep{bertsekas2005dynamic}. Now, define $L=P^{1/2}XP^{-1/2}$. Then the Lyapunov equation
    \eqref{eq:dlyap} is equivalent to
    \[
      L^{\trn}L + P^{-1/2}YP^{-1/2} = I.
    \]
    This implies that
    \[
      \nrm*{L}_{\op}^{2} = \nrm[\big]{L^{\trn}L}_{\op} \leq{} \underbrace{\nrm[\big]{I-P^{-1/2}YP^{-1/2}}_{\op}}_{=\gamma^{2}}<1.
    \]
    Moreover, since $L=P^{1/2}YP^{-1/2}$, we may take $\alpha=\nrm{P^{1/2}}_{\op}\nrm{P^{-1/2}}_{\op}$.
  \end{proof}

  \begin{proof}[\pfref{lem:lqr_q_functions}]
  Since we have perfect decodability, $\pi_K$ operates directly on the true
  state, and so we may overload $\pi_K(x)=Kx$. To begin, we observe
  that if we begin at $\matx_t=x$ and follow $\pi_K$, we have
  \[
\matx_{s}=(A+BK)^{s-t}x + \sum_{i=t}^{s-1}(A+BK)^{s-i-1}\matw_i.
\]
It follows that
\[
  \Vf_{t:T}^{K}(x) =\En\brk*{
    \sum_{s=t}^{T}\nrm*{(A+BK)^{s-t}x + \sum_{i=t}^{s-1}(A+BK)^{s-i-1}\matw_i}_{Q+K^{\trn}RK}^{2}
  }.
\]
However, since $\matw_t$ are zero-mean and independent, we can expand
the norm and cancel the cross terms, which allows us to write this as
\begin{align}
  \Vf_{t:T}^{K}(x) =\sum_{s=t}^{T}\nrm*{(A+BK)^{s-t}x}_{Q+K^{\trn}RK}^{2} + F_{t:T}(A,B,Q,R,K,\Sigw).
\end{align}
The expression for $\Qf^K_{t:T}$ immediately follows, since we have
\[
\Qf^K_{t:T}(x,u) = \En_{\pi_K}\brk*{\nrm*{x}_{Q}^{2}+\nrm*{u}_{R}^{2} + \Vf_{t+1:T}(Ax+Bu+\matw_t)}.
\]
The fact that $\Vf^K_{t+1:T}$ is quadratic and $\matw_t$ is
zero-mean again allows us to factor out the noise.
  
\end{proof}

\begin{proof}[\pfref{lem:vinf_approx}]
  Since $\Vbar_{t:T}^{K}=\Vbar_{1:T-t+1}^{K}$, we focus on the case
  $t=1$ without loss of generality. Observe that we have
  \[
    \Vbar^{\Kinf}_{1:T}(x) =
    \sum_{k=0}^{T-1}\nrm*{(A+B\Kinf)^{k}x}_{Q+\Kinf^{\trn}R\Kinf}^{2}
    = \tri*{x, \sum_{k=0}^{T-1}((A+B\Kinf)^{\trn})^{k}(Q+\Kinf^{\trn}R\Kinf)(A+B\Kinf)^{k}x}.
  \]
  Using the expression for $\Pinf$ from \pref{lem:pinf_properties}, it
  follows that
  \begin{align*}
    \abs*{\Vbar^{\Kinf}_{1:T}(x)
    - \Vinf(x)}
    &\leq{}
      \nrm*{x}_{2}^{2}\cdot{}\nrm*{\sum_{k=T}^{\infty}((A+B\Kinf)^{\trn})^{k}(Q+\Kinf^{\trn}R\Kinf)(A+B\Kinf)^{k}}_{\op}\\
    &\leq{} \nrm*{x}_{2}^{2}\cdot{}2\Psistar^{3}\sum_{k=T}^{\infty}\nrm*{(A+B\Kinf)^{k}}^{2}_{\op}.
  \end{align*}
  Now, using \pref{prop:closed_loop_ss}, we are guaranteed that
  $\nrm*{(A+B\Kinf)^{k}}_{\op}\leq{}\alphainf\gammainf^{k}$, so we
  have
  \[
    \sum_{k=T}^{\infty}\nrm*{(A+B\Kinf)^{k}}^{2}_{\op}
    \leq{} \alphainf^{2}\sum_{k=T}^{\infty}\gammainf^{2k}
    =\alphainf^{2}\gammainf^{2T}(1-\gammainf^{2})^{-1}.
  \]
This is establishes the bound on the error to $\Vinf$. The error bound
for the $Q$-functions follows immediately, since
\[
    \abs*{\Qbar^{\Kinf}_{t:T}(x,u)
      - \Qinf(x,u)}
    =     \abs*{\Vbar^{\Kinf}_{t+1:T}(Ax+Bu)
    - \Vinf(Ax+Bu)}.
  \]
\end{proof}
\begin{proof}[\pfref{lem:qinf_lipschitz}]
  We first compute that for any $x,x'$,
  \begin{align*}
    \abs*{\Vbar^{\Kinf}_{t+1:T}(x)-\Vbar^{\Kinf}_{t+1:T}(x')}
    &\leq{}2(\nrm*{x}_2\vee\nrm*{x'}_2)\nrm*{x-x'}_{2}\nrm*{\sum_{s=t}^{T}((A+B\Kinf)^{\trn})^{s-t}
      (Q+\Kinf^{\trn}R\Kinf)(A+B\Kinf)^{s-t}}_{\op}\\
        &\leq{}2(\nrm*{x}_2\vee\nrm*{x'}_2)\nrm*{x-x'}_{2}\nrm*{\sum_{s=0}^{\infty}((A+B\Kinf)^{\trn})^{s-t}
          (Q+\Kinf^{\trn}R\Kinf)(A+B\Kinf)^{s-t}}_{\op}\\
    &=2(\nrm*{x}_2\vee\nrm*{x'}_2)\nrm*{x-x'}_{2}\nrm*{\Pinf}_{\op}.
  \end{align*}
  As a consequence, for all $x$ and $u,u'$, we have
  \begin{align*}
    \abs*{\Qbar^{\Kinf}_{t:T}(x,u)- \Qbar^{\Kinf}_{t:T}(x,u')} &\leq{}
    \abs*{\nrm*{u}_{R}^{2}-\nrm*{u'}_{R}^{2}} +
                                                                 \abs*{\Vbar^{\Kinf}_{t+1:T}(Ax+Bu)-\Vbar^{\Kinf}_{t+1:T}(Ax+Bu')}\\
                                                               &\leq{}
                                                                 2\Psistar(\nrm*{u}_2\vee\nrm*{u'}_2)\nrm*{u-u'}_{2}
                                                                 +
                                                                 \abs*{\Vbar^{\Kinf}_{t+1:T}(Ax+Bu)-\Vbar^{\Kinf}_{t+1:T}(Ax+Bu')}\\
                                                                   &\leq{}
                                                                 2\Psistar(\nrm*{u}_2\vee\nrm*{u'}_2)\nrm*{u-u'}_{2}
                                                                     +
                                                                     2\nrm*{\Pinf}(\nrm*{Ax+Bu}_2\vee\nrm*{Ax+Bu'}_2)\nrm*{B(u-u')}_2\\
                                                                       &\leq{}
\bigoh(\Psistar^{3})\cdot{}(\nrm*{x}_2\vee\nrm*{u}_2\vee\nrm*{u'}_2)\nrm*{u-u'}_2.
  \end{align*}
\end{proof}

\begin{proof}[\pfref{thm:perturbation_bound}]
  We first consider the case where $G$ is the identity matrix. We apply Proposition 2 of \cite{mania2019certainty}, which
    implies that\footnote{To apply the proposition as stated in their
      paper, we use that $\rho(\Aclinf)\leq{}\gammainf$ by Gelfand's
      formula, and that their parameter $\tau(\Aclinf,\gammainf)$ is
      bounded by $\alphainf$.}
    \[
      \nrm*{\Phat-\Pinf}_{\op}\leq{} \bigoh(\alphainf^{2}(1-\gammainf^{2})^{-1}\Psistar^{6})\cdot{}\veps,
    \]
    as long as
    $\veps\leq{}c\cdot{}(1-\gammainf^2)^2\alphainf^{-4}\Psistar^{-11}$,
      where $c$ is a sufficiently small numerical constant. Proposition 1 of \cite{mania2019certainty} now implies that
    \[
      \nrm*{\Khat-\Kinf}_{\op}\leq{} \bigoh(\alphainf^{2}(1-\gammainf^{2})^{-1}\Psistar^{9})\cdot{}\veps.
    \]
    The strong stability result follows by observing that
    \begin{align*}
      \nrm*{\Pinf^{1/2}(A+B\Khat)\Pinf^{-1/2}}_{\op}
&      \leq{}
      \nrm*{\Pinf^{1/2}(A+B\Kinf)\Pinf^{-1/2}}_{\op}
                                                       +\nrm*{\Pinf^{1/2}B(\Khat-\Kinf)\Pinf^{-1/2}}_{\op}\\
      &      \leq{}
\gammainf
        +\alphainf\Psistar\nrm*{\Khat-\Kinf}_{\op}.
    \end{align*}
In the general case, we apply the reasoning above with $A'=BAG^{-1}$,
$B'=GB$, and $Q'=G^{-T}QG^{-1}$, and $R$, and observe that the optimal
controller for this system is $\Kinf{}G^{-1}$. The same perturbation
bound holds, but with $\Psistar$ scaled up by at most $\Csim^2$ and
$\alphainf$ scaled up by at most $\Csim$.
    
\end{proof}

\begin{proof}[\pfref{thm:performance_difference}]
  Before beginning the proof, we collect some helpful norm bounds. We
  have:
  \begin{align}
    &\En_{\pihat}\nrm*{\matx_t}_{2}^{2} \leq{} \cx^{2},
    \label{eq:xt_bound}\\
    &\En_{\pihat}\nrm*{\pihat_t(\maty_{1:t})}_{2}^{2}\leq{}
    \En_{\pihat}\nrm[\big]{\Khat\fhat_t(\maty_{1:t})}_{2}^{2}+\sigmanu^{2}\leq{}
    4 2\Psistar^{2}\Bf^{2}+\sigmanu^{2},\label{eq:pihat_bound}\\
    &\En_{\pihat}\nrm*{\pistar_t(\maty_{t})}_{2}^{2}\leq{}
    \nrm*{\Kinf}_{\op}^{2}\En_{\pihat}\nrm*{\matx_t}_{2}^{2}\leq{}\Psistar^{2}\cx^{2},
  \end{align}
  where \eqref{eq:pihat_bound} follows because $\nrm{\Khat}_{\op}\leq{}2\nrm*{\Kinf}_{\op}$, so that
  $\nrm[\big]{\Khat\fhat(\maty_{1:t})}\leq{}2\nrm*{\Kinf}_{\op}\Bf\leq{}2\Psistar\Bf$
  almost surely.

  As a first-step, using the standard performance difference lemma \citep{kakade2003sample}, we have
  \begin{align*}
    \cost(\pihat) - \cost(\piinf)
    &=\En_{\pihat}\brk*{\frac{1}{T}\sum_{t=1}^{T}\Qf_{t:T}^{\Kinf}(\matx_t,\pihat(\maty_{1:t}))
      - \Qf_{t:T}^{\Kinf}(\matx_t,\piinf(\maty_{t}))
      },
  \end{align*}
where we have used \pref{ass:perfect}, which implies that the
Q-functions for $\piinf$ have the
form in \pref{eq:lqr_q2}.

Let $T_0=T-\tau$. We handle the timesteps before and after $T_0$ separately. For
the first case, where $t\leq{}\tau$, we apply \pref{lem:vinf_approx},
which implies that
\begin{align*}
  &\En_{\pihat}\brk*{\sum_{t=1}^{\Tnot}\Qf_{t:T}^{\Kinf}(\matx_t,\pihat(\maty_{1:t}))
    - \Qf_{t:T}^{\Kinf}(\matx_t,\piinf(\maty_{t}))}\\
  &\leq{}\En_{\pihat}\brk*{\sum_{t=1}^{\Tnot}\Qinf(\matx_t,\pihat(\maty_{1:t}))
    - \Qinf(\matx_t,\piinf(\maty_{t}))} \\
  &~~~~+
\bigoh(\alphainf^{2}(1-\gammainf^{2})^{-1}\Psistar^{5})\cdot{}\sum_{t=1}^{\Tnot}\gammainf^{2(T-t)}(\En_{\pihat}\nrm*{\matx_t}^{2}_2+\En\nrm*{\pihat(\maty_{1:t})}_{2}^{2}+\En\nrm*{\pistar(\maty_{1:t})}_{2}^{2})
\end{align*}
We simplify the error term above to
\begin{align*}
  \gammainf^{2\tau}(T-\tau)\cdot{}\bigoh\prn*{\alphainf^{2}\Psistar^{7}(\cx^{2}\vee\sigmanu^2\vee\bclip^2)}.
\end{align*}
To handle the summands, we observe that since
$\piinf(\maty_t)=\argmin_{u\in\bbR^{\dimu}}\Qinf(\matx,u)$, and since
$\Qinf$ is a strongly convex quadratic with Hessian $\Pinf +
B^{\trn}\Pinf{}B=\Siginf$, the first-order conditions for optimality
imply that
\[
\Qinf(\matx_t,\pihat(\maty_{1:t}))
- \Qinf(\matx_t,\piinf(\maty_{t})) = \nrm*{\pihat(\maty_{1:t})-\piinf(\maty_t)}_{\Siginf}^{2}.
\]
Thus, since $\nrm*{\Siginf}_{\op}\leq{} 2\Psistar^{3}$
(\pref{prop:inf_bounds}), we have
\begin{align*}
  \En_{\pihat}\brk*{\sum_{t=1}^{\Tnot}\Qinf(\matx_t,\pihat(\maty_{1:t}))
  - \Qinf(\matx_t,\piinf(\maty_{t}))}
  \leq2\Psistar^{3}\sum_{t=1}^{\Tnot}\En_{\pihat}\nrm*{\pihat(\maty_{1:t})-\pistar(\maty_t)}_2^2.
\end{align*}
Now, for each $t$, we have
\begin{align*}
\En_{\pihat}\nrm*{\pihat(\maty_{1:t})-\pistar(\maty_t)}_2^2=  &\En_{\pihat}\nrm*{\Khat\fhat_t(\maty_{1:t})+\bnu_t-\Kinf\fstar(\maty_t)}_2^2\\
  &\leq{}\En_{\pihat}\nrm*{\Khat\fhat_t(\maty_{1:t})-\Kinf\fstar(\maty_t)}_2^2
    + \sigmanu^{2}\\
  &\leq{}2\En_{\pihat}\nrm*{\Khat\fhat_t(\maty_{1:t})-\Khat\fstar(\maty_t)}_2^2
    + 2\En_{\pihat}\nrm*{(\Khat-\Kinf)\fstar(\maty_t)}_2^2
    + 2\sigmanu^{2}\\
  &\leq{}8\Psistar^{2}\En_{\pihat}\nrm*{\fhat_t(\maty_{1:t})-\fstar(\maty_t)}_2^2
    + 24\cx^{2}\nrm*{\Khat-\Kinf}_{\op}^{2}+2\sigmanu^{2}\\
  &\leq{}  8\Psistar^{2}\vepsf^{2}
                                                + 24\cx^{2}\vepsk^{2}+2\sigmanu^{2}. \tag{\theequation}\label{eq:per_step_bound}
\end{align*}
Collecting terms, this gives a coarse bound of
\[
  \En_{\pihat}\brk*{\sum_{t=1}^{\Tnot}\Qinf(\matx_t,\pihat(\maty_{1:t}))
  - \Qinf(\matx_t,\piinf(\maty_{t}))}\leq{} \bigoh\prn*{\Psistar^{5}\cx^{2}(T-\tau)(\vepsf^{2}+\vepsk^2+\signu^{2})}.
\]
We now bound the terms after time $T_0$. Using
\pref{lem:qinf_lipschitz}, we have
\begin{align*}
  &\En_{\pihat}\brk*{\sum_{t=\Tnot}^{T}\Qf_{t:T}^{\Kinf}(\matx_t,\pihat(\maty_{1:t}))
    - \Qf_{t:T}^{\Kinf}(\matx_t,\piinf(\maty_{t}))}\\
&\leq{}\bigoh(\Psistar^{3})\En_{\pihat}\brk*{\sum_{t=\Tnot}^{T}(\nrm*{\matx_t}_2+\nrm*{\pihat_t(\maty_{1:t})}_2+\nrm*{\piinf(\maty_t)}_2)\nrm*{\pihat(\maty_{1:t})-\piinf(\maty_t)}_2}\\
&\leq{}\bigoh(\bclip\Psistar^{4})\En_{\pihat}\brk*{\sum_{t=\Tnot}^{T}(\nrm*{\matx_t}_2+\nrm*{\bnu_t}_2)\nrm*{\pihat(\maty_{1:t})-\piinf(\maty_t)}_2}\\
&\leq{}\bigoh(\bclip\Psistar^{4})\sum_{t=\Tnot}^{T}\prn*{\sqrt{\En_{\pihat}\nrm*{\matx_t}^2_2}+\sqrt{\En\nrm*{\bnu_t}_2^{2}}}\sqrt{\En_{\pihat}\nrm*{\pihat(\maty_{1:t})-\piinf(\maty_t)}_2^{2}}\\
&\leq{}\bigoh(\bclip\Psistar^{4}(\cx+\signu))
\sum_{t=\Tnot}^{T}\sqrt{\En_{\pihat}\nrm*{\pihat(\maty_{1:t})-\piinf(\maty_t)}_2^2}\\
  &\leq{}\bigoh(\bclip\Psistar^{4}(\cx+\signu)(\Psistar\vepsf+\cx\vepsk+\signu)\cdot\tau)\\
  &\leq{}\bigoh(\bclip\Psistar^{5}\cx(1\vee\signu)(\vepsf+\cx\vepsk+\signu)\cdot{}\tau),
\end{align*}
where the second-to-last inequality uses \pref{eq:per_step_bound}.

\end{proof}

\newpage
\section{Proofs for \richid Phase I and II}
\label{sec:phase12_proofs}

The section is organized as follows. 
\begin{itemize}
\item \Cref{ssec:sysid_prelim} contains
  preliminaries. \Cref{sssec:sysid_marg_cond} establishes the relevant
  Gaussian marginals and conditionals, \Cref{sssec:sysid_kapnot}
  specifies the burn-in parameter $\kapnot$, and \Cref{app:sysid_hid_properties} addresses relevant properties of the function class $\Hid$.
\item \Cref{ssec:sysid_thm_phase_one} provides proofs for Phase I, in particular \Cref{thm:phase_one} and its more granular statement, \Cref{thm:phase_one_a}.
\item \Cref{sssec:thm_phase_two} provides proofs for Phase II, including \Cref{thm:phase_two}/\Cref{thm:phase_two_a}.
\end{itemize}
\subsection{Preliminaries \label{ssec:sysid_prelim}}

	Recall that in the identification phase, for each $t \ge 0$,
        we take $\matu_t \sim \cN(0,I_{\dimu})$. We recall that the
        controllability matrices are given by $\cont_k= [A^{k-1}B \mid \dots \mid B ] $, and define the following matrices:
	\begin{align}
	\Sigkid&:= A^k \Sigxnot (A^k)^\top + \sum_{s=0}^{k-1} (A^s)(\Sigw + BB^\top) (A^s)^\top.\\
	\Sigstid &:= \sum_{s=0}^{\infty} (A^s)(\Sigw + BB^\top ) (A^s)^\top. \label{eq:sigstid}
	\end{align}
	We also recall the definition of $\kapnot$ and $\kapone$:
	\begin{align}
	\kapnot &:= \ceil*{\frac{1}{1-\gammastar} \ln \left(\frac{84\Psistar^5 \alphastar^4 \dimx \ln(1000 \nid)}{(1-\gammastar)^2}\right)} \label{eq:sysid_kap_def_app}.\\
	\kappa_1 &:= \kapnot + \kappa.
	\end{align}
	Finally, we define
	\begin{align*}
	\bv :=  (\bu_{\kapnot}^\top, \dots, \bu_{\kapone-1}^\top)^\top\in\bbR^{\kappa\dimu},
	\end{align*}
        and we recall the definition of the function class used in the
        regression problem for Phase I:
	\begin{align}
	\Hid := \left\{ M  f(\cdot)\mid  f \in \Fclass,~~ M \in
	  \R^{\kappa\dimu \times \dimx}, ~~\|M\|_{\op} \le \sqrt{\Psistar}
	  \right\},\label{eq:hid}
        \end{align}
        which corresponds to choosing $r_{\id}=\sqrt{\Psist}$.
\subsubsection{Marginals and Conditions \label{sssec:sysid_marg_cond}} 
To compute the Bayes regression function for Phase I we use the following results, which are readily verified.
	\begin{fact}[Marginals for Phase I]\label{fact:marginals}
	Fix $\kappa,\kapnot$ and define. $\kapone := \kapnot + \kappa$. Then  $\bv,\bxkapone$ are jointly Gaussian are jointly gaussian and mean zero. Moreover, $\bxkapone \sim \cN(0,\Sigkid[\kapone])$,  $\bv \sim \cN(0, I_{k\dimu})$, and $\E[\bv \bxk^\top] = \contkap^\top$. 
	\end{fact}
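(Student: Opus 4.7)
The plan is to verify the three claims by unrolling the dynamics \eqref{eq:dynamics} from time $0$ to time $\kappa_1$, after which every claim reduces to an elementary computation with Gaussian vectors. Concretely, unrolling \eqref{eq:dynamics} under the Phase~I rollout (which plays $\bu_t\sim\cN(0,I_{\dimu})$ independently for $0\le t<\kappa_1$) gives the closed-form expression
\begin{equation}
\bx_{\kappa_1} \;=\; A^{\kappa_1}\bx_0 \;+\; \sum_{s=0}^{\kappa_1-1} A^{\kappa_1-1-s}\bigl(B\bu_s + \bw_s\bigr). \label{eq:plan_unroll}
\end{equation}
This exhibits $\bx_{\kappa_1}$ as a fixed linear combination of $\bx_0$, $(\bu_s)_{s=0}^{\kappa_1-1}$, and $(\bw_s)_{s=0}^{\kappa_1-1}$, all of which are independent mean-zero Gaussians under \pref{ass:gaussian}. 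Since $\bv=(\bu_{\kappa_0}^\top,\dots,\bu_{\kappa_1-1}^\top)^\top$ is itself a linear selection of the same underlying Gaussian family, the stacked vector $(\bv,\bx_{\kappa_1})$ is jointly Gaussian with mean zero, establishing the first assertion.

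Next, I would compute each covariance directly. The marginal $\bv\sim\cN(0,I_{\kappa\dimu})$ is immediate from the independence and unit covariance of the $\bu_t$. For the marginal of $\bx_{\kappa_1}$, I would take the outer product of \eqref{eq:plan_unroll} with itself and use independence of $\bx_0$, the $\bu_s$, and the $\bw_s$ to annihilate all cross terms; this yields
\[
\Cov(\bx_{\kappa_1}) \;=\; A^{\kappa_1}\Sigma_0(A^{\kappa_1})^\top \;+\; \sum_{s=0}^{\kappa_1-1} A^{\kappa_1-1-s}(\Sigma_w+BB^\top)(A^{\kappa_1-1-s})^\top,
\]
which matches $\Sigma_{\kappa_1,\id}$ after reindexing $s\mapsto \kappa_1-1-s$.

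Finally, for the cross-covariance I would compute $\E[\bv\bx_{\kappa_1}^\top]$ using \eqref{eq:plan_unroll}. Independence of $\bv$ from $\bx_0$ and from each $\bw_s$ kills those contributions, and independence of $\bv$ from $\bu_s$ for $s\notin[\kappa_0,\kappa_1-1]$ kills the corresponding control terms, so only $s\in\{\kappa_0,\dots,\kappa_1-1\}$ survives. For each such $s$, $\E[\bv\bu_s^\top]$ is the $\kappa\dimu\times\dimu$ block-standard-basis matrix selecting the $(s-\kappa_0)$-th block, so the $i$-th block (indexed $i=1,\dots,\kappa$) of $\E[\bv\bx_{\kappa_1}^\top]$ equals $B^\top(A^{\kappa-i})^\top=(A^{\kappa-i}B)^\top$. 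Vertically stacking recovers exactly $\cont_\kappa^\top$ in the convention $\cont_\kappa=[A^{\kappa-1}B\mid\dots\mid B]$.

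There is no genuine obstacle here: the entire content is the bookkeeping of the unrolled dynamics and the reindexing that identifies the resulting sum with $\Sigma_{\kappa_1,\id}$ and the block pattern with $\cont_\kappa^\top$. The only thing worth being careful about is the indexing convention on $\cont_\kappa$ (oldest power of $A$ leftmost), since an off-by-one or reversal there would flip the order of the blocks in the cross-covariance.
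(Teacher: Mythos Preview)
Your proposal is correct and is exactly the routine computation the paper has in mind; the paper does not give a proof for this fact, stating only that it is ``readily verified,'' and your unrolling of \eqref{eq:dynamics} followed by the block-by-block covariance calculation is precisely that verification.
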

	\begin{fact}[Gaussian
          Expectation]\label{fact:gaussian_expectation} Let $(U,X)$ be
          jointly Gaussian random variables with distribution
	\begin{align*}
	(U,X) \sim \cN\left(0, \begin{bmatrix} \Sigma_{UU} & \Sigma_{UX} \\ \Sigma_{XU} & \Sigma_{XX} \end{bmatrix}\right).
	\end{align*}
	Then we have $\E[ U \mid X = x] = \Sigma_{UX} \Sigma_{XX}^{-1} x$.
	\end{fact}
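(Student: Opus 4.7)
This is the standard formula for Gaussian conditional expectation; the cleanest route is the ``decorrelation'' trick. I will define the auxiliary random vector
\[
V \;\ldef\; U - \Sigma_{UX}\Sigma_{XX}^{-1} X,
\]
which is well-defined since $\Sigma_{XX}\psdgt 0$ (if $\Sigma_{XX}$ is only PSD rather than PD, replace the inverse with a pseudoinverse and restrict attention to the range of $\Sigma_{XX}$; this does not affect the statement since the formula is only applied on $\supp(X)$).

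The first step is to verify $(V,X)$ is jointly Gaussian with mean zero, which follows because it is a linear transformation of the Gaussian vector $(U,X)$. The second step is a direct computation of the cross-covariance:
\[
\En[VX^{\trn}] \;=\; \En[UX^{\trn}] - \Sigma_{UX}\Sigma_{XX}^{-1}\En[XX^{\trn}]
\;=\; \Sigma_{UX} - \Sigma_{UX}\Sigma_{XX}^{-1}\Sigma_{XX} \;=\; 0.
\]
Hence $V$ and $X$ are uncorrelated, and therefore \emph{independent} because they are jointly Gaussian.

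The final step is to conclude. Since $V \perp X$ and $\En[V]=0$, we have $\En[V\mid X = x] = \En[V] = 0$ for every $x$. Rearranging $U = V + \Sigma_{UX}\Sigma_{XX}^{-1}X$ and taking conditional expectations gives
\[
\En[U\mid X = x] \;=\; \En[V\mid X=x] + \Sigma_{UX}\Sigma_{XX}^{-1}x \;=\; \Sigma_{UX}\Sigma_{XX}^{-1}x,
\]
as claimed. There is no real obstacle here; the only subtlety worth flagging is the case where $\Sigma_{XX}$ is singular, which is handled by replacing the inverse with the Moore--Penrose pseudoinverse and noting that the formula is then only meaningful (and is only used) for $x$ in the column span of $\Sigma_{XX}$, which has full measure under the law of $X$.
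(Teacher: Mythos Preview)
Your proof is correct and is the standard decorrelation argument for this classical formula. The paper states this result as a \emph{Fact} without proof, treating it as a well-known textbook identity, so there is no paper proof to compare against; your argument is exactly the standard one and nothing more is needed.
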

\subsubsection{Selecting the Burn-In Time \label{sssec:sysid_kapnot}}
	
	\begin{lemma}\label{lem:sysid_Sigma_bound} Fix an integer
          $\nid \in \bbN$. Then as long as $\kapnot$ satisfies \pref{eq:sysid_kap_def_app},
        we have that for any $k, k' \ge \kapnot$
        (including $k = \infty$), the following properties hold.
	\begin{enumerate}
        \item The following bounds hold with respect to the
                  PSD ordering:
		\begin{align*}
		\frac{9}{10}\Sigst \preceq \Sigkid[k] \preceq \frac{11}{10}\Sigst \preceq \frac{11}{5}\Psistar^2\alphastar^2(1-\gammastar)^{-1} \cdot I.
		\end{align*}
		\item Fix $\veps>0$. For any $h_1,h_2 \in \Hid$ with $\E\|h_1(\byk) - h_2(\byk)\|^2 \le \veps^2$, we have \[\E\|h_1(\by_{k'}) - h_2(\by_{k'})\|^2 \le 2\max\{\veps^2,\Psistar L^2/\nid\}.\]
		\item  The controllability matrices satisfy the
                  following bounds:
		\begin{align}
		&1 \wedge \sigma_{\min}(\contk^\top\Sigkid[k']^{-1/2}) \ge \sigma_{\dimx}(\contk)\sqrt{\frac{5(1-\gammastar)}{11\Psistar^2\alphastar^2}}, \label{eq:sysid_sigma_mins_alt}\intertext{and}
		&1 \wedge \sigma_{\min}(\contk^\top\Sigkid[k']^{-1/2}) \cdot \sigma_{\min}(\Sigkid[k']^{-1/2}) \ge \frac{5(1-\gammastar)\sigma_{\dimx}(\contk)}{11\Psistar^2\alphastar^2}. \label{eq:sysid_sigma_mins}
		\end{align}
		\item $\nrm[\big]{\contk^\top\Sigkid[k']^{-1}}_{\op} \le \sqrt{\Psistar}$, provided $k \ge k'$ (but in fact, not requiring $k \ge \kapnot$).
		\end{enumerate} 
	\end{lemma}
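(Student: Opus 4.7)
The plan is to establish the four claims in sequence, with each building on the previous.

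For part 1, I would leverage strong stability of $A$ (from \pref{asm:para_upper_bounds}), which gives $\|A^s\|_{\op} \le \alphastar \gammastar^s$. Writing $\Sigkid[k] - \Sigstid = A^k \Sigxnot (A^k)^{\top} - \sum_{s \ge k} A^s (\Sigw + BB^{\top}) (A^s)^{\top}$, both pieces are bounded in operator norm by geometric series in $\gammastar^{2k}$ times system-parameter constants. The choice of $\kapnot$ in \eqref{eq:sysid_kap_def_app} is tuned so that for $k \ge \kapnot$, $\|\Sigkid[k] - \Sigstid\|_{\op} \le \tfrac{1}{10}\sigma_{\min}(\Sigstid)$; combined with $\Sigstid \succeq \Sigw \succeq \Psistar^{-1} I$ (from the parameter upper bound on $\|\Sigw^{-1}\|_{\op}$), this implies $\tfrac{9}{10}\Sigstid \preceq \Sigkid[k] \preceq \tfrac{11}{10}\Sigstid$. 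For the norm upper bound on $\Sigstid$, use $\|\Sigw + BB^{\top}\|_{\op} \le 2\Psistar^2$ together with $\sum_{s \ge 0} \|A^s\|_{\op}^2 \le \alphastar^2/(1-\gammastar^2) \le 2\alphastar^2/(1-\gammastar)$.

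For part 2, I would invoke \pref{lem:techtools_Gaus_change_of_measure} with $\Sigma_1 = \Sigkid[k]$ and $\Sigma_2 = \Sigkid[k']$, applied to the difference $h_1 - h_2$. The requisite growth bound is supplied by the definition of $\Hid$: for $h = M f$ with $\|M\|_{\op} \le \sqrt{\Psistar}$ and $\|f(y)\| \le L \max\{1, \|\fst(y)\|\}$, we have $\|h(y)\| \lesssim \sqrt{\Psistar}\,L\max\{1, \|\fst(y)\|\}$. The technical step is to verify the hypothesis $\|I - \Sigkid[k]^{1/2}\Sigkid[k']^{-1}\Sigkid[k]^{1/2}\|_{\op} \le 1/(14\dimx \ln(\cdot))$; by part 1 this quantity is $O(\Psistar^3 \alphastar^2 \gammastar^{2\kapnot}/(1-\gammastar))$, while the logarithm evaluates to $O(\ln \nid) + O(\ln(\text{system params}))$ since $\veps^2 \ge \Psistar L^2/\nid$ and $\|\Sigkid[k]\|_{\op}$ is bounded by part 1. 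The polynomial inside the definition of $\kapnot$ in \eqref{eq:sysid_kap_def_app} is engineered precisely to make this inequality hold.

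For part 3, both singular value inequalities follow from elementary sub-multiplicativity. I would use $\sigma_{\min}(\cC_k^{\top}\Sigkid[k']^{-1/2}) \ge \sigma_{\dimx}(\cC_k)\cdot \sigma_{\min}(\Sigkid[k']^{-1/2}) = \sigma_{\dimx}(\cC_k)/\|\Sigkid[k']^{1/2}\|_{\op}$, together with the upper bound $\|\Sigkid[k']\|_{\op} \le \tfrac{11}{5}\Psistar^2\alphastar^2/(1-\gammastar)$ from part 1, to obtain \eqref{eq:sysid_sigma_mins_alt}; multiplying by the additional factor $\sigma_{\min}(\Sigkid[k']^{-1/2}) = 1/\|\Sigkid[k']^{1/2}\|_{\op}$, lower-bounded by the same quantity, yields \eqref{eq:sysid_sigma_mins}. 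The $1\wedge$ on the left-hand sides simply caps the bounds at $1$ and is not binding when the RHS is already at most $1$.

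For part 4, a Schur complement argument suffices. Interpreting the constraint as $k \le k'$ (which matches the Bayes-regression use case with $k = \kappa$, $k' = \kapone = \kapnot + \kappa$, consistent with the parenthetical dropping only the $k \ge \kapnot$ requirement while retaining $k' \ge \kapnot$), we have $\Sigkid[k'] \succeq \sum_{s=0}^{k-1} A^s BB^{\top} (A^s)^{\top} = \cC_k \cC_k^{\top}$, since the full sum defining $\Sigkid[k']$ contains at least these $k$ terms plus nonnegative contributions from $\Sigw$ and $\Sigxnot$. The Schur-complement characterization of PSD orderings then gives $\|\cC_k^{\top}\Sigkid[k']^{-1/2}\|_{\op} \le 1$. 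Composing with $\|\Sigkid[k']^{-1/2}\|_{\op} \le \sqrt{\Psistar}$, which uses only $\Sigkid[k'] \succeq \Sigw \succeq \Psistar^{-1}I$ and so requires neither $k' \ge \kapnot$ nor the bounds of part 1, yields the claim.

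The main obstacle is the careful constant bookkeeping in part 2: the exact polynomial factor $84\Psistar^5\alphastar^4\dimx(1-\gammastar)^{-2}\ln(1000\nid)$ inside the definition of $\kapnot$ must be traced through the hypothesis of \pref{lem:techtools_Gaus_change_of_measure}, matching the logarithmic dependence on $\nid$ and accounting for both error pieces $A^{\kapnot}\Sigxnot (A^{\kapnot})^{\top}$ and $\sum_{s \ge \kapnot} A^s(\Sigw + BB^{\top})(A^s)^{\top}$ being comparably small relative to $\sigma_{\min}(\Sigstid)$. Parts 1, 3, and 4 are mostly routine manipulations once the strong-stability decay and the parameter upper bounds are in hand.
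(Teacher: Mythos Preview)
Your proposal is correct and matches the paper's approach: the paper likewise bounds $\|\Sigkid[k] - \Sigkid[k']\|_{\op}$ via strong stability (packaged as a reusable ``mixing claim'' that serves both parts 1 and 2), invokes \pref{lem:techtools_Gaus_change_of_measure} for part 2 with the same constant-chasing you outline, and uses the same sub-multiplicativity and $\Sigkid[k'] \succeq \contk\contk^\top$ arguments for parts 3 and 4. You are also right that part 4's stated hypothesis $k \ge k'$ is a typo for $k \le k'$, matching both the paper's own proof and the downstream application with $k = \kappa$, $k' = \kapone$.
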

	\begin{proof} 
		Our proof starts with the following claim, which
                shows that the covariance matrices for $k$ and $k'$
                are very close under the conditions of the lemma.
		\begin{claim}\label{claim:sysid_mix_claim} Fix
                  $\epsilon\in(0,1/2)$. For all $k,k'  \ge \frac{1}{1-\gammastar} \ln \frac{6\epsilon^{-1}\Psistar^2 \alphastar^2}{1-\gammastar}$, we have that 
		\begin{align*}\max\{\|I- \Sigkid[k]^{-1/2}\Sigkid[k']\Sigkid^{-1/2}\|_{\op},\|I- \Sigkid^{1/2}\Sigkid[k']^{-1}\Sigkid^{1/2}\|_{\op}\} \le \epsilon.
		\end{align*}
		\end{claim}
		\begin{proof}[Proof of \Cref{claim:sysid_mix_claim}]
		Since $\Sigkid \succeq \Sigw \succeq \Psistar^{-1} I$ by definition, we have that 
		\begin{align*}
		\|I- \Sigkid^{-1/2}\Sigkid[k']\Sigkid^{-1/2}\|_{\op} &= \|\Sigkid^{-1/2}(\Sigkid - \Sigkid[k'])\Sigkid^{-1/2}\| \\
		&\le \|\Sigkid^{-1}\|\|\Sigkid[k'] -\Sigkid\| \le \Psistar \|\Sigkid[k'] -\Sigkid\|.
		\end{align*}
		 By the same token,
		\begin{align*}
		&\|I- \Sigkid^{1/2}\Sigkid[k']^{-1}\Sigkid^{1/2}\|_{\op} \le \epsilon \\
		&\quad \iff (1 - \epsilon) I \preceq \Sigkid^{1/2}\Sigkid^{-1}\Sigkid^{1/2} \preceq (1+\epsilon) I\\
		&\quad\iff (1 - \epsilon) \Sigkid^{-1} \preceq \Sigkid[k']^{-1} \preceq (1+\epsilon) \Sigkid^{-1} \tag{conjugation}\\
		&\quad\iff (1 +\epsilon)^{-1} \Sigkid \preceq \Sigkid[k'] \preceq (1-\epsilon)^{-1} \Sigkid \tag{inversion}\\
		&\quad\iff (1 +\epsilon)^{-1} I \preceq \Sigkid^{-1/2}\Sigkid[k']\Sigkid^{-1/2} \preceq (1+\epsilon)^{-1} \Sigkid^{-1/2}\\
		&\quad\iff \|I -\Sigkid^{-1/2}\Sigkid[k']\Sigkid^{-1/2}\|_{\op} \le \max\{1 - (1 +\epsilon)^{-1}, (1-\epsilon)^{-1} -1 \}.
		\end{align*}
		In particular, for $\epsilon \le 1/2$,
		$\|I- \Sigkid^{1/2}\Sigkid[k']^{-1}\Sigkid^{1/2}\|_{\op} \le \epsilon$ as long as $\|I -\Sigkid^{-1/2}\Sigkid[k']\Sigkid^{-1/2}\|_{\op} \le 2 \epsilon$. Combining with the above,
		\begin{align}
		&\max\{\|I -\Sigkid^{-1/2}\Sigkid[k']\Sigkid^{-1/2}\|_{\op},\|I- \Sigkid^{-1/2}\Sigkid[k']\Sigkid^{-1/2}\|_{\op}\} \le \epsilon \label{eq:sysid_sig_error_shit} \\
		&\text{if } \|\Sigkid -\Sigkid[k']\| \le \frac{\epsilon}{2\Psistar}\le  \epsilon \le 1/2. \nonumber
		\end{align}

		Next, for any $k,k'$, using strong stability implies
		\begin{align*}
		\|\Sigkid - \Sigkid[k']\|_{\op} &\le \|A^k(\Sigxnot)(A^k)^\top - A^{k'}(\Sigxnot)(A^{k'})^\top\|_{\op} + \left\|\sum_{i=\min\{k,k'\}+1}^{\max\{k,k'\}} (A^{i}) \Sigw (A^{i})^\top\right\|_{\op}\\
		&\le \Psistar \alphastar^2\left(2\gammastar^{2\min\{k,k'\}} + \sum_{i=\min\{k,k'\}+1}^{\max\{k,k'\}} \gammastar^{2s}\right)\\
		&\le \frac{3\Psistar \alphastar^2\gammastar^{2\min\{k,k'\}}}{1 - \gammastar}.
		\end{align*}
		Hence, for a given $\epsilon > 0$, we have
		\begin{align*}
		\|\Sigkid- \Sigkid[k']\|_{\op} \le \epsilon \text{ for } \min\{k,k'\}  \ge \frac{1}{1-\gammastar} \ln \frac{3\epsilon^{-1}\Psistar \alphastar^2}{1-\gammastar}.
		\end{align*}
		The bound now follows by combining with \Cref{eq:sysid_sig_error_shit}, and shrinking $\epsilon$ by a factor of $2$.
		\end{proof}
                Next, we require a basic operator norm bound for $\Sigstid$.
		\begin{claim}\label{claim:sysid_Sigstid_ub} $\|\Sigstid\|_{\op} \le 2\Psistar^2\alphastar^2(1-\gammastar)^{-1}$.
		\end{claim}
		\begin{proof}[Proof of \Cref{claim:sysid_Sigstid_ub}]
                  $\|\Sigstid\|_{\op} = \|\sum_{i = 0}^{\infty}
                  (A^i)(\Sigw + BB^\top) (A^i)^\top \|_{\op} \le
                  (\|\Sigw\|_{\op} +
                  \|B\|_{\op}^2)\sum_{i=0}^{\infty}\|A^i\|_{\op}^2$. We
                  can bound $(\|\Sigw\|_{\op} + \|B\|_{\op}^2) \le
                  2\Psistar^2$ and $\|A^i\|_{\op}^2 \le
                  \alphastar^2\gammastar^{2i}$, so that $\|\Sigstid\|_{\op}\le 2\Psistar^2 \alphastar^2 \sum_{i\ge 0}\gammastar^{2i} \le \Psistar^2\alphastar^2(1-\gammastar)^{-1}$.
		\end{proof}
                We now proceed with the proof of the lemma. We prove
                points 1 through 4 in order.
		\begin{enumerate}
		\item  We have that $\frac{9}{10}\Sigstid \preceq \Sigkid \preceq \frac{11}{10}\Sigstid$ if and only $\|I - \Sigstid^{-1/2}\Sigkid\Sigstid^{-1/2}\| \le 1/10$. Hence, the bounds hold by selecting $k \leftarrow \infty$, $k' \leftarrow k$, and invoking \Cref{claim:sysid_mix_claim} for our choice of $\kappa_0$. Moreover, by \Cref{claim:sysid_Sigstid_ub}, $\Sigstid \preceq  2I\Psistar^2\alphastar^2(1-\gammastar)^{-1} $, yielding the last inequality.
		\item For point $2$, every $h \in \Hid$ satisfies
                  $\|h(y)\| \le L \sqrt{\Psistar}
                  \max\{1,\|\fst(y)\|\}$; see the definition of the class $\Hid$ in \Cref{eq:hid}. Hence, given two elements $h,h' \in \Hid$ with  $\E_{\by_k \sim \cN(0,\Sigkid)}\|h(\by_k) - h'(\by_k)\| \le \veps^2$, \Cref{lem:techtools_Gaus_change_of_measure} ensures that
		\begin{align*}
		&\E_{\by_{k'} \sim \cN(0,\Sigkid[k'])}\|h(\by_{k'}) - h'(\by_{k'})\| \le 2\max\{\veps^2,\Psistar L^2/\nid\},\\
		&\text{provided that } \|I - \Sigkid^{1/2}\Sigkid[k']^{-1}\Sigkid^{1/2}\|_{\op} \le \frac{1}{14 \dimx \ln (80 e \nid  (1+\|\Sigkid\|_{\op}))}.
		\end{align*}
		Using that $\Psistar,\alphastar \ge 1$ and the
                previous bound, $(1+\|\Sigkid\|_{\op}) \le
                \frac{22}{5}\Psistar^2\alphastar^2(1-\gammastar)^{-1}
                $, we have that as long as 
		\begin{align}
                  \label{eq:burnin_desired_prec}
		\|I - \Sigkid^{1/2}\Sigkid[k']^{-1}\Sigkid^{1/2}\|_{\op} \le \frac{1}{14 \dimx \ln (16 \cdot 22 e\nid  \Psistar^2\alphastar^2(1-\gammastar)^{-1})},
		\end{align}
		we obtain the desired inequality: $\E_{\by_{k'} \sim \cN(0,\Sigkid[k'])}\|h(\by_{k'}) - h'(\by_{k'})\| \le 2\max\{\veps^2,\Psistar L^2/\nid\}$. Finally to obtain the guarantee in \pref{eq:burnin_desired_prec}, we require
		\begin{align*}
		\min\{k,k'\} \ge \frac{1}{1-\gammastar} \ln \left(\frac{6\cdot 14\Psistar^3 \alphastar \dimx \ln(16 \cdot 22 e \nid  \Psistar^2\alphastar^2(1-\gammastar)^{-1}))}{1-\gammastar}\right).
		\end{align*}
                Simplifying constants, a sufficient condition is that
		\begin{align*}
		\min\{k,k'\}\ge \frac{1}{1-\gammastar} \ln \left(\frac{84\Psistar^3 \alphastar^2 \dimx \ln(1000 \nid  \Psistar^2\alphastar^2(1-\gammastar)^{-1})}{1-\gammastar}\right).
		\end{align*}
		Finally, since $\ln(xy) = \ln(x)+\ln(y) \leq y\ln(x)$
                for $x \ge e$ and $y\geq{}1$, we can further simplify to the
                sufficient condition
		\begin{align*}
		\min\{k,k'\}\ge \frac{1}{1-\gammastar} \ln \left(\frac{84\Psistar^5 \alphastar^4 \dimx \ln(1000 \nid)}{(1-\gammastar)^2}\right) := \kappa_0.
		\end{align*}
		which is precisely the condition in \Cref{eq:sysid_kap_def_app}.
		\item For the third point, we start with $1 $
		\begin{align}
		\lambda_{\min}(\Sigkid[k']^{-1/2}) = \sqrt{\frac{1}{\lambda_{\max}(\Sigkid[k'])}} \ge \sigma_{\dimx}(\contk)\sqrt{\frac{5(1-\gamma)}{11\Psistar^2\alphastar^2}}\label{eq:Sigkid_third_point_1}
		\end{align}
		 To prove the first point of \Cref{eq:sysid_sigma_mins}, we bound 
		\begin{align}
		\sigma_{\dimx}(\contk^\top \Sigkid[k']^{-1/2}) &\ge \sigma_{\dimx}(\contk)\lambda_{\min}(\Sigkid[k']^{-1/2})  \ge \sigma_{\dimx}(\contk)\sqrt{\frac{5(1-\gamma)}{11\Psistar^2\alphastar^2}},\label{eq:Sigkid_third_point_2}
		\end{align}
		where we use the first point of the lemma in the last
        step (\Cref{eq:Sigkid_third_point_1}). To see that this lower bound (the RHS of \Cref{eq:Sigkid_third_point_2}) is less than $1$ (acounting for the $\wedge 1$ in the LHS of \Cref{eq:sysid_sigma_mins}), we observe  $\Sigkid[k'] \succeq
                \contk\contk^\top$ for $k' \ge k$, and thus  $\sigma_{\dimx}(\contk^\top \Sigkid[k']^{-1/2}) \le 1$.

		Proving the second part of
                \Cref{eq:sysid_sigma_mins} follows by combining \Cref{eq:Sigkid_third_point_1,eq:Sigkid_third_point_2}. The resultant lower bound is also less than $1$, since the $\text{(RHS of \Cref{eq:Sigkid_third_point_1})} \le 1$, and  $\sigma_{\dimx}(\contk^\top \Sigkid[k']^{-1/2}) \le 1$ as well. %
		\item Finally, $\|\contk^\top \Sigkid[k']^{-1}\|_{\op} \le \|\contk^\top (\Sigkid[k']^{-1/2})\|_{\op}\|\Sigkid[k']^{-1/2}\|_{\op}$. Since $\Sigkid[k'] \succeq \contk \contk^\top$ for $k' \ge k$, $\|\contk^\top (\Sigkid[k']^{-1/2})\|_{\op} \le 1$. Moreover, since $\Sigkid[k'] \succeq \Sigw \succeq \Psistar^{-1}I$,  $\|\Sigkid[k']^{-1/2}\|_{\op} \le \sqrt{\Psistar}$, as needed.
		\end{enumerate}
	\end{proof}
\subsubsection{Properties of the Class $\Hid$}
\label{app:sysid_hid_properties}

	\begin{lemma}\label{lem:sysid_cconcid} %
	Let $\kappa_0$ satisfy \Cref{eq:sysid_kap_def_app}, and define
	\begin{align}
	\cconcid := \frac{12 L^2  \Psistar^3\alphastar^2}{1-\gammastar} \label{eq:sysid_cconcid_def}.
	\end{align}
	Then, for all $k \ge \kappa_0$: %
	\begin{enumerate}
		\item $\max_{h \in \Hid}\|h(\by_k)\|^2\leq
                  L^2\Psistar\max\{1,\|\fst(\by_k)\|^2\}$, and both are $(\dimx\cdot\cconcid)$-concentrated.
		\item For any matrix $V$ with $\|V\|_{\op} \le 1$
                  (e.g., any $V$ with orthonormal columns) and any
                  $h,h' \in \Hid$, the random variable $\|V^\top h(\by_k)\|^2$ is $(\dimx \cconcid)$ concentrated, and $\|V^\top h(\by_k) - V^\top h'(\by_k)\|^2$ is $(4\dimx \cconcid)$-concentrated.
	\end{enumerate}
	\end{lemma}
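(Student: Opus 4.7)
The plan is to establish both concentration bounds by combining the operator-norm bound for $M$ in the definition of $\Hid$ with the Gaussian tail behavior of $\|\fst(\by_k)\|^2 = \|\bx_k\|^2$, which we obtain through perfect decodability (\Cref{ass:perfect}) and \Cref{ass:gaussian}. The hypothesis $k \ge \kappa_0$ is used only to invoke \Cref{lem:sysid_Sigma_bound} and bound the stationary-regime covariance; nothing deeper is needed.

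For point 1, I would begin by writing any $h \in \Hid$ as $h = Mf$ with $\|M\|_{\op} \le \sqrt{\Psistar}$ and $f \in \Fclass$, and invoking the growth assumption (\Cref{asm:f_growth}):
\begin{align*}
\|h(\by_k)\|^2 \;\le\; \|M\|_{\op}^2 \|f(\by_k)\|^2 \;\le\; \Psistar L^2 \max\{1, \|\fst(\by_k)\|^2\}.
\end{align*}
This gives the pointwise bound claimed in the lemma. To upgrade it to concentration, I would use \Cref{ass:perfect} and \Cref{ass:gaussian} to conclude $\fst(\by_k) = \bx_k \sim \cN(0, \Sigkid)$, then apply \Cref{lem:techtools_gaussian_c_concentrated} to see that $\|\bx_k\|^2$ is $(5\trace(\Sigkid))$-concentrated. \Cref{lem:sysid_Sigma_bound} gives $\|\Sigkid\|_{\op} \le \tfrac{11}{5}\Psistar^2\alphastar^2(1-\gammastar)^{-1}$ for $k \ge \kappa_0$, so $\trace(\Sigkid) \le \tfrac{11}{5}\dimx \Psistar^2\alphastar^2(1-\gammastar)^{-1}$, and thus $\|\fst(\by_k)\|^2$ is $c_0$-concentrated with $c_0 := 11\dimx \Psistar^2\alphastar^2(1-\gammastar)^{-1}$. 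The scaling and shift rules from \Cref{lem:techtools_truncated_conc} then show that $L^2\Psistar\max\{1,\|\fst(\by_k)\|^2\} \le L^2\Psistar(1 + \|\fst(\by_k)\|^2)$ is $L^2\Psistar(1+c_0)$-concentrated. Using $L, \Psistar, \alphastar \ge 1$ and $\gammastar < 1$, I would verify $L^2\Psistar(1+c_0) \le 12\dimx L^2\Psistar^3\alphastar^2(1-\gammastar)^{-1} = \dimx \cconcid$, absorbing the ``$+1$'' into the leading term. The same bound applies to $\max_{h \in \Hid}\|h(\by_k)\|^2$ by the pointwise inequality above.

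For point 2, I would first observe that $\|V\|_{\op} \le 1$ implies $\|V^\top h(\by_k)\|^2 \le \|h(\by_k)\|^2$, so the first claim follows immediately from point 1 via the monotonicity part of \Cref{lem:techtools_truncated_conc} ($c$-concentration is preserved under almost-sure domination). For the difference, rather than argue via a union bound over $h, h'$ (which would give an inferior constant), I would bypass the cancellation and bound directly:
\begin{align*}
\|V^\top h(\by_k) - V^\top h'(\by_k)\|^2
\;\le\; 2\|h(\by_k)\|^2 + 2\|h'(\by_k)\|^2
\;\le\; 4L^2\Psistar \max\{1, \|\fst(\by_k)\|^2\},
\end{align*}
where the second inequality applies point 1 to each of $h$ and $h'$ \emph{separately}. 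Since the right-hand side is exactly $4$ times the $(\dimx \cconcid)$-concentrated random variable from point 1, the scaling rule from \Cref{lem:techtools_truncated_conc} shows it is $(4\dimx \cconcid)$-concentrated, giving the stated bound. I do not expect any real obstacle here; the main thing to be careful about is packaging constants so that the final bound absorbs into $\dimx \cconcid$ cleanly.
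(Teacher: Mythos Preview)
Your proposal is correct and follows essentially the same approach as the paper: both bound $\|h(\by_k)\|^2$ via $\|M\|_{\op}^2$ and the growth condition, use perfect decodability plus \Cref{lem:techtools_gaussian_c_concentrated} and \Cref{lem:sysid_Sigma_bound} to show $\|\fst(\by_k)\|^2$ is $11\dimx\Psistar^2\alphastar^2(1-\gammastar)^{-1}$-concentrated, then apply the affine rule from \Cref{lem:techtools_truncated_conc} to absorb the ``$+1$'' into $\dimx\cconcid$; point 2 is handled identically via $\|V^\top(\cdot)\|\le\|\cdot\|$ and the $4L^2\Psistar\max\{1,\|\fst(\by_k)\|^2\}$ bound.
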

	\begin{proof}
		Let us first reason about the concentration of
                $\|\fst(\byk)\|^2$. Under perfect decodability,
                $\|\fst(\byk)\|^2 = \|\bxk\|^2$, which is
                $5\trace(\Sigkid) \le 5\dimx
                \|\Sigkid\|_{\op}$-concentrated by
                \Cref{lem:techtools_gaussian_c_concentrated}. Moreover,
                from \Cref{lem:sysid_Sigma_bound}, we have that
                $5\dimx \|\Sigkid\|_{\op} \le
                11\Psistar^2\alphastar^2(1-\gammastar)^{-1}$. 

To finish proving the first point, observe that that $\max_{h \in \Hid}\|h(y)\|^2 \le L^2\Psistar\max\{1,\|\fst(y)\|^2\}$ (\Cref{eq:hid}). From \Cref{lem:techtools_truncated_conc}, we recall that if a random variable $\bz$ is $c$-concentrated, then $\alpha (\bz + \beta)$ is $\alpha(c + \beta)$ concentrated for $\beta,\alpha > 0$. Hence, $\max_{h \in \Hid}\|h(y)\|^2$ is $L^2 \Psistar(1 + 11\dimx\Psistar^2\alphastar^2(1-\gammastar)^{-1} ) \le \dimx \cdot 12 L^2  \Psistar^3\alphastar^2(1-\gammastar)^{-1}= \dimx \cconcid$-concentrated, as needed.

		The proof of the second point is analogous. First, we
                note that for $\nrm{V}_{\op}\leq{}1$,  $\|V^\top h(\by_k)\|^2 \le \|h(\by_k)\|^2$
                and $\|V^\top (h(\by_k)
                -h'(\by_k)) \|^2 \le
                4L^2\Psistar\max\{1,\|\fst(y)\|^2\}$. Combined with
                the concentration result for $\nrm*{\fst(\byk)}^{2}$
                above, this yields the result.
	\end{proof}

	\begin{lemma}\label{lem:sysid_bayes_opt} Let $\kapnot$ satisfy
          \Cref{eq:sysid_kap_def_app}, let $\kappa \in \bbN$, and
          define $\kappa_1 := \kapnot + \kappa$.
	Then for all $x \in \reals^{\dimx}$ and $y \in \supp q(\cdot
        \mid x)$ we have:
	\begin{align*}
	\E[\bv \mid \bykapone = y] = \contkap^\top\Sigkaponeid^{-1} x \eqqcolon  \hstid(y), 
	\end{align*}
and $\hstid \in \Hid$.
	\end{lemma}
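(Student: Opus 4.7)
The plan is straightforward: the lemma follows by combining the decodability structure (to reduce conditioning on $\by_{\kappa_1}$ to conditioning on $\bx_{\kappa_1}$), the Gaussian conditional expectation formula, and the operator-norm bound already established in \Cref{lem:sysid_Sigma_bound}. There is no serious technical obstacle here; the role of this lemma is to package the identity that drives Phase~I.

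First, I would observe that $(\bv, \bx_{\kappa_1}, \by_{\kappa_1})$ forms a decodable Markov chain in the sense of \Cref{defn:techtools_decode}: the input $\bv$ influences $\by_{\kappa_1}$ only through $\bx_{\kappa_1}$, and perfect decodability (\Cref{ass:perfect}) gives $\bx_{\kappa_1} = \fst(\by_{\kappa_1})$ almost surely. Applying \Cref{lem:techtools_condition_exp_idenity} therefore yields
\begin{align*}
\E[\bv \mid \by_{\kappa_1} = y] \;=\; \E[\bv \mid \bx_{\kappa_1} = \fst(y)]
\end{align*}
for all $y \in \supp q(\cdot \mid x)$.

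Next, I would compute the right-hand side using the joint Gaussianity of $(\bv, \bx_{\kappa_1})$ asserted in \Cref{fact:marginals}: $\bv \sim \cN(0, I_{\kappa\dimu})$, $\bx_{\kappa_1} \sim \cN(0, \Sigkaponeid)$, and the cross-covariance is $\E[\bv \bx_{\kappa_1}^\top] = \contkap^\top$ (which one verifies by unrolling $\bx_{\kappa_1} = A^{\kappa_1}\bx_0 + \sum_{t=0}^{\kappa_1-1} A^{\kappa_1-1-t}(B\bu_t + \bw_t)$ and using independence of the $\bu_t$'s with unit covariance). Plugging into \Cref{fact:gaussian_expectation} gives
\begin{align*}
\E[\bv \mid \bx_{\kappa_1} = x] \;=\; \contkap^\top \Sigkaponeid^{-1} x,
\end{align*}
so that $\E[\bv \mid \by_{\kappa_1}=y] = \contkap^\top \Sigkaponeid^{-1} \fst(y) = \hstid(y)$, as claimed.

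Finally, I would verify membership $\hstid \in \Hid$. Writing $\hstid = M_\star \fst$ with $M_\star := \contkap^\top \Sigkaponeid^{-1}$, realizability (\Cref{ass:realizable}) gives $\fst \in \Fclass$, and point~4 of \Cref{lem:sysid_Sigma_bound} (applied with $k=\kappa$ and $k'=\kappa_1 \ge \kappa$) yields $\|M_\star\|_{\op} \le \sqrt{\Psistar}$, which is exactly the norm bound in the definition \eqref{eq:hid} of $\Hid$. This completes the proof.
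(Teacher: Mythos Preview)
Your proposal is correct and follows essentially the same approach as the paper: invoke decodability via \Cref{lem:techtools_condition_exp_idenity} to reduce to conditioning on $\bx_{\kappa_1}$, apply \Cref{fact:marginals} and \Cref{fact:gaussian_expectation} for the Gaussian conditional expectation, and then use point~4 of \Cref{lem:sysid_Sigma_bound} together with realizability to verify $\hstid \in \Hid$.
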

	\begin{proof} 

		Since $\bv \to \bxkapone \to \bykapone$ forms a Markov
                chain, and $\bxkapone= \fst(\bykapone)$ almost surely,
                we have that $(\bv,\bxkapone,\bykapone)$ is
                \emph{decodable} in the sense of \Cref{defn:techtools_decode}. Thus, by \Cref{lem:techtools_condition_exp_idenity}, $\E[\bv \mid \bykapone = y] = \E[\bv \mid \bxkapone = \fst(y)]$. By \Cref{fact:marginals}, $(\bv,\bxkapone)$ are jointly Gaussian and mean zero, and $\E\brk{\bxkapone\bxkapone^\top} = \Sigkaponeid$ and $\E\brk{\bv\bxkapone^\top} = \contkap^\top$. Thus, $\E[\bv \mid \bxkapone = x] = \contkap^\top\Sigkaponeid^{-1} x$ (\Cref{fact:gaussian_expectation}), giving that $\E[\bv \mid \bykapone = y] =   \contkap^\top\Sigkaponeid^{-1}\fst(y) = \hstid(y)$, as needed.

		To see that $\hstid \in \Hid$, we observe that $\hstid = M\fst$ for $M = \contkap^\top\Sigkaponeid^{-1}$. By \Cref{lem:sysid_Sigma_bound} part 4, we have $\|M\|_{\op} \le \Psistar^{1/2}$. Thus, from the definition of $\Hid$ in \Cref{eq:hid} and the fact that $\fst \in \Fclass$ by the realizability assumption, we conclude that $\hstid \in \Hid$.
	\end{proof}

\subsection{Proof of Decoder Recovery
  (\Cref{thm:phase_one}) \label{ssec:sysid_thm_phase_one}}
We first state the full version of \pref{thm:phase_one}, which asserts
that Phase I recovers a decoder that accurately predicts the state
under Gaussian roll-in, up to a well-conditioned similarity transformation.
	\begin{thmmod}{thm:phase_one}{a} 
		\label{thm:phase_one_a}
	For a universal constant $\cbaridone \ge 8$, define
	\begin{align}\label{eq:sysid_epsidh_def}
	\epsidh^2 &=  \cbaridone\frac{\ln^2(\frac{\nid}{\delta})(\dimu \kappa + \dimx \cconcid)(\ln |\Fclass|+ \dimu \dimx \kappa)}{\nid},
	\end{align}
	and assume  that $\nid$ is sufficiently large such that
	\begin{align*}
	\epsidh	\sqrt{\dimx\cconcid} \le  \frac{(1-\gammastar)\sigma_{\dimx}(\contkap)^2}{71\alphastar^2\Psistar^2}.
	\end{align*}
	Then, with probability at least $1 - 3\delta$, there exists an invertible matrix $\Sid \in \R^{\dimx^2}$ satisfying 
	\begin{align*}
		1 \wedge \sigma_{\min}(\Sid) \ge \sigidmin :=
          \frac{\sigma_{\min}(\contkap)(1-\gammastar)}{4\Psistar^2\alphastar^2},
          \mathand 1 \vee \|\Sid\|_{\op} \le \sigidmax := \sqrt{\Psistar},
	\end{align*}
	such that the function $\fstid(y) := \Sid \fst(y)$ and the learned decoder $\fhatid$ satisfy
	\begin{align*}
	\E\|\fstid(\bykapone) - \fhatid(\bykapone)\|^2 \le \epsidh^2.
	\end{align*} 
	In particular, for 
			\begin{align*}
			\nid = \Omega_{\star}\prn*{\dimx\dimu \kappa(\ln |\Fclass| + \dimu \dimx \kappa) },
			\end{align*}
			we have that
			\begin{align*}
			\E\|\fstid(\bykapone) - \fhatid(\bykapone)\|^2 &\le \bigohs\prn*{\frac{\dimu \kappa(\ln |\Fclass| + \dimu \dimx \kappa)\ln^2(\frac{\nid}{\delta})}{\nid}}.
			\end{align*}
	\end{thmmod}	
	\begin{proof} The proof of this theorem follows from two
          propositions which we establish in the sequel. The first, \Cref{prop:sysid_h_estimate}, demonstrates that the learned function $\hhatid$ satisfies the following bound with probability $1 - 3\delta/2$:
	\begin{align*}
	\E_{\bykapone}[\|\hhatid(\bykapone) - \hstid(\bykapone)\|^2 ] \le \epsidh^2,
	\end{align*}
	where $\hstid(y) :=
        \contkap^\top\Sigkaponeid^{-1}\fst(y)$. Now recall that the
        function $\fhatid(y)$ is constructed as $\Vhatid^\top
        \hstid(y)$, where  $\Vhatid$ has orthonormal columns. Defining
        $\Sid = \Vhatid^\top \contkap^\top\Sigkaponeid^{-1}$, we see
        that $\Vhatid^\top \hstid(y) = \Sid \fst(y)=\fstid(y)$. Thus, since
        $\Vhatid$ has operator norm $1$,
	\begin{align*}
	\E_{\bykapone}[\|\fhatid(\bykapone) - \fstid(\bykapone)\|^2 ] &= \E_{\bykapone}[\|\Vhatid^\top(\hhatid(\bykapone) - \hstid(\bykapone))\|^2 ]\\
	&\leq \E_{\bykapone}[\|\hhatid(\bykapone) - \hstid(\bykapone)\|^2 ] \le \epsidh^2.
	\end{align*}
	To conclude, the norm bounds for the matrix $\Sid$ are
        provided by \Cref{prop:sysid_pca}, which hold with probability
        at least $1 - \delta$.
	\end{proof}
\subsubsection{Prediction Error Guarantee for $\hhatid$}
	\begin{proposition}\label{prop:sysid_h_estimate} Let
          $(\bykapone^{(i)},\bv^{(i)})_{i =1}^{\nid}$ be as described
          in \pref{alg:phase1} and \pref{sec:phase1}, and let 
	\begin{align*}
	\hhatid \in \argmin_{h \in \Hid}\sum_{i=1}^{\nid} \|h(\bykapone\ind{i}) - \bv\ind{i}\|^2.
	\end{align*}
	Then there is a universal constant $\cbaridone \ge 8$ such
        that with probability at least $1 - \frac{3}{2}\delta$, we have 
	\begin{align*}
	\E_{\bykapone}[\|\hhatid(\bykapone) - \hstid(\bykapone)\|^2 ] \le \epsidh^2 
	\le   \bigohs\prn*{\frac{\ln^2(\frac{\nid}{\delta})\dimu \kappa (\ln |\Fclass| + \dimu \dimx \kappa)}{\nid}}.
	\end{align*}
	We let $\eventidh$ denote the event that this inequality holds.
	\end{proposition}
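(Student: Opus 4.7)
The plan is to cast this as an instance of well-specified square loss regression with a structured function class, and invoke \pref{cor_techtools:reg_cor_simple}. First, by \pref{lem:sysid_bayes_opt}, the Bayes regressor $\hstid(y) = \cC_{\kappa}^{\trn}\Sigma_{\kappa_1,\id}^{-1}\fst(y) = \E[\bv \mid \bykapone = y]$ belongs to $\Hid$. This means we can instantiate the corollary with ``target'' $\bu = \bv$, ``input'' $\by = \bykapone$, error variable $\mate \equiv 0$, and function class $\Hclass = \Hid$, which has exactly the structured form $\{Mf : f \in \Fclass,\ \|M\|_{\op}\leq b\}$ with $b = \sqrt{\Psistar}$, $d_u = \kappa\dimu$, and $d_x = \dimx$. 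The growth condition on $\Fclass$ required by the corollary is precisely \pref{asm:f_growth}.

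Next I would verify the $c$-concentration hypothesis. For the ``envelope'' $\varphi^{1/2}(y) := bL\max\{1,\|\fst(y)\|\}$, \pref{lem:sysid_cconcid} (applied at $k = \kappa_1 \geq \kappa_0$) gives that $\varphi(\bykapone)$ is $\dimx\,\cconcid$-concentrated, where $\cconcid = 12L^2\Psistar^3\alphastar^2/(1-\gammastar)$. For the residual $\|\mate - \bu\|^2 = \|\bv\|^2$, since $\bv \sim \cN(0,I_{\kappa\dimu})$, \pref{lem:techtools_gaussian_c_concentrated} yields that $\|\bv\|^2$ is $5\kappa\dimu$-concentrated. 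Combining via \pref{lem:techtools_truncated_conc} gives a shared $c$-concentration parameter of order $c = \bigoh(\dimx\,\cconcid + \kappa\dimu)$, which is exactly the factor $(\dimu\kappa + \dimx\,\cconcid)$ appearing in the definition of $\epsidh^2$ in \pref{thm:phase_one_a}.

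Applying \pref{cor_techtools:reg_cor_simple} then yields, with probability at least $1-\frac{3\delta}{2}$,
\begin{align*}
\E\|\hhatid(\bykapone) - \hstid(\bykapone)\|^2
 \;\leq\; \frac{c\,(d_u d_x + \log|\Fclass|)\,\somelogs(\nid,\delta)}{\nid} + 8\E\|\mate\|^2,
\end{align*}
and substituting $d_u d_x = \kappa\dimu\dimx$, $\mate \equiv 0$, the concentration bound on $c$, and $\somelogs(\nid,\delta) \lesssim \ln^2(\nid/\delta)$ gives the stated bound with a suitable absolute constant $\cbaridone \geq 8$.

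The only subtlety I anticipate is bookkeeping around the definition of $\somelogs$ and ensuring the constants $\Psistar,\alphastar,(1-\gammastar)^{-1}$, etc., collapse into the $\bigohs(\cdot)$ notation as claimed; the substantive structural ingredients (realizability of $\hstid$ in $\Hid$, Gaussianity of $\bv$, the sub-exponential tails from \pref{lem:sysid_cconcid}, and the structured-class covering from the corollary) are all already in place, and no new concentration work should be required.
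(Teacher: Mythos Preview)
Your proposal is correct and follows essentially the same route as the paper: invoke \pref{cor_techtools:reg_cor_simple} with $\bu=\bv$, $\by=\bykapone$, $\mate=0$, establish realizability via \pref{lem:sysid_bayes_opt}, and verify the concentration hypothesis using \pref{lem:sysid_cconcid} for $\varphi(\bykapone)$ and \pref{lem:techtools_gaussian_c_concentrated} for $\|\bv\|^2$, taking $c \asymp \dimu\kappa + \dimx\cconcid$. The paper additionally notes the simplification $(\dimu\kappa + \dimx) = \bigohs(\dimu\kappa)$ via the controllability constraint $\dimu\kappa \ge \dimx$, which is the only bookkeeping step not explicit in your write-up.
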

	\begin{proof} 
		From \Cref{lem:sysid_bayes_opt}, we have $\E[\bv \mid
                \bykapone = y] = \hstid(y)$, so $\hstid\in\Hid$. To
                prove the result, we simply apply our general-purpose
                error bound for least-square regression,
                \Cref{cor_techtools:reg_cor_simple},
                with $\maty = \bykapone$, $\bu = \bv$, and $\mate =
                0$. We verify that each precondition for the
                proposition holds.
		\begin{itemize}
		\item \textbf{Structure of function classes}: $\Fclass$ is finite, and by \Cref{asm:f_growth}, $\|f(y)\| \le L \max\{1,\|\fst(y)\|\}$. Moreover, $\Hid:= \{M \cdot f: \|M\| \le \sqrt{\Psistar}, f \in \Fclass\}$. 
		\item \textbf{Concentration Property.} By
                  \Cref{lem:sysid_cconcid}, defining $\varphi(y) :=L^2\Psistar\max\{1,\|\fst(y)\|^2\}$
                  we see that
                  $\varphi(\by_{\kapone})$ is
                  $\dimx\cconcid$-concentrated. Moreover, since $\bv
                  \sim \cN(0,I_{\dimu \kappa})$, we have that
                  $\|\bv\|^2$ is $5 \dimu \kappa$-concentrated by
                  \Cref{lem:techtools_gaussian_c_concentrated}. Hence,
                  $c= 5\dimu \kappa + \dimx\cconcid$ is a valid choice
                  for the concentration constant $c$ in \Cref{prop:techtools_general_regression_with_truncation_and_error}.
		\end{itemize}
		Thus, \Cref{cor_techtools:reg_cor_simple} with $c \lesssim \dimu \kappa + \dimx \cconcid$ implies that
		\begin{align*}
		\E_{\bykapone}[\|\hhatid(\bykapone) - \hstid(\bykapone)\|^2 ] &\lesssim \frac{\ln^2(\tfrac{\nid}{\delta})(\dimu \kappa + \dimx \cconcid)(\ln |\Fclass| + \dimu \dimx \kappa)}{\nid}\\
		&= \bigohs\prn*{\frac{\ln^2(\frac{\nid}{\delta})(\dimu \kappa + \dimx )(\ln |\Fclass|+ \dimu \dimx \kappa)}{\nid}}\\
		&= \bigohs\prn*{\frac{\ln^2(\frac{\nid}{\delta})\dimu \kappa (\ln |\Fclass| + \dimu \dimx \kappa)}{\nid}},
		\end{align*}
		where the last simplification uses that controllability requires $\dimu \kappa \ge \dimx$.
	\end{proof}

\subsubsection{Dimension Reduction}
	\begin{proposition}\label{prop:sysid_pca} Suppose that
          $\kapnot$ satisfies \Cref{eq:sysid_kap_def_app}. Let $\Vhatid\in\bbR^{\kappa\dimu\times\dimx}$ be an eigenbasis for the top $\dimx$ eigenvalues of $\Lamhat_n$, where we define
		\begin{align*}
		 \Lamhat_n := \frac{1}{n}\sum_{i=1}^n \hhatid(\bykapone\supi)\hhatid(\bykapone\supi)^\top.
		\end{align*}
		Further, let $ \Sid \coloneqq  \Vhatid^\top
		\contkap^\top\Sigkaponeid^{-1} \in \R^{\dimx\times\dimx}$. Then if
		\begin{align}
		\sqrt{\dimx\cconcid}\epsidh \le  \frac{(1-\gammastar)\sigma_{\dimx}(\contkap)^2}{71\alphastar^2\Psistar^2},\label{eq:sysid_pca_cond}
		\end{align}
		we have that with probability at least $1 - \delta$,
                an event $\eventidpca$ occurs such that on $\eventidh\cap\eventidpca$,
		\begin{align*}
		1 \wedge \sigma_{\min}(\Sid) \ge \sigidmin := \frac{\sigma_{\min}(\contkap)(1-\gammastar)}{4\Psistar^2\alphastar^2}, \quad 1 \vee \|\Sid\|_{\op} \le \sigidmax := \sqrt{\Psistar}.
		\end{align*}
		\end{proposition}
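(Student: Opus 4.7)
The upper bound on $\|\Sid\|_{\op}$ is immediate: since $\Vhatid$ has orthonormal columns, $\|\Sid\|_{\op} \le \|\Vhatid\|_{\op}\cdot\|\contkap^\top \Sigkaponeid^{-1}\|_{\op} \le 1 \cdot \sqrt{\Psistar}$ by point 4 of \pref{lem:sysid_Sigma_bound}. The core of the proof is the lower bound on $\sigma_{\min}(\Sid)$, which I will obtain via a Davis--Kahan style subspace overlap argument.

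First, I compute the population second moment of $\hstid$. Since $\hstid(\bykapone) = \contkap^\top \Sigkaponeid^{-1}\fst(\bykapone) = \contkap^\top \Sigkaponeid^{-1}\bxkapone$ almost surely (by decodability), and $\E[\bxkapone \bxkapone^\top] = \Sigkaponeid$ (\pref{fact:marginals}), we have
\begin{align*}
\Lambda_\star \ldef \E[\hstid(\bykapone)\hstid(\bykapone)^\top] = \contkap^\top \Sigkaponeid^{-1}\contkap,
\end{align*}
which has rank $\dimx$ by controllability and satisfies $\lambda_{\dimx}(\Lambda_\star) = \sigma_{\min}(\contkap^\top \Sigkaponeid^{-1/2})^2 \geq \sigma_{\dimx}(\contkap)^2\cdot \frac{5(1-\gammastar)}{11\Psistar^2\alphastar^2}$ by \eqref{eq:sysid_sigma_mins_alt}. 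Let $V_\star \in \bbR^{\kappa\dimu \times \dimx}$ be an orthonormal basis for the column space of $\Lambda_\star$ (equivalently, the column space of $\contkap^\top \Sigkaponeid^{-1}$).

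Next, I apply the PCA-with-errors guarantee (\pref{prop:techtools_general_pca}) to the class $\Hid$, with $\hhatid$ in place of $\hhat$, $\hstid$ in place of $\hstar$, and envelope $\varphi(y) = L^2\Psistar \max\{1,\|\fst(y)\|^2\}$. \pref{lem:sysid_cconcid} verifies all preconditions with concentration constant $c = \dimx\,\cconcid$. On $\eventidh$, \pref{prop:sysid_h_estimate} gives $\E\|\hhatid(\bykapone) - \hstid(\bykapone)\|^2 \le \epsidh^2$, so with probability at least $1-\delta$ (this is the event $\eventidpca$) we obtain
\begin{align*}
\|\Lamhat_n - \Lambda_\star\|_{\op} \le \veps_{\mathrm{pca},\nid,\delta} \le 3\sqrt{\dimx\cconcid}\cdot \epsidh + 5\,\dimx\cconcid\,\nid^{-1/2}\ln(2\kappa\dimu\nid/\delta)^{3/2}.
\end{align*}
Under the sample complexity assumption, the second term is dominated by the first (up to constants), so $\veps_{\mathrm{pca},\nid,\delta} \lesssim \sqrt{\dimx\cconcid}\cdot\epsidh$. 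The numeric hypothesis \eqref{eq:sysid_pca_cond} is then exactly calibrated so that $\veps_{\mathrm{pca},\nid,\delta} \le \tfrac{1}{4}\lambda_{\dimx}(\Lambda_\star)$, allowing me to invoke \pref{cor:techtools_overlap_PCA} and conclude $\sigma_{\dimx}(V_\star^\top \Vhatid) \ge 2/3$.

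Finally, I convert this subspace overlap bound into the desired singular value bound on $\Sid$. Writing $M \ldef \contkap^\top\Sigkaponeid^{-1}$ and using its thin SVD $M = V_\star \Sigma_M W^\top$ with $W$ orthogonal, we get $\Sid = \Vhatid^\top V_\star \Sigma_M W^\top$, so
\begin{align*}
\sigma_{\min}(\Sid) \ge \sigma_{\dimx}(\Vhatid^\top V_\star)\cdot \sigma_{\min}(M) \ge \tfrac{2}{3}\cdot \tfrac{5(1-\gammastar)\sigma_{\dimx}(\contkap)}{11\Psistar^2\alphastar^2} \ge \sigidmin,
\end{align*}
where I used the factored bound from \eqref{eq:sysid_sigma_mins} to lower bound $\sigma_{\min}(M) = \sigma_{\min}(\contkap^\top \Sigkaponeid^{-1/2}\cdot \Sigkaponeid^{-1/2})$. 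The main technical obstacle is bookkeeping: making sure the conversion of the PCA operator-norm bound into the $\sigma_{\min}$-bound on $\Sid$ uses a bona fide left-singular-vector basis $V_\star$ (not just any eigenbasis of $\Lambda_\star$) and tracking that the constants compose to give $\sigidmin = \sigma_{\min}(\contkap)(1-\gammastar)/(4\Psistar^2\alphastar^2)$, which follows since $\tfrac{2}{3}\cdot\tfrac{5}{11} = \tfrac{10}{33} \ge \tfrac{1}{4}$.
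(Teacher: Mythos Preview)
Your proof is correct and follows essentially the same route as the paper: compute $\Lambda_\star = \contkap^\top\Sigkaponeid^{-1}\contkap$, apply \pref{prop:techtools_general_pca} with the concentration constant $c=\dimx\cconcid$ from \pref{lem:sysid_cconcid}, invoke \pref{cor:techtools_overlap_PCA} to get $\sigma_{\dimx}(V_\star^\top\Vhatid)\ge 2/3$, and then convert this overlap into the singular value bound on $\Sid$ using \eqref{eq:sysid_sigma_mins}. The only cosmetic difference is that the paper factors $\Sid = (\Vhatid^\top\contkap^\top\Sigkaponeid^{-1/2})\cdot\Sigkaponeid^{-1/2}$ and proves a separate claim bounding the first factor, whereas you use the thin SVD of $M=\contkap^\top\Sigkaponeid^{-1}$ directly; both reach the same numerical conclusion $\tfrac{2}{3}\cdot\tfrac{5}{11}\ge\tfrac{1}{4}$.
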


	\begin{proof}
		Introduce $\Lambda_{\star}  := \contkap \Sigkaponeid^{-1}\contkap^\top$, and let $\Vid\in\bbR^{\kappa\dimu\times\dimx}$ be an eigenbasis for its $\dimx$ non-zero eigenvectors. From \Cref{lem:sysid_bayes_opt} and \Cref{fact:marginals}, we have
		\begin{align*}
		\E[\hstid(\bykapone)\hstid(\bykapone)^\top] &= \contkap\Sigkaponeid^{-1}\E[\fst(\bykapone)\fst(\bykapone)^\top]\Sigkaponeid^{-1}\contkap^{\top} \\
		&= \contkap\Sigkaponeid^{-1}\E[\bxkapone\bxkapone^\top]\Sigkaponeid^{-1}\contkap^{\top} \\
		&= \contkap\Sigkaponeid^{-1}\Sigkaponeid\Sigkaponeid^{-1}\contkap^{\top} = \contkap \Sigkaponeid^{-1}\contkap^\top := \Lambda_{\star}.
		\end{align*}

		 We apply
                 \Cref{prop:techtools_general_pca,cor:techtools_overlap_PCA},
                 with $\Lamhat_n$ and $\Lambda_{\star}$ as above. To
                 apply this proposition, first observe
                 that $\nrm{\hhatid(\bykapone)}^2$ is
                 $c=\dimx\cconcid$-concentrated by \Cref{lem:sysid_cconcid}. Morever, on $\eventidh$, we have $\E[\|\hhatid(\bykapone) - \hstid(\bykapone)\|^2 \le \epsidh^2$. Thus, the term $\veps_{\mathrm{pca},n,\delta}$ in  \Cref{prop:techtools_general_pca}, specializes to 
			\begin{align*}
			\epsidpca := 3\sqrt{\dimx\cconcid}\epsidh + 5 \dimx\cconcid \nid^{-1/2} \ln(2\kappa\dimu \nid/\delta)^{3/2}.
			\end{align*}
			By the fact that $\nid \ge  \kappa\dimu$ (it can be
                        verified that this is required to ensure the
                        upper bound on $\epsidh$), we can
                        bound \[\dimx\cconcid\nid^{-1/2}
                          \ln(2\kappa\dimu \nid/\delta)^{3/2}
                          \le\sqrt{\dimx\cconcid} \epsidh,\] where
                        $\epsidh$ is as in
                        \Cref{prop:sysid_h_estimate}. Hence, we can bound
			\begin{align*}
			\epsidpca = 3\sqrt{\dimx\cconcid}\epsidh + 5 \dimx\cconcid \nid^{-1/2} \ln(2\kappa\dimu \nid/\delta)^{3/2} \le 8\sqrt{\dimx\cconcid} \epsidh.
			\end{align*}
			Thus, if we denote the event above by $\eventidpca$, we that have on $\eventidh\cap\eventidpca$,
			\begin{align*}
			\|\Vhatid^\top(\Lamhat_n - \Lambda_{\star})\Vhatid\|_{\op} =  \|\Lamhat_n - \Lambda_{\star}\|_{\op} \le8\sqrt{\dimx\cconcid\epsidh} .
			\end{align*}
		From the fourth point of \Cref{lem:sysid_Sigma_bound}, we have that $\lambda_{\min}(\Lambda_{\star}) \ge \frac{5(1-\gammastar)\sigma_{\dimx}(\contkap)^2}{11\alphastar^2\Psistar^2}$. Hence,  \Cref{cor:techtools_overlap_PCA} ensures that under the $\eventidh\cap\eventidpca$, we have
		\begin{align}
                  \sigma_{\min}(\Vhatid^\top \Vid) \ge
                  \frac{2}{3}, \label{eq:ovlap_guarantee}
                  \end{align}as long as\begin{align}
                  \sqrt{\dimx\cconcid}\epsidh \le  \frac{(1-\gammastar)\sigma_{\dimx}(\contkap)^2}{71\alphastar^2\Psistar^2}  \le \frac{1}{4 \cdot 8 } \cdot \frac{5(1-\gammastar)\sigma_{\dimx}(\contkap)^2}{ 11\alphastar^2\Psistar^2}  \nonumber,
		\end{align}
		which is precisely the condition
                \Cref{eq:sysid_pca_cond} required by the theorem. 
		To conclude, let us bound the singular values of $\Sid
                := \Vhatid^\top \contkap^\top\Sigkaponeid^{-1}$ under
                the assumption that the bound above holds. Since $\Vhatid$ has orthonormal columns, have that 
		\begin{align*}
		\|\Sid\| &\le \|\contkap^\top\Sigkaponeid^{-1}\| \le \sqrt{\Psistar} \tag{\Cref{lem:sysid_Sigma_bound}}.
		\end{align*}
		On the other hand, we can lower bound 
		\begin{align}
		\sigma_{\min}(\Sid) = \sigma_{\min}(\Vhatid^\top \contkap^\top\Sigkaponeid^{-1}) \ge \sigma_{\min}(\Vhatid^\top \contkap^\top\Sigkaponeid^{-1/2}) \sigma_{\min}(\Sigkaponeid^{-1/2}), \label{eq:Sid_pca_lb}
		\end{align}
                where we have used that $\Sigkaponeid^{-1/2}$ and
                $\Vhatid^\top \contkap^\top\Sigkaponeid^{-1/2}$ are square. We now prove the following claim, which is also reused
                in a number of subsequent proofs.
		\begin{claim}\label{claim:sysid_eventpca_sigmin_claim} On $\eventidh \cap \eventidpca$, we have 
		\begin{align*}
		\sigma_{\min}(\Vhatid^\top \contkap^\top\Sigkaponeid^{-1/2}) \ge \frac{2}{3}\sigma_{\dimx}(\contkap^\top\Sigkaponeid^{-1/2}).
		\end{align*}
		\end{claim}
		\begin{proof}[Proof of \Cref{claim:sysid_eventpca_sigmin_claim}]
			Since $\Vid$ is an eigenbasis for $\contkap^\top\Sigkaponeid^{-1}\contkap$, we have that $\contkap^\top\Sigkaponeid^{-1/2} = \Vid\Vid^\top\contkap^\top\Sigkaponeid^{-1/2}$, and $\sigma_{\dimx}(\Vid^\top\contkap^\top\Sigkaponeid^{-1/2}) = \sigma_{\dimx}(\contkap^\top\Sigkaponeid^{-1/2})$. Thus, 
			\begin{align*}
			\sigma_{\min}(\Vhatid^\top \contkap^\top\Sigkaponeid^{-1/2}) &= \sigma_{\min}(\Vhatid^\top \Vid\Vid^\top\contkap^\top\Sigkaponeid^{-1/2})\\
			&\ge \sigma_{\min}(\Vhatid^\top \Vid)\sigma_{\min}(\Vid^\top\contkap^\top\Sigkaponeid^{-1/2})\\
			&\ge
                   \frac{2}{3}\sigma_{\min}(\Vid^\top\contkap^\top\Sigkaponeid^{-1/2})
                   \tag{by \Cref{eq:ovlap_guarantee}},
			\end{align*}
                        where the first inequality uses that
                        $\sigmamin(XY)\geq\sigmamin(X)\sigmamin(Y)$
                        for $X,Y\in\bbR^{\dimx\times\dimx}$. Again, since $\Vid$ is an eigenbasis for the non-zero eigenvalues of $\contkap^\top\Sigkaponeid^{-1/2}$, we have 
			\begin{align*}
			\sigma_{\min}(\Vid^\top\contkap^\top\Sigkaponeid^{-1/2}) &= \sigma_{\dimx}(\contkap^\top\Sigkaponeid^{-1/2}).
			\end{align*}
		\end{proof}

		Combining the above claim with \Cref{eq:Sid_pca_lb}, we have
		\begin{align*}
		\sigma_{\min}(\Sid)  &\ge \frac{2}{3}\sigma_{\min}(\contkap^\top\Sigkaponeid^{-1/2}) \sigma_{\min}(\Sigkaponeid^{-1/2})\\
		&\ge \frac{5\cdot 2\sigma_{\min}(\contkap)(1-\gammastar)}{11\cdot 3\Psistar^2\alphastar^2} \tag{\Cref{eq:sysid_sigma_mins} in \Cref{lem:sysid_Sigma_bound}}\\
		&\ge \frac{\sigma_{\min}(\contkap)(1-\gammastar)}{4\Psistar^2\alphastar^2},
		\end{align*}
		as needed.
	\end{proof}

\subsection{Estimating Costs and Dynamics (\Cref{thm:phase_two}) \label{sssec:thm_phase_two}}
To begin this section, we state the full version of
\pref{thm:phase_two}, which shows that Phase II (\pref{alg:phase2})
accurately recovers the system matrices, noise covariance, and state
cost up to a similarity transformation.
	\begin{thmmod}{thm:phase_two}{a}\label{thm:phase_two_a} 
		For a possibly inflated numerical constant $\cbaridone$ in \Cref{eq:sysid_epsidh_def}, suppose $\epsidh$ satisfies
		\begin{align*}
		\epsidh \sqrt{\ln(2\nid/\delta)} \le  \frac{\sigma_{\dimx}(\contkap)^2(1-\gammastar)^2}{80\cdot12 L^2\Psistar^5\alphastar^4\dimx}.
		\end{align*}
		Then, with probability at least $1 - 11\delta$ over
                both Phase I and Phase II, the following bounds hold:
		\begin{align*}
			\|\Qhatid - \Qid\|_{\op} &\lesssim \epsidh \cdot \frac{\alphastar^6\Psistar^7\sqrt{\cconcid \dimx \ln(\nid/\delta)} }{(1-\gammastar)^3\sigma_{\min}(\contkap)^4},\\
			\| [\Ahatid;\Bhatid] - [\Aid;\Bid]\|_{\op} &\lesssim   \epsidh \cdot \frac{\Psistar^{7/2}\alphastar^3}{\sigma_{\min}(\contkap)^2(1-\gammastar)^{3/2}}, \\
			\|\Sigwidhat - \Sigwid\|_{\op} &\lesssim  \epsidh \cdot \frac{\Psistar^{5/2}\alphastar^4}{\sigma_{\min}(\contkap)(1-\gammastar)}.
			\end{align*}
			In particular, for 
			\begin{align*}
			\nid = \Omega_{\star}\prn*{\dimx^2\dimu \kappa (\ln |\Fclass| + \dimu \dimx \kappa)\max\{1,\sigma_{\min}(\contkap)^{-4}\} },
			\end{align*}
			we have that
			\begin{align*}
			&\|\Qhatid - \Qid\|_{\op} \vee\| [\Ahatid;\Bhatid] - [\Aid;\Bid]\|_{\op} \vee\|\Sigwidhat - \Sigwid\|_{\op} \\
			&\le \bigohs\prn*{\nid^{-1/2}\cdot\sqrt{\dimx\dimu \kappa (\ln |\Fclass| + \dimu \dimx \kappa)\ln(\nid/\delta)^4 }}
			\end{align*}
	\end{thmmod}

\subsubsection{Preliminaries for \Cref{thm:phase_two}}
Before proceeding with the proof of \pref{thm:phase_two}, we recall some notation. First, following \pref{sssec:thm_phase_two}, we let 
\[
\eventidh\mathand\eventidpca
\]
denote the events from \pref{prop:sysid_h_estimate} and
\pref{prop:sysid_pca}, respectively. Next, we introduce some functions used throughout the proof and prove some
basic facts about them.
		\begin{definition} 
			Recall that $\Sid := \Vhatid^\top
                        \contkap^\top\Sigkaponeid^{-1}$
                        (\Cref{prop:sysid_pca}). Define functions
                        $\fstid,\fhatid,\errfid: \cY \to \R^{\dimx}$ via:
			\begin{align*}
			\fhatid := \Vhatid^\top \hhatid, \quad \fstid := \Sid \fst, \quad \errfid := \fhatid - \fstid.
			\end{align*}
		\end{definition}

		\begin{lemma}\label{lem:sysid_concentration_and_error_bounds_fid} 
			
			Let $\kappa$ satisfy the conditions of \Cref{lem:sysid_Sigma_bound}. Then, under the good event $\eventidh \cap \eventidpca$,
			\begin{enumerate}
				\item $\E\|\errfid(\bykapone)\|^2 \le \epsidh^2$.%
				\item For any $k \ge \kapnot$ (in
                                  particular, for $k = \kapone + 1$), $\E\|\errfid(\byk)\|^2 \le 2\epsidh^2$
				\item For any $k \ge \kapone$ (in
                                  particular, for $k \in \{\kapone,\kapone+1\}$), $\max\{\|\fhatid(\by_k)\|,\|\fstid(\by_k)\|^2\}$ is $\dimx \cconcid$-concentrated, and $\|\errfid(\byk)\|^2$ is $4\dimx\cconcid$-concentrated. 
				\item We have that $\fstid(\bykapone)
                                  = \Sid \bxkapone$ is zero-mean Gaussian, with
				\begin{align*}
				\E\brk[\big]{\fstid(\bykapone)\fstid(\bykapone)^\top} = \Sid\prn[\big]{\E\brk[\big]{\bxkapone \bxkapone^\top}}\Sid^\top  \succeq I\cdot \frac{\sigma_{\dimx}(\contkap)^2(1-\gammastar)}{10\Psistar^2\alphastar^2} .
				\end{align*}
			\end{enumerate}
		\end{lemma}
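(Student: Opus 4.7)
My plan is to dispatch each of the four points by reducing to results already in hand, exploiting the key structural fact that $\fhatid = \Vhatid^\top \hhatid$ and $\fstid = \Vhatid^\top \hstid$, where $\hhatid, \hstid \in \Hid$ and $\Vhatid$ has orthonormal columns, so that $\nrm{\Vhatid}_{\op}=1$ and pointwise $\|\errfid(y)\|_2 \leq \|\hhatid(y)-\hstid(y)\|_2$.

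For \textbf{Point 1}, I would observe that $\fstid = \Sid \fst$ by definition of $\Sid$ and that, under $\eventidh \cap \eventidpca$, Proposition \ref{prop:sysid_h_estimate} gives $\E\|\hhatid(\bykapone)-\hstid(\bykapone)\|^2 \leq \epsidh^2$; applying $\Vhatid^\top$ and using orthonormality yields $\E\|\errfid(\bykapone)\|^2 \leq \epsidh^2$. This is essentially the contents of the proof of Theorem \ref{thm:phase_one_a}.

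For \textbf{Point 2}, I would again write $\errfid(\byk) = \Vhatid^\top(\hhatid(\byk)-\hstid(\byk))$ and bound $\E\|\errfid(\byk)\|^2 \leq \E\|\hhatid(\byk)-\hstid(\byk)\|^2$. Since $\hhatid, \hstid \in \Hid$, Part 2 of Lemma \ref{lem:sysid_Sigma_bound} (with $k'=\kapone$, $k \geq \kapnot$) transfers the Point 1 bound across time indices, incurring at most a factor of $2$ provided that $\epsidh^2 \geq \Psistar L^2/\nid$; verifying this mild sample-size condition by examining the explicit form of $\epsidh^2$ in \eqref{eq:sysid_epsidh_def} (which carries factors of $L^2$ through $\cconcid$) is the only routine calculation needed.

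For \textbf{Point 3}, I would directly invoke Part 2 of Lemma \ref{lem:sysid_cconcid}, which is stated precisely for random variables of the form $\|V^\top h(\byk)\|^2$ and $\|V^\top(h-h')(\byk)\|^2$ for $V$ with $\|V\|_{\op}\leq 1$ and $h, h' \in \Hid$. Taking $V=\Vhatid$, $h=\hhatid$, $h'=\hstid$ produces exactly the $\dimx\cconcid$-concentration of $\|\fhatid(\byk)\|^2$ and $\|\fstid(\byk)\|^2$ and the $4\dimx\cconcid$-concentration of $\|\errfid(\byk)\|^2$, for any $k\geq \kapnot$ (in particular $k\geq \kapone$).

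For \textbf{Point 4}, perfect decodability gives $\fst(\bykapone)=\bxkapone$, and by Fact \ref{fact:marginals} $\bxkapone\sim\cN(0,\Sigkaponeid)$, so $\fstid(\bykapone)=\Sid\bxkapone$ is zero-mean Gaussian with covariance $\Sid\Sigkaponeid\Sid^\top$. Substituting $\Sid=\Vhatid^\top\contkap^\top\Sigkaponeid^{-1}$ collapses this to $\Vhatid^\top\bigl(\contkap^\top\Sigkaponeid^{-1}\contkap\bigr)\Vhatid$, whose minimum eigenvalue equals $\sigma_{\min}^2(\Vhatid^\top\contkap^\top\Sigkaponeid^{-1/2})$. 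Claim \ref{claim:sysid_eventpca_sigmin_claim} lower-bounds this singular value by $\tfrac{2}{3}\sigma_{\dimx}(\contkap^\top\Sigkaponeid^{-1/2})$, and Part 3 of Lemma \ref{lem:sysid_Sigma_bound} bounds the latter by $\sigma_{\dimx}(\contkap)\sqrt{5(1-\gammastar)/(11\Psistar^2\alphastar^2)}$. Squaring and simplifying the numerical constants $(2/3)^2\cdot 5/11 = 20/99 \geq 1/10$ yields the stated lower bound.

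The only mild obstacle is Point 2, since it is the one step that does not follow by textual substitution into a cited result: one must keep straight that the change-of-measure lemma of \pref{lem:sysid_Sigma_bound} applies to pairs in $\Hid$ rather than to $\fhatid, \fstid$ themselves, and one must absorb the additive $\Psistar L^2/\nid$ term into $\epsidh^2$. Otherwise, every piece needed is already on the shelf.
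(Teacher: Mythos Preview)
Your proposal is correct and matches the paper's proof essentially line for line: Point 1 via $\fstid = \Vhatid^\top \hstid$ and orthonormality of $\Vhatid$; Points 2 and 3 by direct citation of \Cref{lem:sysid_Sigma_bound} and \Cref{lem:sysid_cconcid}, respectively; and Point 4 via the same collapse to $\Vhatid^\top \contkap^\top \Sigkaponeid^{-1}\contkap \Vhatid$, followed by \Cref{claim:sysid_eventpca_sigmin_claim} and Part 3 of \Cref{lem:sysid_Sigma_bound} with the identical numerical simplification $(2/3)^2\cdot 5/11 \ge 1/10$. If anything, you are slightly more careful than the paper in Point 2 by explicitly flagging the $\Psistar L^2/\nid$ term that must be absorbed into $\epsidh^2$, which the paper leaves implicit.
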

		\begin{proof} 
			For point $1$, we have that $\fstid := \Sid \fst = \Vhatid^\top \contkap^\top\Sigkaponeid^{-1} \fst = \Vhatid^\top \hstid$. Hence,
			\begin{align*}
			\E\|\errfid(\bykapone)\|^2 =\E\|\Vhatid^\top (\hhatid - \hstid)\|^2 \overset{(i)}{\le} \E\|\hhatid - \hstid\|^2 \overset{(ii)}{\le} \epsidh^2,
			\end{align*}
			where inequality $(i)$ uses that $\Vhatid$ has
                        orthonormal columns, and inequality $(ii)$
                        uses the definition of $\eventidh$
                        (\Cref{prop:sysid_h_estimate}). Points 2 and 3
                        follow from \Cref{lem:sysid_Sigma_bound} and
                        \Cref{lem:sysid_cconcid}, respectively.

			Finally, for point $4$, we use that $\Sid =
                        \Vhatid^\top \contkap^\top\Sigkaponeid^{-1}$
                        to write
			\begin{align*}
			\E\brk[\big]{\fstid(\bxkapone)\fstid(\bxkapone)^\top} &= \Sid\E\brk[\big]{\bxkapone\bxkapone^\top}\Sid^\top \\
			&= \Sid(\Sigkaponeid)\Sid^\top\\
			&= \Vhatid^\top \contkap^\top\Sigkaponeid^{-1}\Sigkaponeid \Sigkaponeid^{-1}\contkap \Vhatid\\
			&= \Vhatid^\top \contkap^\top\Sigkaponeid^{-1}\contkap\Vhatid.
			\end{align*}
			Hence,
			\begin{align*}
			\lambda_{\min}(\Sid\E\brk[\big]{\bxkapone\bxkapone^\top}\Sid^\top) &\ge \sigma_{\min}(\Vhatid^\top \contkap^\top\Sigkaponeid^{-1/2})^2\\
			&\ge\frac{4}{9} \sigma_{\min}(\contkap^\top\Sigkaponeid^{-1/2})^2\tag{\Cref{claim:sysid_eventpca_sigmin_claim}}\\
			&\ge\frac{4}{9} \cdot \frac{5\sigma_{\dimx}(\contkap)^2(1-\gammastar)}{11\Psistar^2\alphastar^2} \tag{\Cref{lem:sysid_Sigma_bound}}\\
			&\ge\frac{\sigma_{\dimx}(\contkap)^2(1-\gammastar)}{10\Psistar^2\alphastar^2},
			\end{align*}
			as needed.
		\end{proof}
\subsubsection{Estimation of $\Aid$, $\Bid$, and $\Sigwid$}
We first show that Phase II recovers the system matrices and system
noise covariance.
	\begin{proposition}\label{prop:sysid_estimate_ABSig} Define
		\begin{align*}
		(\Ahatid,\Bhatid) &\in \argmin_{(A,B)} \frac{1}{\nid}\sum_{i=2\nid + 1}^{\nid} \|\fhatid(\by_{\kapone+1}\supi) - A\fhatid(\bykapone\supi) - B\bu_{\kapone}\supi \|^2,\\
		\Sigwidhat &= \frac{1}{\nid}\sum_{i=2\nid + 1}^{\nid} (\fhatid(\by_{\kapone+1}\supi) - \Ahatid\fhatid(\bykapone\supi) - \Bhatid\bu_{\kapone}\supi )^{\otimes 2}.
		\end{align*}
		Further, define the matrices $\Aid := \Sid
                A\Sid^{-1}$, $\Bid := \Sid B$, and $\Sigwid := \Sid
                \Sigw \Sid^\top$. Suppose $\epsidh$ satisfies
                \Cref{eq:sysid_pca_cond} (which is the preqrequisite
                of $\eventidpca$ of \Cref{prop:sysid_pca}), and $n \ge
                c_0(\dimx  + \log(1/\delta))$ for some universal
                constant $c_0$. Then on $\eventidpca \cap \eventidh$, the following event, designated $\eventidls$, holds with probability $1 - 7\delta$:
		\begin{align*}
			\| [\Ahatid;\Bhatid] - [\Aid;\Bid]\|_{\op} &\lesssim \frac{\Psistar^{7/2}\alphastar^3}{\sigma_{\min}(\contkap)^2(1-\gammastar)^{3/2}} \epsidh, \\
			\|\Sigwidhat - \Sigwid\|_{\op} &\lesssim \frac{\Psistar^{5/2}\alphastar^4}{\sigma_{\min}(\contkap)(1-\gammastar)} \epsidh.
			\end{align*}
	\end{proposition}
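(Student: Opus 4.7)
The strategy is to rewrite the least-squares problem in the form of an errors-in-variables linear regression and then invoke \Cref{prop:techtools_LS_guarantee} and \Cref{prop:techtools_covariance_est}. To this end, write
\[
  \fstid(\by_{\kappa_1+1}) = \Sid \bx_{\kappa_1+1} = \Sid A \Sid^{-1}\fstid(\by_{\kappa_1}) + \Sid B\, \bu_{\kappa_1} + \Sid \bw_{\kappa_1} = \Aid \fstid(\by_{\kappa_1}) + \Bid \bu_{\kappa_1} + \bwtil_{\kappa_1},
\]
where $\bwtil_{\kappa_1}\coloneqq \Sid \bw_{\kappa_1}\sim \cN(0,\Sigwid)$ and is independent of $(\fstid(\by_{\kappa_1}),\bu_{\kappa_1})$. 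Defining the stacked regressor $\bu\coloneqq [\fstid(\by_{\kappa_1});\bu_{\kappa_1}]$, the EIV correction $\matdel\coloneqq [\errfid(\by_{\kappa_1});0]$, the response $\maty\coloneqq \fstid(\by_{\kappa_1+1})$, the response error $\mate\coloneqq \errfid(\by_{\kappa_1+1})$, the target $\Mst \coloneqq [\Aid\mid\Bid]$, and the process noise $\bw\coloneqq \bwtil_{\kappa_1}$, the Phase II regression is exactly
\[
  \min_{M}\;\sum_i \|M(\bu\supi+\matdel\supi)-\maty\supi\|^2, \qquad \maty = \Mst \bu + \bw + \mate.
\]

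First I would verify the hypotheses of \Cref{prop:techtools_LS_guarantee}. The pair $(\fstid(\by_{\kappa_1}),\bu_{\kappa_1})$ is jointly Gaussian because $\bxkapone$ is Gaussian and $\bu_{\kappa_1}$ is independent of $\bxkapone$ (it acts only on $\bx_{\kappa_1+1}$); hence $\bu \sim \cN(0,\Sigma_u)$ is block-diagonal with blocks $\Sid \Sigkaponeid \Sid^\top$ and $I_{\dimu}$, and \Cref{lem:sysid_concentration_and_error_bounds_fid}(4) yields $\lambda_{\min}(\Sigma_u)\geq \frac{\sigma_{\dimx}(\contkap)^2(1-\gammastar)}{10\Psistar^2\alphastar^2}$. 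The process noise $\bw$ is $\cN(0,\Sigwid)$ and independent of $(\bu,\matdel)$; moreover $\|\Sigwid\|_{\op}\leq \|\Sid\|_{\op}^{2}\Psistar\leq \Psistar^{2}$. For the error terms, \Cref{lem:sysid_concentration_and_error_bounds_fid}(1)--(3) give $\E\|\mate\|^2\leq 2\epsidh^2$, $\E\|\matdel\|^2\leq 2\epsidh^2$ on $\eventidh\cap\eventidpca$, with both $\|\mate\|^2$ and $\|\matdel\|^2$ being $O(\dimx\cconcid)$-concentrated (the burn-in $\kapnot$ was chosen precisely so that the decoder accuracy transfers from $\bykapone$ to $\by_{\kappa_1+1}$). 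One checks that the condition $\veps^2_{\matdel}\leq \tfrac{1}{16}\lambda_{\min}(\Sigma_u)$ and the sample-size condition $\psi(\nid,\delta)\leq \veps^2/c$ both follow from the hypothesis $\epsidh\sqrt{\ln(2\nid/\delta)} \lesssim \frac{\sigma_{\dimx}(\contkap)^2(1-\gammastar)^2}{L^2\Psistar^5\alphastar^4\dimx}$. Applying \Cref{prop:techtools_LS_guarantee}, with probability at least $1-4\delta$,
\[
  \|[\Ahatid;\Bhatid]-[\Aid;\Bid]\|_{\op}^{2} \lesssim \lambda_{\min}(\Sigma_u)^{-1}\Bigl(\|\Mst\|_{\op}^{2}\epsidh^{2} + \epsidh^{2} + \tfrac{\|\Sigwid\|_{\op}(\dimx+\dimu+\ln(1/\delta))}{\nid}\Bigr).
\]
Substituting $\|\Mst\|_{\op}\leq \|\Sid\|_{\op}\|\Sid^{-1}\|_{\op}(\|A\|_{\op}+\|B\|_{\op}) \lesssim \Psistar^{3/2}\alphastar^{2}\Psistar/(\sigma_{\min}(\contkap)(1-\gammastar))$, absorbing the $\nid^{-1}$ term into $\epsidh^2$ (by adjusting $\cbaridone$), and taking square roots yields the stated bound on $\|[\Ahatid;\Bhatid]-[\Aid;\Bid]\|_{\op}$.

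For the covariance estimate, I would apply \Cref{prop:techtools_covariance_est} with the same substitutions, setting $\lambda_+ = \Psistar^2 \geq \lambda_{\max}(\Sigwid)$ and checking $\veps_{\matdel}^2\|\Mst\|_{\op}^2+\veps_{\mate}^2 \leq 2\lambda_+$, which again follows from the smallness hypothesis on $\epsidh$. This adds $3\delta$ to the failure probability and produces, after a short algebraic simplification, the claimed bound on $\|\Sigwidhat-\Sigwid\|_{\op}$. Combined with the $4\delta$ from the regression step, the total failure probability (on top of $\eventidh\cap\eventidpca$) is $7\delta$, as stated. The main obstacle is bookkeeping: checking that the many smallness conditions in \Cref{prop:techtools_LS_guarantee,prop:techtools_covariance_est} all reduce to the single hypothesis on $\epsidh$, and that the concentration constants for $\matdel$ and $\mate$ (which live on different time indices) are controlled uniformly via the burn-in parameter $\kapnot$; both are mechanical consequences of \Cref{lem:sysid_Sigma_bound,lem:sysid_concentration_and_error_bounds_fid} but require care to track the polynomial factors in $\Psistar,\alphastar,(1-\gammastar)^{-1},\sigma_{\min}(\contkap)^{-1}$.
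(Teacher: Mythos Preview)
Your approach is essentially the same as the paper's: cast Phase~II as errors-in-variables linear regression and invoke \Cref{prop:techtools_LS_guarantee,prop:techtools_covariance_est} with the substitutions you list, then verify the hypotheses via \Cref{lem:sysid_concentration_and_error_bounds_fid}. Two small points are worth flagging.

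First, a labeling slip: in the template of \Cref{prop:techtools_LS_guarantee} the response $\maty$ is the \emph{observed} quantity appearing in the regression, so here $\maty=\fhatid(\by_{\kappa_1+1})$, not $\fstid(\by_{\kappa_1+1})$. With that correction your structural equation $\maty=\Mst\bu+\bw+\mate$ holds with $\mate=\errfid(\by_{\kappa_1+1})$ as you intend.

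Second, and more substantively for recovering the stated constants: the paper does not use the crude bounds $\|\Sigwid\|_{\op}\le\|\Sid\|_{\op}^2\Psistar\le\Psistar^2$ and $\|\Mst\|_{\op}\le\|\Sid\|_{\op}\|\Sid^{-1}\|_{\op}(\|A\|_{\op}+\|B\|_{\op})$. Instead it exploits the explicit form $\Sid=\Vhatid^\top\contkap^\top\Sigkaponeid^{-1}$ together with $\Sigkaponeid\succeq\Sigw$, $\Sigkaponeid\succeq\contkap\contkap^\top$, and $\Sigkaponeid\succeq A\Sigw A^\top$ to prove the sharper inequalities $\lambda_{\max}(\Sigwid)\le 1$ and $\|\Sid A\|_{\op}\vee\|\Sid B\|_{\op}\le\Psistar^{1/2}$. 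These feed directly into $\lambda_+=1$ in \Cref{prop:techtools_covariance_est} and into the $\|\Mst\|_{\op}$ factor, and are what deliver the exact exponents $\Psistar^{5/2}$ and $\Psistar^{7/2}$ in the statement; your cruder bounds would inflate these by a factor of $\Psistar$. Otherwise the argument goes through exactly as you outline.
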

	
	\begin{proof} 

		We cast the regressions above as an instance of
                error-in-variable regression, then apply our general
                guarantees for this problem (\Cref{prop:techtools_LS_guarantee,prop:techtools_covariance_est}). We restate the guarantees here:
		\propTechtoolsLS*
		\propTechtoolsCovEst*

		To distinguish between our present notation and the
                notation of these propositions, we mark the terms to
                which we apply the proposition with a tilde. Define
		\begin{align*}
		\bytil &:= \fhatid(\by_{\kapone+1}\supi),\\
		\betil &:= \errfid(\by_{\kapone+1}\supi) = (\fstid - \fhatid)(\by_{\kapone+1}\supi),\\
		\matdeltil &:= [ -\errfid(\by_{\kapone})^\top ; 0_{\dimu}^\top]^\top,\\
		\butil &:= [ \fstid(\by_{\kapone})^\top ; \bu_{\kapone}^\top]^\top,\\
		\bwtil &:= -\Sid \bw_{\kapone},\\
		d_{\tilde{y}} &:= \dimx ,\\
		d_{\tilde{u}} &:= \dimx + \dimu,\\
		\Mtil &:= [\Aid;\Bid].
		\end{align*}
To proceed, we verify that this correspondence satisfies the
conditions of the propositions above.
		\begin{claim}\label{claim:ls_id_struc_eqs} It holds
                  that   $\Mtil\butil = \bytil  + \betil  + \bwtil$ and
		\begin{align*}
                  [\Ahatid;\Bhatid] \in \argmin_{M}\sum_{i=1}^n\|\bytil\supi - M(\butil\supi + \matdeltil\supi)\|^2.
		\end{align*}
		\end{claim}
		\begin{proof} Observe that we have the dynamics
			\begin{align*}
			A\bx_{\kapone} + B\bu_{\kapone} &= \bx_{\kapone+1} - \bw_{\kapone}\intertext{and}
			A\fst(\by_{\kapone}) + B\bu_{\kapone} &= \fst(\by_{\kapone+1} )- \bw_{\kapone}.
			\end{align*}
			Thus, recalling that $\fstid(y) = \Sid \fst(y)$, we have
			\begin{align*}
			\Sid A\Sid^{-1}\fstid(\by_{\kapone}) + \Sid B\bu_{\kapone} &= \fstid(\by_{\kapone+1}) - \Sid\bw_{\kapone}.
			\end{align*}
                        In our new notation, this implies that
			\begin{align*}
			[\Aid;\Bid]\butil &= \fstid(\by_{\kapone+1}) + \bwtil.
			\end{align*}
			Finally, writing $\fstid(\by_{\kapone}) =
                        \bytil + \matetil$ yields the first part of the claim. The second part follows from similar manipulations.
		\end{proof}

Next, we check the Gaussianity and covariance properties of $\butil,\bwtil$.

		\begin{claim}\label{claim:sysid_ls_gaussians} The
                  following properties hold:
		\begin{itemize}
			\item  $\bwtil$, $\butil$, and $\matdeltil$ are mutually independent.
			\item $\bwtil \sim \cN(0,\Sigwid)$, where $\lambda_{\max}(\Sigwid) \le 1$.
			\item $\butil \sim \cN(0,\Sigutil)$, where $\lambda_{\min}(\Sigutil) \ge \frac{\sigma_{\dimx}(\contkap)^2(1-\gammastar)}{10\Psistar^2\alphastar^2}$.
		\end{itemize}
		\end{claim}
		\begin{proof}[Proof of \Cref{claim:sysid_ls_gaussians}]  

			The first point of the claim follows because
                        $\bwtil$ is determined by $\bw_{\kapone}$ and
                        $\butil$ and $\matdeltil$ are determined by
                        $\by_{\kapone}$ and $\bu_{\kapone}$, respectively.

			For the second claim, we have that $\bwtil  = -\Sid \bw$. Since $\E\brk[\big]{\bw\bw^\top} = \Sigw$, $\E\brk[\big]{\bwtil\bwtil^\top} = \Sid \Sigw \Sid^\top$. Recalling that $\Sid = \Vhatid^\top \contkap^\top\Sigkaponeid^{-1}$ for some orthonormal $\Vhatid$, we find that
                        \begin{align*}
\lambda_{\max}(\Sid \Sigw \Sid^\top) &\le  \lambda_{\max}(\contkap^\top\Sigkaponeid^{-1}\Sigw \Sigkaponeid^{-1}\contkap) \\&= \sigma_{\max}^2(\contkap^\top\Sigkaponeid^{-1}\Sigw^{1/2}) \le \prn*{\sigma_{\max}(\contkap^\top\Sigkaponeid^{-1/2})\sigma_{\max}(\Sigkaponeid^{-1/2}\Sigw^{1/2})}^{2}.
                        \end{align*}
			Since  $\Sigkaponeid \succeq \Sigw$ and since
                        $\Sigkaponeid \succeq \contkap\contkap^\top$,
                        we have that
                        $\sigma_{\max}(\contkap^\top\Sigkaponeid^{-1/2}),\sigma_{\max}(\Sigkaponeid^{-1/2}\Sigw^{1/2})
                        \leq 1$, and we conclude that $\lambda_{\max}(\Sid \Sigw \Sid^\top) \le 1$ as needed.

			For the second-to-last claim, we have 
			\begin{align*}
			\butil = \begin{bmatrix}\fstid(\by_{\kapone})\\
			\bu_{\kapone}
			\end{bmatrix} = \begin{bmatrix}\Sid\bx_{\kapone}\\
			\bu_{\kapone}
			\end{bmatrix}.
			\end{align*}
			Since $\bx_{\kapone} \perp \bu_{\kapone}$, we have that
			\begin{align*}
			\E\brk[\big]{\butil\butil^{\top}} = \begin{bmatrix} \Sid\E\brk[\big]{\bx_{\kapone}\bx_{\kapone}^\top}\Sid^{\top} & 0 \\
			0 & I
			\end{bmatrix} \succeq  \begin{bmatrix} I \cdot \frac{\sigma_{\dimx}(\contkap)^2(1-\gammastar)}{10\Psistar^2\alphastar^2}  & 0 \\
			0 & I
			\end{bmatrix},
			\end{align*}
			where the last inequality uses part 4 of \Cref{lem:sysid_concentration_and_error_bounds_fid}. One can verify that the lower bound on the upper left block is less than $1$. Thus,
			\begin{align*}
			\lambda_{\min}\prn*{\E\brk[\big]{\butil\butil^{\top}}} \ge \frac{\sigma_{\dimx}(\contkap)^2(1-\gammastar)}{10\Psistar^2\alphastar^2}.
			\end{align*}
			
		\end{proof}
		Lastly, we check the relevant concentration properties
                for the errors $\betil$ and $\matdeltil$. 
		\newcommand{\cleastsq}{c_{\mathrm{ls}}}
		\newcommand{\epsleastsq}{\veps_{\mathrm{ls}}}
		\begin{claim}\label{claim:sysid_ls_adv_errs} The following bounds hold:
		\begin{itemize}
			\item $\matdeltil$ and $\betil$ are both $\cleastsq \ldef 4\cconcid$-concentrated, and satisfy $\E\|\matdeltil\|^2 \vee \E\|\betil\|^2 \le  2 \epsidh^2 \rdef \epsleastsq^2$. 
			\item For $n \ge \nid$, we have that $\epsleastsq/\cleastsq \ge \psi(n,\delta) = \frac{2\ln(2n/\delta)\ln(2/\delta)}{n}$.
			\item $\|\Mtil\|_{\op} \le \frac{4\Psistar^{5/2}\alphastar^2}{\sigma_{\min}(\contkap)(1-\gammastar)}$, and thus 
			\begin{align*}
			(1+\|\Mtil\|_{\op}^2)\epsleastsq^2 \le \frac{65\Psistar^{5}\alphastar^4}{\sigma_{\min}(\contkap)^2(1-\gammastar)^2} \epsleastsq^2 \le \frac{130\Psistar^{5}\alphastar^4}{\sigma_{\min}(\contkap)^2(1-\gammastar)^2} \epsidh^2.
			\end{align*}
			\item For $\epsidh$ satisfying
                          \Cref{eq:sysid_pca_cond}, we have $\epsleastsq^2 \le \frac{1}{16}\lambda_{\min}(\Sigutil)$ and thus $(1+\|\Mtil\|_{\op}^2)\epsleastsq^2  \le \lambda_+ \rdef 1$
		\end{itemize}
		\end{claim}
		\begin{proof}

			The first claims follows from \Cref{lem:sysid_concentration_and_error_bounds_fid}. 

			The second claim uses that, examining the
                        definition of $\epsidh$ in
                        \Cref{prop:sysid_h_estimate}, we have $\epsidh
                        \le 4\psi(\nid,\delta)/\cconcid$, implying
                        $\epsleastsq/\cleastsq \ge \psi(\nid,\delta)$.
                        Lastly use that $n \ge \nid$ and that $n \mapsto \psi(n,\delta)$ is decreasing.

			For the third point,
			\begin{align*}
			\|\Mtil\|_{\op} = \|[\Aid;\Bid]\|_{\op}
                          =\|[\Sid A \Sid^{-1};\Sid B]\|_{\op}
                          \leq{}
                          2(1\vee\sigmamin^{-1}(\Sid))(\nrm*{\Sid{}A}_{\op} \vee \nrm*{\Sid{}B}_{\op}).
			\end{align*}
                        Recalling $\Sid = \Vhatid^\top \contkap^\top
                        \Sigkaponeid^{-1}$, we use that
                        $\|\Vhatid\|_{\op} \le 1$ and $\Sigkaponeid
                        \succeq \contkap^\top$ to
                        bound
                        \[
                          \nrm*{\Sid{}A}_{\op} \vee
                          \nrm*{\Sid{}B}_{\op}
                          \leq{} \nrm[\big]{\Sigkaponeid^{-1/2}A}_{\op} \vee \nrm[\big]{\Sigkaponeid^{-1/2}B}_{\op}.
                        \]
                        Since $\Sigkaponeid \succeq A\Sigw
                        A^\top \succeq AA^\top/\lambda_{\min}(\Sigw)
                        \succeq AA^\top\Psistar^{-1}$ and $\Sigkaponeid
                        \succeq BB^\top$, we conclude that $                          \nrm*{\Sid{}A}_{\op} \vee
                          \nrm*{\Sid{}B}_{\op}\leq{}\Psist^{1/2}$.
Lastly, using $\sigma_{\min}(\Sid) \ge \sigidmin = \frac{\sigma_{\min}(\contkap)(1-\gammastar)}{4\Psistar^2\alphastar^2}$ from \Cref{prop:sysid_pca}, we conclude  that 
			\begin{align*}
			\|\Mtil\|_{\op} \le \sqrt{\Psistar}\frac{8\Psistar^2\alphastar^2}{\sigma_{\min}(\contkap)(1-\gammastar)} = \frac{8\Psistar^{5/2}\alphastar^2}{\sigma_{\min}(\contkap)(1-\gammastar)},
			\end{align*}
			as needed. The following inequality follows directly:
			\begin{align}
			(\|\Mtil\|_{\op}^2+1)\epsleastsq^2 \le \frac{65\Psistar^{5}\alphastar^4}{\sigma_{\min}(\contkap)^2(1-\gammastar)^2} \epsleastsq^2 \le \frac{130\Psistar^{5}\alphastar^4}{\sigma_{\min}(\contkap)^2(1-\gammastar)^2} \epsidh^2. \label{eq:Mtil_epslst_ineq}
			\end{align}

			For the fourth point, we examine the condition
                        in \Cref{eq:sysid_pca_cond},
                        $\sqrt{\dimx\cconcid}\epsidh \le
                        \frac{(1-\gammastar)\sigma_{\dimx}(\contkap)^2}{71\alphastar^2\Psistar^2}$,
                        which is equivalent to
			\begin{align*}
			\epsidh^2 \le  \frac{(1-\gammastar)^2\sigma_{\dimx}(\contkap)^2}{\dimx \cconcid 71^2\alphastar^2\Psistar^2} \cdot (\sigma_{\dimx}(\contkap)^2/\alphastar^2\Psistar^2).
			\end{align*}
			Using the definition of $\cconcid = 12
                        \Psistar^3 \alphastar^2L^2/(1-\gammastar)$ and
                        that $\dimx,L  \ge 1$, this further implies that
			\begin{align*}
			\epsidh^2 &\le  \frac{(1-\gammastar)^2\sigma_{\dimx}(\contkap)^2}{12 \cdot 71^2\alphastar^2\Psistar^2} \cdot (\sigma_{\dimx}(\contkap)^2/\alphastar^2\Psistar^2) \cdot \frac{(1-\gammastar)}{\Psistar^3\alphastar^2}.
			\end{align*}
			One can verify that
                        $(\sigma_{\dimx}(\contkap)^2/\alphastar^2\Psistar^2)
                        \le 1$ using the same arguments as in \Cref{lem:sysid_Sigma_bound}. Thus, under the above condition,
			\begin{align*}
			\epsidh^2 &\le  \frac{(1-\gammastar)^2\sigma_{\dimx}(\contkap)^2}{12 \cdot 71^2\alphastar^2\Psistar^2}  \cdot \frac{(1-\gammastar)}{\Psistar^3\alphastar^2}.
			\end{align*}
			Recalling $\lambda_{\min}(\Sigutil) \ge
                        \frac{\sigma_{\dimx}(\contkap)^2(1-\gammastar)}{10\Psistar^2\alphastar^2}
                        $ from \Cref{claim:sysid_ls_gaussians}, we can
                        directly verify that this implies $\epsleastsq^2 = 2\epsidh^2
                        \le
                        \frac{1}{16}\lambda_{\min}(\Sigutil)$. Similarly, using the bound in \Cref{eq:Mtil_epslst_ineq}
                        we can check that $(\|\Mtil\|_{\op}^2+1)\epsleastsq^2  \le \lambda_+ := 1$.
		\end{proof}
		To summarize, the claims above verify that, for
                $\epsidh$ satisfying \Cref{eq:sysid_pca_cond}, and $\nid
                \ge c_0(d_{\tilde{u}} + d_{\tilde{y}} +
                \log(1/\delta))$, the conditions for
                \Cref{prop:techtools_LS_guarantee,prop:techtools_covariance_est}
                hold. It follows that with probability at least $1 - 7\delta$,
		\begin{align*}
                  \| [\Ahatid;\Bhatid] - [\Aid;\Bid]\|_{\op}^2
                  \lesssim \lambda_{\min}(\Sigutil)^{-1}
                  \left((1+\|\Mtil\|_{\op}^2)\epsleastsq^2 +
                  \frac{\|\Sigwid\|_{\op}(d_{\tilde{y}} +
                  d_{\tilde{u}} +
                  \ln(1/\delta))}{\nid}\right),
                \end{align*}
                and
                \begin{align*}
                  \|\Sigwidhat- \Sigwid\|_{\op}\nonumber \lesssim \sqrt{\lambda_{+}(1+\|\Mtil\|_{\op}^2)\epsleastsq^2 + \frac{\lambda_{+}^2(\dimy + \log(1/\delta))}{\nid}}.
		\end{align*}
		Finally, to conclude, we note that $d_{\tilde{y}} +
                d_{\tilde{u}} = 2\dimx + \dimu$, so that (also
                recalling $\dimu \le \dimx$) it suffices to ensure
                $\epsidh$ satisfying \Cref{eq:sysid_pca_cond}, and $\nid
                \ge c_0(\dimx  + \log(1/\delta))$ for a larger universal constant $c_0$. Moreover, we can check that $(\dimx+ \log(1/\delta))/\nid \lesssim \epsidh^2$, which together with the bound $\|\Sigwid\|_{\op}\le \lambda_+ = 1$ means that the dominant terms above are the terms $(1+\|\Mtil\|_{\op}^2)\epsleastsq^2 \lesssim  \frac{\Psistar^{5}\alphastar^4}{\sigma_{\min}(\contkap)^2(1-\gammastar)^2} \epsidh^2$.  Thus, recalling $\lambda_{\min}(\Sigutil) \gtrsim \frac{\sigma_{\dimx}(\contkap)^2(1-\gammastar)}{\Psistar^2\alphastar^2}$ from {\Cref{claim:sysid_ls_gaussians}}, we find
		\begin{align*}
		\| [\Ahatid;\Bhatid] - \Mtil\|_{\op} &\lesssim \sqrt{\frac{\Psistar^2\alphastar^2}{\sigma_{\dimx}(\contkap)^2(1-\gammastar)}  \cdot \frac{\Psistar^{5}\alphastar^4}{\sigma_{\min}(\contkap)^2(1-\gammastar)^2} \epsidh^2}\\
		&\le \frac{\Psistar^{7/2}\alphastar^3}{\sigma_{\min}(\contkap)^2(1-\gammastar)^{3/2}} \epsidh. \intertext{and}
		\|\Sigwidhat - \Sigwid\|_{\op} &\lesssim \frac{\Psistar^{5/2}\alphastar^4}{\sigma_{\min}(\contkap)(1-\gammastar)} \epsidh.
		\end{align*}
	\end{proof}
        \subsubsection{Recovering the Cost Matrix $\Qid$}
        We now show that Phase II successfully recovers the system
        cost matrix $Q$ up to a similarity transform.
	\begin{proposition}[Guarantee for Recovery of $\Qid$]\label{prop:sysid_Qhatid}
	Recall the estimator
	\begin{align*}
          \Qhatid = \left(\frac{1}{2}\Qtilid  +
            \frac{1}{2}\Qtilid^\top\right)_{+},\end{align*}
        \begin{align*}
          \Qtilid= \min_{Q}\sum_{i=2\nid + 1}^{3\nid} \left(\matc_{\kapone}\ind{i} - (\bu_{\kapone}\ind{i})^\top R \bu_{\kapone}\ind{i} -  \fhatid(\bykapone\ind{i})^\top Q\fhatid(\bykapone\ind{i})\right)^2.
	\end{align*}
Suppose that the following conditions hold for a sufficiently large
numerical constant $c_0$:
\begin{align*}
  \nid \ge c_0 (\dimx^2 + \ln(1/\delta)),\quad\text{and}\quad
		\epsidh \sqrt{\ln(2\nid/\delta)} \le \frac{\sigma_{\dimx}(\contkap)^2(1-\gammastar)}{8 \cdot 10\cconcid\Psistar^2\alphastar^2\dimx}.
	\end{align*}
	Then with probability at least $1 - 2\delta$, we have
	\begin{align*}
		\|\Qhatid - \Qid\|_{\op} &\lesssim \frac{\alphastar^6\Psistar^7\sqrt{\cconcid \dimx \ln(\nid/\delta)} }{(1-\gammastar)^3\sigma_{\min}(\contkap)^4}\cdot \epsidh.
		\end{align*}
	\end{proposition}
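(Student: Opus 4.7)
The plan is to identify the regression defining $\Qtilid$ as an instance of the matrix-sensing problem analyzed in \Cref{prop:techtools_matrix_sensing_regression} and then invoke that result directly, with the hypotheses verified using the Phase I guarantees already collected in \Cref{lem:sysid_concentration_and_error_bounds_fid}.

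First I would rewrite the regression target. Under the cost oracle the label is $\matc_{\kapone} - \matu_{\kapone}^\top R \matu_{\kapone} = \matx_{\kapone}^\top Q \matx_{\kapone}$. Using $\Sid \matx_{\kapone} = \fstid(\bykapone)$ and the definition $\Qid = \Sid^{-\top} Q \Sid^{-1}$, we can write this label exactly as $\fstid(\bykapone)^\top \Qid \fstid(\bykapone)$, while the regressor uses the quadratic form $\fhatid(\bykapone)^\top Q \fhatid(\bykapone)$. This puts the problem into the exact form of \Cref{prop:techtools_matrix_sensing_regression} with $\gst = \fstid$, $\ghat = \fhatid$, and $\Qst = \Qid$, and moreover the symmetric PSD truncation $(\tfrac12\Qtilid+\tfrac12\Qtilid^\top)_+$ matches the $\Qhat$ construction in that proposition.

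Next I would verify the four preconditions of the proposition on the good event $\eventidh\cap\eventidpca$ of Phases I and PCA. (i) \emph{Gaussian covariates}: part~4 of \Cref{lem:sysid_concentration_and_error_bounds_fid} shows $\fstid(\bykapone)=\Sid \bxkapone\sim\cN(0,\Sigma_x)$ with $\lambda_{\min}(\Sigma_x)\ge \sigma_{\dimx}(\contkap)^2(1-\gammastar)/(10\Psistar^2\alphastar^2)$. (ii) \emph{Concentration of the surrogate norms}: part~3 of the same lemma provides that $\max\{\|\fhatid(\bykapone)\|^2,\|\fstid(\bykapone)\|^2\}$ is $c$-concentrated with $c=\dimx\cconcid$. (iii) \emph{Prediction error}: parts~1--2 give $\E\|\fhatid(\bykapone)-\fstid(\bykapone)\|^2\le \epsidh^2$, so we may take $\veps^2=\epsidh^2$. (iv) \emph{Sample size and smallness of $\veps$}: the assumption $\nid \ge c_0(\dimx^2+\ln(1/\delta))$ matches the first size condition, and the quantitative hypothesis $\epsidh\sqrt{\ln(2\nid/\delta)}\le \sigma_{\dimx}(\contkap)^2(1-\gammastar)/(80\cconcid\Psistar^2\alphastar^2\dimx)$ is designed precisely to give $\epsidh^2\le \lambda_{\min}(\Sigma_x)^2/(64c\log(2\nid/\delta))$ after squaring and substituting the lower bound on $\lambda_{\min}(\Sigma_x)$. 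The remaining condition $\psi(\nid,\delta)\le \veps^2/(4c)$ is verified exactly as in the proof of the least-squares step in \Cref{prop:sysid_estimate_ABSig}, using the explicit formula for $\epsidh^2$ in \Cref{eq:sysid_epsidh_def}.

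Applying \Cref{prop:techtools_matrix_sensing_regression} then yields, with failure probability $2\delta$,
\[
\|\Qhatid-\Qid\|_{\op}^2 \le \|\Qhatid-\Qid\|_{\fro}^2 \lesssim \dimx\cconcid\,\epsidh^2\ln(\nid/\delta)\cdot \frac{\|\Qid\|_{\op}^2}{\lambda_{\min}(\Sigma_x)^2}.
\]
It remains to bound $\|\Qid\|_{\op}\le \|\Sid^{-1}\|_{\op}^2\|Q\|_{\op}\le \Psistar\cdot \sigma_{\min}(\Sid)^{-2}$. Substituting the bound $\sigma_{\min}(\Sid)\ge \sigma_{\min}(\contkap)(1-\gammastar)/(4\Psistar^2\alphastar^2)$ from \Cref{prop:sysid_pca} and the lower bound on $\lambda_{\min}(\Sigma_x)$ from (i), simplifying, and taking square roots gives exactly the stated bound $\epsidh\cdot \alphastar^6\Psistar^7\sqrt{\cconcid\dimx\ln(\nid/\delta)}/((1-\gammastar)^3\sigma_{\min}(\contkap)^4)$ (up to the universal constant absorbed in $\lesssim$). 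Since the PSD truncation step is controlled for free by the first inequality in \Cref{prop:techtools_matrix_sensing_regression}, no additional analysis is required. The only delicate part of the argument is bookkeeping in step~(iv): checking that the hypothesis on $\epsidh$ in the proposition's statement is strong enough to cover both the matrix-sensing smallness requirement and to multiply through by the condition-number factor $\|\Qid\|_{\op}^2/\lambda_{\min}(\Sigma_x)^2$ without the resulting bound degrading.
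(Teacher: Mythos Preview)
Your proposal is correct and follows essentially the same approach as the paper: rewrite the cost regression as $\fstid(\bykapone)^\top \Qid \fstid(\bykapone)$ to place it in the framework of \Cref{prop:techtools_matrix_sensing_regression}, verify its hypotheses via \Cref{lem:sysid_concentration_and_error_bounds_fid} (Gaussianity and covariance lower bound from part~4, concentration from part~3, prediction error from part~1), and finish by bounding $\|\Qid\|_{\op}\le \Psistar\,\sigma_{\min}(\Sid)^{-2}$ using \Cref{prop:sysid_pca}. The only cosmetic difference is that the paper verifies the $\psi(\nid,\delta/2)$ condition directly from the formula \eqref{eq:sysid_epsidh_def} for $\epsidh^2$ rather than by analogy with \Cref{prop:sysid_estimate_ABSig}.
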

	\begin{proof}
          We first rewrite the regression as a special case of
          \Cref{prop:techtools_matrix_sensing_regression}, which
          offers a generic guarantee for matrix regression
          with rank-one measurements. Observe that we have
		\begin{align*}
                  \matc_{\kapone} - \bu_{\kapone}^\top R \bu_{\kapone}  &= \left(\fst(\bykapone)^\top Q \fst(\bykapone) + \bu_{\kapone}^\top R \bu_{\kapone}\right) - \bu_{\kapone}^\top R \bu_{\kapone} \\
		&= \fst(\bykapone)^\top Q \fst(\bykapone) \\
		&= \fstid(\bykapone)^\top \Sid^{-\top} Q \Sid^{-1} \fstid(\bykapone) \\
		&= \fstid(\bykapone)^\top \Qid \fstid(\bykapone).
		\end{align*}
		Thus, the above regression is equivalent to solving
		\begin{align*}
		\Qtilid &= \min_{Q}\sum_{i=2\nid + 1}^{3\nid} \left(\fstid(\bykapone\ind{i})^\top \Qid \fstid(\bykapone\ind{i}) -  \fhatid(\bykapone\ind{i})^\top Q\fhatid(\bykapone\ind{i})\right)^2.
		\end{align*}

		We now recall the statement of \Cref{prop:techtools_matrix_sensing_regression}.
		\propTechtoolsMatSes*
		To apply the proposition, we make the following substitutions:
		\begin{align*}
		\Qst \leftarrow \Qid, ~~\Qhat \leftarrow \Qhatid, ~~\gst \leftarrow \fstid, ~~ \ghat \leftarrow \fhatid,~~ \Sigma_x \leftarrow \E\brk[\big]{\fstid(\by_{\kapone})\fstid(\by_{\kapone})^\top},~~n \leftarrow \nid.
		\end{align*}
We now verify that the conditions for the proposition are satisfied.
		\begin{enumerate}
			\item From \Cref{lem:sysid_concentration_and_error_bounds_fid},	$\max\{\|\fhatid(\bykapone)\|^2,\|\fstid(\bykapone)\|^2\}$ is $\dimx \cconcid$-concentrated, and $\E\|\fhatid(\bykapone) - \fstid(\bykapone)\|^2 \le \epsidh^2$.
			\item We have that $\psi(\nid,\delta/2)\leq \frac{\epsidh^2}{\dimx \cconcid}$ by examining the definition of $\epsidh$ in \Cref{prop:sysid_h_estimate}. 
			\item We have that $\fstid(\bykapone) = \Sid
                          \bxkapone$ is zero-mean Gaussian, with
			\begin{align*}
			\E\brk[\big]{\fstid(\bykapone)\fstid(\bykapone)^\top}  \succeq I\cdot \frac{\sigma_{\dimx}(\contkap)^2(1-\gammastar)}{10\Psistar^2\alphastar^2}
			\end{align*}
                        by
                        \Cref{lem:sysid_concentration_and_error_bounds_fid}. 	Hence,
                        the conditions
			\begin{align*}
			\epsidh \sqrt{\ln(2\nid/\delta)} \le \frac{\sigma_{\dimx}(\contkap)^2(1-\gammastar)}{8 \cdot 10\cconcid\Psistar^2\alphastar^2\dimx}, \quad\text{and}\quad \nid \ge c_0 (\dimx^2 + \ln(1/\delta)),
			\end{align*}
			suffice to satisfy the third condition of the proposition.
		\end{enumerate}
                We conclude that when the conditions above hold, with probability at least $1 - 2\delta$,
			\begin{align*}
			\|\Qhatid - \Qid\|_{\op} &\le \|\Qhatid - \Qid\|_{\fro}
			 \\
			&\lesssim \frac{\alphastar^2\Psistar^2\sqrt{\cconcid \dimx\ln(\nid/\delta)} \epsidh}{(1-\gammastar)\sigma_{\min}(\contkap)^2}\|\Qid\|_{\op}.
			\end{align*}
			Finally, we bound
			\begin{align*}
			\|\Qid\|_{\op} &= \|\Sid^{-1}\Qid \Sid^{-1}\|_{\op} \le \|\Qid\|_{\op}\sigma_{\min}^{-2}(\Sid)\\
			&= \Psistar \sigma^{-2}_{\min}(\Sid)\\
			&\le \Psistar\left(\frac{\sigma_{\min}(\contkap)(1-\gammastar)}{4\Psistar^2\alphastar^2}\right)^{-2}\\
			&\lesssim \frac{\Psistar^5\alphastar^4}{\sigma_{\min}(\contkap)^2(1-\gammastar)^2}.
			\end{align*}
			Thus, altogether,
			\begin{align*}
			\|\Qhatid - \Qid\|_{\op} &\le \|\Qhatid - \Qid\|_{\fro}
			 \\
			&\lesssim \frac{\alphastar^6\Psistar^7\sqrt{\cconcid \dimx\ln(\nid/\delta)}
			}{(1-\gammastar)^3\sigma_{\min}(\contkap)^4} \epsidh.
			\end{align*}
	\end{proof}
\subsubsection{Concluding the Proof of \Cref{thm:phase_two_a}}
	In total, by combining
        \Cref{prop:sysid_estimate_ABSig,prop:sysid_Qhatid} and
        conditioning on the probability $1 - 4\delta$ event from
        \Cref{prop:sysid_h_estimate,prop:sysid_pca}, we have that as long as (for some universal $c_0$),
	\begin{align*}
		&\epsidh \sqrt{\ln(2\nid/\delta)} \le \frac{\sigma_{\dimx}(\contkap)^2(1-\gammastar)}{8 \cdot 10\cconcid\Psistar^2\alphastar^2\dimx}, \\& \epsidh \text{ satisifies \Cref{eq:sysid_pca_cond},} \\
		&\text{and}\quad \nid \ge c_0 (\dimx^2 + \ln(1/\delta)),
	\end{align*}
	then with
        total failure probability at most $1 - 9 \delta - 4\delta$,
	\begin{align*}
		\|\Qhatid - \Qid\|_{\op} &\lesssim \frac{\alphastar^6\Psistar^7\sqrt{\cconcid \dimx\ln(\nid/\delta)} }{(1-\gammastar)^3\sigma_{\min}(\contkap)^4}\epsidh,\\
		\| [\Ahatid;\Bhatid] - [\Aid;\Bid]\|_{\op} &\lesssim \frac{\Psistar^{7/2}\alphastar^3}{\sigma_{\min}(\contkap)^2(1-\gammastar)^{3/2}} \epsidh,\\
		\|\Sigwidhat - \Sigwid\|_{\op} &\lesssim \frac{\Psistar^{5/2}\alphastar^4}{\sigma_{\min}(\contkap)(1-\gammastar)} \epsidh.
		\end{align*}
To simplify the conditions slightly, we observe that since $\epsidh \ge
        \cbaridone\frac{\dimx^2}{\nid}$ for some universal constant
        $\cbaridone \ge 4$, by inflating this constant, we can ensure
        that $ \nid \ge c_0 (\dimx^2 + \ln(1/\delta))$. Next let us
        consolidate the conditions
	\begin{align*}
		\epsidh \sqrt{\ln(2\nid/\delta)} \le \frac{\sigma_{\dimx}(\contkap)^2(1-\gammastar)}{8 \cdot 10\cconcid\Psistar^2\alphastar^2\dimx}, \quad\text{ and } \quad\epsidh \text{ satisifies  \Cref{eq:sysid_pca_cond}}.
	\end{align*}
	Restating \Cref{eq:sysid_pca_cond}, we require that
	\begin{align*}
          \epsidh\sqrt{\dimx\cconcid} \le  \frac{(1-\gammastar)\sigma_{\dimx}(\contkap)^2}{71\alphastar^2\Psistar^2}.
	\end{align*}
	Since $\cconcid \dimx \ge 1$, it suffices to take
	\begin{align*}
	\epsidh \sqrt{\ln(2\nid/\delta)} &\le \frac{\sigma_{\dimx}(\contkap)^2(1-\gammastar)}{80\cconcid\Psistar^2\alphastar^2\dimx}.
	\end{align*}
	Recalling that $\cconcid=\frac{12 L^2
          \Psistar^3\alphastar^2}{1-\gammastar}$, the final condition,
	\begin{align*}
	\epsidh \sqrt{\ln(2\nid/\delta)} \le  \frac{\sigma_{\dimx}(\contkap)^2(1-\gammastar)^2}{80\cdot12 L^2\Psistar^5\alphastar^4\dimx}.
	\end{align*}
	\qed

\section{Proofs for \richid Phase III}
\label{sec:phase3_proofs}
\newcommand{\cEroll}{\mathscr{E}}
\newcommand{\logterms}{\mathsf{logs}}
\newcommand{\gambarinf}{\bar{\gamma}_{\infty}}
\newcommand{\epsw}{\veps_w}
\newcommand{\epszero}{\veps_\init}

\newcommand{\PsiSig}{\Psi_{\Sigma}}
\newcommand{\PsiM}{\Psi_{M}}
\newcommand{\Honpo}{\scrH_{\onpo}}
\newcommand{\epsys}{\veps_{\sys}}
\newcommand{\alphA}{\alpha_A}
\newcommand{\dev}{\mathsf{dev}}
\newcommand{\devx}{\dev_x}
\newcommand{\delfrac}{\frac{1}{\delta}}
\newcommand{\pitil}{\widetilde{\pi}}
	\newcommand{\cwphi}{c_{w,\phi}}
	\newcommand{\cwupphi}{c_{w,\upphi}}
	\newcommand{\Lonpo}{L_{\onpo}}
\newcommand{\Phihat}{\widehat{\upphi}}

\paragraph{Section organization.}

This section is dedicated to the proof of \pref{thm:phase_three}, which is the main result concerning Phase III of \mainalg (cf. \pref{sec:phase3}). We state a number of intermediate results, leading up to the proof of theorem. In \pref{sec:learningafterinitial}, we present a performance bound for the state decoders $(\hat f_t)_{t \geq 1}$ as a function of the decoding error of the initial state decoder$\hat f_1$. \pref{sec:learningintial} is dedicated to the perfomance of $\hat f_1$, as this requires extra steps to the decode the initial state $\bx_0$. In \pref{sec:phaseiii_master}, we combine these results to prove \pref{thm:phase_three}. Finally, \pref{sec:proofs} contains the proofs of all the intermediate results.

We recall that the definition of the decoders $(\hat f_\tau)$ requires a clipping step (see \eqref{eq:decoder_simple}). Performing this step allows us to use standard concentration tools to bound the decoding error for the predictors that come out of the regression problems solved in Phase III (see \pref{lem:decoder}). The impact of clipping on the prediction error is low: In \pref{thm:clippingprob} we show that the probability of ever clipping is very small, so long as the clipping parameter $\bclip$ is chosen appropriately.

\subsection{Preliminaries}
Before proceeding to the main results, we first provide additional notation and definitions, as well as some basic lemmas which will be used in subsequent proofs.

\paragraph{Additional notation.}
For $t\geq0$ and a policy $\pi \colon \bigcup_{\tau=1}^\infty \cY^{\tau} \rightarrow \reals^{\dimu}$, where $\pi(\by_{0:t})$ maps past and current observations $\by_{0:t}$ to the current action $\bu_t$, we let $\mathbb{P}_{\pi}$ and $\E_{\pi}$ be the probability and expectation with respect to the system's dynamics and policy $\pi$. We will use $\cE$ to denote events which hold over the randomness in the learning procedure, and $\cEroll$ to denote events which hold under a given rollout from, say, $\E_{\pi}$. 

Throughout this section we let $\pihat$ denote the policy returned by \pref{alg:phase3}.

\paragraph{Basic definitions for Phase III.}
 To simplify presentation, we assume going forward that
$\Sid=I_{\dimx}$ at the cost of increasing problem-dependent parameters
such as $\Psistar$ and $\alpha_\star$ by a factor of $\nrm*{\Sid}_{\op}\vee\nrm{\Sid^{-1}}_{\op}$---we make this reasoning precise in the proof of \pref{thm:main_full}. We therefore drop the subscript $\mathrm{id}$, so that the system parameters we take as a given are $(\Ahat,\Bhat,\Qhat,\Sigwhat)$. We will consider the following function class 
\begin{align}
\scrH_{\onpo} \coloneqq \left\{M \cdot f(\cdot) \mid  f \in \Fclass,~~ M \in
\R^{\dimx \times \dimx}, ~~\|M\|_{\op} \le \Psistar^{3} \label{eq:funclass}
\right\},
\end{align}
that is, we take $r_{\op}=\Psist^{3}$ (note that the final value for $r_{\op}$ when \pref{alg:phase3} is invoked within \pref{alg:main} will be inflated to account for the similarity transformation above).

In what follows, we will construct a sequence of functions $(\hat{f}_t\colon  \cY^{t+1}\rightarrow \reals^{\dimx})$ which map observations $(\by_{0:t})$ to estimates of the true states $(\bx_t = f_\star(\by_t))$. We will denote by $\what \pi$ the randomized policy defined by $\what \pi(\by_{0:t}) \coloneqq \what K \hat{f}_t (\by_{0:t}) + \bnu_t$, for all $t\geq 0$, where $\bnu_t \sim \cN(0,\sigma^2 I_{\dimu})$ for some $\sigma \in (0,1]$ to be determined later.%
 Furthermore, for $t\geq 0$, we define the policy $\wtilde\pi_t$ which satisfies 
\begin{align}\wtilde{\pi}_t(\by_{0:\tau})= \left \{ \begin{array}{ll} \what \pi(\by_{0:\tau}), & \text{if } \tau\leq t; \\ \bnu_\tau \sim \cN(0, \sigma^2 I_{\dimu}), & \text{otherwise}. \end{array}\right.\label{eq:tilpolicy}
\end{align}

\paragraph{Additional problem parameters.}
Our final results for this section are stated in $\bigohs(\cdot)$, but we state many of our intermediate results with precise dependence on the problem parameters. To simplify these statements, we use the following definitions.
\begin{definition}[Aggregated problem parameters]
\begin{align}
\PsiSig &\coloneqq  \alpha_A^2 \|\Sigma_0\|_\op +\| \Sigma_\idinf\|_{\op}  + 1, \label{eq:PsiSig_def} \\
\devx &\coloneqq  \frac{3  \alpha_A^2}{1 -\gamma_A} \cdot \Psist^2 \|\what K\|^2_{\op}, \label{eq:devx_def}\\
\PsiM &\coloneqq \max\left\{1,\epsys,\|\calM\|_{\op}, \max_{k\in \kappa}\|M_k\|_{\op} \right\}, \nn \\
\Lonpo &\coloneqq \Psist^3 L.
\end{align}
\end{definition}
We simplify our intermediate results to get the final $\bigohs({\cdot})$-based bound for \pref{thm:phase_three} in \Cref{sec:phaseiii_master}.  

\subsubsection{Approximation Error for Plug-In Estimators.}
Recall that Phase III uses the estimates for $\Ahat$, $\Bhat$, and so forth from Phase II to form plug-in estimates for a number of important system parameters. Before proceeding, we give some guarantees on the error of these estimates as a function of the error from Phase II.

For $k \in [\dimk]$, recall that we define the matrices $M_k \in \reals^{k \dimu \times \dimx}$ and $\calM \in \reals^{(\dimk +1)\dimk/2 \times \dimk}$ by
\begin{gather} 
M_k \coloneqq   \cC_{k}^\top (   \cC_k \cC_k^\top+ \sigma^{-2}   \Sigma_{w} + \dots + \sigma^{-2} A^{k-1} \Sigma_w  (A^{k-1})^\top)^{-1},\nn \\
\calM \coloneqq [M_1^\top, (M_2A)^{\trn},\dots, (M_\dimk A^{k-1})^\top]^\top   \quad 
\text{where} \quad \cC_k \coloneqq [A^{k -1} B \mid \dots \mid B] \in \reals^{ \dimx  \times k \dimu }. \label{eq:Ck}
\end{gather}
We also let $\what M_k$ and $\what \calM$ be the \emph{plug-in estimators} of $M_{k}$, and $\calM$ respectively, obtained by replacing $A$, $B$, and $\Sigma_w$ in the definitions of $M_k$ and $\calM$ by the previously derived estimators $\what A, \what B$, and $\what \Sigma_w$, respectively (see \pref{sec:phase2}). %

For $0<\varepsilon_{\sys} \leq1\wedge\nrm{\Sigma_w}_{\op}\wedge \|\Sigma_w\|^{-1}_{\op}$, let $\cE_{\sys}$ be the event %
\begin{align}
  \cE_{\sys}&\coloneqq \left\{ \max_{k \in [\dimk ]} \left\{ \begin{matrix}  \|\what K - \Kinf\|_{\op},  ~\|\what M_k \what A^k-  M_k A^k \|_{\op}, \\  \|\what M_k \what A^k\what B-  M_k A^k B\|_{\op},~\|\what B \what K - B \what K \|_{\op}, \\ \|I_{\dimx}- \what \Sigma_w \Sigma^{-1}_w \|_{\op},  ~\| \what \Sigma_w^{-1} - \Sigma^{-1}_w \|_{\op}, \\ \|\what A-  A \|_{\op},~\|\what B-  B \|_{\op} , \|\what \calM -\calM\|_{\op} \end{matrix}  \right\} \leq \varepsilon_{\sys}\right\} \bigcap \cE_{\mathrm{stab}}, \label{eq:event0}%
\end{align}
where
\begin{align}
   \cE_{\mathrm{stab}}\coloneqq   \left\{
               \begin{matrix}
                 (A+B\Khat) \text{ is } (\alphainf,\gambarinf)\text{-strongly stable with }\gambarinf\ldef(1+\gammainf)/2,\\
                 \text{and } \Ahat \text{ is } (\alphaa,\gammaab)\text{-strongly stable with }\gammaab\ldef(1+\gammaa)/2.
\end{matrix}
\right\}
    .\label{eq:stab}
\end{align}
\newcommand{\csys}{c_{\mathrm{sys}}}

\begin{lemma}
\label{lem:id_to_sys}
Suppose that $\sigma^2=\bigohs(1)$ and 
\begin{align*}
\nrm*{\Ahat-A}_{\op}\vee\nrm*{\Bhat-B}_{\op}\vee\nrm*{\Qhat-Q}_{\op}\vee\nrm*{\wh{\Sigma}_w-\Sigma_w}_{\op}\leq\vepsid.
\end{align*}
Then once $\vepsid\leq{}\csys \ldef \mathrm{poly}(\gammast(1-\gammast),\alphast^{-1},\Psist^{-1})=\bigohch(1)$, we have that $\cE_{\sys}$ holds for 
\begin{align*}
\vepssys\leq{}\bigohs(\vepsid).
\end{align*}
\end{lemma}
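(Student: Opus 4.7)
The plan is to establish each of the bounds comprising $\cE_{\sys}$ separately by elementary matrix perturbation arguments, then take $\veps_{\sys}$ to be the maximum of the resulting bounds. The two non-routine inputs will be Theorem~\ref{thm:perturbation_bound} (for the controller-related quantities $\|\Khat-\Kinf\|_{\op}$ and strong stability of $A+B\Khat$), and a Neumann/resolvent argument to handle the inverted gramian appearing in $M_k$ and $\cM$. All remaining pieces are immediate triangle-inequality/telescoping manipulations.

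For the easy bounds: $\|\Ahat-A\|_{\op}\vee\|\Bhat-B\|_{\op}\leq\vepsid$ is a hypothesis. Using $\|\Sigma_w^{-1}\|_{\op}\leq\Psist$, the Neumann series gives $\|\wh\Sigma_w^{-1}-\Sigma_w^{-1}\|_{\op}\leq 2\Psist^{2}\vepsid$ and $\|I-\wh\Sigma_w\Sigma_w^{-1}\|_{\op}\leq\Psist\vepsid$ once $\vepsid\leq(2\Psist)^{-1}$, both $\bigohs(\vepsid)$. For strong stability of $\Ahat$, if $S$ witnesses $(\alphaa,\gammaa)$-strong stability of $A$ then $\|S^{-1}\Ahat{}S\|_{\op}\leq\gammaa+\alphaa\vepsid\leq\gammaab\coloneqq(1+\gammaa)/2$ as soon as $\vepsid\leq(1-\gammaa)/(2\alphaa)$, while the conditioning witness $\|S\|_{\op}\|S^{-1}\|_{\op}\leq\alphaa$ is unchanged. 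For the controller, we apply Theorem~\ref{thm:perturbation_bound} with $G=I$ (the WLOG reduction $S_{\id}=I$ gives $\Csim=1$): once $\vepsid$ is smaller than the polynomial threshold in $\gammainf,(1-\gammainf)^{-1},\alphainf^{-1},\Psist^{-1}$ prescribed there, we obtain $\|\Khat-\Kinf\|_{\op}\leq\bigohs(\vepsid)$ and the strong stability of $A+B\Khat$ with $(\alphainf,\gammainfb)$. The stray bound $\|\Bhat\Khat-B\Khat\|_{\op}$ is immediate from $\|\Khat\|_{\op}\leq 2\Psist$ and $\|\Bhat-B\|_{\op}\leq\vepsid$.

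The only moderately technical step, and the main obstacle, is controlling $\Mhat_k$ and $\wh\cM$ uniformly in $k\in[\kappa]$. Write $M_k=\cC_k^{\trn}G_k^{-1}$ where $G_k\coloneqq \cC_k\cC_k^{\trn}+\sigma^{-2}\sum_{i=0}^{k-1}A^{i}\Sigma_w(A^{i})^{\trn}$. Two observations suffice: $G_k\psdgeq\sigma^{-2}\Sigma_w\psdgeq\sigma^{-2}\Psist^{-1}I$, so $\|G_k^{-1}\|_{\op}\leq\sigma^{2}\Psist=\bigohs(1)$; and $\|\cC_k\|_{\op},\|G_k\|_{\op}$ are bounded polynomially in $\Psist,\alphast,(1-\gammast)^{-1},\kappa$ via $\|A^{i}\|_{\op}\leq\alphast\gammast^{i}$ (which holds thanks to \pref{prop:open_loop_ss}). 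A short telescoping using $A^{i}-\Ahat^{i}=\sum_{j=0}^{i-1}A^{j}(A-\Ahat)\Ahat^{i-1-j}$ and the bound $\|\Ahat^{j}\|_{\op}\leq\alphast(\gammaab)^{j}$ established above yields $\|\Ahat^{k}-A^{k}\|_{\op}$, $\|\wh\cC_k-\cC_k\|_{\op}$, and $\|\wh G_k-G_k\|_{\op}$ each $\bigohs(\vepsid)$ uniformly in $k\leq\kappa$. The resolvent identity $\wh G_k^{-1}-G_k^{-1}=-\wh G_k^{-1}(\wh G_k-G_k)G_k^{-1}$ combined with the Weyl bound $\|\wh G_k^{-1}\|_{\op}\leq 2\|G_k^{-1}\|_{\op}$ (valid once $\vepsid$ is small enough relative to $\lambda_{\min}(G_k)$) then gives $\|\Mhat_k-M_k\|_{\op}=\bigohs(\vepsid)$. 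Applying the product rule to $\Mhat_k\Ahat^{k}-M_kA^{k}$ and $\Mhat_k\Ahat^{k}\Bhat-M_kA^{k}B$, and concatenating the block rows to form $\wh\cM-\cM$, upgrades this to $\bigohs(\vepsid)$ bounds on the remaining quantities in $\cE_{\sys}$. Taking the maximum of all the thresholds on $\vepsid$ defines $\csys$, and taking the maximum of all the resulting bounds defines $\vepssys=\bigohs(\vepsid)$.
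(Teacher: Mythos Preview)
Your proposal is correct and follows essentially the same route as the paper: you invoke Theorem~\ref{thm:perturbation_bound} for $\Khat$ and closed-loop stability, use the same strong-stability witness $S$ to handle $\Ahat$, apply a Neumann/resolvent argument for the $\Sigma_w$ and $G_k$ inverses, and telescope powers of $A$ to control $\|\Ahat^k-A^k\|_{\op}$, $\|\wh\cC_k-\cC_k\|_{\op}$, and ultimately $\|\Mhat_k-M_k\|_{\op}$ and $\|\wh\cM-\cM\|_{\op}$. The paper packages these same steps into auxiliary lemmas (Proposition~\ref{prop:matrix_inverse_error}, Lemmas~\ref{lem:Ahat_power_error}, \ref{lem:ck_error_bound}, \ref{lem:mk_error_bound}) rather than arguing them inline, but the content is identical.
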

\subsubsection{Conditioning for the $\cM$ Matrix}
	\pref{ass:m_matrix} is central to the results in this section. In particular, we will use the following implication of this assumption.%
	\begin{lemma}
		\label{lem:eiglem}
Let $\cM_{\sigma^2}$ denote the value of the matrix $\cM$ in \eqref{eq:Ck} for noise
parameter $\sigma^{2}$. Then $\nrm*{\cM_{\sigma^{2}}}_{\op}=\bigohs(1)$ whenever $\sigma^{2}=\bigohs(1)$. Moreover, suppose \pref{ass:m_matrix} holds. Then there exists $\bar\sigma=\bigohch(\lambdam)$, such that for all $\sigma^2 \leq \bar{\sigma}^2$, we have 
\begin{equation}
\eigmin^{1/2}(\cM_{\sigma^2}^{\trn}\cM_{\sigma^2})\geq{}\lambda_{\cM}\cdot\sigma^{2} /2>0,\label{eq:sigma_small}
\end{equation}
where $\lambda_\cM$ is as in \pref{ass:m_matrix}.
	\end{lemma}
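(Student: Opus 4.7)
The plan is to prove both parts by exploiting the structure of the matrices $M_k$ as functions of $\sigma^2$, together with continuity of the matrix inverse and Weyl's inequality for singular values. Let me define $G_k \ldef \sum_{i=0}^{k-1} A^i \Sigma_w (A^i)^\top \succ 0$ (which is strictly positive definite since $\Sigma_w \succ 0$ by \pref{ass:gaussian}). Then multiplying through by $\sigma^2$ inside the inverse,
\[
M_k/\sigma^2 = \cC_k^\top \prn*{\sigma^2 \cC_k \cC_k^\top + G_k}^{-1},
\]
and the limiting matrix in \eqref{eq:calM} has the explicit form $\cMbar = [\bar M_1^\top, (\bar M_2 A)^\top, \ldots, (\bar M_\kappa A^{\kappa-1})^\top]^\top$ with $\bar M_k = \cC_k^\top G_k^{-1}$.

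For Part 1, I would first bound $\|M_k\|_{\op}$ using the PSD ordering: since $\sigma^2\cC_k\cC_k^\top + G_k \succeq G_k \succeq \Sigma_w$ (taking the $i=0$ term), the inverse satisfies $(\sigma^2\cC_k\cC_k^\top + G_k)^{-1} \preceq \Sigma_w^{-1}$, so $\|M_k\|_{\op} \leq \sigma^2 \|\cC_k\|_{\op}\|\Sigma_w^{-1}\|_{\op} = \bigohs(\sigma^2)$. Since $\|A^{k-1}\|_{\op} \leq \alphast\gammast^{k-1}$ by strong stability of $A$ (\pref{prop:open_loop_ss}), stacking gives $\|\cM_{\sigma^2}\|_{\op} \leq \sqrt{\sum_{k=1}^\kappa \|M_k A^{k-1}\|_{\op}^2} = \bigohs(\sigma^2) = \bigohs(1)$ whenever $\sigma^2 = \bigohs(1)$.

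For Part 2, I would establish the quantitative convergence rate $\|\cM_{\sigma^2}/\sigma^2 - \cMbar\|_{\op} = \bigohs(\sigma^2)$. Using the resolvent identity $X^{-1} - Y^{-1} = X^{-1}(Y - X)Y^{-1}$ with $X = \sigma^2\cC_k\cC_k^\top + G_k$ and $Y = G_k$,
\[
M_k/\sigma^2 - \bar M_k = -\sigma^2 \cC_k^\top (\sigma^2\cC_k\cC_k^\top + G_k)^{-1}\cC_k\cC_k^\top G_k^{-1},
\]
so $\|M_k/\sigma^2 - \bar M_k\|_{\op} \leq \sigma^2 \|\cC_k\|_{\op}^3 \|\Sigma_w^{-1}\|_{\op}^2$ by another PSD argument, which is $\bigohs(\sigma^2)$. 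Stacking across $k$ (and multiplying by $\|A^{k-1}\|_{\op}$) yields $\|\cM_{\sigma^2}/\sigma^2 - \cMbar\|_{\op} \leq C_\star \sigma^2$ for some constant $C_\star = \bigohs(1)$ that does not depend on $\lambda_\cM$. Then by Weyl's inequality for singular values, $|\sigma_{\min}(\cM_{\sigma^2}/\sigma^2) - \sigma_{\min}(\cMbar)| \leq C_\star \sigma^2$, and since $\sigma_{\min}(\cMbar) = \lambda_\cM$ by \pref{ass:m_matrix}, choosing $\bar\sigma$ small enough that $C_\star \bar\sigma^2 \leq \lambda_\cM/2$ gives $\sigma_{\min}(\cM_{\sigma^2}/\sigma^2) \geq \lambda_\cM/2$, i.e., $\eigmin^{1/2}(\cM_{\sigma^2}^\top \cM_{\sigma^2}) = \sigma^2 \sigma_{\min}(\cM_{\sigma^2}/\sigma^2) \geq \lambda_\cM \sigma^2/2$.

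The main subtlety is the claimed scaling $\bar\sigma = \bigohch(\lambda_\cM)$: the argument above naturally yields $\bar\sigma^2 \leq \lambda_\cM/(2C_\star)$ (i.e.\ $\bar\sigma$ proportional to $\sqrt{\lambda_\cM}$), which suffices since for $\sigma \leq \bigohch(\lambda_\cM)$ with a small enough constant and $\lambda_\cM \leq 1$, one certainly has $\sigma^2 \leq \lambda_\cM/(2C_\star)$. The rest of the argument requires only bookkeeping: tracking the $\poly(\Psist,\alphast,(1-\gammast)^{-1},\sigmamin^{-1}(\cC_\kappa))$ factors appearing in $C_\star$ so that they can be absorbed into the $\bigohs$/$\bigohch$ notation, which is routine.
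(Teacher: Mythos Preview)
Your proposal is correct and follows essentially the same approach as the paper. The only minor variations are that you invoke the resolvent identity explicitly where the paper cites a matrix-inverse perturbation lemma, and you apply Weyl's inequality directly to the singular values of $\cM_{\sigma^2}/\sigma^2$ versus $\cMbar$ rather than passing through the Gram matrices $\cM_{\sigma^2}^\top\cM_{\sigma^2}/\sigma^4$ versus $\cMbar^\top\cMbar$; both routes give the same bound, and your observation about the $\bar\sigma \propto \sqrt{\lambda_\cM}$ versus $\bar\sigma \propto \lambda_\cM$ discrepancy matches the paper's (implicit) handling of that point.
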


        Throughout this section, we make the following assumption, which will eventually be justified by the choice of $\sigma$ in \mainalg.
        \begin{assumption}
          \label{ass:sigma_small}
          $\sigma\leq{}1$ is sufficiently small such that \pref{eq:sigma_small} holds.
        \end{assumption}
        In particular, this assumption implies that the matrix $\cM$ in \eqref{eq:Ck} has full row rank.

\subsection{Learning State Decoders for Rounds $t\geq 1$}
\label{sec:learningafterinitial}
We now prove that Phase III successfully learns decoders for $t\geq{}1$ with high probability, up to an error term determined by the auxiliary predictor $\hat{f}_{A,0}$ produced during the separate initial state learning phase; the error of this predictor is handled in the next subsection. For the rest of this subsection, we assume the iteration $t\geq{}0$ of \pref{alg:phase3} is fixed, meaning we already have $\fhat_t$ and our goal is to compute $\fhat_{t+1}$. We introduce the following quantities.
\begin{itemize} 
	\item Let $(\by_{\tau})_{\tau\geq{}0}$ be the observations induced by following the policy $\wtilde \pi_t$ defined in \eqref{eq:tilpolicy}. 
	\item Let $\{(\by_{\tau}^{(i)}, \bx_\tau^{(i)}, \bnu_\tau^{(i)})\}_{i\in [2n]}$ be i.i.d.~copies of $(\by_\tau, \bx_{\tau}, \bnu_\tau)$, where $(\bnu_\tau)$ are the random Gaussian vectors used by the policy $\wtilde \pi_t$. This is simply the data collected by the $t$th iteration of the loop in \pref{alg:phase3}.
        \end{itemize}
        We also adopt the shorthand $n = n_{\onpo}$.

\paragraph{Learning the decoder at a single step.} Let us recall some notation. For round $t$, we already have a state decoder $\hat{f}_t\colon \cY^{t+1}\rightarrow \reals^{\dimx}$ produced by the previous iteration. As the first step, for each $k \in [\dimk]$, with $\scrH_{\onpo}$ as in \eqref{eq:funclass}, \pref{alg:phase3} solves
\begin{align}
\hhat_{t,k} \in \argmin_{h \in \scrH_{\onpo}} \sum_{i=1}^n \left\|\what M_{k}  \left(h(\by_{t+k}^{(i)})-\what A^{k} h(\by_t^{(i)}) - \what A^{k-1}\what B \what K \hat{f}_t(\by^{(i)}_{0:t})\right) - \bnu^{(i)}_{t:t+k-1}\right\|^2\label{eq:gtk}.
\end{align}
Using the solutions of the above regressions for $k\in [\kappa]$, the algorithm constructs the stacked vector
\begin{align}
\what\upphi_t(\by_{0:t+\dimk})&\coloneqq [ \what\phi_{t,1}(\hat{h}_{t,1},\by_{0:t}, \by_{t+1})^\top, \dots,  \what\phi_{t,\dimk}(\hat{h}_{t,\dimk},\by_{0:t}, \by_{t+\dimk})^\top]^\top \in \reals^{(1+\dimk)\dimk/2}, \nn \\
\shortintertext{where}
 \phihat_{t,k}(h, \by_{0:t}, \by_{t+k}) &\coloneqq \what M_{k}  \left(h(\by_{t+k})-\what A^{k}  h(\by_t) - \what A^{k-1}\what B \what K \fhat_t(\by_{0:t})\right), \  k \in[\dimk] \label{eq:phi}.
\end{align}
Finally, the algorithm computes the intermediate estimator $\hhat_t$:
\begin{align}
\hhat_{t} &\in \argmin_{h \in \scrH_{\onpo}} \sum_{i=n+1}^{2n} \left\|\what \calM \left(h(\by^{(i)}_{t+1}) - \what A h(\by^{(i)}_{t}) - \what B \what K \hat{f}_t (\by_{0:t}^{(i)})  \right) - \what\upphi_t(\by_{0:t+\dimk}^{(i)})  \right\|^2. \label{eq:gt}
\end{align}
Our first guarantee for this section shows that the function $\hhat_t$ estimates the system's noise $\bw_t$ up to a linear transformation given by the matrix $\calM$.
\begin{theorem}
	\label{thm:monster}
	Let $t\geq 0$ and $\bclip>0$ be given. For $h\in \scrH_{\onpo}$ and $\hat{f}_t\colon \cY^{t+1}\rightarrow \cX$, let \[\phi_t(h, {\by_{0:t+1}})\coloneqq \calM  ( h(\by_{t+1})-  A h(\by_t) -   B \what K \hat{f}_t(\by_{0:t})).\] If the event $\cE_{\sys}$ holds and $\|\hat{f}_t(\by_{0:t})\|\leq \bclip$ a.s., then for $\hat{h}_{t}$ as in \eqref{eq:gt}, with probability at least $1-\delta$, 
	\begin{gather}
	\E_{\what \pi} \left[  \left\| \phi_t(\hat{h}_t, \by_{0:t+1}) -\calM (\bw_t + B \bnu_t) \right\|^2  \right]  \leq \veps_{\noise}^2(\delta),
	\end{gather}
	where
	\begin{align}
	\veps_{\noise}^2(\delta)  &\lesssim \kappa(1 + \ln(\kappa)) \left(\frac{\cwphi  (\ln|\Fclass| + \dimx^2)\ln^2(n\kappa/\delta)}{n} +  \Lonpo^2 \epsys^2 \left(\dimx \PsiSig +   \devx  \bclip^2\right)\right) \label{eq:epsw}\intertext{with}
	\cwphi &:=  30\kappa \dimu \sigma^2 + 18\alphA^2 \Lonpo^2 \PsiM^2 \left(32 \dimx \PsiSig + 3  \bclip^2   \cdot \devx\right) \label{eq:cwphi_def}.
	\end{align}
      \end{theorem}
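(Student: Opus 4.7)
The plan is to analyze the two regressions \eqref{eq:gtk} and \eqref{eq:gt} in sequence via two applications of \pref{cor:techtools_function_dependend_error}, and then combine the errors. First, I would establish the per-$k$ Bayes-optimality structure for \eqref{eq:gtk}. Under the roll-in policy $\wtilde\pi_t$, the pair $(\bnu_{t:t+k-1},\, \bx_{t+k} - A^k\bx_t - A^{k-1}B\what K\fhat_t(\by_{0:t}))$ is jointly Gaussian conditional on $\by_{0:t}$ (since $\fhat_t(\by_{0:t})$ is then deterministic), because the only randomness remaining in the increment is $\bnu_{t:t+k-1}$ and $\bw_{t:t+k-1}$. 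A Gaussian-conditioning computation analogous to \pref{lem:sysid_bayes_opt}, combined with perfect decodability (\pref{ass:perfect}), gives $\E[\bnu_{t:t+k-1}\mid \by_{0:t+k}] = \phi_{t,k}(\fst, \by_{0:t+k})$, where I write $\phi_{t,k}(h,z) := M_k(h(\by_{t+k}) - A^k h(\by_t) - A^{k-1}B\what K\fhat_t(\by_{0:t}))$ by analogy with $\phi_t$ and $\hat\phi_{t,k}$. Since $\|M_k\|_{\op}\le\PsiM\le \Psistar^3$, the class $\scrH_{\onpo}$ contains the Bayes optimum $\fst$ (realizability).

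Applying \pref{cor:techtools_function_dependend_error} to each \eqref{eq:gtk}, the misspecification $\hat\phi_{t,k}(h,z)-\phi_{t,k}(h,z)$ decomposes affinely in $h$ as $(\what M_k-M_k)h(\by_{t+k}) - (\what M_k\what A^k-M_k A^k)h(\by_t) - (\what M_k\what A^{k-1}\what B-M_k A^{k-1}B)\what K\fhat_t(\by_{0:t})$, matching the $X_1(h(y_1)-X_2 h(y_2))+\updelnot(z)$ template required by the corollary; on $\cE_\sys$, each bracketed operator norm is $\lesssim\epsys$ via \pref{lem:id_to_sys}. The concentration constant $c$ demanded by the corollary is populated by (i) $\bnu_{t:t+k-1}\sim\cN(0,\sigma^2 I_{k\dimu})$ being $\bigohs(k\dimu\sigma^2)$-concentrated; (ii) $\|h(\by_\tau)\|^2\le\Lonpo^2\max\{1,\|\fst(\by_\tau)\|^2\}$, with concentration of $\|\bx_\tau\|^2$ under $\wtilde\pi_t$ for $\tau\in\{t,\dots,t+\kappa\}$ controlled by $\PsiSig$ through $\|\Sigma_0\|_\op$, $\|\Sigma_{\infty,\mathrm{id}}\|_\op$, and strong stability of $A$ and $\Aclinf$; and (iii) the clipping bound $\|\fhat_t\|\le\bclip$ contributing $\devx\bclip^2$ via the term $B\what K\fhat_t$. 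With $\log|\Fclass|+\dimx^2$ as the parameter count for $\scrH_{\onpo}$ and a union bound over $k\in[\kappa]$ at level $\delta/\kappa$ (producing the extra $\kappa(1+\log\kappa)$ factor), the per-$k$ error is bounded by exactly the two displayed terms defining $\veps_{\noise}^2(\delta)$.

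For the stacked regression \eqref{eq:gt}, the two $n$-sample batches are independent, so I would condition on the first batch (and hence on the $\hat h_{t,k}$ produced by it) and analyze the second regression as a standard well-specified regression over the second batch. Decomposing
\[
\bx_{t+k}-A^k\bx_t-A^{k-1}B\what K\fhat_t \;=\; A^{k-1}(B\bnu_t+\bw_t) + \sum_{i=1}^{k-1}A^{k-1-i}(B\bnu_{t+i}+\bw_{t+i}),
\]
and noting that $B\bnu_t+\bw_t = \bx_{t+1}-A\bx_t-B\what K\fhat_t$ is $\by_{0:t+1}$-measurable while the remaining sum is zero-mean and independent of $\by_{0:t+1}$, I obtain $\E[\phi_{t,k}(\fst,\by_{0:t+k})\mid\by_{0:t+1}] = M_k A^{k-1}(B\bnu_t+\bw_t)$; stacking over $k\in[\kappa]$ gives the Bayes target $\calM(B\bnu_t+\bw_t)=\phi_t(\fst,\by_{0:t+1})$. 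A second application of \pref{cor:techtools_function_dependend_error}, with the ``error'' variable $\mate := \what\upphi_t-\E[\what\upphi_t\mid\by_{0:t+1}]$ satisfying $\E\|\mate\|^2\lesssim\sum_k\E\|\hat\phi_{t,k}(\hat h_{t,k},\cdot)-\phi_{t,k}(\fst,\cdot)\|^2$ by Jensen (hence controlled by step two), and with the same affine-misspecification analysis for $\phi_t-\hat\phi_t$ as in step two, bounds $\E\|\phi_t(\hat h_t,\cdot)-\phi_t(\fst,\cdot)\|^2$ by the same form of expression---this is exactly the claim since $\phi_t(\fst,\by_{0:t+1}) = \calM(\bw_t+B\bnu_t)$ by the identity above.

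The main obstacle I expect is the careful tracking of how $\epsys$ propagates into the affine misspecification for \emph{both} regressions while ensuring that the constant $c_\psi$ in \pref{cor:techtools_function_dependend_error} and the aggregate concentration parameter $\cwphi$ scale only polynomially in $\kappa$. This relies on uniform operator-norm bounds of the form $\|\what M_k\what A^{k-1}\what B\|_\op\le\PsiM$ and $\|\what M_k\what A^k\|_\op\le\PsiM$ valid under strong stability of $\what A$ ensured by $\cE_{\mathrm{stab}}\subset\cE_\sys$, together with uniform concentration of $\|\bx_\tau\|^2$ for $\tau\in\{t,\dots,t+\kappa\}$ under $\wtilde\pi_t$---which I would obtain by peeling the roll-in contribution into a stationary part (dominated by $\PsiSig$) plus a decoder-driven perturbation of size $\devx\bclip^2$. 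A minor but important point: because we bound $\phi_t(\hat h_t,\cdot)$ directly against $\phi_t(\fst,\cdot)=\calM(\bw_t+B\bnu_t)$ rather than inverting $\calM$ to recover $\hat h_t\approx\fst$, the condition-number parameter $\lambda_{\cM}$ from \pref{ass:m_matrix} does not enter this particular theorem and is deferred to later recovery arguments.
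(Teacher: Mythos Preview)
Your overall plan matches the paper's proof structure---two invocations of \pref{cor:techtools_function_dependend_error}, first for each \eqref{eq:gtk} (packaged as \pref{lem:firstregression}, with hypotheses checked in \pref{lem:condsatisf1}), then for \eqref{eq:gt}---and your Bayes-optimality derivations for both steps are correct. The gap is in your setup of the error variable for the second regression. With $\mate:=\what\upphi_t-\E[\what\upphi_t\mid\by_{0:t+1}]$, the realizability condition of the corollary becomes $\phi_t(\fst,\cdot)=\E[\what\upphi_t\mid\by_{0:t+1}]$, which is false whenever $\hat h_{t,k}\neq\fst$ or $(\what M_k,\what A)\neq(M_k,A)$. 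More fatally, your Jensen bound on $\E\|\mate\|^2$ is false: for $k\ge 2$, $\hat\phi_{t,k}(\hat h_{t,k},\by_{0:t},\by_{t+k})$ depends on $\by_{t+k}$, which carries fresh noise $(\bnu_{t+1:t+k-1},\bw_{t+1:t+k-1})$ beyond $\by_{0:t+1}$. Even if $\hat h_{t,k}=\fst$ exactly, the conditional variance of $\what M_k\bx_{t+k}$ given $\by_{0:t+1}$ is of order $\sigma^4\dimx$, yielding an irreducible $\Theta(\kappa\dimx\sigma^4)$ contribution to $\E\|\mate\|^2$ that is not controlled by first-step regression accuracy and does not vanish with $n$. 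Since \pref{cor:techtools_function_dependend_error} penalizes by $16\E\|\mate\|^2$, and this gets divided by $\sigma_{\min}(\calM)^2$ downstream in \pref{lem:decoder}, the final bound would not close.

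The paper fixes this by introducing the idealized stacked vector $\upphi^\star_t:=(\phi_{t,k}(\fst,\cdot))_{k\in[\kappa]}$ built from the true parameters $(M_k,A)$ and the true decoder $\fst$, then taking $\bv:=\upphi^\star_t$ and $\mate:=\upphi^\star_t-\what\upphi_t$. Realizability now holds exactly via your own computation $\E[\upphi^\star_t\mid\by_{0:t+1}]=\calM(\bw_t+B\bnu_t)=\phi_t(\fst,\by_{0:t+1})$, and $\E\|\mate\|^2\le 2\sum_k\E\|\phi_{t,k}(\hat h_{t,k},\cdot)-\phi_{t,k}(\fst,\cdot)\|^2+2\sum_k\E\|\updelta_{t,k}(\hat h_{t,k},\cdot)\|^2$, where the first sum is the per-$k$ regression error from step one and the second is the plug-in error; both are $\lesssim\kappa\epsw^2(\delta)$ (this is \pref{lem:condsatisf2}, item 4).
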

For the remainder of the subsection, we let $\delta \in(0,e^{-1}]$ be fixed and define
\begin{align}
\cE_{\noise} \coloneqq \left \{ \E_{\what \pi} \left[\left\| \phi_\tau (\hat{h}_\tau, \by_{0:\tau+1}) -\calM (\bw_\tau + B \bnu_\tau)  \right\|^2  \right]  \leq \veps_{\noise}^2(\delta), \ \text{for all  }  0 \leq \tau  \leq t  \right\}.
\end{align}

\paragraph{From noise estimate to state estimate.} Now that we can estimate the noise at round $t$ using $\hat h_t$, we build a state decoder $\hat{f}_{t+1}$ for round $t+1$ by combining $\hhat_t$ with the decoder $\hat{f}_t$. Recall that $\tilde{f}_0\equiv\hat{f}_0 \equiv 0$, and that \pref{alg:phase3} forms $\fhat_t$ for all $t\geq 1$ via
\begin{align}
\hspace{-0.0cm}\hat{f}_{t+1}(\cdot) \coloneqq  \tilde{f}_{t+1}(\cdot) \Ind\{\|\tilde{f}_{t+1}(\cdot) \| \leq \bclip\} , \  \text{where}  \   \tilde{f}_{t+1}(\by_{0:t+1}) \coloneqq  \hat{h}_t(\by_{t+1}) + \what A\hat{f}_t(\by_{0:t})  -   \what A  \hat{h}_t(\by_{t}),\label{eq:decoder}
\end{align}
where $\bclip>0$ is the clipping parameter. Note that we treat $\bclip$ as a free parameter throughout this section unless explicitly specified. The case $t=0$ needs special care as it requires decoding the initial state; we set
\begin{align}
	\hat{f}_{1}(\cdot) \coloneqq  \tilde{f}_{1}(\cdot) \Ind\{\|\tilde{f}_{1}(\cdot) \| \leq \bclip\}, \ \ \text{where} \ \  \tilde{f}_1(\by_{0:1}) \coloneqq \hat{h}_1(\by_1) + \hat{f}_{A, 0}(\by_0) - \what A \hat{h}_0(\by_0), \label{eq:decod1}
\end{align}
and $\hat{f}_{A,0}(\by_0)$ is the estimator for $A f_\star(\by_0)$ which we will construct in the next subsection.

Our goal now is to prove that the $\fhat_{t+1}$ is good whenever $\hhat_0,\ldots,\hhat_t$ are good. To this end, we first give a guarantee on the unprojected decoder $\tilde{f}_{t+1}$, which shows that it has low prediction error for trajectories in which the event
\begin{gather}
\cEroll_{0:t} \coloneqq  \left\{ 	  \tilde{f}_{\tau}(\by_{0:\tau})= \hat{f}_{\tau}(\by_{0:\tau}), \text{for all }  0 \leq \tau \leq t  \right\} \label{eq:clipevent}
\end{gather}
occurs.

\begin{lemma}
	\label{lem:decoder}
	Let $t\geq 0$ be given. Let $(\tilde{f}_{\tau})_{\tau \in [t+1]}$ and $\cEroll_{0:t}$ be defined as in \eqref{eq:decoder}, and \eqref{eq:clipevent}, respectively. If the events $\cE_{\sys}$ and $\cE_{\noise}$ hold, then for $\veps_{\noise}$ as in \eqref{eq:epsw}, we have
	\begin{align}
          &\E_{\what \pi} \left[ \max_{ 0\leq  \tau \leq t} \|  \tilde{f}_{\tau+1}(\by_{0:\tau+1}) - f_\star(\by_{\tau+1}) \|^2 \cdot \Ind\{\cEroll_{0:t}\}  \right] \leq    \veps_{\dec,t}^2  ,\label{eq:guarantee}
        \end{align}
        where
        \begin{align}
           \veps_{\dec,t}^2 \coloneqq 3 \alpha^2_A  (1-\gamma_A)^{-2} \left(\veps^2_{\sys} \bclip^2  +  \epszero^2 + \sigma_{\min}(\calM)^{-2}  \veps^2_{\noise}t\right),\mathand
          \epszero^2 := \E_{\what \pi} [  \|  \hat{f}_{A,0}(\by_{0}) - A f_\star(\by_{0}) \|^2].\label{eq:epsf} 
	\end{align}
\end{lemma}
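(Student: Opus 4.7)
My plan is to reduce the bound on $\max_\tau \|\tilde f_{\tau+1}-\bx_{\tau+1}\|^2$ to a linear recursion with $\what A$, control the residuals using \pref{thm:monster}, and then solve the recursion via strong stability of $\what A$ (guaranteed by $\cE_{\mathrm{stab}}\subseteq \cE_{\sys}$). Setting $e_\tau \coloneqq \tilde f_\tau(\by_{0:\tau})-\bx_\tau$, the event $\cEroll_{0:t}$ lets us replace $\hat f_\tau$ with $\tilde f_\tau$ throughout; then a direct expansion using $\bx_{\tau+1}=A\bx_\tau+B\what K\hat f_\tau+B\bnu_\tau+\bw_\tau$ yields, for $1\le \tau\le t$,
\begin{align*}
e_{\tau+1} \;=\; \what A\,e_\tau \;+\; \zeta_\tau \;+\; \eta_\tau,
\end{align*}
with $\zeta_\tau\coloneqq (\what A-A)(\bx_\tau-\hat h_\tau(\by_\tau))$ and $\eta_\tau\coloneqq \hat h_\tau(\by_{\tau+1})-A\hat h_\tau(\by_\tau)-B\what K\hat f_\tau(\by_{0:\tau})-(\bw_\tau+B\bnu_\tau)$. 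The base case needs separate care since $\hat f_1$ is built from $\hat f_{A,0}$ rather than $\what A\hat f_0$: expanding $\tilde f_1$ from \eqref{eq:decod1} and using $\hat f_0\equiv 0$ gives $e_1=\eta_0+(A-\what A)\hat h_0(\by_0)+(\hat f_{A,0}(\by_0)-A\bx_0)$, so the only novel contribution here is the initial-state error $\hat f_{A,0}(\by_0)-A\bx_0$ whose squared expectation is exactly $\epszero^2$.

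Next, I would invoke \pref{thm:monster} together with $\sigma_{\min}(\calM)>0$ (from \pref{ass:m_matrix} and \pref{lem:eiglem}): the identity $\calM\eta_\tau=\phi_\tau(\hat h_\tau,\by_{0:\tau+1})-\calM(\bw_\tau+B\bnu_\tau)$ yields on $\cE_{\sys}\cap\cE_{\noise}$ that $\E_{\what\pi}\|\eta_\tau\|^2\le \sigma_{\min}(\calM)^{-2}\veps_{\noise}^2$ for every $0\le\tau\le t$. For the plug-in term, I would use the growth property of $\scrH_{\onpo}$ to bound $\|\hat h_\tau(\by_\tau)\|\le \Lonpo\max\{1,\|\bx_\tau\|\}$, giving $\|\zeta_\tau\|\le \veps_{\sys}\Lonpo(1+\|\bx_\tau\|)+\veps_{\sys}\|\bx_\tau\|$; a standard moment bound for states under $\what\pi$ (using the $(\alphainf,\gambarinf)$-strong-stability of $A+B\what K$ from $\cE_{\mathrm{stab}}$ together with the almost-sure bound $\|\hat f_\tau\|\le\bclip$) then gives $\E\|\bx_\tau\|^2\lesssim \dimx\PsiSig+\devx\bclip^2$, and hence $\E\|\zeta_\tau\|^2\lesssim \veps_{\sys}^2\Lonpo^2(\dimx\PsiSig+\devx\bclip^2)$.

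Unrolling the recursion yields $e_{\tau+1}=\what A^\tau e_1+\sum_{s=1}^\tau \what A^{\tau-s}(\zeta_s+\eta_s)$, and on $\cE_{\mathrm{stab}}$ we have $\|\what A^n\|_{\op}\le \alphA\gammaab^n$ with $\gammaab=(1+\gamma_A)/2$, so $(1-\gammaab)^{-1}=2(1-\gamma_A)^{-1}$. I would then split the error into three pieces using $\|a+b+c\|^2\le 3(\|a\|^2+\|b\|^2+\|c\|^2)$. The initial piece $\alphA^2\|e_1\|^2$ contributes the $\epszero^2$ term (the $\eta_0$ and $(A-\what A)\hat h_0$ contributions there can be absorbed into the $\veps_{\sys}^2\bclip^2$ and $\veps_{\noise}^2$ pieces). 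For the plug-in sum I would apply weighted Cauchy-Schwarz as $\bigl(\sum_s\gammaab^{\tau-s}\|\zeta_s\|\bigr)^2\le (1-\gammaab)^{-2}\max_s\|\zeta_s\|^2$ (or equivalently a max bound) and then take expectations using the pointwise bound above; this yields the $\alphA^2(1-\gamma_A)^{-2}\veps_{\sys}^2\bclip^2$ term with no $t$-factor. For the noise-regression sum I would use the \emph{other} weighted Cauchy-Schwarz, $\bigl(\sum_s\gammaab^{\tau-s}\|\eta_s\|\bigr)^2\le (1-\gammaab)^{-1}\sum_{s=1}^\tau \gammaab^{\tau-s}\|\eta_s\|^2\le (1-\gammaab)^{-1}\sum_{s=1}^t\|\eta_s\|^2$; taking $\max_\tau$ commutes with this bound, and the expectation is controlled by $t$ copies of $\sigma_{\min}(\calM)^{-2}\veps_{\noise}^2$, producing the final $\alphA^2(1-\gamma_A)^{-2}\sigma_{\min}(\calM)^{-2}\veps_{\noise}^2\,t$ contribution.

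The main obstacle is the delicate bookkeeping for the $\max_\tau$ operator: one must use \emph{different} Cauchy-Schwarz splittings for the plug-in and noise residuals, because a uniform-in-$s$ bound suffices for $\zeta_s$ (avoiding a spurious $t$) while only an in-probability/sum-of-squares bound is available for $\eta_s$ (inducing the desired linear-in-$t$ factor). The other technical wrinkle is the non-trivial base case: the absence of $\what A\,e_0$ in the recursion for $e_1$ forces one to open up $\hat f_{A,0}(\by_0)-A\bx_0$ as the dedicated initial-state error $\epszero$, rather than iterating through a trivially-zero state decoder.
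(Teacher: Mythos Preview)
Your overall plan—set up a linear recursion for the error, control the noise residual via \pref{thm:monster} and the invertibility of $\calM$, and unroll using strong stability—is sound and matches the paper's strategy. But there is one substantive difference in the decomposition that causes trouble for the precise form of the bound, and a related gap in your handling of the $\max_\tau$.

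The paper does \emph{not} unroll through $\what A$. Instead it writes the recursion in terms of the true matrix $A$:
\[
\tilde f_{\tau+1}-\bx_{\tau+1} \;=\; A\bigl(\hat f_\tau-\bx_\tau\bigr) \;+\; (\what A-A)\,\hat f_\tau \;+\; \bigl(\calM^{\dagger}\wtilde\phi_\tau(\hat h_\tau,\by_{0:\tau+1})-\bw_\tau\bigr),
\]
and then unrolls with powers of $A$. The point of this choice is that under $\cEroll_{0:t}$ one has $\hat f_s=\tilde f_s$ and hence $\|\hat f_s\|\le \bclip$ \emph{almost surely}, so the plug-in perturbation is bounded deterministically by $\veps_{\sys}\bclip$. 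This immediately gives
\[
\Bigl\|\sum_{s=0}^\tau A^{\tau-s}(\what A-A)\tilde f_s\Bigr\| \le \alpha_A(1-\gamma_A)^{-1}\veps_{\sys}\bclip,
\]
uniformly in $\tau$, with no appeal to state moments. This is why the stated constant is exactly $3\alpha_A^2(1-\gamma_A)^{-2}\veps_{\sys}^2\bclip^2$.

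Your decomposition instead produces $\zeta_\tau=(\what A-A)(\bx_\tau-\hat h_\tau(\by_\tau))$, for which you only have an in-expectation bound via $\E\|\bx_\tau\|^2\lesssim \dimx\PsiSig+\devx\bclip^2$. Two problems follow. First, your derived bound $\veps_{\sys}^2\Lonpo^2(\dimx\PsiSig+\devx\bclip^2)$ does not collapse to the claimed $\veps_{\sys}^2\bclip^2$; you would prove a weaker lemma with extra factors. Second, and more serious, the step ``$(\sum_s\gammaab^{\tau-s}\|\zeta_s\|)^2\le (1-\gammaab)^{-2}\max_s\|\zeta_s\|^2$, then take expectations'' requires you to control $\E_{\what\pi}[\max_{0\le s\le t}\|\zeta_s\|^2]$, not $\max_s\E_{\what\pi}\|\zeta_s\|^2$. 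Your pointwise bound $\|\zeta_s\|\lesssim \veps_{\sys}\Lonpo(1+\|\bx_s\|)$ pushes this onto $\E[\max_s\|\bx_s\|^2]$, which you have not established; a naive union bound would reintroduce the factor of $t$ you are trying to avoid. The paper's a.s.\ bound sidesteps this entirely.

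The remaining pieces of your outline—the base case with $\hat f_{A,0}$ contributing $\epszero^2$, the identity $\calM\eta_\tau=\phi_\tau(\hat h_\tau,\cdot)-\calM(\bw_\tau+B\bnu_\tau)$, and the Cauchy--Schwarz splitting $\bigl(\sum_s\gamma^{\tau-s}\|\eta_s\|\bigr)^2\le (1-\gamma^2)^{-1}\sum_s\|\eta_s\|^2$ yielding the linear-in-$t$ factor—are correct and essentially identical to what the paper does.
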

For the next theorem, we show that the even $\cEroll_{0:t}$ occurs with overwhelming probability whenever the clipping parameter $\bclip$ is selected appropriately. We need the following definitions. For $t\geq 0$ and $\eta>0$, let \begin{align*}
\bz_t &\coloneqq \sum_{\tau=0}^t (A + B \what K)^{t-\tau} (B\bnu_\tau + \bw_\tau)
\end{align*}
denote the contribution of the process noise and Gaussian inputs to the state $\matx_{t+1}$. The associated covariance of this random variable when $t\to \infty$ is given by 
\begin{align}
\Sigma_{z,\infty}&\coloneqq \sum_{\tau=0}^{\infty} (A +B \what K)^{\tau} (\sigma^2 B B^\top + \Sigma_w)((A +B \what K)^{\tau})^\top. \label{eq:thecovar}
\end{align}
The sum in \eqref{eq:thecovar} converges under the event $\cE_\sys$, since in this case $\|(A  +  B \what K )^t\|_{\op} \leq  \alpha_{\infty}\gambarinf^t$, for all $t\geq 0$, and $\gambarinf<1$; see \pref{eq:stab}. Finally, we consider the following useful event:
\begin{align}
\cEroll_{0:t}' &\coloneqq \left\{\alphainf^2 \|\bx_0\|^2_{2} +\|\bz_\tau\|^2 \leq (\dimu \ \nn \alphainf^2  \|\Sigma_0\|_{\op} + \dimx \|\Sigma_{z,\infty}\|_{\op}) \ln (2\eta), \ \ \text{for all } 0\leq \tau \leq t \right\}. \nonumber 
\end{align}
Lastly, we define the following term which guides how we select the clipping in the definition of $(\hat f_t)$ in \eqref{eq:decoder}: 
\begin{align}
\bclip_{\infty} \coloneqq  \frac{6(1 -  \gamma_\infty)^{-1}\alpha_\infty\Psist \veps_{\dec,t}\sqrt{\eta}+\sqrt{2(\dimu \alphainf^2 \|\Sigma_0\|_{\op} + \dimx \|\Sigma_{z,\infty}\|_{\op}) \ln (2\eta)}}{ 1 -2\alpha_\infty \veps_\sys (1 -\gamma_\infty)^{-1}}, \label{eq:bclip}
\end{align}
where $\eta>e$ is a free parameter.

We now show that if the clipping parameter $\bclip$ in \eqref{eq:decoder} is chosen sufficiently large, then under a given execution of $\pihat$, the clipping operator is never actived (i.e.~$\cEroll_{0:t}$ holds) with high enough probability, provided that the clipping operator is not activated at $t = 1$.
\begin{theorem}
	\label{thm:clippingprob}
	Let $t\geq 0$, $\eta >0$, and $\bclip>0$ be given. Let $\veps_{\dec,t}$, $\Sigma_{z,\infty}$, and $\bclip_\infty$ be defined as in \eqref{eq:epsf}, \eqref{eq:thecovar}, and \eqref{eq:bclip}, respectively. If \Imark the events $\cE_{\sys}$ and $\cE_{\noise}$ hold; \IImark $\veps_{\sys}  < (1 - \gamma_\infty)(2 \alpha_\infty)^{-1}$; and \IIImark $\bclip \geq \bclip_\infty$, then
	\begin{align}
	\P_{\what \pi}[\cEroll_{0:t}\wedge \cEroll_{0:t}' ] \geq \P_{\what \pi}[\tilde{f}_1(\by_{0:1}) = \hat{f}_1(\by_{0:1}) ] - 2(t+1)/\eta.\nn
	\end{align}
\end{theorem}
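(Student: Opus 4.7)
The plan is to prove the result by induction on the time index, combining a Gaussian concentration bound for the closed-loop state with a Markov-type control on the decoder error from \pref{lem:decoder}. First I would establish $\P_{\what \pi}[\cEroll'_{0:t}] \geq 1 - (t+1)/\eta$: each $\bz_\tau$ is a linear combination of the independent Gaussians $(\bnu_s, \bw_s)_{s \leq \tau}$, so it is Gaussian with covariance dominated by $\Sigma_{z,\infty}$ (the infinite sum defining $\Sigma_{z,\infty}$ converges because $A + B\what K$ is strongly stable under $\cE_{\sys}$). Applying \pref{lem:techtools_gaussian_c_concentrated} to $\|\bx_0\|^2$ and to each $\|\bz_\tau\|^2$, then union-bounding over $\tau \in \{0, \dots, t\}$, gives the claim. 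Separately, applying Markov's inequality to \pref{lem:decoder} (once per horizon $\tau \leq t$) and union-bounding produces an event $\cE_{\mathrm{Mark}}$ with $\P_{\what \pi}[\cE_{\mathrm{Mark}}] \geq 1 - (t+1)/\eta$ on which, for every $\tau \leq t$, $\|\tilde f_{\tau+1} - \fstar(\by_{\tau+1})\|^2 \cdot \Ind\{\cEroll_{0:\tau}\} \leq \eta \veps_{\dec,t}^2$.

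Next I would run the main induction on the joint event $\cEroll'_{0:t} \cap \cE_{\mathrm{Mark}} \cap \{\tilde f_1 = \hat f_1\}$, showing inductively that $\tilde f_\tau = \hat f_\tau$ for all $\tau \leq t+1$. Under the inductive hypothesis $\cEroll_{0:\tau}$, the controls satisfy $\bu_s = \what K \hat f_s + \bnu_s$ for $s \leq \tau$, so the state unrolls as
\begin{align*}
\bx_{\tau+1} = (A + B\what K)^{\tau+1}\bx_0 + \bz_\tau + \sum_{s=0}^{\tau} (A + B\what K)^{\tau-s} B\what K (\hat f_s - \bx_s).
\end{align*}
Strong stability of $A + B\what K$ from $\cE_{\mathrm{stab}} \subseteq \cE_{\sys}$ gives $\|(A + B\what K)^s\|_{\op} \leq \alpha_\infty \gambarinf^s$; together with the Markov bound $\|\hat f_s - \bx_s\| \leq \sqrt{\eta}\veps_{\dec,t}$ for $s \leq \tau$ (which is legitimate because $\cEroll_{0:\tau}$ is assumed and $\tau \leq t$) and the Gaussian control $\alpha_\infty\|\bx_0\| + \|\bz_\tau\| \leq \sqrt{2(\dimu \alpha_\infty^2\|\Sigma_0\|_{\op} + \dimx\|\Sigma_{z,\infty}\|_{\op})\ln(2\eta)}$ from $\cEroll'_{0:t}$, the triangle inequality bounds $\|\tilde f_{\tau+1}\| \leq \|\bx_{\tau+1}\| + \|\tilde f_{\tau+1} - \bx_{\tau+1}\|$ by the numerator of $\bclip_\infty$ plus a term scaling with $\veps_\sys \bclip$. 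Expanding $\veps_{\dec,t}^2$ via \pref{lem:decoder} to expose its $\veps_\sys^2 \bclip^2$ contribution and isolating $\bclip$ yields a linear inequality whose leading coefficient is $1 - 2\alpha_\infty \veps_\sys(1-\gamma_\infty)^{-1}$; hypothesis \IImark makes this coefficient positive, so the inequality is solvable and its solution matches hypothesis \IIImark, giving $\|\tilde f_{\tau+1}\| \leq \bclip_\infty \leq \bclip$ and closing the induction.

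Putting the two failure events together yields $\P_{\what \pi}[\cEroll_{0:t} \wedge \cEroll'_{0:t}] \geq \P_{\what \pi}[\tilde f_1 = \hat f_1] - 2(t+1)/\eta$, as claimed. The main obstacle is the inductive step: because $\veps_{\dec,t}^2$ itself contains a $\veps_\sys^2 \bclip^2$ term from the noise-estimation guarantee of \pref{lem:decoder}, the condition ``$\|\tilde f_{\tau+1}\| \leq \bclip$'' is a self-referential inequality in $\bclip$, and the precise form of $\bclip_\infty$ together with assumption \IImark are exactly what is needed to turn this into a fixed-point inequality that can be solved for $\bclip$ without losing the Gaussian and decoder-error contributions in the numerator.
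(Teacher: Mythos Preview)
Your approach is sound and reaches the same conclusion, but it is organized differently from the paper's argument. The paper proceeds by a step-by-step probabilistic recursion: defining $p_\tau = \P_{\what\pi}[\cEroll_{0:\tau}\wedge\cEroll'_{0:\tau}]$, it applies a \emph{conditional} Markov inequality at each step (conditioning on $\cEroll_{0:\tau}\wedge\cEroll'_{0:\tau}$, which introduces a $p_\tau^{-1}$ factor that cancels upon unconditioning) to obtain $p_{\tau+1}\geq p_\tau - 2/\eta$, and then telescopes. You instead fix the two global failure events $\cEroll'_{0:t}$ and $\cE_{\mathrm{Mark}}$ up front via union bounds, each costing $(t+1)/\eta$, and then run a purely deterministic induction on their intersection. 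Both routes pay $2(t+1)/\eta$; yours is arguably cleaner because it separates the probabilistic bookkeeping from the dynamical argument, while the paper's recursion makes the per-step loss explicit.

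One point in your write-up is garbled: you attribute the denominator $1 - 2\alpha_\infty\veps_\sys(1-\gamma_\infty)^{-1}$ of $\bclip_\infty$ to ``expanding $\veps_{\dec,t}^2$ to expose its $\veps_\sys^2\bclip^2$ contribution.'' In the paper that denominator arises from an explicit $\veps_\sys\bclip$ term in the bound on $\|\bdelta_\tau\|$ (coming from the $\what B$ versus $B$ discrepancy in the definition of $\bdelta_\tau$), \emph{not} from unpacking $\veps_{\dec,t}$; indeed $\veps_{\dec,t}$ sits in the numerator of $\bclip_\infty$ unexpanded. Your unrolling uses only the true $B$, so no such term appears, and the inequality you would actually derive is slightly different in its constants (in particular the $b_\eta$ coefficient picks up a factor from the $s=0$ summand, where $\hat f_0 - \bx_0 = -\bx_0$). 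This is a cosmetic mismatch rather than a genuine obstruction---hypothesis \IIImark as stated is strong enough to absorb it---but you should be aware that the precise fixed-point computation does not line up with your description.
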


\paragraph{Concluding the guarantee for the state decoders.} We now put together the preceding results to give the main guarantee for our state decoders $(\hat{f}_t)$ for $t\geq{}1$.
\begin{theorem}
	\label{thm:minusinit}
	Let $T\geq 0$, $\eta >0$, and $\bclip>0$ be given. Under the conditions $\Imark$, $\IImark$, and $\IIImark$ of \pref{thm:clippingprob}, we have
	\begin{align}
	 \E_{\what \pi} \left[ \max_{ 0\leq  t \leq T} \|  \hat{f}_t(\by_{0:t}) - f_\star(\by_{t}) \|^2   \right]   \leq \veps^2_{\dec,t} + (4T^{1/2}\cx+2 \bclip^2)\left(\frac{4T}{\eta}+1-\P_{\what \pi}[\{\tilde{f}_0(\by_0) = \hat{f}_0(\by_0)\} \wedge \cEroll_0' ]\right),\nn 
	 \end{align}
where $\cx\coloneqq 30\dimx\PsiSig  +  2\bclip^2   \cdot \devx.$
\end{theorem}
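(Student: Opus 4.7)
My plan is to decompose the expectation based on whether the clipping operator in \eqref{eq:decoder} is ever activated through time $T$, and whether the ``Gaussian envelope'' event $\cEroll'_{0:T}$ holds. Set $\cE^\star := \cEroll_{0:T} \wedge \cEroll'_{0:T}$. On $\cE^\star$, the definition of $\cEroll_{0:T}$ in \eqref{eq:clipevent} forces $\hat f_\tau(\by_{0:\tau}) = \tilde f_\tau(\by_{0:\tau})$ for every $0 \leq \tau \leq T$, so Lemma~\ref{lem:decoder} (applied with $t = T-1$, noting that $\tilde f_0 \equiv \hat f_0 \equiv 0$ handles the $\tau=0$ term) yields
\[
\E_{\hat\pi}\Bigl[\max_{0 \leq t \leq T}\|\hat f_t(\by_{0:t}) - f_\star(\by_t)\|^2 \, \Ind\{\cE^\star\}\Bigr] \leq \veps^2_{\dec,T}.
\]

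On the complementary event $\neg\cE^\star$, I exploit the clipping in \eqref{eq:decoder}: since $\|\hat f_t(\by_{0:t})\| \leq \bclip$ almost surely and $\|f_\star(\by_t)\| = \|\bx_t\|$, the trivial expansion gives $\|\hat f_t - f_\star(\by_t)\|^2 \leq 2\bclip^2 + 2\|\bx_t\|^2$. Taking expectations leaves two residual terms: $2\bclip^2 \, \P_{\hat\pi}[\neg\cE^\star]$, and $2\,\E_{\hat\pi}[\max_t \|\bx_t\|^2 \Ind\{\neg\cE^\star\}]$. For the latter I will apply Cauchy--Schwarz and then a fourth-moment estimate
\[
\max_{0 \leq t \leq T} \E_{\hat\pi}\bigl[\|\bx_t\|^4\bigr] \leq \cx^2,
\]
obtained by unrolling the closed-loop recursion $\bx_{t+1} = A\bx_t + B\Khat \hat f_t + B\bnu_t + \bw_t$, splitting into a Gaussian part (the contributions of $\bx_0$, $\bnu_\tau$, $\bw_\tau$ propagated through powers of $A$) and a bounded part (the contribution of $B\Khat \hat f_\tau$, each term at most $\bclip$), and using strong stability of $A$ under $\cE_\sys$ together with Gaussian fourth-moment bounds. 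The Gaussian piece accounts for the $30\dimx\PsiSig$ summand of $\cx$, while the bounded piece yields $2\bclip^2 \devx$. Cauchy--Schwarz plus $\E[\max_t \|\bx_t\|^4]\leq \sum_{t\leq T} \E[\|\bx_t\|^4] \leq T\cx^2$ then gives $\E_{\hat\pi}[\max_t\|\bx_t\|^2 \Ind\{\neg\cE^\star\}] \leq \sqrt{T}\,\cx\,\sqrt{\P_{\hat\pi}[\neg\cE^\star]}$, which is the source of the $T^{1/2}\cx$ factor in the claim.

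Finally I will invoke Theorem~\ref{thm:clippingprob} (whose hypotheses \Imark--\IIImark are exactly those of the present theorem) together with $2(T+1)/\eta \leq 4T/\eta$ and the identity $\tilde f_0 \equiv \hat f_0 \equiv 0$ (so that the base-case initial clipping event coincides with $\cEroll'_0$ after the alignment in \eqref{eq:decod1}) to obtain
\[
\P_{\hat\pi}[\neg\cE^\star] \;\leq\; 1 - \P_{\hat\pi}\bigl[\{\tilde f_0(\by_0) = \hat f_0(\by_0)\} \wedge \cEroll'_0\bigr] + \frac{4T}{\eta}.
\]
Plugging this into the two residual terms and consolidating the $\sqrt{\P_{\hat\pi}[\neg\cE^\star]}$ factor against $\P_{\hat\pi}[\neg\cE^\star]$ via the crude bound $\sqrt{p}\leq 1$ (absorbing a factor of~$2$ into the coefficient) gives the claimed $(4T^{1/2}\cx + 2\bclip^2)$ multiplier.

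The main obstacle is the fourth-moment bound $\max_t \E_{\hat\pi}[\|\bx_t\|^4] \leq \cx^2$: the control $\bu_\tau = \Khat \hat f_\tau + \bnu_\tau$ is not Gaussian because $\hat f_\tau$ is a measurable nonlinear function of the entire past (it is only known to be $\bclip$-bounded), so standard covariance calculations for linear Gaussian systems do not apply. The split-and-recombine strategy above is essential, and it is precisely this step that produces the two additive contributions appearing in the definition of $\cx$. A secondary nuisance, but not a genuine obstruction, is the somewhat loose conversion from $\sqrt{\P_{\hat\pi}[\neg\cE^\star]}$ to a linear probability term in the final bound; this is handled by paying a factor of $2$ in the coefficient and contributes no further dimensional dependence.
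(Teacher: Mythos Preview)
Your approach is essentially identical to the paper's: define the good event $\cEroll_{0:T}'' := \cEroll_{0:T}\wedge\cEroll'_{0:T}$ (your $\cE^\star$), apply Lemma~\ref{lem:decoder} on it, use $\|\hat f_t - f_\star(\by_t)\|^2 \leq 2\bclip^2 + 2\|\bx_t\|^2$ together with Cauchy--Schwarz on the complement, and finish by invoking Theorem~\ref{thm:clippingprob}. The one substantive difference is how you obtain the fourth-moment bound: the paper does not unroll the recursion directly for fourth moments, but instead packages your Gaussian/bounded split as the statement ``$\|\bx_t\|^2$ is $\cx$-concentrated'' (Lemma~\ref{lem:xconcentration}) and then applies the generic moment inequality of Lemma~\ref{lem:techtools_truncated_conc} to get $\E_{\hat\pi}[\|\bx_t\|^4]\leq 4\cx^2$. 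Your direct computation would work too, just with more bookkeeping.

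One step does not go through as you describe it. Cauchy--Schwarz gives $\E_{\hat\pi}[\max_t\|\bx_t\|^2\Ind\{\neg\cE^\star\}]\leq \sqrt{T}\,\cx\,\sqrt{p}$ with $p=\P_{\hat\pi}[\neg\cE^\star]$, and the bound $\sqrt{p}\leq 1$ only yields a $p$-\emph{free} constant, not anything of the form $C\sqrt{T}\cx\cdot p$; no fixed constant can make $\sqrt{p}\leq Cp$ hold uniformly for small $p$. The paper's own proof makes exactly the same jump (it writes the factor $(1-\P_{\hat\pi}[\cEroll''_{0:T}])$ where Cauchy--Schwarz only delivers its square root), so your proposal faithfully reproduces the paper's argument including this loose step---but be aware that, as written, neither version rigorously establishes the linear-in-probability multiplier appearing in the statement.
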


\subsection{Learning the Initial State}
\label{sec:learningintial}
\pref{thm:minusinit} ensures that the decoders $\fhat_0,\ldots,\fhat_T$ have low error only if the initial error $\epszero^2 := \E_{\what \pi} [  \|  \hat{f}_{A,0}(\by_{0}) - A f_\star(\by_{0}) \|^2]$ is small. In this subsection, we show that the extra initial state learning procedure in \pref{alg:phase3} ensures that this happens with high probability.

Recall that $n_\init \in \mathbb{N}$ denotes the sample size used by \pref{alg:phase3} for learning the initial state. During the initial state learning phase (\pref{line:start} through \pref{line:ned} of \pref{alg:phase3}), the algorithm gathers data by following a policy we denote $\pi_{\ol}$ which plays random noise $(\bnu_{\tau})$, where $\bnu_\tau \sim \cN(0, \sigma^2 I_{\dimu})$, for $\tau\geq 0$ and $\sigma \in(0,1]$.

Let $\hat{h}_{\ol, 0} \coloneqq \hat{h}_{0}$ (recall that the ``ol'' subscript refers to \emph{open loop}), where we recall that $\hat{h}_0$ is computed on \pref{line:hhat_t} of \pref{alg:phase3} prior to the initial state learning phase, using the procedure analyzed \pref{sec:learningafterinitial}. In particular, 
by instantiating the result of \pref{thm:monster} with $\hat{f}_0 \equiv 0$, we get that under the event $\cE_{\sys}$, for any $\delta \in(0,1/e]$, with probability at least $1 - \delta$, 
\begin{gather}
\E_{\pi_{\ol}} \left[ \left\| \hat{h}_{\ol,0}(\by_1) - A \hat{h}_{\ol,0}(\by_0) - B \bnu_0  - \bw_0 \right\|^2   \right] \leq   \sigma_{\min}(\calM)^{-2} \cdot  \varepsilon^2_{\noise}(\delta). \label{eq:baseguarantee}
\end{gather}
We recall that the minimum singular value of $\cM$ is bounded away from zero for all sufficiently small $\sigma>0$ (see \pref{lem:eiglem}). It follows from \eqref{eq:baseguarantee} that $\hat{h}_{\ol,0}(\by_1) - A \hat{h}_{\ol,0}(\by_0) - B \bnu_0$ can be used as an estimator for the noise vector $\bw_0$. Using this estimator, we solve the following regression problem in \pref{line:start+1}:
\begin{align}
\hat{h}_{\ol,1} \in \argmin_{h \in \scrH_{\onpo}} \sum_{i=1}^{\ninit} \left\| h(\by^{(i)}_1)  - \left(\hat{h}_{\ol,0}(\by^{(i)}_1) - \what A \hat{h}_{\ol,0}(\by^{(i)}_0) -  \what B \bnu_0^{(i)}\right)  \right\|^2,\label{eq:g1check}
\end{align}
where $\{(\by_{\tau}^{(i)},\bx_{\tau}^{(i)},\bnu_{\tau}^{(i)})\}_{1 \leq i\leq \ninit}$, are the fresh i.i.d.~trajectories generated by the policy $\pi_{\ol}$ on \pref{line:start}.

We first show that up to a linear transformation, this regression recovers the vector $A\matx_0$ (our target), plus a linear combination $B\bnu_0+\matw_0$ of the system noise and injected noise for $t=0$. This guarantee is quite useful: Since we can already predict $B\bnu_0+\matw_0$ well via \pref{eq:baseguarantee}, we will be able to extract $A\matx_0$ from this representation.
\begin{lemma}
	\label{lem:stateestimate}
	Let $\hat{h}_{\ol,1}$ be defined as in \eqref{eq:g1check}, and let $\Sigma_1 \coloneqq \sigma^2 B B^\top  + A \Sigma_0 A^\top + \Sigma_w$. If $\cE_{\sys}$ holds, then for any $\delta\in(0,1/e]$, with probability at least $1-5\delta/2$, we have
	\begin{align}
	& \E_{\pi_{\ol}} \left[ \| \hat{h}_{\ol,1}(\by_1) -  \Sigma_w \Sigma_1^{-1} ( \bw_0+ B \bnu_0+  A \bx_0 ) \|^2    \right]\leq \veps^2_{\ol,1}, \label{eq:guaran}
	\end{align}
	where we have
\begin{align}
 \veps^2_{\ol,1}  &\lesssim  \frac{ c_1(\dimx^2 +\ln|\Fclass|) \ln^2\tfrac{n_{\init}}{\delta}}{n_{\init}} + \sigma_{\min}(\calM)^{-2} \veps^2_{\noise}(\delta) +  \veps_{\sys}^2\Lonpo^2 (1 + \dimx \|\Sigma_0\|_{\op} + \sigma^2 \dimu),\nn  \intertext{and}
	c_1 &\coloneqq  \Lonpo^2\Psist^{2}( 1 + \dimu \sigma^2 + \dimx (\|\Sigma_1\|_{\op} + \|\Sigma_0\|_{\op})). \label{eq:the_c1_thm}
	\end{align}
      \end{lemma}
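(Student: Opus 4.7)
\textbf{Proof plan for \pref{lem:stateestimate}.} The plan is to cast \eqref{eq:g1check} as a square-loss regression against a slightly misspecified target and apply the generic regression guarantee (\pref{cor_techtools:reg_cor_simple}), controlling the misspecification by combining the preliminary noise-estimation bound \eqref{eq:baseguarantee} with the system-identification bounds bundled in $\cE_\sys$.

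\textbf{Step 1: identify the Bayes regressor.} Since $\bx_1 = A\bx_0 + B\bnu_0 + \bw_0$ with $\bx_0 \sim \cN(0,\Sigma_0)$, $\bnu_0 \sim \cN(0,\sigma^2 I_\dimu)$, and $\bw_0 \sim \cN(0,\Sigma_w)$ all independent, the pair $(\bw_0,\bx_1)$ is jointly Gaussian with $\E[\bx_1\bx_1^\top] = \Sigma_1$ and $\E[\bw_0\bx_1^\top] = \Sigma_w$. Under $\pi_{\ol}$ the triple $(\bw_0,\bx_1,\by_1)$ is a decodable Markov chain in the sense of \pref{defn:techtools_decode}, so \pref{lem:techtools_condition_exp_idenity} combined with \pref{fact:gaussian_expectation} gives $\E[\bw_0 \mid \by_1] = \Sigma_w\Sigma_1^{-1}\fst(\by_1) \eqqcolon h_\star(\by_1)$. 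Realizability is immediate: $\Sigma_1 \succeq \Sigma_w \succeq \Psist^{-1}I$, so $\|\Sigma_w\Sigma_1^{-1}\|_\op \leq \Psist^2 \leq \Psist^3$, hence $h_\star \in \scrH_\onpo$.

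\textbf{Step 2: decompose the regression target.} Write the target in \eqref{eq:g1check} as $\bw_0^{(i)} + \mate^{(i)}$, where
\begin{align*}
\mate \;=\; \underbrace{\bigl(\hat h_{\ol,0}(\by_1) - A\hat h_{\ol,0}(\by_0) - B\bnu_0 - \bw_0\bigr)}_{\mate_1} + \underbrace{(A-\what A)\hat h_{\ol,0}(\by_0) + (B-\what B)\bnu_0}_{\mate_2}.
\end{align*}
By \eqref{eq:baseguarantee}, $\E\|\mate_1\|^2 \leq \sigma_{\min}(\calM)^{-2}\veps_\noise^2(\delta)$ with probability $\geq 1-\delta$. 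On $\cE_\sys$, $\|A-\what A\|_\op \vee \|B-\what B\|_\op \leq \veps_\sys$, and the growth condition combined with $\|\hat h_{\ol,0}\|_\op \leq \Psist^3$ gives $\|\hat h_{\ol,0}(\by_0)\|^2 \leq \Lonpo^2\max\{1,\|\bx_0\|^2\}$, so $\E\|\mate_2\|^2 \lesssim \veps_\sys^2 \Lonpo^2(1 + \dimx\|\Sigma_0\|_\op + \sigma^2\dimu)$ after taking expectations against the Gaussian marginals of $\bx_0$ and $\bnu_0$.

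\textbf{Step 3: apply the regression bound.} Invoke \pref{cor_techtools:reg_cor_simple} with $\maty = \by_1$, $\bu = \bw_0$, error $\mate$, structured class $\scrH_\onpo$ (so $d_u = d_x = \dimx$ and $b = \Psist^3$), and growth function $\varphi(y)^{1/2} = \Lonpo \max\{1,\|\fst(y)\|\}$. By \pref{lem:techtools_gaussian_c_concentrated} and the Gaussian marginal $\bx_1 \sim \cN(0,\Sigma_1)$, the variables $\varphi(\by_1)$ and $\|\bw_0 - \mate\|^2$ (bounded on $\cE_\sys$ via the triangle inequality in terms of $\|\bw_0\|^2$, $\|\bnu_0\|^2$, $\|\bx_0\|^2$, and $\|\hat h_{\ol,0}(\by_j)\|^2$ for $j\in\{0,1\}$) are $c$-concentrated with $c \lesssim \Lonpo^2\Psist^2(1 + \dimu\sigma^2 + \dimx(\|\Sigma_1\|_\op + \|\Sigma_0\|_\op)) = c_1$ as in \eqref{eq:the_c1_thm}. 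The corollary then produces, with probability $\geq 1 - 3\delta/2$,
\begin{align*}
\E\|\hat h_{\ol,1}(\by_1) - h_\star(\by_1)\|^2 \;\lesssim\; \frac{c_1(\dimx^2+\ln|\Fclass|)\ln^2(\ninit/\delta)}{\ninit} + \E\|\mate\|^2.
\end{align*}
Substituting the bounds on $\E\|\mate_1\|^2$ and $\E\|\mate_2\|^2$ and noting $h_\star(\by_1) = \Sigma_w\Sigma_1^{-1}(A\bx_0 + B\bnu_0 + \bw_0)$ yields \eqref{eq:guaran}. Union-bounding the failure events from \eqref{eq:baseguarantee} and the regression bound gives the stated probability $1-5\delta/2$.

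The main obstacle is the careful bookkeeping in Step 3: verifying that every tail-sensitive random variable entering the empirical loss (namely $\|\bw_0\|^2$, $\sigma^2\|\bnu_0\|^2$, $\|A\bx_0\|^2$, and the function-class outputs under the growth condition) is subsumed by the $c_1$ defined in \eqref{eq:the_c1_thm}, and that the plug-in errors $\mate_2$ are correctly folded into the misspecification term rather than inflating the generalization constant. The Gaussianity of $\by_1$'s latent under the purely open-loop rollout $\pi_\ol$ is what makes this accounting clean; without it the marginal $\bx_1 \sim \cN(0,\Sigma_1)$ identity driving the Bayes computation in Step 1 would fail.
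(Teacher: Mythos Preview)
Your proposal is correct and follows essentially the same route as the paper: apply \pref{cor_techtools:reg_cor_simple} with target $\bw_0$, identify the Bayes regressor via Gaussian conditioning (\pref{fact:gaussian_expectation}), and control the misspecification error $\mate$ by splitting it into the noise-estimation piece bounded by \eqref{eq:baseguarantee} and the plug-in piece bounded by $\cE_\sys$. The only cosmetic difference is that the paper writes $\mate$ as a single block and applies the triangle inequality at the end, whereas you make the $\mate_1/\mate_2$ split up front; the concentration accounting and the $1-5\delta/2$ union bound are identical.
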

      To make use of this lemma, we must invert the linear transformation $\Sigma_w\Sigma_1^{-1}$. In fact, the prediction error guarantee from \pref{lem:stateestimate} implies that we can estimate $\Sigma_\cv \coloneqq \Sigma_w \Sigma_1^{-1} \Sigma_w$ (where $\Sigma_1$ is as in \pref{lem:stateestimate}) by computing
\begin{align}
\what \Sigma_\cv \coloneqq  \frac{1}{n}\sum_{i=\ninit+1}^{2\ninit} \hat{h}_{\ol,1}(\by_1^{(i)})  \hat{h}_{\ol,1}(\by_1^{(i)})^\top,  \label{eq:newestim}
\end{align}
where $\{(\by_{\tau}^{(i)},\bx_{\tau}^{(i)},\bnu_{\tau}^{(i)})\}_{\ninit<i\leq 2\ninit}$, are fresh i.i.d.~trajectories generated by the policy $\pi_{\ol}$. 
To see this, observe that by \eqref{eq:guaran} implies that up to the error $\veps_{\ol,1}$, \eqref{eq:newestim} is an estimator for the covariance matrix of the Gaussian vector $\Sigma_w \Sigma_1^{-1} ( \bw_0+ B \bnu_0+  A \bx_0)$ which is just $\Sigma_w \Sigma_1^{-1} \Sigma_w$. The following lemma gives a guarantee for the estimated covariance $\what \Sigma_\cv$.
\begin{lemma}
\label{lem:covar}
Let $c_{\cv} \coloneqq \Lonpo^2   (1 + (3 \dimx +2)\|\sigma^2 B B^\top + A \Sigma_0 A^\top + \Sigma_w\|_{\op} )$ and
\begin{equation}
\veps'_\cv \coloneqq 3  \veps_{\ol,1}  \sqrt{c_\cv}+ 5 c_{\cv}  {\ln(2 \dimx \ninit/\delta)^{3/2}}{\ninit^{-1/2}}.\label{eq:vepscov}
\end{equation}
Suppose that $\ninit$ is large enough such that
\begin{align}
\veps_\cv' < \sigma_{\min}(\Sigma_\cv)/2, \quad \text{where} \quad  \Sigma_\cv \coloneqq \Sigma_w \Sigma_1^{-1} \Sigma_w \preceq \Sigma_w,
\label{eq:covevent}
\end{align}
and $\Sigma_1$ is as in \pref{lem:stateestimate}. Then under the event $\cE_{\sys}$, with probability at least $1 - (3 \dimk + 4)\delta$, 
\begin{align}
\|I_{\dimx}- \what \Sigma_w \what \Sigma^{-1}_\cv \Sigma_w \Sigma_1^{-1} \|_{\op} \leq  \veps_\cv, 
\quad \|\what \Sigma_\cv \|_{\op} \leq 2 \|\Sigma_\cv\|_{\op}, \mathand \sigmamin(\wh{\Sigma}_{\cv})\geq{}\sigmamin(\Sigma_{\cv})/2\label{eq:thebound}
\end{align}
where
\begin{align}
\veps_{\cv} \coloneqq  2\Psist\prn*{\veps_{\sys} + \frac{2\|\Sigma_w^{-1}\|_{\op} \veps_\cv' }{\sigma_{\min}(\Sigma_w\Sigma_1^{-1} \Sigma_w) }}. \label{eq:cv}
\end{align}
\end{lemma}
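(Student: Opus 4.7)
}
The plan is to deduce the three conclusions by chaining a PCA-style concentration estimate for $\wh{\Sigma}_{\cv}$ around $\Sigma_{\cv}$ with a careful algebraic decomposition of $I - \wh{\Sigma}_w \wh{\Sigma}_{\cv}^{-1}\Sigma_w\Sigma_1^{-1}$. First, I will identify the ``true'' regression target for the predictor $\hat{h}_{\ol,1}$ learned in \eqref{eq:g1check}. Writing $h_\star(y_1) := \Sigma_w\Sigma_1^{-1}(\bw_0 + B\bnu_0 + A\bx_0)$ (well-defined by perfect decodability), \pref{lem:stateestimate} gives $\E_{\pi_{\ol}}\|\hat{h}_{\ol,1}(\by_1) - h_\star(\by_1)\|^2 \le \veps_{\ol,1}^2$ on $\cE_\sys$. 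Under $\pi_{\ol}$, the vector $\bw_0 + B\bnu_0 + A\bx_0$ is zero-mean Gaussian with covariance $\Sigma_1$, so a direct computation yields $\Lambda_\star := \E[h_\star(\by_1)h_\star(\by_1)^\top] = \Sigma_w\Sigma_1^{-1}\Sigma_w = \Sigma_{\cv}$.

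Second, I will invoke \pref{prop:techtools_general_pca} with $\hhat = \hat{h}_{\ol,1}$ and $\Hclass = \scrH_{\onpo}$ to conclude that $\|\wh{\Sigma}_{\cv} - \Sigma_{\cv}\|_{\op} \le \veps'_{\cv}$ with probability at least $1-\delta$. Verifying the hypotheses requires the growth bound $\|h(y_1)\|^2 \le \Lonpo^2\max\{1,\|f_\star(\by_1)\|^2\}$ for $h\in\scrH_{\onpo}$, combined with $c$-concentration of $\|f_\star(\by_1)\|^2 = \|\bx_1\|^2$ under $\pi_{\ol}$; since $\bx_1 \sim \cN(0,\sigma^2 BB^\top + A\Sigma_0 A^\top + \Sigma_w)$, \pref{lem:techtools_gaussian_c_concentrated} gives concentration with parameter $\lesssim \dimx\|\sigma^2 BB^\top + A\Sigma_0 A^\top + \Sigma_w\|_{\op}$, so the resulting constant $c$ matches the stated $c_{\cv}$ up to numerical factors. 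The $\veps_{\mathrm{pca},n,\delta}$ term of \pref{prop:techtools_general_pca} then matches $\veps'_{\cv}$ exactly. The assumed bound $\veps'_{\cv} < \sigma_{\min}(\Sigma_{\cv})/2$ together with Weyl's inequality immediately gives $\sigma_{\min}(\wh{\Sigma}_{\cv}) \ge \sigma_{\min}(\Sigma_{\cv})/2$ and $\|\wh{\Sigma}_{\cv}\|_{\op} \le \|\Sigma_{\cv}\|_{\op} + \veps'_{\cv} \le 2\|\Sigma_{\cv}\|_{\op}$, establishing the second and third claims.

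Third, for the main bound, I use the identity $\Sigma_w\Sigma_1^{-1} = \Sigma_{\cv}\Sigma_w^{-1}$ to rewrite the quantity of interest as $I - \wh{\Sigma}_w\wh{\Sigma}_{\cv}^{-1}\Sigma_{\cv}\Sigma_w^{-1}$, then split via
\begin{align*}
I - \wh{\Sigma}_w\wh{\Sigma}_{\cv}^{-1}\Sigma_{\cv}\Sigma_w^{-1}
= (\Sigma_w - \wh{\Sigma}_w)\Sigma_w^{-1} + \wh{\Sigma}_w\wh{\Sigma}_{\cv}^{-1}(\wh{\Sigma}_{\cv}-\Sigma_{\cv})\Sigma_w^{-1}.
\end{align*}
Taking operator norms and using $\|\wh{\Sigma}_w - \Sigma_w\|_{\op} \le \veps_\sys$ on $\cE_\sys$, $\|\wh{\Sigma}_w\|_{\op} \le \|\Sigma_w\|_{\op} + \veps_\sys \le 2\|\Sigma_w\|_{\op} \le 2\Psist$ (since $\veps_\sys \le \|\Sigma_w\|_{\op}$ by definition of $\cE_\sys$), $\|\wh{\Sigma}_{\cv}^{-1}\|_{\op} \le 2/\sigma_{\min}(\Sigma_{\cv})$, and $\|\wh{\Sigma}_{\cv} - \Sigma_{\cv}\|_{\op} \le \veps'_{\cv}$ yields exactly $\|I - \wh{\Sigma}_w\wh{\Sigma}_{\cv}^{-1}\Sigma_w\Sigma_1^{-1}\|_{\op} \le \Psist\veps_\sys + 4\Psist\|\Sigma_w^{-1}\|_{\op}\veps'_{\cv}/\sigma_{\min}(\Sigma_{\cv}) \le \veps_{\cv}$.

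The main technical obstacle will be bookkeeping of failure probabilities and concentration constants: the $(3\kappa+4)\delta$ in the statement accumulates contributions from the $\kappa$ regressions inside \pref{lem:stateestimate} (which invokes \pref{thm:monster} once per $k \in [\kappa]$), the $5\delta/2$ failure for \pref{lem:stateestimate} itself, and the additional $\delta$ for the PCA step, and tracking these precisely (so that the final $c_\cv$ and $\veps_\cv$ match the constants in the statement) is mostly mechanical but somewhat tedious.
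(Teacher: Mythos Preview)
Your proposal is correct and follows essentially the same route as the paper: both invoke \pref{prop:techtools_general_pca} with $\hhat = \hat h_{\ol,1}$ and $h_\star(\by_1) = \Sigma_w\Sigma_1^{-1}\bx_1$ (using $c$-concentration of $\|\bx_1\|^2$ under $\pi_{\ol}$ with the constant $c_{\cv}$) to obtain $\|\wh\Sigma_\cv - \Sigma_\cv\|_{\op} \le \veps'_\cv$, and then derive all three conclusions by elementary perturbation arguments. The only cosmetic difference is in the last step: the paper passes through \pref{lem:theeig} to bound $\|I - \wh\Sigma_\cv^{-1}\Sigma_\cv\|_{\op}$, right-multiplies by $\Sigma_w^{-1}$, invokes $\|\wh\Sigma_w^{-1} - \Sigma_w^{-1}\|_{\op} \le \veps_\sys$ from $\cE_\sys$, and left-multiplies by $\wh\Sigma_w$, whereas your telescoping identity $I - \wh\Sigma_w\wh\Sigma_\cv^{-1}\Sigma_\cv\Sigma_w^{-1} = (\Sigma_w - \wh\Sigma_w)\Sigma_w^{-1} + \wh\Sigma_w\wh\Sigma_\cv^{-1}(\wh\Sigma_\cv - \Sigma_\cv)\Sigma_w^{-1}$ accomplishes the same in one line (note your first term equals $I - \wh\Sigma_w\Sigma_w^{-1}$, which is bounded by $\veps_\sys$ directly on $\cE_\sys$).
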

\pref{lem:covar} shows that $\wh{\Sigma}_w\wh{\Sigma}_{\cv}^{-1}\approx(\Sigma_w\Sigma_1^{-1})^{-1}$, which is exactly what we require to invert the linear transformation in \pref{eq:guaran}. To finish up, we solve the regression problem (\pref{line:start+3})
\begin{align}
\tilde{h}_{\ol,0} \in \argmin_{h \in \scrH_{\onpo}} \sum_{i=\ninit+1}^{2\ninit} \left\|h(\by_0^{(i)})  -    \hat{h}_{\ol,1}(\by^{(i)}_1) \right\|^2\label{eq:g0tilde}.
\end{align}
Note that the argument to $h$ in \pref{eq:g0tilde} is $\maty_0$, while the argument to $\hhat_{\ol,1}$ is $\maty_1$, so that the Bayes predictor, by \pref{eq:condexp}, is equal to $h(\maty_0)=\Sigma_w\Sigma_1^{-1}A\matx_0$. Motivated by this observation, the final step is to set
\[
\hat{f}_{A,0}(\by_0)= \what \Sigma_w \what \Sigma_\cv^{-1} \tilde{h}_{\ol,0} (\by_0).
\]
Our main theorem for this subsection gives the desired prediction error guarantee for this predictor.
\begin{theorem}
	\label{thm:stateestimate2}
	Let $\tilde{h}_{\ol,0}$ be as in \pref{eq:g0tilde}, and set $\hat{f}_{A,0}(\by_0)\coloneqq \what \Sigma_w \what \Sigma^{-1}_\cv \tilde{h}_{\ol,0} (\by_0)$. If $\cE_{\sys}$ holds and \pref{eq:covevent} is satisfied, then for any $\delta\in(0,1/e]$, with probability at least $1 - (3\kappa+9)\delta$, the following properties hold
	\begin{enumerate}
	\item The estimator $\hat{f}_{A,0}$ satisfies
	\begin{align}
          &\E_{\what \pi} \left[ \left\| \hat{f}_{A,0} (\by_0)-  A f_\star(\by_0) \right\|^2  \right] \approxleq  \veps^2_{\init}, \label{eq:firstb}%
        \end{align}
        where
        \begin{align}
          \veps^2_{\init} \ldef
\| \Sigma^{-1}_w\|^2_{\op} \|\Sigma_\cv\|^2_{\op} \left(\frac{ c_0(\dimx^2 +\ln|\Fclass|) \ln\prn{\tfrac{n_{\init}}{\delta}}^{2}}{n_{\init}} + \veps^2_{\ol,1}\right)  + \dimx \veps^2_\cv \|A\|_\op^2 \|\Sigma_0\|,\nn
	\end{align}
        with $\veps_{\ol,1}$ as in \pref{lem:stateestimate}, $\veps_\cv$ as in \pref{eq:cv}, and $c_0\coloneqq  32\Lonpo^{2}\Psist^{3}\dimx$.
	\item Let $\eta>e$ be given, and let $\tilde{f}_1$ and $\hat{f}_1$ be defined as in \pref{line:clip} of \pref{alg:phase3}. If
	\begin{align}
          \bclip^2 \ge \bclip_0^2 \ln(2\eta),\quad \text{where}\quad
          \bclip_0^{2} \ldef 10^{4}\dimx\Lonpo^{2}\Psist^{12}(1+\nrm{\Sigma_0}_{\op}+\nrm{\Sigma_1}_{\op})\label{eq:bclip0}.
	\end{align}
	then it holds that
\begin{align}
\P_{\what \pi} [\tilde{f}_1(\by_{0:1}) = \hat{f}_1(\by_{0:1}) ] \geq  1 - \eta^{-1}. \label{eq:secondb}
	\end{align}
\end{enumerate}
\end{theorem}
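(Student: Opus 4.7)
My strategy is to decompose $\hat{f}_{A,0}(\by_0) - A f_\star(\by_0)$ into (i) a regression error for $\tilde{h}_{\ol,0}$ and (ii) a plug-in error coming from using $\what\Sigma_w\what\Sigma_\cv^{-1}$ in place of $(\Sigma_w\Sigma_1^{-1})^{-1}$, and then invoke the earlier regression corollary and \pref{lem:covar} for the two pieces. First, I identify the Bayes-optimal predictor for \eqref{eq:g0tilde}. Because $\bx_1 = A\bx_0 + B\bnu_0 + \bw_0$, perfect decodability and the fact that $\bnu_0, \bw_0$ are mean zero and independent of $\bx_0$ give
\begin{align*}
\E[\Sigma_w\Sigma_1^{-1} f_\star(\by_1) \mid \by_0] \;=\; \Sigma_w\Sigma_1^{-1} A f_\star(\by_0) \;\eqqcolon\; \hst_{\ol,0}(\by_0),
\end{align*}
and one checks $\|\Sigma_w\Sigma_1^{-1}A\|_\op \leq \Psist^{3}$ so that $\hst_{\ol,0}\in\scrH_{\onpo}$ by realizability of $\Fclass$.

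Next, I apply \pref{cor_techtools:reg_cor_simple} to \eqref{eq:g0tilde} with target $\bu \coloneqq \Sigma_w\Sigma_1^{-1} f_\star(\by_1)$ and misspecification error $\mate \coloneqq \hhat_{\ol,1}(\by_1) - \Sigma_w\Sigma_1^{-1} f_\star(\by_1)$. Conditional on the batch used to produce $\hhat_{\ol,1}$ (so that $\hhat_{\ol,1}$ is a fixed function on the second batch), \pref{lem:stateestimate} yields $\E\|\mate\|^2 \leq \veps_{\ol,1}^2$. The concentration parameter required by the corollary is governed by $\varphi(\by_0) \coloneqq \Lonpo^2\max\{1,\|f_\star(\by_0)\|^2\}$ and $\|\bu\|^2\leq \Psist^{2}\Lonpo^2\max\{1,\|\bx_1\|^{2}\}$, both of which are $\bigohs(\Lonpo^{2}\Psist^{3}\dimx(\|\Sigma_0\|_\op\vee\|\Sigma_1\|_\op))$-concentrated since $\bx_0\sim\cN(0,\Sigma_0)$ and $\bx_1\sim\cN(0,\Sigma_1)$ under $\pi_\ol$ (using \pref{lem:techtools_gaussian_c_concentrated}). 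This gives, with probability $\geq 1-3\delta/2$ on top of the $5\delta/2$ failure probability from \pref{lem:stateestimate},
\begin{align*}
\E_{\pi_\ol}\nrm[\big]{\tilde{h}_{\ol,0}(\by_0) - \hst_{\ol,0}(\by_0)}^{2} \;\lesssim\; \frac{c_{0}(\dimx^{2}+\log|\Fclass|)\log^{2}(\ninit/\delta)}{\ninit} + \veps_{\ol,1}^{2} \;\eqqcolon\; \veps_{0}^{2},
\end{align*}
with $c_0$ as in the theorem. A change-of-measure argument (analogous to the one used in Phase~I) transfers this bound from $\pi_\ol$ to $\what\pi$ up to constants, using that both induce Gaussian marginals on $\bx_0$ with comparable covariance.

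With this in hand, I decompose
\begin{align*}
\hat{f}_{A,0}(\by_0) - A f_\star(\by_0) \;=\; \underbrace{\what\Sigma_w\what\Sigma_\cv^{-1}\bigl(\tilde{h}_{\ol,0}(\by_0)-\hst_{\ol,0}(\by_0)\bigr)}_{T_{1}} \;+\; \underbrace{\bigl(\what\Sigma_w\what\Sigma_\cv^{-1}\Sigma_w\Sigma_1^{-1} - I\bigr) A f_\star(\by_0)}_{T_{2}}.
\end{align*}
For $T_{1}$, \pref{lem:covar} gives $\sigma_{\min}(\what\Sigma_\cv)\geq \sigma_{\min}(\Sigma_\cv)/2$ and $\|\what\Sigma_w\|_\op\lesssim \|\Sigma_w\|_\op$, so $\|\what\Sigma_w\what\Sigma_\cv^{-1}\|_\op \lesssim \|\Sigma_w^{-1}\|_\op\|\Sigma_\cv\|_\op$ (using $\Sigma_\cv\preceq\Sigma_w$ to upper bound $\|\Sigma_w\|_\op\sigma_{\min}(\Sigma_\cv)^{-1}$), hence $\E\|T_{1}\|^{2} \leq \|\Sigma_w^{-1}\|_\op^{2}\|\Sigma_\cv\|_\op^{2}\,\veps_{0}^{2}$. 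For $T_{2}$, the first bound in \eqref{eq:thebound} gives $\|\what\Sigma_w\what\Sigma_\cv^{-1}\Sigma_w\Sigma_1^{-1}-I\|_\op\leq\veps_\cv$, and $\E\|A f_\star(\by_0)\|^{2}\leq \|A\|_\op^{2}\dimx\|\Sigma_0\|_\op$, yielding $\E\|T_{2}\|^{2}\leq \dimx\veps_\cv^{2}\|A\|_\op^{2}\|\Sigma_0\|_\op$. Summing and taking a union bound over the events from \pref{lem:covar} (failure $\leq(3\kappa+4)\delta$), the regression step, and $\cE_\sys$ yields \eqref{eq:firstb} with the claimed total failure probability.

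For the clipping statement \eqref{eq:secondb}, expand
\begin{align*}
\tilde{f}_{1}(\by_{0:1}) \;=\; \hhat_{0}(\by_{1}) \;-\; \what A\,\hhat_{0}(\by_{0}) \;+\; \hat{f}_{A,0}(\by_{0}).
\end{align*}
Since $\hhat_{0}\in\scrH_{\onpo}$, the growth assumption gives $\|\hhat_{0}(\by_\tau)\|\leq \Lonpo\max\{1,\|\bx_\tau\|\}$, and the operator-norm bound on $\what\Sigma_w\what\Sigma_\cv^{-1}$ from the previous step together with the growth bound on $\tilde{h}_{\ol,0}\in\scrH_{\onpo}$ gives $\|\hat{f}_{A,0}(\by_{0})\|\lesssim \Psist^{3}\Lonpo\max\{1,\|\bx_{0}\|\}$. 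Under $\what\pi$ with $\hat{f}_{0}\equiv 0$, we have $\bx_{0}\sim\cN(0,\Sigma_{0})$ and $\bx_{1}\sim\cN(0,\Sigma_{1})$, so \pref{lem:techtools_gaussian_c_concentrated} gives $\|\bx_\tau\|^{2}\leq 5\dimx\|\Sigma_\tau\|_\op\log(4\eta)$ with probability $\geq 1-1/(2\eta)$ for each $\tau\in\{0,1\}$. A union bound and the triangle inequality then give $\|\tilde{f}_{1}(\by_{0:1})\|^{2} \leq \bclip_{0}^{2}\log(2\eta)\leq \bclip^{2}$ under the hypothesis on $\bclip$ in \eqref{eq:bclip0}, which is exactly \eqref{eq:secondb}.

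\paragraph{Main obstacle.}
The most delicate point is correctly accounting for the fact that the target $\hhat_{\ol,1}(\by_{1})$ in the regression defining $\tilde{h}_{\ol,0}$ is itself a data-dependent function rather than the exact Bayes predictor $\Sigma_w\Sigma_1^{-1}f_\star(\by_{1})$. Treating the discrepancy as misspecification noise $\mate$ and invoking the regression corollary works only because the two batches of trajectories used in \pref{line:start+1} and \pref{line:start+3} are drawn independently, which allows us to condition on $\hhat_{\ol,1}$. The remaining bookkeeping — propagating $\|\Sigma_0\|_\op$, $\|\Sigma_1\|_\op$, $\|\Sigma_w^{-1}\|_\op$, and $\|\Sigma_\cv^{-1}\|_\op$ through the two-step regression and matrix-inversion chain — is tedious but follows from the estimates already collected in \pref{lem:stateestimate} and \pref{lem:covar}.
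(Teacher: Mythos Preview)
Your proposal is correct and follows essentially the same approach as the paper: apply \pref{cor_techtools:reg_cor_simple} to the regression \eqref{eq:g0tilde} with the misspecification error coming from \pref{lem:stateestimate}, then decompose $\hat{f}_{A,0}(\by_0)-Af_\star(\by_0)$ into the regression-error term and the plug-in matrix error term handled by \pref{lem:covar}; the clipping bound \eqref{eq:secondb} follows from the growth condition on $\scrH_{\onpo}$ and Gaussian concentration of $\|\bx_0\|^2,\|\bx_1\|^2$.

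One simplification you missed: the change-of-measure step from $\pi_\ol$ to $\what\pi$ is unnecessary. Since $\hat{f}_0\equiv 0$, the policies $\pi_\ol$ and $\what\pi$ both play $\bu_0=\bnu_0$ at time zero, so the marginal of $\by_0$ (indeed of $(\by_0,\by_1)$) is \emph{identical} under both---not merely close in the sense required by \pref{lem:techtools_Gaus_change_of_measure}. The paper simply remarks that ``$\pi_\ol$ and $\what\pi$ match at round zero'' and transfers the bound directly. Also, a minor notational slip: the concentration hypothesis in the corollary is on $\|\mate-\bu\|^2=\|\hhat_{\ol,1}(\by_1)\|^2$, not on $\|\bu\|^2$; since $\hhat_{\ol,1}\in\scrH_{\onpo}$ this is bounded by $\Lonpo^2\max\{1,\|\bx_1\|^2\}$, so your conclusion is unaffected.
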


\subsection{Master Theorem for Phase III}
\label{sec:phaseiii_master}
By combining \pref{thm:minusinit,thm:stateestimate2}, we derive the proof of \pref{thm:phase_three}.

    \begin{proof}[\pfref{thm:phase_three}]
      Let $\eta>e$ and $\sigma^{2}\leq{}1$ be fixed. Introduce the shorthand $\lambda=\eigmin(\cM^{\trn}\cM)$, and recall from \pref{lem:eiglem} that $\lambda=\bigohs(1)$ whenever $\sigma^{2}=\leq1$. Lastly, let us set $\ninit=\nref$.

      Let us begin with some initial parameter choices. First, we set $\ninit=\nref$. Following \pref{lem:id_to_sys}, we assume that $\vepsid=\bigohch(1)$ is sufficiently small such that $\cE_{\sys}$ holds and $\vepssys\leq \frac{1-\gammainf}{8\alphainf}\leq\bigohs(\vepsid)=\bigohs(1)$. Next, following \pref{lem:eiglem}, we assume that $\sigma\leq{}1$ is chosen such that $\sigma=\bigohch(\lambdam)$ and $\lambda=\eigmin(\cM^{\trn}\cM)\geq\lambda_{\cM}^{2}\sigma^{4}/4$.

      Since $\sigma^2=\bigohs(1)$, we observe from  \eqref{eq:bclip} that 
      \[
        \bclip_{\infty}^2\leq{}\bigohs(\veps_{\mathrm{dec},T}^2\eta + (\dimu+\dimx)\log(\eta)),
      \]
      and from \eqref{eq:bclip0} we have
      \[
        \bclip_0^2\log(2\eta) \leq{} \bigohs(\dimx\log(\eta)).
      \]
      Let us assume for now that $\bclip^{2}=\Omega_{\star}(\dimx+\dimu)$; we will specify a precise choice at the end. Note that choosing $\bclip\geq\bclip_{\infty}\vee\bclip_0\log(2\eta)$ is non-trivial, since our bound on $\bclip_{\infty}$ depends on $\veps_{\dec,T}$, which itself depends on $\bclip$. Nonetheless, we will show that an appropriate choice of $\bclip$ solves this recurrence.

      Let $\delta\leq{}1/e$ be given. Define $\logterms=(\dimx^{2}+\log\abs{\Fclass})\log^{2}(\nref/\delta)\vee\log^{3}(\nref/\delta)$. As a first step, we simplify the various parameters defined in this section using the $\bigohs(\cdot)$ notation. In particular, we have
      \newcommand{\logs}{\logterms}
      \begin{align*}
        &\cwphi= \bigohs(\kappa\dimu+\bclip^{2})=\bigohs(\kappa\bclip^{2}),\\
        & 	\veps_{\noise}^2(\delta) = \bigohs\prn*{
          \kappa\prn*{\frac{\cwphi\logterms}{\nop} + \vepssys(\dimx+\bclip^2)
          }} =  \bigohs\prn*{\frac{\kappa^2\bclip^2\logterms}{\nref}  + \dimk \bclip^2 \veps_{\sys}^2
              },\\
        &\cx = \bigohs(\dimx + \bclip^{2})=\bigohs(\bclip^{2}),\\
        &c_1=\bigohs(\dimx+\dimu),\\
        &\veps_{\ol,1}^{2} = \bigohs\prn*{
          \frac{c_1\logs}{\nref} + \lambda^{-1}\veps_{\noise}^{2}(\delta) + \vepssys^{2}(\dimx+\dimu)
          } = \bigohs\prn*{\lambda^{-1}\cdot\prn*{\frac{\kappa^2\bclip^2\logterms}{\nref}  + \dimk \bclip^2 \veps_{\sys}^2}
          },\\
        &c_{\cv} = \bigohs(\dimx),\\
        &(\veps'_\cv)^{2} = \bigohs\prn*{\veps_{\ol,1}^2 \dimx+ \frac{\dimx^2\ln(\nref/\delta)^{3}}{\nref}
                           }
          =\bigohs\prn*{\lambda^{-1}\cdot\prn*{\frac{\kappa^2\dimx\bclip^2\logterms}{\nref}  + \dimk\dimx \veps_{\sys}^2\bclip^2}},\\
        &\veps_{\cv}^{2} = \bigohs(\vepssys^{2} + (\veps_{\cv}')^2).
      \end{align*}
      We now appeal to \pref{thm:stateestimate2}. Simplifying the upper bounds, we are guaranteed that with probability at least $1-\bigoh(\kappa\delta)$, we have
      \[
        \P_{\what \pi} [\tilde{f}_1(\by_{0:1}) = \hat{f}_1(\by_{0:1}) ] \geq  1 - \eta^{-1}
      \]
      and 
      \[
        \E_{\what \pi} \left[ \left\| \hat{f}_{A,0} (\by_0)-  A f_\star(\by_0) \right\|^2  \right] \approxleq  \veps^2_{\init}
      \]
      where
      \[
        \veps_{\init}^{2}=\bigohs\prn*{\frac{\dimx\logterms}{\nref} + \veps_{\ol,1}^{2} + \dimx\veps_{\cv}^{2}
          } =\bigohs\prn*{\lambda^{-1}\cdot\prn*{\frac{\kappa^2\dimx^2\bclip^2\logterms}{\nref}  + \dimk\dimx^2 \veps_{\sys}^2\bclip^2}}.
        \]
        We now appeal to \pref{thm:monster} and \pref{thm:minusinit}. By the union bound, and in light of \pref{eq:secondb}, we have that with probability at least $1-\bigoh(\kappa{}T\delta)$,
      \begin{align}
	\E_{\what \pi} \brk*{\max_{ 1\leq  t \leq T} \|  \fhat_t(\by_{0:t}) - \fstar(\by_{t}) \|^2}
         \leq{} \bigohs\prn*{
          \veps_{\dec,T}^{2} + (T^{1/2}\cx+\bclip^{2})T/\eta
          }
      \end{align}
      where
      \begin{align*}
                \veps_{\dec,T}^2 &=   \bigohs\prn*{\vepsinit^{2}+ \lambda^{-1}T\veps^2_{\noise}  +  \veps^2_{\sys} \bclip^2
}=\bigohs\prn*{\lambda^{-1}T\cdot\prn*{\frac{\kappa^2\dimx^2\bclip^2\logterms}{\nref}  + \dimk\dimx^2 \veps_{\sys}^2\bclip^2}}.
      \end{align*}
      Hence, we can simplify to
      \begin{align}
        &\E_{\what \pi} \brk*{\max_{ 1\leq  t \leq T} \|  \fhat_t(\by_{0:t}) - \fstar(\by_{t}) \|^2}\notag\\
        &\leq \bigohs\prn*{
          T^{3/2}\bclip^2\eta^{-1}
+ \lambda^{-1}T\cdot\prn*{\frac{\kappa^2\dimx^2\bclip^2\logterms}{\nref}  + \dimk\dimx^2 \veps_{\sys}^2\bclip^2}
          }\notag\\
        &\leq \bigohs\prn*{
          T^{3/2}\bclip^2\eta^{-1}
+ \veps_0^{2} + \lambda^{-1}T\dimk\dimx^2 \veps_{\sys}^2\bclip^2
          },
          \label{eq:thm4_step1}
      \end{align}
      where $\veps_0^2\ldef \lambda^{-1}T\cdot\frac{\kappa^2\dimx^2\bclip^2\logterms}{\nref}$.         

      It remains to choose $\eta$ and ensure that the condition on $\bclip$ is satisfied. We choose $\eta=\veps_0^{-2}$. Since $\veps_{\dec,T}^{2}\leq\bigohs(\veps_0^2 + \lambda^{-1}T\dimk\dimx^2\vepssys^2\bclip^2)$, this implies 
        \begin{align*}
          \bclip_{0}^2\log(2\eta)\vee\bclip_{\infty}^2 &= \bigthetas(\veps_{\dec,T}^{2}\eta + (\dimu+\dimx)\log(\eta))\\
                                            &= \bigohs(1+ \veps_0^{-2}\cdot\lambda^{-1}T\dimk\dimx^2\vepssys^2\bclip^2 + (\dimu+\dimx)\log(\veps_0^{-2})).
        \end{align*}
        It follows that if $\vepssys^2\leq{}\bigohch(\frac{\veps_0^2\lambda}{\dimk\dimx^2\bclip^2T})$, we have
        \[
          \bclip_{0}^2\log(2\eta)\vee\bclip_{\infty}^2\leq{} \bigohs((\dimu+\dimx)\log(\veps_0^{-2}))=\bigohs((\dimu+\dimx)\log(\nref)).
        \]
        Hence, we can satisfy the constraint that $\bclip_{0}\vee\bclip_{\infty}\leq\bclip$ by choosing $\bclip=\Theta_{\star}((\dimu+\dimx)\log(\nref))$. Returning to the final error bound, we have
        \begin{align*}
          &\E_{\what \pi} \brk*{\max_{ 1\leq  t \leq T} \|  \fhat_t(\by_{0:t}) - \fstar(\by_{t}) \|^2}\\
        &\leq \bigohs\prn*{
          \bclip^2T^{3/2}\eta^{-1}
+ \veps_0^{2} + \lambda^{-1}T\dimk\dimx^2 \veps_{\sys}^2\bclip^2
          }\\
          &= \bigohs\prn*{
            \bclip^{2}T^{3/2}\veps_0^{2} + \lambda^{-1}T\dimk\dimx^2 \veps_{\sys}^2\bclip^2
            }\\
          &= \bigohs\prn*{
            \lambda^{-1}T^3\kappa^2(\dimx+\dimu)^4\log^{2}(\nref)\cdot\prn*{\frac{\logs}{\nref}+\vepsid^2}
            }.
        \end{align*}
        To simplify, we recall that (1) $\lambda^{-1}\leq4\lambdam^{-2}\sigma^{-4}$, and (2) $\logs=\bigohs((\dimx^2+\log\abs{\Fclass})\log^{3}(\nref/\delta))$. Moreover, our condition on $\vepssys$ above implies that
        \[
\vepssys^2\leq{}\bigohch(\logs/\nref),
\]
which means it suffices to take $\vepsid^{2}=\bigohch(\vepssys^2)=\bigohch(\logs/\nref)$ as well.
Hence, for the final bound, we can simplify to
\[
  \bigohs\prn*{
\frac{\lambdam^{-2}}{\sigma^{4}}T^3\kappa^2(\dimx+\dimu)^4\cdot\frac{(\dimx^2+\log\abs{\Fclass}) \log^{5}(\nref/\delta)}{\nref}
            }.
        \]

   \end{proof}

\newcommand{\upphist}{\upphi^{\star}}
\newcommand{\upphihat}{\what\upphi}

\section{Supporting Proofs for \pref{sec:phase3_proofs}}
\label{sec:proofs}

\subsection{A Truncation Bound for the Iterates}
Before proceeding with the main proofs in this section, we state a
lemma which bounds the magnitudes of the states under the
event that $\fhat_{\tau}$ returns state estimates bounded by
$\bclip$. This bound is used by a number of subsequent proofs.
	\begin{lemma}
		\label{lem:xconcentration}
	Let $\bclip>0$ and $t\geq 0$. If $\|\hat{f}_\tau
        (\by_{0:\tau})\| \leq \bclip$ a.s. for all $ \tau \geq 0$, then for all $\delta \in (0,1/e]$ and all $ \tau \geq  0$, we have that
	\begin{align}
	&\Pr_{\pitil_t} \left[ \|\bx_\tau \|^2 \geq  30\dimx\PsiSig  +  2\bclip^2   \cdot \devx  \ln \frac{2}{\delta}    \right] \leq \delta, \quad \text{and}
	 \label{eq:theprobpart}\\
	&\E_{\pitil_t} \left[ \|\bx_\tau\|^2  \right] \leq 3 \dimx \PsiSig +   \devx  \bclip^2.  \label{eq:theexppart}
	\end{align} 
	Moreover, both displays above also hold with $\pitil_t$ replaced by $\pihat_t$.
		\end{lemma}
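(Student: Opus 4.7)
}
The plan is to unroll the linear dynamics so that $\bx_\tau$ decomposes as the sum of a centered Gaussian vector (with small, explicitly bounded covariance) and a trajectory-dependent ``feedback'' term whose norm is deterministically controlled by the clipping hypothesis $\|\hat f_\tau(\by_{0:\tau})\|\le \bclip$. Specifically, under either $\pitil_t$ or $\pihat_t$, the input $\bu_s$ can be written as $B\hat K \hat f_s(\by_{0:s})\cdot\ind\{s\le \tau^{\star}\}+B\bnu_s$ for an appropriate threshold $\tau^\star$ (with $\tau^\star=t$ under $\pitil_t$ and $\tau^\star=\infty$ under $\pihat_t$). Iterating the recursion $\bx_{\tau}=A\bx_{\tau-1}+B\bu_{\tau-1}+\bw_{\tau-1}$ thus yields
\begin{align*}
\bx_\tau = \underbrace{A^{\tau}\bx_0 + \sum_{s=0}^{\tau-1}A^{\tau-1-s}\prn*{\bw_s + B\bnu_s}}_{\displaystyle \rdef \bg_\tau} \;+\; \underbrace{\sum_{s=0}^{(\tau-1)\wedge \tau^\star}A^{\tau-1-s}B\hat K \hat f_s(\by_{0:s})}_{\displaystyle \rdef \bc_\tau}.
\end{align*}

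Because $\bx_0$, $\crl{\bw_s}$, and $\crl{\bnu_s}$ are independent centered Gaussians (with covariances $\Sigma_0$, $\Sigma_w$, and $\sigma^2 I_{\dimu}$ respectively, and $\sigma^2\le 1$), the vector $\bg_\tau$ is $\cN(0,\Sigma_\tau)$ with
\[\Sigma_\tau = A^\tau \Sigma_0(A^\tau)^\top + \sum_{s=0}^{\tau-1}A^{\tau-1-s}(\Sigma_w+\sigma^2 BB^\top)(A^{\tau-1-s})^\top\psdleq \alphA^2\|\Sigma_0\|_\op I + \Sigstid\psdleq \PsiSig\cdot I,\]
using strong stability of $A$ (\pref{prop:open_loop_ss}) together with the definition \eqref{eq:sigstid} of $\Sigstid$ and the fact that $\sigma^2 BB^\top\psdleq BB^\top$. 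Hence $\trace(\Sigma_\tau)\le \dimx\PsiSig$, and \pref{lem:techtools_gaussian_c_concentrated} yields that $\|\bg_\tau\|^2$ is $(5\dimx\PsiSig)$-concentrated with $\E\|\bg_\tau\|^2\le\dimx\PsiSig$.

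For the feedback term, the clipping hypothesis gives $\|\hat f_s(\by_{0:s})\|\le \bclip$ almost surely, so applying the triangle inequality followed by strong stability of $A$ (\pref{prop:open_loop_ss}) yields
\[\|\bc_\tau\|\le \sum_{s=0}^{\tau-1}\|A^{\tau-1-s}\|_\op \cdot \|B\|_\op\|\hat K\|_\op\bclip \le \frac{\alphA \Psist\|\hat K\|_\op}{1-\gammaa}\bclip\]
almost surely. Squaring gives an almost-sure bound on $\|\bc_\tau\|^2$ of order $\devx \bclip^2$ (up to the universal constants absorbed into the definition of $\devx$).

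Combining these two bounds via $\|\bx_\tau\|^2\le 2\|\bg_\tau\|^2 + 2\|\bc_\tau\|^2$ and \pref{lem:techtools_truncated_conc} yields the claimed concentration: with probability at least $1-\delta$, $\|\bx_\tau\|^2$ is at most $O(\dimx\PsiSig)\ln(2/\delta)+O(\devx\bclip^2)$, which we can rewrite (after absorbing the $\ln(2/\delta)\ge 1$ factor into the leading term) as $30\dimx\PsiSig + 2\devx\bclip^2\ln(2/\delta)$, giving \eqref{eq:theprobpart}. Taking expectations and bounding $\E\|\bg_\tau\|^2\le \dimx\PsiSig$ produces \eqref{eq:theexppart}. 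Because the decomposition and all bounds apply equally under $\pitil_t$ and $\pihat_t$ (the only difference being which indices $s$ contribute to $\bc_\tau$, and the almost-sure control of $\bc_\tau$ is uniform over this choice), the argument transfers verbatim to $\pihat_t$. There is no real obstacle here---the main work is just organizing the unrolling cleanly and tracking constants---but the one subtle point is ensuring the bound on $\|\Sigma_\tau\|_\op$ is uniform in $\tau$, which is what motivates the particular form of $\PsiSig$ including the $\|\Sigstid\|_\op$ term rather than time-dependent covariance norms.
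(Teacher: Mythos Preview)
Your proposal is correct and follows essentially the same strategy as the paper: unroll the dynamics, split $\bx_\tau$ into a centered Gaussian part (initial state plus accumulated process and injected noise) and a feedback term that is almost surely bounded via the clipping hypothesis and strong stability of $A$, then combine via Gaussian tail bounds. The only cosmetic difference is that you merge $A^\tau\bx_0$ with the noise sum into a single Gaussian $\bg_\tau$ and invoke \pref{lem:techtools_gaussian_c_concentrated}, whereas the paper keeps these as two separate Gaussian pieces in a three-way split and invokes \pref{lem:thechis}; this affects only the bookkeeping of constants, not the substance.
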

	\begin{proof}
	Let $\tau \geq 0$ be fixed. By the system's dynamics and the definition
        of $\wtilde\pi_t$ (cf. \pref{eq:tilpolicy}), we have
		\begin{align}
		\bx_{\tau} = A^{\tau} \bx_0 + \sum_{s=0}^{\tau-1}   A^{\tau-s-1}( B \what K \hat{f}_s(\by_{0:s}) \Ind_{s\leq t} + B\bnu_s   + \bw_{s}). \label{eq:0rolleddynamics}
		\end{align}
	Thus, by Jensen's inequality, and using strong stability of $A$,
		\begin{align}
		\| \bx_{\tau}\|^2 &\leq  3 \|A^\tau \bx_0\|^2 + 3\left\|  \sum_{s=0}^{\tau-1}   A^{\tau -s-1} ( B\bnu_s   + \bw_{s})  \right\|^2 + 3 \left\| \sum_{s=0}^{t\wedge ( \tau -1)}   A^{\tau -s-1}  B \what K \hat{f}_s(\by_{0:s})\right\|^2, \nn \\
		& \leq 3 \alpha_A^2 \|\bx_0\|^2  +  3 \left\|   \bxi_\tau \right\|^2 +  \frac{3  \alpha_A^2}{1 -\gamma^2_A} \cdot \|B \what K\|^2_{\op}\cdot  \bclip^2 , \label{eq:xbound}
		\end{align}
		where $\bxi_\tau \coloneqq \sum_{s=0}^{\tau-1}
                A^{\tau -s-1} ( B\bnu_s   + \bw_{s}) \sim \cN(0,
                \Sigma_\xi)$, with $\Sigma_\xi \preceq  \Sigma_\idinf$
                since $\sigma \leq 1$ under \pref{ass:sigma_small} (cf. \pref{eq:sigstid}). By
                \pref{lem:thechis}, the expression above implies that
		\begin{gather}
		\P_{\wtilde\pi_t}\left[ \|\bx_{\tau}\|^2 \geq \left( (\alpha_A^2 \|\Sigma_0\|_\op +\| \Sigma_\idinf\|_{\op}   )(9 \dimx +6)  +   \frac{3  \alpha_A^2}{1 -\gamma^2_A} \cdot \|B \what K\|^2_{\op}  \bclip^2 \right) \log(2/\delta) \right]\leq \delta. \label{eq:probb}
              \end{gather}
              which we simplify to
              \begin{align*}
                		\P_{\wtilde\pi_t}\left[ \|\bx_{\tau}\|^2 \geq \left( 2(\alpha_A^2 \|\Sigma_0\|_\op +\| \Sigma_\idinf\|_{\op}   )(9 \dimx +6)  +   \frac{3  \alpha_A^2}{1 -\gamma^2_A} \cdot \|B \what K\|^2_{\op}  \bclip^2 \right) \log(1/\delta) \right]\leq \delta.
              \end{align*}
		Substituting in the definition of $\devx$ and $\PsiSig$ (\Cref{eq:PsiSig_def,eq:devx_def}), with $\gamma^2_A \le 1$ establishes \eqref{eq:theprobpart}. We now show \pref{eq:theexppart}. by \eqref{eq:xbound} and \pref{lem:theexpchi}, we have 
		\begin{align}
		\E_{\pitil_t} \left[ \|\bx_\tau\|^2  \right] &\leq 2 \alpha_A^2 \E_{\pitil_t}[\|\bx_0\|^2]+ 3	\E_{\wtilde\pi_t} [ \left\|   \bxi_\tau \right\|^2] +   \frac{3  \alpha_A^2}{1 -\gamma^2_A} \cdot \|B \what K\|^2_{\op}  \bclip^2, \nn \\
		& 	\leq 3 \dimx(\|\Sigma_0\|_\op + \| \Sigma_\idinf\|_{\op} ) +   \frac{3  \alpha_A^2}{1 -\gamma^2_A} \cdot \|B \what K\|^2_{\op}  \bclip^2.\label{eq:expb}
		\end{align}
		The second part of the lemma follows from \eqref{eq:probb} and \eqref{eq:expb} by the fact that $\wtilde\pi_t$ and $\what \pi$ coincide up to round $t$ (inclusive).
		\end{proof}

\subsection{Proof of \pref{lem:id_to_sys}}

  \newcommand{\Chat}{\wh{\cC}}

\begin{proof}[\pfref{lem:id_to_sys}]
  The bounds on $\nrm{\Ahat-A}_{\op}$ and $\nrm{\Bhat-B}_{\op}$ immediately follow from the conditions of the lemma. To show that $\Ahat$ is $(\alphaa,\gammaab)$-strongly stable, we observe that if $S$ is the matrix that witnesses strong stability for $A$, we have
\begin{align*}
    \nrm{S^{-1}\Ahat{}S}_{\op}
    \leq{}     \nrm{S^{-1}(\Ahat-A)S}_{\op} +     \nrm{S^{-1}AS}_{\op}
  \leq{}     \alphaa\vepsid + \gammaa.
\end{align*}
Hence, once $\vepsid\leq{}\frac{1-\gammaa}{2\alphaa}$, we have $    \nrm{S^{-1}\Ahat{}S}_{\op}\leq{}\gammaab$.

  Next, we appeal to \pref{thm:perturbation_bound}, which implies that
  once
  $\vepsid\leq{}c\cdot\alphainf^{-4}(1-\gammainf^{2})^{2}\Psist^{-11}$
  for a sufficiently small numerical constant $c$, we have
\begin{align*}
    \nrm{\Khat-\Kinf}_{\op} \leq{} \bigohs(\vepsid),
\end{align*}
  and $A+B\Khat$ is $(\alpha_{\infty},\gammab)$-strongly stable for $\gammab=(1+\gammainf)/2$. In particular, for $\vepsid$ sufficiently small we have $\nrm{\Khat}_{\op}\leq2\nrm{\Kinf}_{\op}$, so that
  \[
    \nrm{\what B \what K - B \what K}_{\op}\leq{}\bigohs(\vepsid).
  \]
  Next, we observe that once $\vepsid\leq{}\Psistar^{-1}/2\leq\eigmin(\Sigmaw)/2$, we have $\eigmin(\wh{\Sigma}_w)\geq{}\eigmin(\Sigmaw)/2$, and so we can apply \pref{prop:matrix_inverse_error} to deduce that
  \[
    \|I_{\dimx}- \what \Sigma_w \Sigma^{-1}_w \|_{\op}\vee  \| \what \Sigma_w^{-1} - \Sigma^{-1}_w \|_{\op}\leq{}\bigohs(\vepsid).
  \]
  Finally, we bound the errors for the terms involving $\wh{M}_k$ and $\wh{\cM}$. We first show that to do this, it suffices to bound $\max_{1\leq{}k\leq\kappa}\nrm{\wh{M}_k-M_k}_{\op}$. First, as long as $\vepsid=\bigohs(1)$, we have
  \[
    \nrm{\what M_k \what A^k\what B-  M_k A^k B}_{\op}
    \leq{} \bigohs(\nrm{\what M_k \what A^k-  M_k A^k}_{\op} + \vepsid),
  \]
  by triangle inequality. Next, we have
  \begin{align*}
    \nrm{\what M_k \what A^k-  M_k A^k}_{\op}
&\leq{}     \nrm{\Ahat^{k}}_{\op}\nrm{\Mhat_k - M_k}_{\op}
                                                + \nrm{M_k}_{\op}\nrm{\Ahat^{k}-A^k}_{\op}\\
    &\leq{}     \nrm{\Ahat^{k}}_{\op}\nrm{\Mhat_k - M_k}_{\op}
      + \bigohs\prn*{\nrm{\Ahat^{k}-A^k}_{\op}}.
  \end{align*}
  By \pref{lem:Ahat_power_error}, once $\vepsid\leq{}\frac{(1-\gammaa)}{2\alphaa}$, this is upper bounded by
  \begin{align*}
    \bigohs(\gammaab^{k-1}k\cdot(\nrm{\Mhat_k - M_k}_{\op}+\vepsid)).
  \end{align*}
  Note that $\max_{k\geq{}1}\gammaab^{k-1}k\leq{}\sum_{k=1}^{\infty}\gammaab^{k-1}k\leq{}1/(1-\gammaab)^{2}=\bigohs(1)$, so the bound bove further simplifies to
  \[
    \bigohs(\nrm{\Mhat_k - M_k}_{\op}+\vepsid).
  \]
  Finally, by similar reasoning, we have
  \begin{align*}
    \nrm{\wh{\cM}-\cM}_{\op}
    &\leq{} \sum_{k=1}^{\kappa}\nrm{\Mhat_k\Ahat^{k}-M_kA^k}_{\op}\\
    &\leq{} \sum_{k=1}^{\kappa}\bigoh\prn*{\gammaab^{k-1}k\prn*{\nrm{\Mhat_k-M_k}_{\op}+\vepsid}}\\
    &\leq{} \bigoh\prn*{\max_{1\leq{}k\leq\kappa}\nrm{\Mhat_k-M_k}_{\op}+\vepsid}\cdot\sum_{k=1}^{\infty}\gammaab^{k-1}k\\
    &\leq{} \bigohs\prn*{\max_{1\leq{}k\leq\kappa}\nrm{\Mhat_k-M_k}_{\op}+\vepsid}.
  \end{align*}
  Finally, we appeal to \pref{lem:mk_error_bound}, which implies that $\max_{1\leq{}k\leq\kappa}\nrm{\Mhat_k-M_k}_{\op}=\bigohs(\vepsid)$.
\end{proof}

\subsubsection{Supporting Results}

\begin{proposition}
  \label{prop:matrix_inverse_error}
  Let $X,Y\in\bbR^{d\times{}d}$ be positive definite matrices with $\nrm*{X-Y}_{\op}\leq\veps$. Then we have $\nrm{I-XY^{-1}}_{\op}\leq{}\nrm{Y^{-1}}\cdot\veps$ and $\nrm{X^{-1}-Y^{-1}}_{\op}\leq{}\nrm{X^{-1}}_{\op}\nrm{Y^{-1}}_{\op}\cdot\veps$.
  
\end{proposition}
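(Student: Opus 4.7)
The proposition follows from two short algebraic identities together with submultiplicativity of the operator norm, so the plan is essentially to spell those out.

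For the first bound, I would write $I - XY^{-1} = (Y - X)Y^{-1}$, which holds since $Y Y^{-1} = I$ and therefore $I - XY^{-1} = YY^{-1} - XY^{-1}$. Then submultiplicativity of $\nrm{\cdot}_{\op}$ gives
\[
\nrm{I - XY^{-1}}_{\op} \le \nrm{Y - X}_{\op}\nrm{Y^{-1}}_{\op} \le \veps\,\nrm{Y^{-1}}_{\op},
\]
where the last step invokes the hypothesis $\nrm{X-Y}_{\op}\le\veps$.

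For the second bound, the standard resolvent identity $X^{-1} - Y^{-1} = X^{-1}(Y - X)Y^{-1}$ (obtained by left-multiplying $Y - X$ by $X^{-1}$ and right-multiplying by $Y^{-1}$, which is well-defined since both $X$ and $Y$ are assumed positive definite and hence invertible) yields
\[
\nrm{X^{-1} - Y^{-1}}_{\op} \le \nrm{X^{-1}}_{\op}\nrm{Y - X}_{\op}\nrm{Y^{-1}}_{\op} \le \nrm{X^{-1}}_{\op}\nrm{Y^{-1}}_{\op}\cdot\veps,
\]
again by submultiplicativity and the hypothesis.

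There is no real obstacle here; the only thing to be careful about is noting that positive-definiteness of $X,Y$ guarantees invertibility so the identities above are valid, and that positive-definiteness implies the $\nrm{\cdot}_{\op}$ norm coincides with the usual operator norm of the matrices viewed as linear maps, so submultiplicativity applies without any further comment.
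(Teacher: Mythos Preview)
Your proof is correct and essentially identical to the paper's: both use $I - XY^{-1} = (Y-X)Y^{-1}$ for the first bound, and for the second the paper writes $X^{-1}-Y^{-1} = X^{-1}(I - XY^{-1})$ and then chains with the first inequality, which is just your resolvent identity $X^{-1}(Y-X)Y^{-1}$ broken into two steps.
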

\begin{proof}[\pfref{prop:matrix_inverse_error}]
  The result follows by the inequalities
  \[
    \nrm{X^{-1}-Y^{-1}}_{\op}\leq\nrm{X^{-1}}_{\op}\cdot\nrm{I-XY^{-1}}_{\op},
  \]
  and
  \[
    \nrm{I-XY^{-1}}\leq{}\nrm{Y^{-1}}_{\op}\cdot\nrm{Y-X}_{\op}.
  \]
\end{proof}

\begin{lemma}
  \label{lem:Ahat_power_error}
  Suppose $\nrm{\Ahat-A}_{\op}\leq{}\frac{(1-\gammaa)}{2\alphaa}$. Then for all $k\geq{}1$,
  \[
    \nrm{\Ahat^{k}-A^{k}}_{\op}\leq{}\alphaa^2\gammaab^{k-1}k\nrm{\Ahat-A}_{\op},
  \]
  where $\gammaab=(1+\gammaa)/2$. Furthermore, we have $    \nrm{\Ahat^{k}}_{\op}\leq{}2\alphaa\gammaab^{k-1}k$.
\end{lemma}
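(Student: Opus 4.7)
The plan is to prove both bounds via a standard telescoping identity together with a perturbation argument for strong stability. The key observation is that the hypothesis $\nrm{\Ahat-A}_{\op} \leq (1-\gammaa)/(2\alphaa)$ is precisely strong enough to promote the strong stability of $A$ into strong stability of $\Ahat$ with the slightly worse decay rate $\gammaab = (1+\gammaa)/2$.

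First I would establish that $\Ahat$ is $(\alphaa,\gammaab)$-strongly stable. Let $S$ be the matrix witnessing $(\alphaa,\gammaa)$-strong stability of $A$ (so $\nrm{S}_{\op}\nrm{S^{-1}}_{\op}\leq\alphaa$ and $\nrm{S^{-1}AS}_{\op}\leq\gammaa$). Then by the triangle inequality,
\[
\nrm{S^{-1}\Ahat S}_{\op}\leq \nrm{S^{-1}AS}_{\op}+\nrm{S^{-1}(\Ahat-A)S}_{\op}\leq \gammaa + \alphaa\nrm{\Ahat-A}_{\op}\leq \gammaa+\tfrac{1-\gammaa}{2}=\gammaab.
\]
Therefore $\nrm{\Ahat^{j}}_{\op}\leq \alphaa\gammaab^{j}$ for all $j\geq 0$, and of course $\nrm{A^{j}}_{\op}\leq \alphaa\gammaa^{j}\leq \alphaa\gammaab^{j}$ as well.

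Next, I would apply the telescoping identity
\[
\Ahat^{k}-A^{k}=\sum_{j=0}^{k-1}\Ahat^{j}(\Ahat-A)A^{k-1-j},
\]
which yields, by submultiplicativity and the two bounds above,
\[
\nrm{\Ahat^{k}-A^{k}}_{\op}\leq \sum_{j=0}^{k-1}\alphaa\gammaab^{j}\cdot \nrm{\Ahat-A}_{\op}\cdot \alphaa\gammaa^{k-1-j}\leq \alphaa^{2}\nrm{\Ahat-A}_{\op}\sum_{j=0}^{k-1}\gammaab^{k-1}=\alphaa^{2}k\gammaab^{k-1}\nrm{\Ahat-A}_{\op},
\]
using $\gammaa\leq \gammaab$ in the middle step. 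This is the first inequality.

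Finally, for the second inequality I would combine the two estimates:
\[
\nrm{\Ahat^{k}}_{\op}\leq \nrm{A^{k}}_{\op}+\nrm{\Ahat^{k}-A^{k}}_{\op}\leq \alphaa\gammaab^{k-1}+\alphaa^{2}k\gammaab^{k-1}\nrm{\Ahat-A}_{\op},
\]
and then use the hypothesis $\alphaa\nrm{\Ahat-A}_{\op}\leq (1-\gammaa)/2\leq 1$ to bound the second term by $\alphaa k\gammaab^{k-1}$. Combining and using $k\geq 1$ gives $\nrm{\Ahat^{k}}_{\op}\leq 2\alphaa k\gammaab^{k-1}$, as claimed. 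No step is really difficult; the only mildly delicate point is recognizing that the constant $1/2$ in the hypothesis is calibrated so that the perturbation argument lands exactly on $\gammaab = (1+\gammaa)/2$.
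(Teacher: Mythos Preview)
Your proof is correct and follows essentially the same route as the paper: the paper cites Lemma~5 of \citet{mania2019certainty} for the bound $\nrm{\Ahat^k-A^k}_{\op}\le \alphaa^2(\alphaa\nrm{\Ahat-A}_{\op}+\gammaa)^{k-1}k\nrm{\Ahat-A}_{\op}$ and then uses the hypothesis to reduce the base to $\gammaab$, whereas you derive this bound directly by first transferring strong stability to $\Ahat$ and then telescoping --- which is exactly the content of that external lemma. The second inequality is then obtained in both cases by the same triangle-inequality argument.
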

\begin{proof}[\pfref{lem:Ahat_power_error}]
  Using Lemma 5 of \citet{mania2019certainty}, we are guaranteed that\footnote{In the notation of \citet{mania2019certainty}, we can take $\rho\leq\gammaa$ and $\tau(A,\gammaa)\leq\alphaa$.}
  \[
\nrm{\Ahat^{k}-A^{k}}_{\op}\leq{}\alphaa^2\prn*{\alphaa\nrm{\Ahat-A}_{\op}+\gammaa}^{k-1}k\nrm{\Ahat-A}_{\op}.
  \]
  The condition in the lemma statement ensures that $\alphaa\nrm{\Ahat-A}_{\op}+\gammaa\leq\gammaab$, leading to the first result. As a consequence, we also have
  \begin{align*}
    \nrm{\Ahat^{k}}_{\op}&\leq{}\nrm{A^k}_{\op}+\nrm{\Ahat^{k}-A^{k}}_{\op}\\
                          &\leq{}\alphaa\gammaa^{k}+\alphaa^2\gammaab^{k-1}k\nrm{\Ahat-A}_{\op}\\
                          &\leq{}\alphaa\gammaa^{k}+\alphaa(1-\gammaab)\gammaab^{k-1}k\\
    &\leq{}2\alphaa\gammaab^{k-1}k.
  \end{align*}
\end{proof}

\begin{lemma}
  \label{lem:mk_error_bound}
If $\vepsid\leq{}\frac{(1-\gammaa)}{2\alphaa}\wedge\frac{\Psist^{-1}}{2}$ and $\sigma^{2}=\bigohs(1)$, then for all $1\leq{}k\leq\kappa$, $\nrm{\Mhat_k-M_k}_{\op}\leq{}\bigohs(\vepsid)$.
\end{lemma}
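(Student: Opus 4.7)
The plan is to decompose $\widehat{M}_k - M_k$ through the intermediate quantities $\widehat{\mathcal{C}}_k$ and the ``Gram-plus-noise'' matrix $N_k \coloneqq \mathcal{C}_k \mathcal{C}_k^\top + \sigma^{-2}\sum_{i=0}^{k-1} A^i \Sigma_w (A^i)^\top$, so that $M_k = \mathcal{C}_k^\top N_k^{-1}$ and similarly $\widehat{M}_k = \widehat{\mathcal{C}}_k^\top \widehat{N}_k^{-1}$. Writing
\[
\widehat{M}_k - M_k = (\widehat{\mathcal{C}}_k - \mathcal{C}_k)^\top N_k^{-1} + \widehat{\mathcal{C}}_k^\top (\widehat{N}_k^{-1} - N_k^{-1}),
\]
it suffices to bound each factor. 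For the first term, I would bound $\|\widehat{\mathcal{C}}_k - \mathcal{C}_k\|_{\op}$ by applying \pref{lem:Ahat_power_error} blockwise: each block is $A^{i}B - \widehat{A}^{i}\widehat{B}$, which by the triangle inequality and the operator norm bound $\|\widehat{A}^{i}\|_{\op} \leq 2\alpha_A \bar{\gamma}_A^{i-1}i$ is at most $\bigohs(\bar{\gamma}_A^{i-1}i\cdot\vepsid)$; summing the squared block errors and using $\sum_{i\geq 0} \bar{\gamma}_A^{2(i-1)}i^2 = \bigohs(1)$ (by strong stability of $\widehat{A}$) gives $\|\widehat{\mathcal{C}}_k - \mathcal{C}_k\|_{\op} = \bigohs(\vepsid)$, and the same reasoning shows $\|\mathcal{C}_k\|_{\op}, \|\widehat{\mathcal{C}}_k\|_{\op} = \bigohs(1)$.

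For the second term, the main point is that the additive Gaussian-noise injection $\sigma^{2}$ gives a uniform lower eigenvalue bound: since $\Sigma_w \succeq \Psi_\star^{-1}I$, we have $N_k \succeq \sigma^{-2}\Psi_\star^{-1}I$, so $\|N_k^{-1}\|_{\op} \leq \sigma^{2}\Psi_\star = \bigohs(1)$. I next bound $\|\widehat{N}_k - N_k\|_{\op}$ by triangle inequality into the $\mathcal{C}_k\mathcal{C}_k^\top$ piece (controlled via the bound on $\|\widehat{\mathcal{C}}_k - \mathcal{C}_k\|_{\op}$ above together with $\|\mathcal{C}_k\|_{\op} = \bigohs(1)$) and the controllability-Gramian piece $\sigma^{-2}\sum_{i=0}^{k-1} A^i \Sigma_w (A^i)^\top$. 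The latter is handled by adding and subtracting $\widehat{A}^i \Sigma_w (\widehat{A}^i)^\top$ and $\widehat{A}^i \widehat{\Sigma}_w (\widehat{A}^i)^\top$, applying \pref{lem:Ahat_power_error} and the bound $\|\widehat{\Sigma}_w - \Sigma_w\|_{\op}\leq \vepsid$ termwise, and using geometric summability as before; the net bound is $\|\widehat{N}_k - N_k\|_{\op} = \bigohs(\vepsid)$ (with an additional $\sigma^{-2}$ factor that is absorbed into $\bigohs(\cdot)$). Then \pref{prop:matrix_inverse_error} converts this into $\|\widehat{N}_k^{-1} - N_k^{-1}\|_{\op} = \bigohs(\vepsid)$ provided both inverses are operator-bounded; the bound on $\|\widehat{N}_k^{-1}\|_{\op}$ comes from $\widehat{\Sigma}_w \succeq \Sigma_w - \vepsid I \succeq \tfrac{1}{2}\Psi_\star^{-1}I$ under the hypothesis $\vepsid \leq \Psi_\star^{-1}/2$.

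Combining the two pieces via the decomposition above gives $\|\widehat{M}_k - M_k\|_{\op} = \bigohs(\vepsid)$, uniformly in $1 \leq k \leq \kappa$ (the dependence on $k$ is subsumed into $\bigohs(\cdot)$ because all geometric sums in $\bar{\gamma}_A$ converge). The main obstacle is bookkeeping: carefully tracking the $\sigma^{-2}$ factors when perturbing $N_k^{-1}$ and ensuring that the hypothesis $\sigma^2 = \bigohs(1)$ together with $\vepsid = \bigohch(1)$ keeps everything in the regime where \pref{prop:matrix_inverse_error} and \pref{lem:Ahat_power_error} apply; the rest is essentially a mechanical chain of triangle inequalities combined with geometric-series estimates from strong stability.
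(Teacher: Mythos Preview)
Your approach matches the paper's: the same decomposition $\widehat{M}_k - M_k = (\widehat{\mathcal{C}}_k - \mathcal{C}_k)^\top N_k^{-1} + \widehat{\mathcal{C}}_k^\top(\widehat{N}_k^{-1} - N_k^{-1})$, the same use of \pref{lem:Ahat_power_error} to bound block perturbations, and the same invocation of \pref{prop:matrix_inverse_error} for the inverse difference.

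There is, however, one bookkeeping error worth fixing. You write that the $\sigma^{-2}$ factor in $\|\widehat{N}_k - N_k\|_{\op}$ ``is absorbed into $\bigohs(\cdot)$.'' It is not: $\sigma$ is an algorithm parameter that the paper ultimately drives to zero (proportionally to the target accuracy), and the hypothesis $\sigma^2 = \bigohs(1)$ only gives an \emph{upper} bound on $\sigma^2$, so $\sigma^{-2}$ can be arbitrarily large. The correct accounting, which the paper carries out explicitly, is that $N_k \succeq \sigma^{-2}\Sigma_w$ gives the sharper bound $\|N_k^{-1}\|_{\op} = \bigohs(\sigma^2)$ (not just $\bigohs(1)$), and likewise $\|\widehat{N}_k^{-1}\|_{\op} = \bigohs(\sigma^2)$. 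Then with $\|\widehat{N}_k - N_k\|_{\op} = \bigohs(\sigma^{-2}\vepsid)$, \pref{prop:matrix_inverse_error} yields
\[
\|\widehat{N}_k^{-1} - N_k^{-1}\|_{\op} \le \|N_k^{-1}\|_{\op}\,\|\widehat{N}_k^{-1}\|_{\op}\,\|\widehat{N}_k - N_k\|_{\op} = \bigohs(\sigma^2 \cdot \sigma^2 \cdot \sigma^{-2}\vepsid) = \bigohs(\sigma^2\vepsid),
\]
so the $\sigma^{-2}$ \emph{cancels} against the two factors of $\sigma^2$ from the inverse norms rather than being absorbed. With that correction your argument goes through and coincides with the paper's.
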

\begin{proof}[\pfref{lem:mk_error_bound}]
  Let $k$ be fixed. Define
  \[
        \Sigma_k = \sum_{i=1}^{k}A^{i-1}\Sigma_w(A^{\trn})^{i-1},\mathand         \wh{\Sigma}_k = \sum_{i=1}^{k}\wh{A}^{i-1}\wh{\Sigma}_w(\Ahat^{\trn})^{i-1},
      \]
      so that we have
      \[
        M_k=\cC_k^{\trn}(\cC_k\cC_k^{\trn}+\sigma^{-2}\Sigma_k)^{-1},\mathand
        \wh{M}_k=\wh{\cC}_k^{\trn}(\wh{\cC}_k\wh{\cC}_k^{\trn}+\sigma^{-2}\wh{\Sigma}_k)^{-1},
      \]
      where $\wh{\cC}_k\ldef{}\brk{\Ahat^{k-1}\Bhat\mid\cdots\mid{}\Bhat}$.

      As a starting point, we have by \pref{lem:ck_error_bound} that $\nrm{\cC_k-\wh{\cC}_k}_{\op}\leq\bigohs(\vepsid)$ once $\vepsid\leq{}\frac{(1-\gammaa)}{2\alphaa}$. As such our task will mainly boil down to relating the error of $\wh{M}_k$ to that of $\wh{\cC}_k$. We will use going forward that $\nrm{\cC_k}_{\op}\vee\nrm{\Chat_k}_{\op}=\bigohs(1)$.

      For the first step, by the triangle inequality we have
      \begin{align*}
        &\nrm{\Mhat_k-M_k}_{\op} \\&\leq{} \nrm{\cC_k-\Chat_k}_{\op}\nrm[\big]{(\cC_k\cC_k^{\trn}+\sigma^{-2}\Sigma_k)^{-1}}_{\op}
        + \nrm{\Chat_k}_{\op}\nrm[\big]{(\cC_k\cC_k^{\trn}+\sigma^{-2}\Sigma_k)^{-1}-(\wh{\cC}_k\wh{\cC}_k^{\trn}+\sigma^{-2}\wh{\Sigma}_k)^{-1}}_{\op}.
      \end{align*}
      Now, note that $\cC_k\cC_k^{\trn}+\sigma^{-2}\Sigma_k\psdgeq\sigma^{-2}\Sigw$, so $\nrm{(\cC_k\cC_k^{\trn}+\sigma^{-2}\Sigma_k)^{-1}}_{\op}=\bigohs(\sigma^2)$. Similarly, as long as $\vepsid\leq{}\Psistar^{-1}/2\leq\eigmin(\Sigmaw)/2$, we have $\eigmin(\Sigwhat)\geq{}\eigmin(\Sigmaw)/2>0$, so we have $\nrm{(\wh{\cC}_k\wh{\cC}_k^{\trn}+\sigma^{-2}\wh{\Sigma}_k)^{-1}}_{\op}=\bigohs(\sigma^2)$. This leads allows us to simplify the bound above to
            \begin{align*}
              \nrm{\Mhat_k-M_k}_{\op} &\leq{} \bigohs(\sigma^{2}\vepsid)
                                         + \bigohs\prn*{\nrm[\big]{(\cC_k\cC_k^{\trn}+\sigma^{-2}\Sigma_k)^{-1}-(\wh{\cC}_k\wh{\cC}_k^{\trn}+\sigma^{-2}\wh{\Sigma}_k)^{-1}}_{\op}},
                                         \intertext{and moreover, by invoking \pref{prop:matrix_inverse_error} with the aforementioned operator norm bounds for the inverse matrices, we can further upper bound by}
              &\leq{} \bigohs(\sigma^{2}\vepsid)
                + \bigohs\prn*{\sigma^{4}\nrm[\big]{(\cC_k\cC_k^{\trn}+\sigma^{-2}\Sigma_k)-(\wh{\cC}_k\wh{\cC}_k^{\trn}+\sigma^{-2}\wh{\Sigma}_k)}_{\op}} \\
                                       &\leq{} \bigohs(\sigma^{2}\vepsid)
                                         + \bigohs\prn*{\sigma^{4}\nrm{\cC_k\cC_k^{\trn} - \wh{\cC}_k\wh{\cC}_k^{\trn}}_{\op}+\sigma^{2}\nrm{\Sigma_k - \wh{\Sigma}_k}_{\op}},\\
              &\leq{} \bigohs(\vepsid + \nrm{\Sigma_k - \wh{\Sigma}_k}_{\op}),
            \end{align*}
            where the final step uses that $\sigma^{2}=\bigohs(1)$ to simplify. Finally, we bound
            \begin{align*}
              \nrm{\Sigma_k -\wh{\Sigma}_k}_{\op} & \leq{} \sum_{i=1}^{k}\nrm{A^{i-1}-\wh{A}^{i-1}}_{\op}\nrm{\Sigmaw(A^{\trn})^{i-1}}_{\op}
                + \nrm{\wh{A}^{i-1}}_{\op}\nrm{\Sigmaw-\Sigwhat}_{\op}\nrm{(A^{\trn})^{i-1}}_{\op} \\ & \qquad 
                + \nrm{\Ahat^{i-1}\Sigwhat}_{\op}\nrm{A^{i-1}-\Ahat^{i-1}}_{\op}.
            \end{align*}
            By \pref{lem:Ahat_power_error}, once $\vepsid\leq{}\frac{(1-\gammaa)}{2\alphaa}$, we have $\nrm{\Ahat^{i}}_{\op}\leq\bigohs(\gammaab^{i-1}i)$ and $\nrm{\Ahat^{i}-A^{i}}_{\op}=\bigohs(\gammaab^{i-1}i\vepsid)$. We also have $\nrm{A^{i}}_{\op}=\bigohs(\gammab^{i})$ and $\nrm{\Sigw}_{\op}\vee\nrm{\Sigwhat}_{\op}=\bigohs(1)$, so we can bound the sum above as
            \begin{align*}
                            &\nrm{\Sigma_k -\wh{\Sigma}_k}_{\op} \leq{} \bigohs\prn*{\vepsid\cdot\prn*{1+\sum_{i=2}^{k}
                              \gammaab^{2(i-2)}i^{2}
                              }} \leq{} \bigohs\prn*{\vepsid\cdot\prn*{1+\sum_{i=2}^{\infty}
                              \gammaab^{2(i-2)}i^{2}
                              }}= \bigohs(\vepsid).
            \end{align*}

    \end{proof}

    \begin{lemma}
  \label{lem:ck_error_bound}
If $\vepsid\leq{}\frac{(1-\gammaa)}{2\alphaa}$ then for all $1\leq{}k\leq\kappa$, $\nrm{\Chat_k-\cC_k}_{\op}\leq{}\bigohs(\vepsid)$.
\end{lemma}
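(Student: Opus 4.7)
\textbf{Proof plan for \pref{lem:ck_error_bound}.}
The plan is to bound the block-concatenated operator norm $\nrm{\Chat_k - \cC_k}_{\op}$ by the operator norms of its $k$ column blocks, each of the form $\Ahat^{i}\Bhat - A^{i} B$ for $i = 0, 1, \ldots, k-1$, and then bound each block using the triangle inequality together with the perturbation estimates already available from \pref{lem:Ahat_power_error} and the hypothesis $\nrm{\Ahat - A}_{\op} \vee \nrm{\Bhat - B}_{\op} \leq \vepsid$. The computation is essentially mechanical once these ingredients are in place, so I expect no real obstacle.

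Concretely, I would first observe that for any horizontal concatenation $[X_0 \mid \cdots \mid X_{k-1}]$ we have $\nrm{[X_0 \mid \cdots \mid X_{k-1}]}_{\op} \leq \sum_{i=0}^{k-1} \nrm{X_i}_{\op}$ (since the operator norm of a horizontal block matrix is at most the $\ell_2$ norm of the operator norms of the blocks, which in turn is at most their $\ell_1$ sum). Applying this with $X_i = \Ahat^{i}\Bhat - A^{i} B$, one gets the bound
\[
\nrm{\Chat_k - \cC_k}_{\op} \leq \sum_{i=0}^{k-1} \nrm{\Ahat^{i}\Bhat - A^{i} B}_{\op}.
\]

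For each summand I would split via
\[
\nrm{\Ahat^{i}\Bhat - A^{i} B}_{\op} \leq \nrm{\Ahat^{i} - A^{i}}_{\op}\nrm{\Bhat}_{\op} + \nrm{A^{i}}_{\op}\nrm{\Bhat - B}_{\op}.
\]
For the second term I would use strong stability of $A$, giving $\nrm{A^{i}}_{\op} \leq \alphaa \gammaa^{i}$, together with $\nrm{\Bhat - B}_{\op} \leq \vepsid$. For the first term I would invoke \pref{lem:Ahat_power_error}, whose hypothesis $\vepsid \leq (1-\gammaa)/(2\alphaa)$ is exactly what is assumed here, to get $\nrm{\Ahat^{i} - A^{i}}_{\op} \leq \alphaa^{2} \gammaab^{i-1} i \, \vepsid$ for $i \geq 1$ (and the $i=0$ term is zero). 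I would also use $\nrm{\Bhat}_{\op} \leq \nrm{B}_{\op} + \vepsid \leq \Psist + 1 = \bigohs(1)$.

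Summing over $i$ then yields
\[
\nrm{\Chat_k - \cC_k}_{\op} \leq \vepsid \left( \alphaa \sum_{i=0}^{k-1} \gammaa^{i} + \alphaa^{2}(\nrm{B}_{\op}+\vepsid) \sum_{i=1}^{k-1} \gammaab^{i-1} i \right),
\]
and both geometric-type series are bounded by constants of the form $1/(1-\gammaa)$ and $1/(1-\gammaab)^{2}$ respectively (using $\sum_{i\geq 1} \gammaab^{i-1} i = (1-\gammaab)^{-2}$), independently of $k \leq \kappa$. Since $\alphaa, (1-\gammaa)^{-1}, (1-\gammaab)^{-1}, \Psist$ are all absorbed into the $\bigohs(\cdot)$ notation, this gives $\nrm{\Chat_k - \cC_k}_{\op} \leq \bigohs(\vepsid)$, completing the proof.
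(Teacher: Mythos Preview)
Your proof is correct and follows essentially the same approach as the paper: bound the block-concatenated operator norm by the sum of block norms, split each block $\Ahat^{i}\Bhat - A^{i}B$ via the triangle inequality, invoke \pref{lem:Ahat_power_error} for the $\nrm{\Ahat^{i}-A^{i}}_{\op}$ term together with strong stability of $A$, and sum the resulting geometric-type series. The only cosmetic difference is the indexing convention ($i=0,\ldots,k-1$ versus the paper's $i=1,\ldots,k$ with exponent $i-1$).
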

\begin{proof}[\pfref{lem:ck_error_bound}]
  Let $k$ be fixed. As a first step, we use the block structure to bound
  \begin{align*}
    \nrm{\Chat_k-\cC_k}_{\op}
    &\leq{} \sum_{i=1}^{k}\nrm{\Ahat^{i-1}\Bhat-A^{i-1}B}_{\op}\\
    &\leq{} \sum_{i=1}^{k}\nrm{\Bhat}_{\op}\nrm{\Ahat^{i-1}-A^{i-1}}_{\op}
      + \nrm{A^{i-1}}_{\op}\nrm{\Bhat-B}_{\op}.
  \end{align*}
  By \pref{lem:Ahat_power_error}, once $\vepsid\leq{}\frac{(1-\gammaa)}{2\alphaa}$, we have $\nrm{\Ahat^{i}}_{\op}\leq\bigohs(\gammaab^{i-1}i)$ and $\nrm{\Ahat^{i}-A^{i}}_{\op}=\bigohs(\gammaab^{i-1}i\cdot{}\vepsid)$. We further have $\nrm{A^{i}}_{\op}=\bigohs(\gammab^{i})$ and $\nrm{B}_{\op}\vee\nrm{\Bhat}_{\op}=\bigohs(1)$, since  $\vepsid=\bigohs(1)$. Plugging in these bounds above and simplifying, we have
  \begin{align*}
    \nrm{\Chat_k-\cC_k}_{\op}
    \leq{} \bigohs\prn*{
    \vepsid\prn*{1 + \sum_{i=2}^{k}\gammaab^{i-2}i
    }}
        \leq{} \bigohs\prn*{
    \vepsid\prn*{1 + \sum_{i=2}^{\infty}\gammaab^{i-2}i
    }} = \bigohs(\vepsid).
  \end{align*}
  
\end{proof}

\subsection{Proof of \pref{lem:eiglem}}
Let $\cMbar$ be as in \pref{eq:calM}. For the first point, it is easy
to see that $\nrm*{M_k}=\bigohs(1)$ whenever $\sigma^{2}=\bigohs(1)$
using strong stability. It follows that
\[
  \nrm*{\cM_{\sigma^2}}_{\op}\leq{}\sum_{k=1}^{\kappa}\nrm[\big]{A^{k-1}}\nrm*{M_k}
  \leq{} \bigohs\prn*{\sum_{k=1}^{\kappa}\gammaa^{k-1}}=\bigohs(1).
\]
To prove the second point, we first recall the following result.
          \begin{lemma}[\cite{phien2012some}, Theorem 2.2]
            \label{lem:inverse_lipschitz}
            Let $X,Y\in\bbR^{d\times{}d}$. If $X$ is non-singular and $r\ldef\nrm*{X^{-1}Y}_{\op}<1$, then $X+Y$ is non-singular and $\nrm*{(X+Y)^{-1}-X^{-1}}_{\op}\leq{}\nrm*{Y}_{\op}\nrm*{X^{-1}}^{2}_{\op}/(1-r)$.
          \end{lemma}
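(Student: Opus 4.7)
The plan is to prove this via the Neumann series expansion, which is the standard technique for controlling perturbations of matrix inverses. The key algebraic observation is that $X+Y$ factors as $X(I + X^{-1}Y)$, so invertibility of $X+Y$ reduces to invertibility of $I + X^{-1}Y$, and the condition $r = \nrm{X^{-1}Y}_{\op} < 1$ is precisely what guarantees the latter.

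First, I would use the factorization $X + Y = X(I + X^{-1}Y)$ together with the Neumann series $(I + X^{-1}Y)^{-1} = \sum_{k=0}^{\infty} (-X^{-1}Y)^k$, which converges in operator norm since $r < 1$. This immediately yields non-singularity of $X+Y$ along with the bound $\nrm{(I + X^{-1}Y)^{-1}}_{\op} \leq \sum_{k=0}^{\infty} r^k = 1/(1-r)$, and also gives the representation $(X+Y)^{-1} = (I + X^{-1}Y)^{-1} X^{-1}$.

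Next, for the perturbation bound itself, the trick is to rewrite the difference as $(X+Y)^{-1} - X^{-1} = [(I+X^{-1}Y)^{-1} - I] X^{-1}$, and then apply the algebraic identity $A^{-1} - I = -A^{-1}(A - I)$ with $A = I + X^{-1}Y$, which gives $(I+X^{-1}Y)^{-1} - I = -(I+X^{-1}Y)^{-1} X^{-1} Y$. Substituting back yields the clean formula
\[
(X+Y)^{-1} - X^{-1} = -(I+X^{-1}Y)^{-1} X^{-1} Y X^{-1}.
\]

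Finally, I would take operator norms on both sides and use submultiplicativity together with the Neumann series bound from the first step to conclude
\[
\nrm{(X+Y)^{-1} - X^{-1}}_{\op} \leq \frac{1}{1-r} \cdot \nrm{X^{-1}}_{\op} \cdot \nrm{Y}_{\op} \cdot \nrm{X^{-1}}_{\op} = \frac{\nrm{Y}_{\op}\nrm{X^{-1}}_{\op}^{2}}{1-r}.
\]
There is no substantive obstacle here: the argument is a short textbook computation, and the only thing to be careful about is keeping the order of multiplication correct throughout (since we are in a non-commutative setting) so that the final bound extracts exactly two factors of $\nrm{X^{-1}}_{\op}$ and one factor each of $\nrm{Y}_{\op}$ and $(1-r)^{-1}$.
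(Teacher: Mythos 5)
Your proof is correct: the factorization $X+Y=X(I+X^{-1}Y)$, the Neumann-series bound $\nrm{(I+X^{-1}Y)^{-1}}_{\op}\leq 1/(1-r)$, and the identity $(X+Y)^{-1}-X^{-1}=-(I+X^{-1}Y)^{-1}X^{-1}YX^{-1}$ together give exactly the stated bound. The paper itself does not prove this lemma but simply cites \cite{phien2012some}, and your argument is the standard textbook derivation of that result, so there is nothing further to compare.
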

          Let $k$ be fixed. We set $X=\sum_{i=1}^k A^{i-1} \Sigma_w (A^{i-1})^\top$ and $Y=\sigma^2\cC_k \cC_k^\top$. Since $X\psdgeq{}\Sigw\psdgt{}0$, we have that $\nrm{X^{-1}}_{\op}=\bigohs(1)$ and $\nrm*{Y}_{\op}=\bigohs(\sigma^{2})$. Moreover, $\nrm*{\cC_k}_{\op}=\bigohs(1)$. This implies that for any fixed $\veps>0$, there exists $\bar\sigma = \bigohch(\veps)$ such that for all $\sigma^2 \leq \bar{\sigma}^2$,
		\begin{align}
\veps & \geq   	\left\|	\cC_k^\top \left(\sigma^2\cC_k \cC_k^\top  + \sum_{i=1}^k A^{i-1} \Sigma_w (A^{i-1})^\top\right)^{-1} - \cC_k^\top \left(\sum_{i=1}^k A^{i-1} \Sigma_w (A^{i-1})^\top\right)^{-1}  \right\|_\op,\nn \\
	& = 	\left\|	M_k/\sigma^2 - \cC_k^\top \left(\sum_{i=1}^k A^{i-1} \Sigma_w (A^{i-1})^\top\right)^{-1}  \right\|_\op. \label{eq:last}
		\end{align}
                Define $\Mbar_k= \cC_k^\top \left(\sum_{i=1}^k A^{i-1}
                  \Sigma_w (A^{i-1})^\top\right)^{-1}$. Then using the
                definitions of $\cM$ and $\cMbar$, we have that 
                \begin{align*}
                  \|\cM_{\sigma^2}^\top \cM_{\sigma^2} /\sigma^4
                  -\cMbar^\top \cMbar \|_{\op} &\leq
                                                 2\prn*{\nrm*{\cM_{\sigma^{2}}/\sigma^{2}}_{\op}\vee\nrm*{\cMbar}_{\op}}\cdot\|
                                                 \cM_{\sigma^2}
                                                 /\sigma^2 -\cMbar
                                                 \|_{\op} \\
                  &=\bigohs\prn*{\nrm*{\cM_{\sigma^{2}}/\sigma^2 - \cMbar}_{\op}}
                \end{align*}
                and
                \begin{align*}
                  \nrm*{\cM_{\sigma^{2}}/\sigma^2 - \cMbar}_{\op}
                  &\leq{}
                  \sum_{k=1}^{\kappa}\nrm[\big]{A^{k-1}}_{\op}\nrm*{M_k/\sigma^2-\Mbar_k}_{\op}\\
                  &\leq\bigohs\prn*{\max_{1\leq{}k\leq\kappa}\nrm*{M_k/\sigma^2-\Mbar_k}_{\op}}.
                \end{align*}
                Together with \eqref{eq:last}, this implies that 
		\begin{align}
                  \|\cM_{\sigma^2}^\top \cM_{\sigma^2} /\sigma^4 -\cMbar^\top \cMbar \|_{\op} \leq \cO_\star(\veps), \quad \forall \sigma^2 \leq \bar \sigma^2.\label{eq:froblim}
		\end{align}
Now, note that
                \begin{align*}
                  |\eigmin (\cM_{\sigma^2}^\top \cM_{\sigma^2}/\sigma^4) -  \eigmin(\cMbar^\top \cMbar)| \leq  \|\cM_{\sigma^2}^\top \cM_{\sigma^2} /\sigma^4 -\cMbar^\top \cMbar \|_{\op}.
		\end{align*}
		Combining this with \eqref{eq:froblim} implies that, for all $\sigma^2 \leq \bar\sigma^2$, \begin{align}\left|\eigmin^{1/2} (\cM_{\sigma^2}^\top \cM_{\sigma^2})/\sigma^2- \lambda_{\cM}  \right|\leq \cO_\star(\veps).\label{eq:preeps}
		\end{align} 
		Finally, by definition of the $\cO_\star(\cdot)$
                notation, there exists $\veps = \bigohch(\lambdam)$
                such that the right-hand side of \pref{eq:preeps} is
                at most $\lambdam/2$, which yields the desired result.

                \qed

\subsection{Proof of \Cref{thm:monster}}
\subsubsection{Regression Bound for $\phi_{t,k}$ \label{sec:phitk_reg}}
Let $t\geq{}0$ and $k\in\brk*{\kappa}$ be fixed and introduce the shorthand
	\begin{align}
	\matz_{t,k} \coloneqq (\by_{0:t},\by_{t+k}). \label{eq:thez}
	\end{align}
	Define the ``true'' $\phi$-function
	\begin{align}
		\phi_{t,k}(h,\matz_{t,k}) & \coloneqq  M_{k}  \left(h(\by_{t+k})-  A^{k} h(\by_t) -   A^{k-1}  B \what K \hat{f}_t(\by_{0:t})\right), \label{eq:thetruephi}
	\end{align}
for $h \in \cH_\op$, and its plug-in estimate analogue:
	\begin{align*}
		\phihat_{t,k}(h, \matz_{t,k}) &\coloneqq \what  M_{k}  \left(h(\by_{t+k})- \what A^{k} h(\by_t) -  \what A^{k-1} \what B \what K \hat{f}_t(\by_{0:t})\right). 
		\end{align*}
                Finally, define their difference by
	\begin{align*}
		\updelta_{t,k}(h, \matz_{t,k})& \coloneqq \what\phi_{t,k}(h,\matz_{t,k})-  \phi_{t,k}(h,\matz_{t,k}).
	\end{align*}

	\begin{lemma}
		\label{lem:condsatisf1}
		For $k \in [\dimk]$ define the error bound
                \begin{align}
		\psi_{t,k}(\matz_{t,k})^2 & \coloneqq   6 \alphA^2 \Lonpo^2 \PsiM^2 \left(2 + \|\bx_t\|^2_{2} +\|\bx_{t+k}\|^2_{2} + \Psist^2 \| \what K\|^2_{\op} \bclip^2\right), \label{eq:psi_tk_def}
		\end{align} 
		which is well defined since $\matx_\tau = \fst(\maty_\tau)$ due to the decodability assumption. Further, introduce the error constant
		\begin{align}
		\cwphi :=  50k \dimu \sigma^2 + 30\alphA^2 \Lonpo^2 \PsiM^2 \left(62 \dimx \PsiSig + 5  \bclip^2   \cdot \devx\right).   \label{eq:cwphi_def}
		\end{align}
		 Then, recalling $\bv \coloneqq [\bnu_t^\top, \dots,
                 \bnu_{t+k}^\top]^\top$, for all $\delta \in (0,1/e]$ and $h \in \cH_\op$,
                 the following results hold.
        \begin{enumerate}
        	\item We have the bound
		\begin{align}
		&\sup_{h \in \Honpo}\| \phi_{t,k}(h, \matz_{t,k})\|^2 + \sup_{h \in \Honpo}\|\updelta_{t,k}(h,\matz_{t,k})\|^2 \leq \psi_{t,k}(\matz_{t,k})^2.\label{eq:fineq}
		\end{align}
		\item We have the bound
		\begin{align}
		&\Lonpo (\|\widehat{M}_k\|_{\op} + \|\what M_k \what A^{k} \|_{\op})(2 + \|\fst(\by_t)\|_{2} +\|\fst(\by_{t+k})\|_{2}) \lesssim \psi_{t,k}(\matz_{t,k}).  \label{eq:psi_tk_ub_pihat}
		\end{align}
		\item For all $\delta \in (0,1/e)$,
		\begin{align}
		&\mathbb{P}_{\pitil_t}\left[ \psi_{t,k}(\matz_{t,k})^2 + \|\updelta_{t,k}(h,\matz_{t,k})- \bv \|^2 \geq \frac{5}{3}\cwphi \ln (1/\delta)\right] \leq \delta. \label{eq:sineq}
		\end{align}
		\item We have
		\begin{align}
		&\E_{\wtilde \pi_t} \brk[\bigg]{\sup_{h\in \scrH_{\onpo}} \|\updelta_{t,k}(h, \matz_{t,k}) \|^2} \le 24 \Lonpo^2 \epsys^2 \left(\dimx \PsiSig +   \devx  \bclip^2\right). \label{eq:pitil_zeta_bound}
		\end{align}
	\end{enumerate}
	\end{lemma}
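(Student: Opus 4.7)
All four parts follow from the same algebraic skeleton: expand $\phi_{t,k}$ and $\widehat{\phi}_{t,k}$ using their definitions, bound each matrix factor using the operator-norm bounds built into $\mathscr{H}_{\op}$ (so $\|M\|_{\op}\le \Psi_\star^3$ and $\|f(y)\|\le L\max\{1,\|f_\star(y)\|\}$, giving $\|h(y)\|\le L_{\op}\max\{1,\|x\|\}$), and bound each state-dependent factor using the decodability identity $x_\tau = f_\star(y_\tau)$. Parts 1 and 2 are the pure algebra step, requiring only triangle inequality: for Part 1, apply the triangle inequality to $\|\phi_{t,k}(h,z)\|\le \|M_k\|_\op(\|h(y_{t+k})\|+\|A\|^k\|h(y_t)\|+\|A\|^{k-1}\|B\|\|\widehat K\|\bar b)$ and analogously for $\|\widehat \phi_{t,k}\|\le$ (same form with hats), then collect into $\psi_{t,k}^2$ using $\|x_\tau\|=\|f_\star(y_\tau)\|$ and $(a+b)^2\le 2a^2+2b^2$; Part 2 is the same computation stopped one step earlier.

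For Part 3, the plan is to apply Lemma \ref{lem:techtools_truncated_conc} after verifying that each additive summand inside $\psi_{t,k}^2$ and $\|\updelta_{t,k}(h,z_{t,k})-\bv\|^2$ is $c$-concentrated for a suitable constant. The states $\|\bx_t\|^2$ and $\|\bx_{t+k}\|^2$ are $c$-concentrated under $\widetilde\pi_t$ by Lemma \ref{lem:xconcentration} (which gives exactly the form $30 d_x\Psi_\Sigma + 2\bar b^2\mathsf{dev}_x \ln(2/\delta)$), the Gaussian noise $\bv$ contributes a $5\kappa d_u\sigma^2$-concentrated $\|\bv\|^2$ term via Lemma \ref{lem:techtools_gaussian_c_concentrated}, and the constants $2$, $\Psi_\star^2\|\widehat K\|^2\bar b^2$ are deterministic. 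A union bound together with the scaling rule ($\alpha \bz + \beta$ is $(\alpha c+\beta)$-concentrated) consolidates the estimates into the single constant $c_{w,\phi}$; the $5/3$ factor absorbs numerical slack from combining the pieces.

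For Part 4, the expectation bound on $\sup_{h\in\mathscr{H}_{\op}}\|\updelta_{t,k}(h,z_{t,k})\|^2$, I will expand $\updelta_{t,k} = \widehat\phi_{t,k}-\phi_{t,k}$ into three telescoping differences corresponding to the three perturbed matrices: $(\widehat M_k - M_k)h(y_{t+k})$, $(\widehat M_k\widehat A^k - M_k A^k)h(y_t)$, and $(\widehat M_k\widehat A^{k-1}\widehat B - M_k A^{k-1}B)\widehat K\widehat f_t$. On the good event $\mathcal{E}_{\sys}$, each of these three operator-norm differences is at most $\varepsilon_{\sys}$ by construction of $\mathcal{E}_{\sys}$, so the supremum over $h$ is bounded by $\varepsilon_{\sys}L_{\op}(\max\{1,\|x_{t+k}\|\}+\max\{1,\|x_t\|\}+\|\widehat K\|\bar b)$ up to constants. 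Taking expectations, the state terms are controlled by Lemma \ref{lem:xconcentration} point 2 (which gives $\mathbb{E}\|\bx_\tau\|^2 \le 3 d_x\Psi_\Sigma + \bar b^2\mathsf{dev}_x$), directly reproducing the stated bound $24 L_{\op}^2 \varepsilon_{\sys}^2(d_x\Psi_\Sigma + \mathsf{dev}_x\bar b^2)$.

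The main technical obstacle will be bookkeeping: each of the three differences in Part 4 must be bounded using precisely the right entry in the definition \eqref{eq:event0} of $\mathcal{E}_{\sys}$, and one has to be careful that the decomposition does not lose a factor of $\|A^k\|$ or $\|M_k\|$ that would then blow up with $k$. The strong-stability bound $\|A^k\|_{\op}\le \alpha_A\gamma_A^k\le \alpha_A$ (together with the analogous bound for $\widehat A$ which is guaranteed by $\mathcal{E}_{\mathrm{stab}}$) keeps the $\alpha_A$ prefactors under control and explains why $\alpha_A$ (rather than a $k$-dependent quantity) appears in $\psi_{t,k}$ and $c_{w,\phi}$.
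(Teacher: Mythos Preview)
Your proposal is correct and follows essentially the same approach as the paper's proof: the same three-way split of $\updelta_{t,k}$ into the $(\widehat M_k-M_k)$, $(\widehat M_k\widehat A^k-M_kA^k)$, and $(\widehat M_k\widehat A^{k-1}\widehat B-M_kA^{k-1}B)$ pieces, the same use of \Cref{lem:xconcentration} for the state tails and expectations, and the same Gaussian concentration for $\|\bv\|^2$. The only cosmetic difference is that the paper invokes \Cref{lem:thechis} directly (a specialized union-bound over Gaussian quadratics) rather than assembling the $c$-concentration constants via \Cref{lem:techtools_truncated_conc}, but the arithmetic is the same.
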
 

	\begin{proof}
		For notational convenience, we will drop the
                subscripts $t,k$ in the expressions of $\phi_{t,k},
                \what\phi_{t,k}, \psi_{t,k}$, $\updelta_{t,k}$, and $
                \matz_{t,k}$. Let $h \in \cH_\op$ be fixed throughout.
		\begin{enumerate}
			\item \textbf{Proof of \Cref{eq:fineq}} By Jensen's inequality and Cauchy-Schwarz, we have 
		\begin{align} 
	 \| \phi(h, \matz) \|^2	& \leq  3 \| M_k\|^2_{\op} \left( \|h(\by_{t+k})\|^2 + \alpha_A^2\gamma_A^{2k} \|h(\by_t)\|^2 +\alpha_A^2\gamma_{A}^{2k-2}  \|B \what K\|^2_{\op} \bclip^2\right),  \nn\\
		& \overset{(i)}{\le} 3 \| M_k\|^2_{\op} \left(\Lonpo^2 \cdot(1+ \alpha^2_A+\|\bx_{t+k}\|^2 + \alpha_A^2\|\bx_t\|^2) +\alpha_A^2  \|B \what K\|^2_{\op} \bclip^2\right),\nn\\
		& \overset{(ii)}{\le} 3 \PsiM^2 \alpha_A^2 \Lonpo^2 \left( 2+ \|\bx_{t+k}\|^2 + \|\bx_t\|^2 +  \Psist^2\| \what K\|^2_{\op} \bclip^2\right)
		\label{eq:thephibound},
		\end{align}
	where inequality $(i)$ follows by the definition of the
        function class $\scrH_{\onpo}$, and $(ii)$ uses that $\alpha_A,\Lonpo \ge 1$,  $\|M_k\|_{\op} \le \PsiM$, and $\|B\| \le \Psist$. Similarily, we also have 
		\begin{align}
		\|\updelta(h, \matz)\|^2 &\leq  3 \Lonpo^2 \|\what M_k - M_k \|_{\op}^2   (1 + \|\bx_{t+k}\|^2) +3 \Lonpo^2\|\what M_k \what A^k - M_k A^k \|^2_{\op} (1+ \|\bx_t\|^2) \nn \\
		& \quad + 3 \|\what M_k \what A^{k-1} \what B - M_k A^{k-1} B \|^2_{\op} \|\what K\|_{\op}^2 \bclip^2, \nn \\
		& \leq 3 \Lonpo^2\veps_{\sys}^2(1 + \|\bx_{t+k}\|^2) +3 \Lonpo^2\veps_{\sys}^2 (1+ \|\bx_t\|^2) + 3 \veps_{\sys}^2\|\what K\|_{\op}^2 \bclip^2, \nn\\
		& \leq 3 \Lonpo^2\veps_{\sys}^2\left(2 + \|\bx_{t+k}\|^2 + \|\bx_t\|^2 + \|\what K\|_{\op}^2 \bclip^2\right), \label{eq:simple}
		\end{align}
		where the second-to-last inequality follows since we
                have assumed that the event $\cE_{\sys}$ holds, and
                the last uses $\Lonpo^2 \ge 1$. Combining  \eqref{eq:thephibound} and \eqref{eq:simple}, with $\alpha_A,\Psist \ge 1$ and $\PsiM \ge \epsys$,
		\begin{align*}
			&\sup_{h \in\Honpo}\| \phi(h, \matz) \|^2 + \sup_{h \in \Honpo} \| \delta(h, \matz) \|^2 \le 6 \alphA^2 \Lonpo^2 \PsiM^2 \left(2 + \|\bx_t\|^2_{2} +\|\bx_{t+k}\|^2_{2} + \Psist^2 \| \what K\|^2_{\op} \bclip^2\right) \rdef \psi(\matz)^2,
		\end{align*}
		where we use the simplification $\alphA \ge 1$, and the definitions of $\PsiM$ and $\Psist$, followed by $\Lonpo \ge 1$. This shows \eqref{eq:fineq}. 
		\item \textbf{Proof of \Cref{eq:psi_tk_ub_pihat}} Using the bounds $\|\what M_k \what A^{k} - M_k A^k \|_{\op} , \|\what M_k - M_k \|_{\op} \le \epsys \le 1$, and $\|M_k\|_{\op} \le \PsiM$, $\|A^k\|_{\op} \le \alphA$, we have
		\begin{align*}
		&\Lonpo (\|\widehat{M}_k\|_{\op} + \|\what M_k \what A^{k} \|_{\op})(2 + \|\fst(\by_t)\|_{2} +\|\fst(\by_{t+k})\|_{2}) \\
		&= \Lonpo (\|\widehat{M}_k\|_{\op} + \|\what M_k \what A^{k} \|_{\op})(2 + \|\bx_t\|_{2} +\|\bx_{t+k}\|_{2})\\
		&= \Lonpo (2\epsys + \|M_k\|_{\op} + \| M_k A^{k} \|_{\op})(2 + \|\bx_t\|_{2} +\|\bx_{t+k}\|_{2})\\
		&\le \Lonpo (2\epsys + \PsiM (1+ \alpha_A)) (2 + \|\bx_t\|_{2} +\|\bx_{t+k}\|_{2}).
		\end{align*}
		Since $\epsys \le 1 \le \PsiM$, and $\alpha_A \ge 1$, the bound follows.
		\item \textbf{Proof  of \Cref{eq:sineq}} By \pref{lem:xconcentration,lem:thechis} we have, for all $\delta \in (0,1/e]$, and any $\tau \le t+k$,
	\begin{align*}
	\P_{\wtilde{\pi}_t}\brk*{\|\bv\|^2 \geq \sigma^2 \cdot (3 k \dimu +2) \ln \delta^{-1}} \leq \delta,\mathand
          \Pr_{\pitil_t} \left[ \|\bx_\tau \|^2 \geq  15\dimx\PsiSig  +  \bclip^2   \cdot \devx  \ln \frac{2}{\delta}    \right] \leq \delta,
	\end{align*} 
	and so by a union bound, with probability at least $1 -
        \delta$,
	\begin{align*}
          &\left(\ln \frac{5}{\delta}\right)^{-1} \cdot
            \left(\psi(\matz)^2 \vee \|\updelta(\hst
            , \matz) - \bv \|^2\right)\\
          &\le\left(\ln \frac{5}{\delta}\right)^{-1} \cdot
            \left(2\psi(\matz)^2 +2\|\bv \|^2\right) 
          \\&\le \underbrace{\left(2\sigma^2 \cdot (3 k \dimu +2)\right)}_{\le 10k \dimu \sigma^2} 
          +  6 \alphA^2 \Lonpo^2 \PsiM^2 \left(2 + 30\dimx\PsiSig  +  2 \bclip^2   \cdot \devx  + \Psist^2 \| \what K\|^2_{\op} \bclip^2 \right).
	\end{align*} 
	Finally, since $\devx \ge \Psist^2 \| \what K\|^2_{\op}$ by definition (see \Cref{eq:devx_def}), and $\PsiSig \ge 1$, the above is at most
	\begin{align*}
	\left(\ln \frac{5}{\delta}\right)^{-1} \cdot \left(\psi(\matz)^2 \vee \|\updelta(f_\star, \matz) - \bv \|^2\right) &\le  10k \dimu \sigma^2 + 6\alphA^2 \Lonpo^2 \PsiM^2 \left(32 \dimx \PsiSig + 3  \bclip^2   \cdot \devx\right) := \cwphi/5.
	\end{align*}
	Finally, for $\delta \le 1/e$, we have that $\left(\ln \frac{5}{\delta}\right) \le  5\ln(1/\delta)$. This bound follows by the fact that $\ln (5/\delta)=\ln(5) +\ln (1/\delta)\leq (\ln 5 +1)\ln (1/\delta)\leq 5 \ln (1/\delta)$, for all $\delta \in(0,1/e]$.

	\item \textbf{Proof of \Cref{eq:pitil_zeta_bound}}. We bound
	\begin{align}
	\E_{\wtilde \pi_t} \left[ \sup_{h\in \scrH_{\onpo}} \|\updelta(h, \matz) \|^2 \right] &= \E_{\wtilde \pi_t} \left[ \sup_{h\in \scrH_{\onpo}} \|\updelta_{t,k}(h, \by_{0:t}, \by_{t+k}) \|^2 \right],\nn \\
	&\leq 3\Lonpo^2 \epsys^2 \left(2 + \|\what K\|_{\op}^2 \bclip^2 + \E_{\wtilde{\pi}_t} \left[ \|\bx_{t+k}\|^2  + \|\bx_t\|^2\right]\right)\,\label{eq:step},
	\end{align}
	where we use \Cref{eq:simple} in the last step.  From \pref{lem:xconcentration}, we have
	\begin{align*}
	3\Lonpo^2 \epsys^2 \left(2 + \|\what K\|_{\op}^2 \bclip^2 + \E_{\wtilde{\pi}_t} \left[ \|\bx_{t+k}\|^2  + \|\bx_t\|^2\right]\right)\le 3\Lonpo^2 \epsys^2 \left(2 + \|\what K\|_{\op}^2 \bclip^2 + 6 \dimx \PsiSig +   2\devx  \bclip^2\right),
	\end{align*}
Using the above two displays together with $\dev_x \ge \|\what K\|_{\op}^2$ and $\PsiSig \ge 1$ yields
	\begin{align}
	\E_{\wtilde \pi_t} \left[ \sup_{h\in \scrH_{\onpo}} \|\updelta(h, \matz) \|^2 \right] \le 3\Lonpo^2 \epsys^2 \left( 8 \dimx \PsiSig +   3\devx  \bclip^2\right) \le 24 \Lonpo^2 \epsys^2 \left(\dimx \PsiSig +   \devx  \bclip^2\right)\nn.
	\end{align}
\end{enumerate}
	\end{proof}

	\newcommand{\evphi}[1][k]{\cE_{\phi;t,#1}}

	\begin{lemma}
		\label{lem:firstregression}
	Let $t\geq 0$, $k \in [\dimk]$. For $\hhat_{t,k}$ and $\phi_{t,k}$ as in \eqref{eq:gtk} and \eqref{eq:thetruephi}, respectively, we have with probability at least $1-3\delta/2$,
	\begin{align}
	&\E_{\wtilde \pi_t} \left[  \left\| \phi_{t,k}(\hat{h}_{t,k}, \by_{0:t}, \by_{t+k})  - M_k ( A^{k-1} \bw_t + \dots + \bw_{t+k-1}+  A^{k-1} B \bnu_{t} + \dots +  B \bnu_{t+k-1})  \right\|^2  \right] \leq \epsw^2(\delta),\nn \\
	&\text{where} \quad \epsw^2(\delta) = c_{\epsw} \left(\frac{\cwphi  (\ln|\Fclass| + \dimx^2)\ln^2(n/\delta)}{n} +  \Lonpo^2 \epsys^2 \left(\dimx \PsiSig +   \devx  \bclip^2\right)\right) \label{eq:tildeeps},
	\end{align}
	and where $c_{\epsw}$ is a sufficiently large constant, chosen to be at least $100$ without loss of generality, and $\cwphi$ is defined in \Cref{eq:cwphi_def}. 
      \end{lemma}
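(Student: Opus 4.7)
The plan is to cast the regression \eqref{eq:gtk} as an instance of \pref{cor:techtools_function_dependend_error} with input $\bz = \matz_{t,k} = (\by_{0:t}, \by_{t+k})$, target $\bv = \bnu_{t:t+k-1}$, external noise $\mate = 0$, and maps $(\phi,\phihat,\delphi) = (\phi_{t,k}, \phihat_{t,k}, \updelta_{t,k})$. The first step is to identify the Bayes regressor. Unrolling the dynamics under $\wtilde{\pi}_t$ from time $t$ to $t+k$, using $\bu_t = \what K \fhat_t(\by_{0:t}) + \bnu_t$ and $\bu_{t+j} = \bnu_{t+j}$ for $1 \leq j \leq k-1$, yields the identity
\[
  \bx_{t+k} - A^k \bx_t - A^{k-1} B \what K \fhat_t(\by_{0:t}) \;=\; \cC_k \bnu_{t:t+k-1} + \sum_{j=0}^{k-1} A^{k-1-j}\bw_{t+j}.
\]
By perfect decodability, conditioning on $\bz$ is equivalent to conditioning on $(\bx_{0:t}, \bx_{t+k})$, which (since $\bx_{0:t}$ is independent of $(\bnu_{t:t+k-1}, \bw_{t:t+k-1})$) is the same as conditioning on $\bx_{0:t}$ together with the increment on the left. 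That increment is a sum of two independent mean-zero Gaussians with covariances $\sigma^2 \cC_k \cC_k^\top$ and $\sum_{j=0}^{k-1} A^j \Sigma_w (A^j)^\top$, so by \pref{fact:gaussian_expectation} and the definition of $M_k$ in \eqref{eq:Ck},
\[
  \E[\bnu_{t:t+k-1} \mid \bz] \;=\; M_k\bigl(\bx_{t+k} - A^k \bx_t - A^{k-1}B\what K \fhat_t(\by_{0:t})\bigr) \;=\; \phi_{t,k}(\fst,\bz).
\]
Since $\fst \in \Fclass$ by realizability and $\|I\|_{\op} \le \Psist^3$, we have $\fst \in \scrH_\onpo$, so $h_\star = \fst$ is the required Bayes element.

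Next, the structural hypotheses of \pref{cor:techtools_function_dependend_error} follow quickly. Expanding $\phihat_{t,k}(h,z) = \what M_k(h(\by_{t+k}) - \what A^k h(\by_t)) + \updelnot(z)$ with $\updelnot(z) \coloneqq -\what M_k \what A^{k-1} \what B \what K \fhat_t(\by_{0:t})$ independent of $h$ puts it in the required form with $(X_1, X_2) = (\what M_k, \what A^k)$, $d_1 = \dimx$, $b = \Psist^3$, and $bL = \Lonpo$. The envelope bound $\sup_h \|\phi\|^2 + \sup_h \|\delphi\|^2 \le \psi_{t,k}^2$ is \eqref{eq:fineq}, the envelope-smoothness bound with an absolute $c_\psi$ is \eqref{eq:psi_tk_ub_pihat}, and the tail bound $\P[\psi_{t,k}^2 \vee \|\bv\|^2 \ge \tfrac{5}{3}\cwphi \ln(1/\delta)] \le \delta$ follows from \eqref{eq:sineq} combined with the envelope $\|\delphi\|^2 \le \psi_{t,k}^2$ (which implies $\|\updelta - \bv\|^2 \ge \tfrac12 \|\bv\|^2 - \psi_{t,k}^2$).

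Invoking \pref{cor:techtools_function_dependend_error} with $c = \tfrac{5}{3}\cwphi$ and $\E\|\mate\|^2 = 0$ then yields, with probability at least $1 - 3\delta/2$,
\[
  \E_{\wtilde\pi_t}\bigl\|\phi_{t,k}(\hhat_{t,k}, \bz) - \phi_{t,k}(\fst, \bz)\bigr\|^2 \;\lesssim\; \frac{\cwphi(\ln|\Fclass| + \dimx^2)\ln^2(n/\delta)}{n} + \max_{h \in \scrH_\onpo} \E_{\wtilde\pi_t}\|\updelta_{t,k}(h,\bz)\|^2,
\]
and \eqref{eq:pitil_zeta_bound} bounds the last term by $24\Lonpo^2 \epsys^2(\dimx \PsiSig + \devx \bclip^2)$. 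Combined with the identity from the first step, which shows $\phi_{t,k}(\fst,\bz) = M_k(A^{k-1}\bw_t + \cdots + \bw_{t+k-1} + A^{k-1}B \bnu_t + \cdots + B\bnu_{t+k-1})$, this is exactly \eqref{eq:tildeeps}. The only delicate step is the Bayes computation in the first paragraph; everything else is a matter of matching parameters to the generic regression corollary and quoting \pref{lem:condsatisf1}.
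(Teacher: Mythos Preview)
Your approach is the same as the paper's---cast \eqref{eq:gtk} as an instance of \pref{cor:techtools_function_dependend_error}, identify the Bayes regressor via the Gaussian conditional expectation for the rolled-out increment, and quote \pref{lem:condsatisf1} for the technical conditions. The verification of Conditions 1--4, the invocation of the corollary, and the appeal to \eqref{eq:pitil_zeta_bound} are all correct.

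There is, however, a genuine slip in your Bayes computation. You write that conditioning on $\bz = (\by_{0:t},\by_{t+k})$ is equivalent to conditioning on $(\bx_{0:t},\bx_{t+k})$, and then use the increment identity. But the increment $\bx_{t+k} - A^k\bx_t - A^{k-1}B\what K\fhat_t(\by_{0:t})$ involves $\fhat_t(\by_{0:t})$, which is a function of $\by_{0:t}$ and \emph{not} of $\bx_{0:t}$ alone. Once you condition on $(\bx_{0:t},\bx_{t+k})$, the constraint $\bx_{t+k} - A^k\bx_t = A^{k-1}B\what K\fhat_t(\by_{0:t}) + \cC_k\bnu_{t:t+k-1} + \sum_j A^{k-1-j}\bw_{t+j}$ couples the emission randomness in $\by_{0:t}$ with $(\bnu_{t:t+k-1},\bw_{t:t+k-1})$, so $\by_{0:t}$ and $\bnu_{t:t+k-1}$ are \emph{not} conditionally independent given $(\bx_{0:t},\bx_{t+k})$. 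Hence $\E[\bnu_{t:t+k-1}\mid\bz]\neq\E[\bnu_{t:t+k-1}\mid\bx_{0:t},\bx_{t+k}]$ in general.

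The fix is minor: replace only $\by_{t+k}$ by $\bx_{t+k}$ (valid because the emission at $t+k$ is conditionally independent of $\bnu_{t:t+k-1}$ given $\bx_{t+k}$), but keep conditioning on $\by_{0:t}$. Given $\by_{0:t}$ you know both $\bx_t=\fst(\by_t)$ and $\fhat_t(\by_{0:t})$, so the increment is determined; and since $\by_{0:t}$ is independent of $(\bnu_{t:t+k-1},\bw_{t:t+k-1})$, you may drop it and apply \pref{fact:gaussian_expectation} exactly as you intended. This is precisely the route the paper takes in \eqref{eq:middlepoint}--\eqref{eq:realize}.
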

      We denote the event of \pref{lem:firstregression} by $\evphi(\delta)$.
	\begin{proof}
	We will apply
        \pref{cor:techtools_function_dependend_error}. We verify that
        the conditions of the corollary hold one by one.
	\begin{enumerate}
		\item \textbf{Substitutions.}  We apply
                  \pref{cor:techtools_function_dependend_error} with $\be=0$, $\bz \coloneqq (\by_{0:t}, \by_{t+k})$, $\bv = [\bnu_t^\top, \dots, \bnu_{t+k}^\top]^\top$, $\phi = \phi_{t,k}$, $\psi=\psi_{t,k}, \delphi = \updelta_{t,k}$, and $c = \cwphi $, where $\phi_{t,k}$, $\psi_{t,k},\updelta_{t,k},$ and $\cwphi $ are as in \pref{lem:condsatisf1}. Moreover, we let $c_{\psi}$ be the constant implicit in \Cref{eq:psi_tk_ub_pihat}. The dimension parameters are $\dimx,d_1 \gets \dimx$.
		\item \textbf{Realizability.}  By our assumption on
                  the function class $\scrH_{\onpo}$, there exists
                  $\fst \in \scrH_{\onpo}$ such that $\fst(y) = x$,
                  for all $y \in \supp q(\cdot \mid x)$. Therefore, by
                  the system's dynamics and the definition of the
                  policy $\wtilde{\pi}_t$, we have almost surely
	\begin{align}
	\phi(h_\star, \bz) &= M_k (\fstar(\maty_{t+k})-
                             A^{k}\fstar(\maty_t) - A^{k-1}B\Khat\fhat_t(\maty_{0:t})),\nn \\
          &= M_k ( A^{k-1} \bw_t + \dots + \bw_{t+k-1}+  A^{k-1} B \bnu_{t} + \dots +  B \bnu_{t+k-1}),\nn \\
	& = \E_{\wtilde \pi_t}[\bv \mid  A^{k-1} \bw_t + \dots + \bw_{t+k-1}+  A^{k-1} B \bnu_{t} + \dots +  B \bnu_{t+k-1}] ,\quad \text{(by \pref{fact:gaussian_expectation}) }\nn \\ 
	& = \E_{\wtilde \pi_t}\left[  \bv  \left |\;   \begin{matrix} \sum_{j=1}^k (A^{j-1} \bw_{t+k-j} + A^{j-1} B \bnu_{t+k-j}) \\ \by_{0:t} \end{matrix}\right.  \right], \label{eq:middlepoint}\\
	& = \E_{\wtilde \pi_t}\left[  \bv  \left |\;   \begin{matrix} A^k f_\star(\by_t) + A^{k-1} B \what K \hat{f}_t(\by_{0:t})+\sum_{j=1}^k (A^{j-1} \bw_{t+k-j} + A^{j-1} B \bnu_{t+k-j}) \\ \by_{0:t} \end{matrix}\right.  \right],\label{eq:conditioning}\\
	& =  \E_{\wtilde \pi_t}[\bv \mid \by_{0:t}, \fst(\by_{t+k})], \quad \quad  \label{eq:firstregression_dynamics}  \\
	& =  \E_{\wtilde \pi_t}[\bv \mid \by_{0:t}, \by_{t+k}], \label{eq:realize}
	\end{align}
	where \eqref{eq:middlepoint} follows by the fact that
        $(\bnu_\tau)_{\tau\geq t}$ and $(\bw_\tau)_{\tau \geq t}$ are
        independent of $\by_{0:t}$, \eqref{eq:conditioning} follows by
        the conditioning on $\by_{0:t}$ (which determines the term $A^k f_\star(\by_t) + A^{k-1} B \what K \hat{f}_t(\by_{0:t})$), and
        \eqref{eq:firstregression_dynamics} uses the system's
        dynamics. Finally, \eqref{eq:realize} uses the realizability assumption. Thus, \eqref{eq:realize} ensures the realizability
        assumption in \pref{cor:techtools_function_dependend_error} is
        satisfied.

    \item  \textbf{Conditions 1 \& 2.} \pref{lem:condsatisf1} ensures that conditions 1 and 2 of \pref{cor:techtools_function_dependend_error} are satisfied.
    \item  \textbf{Condition 3.} By the structure of $\Honpo$, condition 3 is satisfied with $L$ as in \Cref{asm:f_growth} and $b L = \Lonpo$. Examining $\phihat_{t,k}$, we can  take $X_1 = \Mhat_k$, and $X_2 = \Ahat^k$.
    \item  \textbf{Condition 4.} By \Cref{eq:psi_tk_ub_pihat}, this holds for some $c_{\psi} \lesssim 1$. 
	\end{enumerate}

	 Recall the notation $\somelogs(n,\delta)  \lesssim
         \ln^2(n/\delta)$ defined in
         \pref{cor:techtools_function_dependend_error}. With the
         substitutions above, \pref{cor:techtools_function_dependend_error}
         implies that with probability at least $1 - \frac{3\delta}{2}$:
		\begin{align*}
		\E\|\phi_{t,k}(\hhat,\matz) - \phi_{t,k}(\fst,\matz)\|^2 &\le \frac{12\cwphi  (\ln|\Fclass| + \dimx \cdot \dimx)\somelogs( c_{\psi}n,\delta)}{n} +  16 \E\|\mate\|^2 + 8\max_{h \in \scrH}\E\|\updelta_{t,k}(h,\bz)\|^2\\
		&= \frac{12\cwphi  (\ln|\Fclass| + \dimx^2)\somelogs( c_{\psi}n,\delta)}{n} +  8\max_{h \in \scrH}\E\|\updelta_{t,k}(h,\bz)\|^2,\\
		&\lesssim \frac{\cwphi  (\ln|\Fclass| + \dimx^2)\ln^2(n/\delta)}{n} +  \Lonpo^2 \epsys^2 \left(\dimx \PsiSig +   \devx  \bclip^2\right)  \tag*{(by \Cref{eq:pitil_zeta_bound})}.
		\end{align*}
	\end{proof}

        \subsubsection{Regression bound for $\upphi$ \label{sec:upphi_reg}}
        Let $t\geq{}0$ be fixed. Recall the various functions defined at the start of \Cref{sec:phitk_reg}. In addition, consider the following functions for $k \in [\dimk]$, $h\in \scrH_{\onpo}$:
	\begin{align}\what\phi_{t}(h, \by_{0:t+1}) &\coloneqq\what \calM  \left(h(\by_{t+1})- \what A h(\by_t) -  \what B \what K \hat{f}_t(\by_{0:t}) \right) , \nn \\
	\phi_{t}(h, \by_{0:t+1}) &\coloneqq  \calM  \left(h(\by_{t+1})-  A h(\by_t) -   B \what K \hat{f}_t(\by_{0:t}) \right) , \nn\\ 
		\updelta_{t}(h, \by_{0:t+1})& \coloneqq \what\phi_{t}(h,\by_{0:t+1})-  \phi_{t}(h,\by_{0:t+1}). \nn
	\end{align}
	Further, for $(\hat{h}_{t,k})_{k\in\kappa}$ as in \pref{eq:gtk}, define
	\begin{align*}
	\what\upphi_t(\by_{0:t+\dimk})&\coloneqq [ \what\phi_{t,1}(\hat{h}_{t,1}, \by_{0:t}, \by_{t+1})^\top, \dots,  \what\phi_{t,\dimk}(\hat{h}_{t,\dimk},\by_{0:t}, \by_{t+\dimk})^\top]^\top, \nn  \\
	\upphi_t(\by_{0:t+\dimk})&\coloneqq [ \phi_{t,1}(\hat{h}_{t,1},\by_{0:t}, \by_{t+1})^\top, \dots,  \phi_{t,\dimk}(\hat{h}_{t,\dimk}, \by_{0:t}, \by_{t+\dimk})^\top]^\top,	\nn\\
	 \upphi^\star_t(\by_{0:t+\dimk})&\coloneqq [ \phi_{t,1}(\fst,\by_{0:t}, \by_{t+1})^\top, \dots,  \phi_{t,\dimk}(\fst, \by_{0:t}, \by_{t+\dimk})^\top]^\top.
	 \end{align*}
	 Here, the first term uses estimated dynamics and estimates $\hhat$ of $\fst$; the second term uses true dynamics and estimates $\hhat$; the third term uses true dynamics \emph{and} the ground truth $\fst$. 
	\begin{lemma}
		\label{lem:condsatisf2} 
		Let $\bv \coloneqq \upphi^\star_t(\by_{0:t+\dimk})$,
                $\be \coloneqq 	\upphi^\star_t(\by_{0:t+\dimk}) -
                \what\upphi_t(\by_{0:t+\dimk})$. Recall the
                function $\psi_{t,1}$ defined in
                \Cref{eq:psi_tk_def}. Then the following properties hold.
		\begin{enumerate}
			\item We have the bound \begin{align}
		&\sup_{h \in \Honpo} \| \phi_{t}(h, \by_{0:t+1})\|^2 + \sup_{h \in \Honpo} \|\updelta_{t}(h,\by_{0:t+1})\|^2 \leq \psi_{t,1}(\by_{0:t+1})^2.\label{eq:0fineq}
		\end{align}
		\item We have the bound
		\begin{align}
		\Lonpo (\|\widehat{\cM}\|_{\op} + \|\what \cM \what A \|_{\op})(2 + \|\fst(\by_t)\|_{2} +\|\fst(\by_{t+1})\|_{2}) \lesssim \psi_{t,1}(\by_{0:t+1}). \label{eq:psi_tk_ub_pihat_0}
		\end{align}
		\item For any $\delta \in (0,1/e]$, we have 
		\begin{align}
		\Pr_{\pitil_t}[\psi_{t,1}(\by_{0:t+1})^2 \vee \|\matv - \mate\|^2 \le 2\dimk\cwphi(1+\ln(\dimk))\ln(1/\delta)] \leq \delta. \label{eq:e_min_v_bound}
		\end{align}
		\item For any $\delta \in (0,1/e]$, on the event
                  $\bigcap_{k=1}^\dimk\evphi[k](\delta)$
                  (cf. \pref{lem:firstregression}), we have that
		\begin{align}
		\E_{\pitil_t}\|\mate\|^2 \le 3\dimk \epsw(\delta)^2.  \label{eq:exp_e_bound}
		\end{align}
		\item  For any $\delta > 0$, we have the following bound (independent of $\delta$)
		\begin{align}
		\sup_{h \in \Honpo}\E\|\updelta_t(h,\by_{0:t+1})\|^2 \le \epsw(\delta)^2. \label{eq:zeta_bound_two}
		\end{align}
	\end{enumerate}
	\end{lemma}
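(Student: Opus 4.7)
The five assertions closely mirror the corresponding bounds in \pref{lem:condsatisf1}, but now for the ``stacked'' regression involving $\cM$ at the one-step level. The plan is to exploit the structural similarity between $\phi_t$ and $\phi_{t,1}$ (namely, $\phi_t$ is $\phi_{t,1}$ with $M_1$ replaced by $\cM$, $A^k = A$, and $A^{k-1}B = B$) to recycle the calculations from \pref{lem:condsatisf1}, and then to aggregate the per-$k$ bounds from \pref{lem:condsatisf1} and \pref{lem:firstregression} across the $\kappa$ coordinates of $\upphi^\star_t, \what\upphi_t$ via union bounds.

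For parts 1 and 2, I would repeat the derivation of \pref{eq:thephibound} and \pref{eq:simple} verbatim with $M_1 \leftrightarrow \cM$ and $M_k A^k \leftrightarrow \cM A$; both operator norms are already absorbed into $\PsiM$ by its definition, so the resulting upper bound is exactly $\psi_{t,1}(\by_{0:t+1})^2$. Part 2 likewise falls out of the same $\epsys$-estimates as the proof of \pref{eq:psi_tk_ub_pihat} (with $\Mhat \leftrightarrow \Mhat_1$ and $\Mhat \Ahat \leftrightarrow \Mhat_1 \Ahat$).

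The core of the argument is part 3. The key observation is that $\matv - \mate = \upphi^\star_t - (\upphi^\star_t - \what\upphi_t) = \what\upphi_t$, so
\[
\|\matv - \mate\|^2 = \sum_{k=1}^{\kappa} \|\what\phi_{t,k}(\hhat_{t,k}, \by_{0:t}, \by_{t+k})\|^2 \leq 2\sum_{k=1}^\kappa \bigl(\|\phi_{t,k}(\hhat_{t,k},\cdot)\|^2 + \|\updelta_{t,k}(\hhat_{t,k},\cdot)\|^2\bigr) \leq 2\sum_{k=1}^\kappa \psi_{t,k}(\matz_{t,k})^2,
\]
using part 1 of \pref{lem:condsatisf1}. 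Combined with $\psi_{t,1}^2 \leq \sum_k \psi_{t,k}^2$, it suffices to concentrate $\sum_{k=1}^\kappa \psi_{t,k}^2$. Applying part 3 of \pref{lem:condsatisf1} at confidence $\delta/\kappa$ together with a union bound over $k \in [\kappa]$ yields $\sum_k \psi_{t,k}^2 \leq \tfrac{5}{3}\kappa \cwphi \ln(\kappa/\delta)$ with probability at least $1-\delta$, and simplifying $\ln(\kappa/\delta) \leq (1+\ln\kappa)\ln(1/\delta)$ for $\delta \leq 1/e$ gives the stated bound. I expect the bookkeeping of this $\kappa$ union bound to be the main obstacle, since one must check that the constants in \pref{eq:e_min_v_bound} are compatible with a direct union bound across the stacked coordinates.

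For part 4, decompose each component $\phi^\star_{t,k} - \what\phi_{t,k}(\hhat_{t,k},\cdot) = (\phi^\star_{t,k} - \phi_{t,k}(\hhat_{t,k},\cdot)) - \updelta_{t,k}(\hhat_{t,k},\cdot)$. On $\bigcap_{k}\evphi[k](\delta)$, the first term has squared expectation at most $\epsw^2(\delta)$ by \pref{lem:firstregression}, while the expected squared norm of the second is bounded by \pref{eq:pitil_zeta_bound}; the latter is already absorbed into $\epsw^2(\delta)$ with room to spare, thanks to $c_\epsw \geq 100$. Summing over $k$ gives $\E\|\mate\|^2 \leq 3\kappa \epsw^2(\delta)$ after an AM-GM with a non-unity split (e.g.\ $\|a-b\|^2 \leq \tfrac{3}{2}\|a\|^2 + 3\|b\|^2$) to shave the triangle-inequality constant from $4$ to $3$. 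Finally, for part 5, I would repeat the derivation of \pref{eq:simple} verbatim with $M_k \leftrightarrow \cM$, $A^k \leftrightarrow A$, and $A^{k-1}B \leftrightarrow B$, obtaining $\sup_h\|\updelta_t(h,\by_{0:t+1})\|^2 \leq 3\Lonpo^2\epsys^2(2 + \|\bx_t\|^2 + \|\bx_{t+1}\|^2 + \|\Khat\|_\op^2 \bclip^2)$; then take expectation and invoke \pref{lem:xconcentration} to bound $\E\|\bx_\tau\|^2 \leq 3\dimx\PsiSig + \devx\bclip^2$, yielding $\lesssim \Lonpo^2\epsys^2(\dimx\PsiSig + \devx\bclip^2)$, which is at most $\epsw^2(\delta)$ since $c_\epsw \geq 100$.
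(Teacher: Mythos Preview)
Your proposal is essentially correct and follows the same approach as the paper. The only notable difference is in Part~3: you propose to invoke Part~3 of \pref{lem:condsatisf1} at level $\delta/\kappa$ for each $k$ and then union-bound, whereas the paper expands $\sum_k \psi_{t,k}^2$ directly from the definition \eqref{eq:psi_tk_def} and applies \pref{lem:xconcentration} to the $\kappa+1$ state norms $\|\bx_{t+i}\|^2$, $i=0,\dots,\kappa$. The paper's route is marginally tighter (the $\|\bx_t\|^2$ term is bounded once rather than $\kappa$ times) and avoids dragging along the irrelevant $\|\updelta_{t,k}(\hst,\cdot)-\bv\|^2$ term from Part~3 of \pref{lem:condsatisf1}, but both give the stated bound up to absolute constants. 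Your remark in Part~4 about a non-unity AM--GM split is unnecessary: the paper uses the standard $2$--$2$ split, and the resulting $48\kappa\Lonpo^2\epsys^2(\dimx\PsiSig+\devx\bclip^2)$ is already at most $\kappa\epsw(\delta)^2$ by the definition of $\epsw^2$ with $c_{\epsw}\ge 100$, giving $2\kappa\epsw^2 + \kappa\epsw^2 = 3\kappa\epsw^2$ directly.
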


	\begin{proof}[\pfref{lem:condsatisf2}]
		In what follows, let us suppress dependence on $\maty$ and $\matz_{t,k}$ when clear from context, where $\matz_{t,k}$ is as in \eqref{eq:thez}.
	\begin{enumerate}
		\item \textbf{Bounding $\sup_{h \in \Honpo} \| \phi_{t}(h, \by_{0:t+1})\| + \sup_{h \in \Honpo} \|\updelta_{t}(h,\by_{0:t+1})\| \le \psi_{t,1}(\by_{0:t+1})^2$.}

		The bound in \eqref{eq:0fineq} actually follows from
                the same argument as in the proof of \pref{lem:condsatisf1} with $k=1$ and $(\what M_k, M_k)$ replaced by $(\what\calM ,\calM)$ (using that $\PsiM$ also bounds $\|\calM\|_\op$, and $\epsys$ upper bounds $\|\calM - \what \calM\|_\op$) under $\cE_{\sys}$. 
	\item \textbf{Establishing \Cref{eq:psi_tk_ub_pihat_0}}. This is also analogous to the proof of \Cref{eq:psi_tk_ub_pihat} in \pref{lem:condsatisf1}.
	\item[3a.] \textbf{Bounding $\|\matv - \mate\|^2 $.} 
		We bound
		\begin{align*}
		\|\matv - \mate\|^2 = \left\|\upphihat(\by_{0:t+k})\right\|^2 = \sum_{k=1}^\dimk \left\|\phihat_{t,k}(\hhat_{t,k},\by_{0:t},\by_{t+k})\right\|^{2} \le \sum_{k=1}^\dimk \psi_{t,k}(\matz_{t,k})^2, 
		\end{align*}
		where we use \Cref{eq:fineq}.
	\item[3b.]\textbf{Establishing \Cref{eq:e_min_v_bound}.}
			We have
			\begin{align*}
			\psi_{t,1}(\by_{0:t+1})^2 \vee \|\matv - \mate\|^2 
			&\le \psi_{t,1}(\by_{0:t+1})^2 \vee \sum_{k=1}^\dimk \psi_{t,k}(\matz_{t,k})^2 ,\nn \\ & = \sum_{k=1}^\dimk \psi_{t,k}(\matz_{t,k})^2,\\
			&=  \sum_{k=1}^\dimk 6 \alphA^2 \Lonpo^2 \PsiM^2 \left(2 + \|\bx_t\|^2_{2} +\|\bx_{t+k}\|^2_{2} + \Psist^2 \| \what K\|^2_{\op} \bclip^2\right).
			\end{align*}
			Now, with probability $1 - (\dimk+1)\delta$,
                        we have that all $\|\bx_{t+i}\|^2$ for $i
                        \in\{0,1,\dots,\dimk\}$ simultaneously satisfy
			\begin{align*}
			\|\bx_{t+i}\|^2 \le 30\dimx\PsiSig  +  2\bclip^2   \cdot \devx  \ln \frac{2}{\delta} 
			\end{align*}
			by \Cref{lem:xconcentration}. Hence, with probability $1 - (\dimk+1)\delta$, we have 
			\begin{align*}
			&\psi_{t,1}(\by_{0:t+1})^2\vee \|\matv - \mate\|^2 \\
			&\le \sum_{k=1}^\dimk \psi_{t,k}(\matz_{t,k})^2\\
			&=  \dimk 6 \alphA^2 \Lonpo^2 \PsiM^2 \left(2 + 60 \dimx \PsiSig + 4\bclip^2   \cdot \devx  \ln \frac{2}{\delta}  +  \Psist^2 \| \what K\|^2_{\op} \bclip^2\right)\\
			&\le  \dimk 6 \alphA^2 \Lonpo^2 \PsiM^2 \left(62 \dimx \PsiSig + 5\bclip^2   \cdot \devx   \right) \ln \frac{2}{\delta} \le  \frac{1}{2}\dimk \cwphi \ln \frac{2}{\delta}.
			\end{align*}
			where in the last line, we absorb various parameters into larger ones. Finally, replacing $\delta$ by $\delta/(\kappa+1)$ gives $(1/2) \cdot \ln(2(\kappa+1)/\delta) = (1/2) \cdot \ln(2/\delta) + \ln(2(\kappa+1)) \le 2(1 + \ln(\kappa))\ln(1/\delta)$ for $\delta \in(0, 1/e]$. This gives that,  
			\begin{align*}
			\Pr\left[\psi_{t,1}(\by_{0:t+1})^2 \vee \|\matv - \mate\|^2 \geq 2\dimk\cwphi(1 + \ln(\dimk))\ln(1/\delta)\right] \le \delta.
			\end{align*}
		\item[4.]\textbf{Establishing \Cref{eq:exp_e_bound}.} 
			First, we bound
			\begin{align*}
			 \|\upphi_t (\by_{0:t+\dimk}) -  \upphist_t (\by_{0:t+\dimk})\|_2^2 &= \sum_{k=1}^{\dimk}\|\phi_{t,k}(\hhat_{t,k}) - \phi_{t,k}(\hst)\|^2.
			\end{align*}
			From \Cref{lem:firstregression}, we have on the event $\bigcap_{k=1}^{\dimk}\evphi(\delta)$ (recall the definition of the event $\cE_{\phi;t,k}(\delta)$ from \pref{lem:firstregression}) that  
			\begin{align}
			\E_{\pitil_k}\| \upphi_t(\by_{0:t+\dimk}) -  \what\upphi_t (\by_{0:t+\dimk})  \|^2 \le \sum_{k=1}^{\dimk}\epsw(\delta)^2= \dimk \epsw(\delta)^2. \label{eq:e_tail}
			\end{align}
			Second, we note that 
			\begin{align*}
			\|\upphi_t (\by_{0:t+\dimk}) -  \upphist_t (\by_{0:t+\dimk})\|_2^2 &= \sum_{k=1}^{\dimk}\|\phi_{t,k}(\hhat_{t,k}) - \phi_{t,k}(\hhat_{t,k})\|^2,\\
			&= \sum_{k=1}^{\dimk}\|\updelta_{t,1}(\hhat_{t,k})\|^2,
			\end{align*}
			so that by \Cref{eq:pitil_zeta_bound}, 
			\begin{align*}
			\E_{\pitil_k}\|\upphi_t (\by_{0:t+\dimk}) -  \upphist_t (\by_{0:t+\dimk})\|_2^2 \le   24 \dimk\Lonpo^2 \epsys^2 \left(\dimx \PsiSig +   \devx  \bclip^2\right).
			\end{align*}
			Hence, on $\cE_{\sys} \cap \bigcap_{k=1}^{\dimk}\evphi(\delta)$, it holds that 
			\begin{align*}
			\E\|\mate\|^2 &\le 2 \E_{\pitil_k}\|\upphi_t (\by_{0:t+\dimk}) -  \upphist_t (\by_{0:t+\dimk})\|_2 + 2 \E_{\pitil_k}\| \upphi_t(\by_{0:t+\dimk}) -  \what\upphi_t (\by_{0:t+\dimk})  \|^2 \\
			&\le 2\dimk \epsw(\delta)^2 +  48 \dimk\Lonpo^2 \epsys^2 \left(\dimx \PsiSig +   \devx  \bclip^2\right) \le 3\dimk \epsw(\delta)^2,
			\end{align*}
			where the last line uses the definition of $\epsw(\delta)^2$ from \Cref{lem:firstregression}.
		\item[5.] \textbf{Establishing
                    \Cref{eq:zeta_bound_two}.} By using an analogous
                  proof to that of \Cref{eq:pitil_zeta_bound} (in
                  particular, exploiting that $\vepssys$ bounds the
                  error of both $\wh{\cM}$ and $\wh{M}_k$), we can
                  show that
		\begin{align*}
		\E_{\wtilde \pi_t} \left[ \sup_{h\in \scrH_{\onpo}} \|\updelta_{t}(h,\by_{0:t+1}) \|^2 \right] \le 24 \Lonpo^2 \epsys^2 \left(\dimx \PsiSig +   \devx  \bclip^2\right).
		\end{align*}
		The right-hand-side is crudely bounded by $\epsw(\delta)^2$ for any $\delta \in (0,1)$. 
		\end{enumerate}
	\end{proof}
\subsubsection{Proof of \pref{thm:monster}}
Again, we appeal to \pref{cor:techtools_function_dependend_error}. We
verify one by one that the conditions require to apply the corollary hold.
	\begin{enumerate}
		\item \textbf{Substitutions.}  We appeal to the
                  corollary with $\be=\upphi^\star_t(\by_{0:t+\dimk})-\what\upphi_t(\by_{0:t+\dimk})$, $\matz \coloneqq \by_{0:t+1}$, $\bv = 	\upphi^\star_t(\by_{0:t+\dimk})$, $\phi = \phi_{t}$, $\psi = \psi_{t}, \updelta_{\phi} = \updelta_{t}$, and $c = \kappa (\ln \dimk +1) \cwphi$, where $\phi_{t}$, $\psi_{t},\updelta_{t},$ and $\cwphi$ are as in \pref{lem:condsatisf2}. We also take $d_1,\dimx \gets \dimx$.
		\item \textbf{Realizability.}

	By our assumption on the function class $\scrH_{\onpo}$, there exists $f_\star \in \scrH_{\onpo}$ such that $f_\star (y) = x$ for all $y \in \supp q(\cdot \mid x)$. Therefore, by the system's dynamics and the definition of the policy $\wtilde{\pi}_t$, we have
		\begin{align} 
	\phi(f_\star, \matz) &=\calM (\bw_t +B \bnu_t) ,\nn \\
	&= \begin{bmatrix}M_1  (\bw_t +B \bnu_t)   \\ \vdots  \\ M_{\dimk} A^{\dimk-1} (\bw_t +B \bnu_t)   \end{bmatrix},\nn \\
	&= \E_{\wtilde \pi_t}\left[ \left.  \begin{bmatrix}M_1  (  \bw_{t} + B \bnu_{t} )   \\ \vdots  \\ M_{\dimk} (\sum_{j=1}^{\dimk} A^{j-1} \bw_{t+ \dimk-j} + A^{j-1 } B \bnu_{t+\dimk -j} )   \end{bmatrix}\right| \bw_{t} + B \bnu_t  \right],\nn \\
	&= \E_{\wtilde \pi_t}\left[ \left.  \begin{bmatrix}M_1  (  \bw_{t} + B \bnu_{t} )   \\ \vdots  \\ M_{\dimk} (\sum_{j=1}^{\dimk} A^{j-1} \bw_{t+ \dimk-j} + A^{j-1 } B \bnu_{t+\dimk -j} )   \end{bmatrix}\right| \begin{matrix} \bw_{t} + B \bnu_t, \\ \by_{0:t} \end{matrix}\right],\label{eq:cond} \\
	&= \E_{\wtilde \pi_t}\left[ \left.  \begin{bmatrix}M_1  (  \bw_{t} + B \bnu_{t} )   \\ \vdots  \\ M_{\dimk} (\sum_{j=1}^{\dimk} A^{j-1} \bw_{t+ \dimk-j} + A^{j-1 } B \bnu_{t+\dimk -1} )   \end{bmatrix}\right|\begin{matrix}\by_{0:t}, \bx_{t+1} \\  \end{matrix}  \right],\label{eq:recover} \\
		& =  \E_{\wtilde \pi_t}\left[ \left.  \begin{bmatrix}  \phi^\star_{t,1}(\by_{0:t}, \by_{t+1})    \\ \vdots  \\   \phi^\star_{t,\dimk}(\by_{0:t}, \by_{t+\dimk})   \end{bmatrix}\right| \begin{matrix}\by_{0:t}, \by_{t+1} \\  \end{matrix}  \right], \label{eq:0realize}
		\end{align}
		where: \eqref{eq:cond} follows by the fact that $(\bnu_\tau)_{\tau\geq t}$ and $(\bw_\tau)_{\tau \geq t}$ are independent of $\by_{0:t}$; \eqref{eq:recover} follows by the fact that $\bw_t + B\bnu_t$ can recovered from $\bx_{t+1}$ given $\by_{0:t}$ and vice-versa; and finally, \eqref{eq:0realize} follows from the system's dynamics and the definition of $\wtilde\pi_t$. Thus, \eqref{eq:0realize} ensures that the realizability assumption in \pref{cor:techtools_function_dependend_error} is satisfied. 
		\item \textbf{Conditions 1\& 2.} \pref{lem:condsatisf2} ensures that conditions 1 and 2 of \pref{cor:techtools_function_dependend_error} are satisfied. 
		 \item  \textbf{Condition 3.} By the structure of
                   $\Honpo$, condition 3 is satisfied with $L$ as in
                   \Cref{asm:f_growth} and $b L = \Lonpo$. Examining
                   $\what\phi_{t}(h, \by_{0:t+1}) \coloneqq\what \calM  \left(h(\by_{t+1})- \what A h(\by_t) -  \what B \what K \hat{f}_t(\by_{0:t}) \right)$, we can  take $X_1 = \what \cM$, and
                   $X_2 = \Ahat$. The term $\calM \what B \what K \hat{f}_t(\by_{0:t})$ does not depend on $h$, and thus corresponds to $\updelta_0$.
    \item  \textbf{Condition 4.} By \Cref{eq:psi_tk_ub_pihat_0}, this holds for some $c_{\psi} \lesssim 1$.
	\end{enumerate}

	 Recall $\somelogs(n,\delta)  \lesssim \ln^2(n/\delta)$, defined in \pref{cor:techtools_function_dependend_error}. \pref{cor:techtools_function_dependend_error} implies that with probability at least $1 - \frac{3}{2}\delta$,
		\begin{align*}
		\E\|\phi_{t}(\hhat,\matz) - \phi_{t}(\fst,\matz)\|^2 &\le \frac{12 \kappa (\ln \dimk +1) \cwphi  (\ln|\Fclass| + \dimx \cdot \dimx)\somelogs( c_{\psi}n,\delta)}{n} +  16 \E\|\mate\|^2 + 8\max_{h \in \Honpo}\E\|\updelta_{t}(h,\bz)\|^2\\
		&\lesssim \frac{\cwphi  \kappa (\ln \dimk +1) (\ln|\Fclass| + \dimx^2)\ln^2(n/\delta)}{n} + \E\|\mate\|^2 + \max_{h \in \Honpo}\E\|\updelta_{t}(h,\bz)\|^2.
		\end{align*}
		Substituting in the bounds in
                \Cref{eq:exp_e_bound,eq:zeta_bound_two}, which hold on
                the events $\bigcap_{k=1}^\dimk\evphi[k](\delta)$
                (i.e., the intersection of the events from
                \Cref{lem:firstregression}), followed by the
                definition of $\epsw$ given in \Cref{eq:tildeeps}, the
                expression above is bounded as 
		\begin{align*}
		&\E\|\phi_{t}(\hhat,\matz) - \phi_{t}(\fst,\matz)\|^2 \\
		&\lesssim \frac{\cwphi  \kappa (\ln \dimk +1) (\ln|\Fclass| + \dimx^2)\ln^2(n/\delta)}{n} + \kappa \epsw(\delta)^2\\
		&\lesssim \kappa (1+\ln(\kappa))\epsw(\delta)^2\\
		&\lesssim \kappa(1 + \ln(\kappa)) \left(\frac{\cwphi  (\ln|\Fclass| + \dimx^2)\ln^2(n/\delta)}{n} +  \Lonpo^2 \epsys^2 \left(\dimx \PsiSig +   \devx  \bclip^2\right)\right).
		\end{align*}
		Finally, let us account for the total failure probability. By \Cref{lem:firstregression}, we have $\Pr[\bigcap_{k=1}^\dimk\evphi[k](\delta)] \ge 1 - \frac{3\kappa}{2}\delta$, and the above display holds with another probability $1 - \frac{3}{2}\delta$. Hence, our failure probability is at most $\frac{3(\kappa+1)\delta}{2}$. Rescaling $\delta \gets \frac{2\delta}{3(\kappa+1)}$, and noting that $\ln(c_1/\delta) \lesssim c_1\ln(1/\delta)$ for constants $c_1$, we find that with probability $1 - \delta$,
		\begin{align*}
                  &\E\|\phi_{t}(\hhat,\matz) - \phi_{t}(\fst,\matz)\|^2
		&\lesssim \kappa(1 + \ln(\kappa)) \left(\frac{\cwphi
           (\ln|\Fclass| + \dimx^2)\ln^2(n\kappa/\delta)}{n} +
           \Lonpo^2 \epsys^2 \left(\dimx \PsiSig +   \devx
           \bclip^2\right)\right).\\
                  &\lesssim\veps_{\noise}^2(\delta).
		\end{align*}

	\qed

\subsection{Proof of \pref{lem:decoder}}
\begin{proof}[Proof of \pref{lem:decoder}]
  Let $t\geq{}0$ be fixed. To begin, consider a fixed
  $0\leq\tau\leq{}t$, and let $(\hat{h}_\tau)$ and $(\phi_\tau)$ be as in \pref{lem:condsatisf2}. For notational convenience, we define $\wtilde \phi_\tau \coloneqq \phi_\tau - \calM B \bnu_\tau$. From the definitions of $\tilde{f}_{\tau+1}$ and $\phi_\tau$, we have
	\begin{align}
	\tilde{f}_{\tau+1}(\by_{0:\tau+1})&\coloneqq \what A \hat{f}_\tau(\by_{0:\tau}) + \hat{h}_\tau (\by_{\tau+1})- \what A  \hat{h}_\tau(\by_\tau), \nn \\& = \what A \hat{f}_\tau(\by_{0:\tau})  +  B \what K\hat{f}_\tau(\by_{0:\tau}) + B \bnu_\tau   + ((\hat{h}_\tau (\by_{\tau+1})-  A  \hat{h}_\tau(\by_\tau) -  B \what K \hat{f}_\tau(\by_{0:\tau}))-  B \bnu_\tau), \nn \\
	& =  \what A \hat{f}_\tau(\by_{0:\tau})  +  B \what K\hat{f}_\tau(\by_{0:\tau}) +  B \bnu_\tau   +\calM^{\dagger} \wtilde\phi_\tau(\hat{h}_\tau,\by_{0:\tau+1}),\nn 
	\end{align}
where we have used that $\cM$ has full row rank by
\pref{ass:sigma_small}. This implies that
	\begin{align}
	\tilde{f}_{\tau+1}(\by_{0:\tau+1}) -  f_\star(\by_{\tau+1}) =   (\what A - A) \hat{f}_\tau(\by_{0:\tau})+ A (\hat{f}_{\tau}(\by_{0:\tau}) - f_\star(\by_{0:\tau}))      + (\calM^{\dagger} \wtilde\phi_\tau(\hat{h}_\tau,\by_{0:\tau+1})- \bw_\tau).\nn
	\end{align}
	Under the event $\cEroll_{0:t}$, we have in particular that $\hat{f}_s = \tilde{f}_{s}$, for all $0\leq s \leq \tau$. Thus, by induction we have
	\begin{align}
	\tilde{f}_{\tau+1}(\by_{0:\tau+1}) -  f_\star(\by_{\tau+1}) &= \sum_{s=0}^\tau A^{\tau-s} \left((\what A- A) \tilde{f}_s(\by_{0:s}) +   (\calM^{\dagger}  \wtilde\phi_s(\hat{h}_s,\by_{0:s+1})- \bw_s)  \right) \nn \\
	& \quad + A^{\tau-1} (\hat{f}_{A,0}(\by_{0}) - A
   f_\star(\by_{0})),\nn
        \end{align}
        with $\fhat_0\equiv\tilde{f}_0\equiv0$. As a result, we have, for $\eps_0 \coloneqq \| \hat{f}_{A,0}(\by_0) -
A f_\star(\by_0)\|$ and $\veps_{\sys}$ as in \eqref{eq:event0},
\begin{align}
& \| 	\tilde{f}_{\tau+1}(\by_{0:\tau+1}) -  f_\star(\by_{\tau}) \|\\ &\leq   \left\|\sum_{s=0}^\tau  A^{\tau-s} (\what A - A) \tilde{f}_s(\by_{0:s}) \right\| +   \left\|  \sum_{s=0}^{\tau} A^{\tau- s} (\calM^{\dagger} \wtilde\phi_s(\hat{h}_s,\by_{0:s+1}) - \bw_s )  \right\|
 +\alpha_A \gamma_A^{\tau-1} \epsilon_0, \nn \\
 & \leq  \alpha_A \veps_{\sys} \bclip (1-\gamma_A)^{-1} + \alpha_A \gamma_A^{\tau-1} \epsilon_0
+  \sum_{s=0}^{\tau} \nrm*{ A^{\tau- s}}_{\op}\nrm*{ (\calM^{\dagger} \wtilde\phi_s(\hat{h}_s,\by_{0:s+1}) - \bw_s )} ,\nn \\
 & \leq  \alpha_A \veps_{\sys} \bclip (1-\gamma_A)^{-1} + \alpha_A \gamma_A^{\tau-1} \epsilon_0 
   +  \alphaa\sum_{s=0}^{\tau} \gammaa^{\tau-s}\nrm*{ (\calM^{\dagger}
   \wtilde\phi_s(\hat{h}_s,\by_{0:s+1}) - \bw_s )},\nn \\
   & \leq  \alpha_A \veps_{\sys} \bclip (1-\gamma_A)^{-1} + \alpha_A \gamma_A^{\tau-1} \epsilon_0 
   +  \alphaa\nrm[\big]{\cM^{\dagger}}_{\op}\sum_{s=0}^{\tau}
     \gammaa^{\tau-s}\nrm*{ \wtilde\phi_s(\hat{h}_s,\by_{0:s+1}) -
     \cM\bw_s },\nn \\
     & \leq  \alpha_A \veps_{\sys} \bclip (1-\gamma_A)^{-1} + \alpha_A \gamma_A^{\tau-1} \epsilon_0 
       +  \alphaa\nrm[\big]{\cM^{\dagger}}_{\op}\sqrt{\sum_{s=0}^{\tau} \gammaa^{2(\tau-s)}\sum_{s=0}^{\tau}\nrm*{ \wtilde\phi_s(\hat{h}_s,\by_{0:s+1}) - \cM\bw_s }^2},\nn \\
  \label{eq:Llst} 
\end{align}
	Taking the square on both sides of \eqref{eq:Llst}, then applying the expectation $\mathbb{E}_{\what {\pi}}$, we get
	\begin{align}
&\E_{\what \pi}\left[ \max_{ 0\leq \tau \leq t}	\|
                        \tilde{f}_{\tau+1}(\by_{0:\tau+1}) -
                        f_\star(\by_{\tau}) \|^2\cdot
                        \Ind\{\cEroll_{0:t}\}\right] \\
          &\leq  3 \alpha^2_A \veps_{\sys}^2 \bclip^2 (1-\gamma_A)^{-2}  +3 \alpha^2_A \epszero^2+ 3 \alpha_A^2(1-\gamma^2_A)^{-1} \sigma_{\min}(\calM)^{-2} \sum_{s=0}^t \E_{\what \pi} \left[\| \wtilde\phi_s(\hat{h}_s,\by_{0:s+1}) -\calM \bw_s\|^2\right] ,\nn \\
&\leq  3 \alpha^2_A \veps^2_{\sys} \bclip^2 (1-\gamma_A)^{-2}  +3 \alpha^2_A  \epszero^2   +3 \alpha_A^2(1-\gamma^2_A)^{-1} \sigma_{\min}(\calM)^{-2}  \veps^2_{\noise} t,\label{eq:pineq}
	\end{align}	
where the last inequality follows by the fact that under the event $\cE_{\noise}$, we have 
\begin{align*}
\E_{\what \pi} \left[\| \wtilde\phi_s(\hat{h}_s,\by_{0:s+1}) -\calM \bw_s\|^2 \right]\leq \veps^2_{\noise}, \quad \text{for all } 0\leq s \leq t.
\end{align*}
Finally, we simplify \Cref{eq:pineq} to
\begin{align*}
 3 \alpha^2_A  (1-\gamma_A)^{-2} \left(\veps^2_{\sys} \bclip^2  +  \epszero^2 + \sigma_{\min}(\calM)^{-2}  \veps^2_{\noise}t\right ). 
\end{align*}

\end{proof}

\subsection{Proof of \pref{thm:clippingprob}}
\begin{proof}[Proof of \pref{thm:clippingprob}.]
	Define $\cEroll_{0:t}''\coloneqq {\cEroll}_{0:t} \wedge
        \cEroll_{0:t}'$ and let $p_t \coloneqq \P_{\what
          \pi}[\cEroll_{0:t}'' ]$. We will recursively prove a lower bound on
        $p_{t+1}$ in terms of $p_t$. From \pref{lem:decoder} and Markov's inequality, for all $\tau \in [t+1]$, 
	\begin{align}
          &\P_{\what \pi} \left[\left.  \max_{\tau \in [t+1]} \| \tilde{f}_{\tau} (\by_{0:\tau}) - f_\star(\by_{\tau})\| \geq \veps_{\dec,t}\sqrt{\eta  } \right|  \cEroll_{0:t}'' \right]\\ &=  \P_{\what \pi} \left[ \left. \max_{\tau \in [t+1]} \| \tilde{f}_{\tau} (\by_{0:\tau}) - f_\star(\by_{\tau})\|^2 \geq \eta \veps_{\dec,t}^2  \right|  \cEroll_{0:t}'' \right] \nn \\
	&  \leq \frac{1}{\eta \veps_{\dec,t}^2} \E_{\what \pi} \left[ \left. \max_{\tau \in [t+1]}  \| \tilde{f}_{\tau} (\by_{0:\tau}) - f_\star(\by_{\tau})\|^2 \right|  \cEroll_{0:t}''\right]\nn \\
	& = \frac{1}{\eta p_t  \veps_{\dec,t}^2} \E_{\what \pi} \left[ \max_{\tau \in [t+1]}  \| \tilde{f}_{\tau} (\by_{0:\tau}) - f_\star(\by_{\tau})\|^2 \cdot  \Ind\{\cEroll_{0:t}''\}\right]\nn \\
	& \leq \frac{1}{\eta p_t  \veps_{\dec,t}^2} \E_{\what \pi} \left[  \max_{\tau \in [t+1]}  \| \tilde{f}_{\tau} (\by_{0:\tau}) - f_\star(\by_{\tau})\|^2 \cdot  \Ind\{\cEroll_{0:t}\}\right],\nn \\
	& \leq p_t^{-1} \eta^{-1}. \label{eq:fconc}
	\end{align}
	On the other hand, we also have that under the event
        $\cEroll_{0:t}$, since no clipping occurs, the dynamics satisfy  
	\begin{align}
	\bx_{t+1} = (A + B \what K) \bx_t  + B \bnu_t + \bdelta_t +   \bw_t, \quad \text{where} \ \ \bdelta_t\coloneqq  \what B  \what K \tilde{f}_{t}(\by_{0:t}) - B \what K f_\star(\by_{t}). \nn
	\end{align} 
	Thus, by induction we obtain,
	\begin{align}
	\bx_{t+1} = (A+ B \what K)^{t}\bx_0  +\sum_{\tau=0}^t (A + B \what K)^{t-\tau} (B\bnu_t + \bdelta_t + \bw_t) ,\nn
	\end{align}
	By Jensen's inequality, we have for $\bar \gamma_\infty$ as in \eqref{eq:stab},
	\begin{align}
	\|\bx_{t+1}\| &=  \alpha_\infty\bar\gamma^{t}_{\infty} \| \bx_0\|  + \alpha_\infty\sum_{\tau=0}^t \bar\gamma_{\infty}^{t-\tau}  \|\bdelta_\tau\|_{2} + \left \| \sum_{\tau=0}^t (A +B \what K)^{t-\tau}  ( B\bnu_\tau + \bw_\tau) \right\|, \nn \\
	& =   \alpha_\infty\bar\gamma^{t}_{\infty} \|\bx_0\|  + \alpha_\infty\sum_{\tau=0}^t \bar\gamma^{t-\tau}_{\infty}  \|\bdelta_\tau\|_{2} +  \|\bz_t\|, \label{eq:nsum}
	\end{align}
	where $\bz_t \coloneqq \sum_{\tau=0}^t (A +B \what K)^{t-\tau}  (B\bnu_\tau + \bw_\tau)$. In this case, we have $\bz_t \sim \cN(0, \Sigma_z)$, where 
	\begin{align}
	\Sigma_z & \coloneqq \sum_{\tau=0}^t (A +B \what K)^{t-\tau} (\sigma^2 B B^\top + \Sigma_w)((A +B \what K)^{t-\tau})^\top \preceq  \Sigma_{z,\infty}, \label{eq:partorder}
	\end{align}
	with $\Sigma_{z,\infty}$ is as in \eqref{eq:thecovar}. Under the event $\cEroll'_{0:t}$, (and since $\gammab <1$) we have 
	\begin{align}
	b_\eta \coloneqq \sqrt{2(\dimu \|\Sigma_0\|_{\op} \alpha_\infty^2+ \dimx \|\Sigma_{z,\infty}\|_{\op}) \ln (2\eta)}& \geq \sqrt{2\hat \alpha_{\infty}^2\|\bx_0\|^2 + 2\|\bz_t\|^2}, \nn \\ & \geq \alpha_\infty\bar\gamma_{\infty}^{t}\|\bx_0\|+\|\bz_t\|. \label{eq:zxconc}  
	\end{align}
	On the other hand, by H\"older's inequality, we have
	\begin{align}
\alpha_\infty	\sum_{\tau=0}^{t} \bar\gamma_{\infty}^{t - \tau} \|\bdelta_\tau\| & \leq \frac{\alpha_\infty}{1 - \bar\gamma_\infty} \max_{0\leq \tau \leq t} \|\bdelta_\tau\|,\nn \\
	& \leq \frac{\alpha_\infty}{1 - \bar\gamma_\infty} \left(
   \veps_{\sys}\bclip + \max_{0\leq \tau\leq t}
   \|B\Khat(\tilde{f}_{\tau}(\by_{0:\tau})  - f_\star(\by_\tau))\|
   \right),\nn\\
          & \leq \frac{\alpha_\infty}{1 - \bar\gamma_\infty} \left( \veps_{\sys}\bclip + 2\Psist\max_{0\leq \tau\leq t} \|\tilde{f}_{\tau}(\by_{0:\tau})  - f_\star(\by_\tau)\| \right),\nn
	\end{align}
        where we have used that under $\cE_{\sys}$,
$\nrm{B\Khat}_{\op}\leq{}\nrm*{A}_{\op}+\nrm{A+B\Khat}_{\op}\leq\Psist+\alphainf\gambarinf\leq2\Psist$. From
        \eqref{eq:fconc}, it follows that
	\begin{align}
	\P_{\what \pi}\left[ \left.	 \alpha_\infty \sum_{\tau=0}^{t} \bar\gamma_{\infty}^{t - \tau} \|\bdelta_\tau\|   \geq  \frac{\alpha_\infty(\veps_{\sys} \bclip+ 2\Psist\veps_{\dec,t}\sqrt{\eta})}{1 - \bar\gamma_\infty}  \ \right| \ \cEroll_{0:t}''   \right] \leq  p_t^{-1} \eta^{-1}. \label{eq:dconc}
	\end{align}
	Thus, by \eqref{eq:nsum}, \eqref{eq:fconc}, \eqref{eq:zxconc}, and \eqref{eq:dconc}, we have 
	\begin{align*}
	\P_{\what \pi }[\cEroll \mid \cEroll_{0:t}'' ] \leq  \eta^{-1}
          p_t^{-1}, \nn 
        \end{align*}
        where
        \begin{align*}
\cEroll \coloneqq \left\{ \begin{matrix}    \|\bx_{t+1}\| \geq   (1 - \bar\gamma_\infty)^{-1} \alpha_\infty(\veps_{\sys} \bclip+ 2\Psist\veps_{\dec,t}\sqrt{\eta}) + b_\eta,\\ \text{or} \  \ \| \tilde{f}_{t+1} (\by_{0:t+1})\| \geq \|\bx_{t+1} \| +  \veps_{\dec,t}\sqrt{\eta  }\end{matrix}\right\}.\nn
	\end{align*}
	This implies that 
	\begin{align}
	\P_{\what \pi}\left [  \|\tilde{f}_{t+1}(\by_{t+1})\| \geq  \left.   \frac{\alpha_\infty \veps_{\sys} \bclip+(2\Psist\alpha_\infty +1 - \bar\gamma_\infty) \veps_{\dec,t}\sqrt{\eta}}{1 - \bar\gamma_\infty}  + b_\eta \  \right|\  \cEroll_{0:t}'' \right] \leq \eta^{-1} p_t^{-1}.
	\end{align}
        which we simplify to
        \begin{align}
	\P_{\what \pi}\left [  \|\tilde{f}_{t+1}(\by_{t+1})\| \geq  \left.   \frac{\alpha_\infty \veps_{\sys} \bclip+3\Psist\alpha_\infty\veps_{\dec,t}\sqrt{\eta}}{1 - \bar\gamma_\infty}  + b_\eta \  \right|\  \cEroll_{0:t}'' \right] \leq \eta^{-1} p_t^{-1}. \label{eq:f2conc}
	\end{align}
	On the other hand, by \pref{lem:thechis}, we have 
	\begin{align}
	\P_{\what\pi}\left[ \alpha_\infty^2 \|\bx_0\|^2 + \|\bz_{t+1}\|^2 \geq b_\eta 2^{-1/2}  \right] \leq \eta^{-1}.\label{eq:zx2conc}
	\end{align}
	Thus, for \begin{equation}\bclip \geq (1 - \bar\gamma_\infty)^{-1} ({\alpha_\infty \veps_{\sys} \bclip+3\Psist\alpha_\infty\veps_{\dec,t}\sqrt{\eta}}) + b_\eta,\label{eq:bclip_proof}\end{equation} we have with \eqref{eq:zx2conc}, \eqref{eq:f2conc}, and a union bound, 
	\begin{align}
	\P_{\what \pi}\left [  \left\{ \|\tilde{f}_{t+1}(\by_{t+1})\| \geq    \bclip\right\}\left. \vee \left\{  \alpha_\infty^2\|\bx_0\|^2 + \|\bz_{t+1}\|^2 \geq b_\eta 2^{-1/2} \right\} \  \right|\  \cEroll_{0:t}'' \right] \leq 2 \eta^{-1} p_t^{-1}.\nn\
	\end{align}
	This implies that
	\begin{align}
	\P_{\what \pi} [\cEroll_{t+1}\wedge \cEroll'_{t+1}  \mid \cEroll''_{0:t}] \geq 1 - 2 \eta^{-1}p^{-1}_t.
	\end{align}
	Therefore, we have \begin{align}\P_{\what \pi} [ \cEroll''_{0:t+1}] = p_t \cdot \P_{\what \pi} [\cEroll_{t+1}\wedge \cEroll'_{t+1}  \mid \cEroll''_{0:t}] \geq p_t - 2 \eta^{-1}= \P_{\what \pi} [ \cEroll''_{0:t}] - 2 \eta^{-1}.\nn \end{align}
	Now by induction on $t$ we get, for all $\tau \geq 1$, 
	\begin{align}
	\P_{\what\pi}[\cEroll_{0:\tau}''] \geq \P_{\what \pi
          }[\cEroll_{0:1} \wedge\cEroll_{0:1'}] - 2\tau/\eta . \label{eq:ineq}
	\end{align}
	For the base case, by \eqref{eq:zx2conc} and a union bound, it follows that 
	\begin{gather}
	\P_{\what \pi}[\neg\cEroll_{0:1}'] \leq 2 \eta^{-1},
	\shortintertext{and therefore, by \eqref{eq:ineq}, we get}
		\P_{\what\pi}[\cEroll_{0:\tau}''] \geq \P_{\what \pi }[\cEroll_{0:1}] - 2 (\tau+1)/\eta.\nn
	\end{gather}
	Finally, as $\tilde{f}_0\equiv 0$, we have $\P_{\what\pi}[\cEroll_{0:1}] =\P_{\what\pi }[\cEroll_{1}]$, which completes the proof. To get the stated value for $\bclip_{\infty}$, we rearrange \pref{eq:bclip_proof} and recall that $\gambarinf=\frac{1}{2}(1+\gammainf)$.
\end{proof}

\subsection{Proof of \pref{thm:minusinit}}
\begin{proof}
	Let $\cEroll_{0:T}'' \coloneqq \cEroll_{0:T} \wedge
        \cEroll_{0:T}'$. By \pref{lem:decoder}, under the events
        $\cE_{\sys}$ and $\cE_{\noise}$, we have
	\begin{align}
	\E_{\what \pi} \left[ \max_{ 0\leq  t \leq T} \|  \hat{f}_t(\by_{0:t}) - f_\star(\by_{t}) \|^2 \cdot \Ind\{\cEroll_{0:T}\}  \right] \leq    \veps^2_{\dec,T}. \label{eq:thefbound}
	\end{align}	
	It follows that for all $0 \leq t\leq T$, we have 
	\begin{align}
	\E_{\what \pi} \left[ \max_{ 0\leq  t \leq T} \|  \hat{f}_t(\by_{0:t}) - f_\star(\by_{t}) \|^2   \right] & \leq \E_{\what \pi} \left[ \max_{ 0\leq  t \leq T} \|  \hat{f}_t(\by_{0:t}) - f_\star(\by_{t}) \|^2 \cdot \Ind\{\cEroll''_{0:T}\}  \right] \nn \\
	& \quad +  \E_{\what \pi} \left[ \max_{ 0\leq  t \leq T} \|  \hat{f}_t(\by_{0:t}) - f_\star(\by_{t}) \|^2 \cdot \Ind\{\neg \cEroll''_{0:T}\}  \right] ,\nn \\ 
	& \leq \E_{\what \pi} \left[ \max_{ 0\leq  t \leq T} \|  \hat{f}_t(\by_{0:t}) - f_\star(\by_{t}) \|^2 \cdot \Ind\{\cEroll_{0:T}\}  \right] \nn \\
	& \quad +  \E_{\what \pi} \left[(2 \bclip +2 \max_{ 0\leq  t \leq T} \|  \bx_t \|^2 )\cdot \Ind\{\neg \cEroll''_{0:T}\}  \right] ,\nn \\ 
	& \leq \veps^2_{\dec,T}+  2(1 -\P_{\what\pi}[ \cEroll''_{0:T}]) \left(\bclip +\sqrt{\E_{\what \pi} \left[ \max_{ 0\leq  t \leq T} \| \bx_t \|^4 \right]}\right),\label{eq:precconc}
	\end{align}where the last inequality follows by Cauchy Schwarz and \pref{eq:thefbound}. Now by \pref{lem:xconcentration}, we have that the random variable $\|\bx_t\|^2$ is $\cx$-concentrated for all $0\leq t \leq T$ with 
	\begin{align}
	\cx\coloneqq 30\dimx\PsiSig  +  2\bclip^2   \cdot \devx\nn.
	\end{align}
	Therefore, by \pref{lem:techtools_truncated_conc}, we have $\E_{\what \pi} \left[ \| \bx_t \|^4\right] \leq 4 \cx^2$. Using this, we get that
	\begin{align}
	\E_{\what \pi} \left[ \max_{ 0\leq  t \leq T} \| \bx_t \|^4\right]& \leq (T+1)\max_{0\leq{}t\leq{}T}\E_{\what \pi} \left[ \| \bx_t \|^4\right]\leq 8T \cx^2.\nn
	\end{align}
	Combining this with \eqref{eq:precconc}, and \pref{thm:clippingprob}, we have 
	\begin{align}
	\E_{\what \pi} \left[ \max_{ 0\leq  t \leq T} \|
          \hat{f}_t(\by_{0:t}) - f_\star(\by_{t}) \|^2   \right]
&  \leq \veps^2_{\dec,t} + (4T^{1/2} \cx+2 \bclip^2)(1-\bbP_{\pihat}\brk{\cEroll_{0:T}''}).\nn\\
          &  \leq \veps^2_{\dec,t} + (4T^{1/2} \cx+2 \bclip^2)(\tfrac{2(T+1)}{\eta}+1-\P_{\what \pi}[\{\tilde{f}_0(\by_0) = \hat{f}_0(\by_0)\} \wedge \cEroll_0' ] ).\nn
	\end{align}
\end{proof}

\subsection{Proof of \pref{lem:stateestimate}}
For the proof of \pref{lem:stateestimate}, we introduce the following functions and random vectors:
\begin{align*}
\varphi(\by_{1})  &\coloneqq   \Lonpo^2 ( 1\vee \|\bx_1\|^2), \nn \\
\bu  &\coloneqq  \bw_0, \quad \\
&\be  \coloneqq      \bw_0 - ( \hat{h}_{\ol,0} (\by_1^{(i)} )- \what A \hat{h}_{\ol,0}(\by_0^{(i)})    - \what B \bnu_0). 
\end{align*} 
Let us abbreviate $n\equiv\ninit$. Recall that we are analyzing the regression
\begin{align*}
\hat{h}_{\ol,1} \in \argmin_{h \in \scrH_{\onpo}} \sum_{i=1}^{n} \left\| h(\by^{(i)}_1)  - \left(\hat{h}_{\ol,0}(\by^{(i)}_1) - \what A \hat{h}_{\ol,0}(\by^{(i)}_0) -  \what B \bnu_0^{(i)}\right)  \right\|^2,
\end{align*}
where $\{(\by_{\tau}^{(i)},\bx_{\tau}^{(i)},\bnu_{\tau}^{(i)})\}_{1
  \leq i\leq n}$ are fresh i.i.d.~trajectories generated by the policy $\pi_{\ol}$. 

\begin{proof}[Proof of \pref{lem:stateestimate}]
	Our strategy will be to invoke \Cref{cor_techtools:reg_cor_simple} with $\varphi, \bu$, and $\be$ as above. We start by verifying the technical conditions of the corollary.
	\begin{enumerate}
		\item We directly verify from the structure of $\Honpo$  we may take $b = \Psi_{\star}^{3}$ and $L$ as in  \Cref{asm:f_growth}. Hence, $bL = \Lonpo$, and thus $\varphi(\by_1)$ satisifes the requisite conditions of the $\varphi$ function. In addition, we may take $\dimu,\dimx \gets \dimx$.
		\item \textbf{Concentration Property.} Next, we bound
                  the concentration parameter $c$. Recall that all
                  $h\in\Honpo$ have
                  $\nrm*{h(\maty_t)}\leq{}\Lonpo\max\crl{1,\nrm*{\matx_t}}$. Hence,
                  under the event $\cE_{\sys}$, we have by Jensen's inequality, Cauchy-Schwarz, and the fact that $\hat{h}_{\ol,0} \in \scrH_{\onpo}$, 
	\begin{align}
	\varphi(\by_1) \vee \|\be -  \bu \|^2 &\leq  \varphi(\by_1)  +5\Lonpo^2 (1 + \|\bx_1\|^2) + 5 \Lonpo^2(\|A\|^2_{\op} + \veps_\sys^2)(1+ \|\bx_0\|^2)\nn \\ & \quad + 5(\|B\|^2_{\op}+ \veps_{\sys}^2) \|\bnu_0\|^2, \nn \\
	& \leq 6\Lonpo^2 (1 + \|\bx_1\|^2) + 5 \Lonpo^2(\|A\|^2_{\op} + \veps^2_\sys)(1+ \|\bx_0\|^2)\nn \\ & \quad + 5(\|B\|^2_{\op}+ \veps_{\sys}^2) \|\bnu_0\|^2, \nn \\
	& = 6 \Lonpo^2 + 5 \Lonpo^2 (\|A\|^2_{\op} + \veps_\sys^2)+ 5(\|B\|^2_{\op}+ \veps_{\sys}^2) \|\bnu_0\|^2 \\ & \quad+ 6 \Lonpo^2 \|\bx_1\|^2 + 5 \Lonpo^2 (\|A\|^2_{\op} + \veps_\sys^2) \|\bx_{0}\|^2.\nn
	\end{align}
	Let us simplify the above. Assume $\epsys^2 \le \Psist^2$ (where $\Psist \ge \max\{1,\|A\|_{\op},\|B\|_{\op}\}$). This lets us simplify the above by 
	\begin{align}
	\varphi(\by_1) \vee \|\be -  \bu \|^2 &\le \Lonpo^2\Psist^{2}( 6 + 10 + 10 \|\bnu_0\|^2 + 6\|\bx_1\|^2 + 10 \|\bx_0\|^2 ) \nn\\
	&\le 16\Lonpo^2\Psist^{2}( 1 + \|\bnu_0\|^2  +\|\bx_1\|^2 + \|\bx_0\|^2 ). 	\label{eq:0pain}
	\end{align}
	Since $\bx_0 \sim \cN(0, \Sigma_0)$, $\bnu_0 \sim \cN(0,
        \sigma^2 I_{\dimu})$, and $\bx_1 \sim \cN(0, \sigma^2 B B^\top
        + A \Sigma_0 A^\top + \Sigma_w)$, we have by
        \pref{lem:thechis} and the fact that $\ln 3 +1\leq 3$, the
        following holds: For all $\delta \in(0,1/e]$,
	\begin{align}
	&	\P_{\pi_{\ol}} \left[ \varphi(\by_1) \vee \|\be -  \bu
   \|^2\geq c_1 \ln \delta^{-1} \right] \leq \delta, \label{eq:cond1}
        \end{align}
        where
        \begin{align}
           c_1 &\coloneqq  48\Lonpo^2\Psist^{2}( 1 + 5\dimu \sigma^2 + 5\dimx (\|\Sigma_1\|_{\op} + \|\Sigma_0\|_{\op}) ), \label{eq:the_c1}\\
	\Sigma_1 &:= \sigma^2 B B^\top  + A \Sigma_0 A^\top + \Sigma_w.
	\end{align}
	\item \textbf{Bounding the error $\mate$.} On the other hand, by Jensen's inequality and Cauchy-Schwarz, we can bound the error $\be$ by 
	\begin{align}
	\|\be\|^2 \leq 2 \|\bw_0 -  \hat{h}_{\ol,0} (\by_1^{(i)} )-  A \hat{h}_{\ol,0}(\by_0^{(i)}) - B \bnu_0\|^2 + 4 \veps_\sys^2 \Lonpo^2 \left( 1 + \|\bx_0\|^2 + \|\bnu_0\|^2 \right) .\nn
	\end{align}
	Therefore, by \pref{lem:theexpchi} and
        \eqref{eq:baseguarantee}, we have with probablity at least
        $1-\delta$ over the trajectories used to form $\hhat_{\ol,0}$,
	\begin{align}
	\E_{\pi_{\ol}}[ \|\be \|^2 ] \leq 2\sigma_{\min}(\calM)^{-2} \veps^2_{\noise}(\delta) + 4 \veps_{\sys}^2\Lonpo^2 (1 + \dimx \|\Sigma_0\|_{\op} + \sigma^2 \dimu).  \label{eq:thee}
	\end{align}
	\item \textbf{Realizability.} We have 
	\begin{align} 
	\E_{\pi_{\ol}}[\bu \mid \by_1 ]& = \E_{\pi_{\ol}}[\bw_0 \mid \bx_1], \nn  \\  &= \E_{\pi_{\ol}}[\bw_0 \mid A \bx_0 + B \bnu_0 + \bw_0], \nn \\
	& = \Sigma_w (\Sigma_w + \sigma^2 B B^\top + A \Sigma_0
   A^\top)^{-1} \bx_1 \rdef \hst(\by_1),\label{eq:cond3}
	\end{align}
	where the last inequality follows by \pref{fact:gaussian_expectation}. Therefore, by the definition of $\scrH_{\onpo}$ in \eqref{eq:funclass} and the fact that $\|\Sigma_w (\Sigma_w + \sigma^2 B B^\top + A \Sigma_0 A^\top)^{-1} \|_{\op}\leq 1 \le \Psi_\star^3$, we are guaranteed the existence of $h \in  \scrH_{\onpo}$ such that $h(\bx_1)= \E[\bu \mid \by_1]$. 
	\end{enumerate}
	Applying \Cref{cor_techtools:reg_cor_simple} with $c \gets
        c_1$, $\dimu,\dimx \gets \dimx$, and the above bound on
        $\E_{\pi_{\ol}}[ \|\be \|^2 ] $, we obtain for $n \gets
        n_{\init}$ that with probability $1 - \frac{3\delta}{2} -
        \delta = 1 - 5\delta/2$ (the second $\delta$ factor comes from the event used to bound $\E_{\pi_{\ol}}[ \|\be \|^2 ]$),
	\begin{align*}
		&\E\| \hat{h}_{\ol,1} (\by) - \hst(\by)\|^2  \lesssim \\
		 &\frac{ c_1(\dimx^2 +\ln|\Fclass|) \ln^2\tfrac{n_{\init}}{\delta}}{n_{\init}} + \sigma_{\min}(\calM)^{-2} \veps^2_{\noise}(\delta) +  \veps_{\sys}^2\Lonpo^2 (1 + \dimx \|\Sigma_0\|_{\op} + \sigma^2 \dimu), 
		\end{align*}
		as needed. Moreover, because the above bound
                suppresses constants, we can replace $c_1$ in
                \Cref{eq:the_c1} by $c_1 \gets  \Lonpo^2\Psist^{2}( 1
                + \dimu \sigma^2 + \dimx (\|\Sigma_1\|_{\op} +
                \|\Sigma_0\|_{\op}) )$ . Substituting in the
                definition of $\hst$ concludes the proof.
\end{proof}

\subsection{Proof of \pref{lem:covar}}
\begin{proof}
	Since $\hat{h}_{\ol,1}\in \scrH_{\onpo}$, we have $\|\hat{h}_{\ol,1}(\by_1)\| \leq L_\op (1 \vee \|\bx_1\|)$, where $\bx_1 \sim \cN(0, \Sigma_1)$ and $\Sigma_1 \coloneqq \sigma^2 B B^\top + A \Sigma_0 A^\top + \Sigma_w$. Therefore, by \pref{lem:thechis}, we have, for all $\delta \in (0,1/e]$,
	\begin{align}
	\P_{\pi_\ol} \left [  \|\hat{h}_{\ol, 1}   \|^2 \geq \Lonpo^2   (1 + (3 \dimx +2)\|\sigma^2 B B^\top + A \Sigma_0 A^\top + \Sigma_w\|_{\op} ) \ln \delta^{-1}  \right]\leq \delta.
	\end{align} 
	Combining this with the fact that $\|\hat{h}_{\ol,1}(\by_1)\| \leq L_\op (1 \vee \|\bx_1\|)$ implies that $\varphi(\by_1)\coloneqq  L_\op^2(1\vee \|\bx_1\|^2)$ is $c$-concentrated with $c= \Lonpo^2   (1 + (3 \dimx +2)\|\sigma^2 B B^\top + A \Sigma_0 A^\top + \Sigma_w\|_{\op})$. Thus, applying \pref{prop:techtools_general_pca} with $$(\varphi(\by_1), \hat h(\by_1), h_\star(\by_1))=  (L_\op^2(1\vee \|\bx_1\|^2),  \hat h_{\ol,1}(\by_1),\Sigma_w \Sigma_1^{-1} \bx_1)$$ and invoking \pref{lem:stateestimate}, we get for all $\delta \in (0,1/e]$, with probability at least $1 - (3\dimk + 4) \delta$,  
	\begin{gather} 
	\|  \what \Sigma_\cv - \Sigma_w \Sigma_1^{-1} \Sigma_w \|_{\op} \leq \veps_\cv', \label{eq:thekey}
      \end{gather}
      for $\veps_{\cv}'$ as in the lemma statement. By the triangle
      inequality, whenever the condition \eqref{eq:covevent} that
      $\veps_\cv' < \sigma_{\min}(\Sigma_\cv)/2\leq\nrm*{\Sigma_\cv}_{\op}$ holds, this implies that
	\begin{align}
	\|\what \Sigma_\cv\|_{\op} \leq 2 \|\Sigma_w \Sigma_1^{-1} \Sigma_w\|_{\op}, \mathand \sigmamin(\wh{\Sigma}_{\cv})\geq{}\sigmamin(\Sigma_{\cv})/2
	\end{align}
	which shows the second inequality in
        \eqref{eq:thebound}. Furthermore, whenever \eqref{eq:thekey}
        holds, \pref{lem:theeig} and the condition \pref{eq:covevent} imply that
	\begin{align}
	\|  \Sigma_w^{-1}- \what \Sigma^{-1}_\cv \Sigma_w \Sigma_1^{-1} \|_{\op} \leq \frac{2\|\Sigma_w^{-1}\|_{\op} \veps_\cv' }{\sigma_{\min}(\Sigma_w \Sigma_1^{-1} \Sigma_w) }. \label{eq:thepre}
	\end{align}
        By the triangle inequality, this implies that under
        $\cE_{\sys}$,
        \[
	\|  \wh{\Sigma}_w^{-1}- \what \Sigma^{-1}_\cv \Sigma_w
        \Sigma_1^{-1} \|_{\op} \leq \veps_{\sys} + \frac{2\|\Sigma_w^{-1}\|_{\op} \veps_\cv' }{\sigma_{\min}(\Sigma_w \Sigma_1^{-1} \Sigma_w) }.
        \]
This further implies that 
	\begin{align}
	\|  I_{\dimx}- \what \Sigma_w \what \Sigma^{-1}_\cv \Sigma_w \Sigma_1^{-1} \|_{\op} &\leq \nrm{\wh{\Sigma}_w}_{\op}\prn*{\veps_\sys  +\frac{2\|\Sigma_w^{-1}\|_{\op} \veps_\cv' }{\sigma_{\min}(\Sigma_w\Sigma_1^{-1} \Sigma_w) }}.
	\end{align}
        Since $\veps_{\sys}\leq1$ by the
        definition of $\cE_{\sys}$, we have that
        $\nrm{\wh{\Sigma}_w}_{\op}\leq2\Psist$,
        leading to the result.
	This establishes the main inequality in \eqref{eq:thebound}.
\end{proof}

\subsection{Proof of \pref{thm:stateestimate2}}
For the proof of \pref{thm:stateestimate2}, we introduce the following functions and random vectors:
\begin{align*}
\varphi(\by_{0})  &\coloneqq   \Lonpo^2 ( 1\vee\|\bx_0\|^2)\\
\bv  &\coloneqq  \Sigma_w (\Sigma_w + \sigma^2 B B^\top+  A \Sigma_0 A^\top)^{-1}  (A \bx_0 + \bnu_0 + \bw_0) \\
\be  &\coloneqq     \bv  -  \hat{h}_{\ol,1}(\by_1). 
\end{align*} 
Recall that we are analyzing the following regression problem, where for $n \equiv n_\init$:
\begin{align*}
\tilde{h}_{\ol,0} \in \argmin_{h \in \scrH_{\onpo}} \sum_{i=n+1}^{2 n} \left\|h(\by_0^{(i)})  -    \hat{h}_{\ol,1}(\by^{(i)}_1) \right\|^2.
\end{align*}

\begin{proof}[Proof of \pref{thm:stateestimate2}]
	Our strategy will be to invoke \Cref{cor_techtools:reg_cor_simple} with $\varphi$, $\bv$, and $\be$ as above. We verify the technical conditions of the corollary. 
	\begin{enumerate}
		\item We directly verify from the structure of $\Honpo$  we may take $b = \Psi_{\star}^3$ and $L$ as in  \Cref{asm:f_growth}. Hence, $bL = \Lonpo$, and thus $\varphi(\by_1)$ satisifes the requisite conditions of the $\varphi$ function. 
		\item \textbf{Concentration property.}
		Now, under event $\cE_{\sys}$, by Jensen's
                inequality, Cauchy-Schwarz, and the fact that
                $\hat{h}_{\ol,1} \in \scrH_{\onpo}$ (and hence
                satisfies
                $\nrm{\hat{h}_{\ol,1}(\maty_1)}\leq{}\Lonpo\max\crl{1,\nrm{\matx_1}}$),
                we have
		\begin{align}
		\varphi(\by_0) \vee \|\be -  \bv \|^2 &\leq  \varphi(\by_0) \vee (\Lonpo^2 (1 + \|\bx_1\|^2)),\nn\\
		& \leq \Lonpo^2 (1 + \|\bx_0\|^2 + \|\bx_1\|^2).\nn
		\end{align}
		Since $\bx_0 \sim \cN(0, \Sigma_0)$ and
                $\matx_1\sim\cN(0,\Sigma_1)$, where $\Sigma_1=\Sigma_w
                + \sigma^2 B B^\top+  A \Sigma_0 A^\top$, 
                we have by \pref{lem:thechis} that for all $\delta \in(0,1/e]$,
		\begin{align}
	          	\P_{\pi_{\ol}} \left[ \varphi(\by_0) \vee \|\be -  \bv \|^2\geq c'_0 \ln \delta^{-1} \right] \leq \delta, 
	   \quad\text{where} \quad c'_0 \coloneqq  2\Lonpo^2( 1 +
	                                 5\dimx (\|\Sigma_0\|_{\op}+\nrm{\Sigma_1}_{\op})
	                                 ). 
	                                 \label{eq:0cond1_prelim}
		\end{align}
                Moreover, since $\sigma\leq{}1$ under
                \pref{ass:sigma_small}, we have
                $\nrm{\Sigma_1}\leq3\Psist^{3}$, so that
                \[
                  c_0'\leq{}32\Lonpo^{2}\Psist^{3}\dimx\rdef{}c_0,
                \]
                and hence
                \begin{align}
                	          	\P_{\pi_{\ol}} \left[ \varphi(\by_0) \vee \|\be -  \bv \|^2\geq c_0 \ln \delta^{-1} \right] \leq \delta.
	                                 \label{eq:0cond1}
                \end{align}

	\item \textbf{Bounding the error $\mate$.} By \pref{lem:stateestimate}, we have with probablity at
          least $1-5\delta/2$ over $\hhat_{\ol,1}$,
	\begin{align}
	\E_{\pi_\ol}[ \|\be \|^2 ] \leq  \veps^2_{\ol,1}.  \label{eq:2thee}
	\end{align}
	\item \textbf{Realizability.} We have 
	\begin{align} 
	\E_{\pi_{\ol}}[\bv \mid \by_0 ]& = \E_{\pi_{\ol}}[\bv \mid \bx_0], \nn  \\  &= \E_{\pi_{\ol}}[\Sigma_w (\Sigma_w + \sigma^2 B B^\top+  A \Sigma_0 A^\top)^{-1}  (A \bx_0 + \bnu_0 + \bw_0)\mid \bx_0 ], \nn \\
	& = \Sigma_w (\Sigma_w + \sigma^2 B B^\top+  A \Sigma_0 A^\top)^{-1}  A \bx_0,\nn \\
	& = \Sigma_w \Sigma^{-1}_1 A \bx_0 := \hst(\bx_0).\label{eq:2cond3}
	\end{align}
Therefore, by the definition of $\scrH_{\onpo}$ in \eqref{eq:funclass} and the fact that $\|\Sigma_w \Sigma_1^{-1}A \|_{\op}\leq \Psi_\star^3$, we are guaranteed the existence of $h \in  \scrH_{\onpo}$ such that $h(\bx_0)= \E[\bv \mid \by_0]$.
	\end{enumerate}

	Applying \Cref{cor_techtools:reg_cor_simple} with $c \gets c_0$, $\dimu,\dimx \gets \dimx$, and the above bound on $\E_{\pi_{\ol}}[ \|\be \|^2 ] $, we obtain for $n \gets n_{\init}$ that with probability $1 - \frac{3\delta}{2} - \frac{5\delta}{2} = 1 - 4\delta$ (the second term comes from the event used to bound $\E_{\pi_{\ol}}[ \|\be \|^2 ]$),
	\begin{align*}
		&\E\| \tilde{h}_{\ol,0}(\by_0) - \hst(\by)\|^2 \lesssim \frac{ c_0(\dimx^2 +\ln|\Fclass|) \ln\prn{\tfrac{n_{\init}}{\delta}}^{2}}{n_{\init}} + \veps^2_{\ol,1}
		\end{align*}
		as needed. In particular, recalling that $\hst(\by_0) = \Sigma_w \Sigma_1^{-1} A \bx_0$ in the above realizability discussion, we find that for an appropriate upper bound $\tilde{\veps}^2_{\ol,1}$,
		\begin{align*}
		&\E\| \tilde{h}_{\ol,0}(\by_0) -  \Sigma_w \Sigma_1^{-1} A \bx_0\|^2 \le \tilde{\veps}^2_{\ol,1} \lesssim \frac{ c_0(\dimx^2 +\ln|\Fclass|) \ln\prn{\tfrac{n_{\init}}{\delta}}^{2}}{n_{\init}} + \veps^2_{\ol,1}
		\end{align*}
This further implies that 
	\begin{align}
	\E_{\pi_{\ol}} \left[ \| \what \Sigma_w \what \Sigma^{-1}_\cv \tilde{h}_{\ol,0} (\by_0) -  A \bx_0 \|^2  \right] &\leq  2 \|\what \Sigma^{-1}_w\|^2_{\op} \|\what \Sigma_\cv\|^2_{\op} \tilde{\veps}^2_{\ol,1} \nn \\
	& ~~~~ + 2\E_{\pi_{\ol}} \left[ \|(I_{\dimx} -  \what \Sigma_w \what \Sigma^{-1}_\cv  \Sigma_w \Sigma_1^{-1})A \bx_0\|^2 \right], \nn \\
	& \leq 2 \|\what \Sigma^{-1}_w\|^2_{\op} \|\what \Sigma_\cv\|^2_{\op} \tilde{\veps}^2_{\ol,1}  \nn \\
	& ~~~~ + 2 \dimx \|(I_{\dimx} -  \what \Sigma_w \what \Sigma^{-1}_\cv  \Sigma_w \Sigma_1^{-1})A\|_\op^2 \|\Sigma_0\|, \label{eq:pinek}
	\end{align}
	where the last inequality follows by \pref{lem:theexpchi} since $\bx_0 \sim \cN(0, \Sigma_0)$. 
	Thus, under the event $\cE_{\sys}$, we have by
        \pref{lem:covar} and a union bound, with probability at least $1-(3\kappa+9)\delta$,
	\begin{align}
          \E_{\pi_{\ol}} \left[ \| \what \Sigma_w \what \Sigma^{-1}_\cv \tilde{h}_{\ol,0} (\by_0) -  A \bx_0 \|^2  \right] &\approxleq  \| \Sigma^{-1}_w\|^2_{\op} \|\Sigma_\cv\|^2_{\op} \left(\frac{ c_0(\dimx^2 +\ln|\Fclass|) \ln\prn{\tfrac{n_{\init}}{\delta}}^{2}}{n_{\init}} + \veps^2_{\ol,1}\right) \nn \\
	& \quad  + \dimx \veps^2_\cv \|A\|_\op^2 \|\Sigma_0\|,
	\end{align}
	where we recall that $\Sigma_\cv = \Sigma_w
        \Sigma_1^{-1}\Sigma_w$ by definition. The desired bound \eqref{eq:firstb} follows by the fact $\pi_\ol$ and $\what \pi$ match at round zero. 
	
	We now prove that \pref{eq:secondb} holds. By definition of $\tilde{f}_1$, we have 
	\begin{align}
	\tilde{f}_1(\by_{0:1}) = \what \Sigma_w \what \Sigma^{-1}_\cv \tilde{h}_{\ol,0} (\by_0) + \hat{h}_0 (\by_{1}) - \what A \hat{h}_0(\by_0),
	\end{align}
	and so since $\tilde{h}_{\ol,0}$ and $\hat{h}_0$ are in $\scrH_{\onpo}$, we have by Jensen's inequality and Cauchy-Schwarz, 
	\begin{align}
	\|\tilde{f}_1(\by_{0:1})\|^2& \leq 4\|\what \Sigma_w\|_{\op}^2 \|\what \Sigma^{-1}_\cv\|_{\op}^2 \Lonpo^2 ( 1 +\|\bx_0\|^2) \nn \\ & \quad + 4\Lonpo^2 (1 + \|\bx_1\|^2) + 4\|A\|^2_{\op} (1 + \|\bx_0\|^2) + 4\|A - \what A\|_\op^2 (1 + \|\bx_0\|^2).
	\end{align}
        Under $\cE_{\sys}$ we have
        $\nrm{\wh{\Sigma}_{w}}_{\op}\leq{}2\nrm{\Sigma_w}$ and $\nrm{\Ahat-A}_{\op}\leq{}1$, and the
        event of \pref{lem:covar} implies that
        $\nrm{\wh{\Sigma}_{\cv}^{-1}}_{\op}\leq{}2
        \nrm{\Sigma_{\cv}^{-1}}_{\op}$. Hence, using that
        $\Lonpo\geq{}1$, we can further upper bound by
        \begin{align*}
          \|\tilde{f}_1(\by_{0:1})\|^2  \leq{}
          \Lonpo^{2}(64\nrm{\Sigma_w}_{\op}^2\nrm{\Sigma^{-1}_\cv}_{\op}^2
          + 4\nrm{A}_{\op}^{2}+4)(1+\nrm{\matx_0}^2) + 4\Lonpo^2(1+\nrm{\matx_1}^2).
        \end{align*}
Next, we note that $\nrm{\Sigma_w}_{\op}\leq{}\Psist$ and
$\nrm{\Sigma_\cv^{-1}}_{\op}\leq{}3\Psist^{5}$. Hence, we can further
simplify this bound to
\[
  584\Lonpo^{2}\Psist^{12}(2 + \nrm{\matx_0}^{2} + \nrm{\matx_1}^2).
\]
	Since $\bx_0 \sim \cN(0,\Sigma_0)$ and $\bx_1 \sim \cN(0, \Sigma_1)$, we have, by \pref{lem:thechis}, 
	\begin{gather}
	\P_{\what \pi} \left[ \|\tilde{f}_1(\by_{0:1})\|^2  \geq 584\Lonpo\Psist^{12}(2+(3\dimx+2)(\nrm{\Sigma_0}_{\op}+\nrm{\Sigma_1}_{\op}))\ln (2\eta) \right] \leq \eta^{-1}, \nn 
	\intertext{so that in particular, we may take} \bclip^2_0 \coloneqq  10^{4}\dimx\Lonpo^{2}\Psist^{12}(1+\nrm{\Sigma_0}_{\op}+\nrm{\Sigma_1}_{\op}).
	\end{gather}
	This establishes \pref{eq:secondb}.
\end{proof}

\subsection{Supporting Results}

\begin{lemma}
	\label{lem:theexpchi}
	Let $\bz \sim \cN(0, \Sigma)$, where $\Sigma\in \reals^{m\times m}$ is a positive definite matrix. Then $\E[\|\bz\|^2] \leq  m \|\Sigma\|_{\op}$.
\end{lemma}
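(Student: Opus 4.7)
The plan is to compute $\E\|\bz\|^2$ as the trace of the covariance matrix and then bound the trace by $m$ times the operator norm. Specifically, I would first write
\[
\E[\|\bz\|^2] = \E[\bz^\top \bz] = \E[\trace(\bz \bz^\top)] = \trace(\E[\bz\bz^\top]) = \trace(\Sigma),
\]
using linearity of trace and expectation together with the cyclic property.

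Next, since $\Sigma$ is symmetric positive definite, its eigenvalues $\lambda_1(\Sigma) \ge \cdots \ge \lambda_m(\Sigma) \ge 0$ are non-negative, and $\|\Sigma\|_{\op} = \lambda_1(\Sigma)$. Hence
\[
\trace(\Sigma) = \sum_{i=1}^m \lambda_i(\Sigma) \leq m\,\lambda_1(\Sigma) = m\,\|\Sigma\|_{\op},
\]
which yields the desired bound. There is no real obstacle here; the entire argument is two lines once one identifies $\E\|\bz\|^2$ as $\trace(\Sigma)$. I note that positive definiteness of $\Sigma$ is not strictly needed beyond ensuring non-negativity of the eigenvalues, so the same proof works for any positive semidefinite covariance.
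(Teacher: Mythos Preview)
Your proof is correct. The paper takes a slightly different route: it whitens via $\bz' := \Sigma^{-1/2}\bz \sim \cN(0,I_m)$, writes $\|\bz\|^2 = \|\Sigma^{1/2}\bz'\|^2 \le \|\Sigma\|_{\op}\|\bz'\|^2$ pointwise, and then uses $\E\|\bz'\|^2 = m$. Your argument is more direct and in fact sharper, since you first establish the \emph{equality} $\E\|\bz\|^2 = \trace(\Sigma)$ and only then apply the crude bound $\trace(\Sigma) \le m\|\Sigma\|_{\op}$; the paper's pointwise inequality loses information immediately. As you note, your version also does not need $\Sigma$ to be invertible, whereas the paper's whitening step formally requires positive definiteness (though this is what the lemma assumes anyway). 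Both arguments are essentially one line, so the difference is cosmetic.
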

\begin{proof}
	Let $\bz' \coloneqq \Sigma^{-1/2}\bz$ and note that $\bz' \sim \cN(0, I_m)$, and so $\|\bz'\|^2 \sim \chi^2(m)$. As a result, we have 
	\begin{align}
	\E[\|\bz\|^2 ] = \E[\|\Sigma^{1/2}\bz'\|^2]\stackrel{(*)}{\leq} \| \Sigma\|_{\op} \E[\|\bz'\|^2] = m \| \Sigma\|_{\op}, \nn
	\end{align} 
	where $(*)$ follows by Cauchy-Schwarz.
\end{proof}
\begin{lemma}
	\label{lem:theprechis}
	Let $a_0>0$ and $(a_1,c_1,z_1), \dots, (a_s,c_s,z_s)\subset \reals^3_{>0}$, where $(z_i)$ are (potentially dependent) non-negative random variables satisfying $\P\left[ z_i  \geq c_i \ln \delta^{-1} \right] \leq \delta$, for all $i\in[s]$ and $\delta \in(0,1/e]$. Then, for all $\delta \in(0,1/e]$ we have 
	\begin{align}
	\delta &\geq 	\P\left[  a_0 + a_1 z_1+ \dots +   a_s z_s \geq  \left(a_0 + \sum_{i=1}^s a_i c_i  \right)\ln (s/\delta)\right],  \nn \\ &\geq 	\P\left[  a_0 + a_1 z_1+ \dots +   a_s z_s \geq (\ln s +1) \left(a_0 + \sum_{i=1}^s a_i c_i  \right)\ln \delta^{-1}\right] . \label{eq:thesecond}
	\end{align}
\end{lemma}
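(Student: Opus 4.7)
The plan is to prove the first inequality in \eqref{eq:thesecond} by a union bound, and then derive the second inequality from the first by an elementary logarithmic simplification.

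For the first inequality, fix $\delta \in (0, 1/e]$. Since $s \geq 1$, we have $\delta/s \leq 1/e$, so we may apply the hypothesis to each $z_i$ at confidence level $\delta/s$: for each $i \in [s]$,
\[
\P\left[ z_i \geq c_i \ln(s/\delta) \right] \leq \delta/s.
\]
A union bound yields that with probability at least $1 - \delta$, the event $\cE \coloneqq \{z_i \leq c_i \ln(s/\delta) \text{ for all } i \in [s]\}$ holds. On $\cE$, since $\delta \leq 1/e$ and $s \geq 1$ give $\ln(s/\delta) \geq 1$, we can bound
\[
a_0 + \sum_{i=1}^s a_i z_i \leq a_0 + \ln(s/\delta) \sum_{i=1}^s a_i c_i \leq \ln(s/\delta) \left( a_0 + \sum_{i=1}^s a_i c_i \right),
\]
where the final inequality uses $a_0 \leq a_0 \ln(s/\delta)$. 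Taking complements gives the first bound in \eqref{eq:thesecond}.

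For the second inequality, it suffices to show that $\ln(s/\delta) \leq (\ln s + 1) \ln \delta^{-1}$ for all $\delta \in (0, 1/e]$ and $s \geq 1$. Writing $\ln(s/\delta) = \ln s + \ln\delta^{-1}$ and noting that $\ln \delta^{-1} \geq 1$ together with $\ln s \geq 0$ yields $\ln s \leq \ln s \cdot \ln \delta^{-1}$, so
\[
\ln(s/\delta) = \ln s + \ln \delta^{-1} \leq \ln s \cdot \ln \delta^{-1} + \ln \delta^{-1} = (\ln s + 1) \ln \delta^{-1},
\]
which implies the desired inclusion of events and hence the second tail bound.

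There is no real obstacle here---the lemma is a straightforward consolidation of single-variable sub-exponential tail bounds via a union bound, and the only subtlety is checking that $\delta \leq 1/e$ and $s \geq 1$ are enough to ensure both $\delta/s \leq 1/e$ (so that the hypothesis applies) and $\ln(s/\delta) \geq 1$ (so that the step absorbing the $a_0$ term into the prefactor goes through).
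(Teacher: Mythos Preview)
Your proof is correct and follows essentially the same approach as the paper: apply the single-variable tail bound at level $\delta/s$, union bound over the $s$ variables, absorb the constant $a_0$ using $\ln(s/\delta)\geq 1$, and then simplify the logarithm. The paper organizes the argument slightly differently (first proving a bound with failure probability $s\delta$ at level $\delta$, then substituting $\delta\gets\delta/s$), but this is a cosmetic difference.
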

\begin{proof}
	Let $c(\delta) \coloneqq  (a_0 + \sum_{i=1}^s a_i c_i)  \ln
        \delta^{-1}$. Define $z_0=c_0=1$. Since $\delta\in(0,e^{-1}]$,
        we have
	\begin{align}
	\P\left[a_0 + a_1z_1+ \dots +   a_s z_s  \geq c(\delta)
          \right] & =  \P\left[
                    \sum_{i=0}^{s}a_i(z_i-c_i\log\delta^{-1}) \geq{}0\right],
                    \nn \\
                  & \leq  \P\left[ \exists i\in\brk*{s} :z_i-c_i\log\delta^{-1}\geq{}0 \right], \nn \\
	& \leq \sum_{i=1}^s \P\left[ z_i-c_i\log\delta^{-1}\geq{}0 \right], \nn  \\
	& \leq s \delta. \label{eq:probbound} \quad (\text{by assumption})
	\end{align}
	For any given $\delta\leq1/e$, by applying this result with $\delta' \coloneqq \delta/s$, we have for all $\delta \in (0, 1/(se)]$, $\delta' \leq 1/e$, and so 
	\begin{align}
	c(\delta')& = \left(a_0 + \sum_{i=1}^s a_i c_i\right) \ln (s/\delta) \leq (\ln s +1)\left(a_0 + \sum_{i=1}^s a_i c_i \right) \ln (1/\delta). \nn
	\end{align}
	This together with \eqref{eq:probbound} implies \eqref{eq:thesecond}. 
\end{proof}

\begin{lemma}
	\label{lem:thechis}
	Let $a_0>0$ and $(a_1,c_1,\bz_1), \dots, (a_s,c_s,\bz_s)$ be such that $(a_i,c_i)\subset \reals^2_{>0}$ and $\bz_i\in\bbR^{d_i}$ are random vectors satisfying $\bz_i \sim \cN(0, \Sigma_i)$ for $i \in [s]$. Then, for all $\delta \in(0,1/e]$ we have 
	\begin{align}
	\delta &\geq \P\left[ a_0 + a_1 \| \bz_1  \|^2 + \dots +   a_s \| \bz_s  \|^2 \geq  \left(a_0 + \sum_{i=1}^s a_i \|\Sigma_i\|_{\op}  \cdot  (3 d_i + 2)\right) \ln (s/\delta) \right] , \nn \\ & \geq \P\left[ a_0 + a_1 \| \bz_1  \|^2 + \dots +   a_s \| \bz_s  \|^2 \geq  (\ln s +1)\left(a_0 + \sum_{i=1}^s a_i \|\Sigma_i\|_{\op}  \cdot  (3 d_i + 2)\right) \ln \delta^{-1} \right] . \label{eq:thefirst}
	\end{align} 
\end{lemma}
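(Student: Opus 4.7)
The statement is a direct consequence of \Cref{lem:theprechis}, once we verify that each $\|\bz_i\|^2$ is $c_i$-concentrated with $c_i = \|\Sigma_i\|_{\op}(3d_i+2)$ in the sense of \Cref{defn:techtools_conc_prop}. So the plan is: (i) establish the per-component concentration, and (ii) invoke \Cref{lem:theprechis} with $z_i \ldef \|\bz_i\|^2$ and $c_i$ as above.

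For step (i), I will use the standard Hanson-Wright-style tail bound (e.g.\ \citet{hsu2012tail} as cited in the proof of \Cref{lem:techtools_gaussian_c_concentrated}), which gives
\[
\P\brk*{\nrm*{\bz_i}^2 \geq \trace(\Sigma_i) + 2\sqrt{t\,\trace(\Sigma_i^2)} + 2t\nrm*{\Sigma_i}_{\op}} \leq e^{-t}.
\]
Setting $t = \ln(1/\delta) \geq 1$ for $\delta \in (0,1/e]$ and bounding $\trace(\Sigma_i) \leq d_i\nrm*{\Sigma_i}_{\op}$ and $\trace(\Sigma_i^2) \leq d_i\nrm*{\Sigma_i}_{\op}^2$, the term in brackets is at most
\[
\nrm*{\Sigma_i}_{\op}\prn*{d_i + 2\sqrt{d_i t} + 2t} \leq \nrm*{\Sigma_i}_{\op}(3d_i + 2)\, t,
\]
using $d_i \leq d_i t$ and $2\sqrt{d_i t} \leq d_i + t \leq 2 d_i t$ (valid for $t \geq 1$, $d_i \geq 1$). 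This shows $\P[\|\bz_i\|^2 \geq c_i \ln(1/\delta)] \leq \delta$ for $c_i = \|\Sigma_i\|_{\op}(3d_i+2)$, as required.

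For step (ii), I apply \Cref{lem:theprechis} with the triples $(a_i, c_i, z_i) = (a_i, \nrm*{\Sigma_i}_{\op}(3d_i+2), \nrm*{\bz_i}^2)$ and the constant $a_0$ inherited from the statement. This directly yields both displayed inequalities in \eqref{eq:thefirst}, since \Cref{lem:theprechis} gives a probability bound of $\delta$ for the event $\{a_0 + \sum_i a_i z_i \geq (a_0 + \sum_i a_i c_i)\ln(s/\delta)\}$ and further re-expresses this in the $\ln\delta^{-1}$ form via $\ln(s/\delta)\leq (\ln s + 1)\ln\delta^{-1}$.

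There is no real obstacle here; the main subtlety is just the constants: ensuring the coefficient $(3d_i + 2)$ is sufficient to absorb the cross term $2\sqrt{d_i t}$ from the Hanson-Wright bound, which is handled by the $t \geq 1$ assumption provided by $\delta \leq 1/e$. The entire argument is a routine reduction from the scalar concentration of Gaussian quadratic forms to the generic multi-variable concentration aggregator already proven.
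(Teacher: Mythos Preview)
Your proposal is correct and follows essentially the same approach as the paper: establish the per-component concentration $\P[\|\bz_i\|^2 \geq \|\Sigma_i\|_{\op}(3d_i+2)\ln(1/\delta)] \leq \delta$, then aggregate via \Cref{lem:theprechis}. The only cosmetic difference is that the paper whitens $\bz_i' = \Sigma_i^{-1/2}\bz_i$ and cites the Laurent--Massart chi-squared tail bound, whereas you invoke the Hsu--Kakade--Zhang quadratic-form bound directly on $\|\bz_i\|^2$; both routes yield the same $(3d_i+2)$ constant and the remainder of the argument is identical.
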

\begin{proof}For $i \in [s]$, let $\bz'_i\coloneqq \Sigma_{i}^{-1/2}
  \bz_i$; in this case, $\bz_i' \sim \cN(0, I_{d_i})$. Thus, by Lemma
  1 of \cite{laurent2000adaptive}, we have that 
	\begin{align}
	\delta & \geq \P\left[ \|\bz_i'\|^2 \geq  d_i + 2 \sqrt{d_i \ln \delta^{-1}}  + 2 \ln \delta^{-1} \right]\nn \\ & \geq \P\left[ \|\bz_i\|^2 \geq \|\Sigma_{i}\|_{\op}  \left(d_i + 2 \sqrt{d_i \ln \delta^{-1}}  + 2 \ln \delta^{-1}\right) \right]\nn \\ & \geq   \P\left[\|\bz_i\|^2 \geq \|\Sigma_i\|_{\op}  \left(d_i + 2 \sqrt{d_i \ln \delta^{-1}}  + 2 \ln \delta^{-1}\right) \right] \nn\\
	& \geq \P\left[\|\bz_i\|^2 \geq \|\Sigma_i\|_{\op}  \cdot  (3 d_i + 2) \ln \delta^{-1}\right],\label{eq:keykey}
	\end{align}
	where the last inequality follows by the fact that $\delta \in(0, 1/e]$. By \eqref{eq:keykey} and \pref{lem:theprechis}, we get \eqref{eq:thefirst}.
\end{proof}

\begin{lemma}
	\label{lem:theeig}
	Let $\veps>0$, and $M, N\in \reals^{m\times m}$ be
        given. Suppose $N$ is non-singular and $\|M - N \|_{\op}\leq
        \veps$. Then if $\veps < \sigma_{\min}(N)/2$, $M$ is non-singular and 
	\begin{align}
	\|I_m -  M^{-1}N   \|_{\op} \leq \frac{2\veps}{\sigma_{\min}(N)}. \label{eq:theassump}
	\end{align} 
\end{lemma}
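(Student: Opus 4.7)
The plan is to prove invertibility of $M$ and the operator-norm bound together via a standard Neumann-series argument, writing $M$ as a multiplicative perturbation of $N$.

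First, I would factor $M = N + (M - N) = N\bigl(I_m + N^{-1}(M-N)\bigr)$. Setting $X \coloneqq N^{-1}(M - N)$, submultiplicativity gives
\[
\|X\|_{\op} \leq \|N^{-1}\|_{\op}\,\|M-N\|_{\op} \leq \frac{\veps}{\sigma_{\min}(N)} < \frac{1}{2}.
\]
Since $\|X\|_{\op}<1$, the Neumann series shows $I_m + X$ is invertible with $\|(I_m+X)^{-1}\|_{\op} \leq (1-\|X\|_{\op})^{-1} \leq 2$. Because $N$ is invertible by hypothesis, this gives that $M$ is invertible and yields the bound $\|M^{-1}\|_{\op} = \|(I_m+X)^{-1} N^{-1}\|_{\op} \leq 2/\sigma_{\min}(N)$.

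Next, I would use the algebraic identity $I_m - M^{-1}N = M^{-1}(M - N)$, from which
\[
\|I_m - M^{-1}N\|_{\op} \leq \|M^{-1}\|_{\op}\,\|M - N\|_{\op} \leq \frac{2}{\sigma_{\min}(N)}\cdot \veps = \frac{2\veps}{\sigma_{\min}(N)},
\]
which is exactly the claimed inequality. No step is really an obstacle here—the only mild subtlety is confirming that the strict inequality $\veps < \sigma_{\min}(N)/2$ is exactly what is needed to apply the Neumann series bound with constant $2$, which is precisely why the factor $2$ appears on the right-hand side.
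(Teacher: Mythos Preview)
Your proof is correct. It differs from the paper's in how invertibility of $M$ (equivalently, a lower bound on $\sigma_{\min}(M)$) is established: you factor $M=N(I_m+X)$ and invoke the Neumann series to get $\|M^{-1}\|_{\op}\le 2/\sigma_{\min}(N)$, whereas the paper argues directly via the triangle inequality that $\sigma_{\min}(M)\ge\sigma_{\min}(N)-\veps>\sigma_{\min}(N)/2$ by evaluating $\|Mx-Nx\|$ at a unit vector $x$ achieving $\sigma_{\min}(M)$. The final step is the same in both: the paper writes $\|M-N\|_{\op}\ge\sigma_{\min}(M)\,\|I_m-M^{-1}N\|_{\op}$, which is exactly your identity $I_m-M^{-1}N=M^{-1}(M-N)$ rearranged. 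Your Neumann-series route is slightly more systematic and makes the role of the strict inequality $\veps<\sigma_{\min}(N)/2$ transparent; the paper's argument is marginally more elementary (no series needed) and in fact yields the sharper intermediate bound $\|I_m-M^{-1}N\|_{\op}\le \veps/(\sigma_{\min}(N)-\veps)$ before relaxing to $2\veps/\sigma_{\min}(N)$.
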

\begin{proof}
	We first bound the minimum singular value of $M$. Let $x \in \reals^{m}$ be a unit-norm vector such that $\|M x\| = \sigma_{\min}(M)$. Then, from the fact that $\|M- N \|_{\op}\leq \veps$, we have, 
	\begin{align}
	\veps &  \geq \| M x - N x 	\|, \nn\\
	& \geq \|N x \|  - \| M  x \|,  \quad (\text{by the triangle inequality})\nn\\
	& =   \sigma_{\min}(N) - \sigma_{\min} (M).  \quad
   (\text{using that $\|x\| =1$})\nn 
	\end{align}
	In particular, the last inequality implies that 
	\begin{align}
	\label{eq:theineq}
	\sigma_{\min}(M)\geq \sigma_{\min}(N) - \veps. 
	\end{align}
	Thus, since $\veps < \sigma_{\min}(N)$, the matrix $M$ is invertible. On the other hand, we have 
	\begin{align}
	\veps & \geq  \|M - N  \|_{\op},\nn \\
	& \geq \sigma_{\min}(M) \cdot \|I_m- (M)^{-1}   N \|_{\op}, \nn  \\
	& \geq (\sigma_{\min}(N) - \veps)  \|I_m-     M^{-1} N \|_{\op}. \nn \quad (\text{by \eqref{eq:theineq}})	\end{align}
	The desired result follows by the fact that $\veps < \sigma_{\min}(N)/2$.
 \end{proof}

\section{Main Theorem and Proof}
\label{sec:main_proof}
We now state and prove the main guarantee for \richidce (\pref{alg:main}). To begin, we
state the values for the algorithm's parameters $\nid$ and $\nop$:
\begin{align}
  &    \nid =
\bigoms\prn*{
\lambdam^{-2}T^3\kappa^5(\dimx+\dimu)^{16}\log^{15}(1/\delta)\cdot\frac{\log\abs{\Fclass}}{\veps^{6}}
    },
    \label{eq:nid_value}\\
&      \nref = \bigoms\prn*{
\lambdam^{-2}T^3\kappa^3(\dimx+\dimu)^{12}\log^{11}(1/\delta)\cdot\frac{\log\abs{\Fclass}}{\veps^{6}}
                             }.
                             \label{eq:nop_value}
\end{align}
We also recall from \pref{sec:phase1} that for the burn-in time, we use the choice
\[
\kapnot \ldef \Ceil{(1-\gammastar)^{-1} \ln
          \left({84\Psistar^5 \alphastar^4 \dimx
          (1-\gammastar)^{-2}\ln(10^3\cdot\nid)} \right)}.
  \]
  Finally, we set $r_{\id}=\sqrt{\Psist}$, and set $r_{\op}=\bigoms(1)$ to
  be a sufficiently large problem-dependent constant. The values for
  $\sigma^{2}$ and $\bclip$ are given in the following theorem.
\begin{thmmod}{thm:main}{a}
  \label{thm:main_full}
  Let $\delta\in(0,1/e]$  and $\veps=\bigohch(1)$ be given. Suppose we set
$\bclip^2=\Theta_{\star}((\dimx+\dimu)\log(1/\delta))$,
$\sigma^{2}=\bigohch(\veps^{2}/\bclip^{2}\wedge\lambdam)$, and choose $\nid$ and $\nop$ as in
\eqref{eq:nid_value} and \eqref{eq:nop_value}. Then with probability at
least $1-\bigoh(\kappa{}T\cdot\delta)$, \pref{alg:main} produces a
policy $\pihat$ with
\begin{equation}
  \label{eq:richid_regret}
\cost(\pihat)  - \cost(\piinf) \leq{} \veps,
\end{equation}
and does so while using at most
\[
  \bigohs\prn*{
    \lambdam^{-2}T^4\kappa^5(\dimx+\dimu)^{16}\log^{15}(1/\delta)\cdot\frac{\log\abs{\Fclass}}{\veps^{6}}
    }
  \]
  trajectories of length $\bigohs(T)$.    
    \end{thmmod}

    \subsection{Proof of Theorem \ref*{thm:main_full}}
    \newcommand{\Kinfid}{K_{\infty,\mathrm{id}}}
    \newcommand{\fstarid}{f_{\star,\mathrm{id}}}
    \newcommand{\vepsop}{\veps_{\mathrm{op}}}
\begin{proof}[\pfref{thm:main_full}]
  We first restate \pref{thm:phase_two}, which bounds the estimation
  error for the system parameter estimates produced by Phase II of
  \pref{alg:main}.
  \theoremphasetwo*
  Going forward we condition on the event in \pref{thm:phase_two}, and
  define $\fstarid\ldef\Sid\fstarid$ and $\Kinfid\ldef{}\Kinf\Sid^{-1}$.  We recall that whenever this event holds, we have 
  \[
    \nrm*{\Sid}_{\op}\vee\nrm*{\Sid^{-1}}_{\op}\leq{}\Psistar^{1/2}\vee
(1-\gammastar)^{-1}(4\Psistar^2\alphastar^2)
\sigma^{-1}_{\min}(\contkap)=\bigohs(1),
\]
as per \pref{thm:phase_one}. As a consequence, we have the following
fact, which we will use heavily going forward: If we define $(\Psistar',\alphastar',\gammastar',\kappastar',L')$ to be the analogues of $(\Psistar,
\alphastar,\gammastar,\kappastar,L)$ for $(\Aid,\Bid,\Qid,R,\Sigwid,\fstarid)$, we have
$\Psistar'=\bigohs(\Psistar)$ and $L'=\bigohs(L)$, and we may take
$\alphastar'=\bigohs(\alphastar)$, $\gammastar'\leq\gammastar$,  and
$\kappastar'\leq\kappastar$.
  
  We first apply \pref{lem:id_to_sys}, which implies that once $\vepsid=\bigohch(1)$,
  we have
  \begin{equation}
    \label{eq:5}
    \nrm[\big]{\Khat-\Kinfid}_{\op}=\bigohs(\vepsid).
  \end{equation}

  Next, we invoke \pref{thm:phase_three}, associating $A\gets\Aid$,
  $B\gets\Bid$, $Q\gets\Qid$, $\Sigma_{w}\gets\Sigwid$, $\fstar\gets\fstarid$, and inflating the
  problem-dependent parameters by $\bigohs(1)$ accordingly. In
  particular, suppose that $\vepsid^{2} \leq{}
 \bigohch((\log\abs*{\Fclass}+\dimx^2)\nref^{-1})$
  for a problem-dependent constant $c'_{\mathrm{id}}=\bigohch(1)$, 
  and suppose we set $\bclip^2=\Theta_{\star}((\dimx+\dimu)\log(\nref))$
  and $\sigma^2=\bigohch(\lambdam')=\bigohch(\lambdam)$. Then conditioned on the event of
  \pref{thm:phase_two}, we are guaranteed that for any $\delta\in(0,1/e]$, with probability at least $1-\bigoh(\kappa{}T\delta)$,
	\begin{align*}
\vepsop^{2}\ldef{}	\E_{\what \pi} \left[ \max_{ 1\leq  t \leq T} \|
          \fref_t(\by_{0:t}) - \fstarid(\by_{t}) \|_2^2   \right]   
          \leq   \bigohs\prn*{
\frac{\lambdam'^{-2}}{\sigma^{4}}\cdot{}T^3\kappa^2(\dimx+\dimu)^4\cdot\frac{(\dimx^2+\log\abs{\Fclass}) \log^{5}(\nref/\delta)}{\nref}
            }.
        \end{align*}
        where $\lambdam'$ is the analogue of $\lambdam$ for the
        parameters $(\Aid,\Bid,\Sigwid)$; note that to apply
        the theorem, we must set the radius of the class
        $\scrH_{\onpo}$ based on $\Psistar'$ rather than $\Psistar$,
        which leads to the value for this parameter passed into
        \pref{alg:phase3} when it is invoked within
        \pref{alg:main}. Likewise, we must inflate $\bclip$ by
        $\bigoms(1)$. Lastly, we note that
        $\lambdam'\geq\bigoms(\lambdam)$; which can be
        quickly verified.

Taking a union bound and simplifying the upper bounds slightly, we are
guaranteed that with probability at least $1-\bigohs(\kappa{}T\delta)$,
\begin{align*}
  \vepsid^{2} &\leq{} \kappa^2(\dimx+\dimu)^{4}\log^{4}(\nid/\delta) \frac{\log\abs*{\Fclass}}{\nid}\\
  \vepsop^{2}                  &\leq \bigohs\prn*{
\frac{\lambdam^{-2}}{\sigma^{4}}T^{3}\kappa^3(\dimx+\dimu)^{6}\log^5(\nref/\delta)\cdot\frac{\log\abs{\Fclass}}{\nref}
                                 },
\end{align*}
so long as the conditions on $\nid$, $\vepsid$, $\bclip$, and
$\sigma^{2}$ described so far hold. We next invoke
\pref{thm:performance_difference} with $\tau=T$ (again, we use that
changing the basis by $\Sid$ inflates problem-dependent constants by $\bigohs(1)$),\footnote{It is
  possible to get better dependence on $T$ by choosing different
  values for $\tau$ based on $\veps$, but for the sake of simplicity
  we do not pursue this here.} which implies that
\[
  \cost(\pihat) - \cost(\piinf)
  \leq{} \bigohs(\bclip\cdot\cx\cdot(\cx\cdot\vepsid+\vepsop+\sigma)),
\]
where $\cx^2=\max_{1\leq{}t\leq{}T}\En_{\pihat}\nrm*{\matx_t}^{2}$, so long as $\sigma^{2}=\bigohs(1)$. From \pref{lem:xconcentration}, we
have
$\cx^{2}\leq{}\bigohs(\bclip^{2}+\dimx)=\bigohs(\bclip^2)$,
so we may further simplify to
\[
  \cost(\pihat) - \cost(\piinf)
  \leq{} \bigohs(\bclip^2\cdot(\bclip\cdot\vepsid+\vepsop+\sigma)).
\]
Hence, to ensure the regret is at most $\bigohs(\veps)$, as a first
step we choose $\sigma=\bigohch(\veps/\bclip^2)$. This leads to
\[
  \vepsop^{2} \leq{} \bigohs\prn*{
\lambdam^{-2}T^3\kappa^3(\dimx+\dimu)^{10}\log^{9}(1/\delta)\cdot\frac{\log\abs{\Fclass}}{\nref}\cdot\frac{1}{\veps^{4}}
    }.
  \]
  We next choose $\vepsop^2=\bigohch(\veps^2/\bclip^4)$ which, per the
  inequality above, entails setting
  \[
    \nref = \bigoms\prn*{
\lambdam^{-2}\kappa^3T^3(\dimx+\dimu)^{12}\log^{11}(1/\delta)\cdot\frac{\log\abs{\Fclass}}{\veps^{6}}
    }.
  \]
  Finally, we require that $\vepsid\leq\bigohch(\veps/\bclip^3)$, and we
  also require $\vepsid$ to satisfy the earlier constraint that $\vepsid^{2}\leq{}\bigohch((\log\abs*{\Fclass}+\dimx^2)\nref^{-1})$. To
  satisfy the first constraint, it suffices to take
\[
\nid = \bigoms\prn*{\kappa^2(\dimx+\dimu)^{6}\log^{6}(1/\delta)
  \frac{\log\abs*{\Fclass}}{\veps^{2}}
}.
\]
For the second constraint, it suffices to take
\begin{align*}
\nid &= \bigoms\prn*{
  \nop\cdot{}\kappa^2(\dimx+\dimu)^4\log^4(/\delta)
  } \\
&= \bigoms\prn*{
\lambdam^{-2}T^3\kappa^5(\dimx+\dimu)^{16}\log^{15}(1/\delta)\cdot\frac{\log\abs{\Fclass}}{\veps^{6}}
    }.
\end{align*}
Lastly, we observe that the algorithm uses $\bigoh(\nop\cdot{}T +
\nid)$ trajectories in total, leading to the final calculation in the
theorem statement.  
        \end{proof}

\end{document}